\newcommand{\sdfrac}[2]{\text{\scriptsize$\tfrac{#1}{#2}$}}
\newcommand{\assign}{:=}
\newcommand{\mathd}{\mathrm{d}}
\newcommand{\mathe}{\mathrm{e}}
\newcommand{\tmop}[1]{\ensuremath{\operatorname{#1}}}
\newcommand{\tmmathbf}[1]{\ensuremath{\boldsymbol{#1}}}
\newcommand{\tmtextbf}[1]{\text{{\bfseries{#1}}}}
\newenvironment{itemizedot}{\begin{itemize} }{\end{itemize}}
\newcommand{\ind}{\mathrel{\perp\!\!\!\perp}} 
\newcommand{\infixor}{\text{ or }}
\newcommand{\E}{\mathbb{E}}
\renewcommand{\P}{\mathbb{P}}
\newcommand{\sgn}{\textnormal{sgn}}
\definecolor{Gred}{RGB}{219, 50, 54}
\definecolor{Ggreen}{RGB}{60, 186, 84}
\definecolor{Gblue}{RGB}{72, 133, 237}
\definecolor{Gyellow}{RGB}{247, 178, 16}
\definecolor{ToCgreen}{RGB}{0, 128, 0}
\definecolor{myGold}{RGB}{231,141,20}
\definecolor{myBlue}{rgb}{0.19,0.41,.65}
\definecolor{myPurple}{RGB}{175,0,124}
\definecolor{orange}{HTML}{ff7f0e}
\definecolor{blue}{HTML}{1f77b4}
\definecolor{blueGrotto}{HTML}{059DC0}
\definecolor{royalBlue}{HTML}{057DCD}
\definecolor{navyBlue}{HTML}{0B579C}
\definecolor{framebg}{RGB}{241,244,247}     
\definecolor{niceRed}{RGB}{190,38,38}
\definecolor{shadecolor}{gray}{0.90}
\declaretheoremstyle[
  bodyfont=\normalfont  
]{nonitalic}
\newmdenv[
    backgroundcolor=framebg,
    roundcorner=3pt,
    skipabove=2pt,
    linewidth=0pt,
    innertopmargin=2pt
]{myframe}
\declaretheorem[within=section]{definition}
\declaretheorem{nosection} 
\declaretheorem[sibling=nosection, name=Theorem]{theoremtxt}
\declaretheorem[sibling=definition, name=Theorem]{theoremapx}
\newenvironment{theorem}
  {\begin{myframe}\begin{theoremapx}}
  {\end{theoremapx}\end{myframe}}
\declaretheorem[sibling=nosection, name=Proposition]{propositiontxt}
\declaretheorem[sibling=definition, name=Proposition]{propositionapx}
\declaretheorem[sibling=nosection, name=Lemma]{lemmatxt}
\declaretheorem[sibling=definition, name=Lemma]{lemmaapx}
\newenvironment{lemma}
    {\begin{myframe}\begin{lemmaapx}}
    {\end{lemmaapx}\end{myframe}}
\declaretheorem[sibling=nosection, name=Corollary]{corollarytxt}
\declaretheorem[sibling=definition, name=Corollary]{corollaryapx}
\declaretheorem[style=nonitalic,sibling=style, numbered=no]{remark}
\begin{document}

\title{Structural Properties, Cycloid Trajectories and Non-Asymptotic Guarantees of EM Algorithm for Mixed Linear Regression}

\author{Zhankun Luo\hspace{0.5mm}\raisebox{0.8ex}{\orcidlink{0000-0002-3626-1988}},~\IEEEmembership{Graduate Student Member, IEEE},
 and Abolfazl Hashemi\hspace{0.5mm}\raisebox{0.8ex}{\orcidlink{0000-0002-8421-4270}},~\IEEEmembership{Member, IEEE}
\thanks{Manuscript submitted to {\it IEEE Transactions on Information Theory} on Nov 6, 2025. The authors are with the School of Electrical and Computer Engineering, Purdue University, IN 47907, USA. (E-mail: \{luo333, abolfazl\}@purdue.edu.) 
An early version of this work have been presented in part at the 2024 International Conference on Machine Learning (ICML), July 2024, Vienna, Austria~\cite{luo24cycloid}. \textit{(Corresponding author: Zhankun Luo.)}
}
}

\markboth{IEEE Transactions on Information Theory}%
{Shell \MakeLowercase{\textit{et al.}}: Structural Properties, Cycloid Trajectories and Non-Asymptotic Guarantees of EM Algorithm for Mixed Linear Regression}

\IEEEpubid{0000--0000~\copyright~2025 IEEE}

\maketitle

\begin{abstract}
This work investigates the structural properties, cycloid trajectories, and non-asymptotic convergence guarantees of the Expectation-Maximization (EM) algorithm for two-component Mixed Linear Regression (2MLR) with unknown mixing weights and regression parameters.
Recent studies have established global convergence for 2MLR with known balanced weights and super-linear convergence in noiseless and high signal-to-noise ratio (SNR) regimes.
However, the theoretical behavior of EM in the fully unknown setting remains unclear, with its trajectory and convergence order not yet fully characterized.
We derive explicit EM update expressions for 2MLR with unknown mixing weights and regression parameters across all SNR regimes and analyze their structural properties and cycloid trajectories.
In the noiseless case, we prove that the trajectory of the regression parameters in EM iterations traces a cycloid by establishing a recurrence relation for the sub-optimality angle, while in high SNR regimes we quantify its discrepancy from the cycloid trajectory.
The trajectory-based analysis reveals the order of convergence: linear when the EM estimate is nearly orthogonal to the ground truth, and quadratic when the angle between the estimate and ground truth is small at the population level.
Our analysis establishes non-asymptotic guarantees by sharpening bounds on statistical errors between finite-sample and population EM updates, relating EM's statistical accuracy to the sub-optimality angle, and proving convergence with arbitrary initialization at the finite-sample level.
This work provides a novel trajectory-based framework for analyzing EM in Mixed Linear Regression.
\end{abstract}

\begin{IEEEkeywords}
Expectation-Maximization, Mixed Linear Regression, Cycloid Trajectory, Sub-optimality Angle, Convergence Analysis
\end{IEEEkeywords}

\section{Introduction}\label{sec:intro}
Mixture models of parameterized distributions, including Mixture of Linear Regression (MLR) and Gaussian Mixture Models (GMM), provide a flexible framework for modeling complex dependencies in practice.
Its flexibility makes it well-suited for modeling datasets affected by corrupted observations, incomplete data, and latent variable structures~\cite{beale1975missing}.
In this work, we consider the symmetric two-component mixed linear regression (2MLR), defined as
\begin{equation}\label{eq:model}
    y = (-1)^{z+1} \langle x, \theta^\ast\rangle + \varepsilon,
\end{equation}
where \(s := (x, y) \in \mathbb{R}^d \times \mathbb{R}\) denotes the observation, consisting of the covariate \(x \in \mathbb{R}^d\) and the response \(y \in \mathbb{R}\).  
The latent variable \(z \in \{1, 2\}\), indicating the component label, follows the categorical distribution \(\mathcal{C}\mathcal{A}\mathcal{T}(\pi^\ast)\) with probabilities \(\mathbb{P}[z=1] = \pi^\ast(1)\) and \(\mathbb{P}[z=2] = \pi^\ast(2)\).  
The additive noise \(\varepsilon\) is Gaussian with zero mean and variance \(\sigma^2\).  
The regression parameters are \(\theta^\ast \in \mathbb{R}^d\), and the mixing weights are \(\pi^\ast = (\pi^\ast(1), \pi^\ast(2))\), satisfying \(\pi^\ast(1), \pi^\ast(2) \geq 0\) and \(\pi^\ast(1) + \pi^\ast(2) = 1\).
The ground-truth mixing weights \(\pi^\ast = (\pi^\ast(1), \pi^\ast(2))\) and regression parameters \(\theta^\ast\) of the data-generating process are unknown.  
The signal-to-noise ratio (SNR) is defined as \(\eta := \|\theta^\ast\|/\sigma\).  
We observe i.i.d. samples \(\mathcal{S} = \{s_i\}_{i=1}^n = \{(x_i, y_i)\}_{i=1}^n\) drawn from this data-generating process.

Maximum Likelihood Estimation (MLE) provides a principled procedure for parameter inference of the above model. 
It seeks estimates \((\theta, \pi)\) of the ground-truth parameters \((\theta^\ast, \pi^\ast)\) by maximizing the likelihood.
Given estimated parameters \((\theta, \pi)\), where \(\theta \in \mathbb{R}^d\) and \(\pi = (\pi(1), \pi(2))\) with \(\pi(1), \pi(2) \geq 0\) and \(\pi(1) + \pi(2) = 1\), the population log-likelihood is defined as
\(
    \mathbb{E}_{s \sim p(s \mid \theta^\ast, \pi^\ast)} [\log p(s \mid \theta, \pi)],
\)
while in practice we maximize the empirical log-likelihood 
\(
    \frac{1}{n} \sum_{i=1}^n \log p(s_i \mid \theta, \pi)
\)
based on the observed samples \(\mathcal{S} = \{s_i\}_{i=1}^n\).

However, computing the Maximum Likelihood Estimate (MLE) for high-dimensional data is intractable due to its non-convexity and numerous spurious local maxima.
A PCA-based technique~\cite{tipping1999mixtures} links the underlying geometric structure and probabilistic perspective with Gaussian covariates and noise.
A meta-learning strategy~\cite{kong20nips, kong20icml} was applied to estimate the parameters of MLR from small batches.
The method in~\cite{shen2019nips} introduced an iterative adaptation of least trimmed squares to cope with corrupted MLR data.
Another alternative is the moment-based approach~\cite{li2018learning}, which is combined with gradient descent.
Moreover, the Expectation Maximization (EM) method is particularly advantageous due to its computational efficiency and practical implementation simplicity. 
For the model in \eqref{eq:model}, EM estimates the regression parameters and mixing weights from observations through a two-step iterative process: 
The E-step evaluates the expected log-likelihood under the current parameter estimates, whereas the M-step updates the parameters by maximizing this expectation.
This iterative process maximizes the lower bound on the MLE objective until convergence.

A foundational work~\cite{dempster1977maximum} presented the modern EM algorithm and demonstrated its likelihood to be monotonically increasing with EM updates.
The theoretical analysis in~\cite{wu1983convergence} established the global convergence of a unimodal likelihood under some regularity conditions.
Empirical evidence~\cite{de1989mixtures,jordan1994hierarchical,jordan1995convergence} demonstrated EM success in the MLR problem. 
A comprehensive framework of EM for MLR involving unknown latent variables~\cite{wedel1995mixture} was also introduced.

\subsection{Motivation}
As motivating applications, we emphasize the tasks of haplotype assembly in bioinformatics and genomics~\cite{cai2016structured}
and phase retrieval, which emerges in numerous fields such as optics, acoustics, and quantum information~\cite{candes2015phase}, demonstrating the practical implications of our study.

\noindent\textbf{Haplotype Assembly.}
Haplotypes comprise sequences of chromosomal variations in genome that are pivotal for determining the human disease susceptibility. 
Haplotype assembly requires piecing together these sequences from a mixture of sequenced chromosome fragments. 
Particularly, humans carry two haplotypes, i.e., they are diploid organisms (see \cite{cai2016structured} for a clear mathematical formulation of the problem). 
For diploids, the main challenge is to reassemble two different haplotypes (binary sequences of single nucleotide polymorphisms (SNPs)) from corruptedshort sequencing reads. 
Each read relates to a nearby window of the genome but comes from one of the pair of chromosomes. 
The uncertainty in the haplotype origin of each read can be characterized as a symmetric two-component mixed linear regression (2MLR), which is the model discussed in our work. 
As explained in \cite{cai2016structured}, let $\theta^\ast\in \{-1, +1\}^d$ denote one haplotype, and the other haplotype is its opposite $-\theta^\ast$. 
The sequencing data consists of $n$ corrupted reads $\{(x_i, y_i)\}_{i=1}^n$, where $x_i \in \{0,1\}^d$ is a sparse binary vector representing these SNPs covered by the read, and $y_i \in \mathbb{R}$ is the measured read signal (e.g., aggregate allele value), represented as
\begin{equation}
y_i = (-1)^{z_i+1} \langle x_i, \theta^\ast\rangle + \varepsilon_i,
\end{equation}
where $z_i \in \{1, 2\}$ represents a latent variable showing whether the read comes from haplotype $\theta^\ast$ or $-\theta^\ast$, and $\varepsilon_i$ stands for the additive noise. 
The research in~\cite{sankararaman2020comhapdet} shows the effectiveness of EM-type methods for this problem,
as the authors note that ``this iterative update rule is reminiscent of the class of Expectation Maximization algorithms.''

\noindent\textbf{Phase Retrieval.}
In the context of phase retrieval, as highlighted in Section 3 of~\cite{dana2019estimate2mix} and elaborated in Section 3.5 of~\cite{chen2013convex}, 
a well-recognized connection exists between the phase retrieval problem and the symmetric 2MLR. 
Notably, squaring the response variable $y$ and subtracting the variance $\sigma^2$ yields:
\begin{equation}
\begin{aligned}
y' := y^2 - \sigma^2 = \vert \langle x, \theta^\ast\rangle\vert^2 + \xi.
\end{aligned}
\end{equation}
Essentially, this reduces to the phase retrieval model with a heteroskedastic, zero-mean error $\xi:=2(-1)^{z+1}\langle x, \theta^\ast\rangle + (\varepsilon^2-\sigma^2)$. 
Thus, convergence guarantees for the phase retrieval problem follow directly from the analysis of symmetric 2MLR.

\subsection{Problem Setup and EM Updates}
To develop a theoretical understanding of EM behavior for the 2MLR model,
we adopt the standard assumption on the covariates \(x\sim\mathcal{N}(0, I_d)\) 
(see also page 1 of~\cite{dana2019estimate2mix}, page 6 of~\cite{balakrishnan2017statistical}, page 4 of~\cite{reisizadeh2024mixture}; and page 3 of~\cite{luo24cycloid}).
The special case of 2MLR with known mixing weights has been intensively studied in a line of prior works~\cite{balakrishnan2017statistical, dana2019estimate2mix, kwon2019global, kwon2021minimax}.
The population EM update rule for 2MLR with known balanced mixing weights \(\pi = \pi^\ast =  \left( \frac{1}{2}, \frac{1}{2} \right)\) is given by~\cite{balakrishnan2017statistical} as follows:
\begin{equation}
  M (\theta) =\mathbb{E}_{s\sim p(s\mid\theta^\ast, \pi^\ast)} 
   \tanh \left(
  \frac{y\langle x, \theta \rangle}{\sigma^2}\right)
  y  x.
\end{equation}

We extend our analysis from the special case of known balanced mixing weights to the general case of unknown mixing weights
by introducing an auxiliary variable \(\nu\) to characterize the imbalance of the estimated mixing weights, i.e., \(\tanh \nu = \pi(1)-\pi(2)\).
The corresponding ground truth value \(\nu^\ast\) satisfies the analogous relation \(\tanh \nu^\ast = \pi^\ast(1)-\pi^\ast(2)\).
Formally, these \(\nu\) and \(\nu^\ast\) are given by
\begin{equation}
    \nu  := \frac{\ln \pi(1) - \ln \pi(2)}{2}, \quad \nu^\ast := \frac{\ln \pi^\ast(1) - \ln \pi^\ast(2)}{2}.
\end{equation}
With the help of \(\nu\), the population EM update rule for regression parameters \(\theta\) with unknown mixing weights \(\pi\) is extended as follows:
\begin{equation}\label{eq:theta}
  M(\theta, \nu) \assign \mathbb{E}_{s\sim p(s\mid\theta^\ast, \pi^\ast)} 
  \tanh \left(
  \frac{y \langle x, \theta \rangle}{\sigma^2} +\nu\right)
  y  x.
\end{equation}
In the absence of the assumption that the covariates satisfy \(x\sim\mathcal{N}(0, I_d)\), so that \(\E[xx^\top] = I_d\), 
the population EM update rule becomes \(M(\theta, \nu) = \E[xx^\top]^{-1}\mathbb{E}_{s\sim p(s\mid\theta^\ast, \pi^\ast)} \tanh \left(\frac{y \langle x, \theta \rangle}{\sigma^2} +\nu\right) yx\).
Furthermore, the corresponding population EM update rule for the imbalance of mixing weights \(\tanh\nu\) is given by:
\begin{equation}\label{eq:nu}
  N(\theta, \nu) \assign \mathbb{E}_{s\sim p(s\mid\theta^\ast, \pi^\ast)} 
  \tanh \left(
  \frac{y \langle x, \theta \rangle}{\sigma^2} +\nu\right).
\end{equation}
The finite-sample EM update rules with \(n\) samples \(\mathcal{S} = \{(x_i, y_i)\}_{i=1}^n\) are:
\begin{equation}\label{eq:finite}
\begin{aligned}
M_n  (\theta, \nu) &:= \left( \frac{1}{n}  \sum_{i = 1}^n x_i x_i^{\top}
\right)^{-1}
\left( \frac{1}{n}  \sum_{i = 1}^n \tanh \left(
\frac{y_i\langle x_i, \theta\rangle}{\sigma^2} +\nu \right) y_i x_i \right),\\
N_n  (\theta, \nu) &:= \frac{1}{n}  \sum_{i = 1}^n \tanh \left(
\frac{y_i\langle x_i, \theta\rangle}{\sigma^2} +\nu\right).
\end{aligned}
\end{equation}
To facilitate theoretical analysis, we adopt the Easy EM method with the update rule of regression parameters as follows:
\begin{equation}\label{eq:easyEM}
    M^{\tmop{easy}}_n  (\theta, \nu) :=  \frac{1}{n}  \sum_{i = 1}^n \tanh \left(
\frac{y_i\langle x_i, \theta\rangle}{\sigma^2} +\nu \right) y_i x_i. 
\end{equation}
Derivations of both the population and finite-sample EM update rules are provided in Appendix~\ref{sup:derive_em},
while similar derivations of EM updates for 2GMM can be found on pages 5-6 of~\cite{weinberger2022algorithm}.

\subsection{Related Works}
Both GMM and MLR can be interpreted as instances of subspace clustering, which naturally leads to similarities in their EM analysis.
For GMM, well-separated spherical Gaussians with parameterization can be learned to nearly optimal accuracy using a variant of EM~\cite{dasgupta2007}.
Moreover, the linear global convergence of EM under well-separated spherical Gaussians with initialization inside a neighborhood of the truth has been demonstrated~\cite{zhao2020statistical}.
For GMM with $k \geq 3$ components, EM with random initialization is often trapped in local minima with high probability, and the likelihood at local maxima can be arbitrarily worse than that of any global maximum~\cite{jin2016local}.
The possible types of local minima for EM and $k$-means (i.e., EM with hard labels) in GMM under a separation condition have been identified~\cite{chen2020likelihood, qian2022spurious}.
Building on these characterized structures of local minima, a general framework was introduced to escape them~\cite{zhang2020symmetry, hong2022geometric}, thereby unifying variants of $k$-means from a geometric standpoint.
Finally, the relationship between EM and the method of moments for GMM, established through an asymptotic expansion of the log-likelihood in the low signal-to-noise ratio (SNR) regime, has been uncovered~\cite{katsevich2023likelihood}.
The particular case of GMM with $k=2$ components (2GMM) has been extensively investigated.
Global convergence of EM with random initialization for spherical 2GMM was demonstrated in~\cite{klusowski2016statistical, xu2016global, daskalakis17b}.
This convergence property was analyzed across all SNR regimes in~\cite{wu2021randomly}, while restricting the initialization to a very small neighborhood.
The convergence study was additionally generalized from spherical Gaussians to rotation-invariant log-concave distributions in~\cite{qian2019global,qian2020local}.
The critical SNR threshold for exact recovery of 2GMM was explored in~\cite{ndaoud2018sharp}.

Similarly, EM for MLR with two components (2MLR) and random initialization achieves global convergence.
The global convergence of EM for 2MLR with proper initialization within a ball around the true parameters was first established in~\cite{balakrishnan2017statistical}.
This convergence guarantee was extended to the high SNR setting for the case where the cosine angle between the initial parameters and the ground truth is sufficiently large~\cite{dana2019estimate2mix}.
It was additionally verified in~\cite{kwon2022dissertation, kwon2019global} that EM for 2MLR converges from random initialization with high probability.
Bounds on the statistical error of EM for 2MLR across varying SNR regimes were derived in~\cite{yudong2018trans}.
Generalization error bounds for the log-likelihood of the first-order EM for 2MLR were presented in~\cite{xu2020towards}.
The statistical error and convergence rate of EM for 2MLR under all SNR regimes were further investigated in~\cite{kwon2021minimax}.
Alternating Minimization (AM), a hard-label variant of EM, for 2MLR in the noiseless case was studied in~\cite{yi2014alternating, yi2016solving}.
Accordingly, a super-linear convergence rate of AM for 2MLR in the noiseless setting within a specific convergence region was shown in~\cite{ghosh20a}.
The noiseless scenario was generalized to the high SNR regime while maintaining super-linear convergence in~\cite{kwon2021minimax}.
Finally, a convergence analysis of EM for MLR with multiple components, covering the most general settings, was given in~\cite{kwon2020converges}.
Prior works on the convergence analysis of EM for 2MLR have neglected unbalanced mixing weights and focused on the balanced scenario.
The works in~\cite{dwivedi2020sharp,dwivedi2020unbalanced,luo2025characterizing} incorporated unbalanced mixing weights of 2GMM/2MLR and demonstrated a pronounced difference in statistical error and convergence rate between the unbalanced and balanced settings, for the specific case without parameter separation, 
namely when the location/regression parameters are degenerate, \(\theta^\ast=\vec{0}\), in 2GMM/2MLR.

Therefore, it remains an open question to investigate the convergence of EM for 2MLR with unknown mixing weights and regression parameters.
Our work addresses this gap by providing a comprehensive analysis of the properties of EM updates, convergence guarantees, and statistical errors of EM iterations for 2MLR with unknown mixing weights and regression parameters.


\subsection{Main Contributions}
In this work, we provide explicit expressions for the log-likelihood function (Proposition~\ref{prop:nll}) of the symmetric two-component Mixed Linear Regression (2MLR) model with unknown mixing weights, 
and subsequently derive the EM update rules (Equations~\eqref{eq:theta},~\eqref{eq:nu},~\eqref{eq:finite}). We establish the connection between these EM updates 
and the gradient-descent dynamics of the log-likelihood (Proposition~\ref{prop:em_update_nll_gradients}), 
and obtain closed-form expressions of the EM update rules with unknown mixing weights and regression parameters (Theorem~\ref{theorem:explicit_em_update}).

We characterize the structural properties of the EM updates 
by bounding the length of EM iterations across all signal-to-noise ratio (SNR) regimes (Proposition~\ref{prop:boundedness}), 
identifying their fixed points and proving contraction around them (Propositions~\ref{prop:distinct_fixed_points},~\ref{prop:contraction_property}). 
In the noiseless setting, we show that the EM iterations follow a cycloid trajectory (Proposition~\ref{prop:parametric_cycloid}), 
derive this from the explicit noiseless updates (Corollary~\ref{cor:em_updates_noiseless}), 
and quantify the deviation from this trajectory when the SNR is large but finite (Proposition~\ref{prop:deviation_cycloid_limit}).

In the noiseless setting, we establish a recurrence relation for the sub-optimality angle (Proposition~\ref{prop:recurrence_angle}), 
prove linear growth when the EM estimate is nearly orthogonal to the ground truth (Proposition~\ref{prop:linear_growth_angle}), 
and show quadratic convergence when the angle is small (Proposition~\ref{prop:quadratic_convergence_angle}). 
We relate the accuracy of EM estimate (for regression parameters and mixing weights) to the sub-optimality angle (Proposition~\ref{prop:errors_em_updates_angle}), 
yielding population-level convergence guarantees (Theorem~\ref{theorem:population_level_convergence}).

At the finite-sample level, we couple the population-level analysis with statistical error bounds to establish convergence guarantees for the EM algorithm under arbitrary initialization. 
Specifically, we derive projected and statistical error bounds for the regression parameters (Propositions~\ref{prop:projected_error_regression},~\ref{prop:statistical_error_regression}), which are independent of the mixing weights, 
and a statistical error bound for the mixing weights (Proposition~\ref{prop:statistical_error_mixing_weights}) that depends on the sub-optimality angle and the ground-truth mixing weights. 
We then characterize the connection between the statistical accuracy of the EM updates and the sub-optimality angle (Proposition~\ref{prop:statistical_accuracy_em_updates}) 
and establish convergence guarantees for the sub-optimality angle at the finite-sample level (Proposition~\ref{prop:convergence_angle}) by combining the population-level convergence analysis with the bounds on the statistical errors and statistical accuracy of the EM updates. 
Consequently, by leveraging the connection between the statistical accuracy of the EM updates and the sub-optimality angle, we obtain finite-sample convergence guarantees for the EM algorithm with arbitrary initialization (Proposition~\ref{prop:initialization_easy_em}) of unknown mixing weights and regression parameters (Theorem~\ref{theorem:finite_sample_convergence}).

Overall, our results correspond directly to the three main components of this work:
\begin{itemize}
    \item \textbf{Structural Properties:} We provide a unified characterization of the structural properties of the EM updates for the symmetric 2MLR model with unknown mixing weights and regression parameters across all SNR regimes (Section~\ref{sec:updates});
    \item \textbf{Cycloid Trajectories:} We introduce a trajectory-based framework that reveals the cycloid trajectory of EM iterations and connects their geometric behavior to the accuracy of the EM updates (Section~\ref{sec:population}); and
    \item \textbf{Non-Asymptotic Guarantees:} We establish convergence guarantees for finite-sample EM updates under arbitrary initialization of the  mixing weights and regression parameters in the noiseless setting (Section~\ref{sec:finite}).
\end{itemize}

\subsection{Notations}
\noindent\textbf{Imbalance of Mixing Weights.} 
The absolute value of the imbalance \(\tanh |\nu|\) corresponds exactly to the \(\ell_1\) distance between the mixing weights \(\pi = (\pi(1), \pi(2))\) and the balanced weights \(\frac{\mathds{1}}{2} = \left(\frac{1}{2}, \frac{1}{2}\right)\), where \(\mathds{1} = (1, 1)\) is the all-ones vector.
These quantities are related to the minimum of \(\pi(1)\) and \(\pi(2)\) by:
\begin{equation}
\tanh |\nu| = \left\|\pi - \frac{\mathds{1}}{2}\right\|_1, \quad \frac{1 - \tanh |\nu|}{2} = \min(\pi(1), \pi(2)).
\end{equation}
The ground truth counterparts follow the same relations, namely
\(\tanh |\nu^\ast| = \left\|\pi^\ast - \frac{\mathds{1}}{2}\right\|_1\) and \(\frac{1 - \tanh |\nu^\ast|}{2} = \min(\pi^\ast(1), \pi^\ast(2))\).

\noindent\textbf{Sub-optimality Angles.} 
The \(\ell_2\) norm of a vector is denoted by \(\|\cdot\|\), and \(\langle \cdot, \cdot \rangle\) represents the inner product. 
We define the unit direction vectors for the estimated and ground truth regression parameters as \(\vec{e}_1 = \theta / \|\theta\|\) and \(\hat{e}_1 = \theta^\ast / \|\theta^\ast\|\), respectively, 
The corresponding orthogonal direction vectors are \(\vec{e}_2 = (\theta^\ast - \vec{e}_1 \vec{e}_1^\top \theta) / \|\theta^\ast - \vec{e}_1 \vec{e}_1^\top \theta\|\) and \(\hat{e}_2 = (\theta - \hat{e}_1 \hat{e}_1^\top \theta^\ast) / \|\theta - \hat{e}_1 \hat{e}_1^\top \theta^\ast\|\).
The cosine of the angle between the estimated regression parameters \(\theta\) and the ground truth \(\theta^\ast\) is \(\rho := \langle \theta, \theta^\ast \rangle / \|\theta\| \|\theta^\ast\|\), with its sign denoted by \(\sgn(\rho)\) and \(\sgn(0) = 1\).
From this, we define the sub-optimality angles \(\varphi\) and \(\phi\):
\begin{equation}
\varphi := \frac{\pi}{2} - \arccos |\rho|, \quad \phi := 2 \arccos |\rho|.
\end{equation}
The ratio of the norms of the regression parameters is denoted by \(k := \|\theta\| / \|\theta^\ast\|\).

\noindent\textbf{Auxiliary Definitions.} 
We define \(A_\eta\) and \(\varphi_\eta\) in terms of the SNR \(\eta\), the ratio \(k\), and the sub-optimality angles \(\varphi, \phi\):
\begin{equation}
A_\eta := k \eta^2 \sqrt{1 + \eta^{-2}}, \quad \varphi_\eta := \arcsin \frac{\sin \varphi}{\sqrt{1 + \eta^{-2}}}
\end{equation}
with the convention \(\sqrt{1 + \eta^{-2}} = \sin \varphi / \sin \varphi_\eta\) when \(\varphi = 0\).
To express results compactly, we use Gaussian random variables \(g, g_\eta, g' \sim \mathcal{N}(0,1)\) with the following correlation structure:
\begin{equation}
\mathbb{E}[g g'] = \sin \varphi, \quad \mathbb{E}[g_\eta g'] = \sin \varphi_\eta, \quad \mathbb{E}[g g_\eta] = \frac{1}{\sqrt{1 + \eta^{-2}}} = \frac{\sin \varphi_\eta}{\sin \varphi}.
\end{equation}

The modified Bessel function of the second kind of order 0, denoted \(K_0(\cdot)\), is introduced to aid our analysis.
The probability density function of the product of two independent Gaussian random variables with zero mean and unit variance is exactly \(K_0(|\cdot|)/\pi\) (see page 50, Section 4.4 Bessel Function Distributions, Chapter 12 Continuous Distributions (General) of~\cite{johnson1970continuous}).

We use the asymptotic notations \(\Omega(\cdot), \mathcal{O}(\cdot)\), and \(\Theta(\cdot)\) (see page 528 of~\cite{lehman2017mathematics}) to help our analysis.
The notation \(f = \Omega(g)\), which can also be written as \(f \gtrsim g\), indicates \(g = \mathcal{O}(f)\), equivalently \(g \lesssim f\); that is, there exists a universal constant \(C>0\) such that \(|f(x)| \geq C |g(x)|\) for all sufficiently large \(x\).
The notation \(f = \Theta(g)\), or \(f \asymp g\), means that both \(f = \mathcal{O}(g)\) and \(g = \mathcal{O}(f)\) hold.
The symbols \(a \vee b\) and \(a \wedge b\) denote the maximum and minimum of \(a\) and \(b\), respectively.

\subsection{Organization}

We organize the paper as follows:

In Section~\ref{sec:updates}, we provide the explicit expressions of the negative log-likelihood and EM update rules for the symmetric 2MLR model with unknown mixing weights and regression parameters,
and characterize the structural properties of the EM updates for the symmetric 2MLR model across all SNR regimes. 
The detailed derivations of the negative log-likelihood and EM update rules are given in Appendix~\ref{sup:derive_em},
while the proofs of the structural properties of the EM updates are given in Appendix~\ref{sup:em_update_rules}.

In Section~\ref{sec:population}, We analyze the population EM updates in the noiseless setting, 
showing that the population EM iterations evolve along a cycloid trajectory 
governed by a recurrence relation of the sub-optimality angle, exhibiting linear growth and quadratic convergence of the sub-optimality angle,
and relating the accuracy of mixing weights and regression parameters to the sub-optimality angle.
The proofs of the results in this section are given in Appendix~\ref{sup:population_analysis}.


In Section~\ref{sec:finite}, we establish the non-asymptotic convergence guarantees of the finite-sample EM updates in the noiseless setting by
deriving statistical error bounds for both regression parameters and mixing weights,
characterizing the connection between the statistical accuracy of EM updates and the sub-optimality angle,
and showing convergence guarantees of the sub-optimality angle under arbitrary initialization.
The proofs of the results in this section are given in Appendix~\ref{sup:finite_sample_analysis}.


In Section~\ref{sec:experiments}, we provide numerical experiments to validate the theoretical results.
We first show the cycloid trajectory of the EM iterations empirically in the noiseless setting,
and then validate the theoretical findings on the convergence guarantees for the EM algorithm with arbitrary initialization of unknown mixing weights and regression parameters at finite-sample level.
The code for empirical experiments is available at \url{https://github.com/dassein/cycloid_em_tit}.


\section{Explicit EM Update Expressions and Properties of EM Update Rules}\label{sec:updates}
In this section, we focus on the general structural properties of EM updates for 2MLR across all SNR regimes, 
which consists of three parts:
In the first part, we give the expressions of the negative log-likelihood (Proposition~\ref{prop:nll}) at the population level and the finite-sample level 
by using expectations with respect to the ground truth distribution of the observed data \(s=(x, y)\).
Then we rewrite the EM update rules (Lemma~\ref{lemma:em_update_gradients}) by using expectations with respect to the ground truth distribution of the observation data \(s=(x, y)\).
Finally, we establish the connection between the EM update rules and the gradient descent of the negative log-likelihood function (Proposition~\ref{prop:em_update_nll_gradients}).

In the second part, we characterize the properties of the EM update rules 
by starting from giving the explicit expectation expressions of the EM update rules with respect to only two Gaussian random variables (Theorem~\ref{theorem:explicit_em_update}).
Based on the explicit expressions, we establish bounds for the length of the EM update rules for regression parameters (Proposition~\ref{prop:boundedness}).
Then we give fixed points for the EM update rules (Propositions~\ref{prop:distinct_fixed_points}) and provide an analysis of the contraction of the EM update rules around the fixed points (Proposition~\ref{prop:contraction_property}).

In the third part, we give the EM update rules with only the sub-optimality angle \(\varphi\) in the noiseless setting (Corollary~\ref{cor:em_updates_noiseless}), i.e., when the SNR \(\eta \to \infty\),
which will be shown to follow a cycloid trajectory in the next section (Proposition~\ref{prop:parametric_cycloid} in Section~\ref{sec:population}).
Furthermore, we bound the difference between the EM update rule and its limit of the cycloid trajectory in the finite high SNR regime (Proposition~\ref{prop:deviation_cycloid_limit}).
The proofs of the results in this section are provided in Appendix~\ref{sup:em_update_rules}, and the detailed derivations of EM update rules are provided in Appendix~\ref{sup:derive_em}.

\begin{propositiontxt}[Population and Finite-Sample Negative Log-Likelihood]\label{prop:nll}
    Let \(f(\theta, \pi):=-\E_{s\sim p(s\mid \theta^\ast, \pi^\ast)}[\ln p(s\mid \theta, \pi)]\) be the negative log-likelihood function at the population level,
    and \(f_n(\theta, \pi):=-\frac{1}{n}\sum_{i=1}^n \ln p(s_i\mid \theta, \pi)\) be the negative log-likelihood function at the finite-sample level for the dataset \(\mathcal{S}=\{s_i\}_{i=1}^n=\{(x_i, y_i)\}_{i=1}^n\) of \(n\) i.i.d. samples.
    Then, \(f(\theta, \pi)\) and \(f_n(\theta, \pi)\) can be expressed as:
    \begin{equation}
    \begin{aligned}
        f(\theta, \pi) & =  \frac{1}{2\sigma^2}\langle \theta, \E[x x^\top] \cdot \theta \rangle -\E\left[\ln \frac{\cosh\left(y\langle x, \theta \rangle/\sigma^2+\nu\right)}{\cosh \nu}\right]+\mathtt{C}\\
        f_n(\theta, \pi) & =  \frac{1}{2\sigma^2}\left\langle \theta, \frac{1}{n}\sum_{i=1}^n x_i x_i^\top \cdot \theta \right\rangle -\frac{1}{n}\sum_{i=1}^n \ln \frac{\cosh\left(y_i\langle x_i, \theta \rangle/\sigma^2+\nu\right)}{\cosh \nu}+\mathtt{C}_n
    \end{aligned}
    \end{equation}
    where \(\E[\cdot]=\E_{s\sim p(s\mid \theta^\ast, \pi^\ast)}[\cdot]\) is the expectation over the ground truth distribution \(p(s\mid \theta^\ast, \pi^\ast)\), and \(\mathtt{C}\) and \(\mathtt{C}_n\) are constants that are independent of \(\theta\) and \(\pi\).
\end{propositiontxt}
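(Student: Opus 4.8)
The plan is to start from the definition $p(s\mid\theta,\pi) = \sum_{z\in\{1,2\}} \pi(z)\, p(s\mid\theta, z)$, where conditionally on the label $z$ the response is Gaussian, $p(s\mid\theta, z) = p(x)\cdot \frac{1}{\sqrt{2\pi\sigma^2}} \exp\!\big(-\frac{(y - (-1)^{z+1}\langle x,\theta\rangle)^2}{2\sigma^2}\big)$. First I would expand the two quadratic exponents: writing $(y\mp\langle x,\theta\rangle)^2 = y^2 + \langle x,\theta\rangle^2 \mp 2y\langle x,\theta\rangle$, the common factor $\exp\!\big(-\frac{y^2+\langle x,\theta\rangle^2}{2\sigma^2}\big)$ pulls out of the sum over $z$, leaving $\pi(1)\exp(\tfrac{y\langle x,\theta\rangle}{\sigma^2}) + \pi(2)\exp(-\tfrac{y\langle x,\theta\rangle}{\sigma^2})$ inside. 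The next step is to absorb the mixing weights using the $\nu$-parametrization: since $\pi(1) = \sqrt{\pi(1)\pi(2)}\,e^{\nu}$ and $\pi(2) = \sqrt{\pi(1)\pi(2)}\,e^{-\nu}$, that bracket equals $2\sqrt{\pi(1)\pi(2)}\cosh\!\big(\tfrac{y\langle x,\theta\rangle}{\sigma^2}+\nu\big)$, and moreover $\sqrt{\pi(1)\pi(2)} = \tfrac{1}{2}/\cosh\nu$ (since $\pi(1)+\pi(2)=1$ forces $\sqrt{\pi(1)\pi(2)} = \tfrac12\operatorname{sech}\nu$). Hence $\ln p(s\mid\theta,\pi)$ becomes, up to an additive constant independent of $(\theta,\pi)$, the sum of $-\tfrac{\langle x,\theta\rangle^2}{2\sigma^2}$ and $\ln\frac{\cosh(y\langle x,\theta\rangle/\sigma^2+\nu)}{\cosh\nu}$, plus the $x$-density term and the $-y^2/2\sigma^2$ term which are both independent of $(\theta,\pi)$.

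Then I would take expectations. For the population version, apply $-\E_{s\sim p(s\mid\theta^\ast,\pi^\ast)}[\cdot]$ term by term: the first term gives $\tfrac{1}{2\sigma^2}\E[\langle x,\theta\rangle^2] = \tfrac{1}{2\sigma^2}\langle\theta,\E[xx^\top]\theta\rangle$ by the identity $\langle x,\theta\rangle^2 = \theta^\top xx^\top\theta$ and linearity; the second term gives $-\E\big[\ln\frac{\cosh(y\langle x,\theta\rangle/\sigma^2+\nu)}{\cosh\nu}\big]$ directly; everything left over is collected into the constant $\mathtt{C}$ (finite because $\E\|x\|^2<\infty$ and the log-cosh term is controlled by its linear-growth bound $|\ln\cosh(t)|\le|t|$ together with finiteness of $\E|y\langle x,\theta\rangle|$). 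For the finite-sample version the argument is identical with $\E$ replaced by $\tfrac1n\sum_{i=1}^n$ and no integrability concerns, yielding $\mathtt{C}_n := -\tfrac1n\sum_i(\ln p(x_i) - y_i^2/2\sigma^2 - \tfrac12\ln(2\pi\sigma^2))$, which is a genuine constant in $(\theta,\pi)$.

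There is no real obstacle here — this is a bookkeeping computation — but the one place to be careful is the algebraic identity $\sqrt{\pi(1)\pi(2)} = \tfrac12\operatorname{sech}\nu$ and the sign/branch conventions in $\nu = \tfrac12(\ln\pi(1)-\ln\pi(2))$, making sure the $\cosh\nu$ in the denominator is exactly the normalization that makes the log-cosh term vanish at $\langle x,\theta\rangle = 0$; and for the population statement, explicitly noting that $\mathtt{C}$ is finite requires only second-moment control of $x$ under the ground-truth law and the bound $|\ln\cosh u|\le|u|$. I would present the derivation in the "expand quadratics $\to$ factor out label-independent terms $\to$ rewrite mixture weights via $\nu$ $\to$ take expectations" order, and relegate the routine constant-tracking to a single displayed line for $\mathtt{C}$ and $\mathtt{C}_n$.
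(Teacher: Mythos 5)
Your proposal is correct and follows essentially the same route as the paper: the paper's proof defers to its Appendix derivation, which likewise expands the two Gaussian components, factors out the terms independent of $z$, and uses $\pi(1)=\tfrac{e^{\nu}}{2\cosh\nu}$, $\pi(2)=\tfrac{e^{-\nu}}{2\cosh\nu}$ (equivalently $\pi(1)e^{t}+\pi(2)e^{-t}=\cosh(t+\nu)/\cosh\nu$) to produce the $\ln\frac{\cosh(y\langle x,\theta\rangle/\sigma^2+\nu)}{\cosh\nu}$ term, the quadratic term, and the $(\theta,\pi)$-independent constants. The only difference is presentational — the paper embeds this computation in a general-MLR/surrogate-function derivation, while you do the two-component algebra directly — and your bookkeeping of $\mathtt{C}$, $\mathtt{C}_n$ is consistent.
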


\begin{remark}
For the 2MLR model with Gaussian noise \(\varepsilon \sim \mathcal{N}(0, \sigma^2)\), 
we have shown that the negative log-likelihood function consists of three terms.
The first term is quadratic in the regression parameters \(\theta\), involving
the covariance matrix of the covariates \(x\) at the population level, 
or the sample covariance matrix at the finite-sample level. 
Under the assumption that the covariates \(x\sim \mathcal{N}(0, I_d)\), we have \(\E[x x^\top] = I_d\) at the population level,
and the first term simplifies to \(\|\theta\|^2/2\sigma^2\).
The third term at the population level, \(\mathtt{C}=\frac{1}{2\sigma^2}\E_{s\sim p(s\mid \theta^\ast, \pi^\ast)}[y^2] + \E_{s\sim p(s\mid \theta^\ast, \pi^\ast)}[\ln p(x)] - \frac{1}{2} \ln(2\pi \sigma^2)\), is a constant term that depends on the ground truth distribution of the observed data \(s=(x, y)\sim p(s\mid \theta^\ast, \pi^\ast)\).
Similarly, the third term at the finite-sample level \(\mathtt{C}_n=\frac{1}{2\sigma^2}\frac{1}{n}\sum_{i=1}^n y_i^2 + \frac{1}{n}\sum_{i=1}^n \ln p(x_i) - \frac{1}{2} \ln(2\pi \sigma^2)\) is a constant that depends on the samples \(\mathcal{S}=\{s_i\}_{i=1}^n=\{(x_i, y_i)\}_{i=1}^n \stackrel{\text{i.i.d.}}{\sim} p(s_i\mid \theta^\ast, \pi^\ast)\) drawn from the ground truth distribution.
Also, \(f\) is the KL divergence between the ground truth distribution \(p(s\mid \theta^\ast, \pi^\ast)\) and the model distribution \(p(s\mid \theta, \pi)\), plus a constant term (see Appendix~\ref{sup:derive_em}).
\end{remark}

\begin{lemmatxt}[EM Update Rules and Gradients]\label{lemma:em_update_gradients}
    Let \(U(\theta, \nu):= \E_{s\sim p(s\mid \theta^\ast, \pi^\ast)}\ln \cosh \left(y\langle x, \theta \rangle/\sigma^2+\nu\right)\) at population level and \(U_n(\theta, \nu):= \frac{1}{n}\sum_{i=1}^n \ln \cosh \left(y_i\langle x_i, \theta \rangle/\sigma^2+\nu\right)\) at finite-sample level,
    then the population EM update rules \(M(\theta, \nu), N(\theta, \nu)\) and finite-sample EM update rules \(M_n(\theta, \nu), N_n(\theta, \nu)\) for regression parameters \(\theta\) and imbalance of mixing weights \(\tanh \nu\) are:
    \begin{equation}
        \begin{aligned}
            M(\theta, \nu) &= \E[x x^\top]^{-1}\sigma^2\nabla_\theta U(\theta, \nu), \quad 
            &N(\theta, \nu) &= \nabla_\nu U(\theta, \nu),\\
            M_n(\theta, \nu) &= \left(\frac{1}{n}\sum_{i=1}^n x_i x_i^\top\right)^{-1}\sigma^2\nabla_\theta U_n(\theta, \nu), \quad 
            &N_n(\theta, \nu) &= \nabla_\nu U_n(\theta, \nu).
        \end{aligned}
    \end{equation}
\end{lemmatxt}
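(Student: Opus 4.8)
The plan is to prove both identities by a direct differentiation, using only the explicit forms of the EM update rules recorded in Equations~\eqref{eq:theta},~\eqref{eq:nu}, and~\eqref{eq:finite} (whose derivation from the E- and M-steps is given in Appendix~\ref{sup:derive_em}) together with the elementary identity $(\ln\cosh t)' = \tanh t$. Applying the chain rule to a single realization $s=(x,y)$ yields
\[
\nabla_\theta \ln\cosh\!\Big(\tfrac{y\langle x,\theta\rangle}{\sigma^2}+\nu\Big)=\tanh\!\Big(\tfrac{y\langle x,\theta\rangle}{\sigma^2}+\nu\Big)\,\frac{y\,x}{\sigma^2}, \qquad \partial_\nu \ln\cosh\!\Big(\tfrac{y\langle x,\theta\rangle}{\sigma^2}+\nu\Big)=\tanh\!\Big(\tfrac{y\langle x,\theta\rangle}{\sigma^2}+\nu\Big).
\]

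At the population level I would then interchange $\nabla_\theta$ and $\partial_\nu$ with the expectation $\E_{s\sim p(s\mid\theta^\ast,\pi^\ast)}[\cdot]$. This is legitimate by the standard Leibniz rule for differentiation under the integral sign: since $|\tanh(\cdot)|\le 1$, the $\theta$-gradient of the integrand is dominated in norm by $\|x\|\,|y|/\sigma^2$, which is integrable because $x\sim\mathcal{N}(0,I_d)$ and $y$ has finite Gaussian moments under the data-generating process, while the $\nu$-derivative is bounded by $1$; the same domination holds locally uniformly in $(\theta,\nu)$. Hence $\sigma^2\nabla_\theta U(\theta,\nu)=\E[\tanh(\tfrac{y\langle x,\theta\rangle}{\sigma^2}+\nu)\,y\,x]$ and $\nabla_\nu U(\theta,\nu)=\E[\tanh(\tfrac{y\langle x,\theta\rangle}{\sigma^2}+\nu)]$. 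Left-multiplying the first by $\E[xx^\top]^{-1}$ and comparing with the general form $M(\theta,\nu)=\E[xx^\top]^{-1}\,\E[\tanh(\cdots)yx]$ stated in the discussion following~\eqref{eq:theta} gives $M(\theta,\nu)=\E[xx^\top]^{-1}\sigma^2\nabla_\theta U(\theta,\nu)$, and comparing the second with~\eqref{eq:nu} gives $N(\theta,\nu)=\nabla_\nu U(\theta,\nu)$.

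The finite-sample identities follow by the identical computation, except that $U_n$ and its partial derivatives now act on a finite sum of smooth functions, so termwise differentiation is immediate and no integrability condition is needed: $\sigma^2\nabla_\theta U_n(\theta,\nu)=\frac1n\sum_{i=1}^n\tanh(\tfrac{y_i\langle x_i,\theta\rangle}{\sigma^2}+\nu)\,y_i\,x_i$ and $\nabla_\nu U_n(\theta,\nu)=\frac1n\sum_{i=1}^n\tanh(\tfrac{y_i\langle x_i,\theta\rangle}{\sigma^2}+\nu)$. Left-multiplying the former by the inverse sample covariance $(\frac1n\sum_{i=1}^n x_ix_i^\top)^{-1}$ and matching with Equation~\eqref{eq:finite} yields $M_n(\theta,\nu)=(\frac1n\sum_{i=1}^n x_ix_i^\top)^{-1}\sigma^2\nabla_\theta U_n(\theta,\nu)$ and $N_n(\theta,\nu)=\nabla_\nu U_n(\theta,\nu)$.

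There is no genuinely hard step here; the statement is essentially a reformulation of the already-derived EM updates. The only place warranting care is the exchange of differentiation and expectation at the population level, which the boundedness of $\tanh$ and the finiteness of the relevant Gaussian moments of $(x,y)$ dispatch cleanly.
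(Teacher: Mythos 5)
Your proposal is correct and follows essentially the same route as the paper's own proof: differentiate $\ln\cosh$ via $(\ln\cosh t)'=\tanh t$, interchange gradient and expectation by the Leibniz rule, and match the result against the EM update expressions~\eqref{eq:theta},~\eqref{eq:nu},~\eqref{eq:finite} (with termwise differentiation at the finite-sample level). Your added domination argument justifying the interchange is a harmless elaboration of what the paper simply cites as the Leibniz rule.
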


\begin{remark}
The EM algorithm maximizes the evidence lower bound (ELBO) on the observed log-likelihood, 
namely by minimizing the surrogate functions of the negative log-likelihood functions \(f\) and \(f_n\) at the population level and the finite-sample level, respectively 
(see Appendix~\ref{sup:derive_em} for the expressions of the surrogate functions).
The minimizers of the surrogate functions yield the EM update rules for regression parameters \(\theta\) and the imbalance of mixing weights \(\tanh \nu = \pi(1) - \pi(2)\):
\(M(\theta, \nu), N(\theta, \nu)\) and \(M_n(\theta, \nu), N_n(\theta, \nu)\) at the population level and the finite-sample level, respectively (see equations~\eqref{eq:theta},~\eqref{eq:nu},~\eqref{eq:finite} for EM update rules).
By Leibniz's rule, we exchange the order of taking expectation and gradient of \(\ln(\cosh(y\langle x, \theta \rangle/\sigma^2+\nu))\), 
and noting that \(\frac{\mathd}{\mathd t}\ln \cosh(t) = \tanh(t)\), we have established the above relation between EM update rules and the functions \(U(\theta, \nu), U_n(\theta, \nu)\).
\end{remark}

\begin{propositiontxt}[Connection between EM Update Rules and Gradient Descent]\label{prop:em_update_nll_gradients}
    The population/finite-sample EM update rules give:
    \begin{equation}
        \begin{aligned}
            M(\theta, \nu)   &= \theta - \E[x x^\top]^{-1}\sigma^2 \nabla_\theta f(\theta, \pi), 
            \quad & N(\theta, \nu)   &= \tanh \nu - \nabla_\nu f(\theta, \pi),\\
            M_n(\theta, \nu) &= \theta - \left(\frac{1}{n}\sum_{i=1}^n x_i x_i^\top\right)^{-1}\sigma^2 \nabla_\theta f_n(\theta, \pi), 
            \quad & N_n(\theta, \nu) &= \tanh \nu - \nabla_\nu f_n(\theta, \pi).
        \end{aligned}
    \end{equation}
\end{propositiontxt}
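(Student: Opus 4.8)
The plan is to obtain both identities purely by differentiating the closed form of the negative log-likelihood from Proposition~\ref{prop:nll} and matching the outcome against the gradient representation of the EM updates in Lemma~\ref{lemma:em_update_gradients}. The population and finite-sample statements share the same structure, so I would carry out the population case in full and then note that the finite-sample case follows \emph{mutatis mutandis} upon replacing $\E[\cdot]$ by $\frac1n\sum_{i=1}^n(\cdot)$ and $\E[xx^\top]$ by $\frac1n\sum_{i=1}^n x_i x_i^\top$.

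First I would re-express $f$ as a function of $\theta$ and the auxiliary variable $\nu$: since $\ln\cosh\nu$ is independent of the sample $s$, Proposition~\ref{prop:nll} gives $f(\theta,\pi)=\frac{1}{2\sigma^2}\langle\theta,\E[xx^\top]\theta\rangle - U(\theta,\nu) + \ln\cosh\nu + \mathtt{C}$, with $U(\theta,\nu)=\E\ln\cosh(y\langle x,\theta\rangle/\sigma^2+\nu)$ as in Lemma~\ref{lemma:em_update_gradients} and $\mathtt{C}$ free of $(\theta,\nu)$. Then I would compute the two gradients separately: (i) $\nabla_\theta f(\theta,\pi)=\frac{1}{\sigma^2}\E[xx^\top]\theta-\nabla_\theta U(\theta,\nu)$, using that the quadratic term has gradient $\frac{1}{\sigma^2}\E[xx^\top]\theta$ (with $\E[xx^\top]$ symmetric) while $\ln\cosh\nu$ and $\mathtt{C}$ are $\theta$-free; and (ii) $\nabla_\nu f(\theta,\pi)=-\nabla_\nu U(\theta,\nu)+\tanh\nu$, using $\frac{\mathd}{\mathd\nu}\ln\cosh\nu=\tanh\nu$ and that the quadratic term and $\mathtt{C}$ are $\nu$-free. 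Multiplying (i) on the left by $\E[xx^\top]^{-1}\sigma^2$ and invoking $M(\theta,\nu)=\E[xx^\top]^{-1}\sigma^2\nabla_\theta U(\theta,\nu)$ from Lemma~\ref{lemma:em_update_gradients} gives $\E[xx^\top]^{-1}\sigma^2\nabla_\theta f(\theta,\pi)=\theta-M(\theta,\nu)$, which rearranges to the first claimed identity; similarly $N(\theta,\nu)=\nabla_\nu U(\theta,\nu)$ combined with (ii) yields $\nabla_\nu f(\theta,\pi)=\tanh\nu-N(\theta,\nu)$, the second identity. The finite-sample identities are obtained identically from the $f_n$ expression and the $M_n,N_n$ formulas.

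The only nontrivial points are bookkeeping rather than conceptual. One must justify the interchange of $\nabla$ and $\E$ needed to write $\nabla_\theta\E[\ln\cosh(\cdot)]=\nabla_\theta U$ and $\nabla_\nu\E[\ln\cosh(\cdot)]=\nabla_\nu U$; this follows from Leibniz's rule (dominated convergence), since $|\tanh|\le 1$ bounds the integrand's $\theta$-gradient by $|y|\,\|x\|/\sigma^2$ and its $\nu$-derivative by $1$, both integrable under the ground-truth law of $s$ — exactly the argument already used for Lemma~\ref{lemma:em_update_gradients}. One must also confirm that $\mathtt{C}$, $\mathtt{C}_n$ (and $\ln\cosh\nu$ in the $\theta$-gradient) are annihilated by the relevant derivative, which is immediate from the explicit forms recorded in the remark after Proposition~\ref{prop:nll}; for the finite-sample case the interchange is trivial and $\frac1n\sum_{i=1}^n x_ix_i^\top$ is invertible almost surely for $n\ge d$, so the expressions are well defined. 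I do not anticipate a real obstacle — the main care is keeping the reparametrization $\pi\leftrightarrow\nu$ (via $\tanh\nu=\pi(1)-\pi(2)$, $\pi(1)+\pi(2)=1$) consistent throughout so that $\nabla_\nu f(\theta,\pi)$ is interpreted as the derivative of $f$ viewed as a function of $\nu$.
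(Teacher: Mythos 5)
Your proposal is correct and follows essentially the same route as the paper: express $f$ (resp.\ $f_n$) as the quadratic term plus $\ln\cosh\nu$ minus $U$ (resp.\ $U_n$), differentiate in $\theta$ and $\nu$, and compare with the gradient representation $M=\E[xx^\top]^{-1}\sigma^2\nabla_\theta U$, $N=\nabla_\nu U$ from Lemma~\ref{lemma:em_update_gradients}. The interchange-of-derivative-and-expectation and constant-term bookkeeping you flag are exactly the points the paper handles (in Lemma~\ref{lemma:em_update_gradients} and the remark following Proposition~\ref{prop:nll}), so no gap remains.
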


\begin{remark}
Note that the negative log-likelihood functions \(f, f_n\) can be expressed in terms of the functions \(U, U_n\) 
as \(f(\theta, \pi) = \frac{1}{2\sigma^2} \langle \theta, \E[x x^\top] \cdot \theta \rangle + \ln \cosh \nu - U(\theta, \nu)\)
and \(f_n(\theta, \pi) = \frac{1}{2\sigma^2} \left\langle \theta, \frac{1}{n}\sum_{i=1}^n x_i x_i^\top \cdot \theta \right\rangle + \ln \cosh \nu - U_n(\theta, \nu)\).
The first term depends only on the regression parameters \(\theta\), the second term depends only on the imbalance of mixing weights \(\tanh \nu\),
and the third term, given by \(-U\) or \(-U_n\), is a function of both \(\theta\) and \(\nu\).
By taking the gradients of the negative log-likelihood functions \(f\) and \(f_n\) with respect to \(\theta\) and \(\nu\), and using the above relation between the EM update rules and the functions \(U, U_n\), 
we establish the connection between the EM update rules and the gradient descent dynamics for the negative log-likelihood functions \(f\) and \(f_n\) (Proposition~\ref{prop:em_update_nll_gradients}).
This relation implies that \((\theta, \tanh \nu)\) is a fixed point of the population EM update rules \(M(\theta, \nu), N(\theta, \nu)\) 
if and only if \((\theta, \tanh \nu)\) is a stationary point of the negative log-likelihood function \(f\), namely \(\nabla_\theta f =\vec{0}, \nabla_\nu f =0\) at this point.
See also the discussion of the fixed points of EM update rules in Proposition~\ref{prop:distinct_fixed_points} and Proposition~\ref{prop:contraction_property}.
Our results extend the connection between EM update rules only for regression parameters and gradient descent of the negative log-likelihood under known balanced mixing weights \(\pi = \pi^\ast = (\frac{1}{2}, \frac{1}{2})\) (see Lemma 7 on page 19 of \cite{kwon2019global}, Lemma 2 on page 6523 of \cite{kwon2024global})
to the more general case with unknown mixing weights for EM updates of regression parameters and mixing weights.
\end{remark}

\begin{theoremtxt}[Explict EM Update Expressions]\label{theorem:explicit_em_update}
    Let \(A_\eta = k \eta^2 \sqrt{1+\eta^{-2}}, k=\frac{\| \theta \|}{\| \theta^{\ast} \|}, \eta=\frac{\| \theta^{\ast} \|}{\sigma}\), 
    and \(\varphi = \frac{\pi}{2} - \arccos | \rho |\in (0, \pi/2]\), \(\rho = \frac{\langle \theta, \theta^\ast \rangle}{\|\theta\|\|\theta^\ast\|}\), 
    and \(\varphi_\eta = \arcsin(\sin \varphi/\sqrt{1+\eta^{-2}})\), namely \(\sqrt{1+\eta^{-2}} = \sin \varphi/\sin \varphi_\eta\),
    and Gaussian random variables \(g_\eta, g'\sim \mathcal{N}(0, 1)\) with \(\E[g_\eta g'] = \sin \varphi_\eta\), then population EM update rules for regression parameters \(\theta\) and imbalance of mixing weights \(\tanh \nu\) are:
    \begin{equation}
        \begin{aligned}
        M (\theta, \nu) /\| \theta^{\ast} \| & =  \vec{e}_1 \frac{\sin \varphi}{\sin \varphi_\eta} \mathbb{E}[ \tanh \left(
        A_\eta g_\eta g' + \tmop{sgn} (\rho) (- 1)^{z + 1} \nu
        \right) g_\eta g']\\
        & +  \vec{e}_2  \frac{\cos \varphi}{\cos^2 \varphi_\eta} 
        \tmop{sgn} (\rho) \mathbb{E} [\tanh \left( A_\eta g_\eta g' + \tmop{sgn} (\rho) (- 1)^{z + 1} \nu \right) g_\eta (g_\eta - \sin\varphi_\eta \cdot g')]\\
        N (\theta, \nu) & =  \tmop{sgn} (\rho) \mathbb{E} \left[ \tanh \left( (-
        1)^{z + 1} A_\eta g_\eta g' + \tmop{sgn} (\rho) \nu \right) \right].
        \end{aligned}
    \end{equation}
\end{theoremtxt}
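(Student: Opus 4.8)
The plan is to reduce the $d$-dimensional expectations defining $M(\theta,\nu)=\E[\tanh(y\langle x,\theta\rangle/\sigma^2+\nu)\,yx]$ and $N(\theta,\nu)=\E[\tanh(y\langle x,\theta\rangle/\sigma^2+\nu)]$ (equations \eqref{eq:theta}, \eqref{eq:nu}, using $\E[xx^\top]=I_d$ under $x\sim\mathcal N(0,I_d)$) to a Gaussian integral over the plane $\operatorname{span}(\theta,\theta^\ast)$, and then to change variables into the correlated pair $(g_\eta,g')$. First I would pick the orthonormal basis $\{\vec e_1,\vec e_2\}$ of that plane with $\vec e_1=\theta/\|\theta\|$ and $\langle\theta^\ast,\vec e_2\rangle\ge0$, and write $x=u\vec e_1+v\vec e_2+x_\perp$ with $u,v\sim\mathcal N(0,1)$ i.i.d.\ and $x_\perp$ independent of $(u,v,z,\varepsilon)$. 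Then $\langle x,\theta\rangle=\|\theta\|u$ and $\langle x,\theta^\ast\rangle=\|\theta^\ast\|(\sgn(\rho)\sin\varphi\,u+\cos\varphi\,v)$ since $|\rho|=\sin\varphi$ and $\sqrt{1-\rho^2}=\cos\varphi$; consequently the integrand depends on $x$ only through $(u,v)$, so conditioning on $(u,v,z,\varepsilon)$ and using $\E[x_\perp]=0$ gives $M(\theta,\nu)\in\operatorname{span}(\vec e_1,\vec e_2)$, and the problem reduces to computing the two coordinates $\E[\tanh(\cdot)(y/\|\theta^\ast\|)u]$, $\E[\tanh(\cdot)(y/\|\theta^\ast\|)v]$ of $M(\theta,\nu)/\|\theta^\ast\|$, and $N(\theta,\nu)$.

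Next I would set $\varepsilon_0':=(-1)^{z+1}\varepsilon/\sigma$, which together with $\sigma=\|\theta^\ast\|/\eta$ gives $y/\|\theta^\ast\|=(-1)^{z+1}w$ where $w:=\sgn(\rho)\sin\varphi\,u+\cos\varphi\,v+\eta^{-1}\varepsilon_0'$, and check the routine but important fact that $(u,v,\varepsilon_0')\sim\mathcal N(0,I_3)$ is independent of $z$. Defining $g':=\sgn(\rho)u$ and $g_\eta:=w/\sqrt{1+\eta^{-2}}$ makes $g',g_\eta\sim\mathcal N(0,1)$ with $\E[g'g_\eta]=\sin\varphi/\sqrt{1+\eta^{-2}}=\sin\varphi_\eta$, exactly the correlation structure in the statement. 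Substituting $u=\sgn(\rho)g'$, $w=\sqrt{1+\eta^{-2}}g_\eta$, and $\|\theta^\ast\|\|\theta\|/\sigma^2=k\eta^2$ into the argument of $\tanh$ yields $y\langle x,\theta\rangle/\sigma^2+\nu=\mu\big(A_\eta g_\eta g'+\mu\nu\big)$ with $\mu:=(-1)^{z+1}\sgn(\rho)\in\{\pm1\}$, whence by oddness $\tanh(\cdot)=\mu T$ with $T:=\tanh\!\big(A_\eta g_\eta g'+\sgn(\rho)(-1)^{z+1}\nu\big)$, the precise $\tanh$ appearing in the claimed formulas.

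With this, two of the three pieces are immediate: $(y/\|\theta^\ast\|)u=\mu\sqrt{1+\eta^{-2}}g_\eta g'$, so multiplying by $\mu T$ (and $\mu^2=1$) gives the $\vec e_1$-coordinate $\sqrt{1+\eta^{-2}}\,\E[Tg_\eta g']=\tfrac{\sin\varphi}{\sin\varphi_\eta}\E[Tg_\eta g']$; and $N(\theta,\nu)=\E[\mu T]=\sgn(\rho)\E[(-1)^{z+1}\tanh(A_\eta g_\eta g'+(-1)^{z+1}\sgn(\rho)\nu)]$ collapses, after pulling $(-1)^{z+1}$ inside $\tanh$ once more by oddness, to $\sgn(\rho)\E[\tanh((-1)^{z+1}A_\eta g_\eta g'+\sgn(\rho)\nu)]$. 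The delicate piece is the $\vec e_2$-coordinate, since $v$ is not a function of $(g',g_\eta)$ alone. I would handle it by writing $g_\eta=\sin\varphi_\eta g'+\cos\varphi_\eta\zeta_0$ with $\zeta_0:=(g_\eta-\sin\varphi_\eta g')/\cos\varphi_\eta=(\cos\varphi\,v+\eta^{-1}\varepsilon_0')/\sqrt{\cos^2\varphi+\eta^{-2}}$, introducing the complementary Gaussian $\epsilon_0^\perp:=(\eta^{-1}v-\cos\varphi\,\varepsilon_0')/\sqrt{\cos^2\varphi+\eta^{-2}}$ which is independent of $(g',\zeta_0,z)$, and inverting to $v=(\cos\varphi\,\zeta_0+\eta^{-1}\epsilon_0^\perp)/\sqrt{\cos^2\varphi+\eta^{-2}}$. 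Since $Tg_\eta$ depends only on $(g',\zeta_0,z)$, the $\epsilon_0^\perp$ term integrates to zero, and combining $\sqrt{\cos^2\varphi+\eta^{-2}}=\sqrt{1+\eta^{-2}}\cos\varphi_\eta$ with $(y/\|\theta^\ast\|)v\cdot\mu T=\sgn(\rho)\sqrt{1+\eta^{-2}}Tg_\eta v$ produces the stated $\sgn(\rho)\tfrac{\cos\varphi}{\cos^2\varphi_\eta}\E[Tg_\eta(g_\eta-\sin\varphi_\eta g')]$.

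The main obstacle I anticipate is sign bookkeeping rather than anything analytic: three independent $\pm1$ factors are in play — the label sign $(-1)^{z+1}$, the orientation sign $\sgn(\rho)$ (as $\rho$ may be negative), and the reflection $\varepsilon\mapsto(-1)^{z+1}\varepsilon$ used to decouple the noise from $z$ — and one must track exactly how they combine into the single auxiliary sign $\mu$, and then how $\mu$ re-expands so that the label dependence lands in the asymmetric position $(-1)^{z+1}A_\eta g_\eta g'+\sgn(\rho)\nu$ inside the $\tanh$ of $N$. The second point requiring a clean argument is the orthogonal splitting of $v$ above, which is what forces the final $\vec e_2$-coefficient to be a function of $(g_\eta,g')$ only; everything else is routine Gaussian computation.
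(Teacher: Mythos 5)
Your proposal is correct and follows essentially the same route as the paper's proof: decompose $x$ over $\operatorname{span}\{\theta,\theta^\ast\}$ plus an independent orthogonal part, absorb the label into the noise via $(-1)^{z+1}\varepsilon/\sigma$, pass to the correlated pair $(g_\eta,g')$ with $\E[g_\eta g']=\sin\varphi_\eta$, and obtain the $\vec{e}_2$-coefficient by projecting the $\vec{e}_2$-component of $x$ onto that pair and discarding the independent complement (your $\zeta_0,\epsilon_0^\perp$ split is exactly the paper's $Z'$ construction, and your sign variable $\mu=(-1)^{z+1}\sgn(\rho)$ reproduces its sign bookkeeping). The only cosmetic difference is that the paper additionally records the degenerate case $\rho=0$ under the stated convention, which your argument also covers once $\sgn(0)=1$ and $\sin\varphi/\sin\varphi_\eta=\sqrt{1+\eta^{-2}}$ are adopted.
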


\begin{remark}
The above explicit expressions of EM update rules still hold when \(\rho=0\), namely when \(\langle \theta, \theta^\ast\rangle =0\), 
under the convention that \(\sgn(\rho) = \sgn(0) =1, \sin \varphi/ \sin \varphi_\eta = \sqrt{1+\eta^{-2}}\).
Moreover, the above explicit expressions of EM update rules show that the EM update for regression parameters \(M(\theta, \nu)\in \text{span}\{\theta, \theta^\ast \}\)
must lie in the space spanned by \(\theta, \theta^\ast\).
The population EM update rules for the normalized regression parameters and the imbalance of mixing weights,\(M(\theta, \nu)/\|\theta^\ast\|\) and \(N(\theta, \nu)\),
are determined by \(\rho=\frac{\langle \theta, \theta^\ast \rangle}{\|\theta\|\|\theta^\ast\|}, k=\frac{\|\theta\|}{\|\theta^\ast\|}, \tanh \nu = \pi(1) - \pi(2)\), and the SNR \(\eta = \| \theta^{\ast} \|/\sigma\),
while the other parameters \(A_\eta = k \eta^2 \sqrt{1+\eta^{-2}}\) and \(\sin \varphi_\eta = \sin \varphi/\sqrt{1+\eta^{-2}}\) are determined by \(k, \eta\), and \(\rho=\sgn(\rho)\sin \varphi\).
The application of Theorem~\ref{theorem:explicit_em_update} enables us to establish the following structural properties of EM update rules on boundedness (Proposition~\ref{prop:boundedness}), fixed points (Proposition~\ref{prop:distinct_fixed_points}), the contraction property (Proposition~\ref{prop:contraction_property}), and the cycloid trajectory (Corollary~\ref{cor:em_updates_noiseless}, Proposition~\ref{prop:deviation_cycloid_limit}).
\end{remark}

\begin{propositiontxt}[Boundedness of EM Update Rule]\label{prop:boundedness}
    The EM update rule for \(\theta\) is bounded by the following bound, which depends on SNR \(\eta = \| \theta^{\ast} \|/\sigma\):
    \begin{equation}
        \left\|M(\theta, \nu)\right\| \leq  \frac{\arctan \eta}{\pi/2} \|\theta^\ast\|+ \frac{2}{\pi} \sigma.
    \end{equation}
\end{propositiontxt}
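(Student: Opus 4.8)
The plan is to start from the explicit representation of Theorem~\ref{theorem:explicit_em_update}. With $\vec e_1 = \theta/\|\theta\|$ and $\vec e_2$ the in-plane orthogonal direction, that theorem gives $M(\theta,\nu)/\|\theta^\ast\| = a_1 \vec e_1 + a_2 \vec e_2$, where $a_1 = \tfrac{\sin\varphi}{\sin\varphi_\eta}\E[\tanh(T')\,g_\eta g']$ and $a_2 = \sgn(\rho)\tfrac{\cos\varphi}{\cos^2\varphi_\eta}\E[\tanh(T')\,g_\eta(g_\eta-\sin\varphi_\eta g')]$ with $T' := A_\eta g_\eta g' + \sgn(\rho)(-1)^{z+1}\nu$. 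Since $\vec e_1\perp\vec e_2$, $\|M(\theta,\nu)\|/\|\theta^\ast\| = \sqrt{a_1^2+a_2^2} = \sup_{\|u\|=1}\bigl(u_1a_1+u_2a_2\bigr) = \sup_{\|u\|=1}\E[\tanh(T')\,\xi_u]$. Substituting $g' = \sin\varphi_\eta g_\eta + \cos\varphi_\eta h$ with $h\sim\mathcal N(0,1)$ independent of $g_\eta$, one checks that $\xi_u = g_\eta(\alpha g_\eta + \beta h)$ where $\alpha = u_1\sin\varphi + u_2\sgn(\rho)\cos\varphi$ and $\beta = u_1\sin\varphi\cot\varphi_\eta - u_2\sgn(\rho)\cos\varphi\tan\varphi_\eta$. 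This reduces the problem to a two-dimensional variational estimate in $u$, parametrized further by $\varphi\in(0,\pi/2]$ and $\eta$.

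Next I would discard the $\tanh$ via $|\tanh(\cdot)|\le 1$, so that $\|M(\theta,\nu)\|/\|\theta^\ast\| \le \sup_{\|u\|=1}\E[|\xi_u|] = \sup_{\|u\|=1}\E[|g_\eta|\,|\alpha g_\eta + \beta h|]$. Since $(g_\eta,\alpha g_\eta+\beta h)$ is jointly Gaussian with variances $1$ and $\alpha^2+\beta^2$ and correlation $\alpha/\sqrt{\alpha^2+\beta^2}$, the absolute-product moment identity $\E[|XY|] = \tfrac2\pi(\sqrt{1-r^2}+r\arcsin r)$ for standard bivariate normals with correlation $r$ yields $\E[|\xi_u|] = \tfrac2\pi\bigl(|\beta| + |\alpha|\arcsin\tfrac{|\alpha|}{\sqrt{\alpha^2+\beta^2}}\bigr)$. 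A short computation (the cross term cancels because $\cot\varphi_\eta\tan\varphi_\eta = 1$) gives $\alpha^2+\beta^2 = u_1^2(1+\eta^{-2}) + u_2^2\,\tfrac{\cos^2\varphi\,(1+\eta^{-2})}{\cos^2\varphi+\eta^{-2}}$. Evaluating at the orthogonal configuration $\varphi = \pi/2$ (where $\varphi_\eta = \arcsin(1/\sqrt{1+\eta^{-2}}) = \arctan\eta$) forces $a_2 = 0$, and at the maximizing direction $u = \vec e_1$ one gets $\E[|\xi_{\vec e_1}|] = \tfrac{1}{\sin\varphi_\eta}\E[|g_\eta g'|] = \tfrac2\pi(\cot\varphi_\eta + \varphi_\eta) = \tfrac2\pi(\eta^{-1} + \arctan\eta)$. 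Multiplying through by $\|\theta^\ast\|$ and using $\sigma = \|\theta^\ast\|/\eta$ recovers exactly the claimed bound $\tfrac{\arctan\eta}{\pi/2}\|\theta^\ast\| + \tfrac2\pi\sigma$.

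The main obstacle is establishing that the orthogonal configuration $\varphi = \pi/2$ (together with $u = \vec e_1$) is the global maximizer, i.e. that $\sup_{\|u\|=1}\E[|\xi_u|] \le \tfrac2\pi(\eta^{-1}+\arctan\eta)$ for every $\varphi\in(0,\pi/2]$. A convenient route is to set $\gamma := \arcsin(|\alpha|/\sqrt{\alpha^2+\beta^2})\in[0,\pi/2]$, so that $\E[|\xi_u|] = \tfrac2\pi\sqrt{\alpha^2+\beta^2}\,h(\gamma)$ with $h(\gamma) := \cos\gamma + \gamma\sin\gamma$; since $h'(\gamma) = \gamma\cos\gamma\ge 0$ on $[0,\pi/2]$ and $h(\pi/2) = \pi/2$, the crude bound $\E[|\xi_u|]\le\sqrt{\alpha^2+\beta^2}\le\sqrt{1+\eta^{-2}}$ follows at once, and the sharper statement is obtained by retaining the deficit $h(\gamma) < \pi/2$ whenever $u\ne\vec e_1$ or $\varphi<\pi/2$ and verifying the elementary inequality $\sin\varphi\,(\cot\varphi_\eta+\varphi_\eta)\le\eta^{-1}+\arctan\eta$ (which holds with equality at $\varphi = \pi/2$ and reduces to $\sqrt{1+\eta^2}\le 1+\eta\arctan\eta$ in the limit $\varphi\to 0$). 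If this direct optimization becomes unwieldy, an alternative is Gaussian integration by parts in $g_\eta$, which writes $\E[\tanh(T')\xi_u] = \alpha\,\E[\tanh(T')] + A_\eta\,\E[\mathrm{sech}^2(T')\,(2\sin\varphi_\eta g_\eta+\cos\varphi_\eta h)(\alpha g_\eta+\beta h)]$; the first term is controlled by $|\alpha|\le 1$ and the second by the fact that $A_\eta\,\mathrm{sech}^2(A_\eta\,\cdot)$ behaves as an approximate identity of total mass $2$, so the large factor $A_\eta$ is absorbed. Either way, the angle optimization over $\varphi$ is the crux of the argument.
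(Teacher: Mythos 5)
Your reduction is sound as far as it goes: rewriting $\|M(\theta,\nu)\|/\|\theta^\ast\|=\sup_{\|u\|=1}\E[\tanh(T')\,\xi_u]$ with $\xi_u=g_\eta(\alpha g_\eta+\beta h)$, the identity $\E|\xi_u|=\tfrac2\pi\bigl(|\beta|+|\alpha|\arcsin\tfrac{|\alpha|}{\sqrt{\alpha^2+\beta^2}}\bigr)$, the formula $\alpha^2+\beta^2=u_1^2(1+\eta^{-2})+u_2^2\tfrac{\cos^2\varphi\,(1+\eta^{-2})}{\cos^2\varphi+\eta^{-2}}$, and the evaluation of the target value at $(\varphi,u)=(\pi/2,\vec e_1)$ are all correct. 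But the argument stops exactly at what you yourself call the crux: you never prove $\sup_{\|u\|=1}\E|\xi_u|\le\tfrac2\pi(\eta^{-1}+\arctan\eta)$ uniformly in $\varphi$, and the hints you give do not close it. First, $u=\vec e_1$ is \emph{not} the maximizing direction once $\varphi<\pi/2$: at $u=(0,1)$ one gets $\E|\xi_u|=\tfrac2\pi\cos\varphi\,[\tan\varphi_\eta+\tfrac{\pi}{2}-\varphi_\eta]$, which for small $\varphi$ is close to $1$ and exceeds the $u=\vec e_1$ value $\tfrac2\pi\sqrt{1+\eta^{-2}}\approx0.64$; hence your ``elementary inequality'' $\sin\varphi(\cot\varphi_\eta+\varphi_\eta)\le\eta^{-1}+\arctan\eta$ (which is true, and follows from the derivative in $\sin\varphi$ being $\varphi_\eta\ge0$) only controls the $u=\vec e_1$ slice and says nothing about mixed directions. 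Second, the obvious repairs fail: the crude bound $\sqrt{\alpha^2+\beta^2}\,h(\gamma)\le\tfrac{\pi}{2}\sqrt{1+\eta^{-2}}$ overshoots (as you note), and decoupling via $\E|\xi_u|\le|u_1|\,\E|\xi_{(1,0)}|+|u_2|\,\E|\xi_{(0,1)}|$ yields $\tfrac2\pi\sqrt{[\sin\varphi(\varphi_\eta+\cot\varphi_\eta)]^2+[\cos\varphi(\tan\varphi_\eta+\tfrac{\pi}{2}-\varphi_\eta)]^2}$, which near $\varphi=0$ at high SNR is about $\tfrac2\pi\sqrt{1+\pi^2/4}\approx1.19$, strictly above the target $\approx1$. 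So after discarding $\tanh$ you are left with a genuine two-variable maximization of a convex, $1$-homogeneous functional over a $\varphi$-dependent family of ellipses whose supremum approaches the target with only a razor-thin margin away from $\varphi=\pi/2$; ``retaining the deficit'' and the approximate-identity heuristic for the integration-by-parts fallback are sketches, not estimates, so the boundedness claim is not yet proved.

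For comparison, the paper does not take a supremum over in-plane directions. It bounds the two coordinates of $M(\theta,\nu)/\|\theta^\ast\|$ separately, using $\tfrac2\pi\sin\varphi[\varphi_\eta+\cot\varphi_\eta]$ along $\vec e_1$ and $\tfrac2\pi\cos\varphi/\cos\varphi_\eta$ along $\vec e_2$ (a sharper coordinate bound than what $|\tanh|\le1$ alone yields for that component), combines them into $\tfrac2\pi\sqrt{H(\sin\varphi,\eta^{-2})}$, and then settles the angle optimization by a dedicated one-variable monotonicity result (Lemma~\ref{suplem:H_nondecreasing}) showing $H$ is non-decreasing in $\sin\varphi$, so that evaluation at $\sin\varphi=1$ gives $\tfrac2\pi(\arctan\eta+\eta^{-1})$. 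The step you leave open is precisely the analogue of that lemma in your harder, two-variable formulation, and it is where the entire difficulty of the proposition lives.
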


\begin{remark}
The above bound for the EM update rule characterizes the range of the EM update rule for regression parameters \(\theta\).
When the SNR \(\eta\) is small, the noise term dominates the magnitude of the EM update. In particular, 
when \(\eta \to 0\) and the magnitude of the previous iteration \(\|\theta\|/\sigma \to \infty\), we have \(A_\eta \to \infty\) and \(\varphi_\eta \to 0\),
and therefore \(\|M(\theta, \nu)\| \to \frac{2}{\pi} \sigma\).
When the SNR \(\eta\) is large, the \(\|\theta\|\) term dominates the noise term. Specifically, 
as \(\eta \to \infty, |\rho| \to 1\) and \(\|\theta\| \neq0\), we have \(A_\eta \to \infty\) and \(\varphi_\eta \to \varphi = \frac{\pi}{2}\),
and hence \(\|M(\theta, \nu)\| \to \|\theta^\ast\|\). 
Therefore, our bound on the magnitude of the EM update rule \(\|M(\theta, \nu)\|\) is tight in the limiting SNR regimes, both as \(\eta \to 0\) and \(\eta \to \infty\).
Our result on the boundedness of the EM update rule for regression parameters is consistent with the previous bound \(3\sqrt{\|\theta^\ast\|^2 + \sigma^2}\) obtained under known balanced mixing weights \(\pi = \pi^\ast = (\frac{1}{2}, \frac{1}{2})\) (see Lemma 22 on page 52 of \cite{kwon2019global}),
since \(\frac{\arctan \eta}{\pi/2} \|\theta^\ast\|+ \frac{2}{\pi} \sigma \leq \|\theta^\ast\|+ \frac{2}{\pi} \sigma \leq \sqrt{(1+4/\pi^2)(\|\theta^\ast\|^2 + \sigma^2)} \leq 1.2 \sqrt{\|\theta^\ast\|^2 + \sigma^2} \leq 3\sqrt{\|\theta^\ast\|^2 + \sigma^2}\).
\end{remark}

\begin{propositiontxt}[Distinct Fixed Points of EM Update Rules]\label{prop:distinct_fixed_points}
    The EM update rules \(M(\theta, \nu), N(\theta, \nu)\) for regression parameters \(\theta\) and the imbalance of mixing weights \(\tanh \nu\) have the following three distinct fixed points \((\theta^\ast, \nu^\ast), (-\theta^\ast, -\nu^\ast), (\vec{0}, 0)\):
    \begin{equation}
    \begin{pmatrix}
        M(\theta^\ast, \nu^\ast) \\ N(\theta^\ast, \nu^\ast)
    \end{pmatrix} = \begin{pmatrix}
        \theta^\ast \\ \tanh \nu^\ast
    \end{pmatrix},\quad
    \begin{pmatrix}
        M(-\theta^\ast, -\nu^\ast) \\ N(-\theta^\ast, -\nu^\ast)
    \end{pmatrix} = \begin{pmatrix}
        -\theta^\ast \\ \tanh(-\nu^\ast)
    \end{pmatrix},\quad
    \begin{pmatrix}M(\vec{0}, 0) \\ N(\vec{0}, 0) \end{pmatrix} = \begin{pmatrix} \vec{0} \\ \tanh 0 \end{pmatrix}.
    \end{equation}
\end{propositiontxt}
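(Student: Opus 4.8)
The plan is to leverage the equivalence, established in Proposition~\ref{prop:em_update_nll_gradients}, between fixed points of the population EM operator $(M,N)$ and stationary points of the negative log-likelihood $f$. Since $M(\theta,\nu)=\theta-\E[xx^\top]^{-1}\sigma^2\nabla_\theta f(\theta,\pi)$ and $N(\theta,\nu)=\tanh\nu-\nabla_\nu f(\theta,\pi)$, and $\E[xx^\top]$ is invertible, we have $\big(M(\theta,\nu),N(\theta,\nu)\big)=(\theta,\tanh\nu)$ if and only if $\nabla_\theta f(\theta,\pi)=\vec{0}$ and $\nabla_\nu f(\theta,\pi)=0$. Hence it suffices to exhibit three distinct stationary points of $f$ (or, where more convenient, to check the fixed-point equations directly) and read off the corresponding $\nu$-coordinates.

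For $(\theta^\ast,\nu^\ast)$: as recorded in the remark following Proposition~\ref{prop:nll}, $f(\theta,\pi)=D_{\mathrm{KL}}\big(p(\cdot\mid\theta^\ast,\pi^\ast)\,\|\,p(\cdot\mid\theta,\pi)\big)+\mathtt{C}$, which is bounded below by $\mathtt{C}$ with equality attained at $(\theta,\pi)=(\theta^\ast,\pi^\ast)$. Because $f$ is smooth in $(\theta,\nu)$ on $\mathbb{R}^d\times\mathbb{R}$ (the integrand $\ln\cosh$ is smooth and dominated convergence justifies differentiating under the expectation, using $|\tanh|\le 1$ and the finite moments of $yx$), this global minimizer is an interior stationary point, so $(\theta^\ast,\tanh\nu^\ast)$ is a fixed point. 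For $(-\theta^\ast,-\nu^\ast)$: I would invoke the label-swap symmetry of the symmetric 2MLR model. Relabelling $z\mapsto 3-z$ and negating $\theta$ leaves the conditional law of $y$ given $x$ unchanged, since $(-1)^{(3-z)+1}\langle x,-\theta\rangle=(-1)^{z+1}\langle x,\theta\rangle$; consequently $p\big(s\mid -\theta,(\pi(2),\pi(1))\big)=p\big(s\mid\theta,(\pi(1),\pi(2))\big)$ for all $(\theta,\pi)$. In particular $p\big(s\mid -\theta^\ast,(\pi^\ast(2),\pi^\ast(1))\big)\equiv p(s\mid\theta^\ast,\pi^\ast)$, so $\big(-\theta^\ast,(\pi^\ast(2),\pi^\ast(1))\big)$ also minimizes $f$ and is therefore stationary; since $(\pi^\ast(2),\pi^\ast(1))$ corresponds to $\nu=\tfrac12(\ln\pi^\ast(2)-\ln\pi^\ast(1))=-\nu^\ast$, this gives the fixed point $(-\theta^\ast,\tanh(-\nu^\ast))$.

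For $(\vec{0},0)$ it is cleanest to verify the fixed-point equations directly from the update rules \eqref{eq:theta} and \eqref{eq:nu}, since the explicit expressions of Theorem~\ref{theorem:explicit_em_update} involve $\vec{e}_1=\theta/\|\theta\|$, which degenerates at $\theta=\vec{0}$: with $\theta=\vec{0}$ and $\nu=0$ the argument of $\tanh$ vanishes, so $M(\vec{0},0)=\E[\tanh(0)\,yx]=\vec{0}$ and $N(\vec{0},0)=\E[\tanh(0)]=0=\tanh 0$. Finally, the three fixed points are pairwise distinct because, under the standing assumption of parameter separation (SNR $\eta=\|\theta^\ast\|/\sigma>0$, i.e. $\theta^\ast\neq\vec{0}$), their first coordinates $\theta^\ast$, $-\theta^\ast$, $\vec{0}$ are pairwise distinct.

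I expect the only mildly delicate point to be the symmetry bookkeeping in the second step — tracking the relabelling and the induced sign flip $\nu^\ast\mapsto-\nu^\ast$ — together with the routine verification that differentiation under the expectation is valid, so that the KL-divergence minimizer is genuinely a stationary point; neither constitutes a real obstacle. An alternative, more computational route would substitute $\theta=\pm\theta^\ast$, $\nu=\pm\nu^\ast$ directly into Theorem~\ref{theorem:explicit_em_update} and evaluate the resulting Gaussian integrals, but this requires additional manipulation (symmetrization over the component label $z$ and over sign flips of the correlated Gaussians $g_\eta,g'$) and is less transparent than the stationarity argument.
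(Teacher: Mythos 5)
Your proposal is correct, but it takes a genuinely different route from the paper. The paper's own proof is computational: it checks $(\vec{0},0)$ directly from \eqref{eq:theta}--\eqref{eq:nu} (exactly as you do), and then verifies $(\theta^\ast,\nu^\ast)$ by substituting into the explicit expressions of Theorem~\ref{theorem:explicit_em_update}, symmetrizing over the label $z$, and evaluating the resulting expectations against the Bessel-type density $K_0(|\cdot|)/\pi$ via the integral identities $\E_X[\exp(\alpha X)]=(1-\alpha^2)^{-1/2}$ and $\E_X[\exp(\alpha X)X]=\alpha(1-\alpha^2)^{-3/2}$; the point $(-\theta^\ast,-\nu^\ast)$ then follows from the sign antisymmetry $M(-\theta^\ast,-\nu^\ast)=-M(\theta^\ast,\nu^\ast)$, $N(-\theta^\ast,-\nu^\ast)=-N(\theta^\ast,\nu^\ast)$ visible in those same expressions. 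You instead invoke the fixed-point/stationary-point equivalence from Proposition~\ref{prop:em_update_nll_gradients} (stated explicitly in the remark following it, and proved before this proposition, so there is no circularity), identify $(\theta^\ast,\nu^\ast)$ as an interior global minimizer of $f$ via the KL-divergence representation, and obtain $(-\theta^\ast,-\nu^\ast)$ from the label-swap symmetry $p(s\mid-\theta,(\pi(2),\pi(1)))=p(s\mid\theta,(\pi(1),\pi(2)))$, which is easily checked on the two-component Gaussian mixture density. Your argument is shorter, avoids the special-function calculus entirely, and is more robust (it uses only invertibility of $\E[xx^\top]$, not the explicit Gaussian-covariate formulas), essentially formalizing what the paper relegates to the remark after the proposition; the paper's computation, by contrast, independently validates the explicit update expressions and sets up the Bessel machinery reused in Proposition~\ref{prop:contraction_property}. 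Two minor points worth stating if you write this up: both routes implicitly require $\nu^\ast$ finite (i.e., $\pi^\ast(1),\pi^\ast(2)>0$) so that $(\theta^\ast,\nu^\ast)$ is an interior point of $\mathbb{R}^d\times\mathbb{R}$, and your distinctness observation needs $\theta^\ast\neq\vec{0}$, which is the standing assumption $\eta>0$.
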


\begin{remark}
Among these three distinct fixed points, the first two are the ground truth parameters \((\theta^\ast, \nu^\ast)\) and \((-\theta^\ast, -\nu^\ast)\),
and they are the only two global minimizers of the negative log-likelihood function \(f(\theta, \pi)\) at the population level.
Since \(f\) is equal to the KL divergence between the ground truth distribution \(p(s\mid \theta^\ast, \pi^\ast)\) and the model distribution \(p(s\mid \theta, \pi)\), plus a constant term (see Appendix~\ref{sup:derive_em}),
these two fixed points uniquely minimize \(f\), achieving a KL divergence value of zero.
Apart from these three distinct fixed points, there exist two additional fixed points of the EM update rules whose regression parameters lie in the plane \(\text{span}\{\theta, \theta^\ast\}\), as shown in Proposition~\ref{prop:contraction_property}.
These points exhibit a contraction property along the direction of \(\theta\) when \(\theta\) is not in the same/opposite direction of gound truth \(\theta^\ast\).
This result is also consistent with the previous findings on the fixed points of the EM update rule for regression parameters only, under known balanced mixing weights \(\pi = \pi^\ast=(\frac{1}{2}, \frac{1}{2})\) (see Theorem 1 on page 6 of \cite{kwon2019global} and Theorem 2 on page 6523 of \cite{kwon2024global}).
\end{remark}

\begin{propositiontxt}[Contraction Property of EM Update Rule around the Fixed Point]\label{prop:contraction_property}
    Suppose that the unit direction vector \(\vec{e}^\perp\) 
    is orthogonal to the ground truth \(\theta^\ast\) of regression parameters, namely \(\langle \theta^\ast, \vec{e}^\perp \rangle = 0, \|\vec{e}^\perp\|=1\), 
    there exists \(k^\ast(\eta)>0\) which is determined by \(\eta = \| \theta^{\ast} \|/\sigma\) such that the \((k^\ast(\eta)\|\theta^\ast\|\vec{e}^\perp, 0)\) is a fixed point of the EM update rule \(M(\theta, \nu), N(\theta, \nu)\) for regression parameters \(\theta\) and the imbalance of mixing weights \(\tanh \nu\) given any SNR \(\eta = \| \theta^{\ast} \|/\sigma > 0\):
    \begin{equation}
    \begin{pmatrix}
        M(k^\ast(\eta)\|\theta^\ast\|\vec{e}^\perp, 0) \\ N(k^\ast(\eta)\|\theta^\ast\|\vec{e}^\perp, 0)
    \end{pmatrix} = \begin{pmatrix}
        k^\ast(\eta)\|\theta^\ast\|\vec{e}^\perp \\ \tanh(0)
    \end{pmatrix}.
    \end{equation}
    For any \(k>0\),\(N(k\|\theta^\ast\|\vec{e}^\perp, 0) = \tanh(0)\) and  \(M(k\|\theta^\ast\|\vec{e}^\perp, 0), \vec{e}^\perp\) have the same direction, 
    \begin{equation}
    \begin{cases}
        0< \|M(k\|\theta^\ast\|\vec{e}^\perp, 0)\|/\|\theta^\ast\|-k < k^\ast(\eta)-k\quad \text{if } k<k^\ast(\eta),\\
        0= \|M(k\|\theta^\ast\|\vec{e}^\perp, 0)\|/\|\theta^\ast\|-k = k^\ast(\eta)-k\quad \text{if } k=k^\ast(\eta),\\
        0>\|M(k\|\theta^\ast\|\vec{e}^\perp, 0)\|/\|\theta^\ast\|-k > k^\ast(\eta)-k\quad \text{if } k>k^\ast(\eta).
    \end{cases}
    \end{equation}
    Moreover, the bounds of \(k^\ast(\eta)\) are:
    \begin{equation}
        \frac{1}{\sqrt{3}} < k^\ast(\eta) < \min\left(\frac{2}{\pi}\sqrt{1+\eta^{-2}}, 1\right), \forall\, \eta>0,
    \end{equation}
    with \(\lim_{\eta\to 0_+} k^\ast(\eta) = \frac{1}{\sqrt{3}}\) and \(\lim_{\eta\to \infty} k^\ast(\eta) = \frac{2}{\pi}\).
\end{propositiontxt}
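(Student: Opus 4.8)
The plan is to reduce the entire statement to the fixed‑point and contraction behavior of a single increasing scalar map. Set $\theta=k\|\theta^\ast\|\vec{e}^\perp$ and $\nu=0$; then $\langle\theta,\theta^\ast\rangle=0$, so $\rho=0$, $\sgn(\rho)=1$, $\varphi=\varphi_\eta=0$, $\vec{e}_1=\vec{e}^\perp$, and $g_\eta,g'$ are independent. Substituting $\nu=0$ into the explicit formulas of Theorem~\ref{theorem:explicit_em_update}: the $N$‑update is $\E[\tanh((-1)^{z+1}A_\eta g_\eta g')]=0$ because $\tanh$ is odd and $g_\eta g'$ is symmetric given $z$, so $N(k\|\theta^\ast\|\vec{e}^\perp,0)=0=\tanh 0$; the $\vec{e}_2$‑component of the $M$‑update is $\E[\tanh(A_\eta g_\eta g')g_\eta^2]$, which vanishes for the same reason (oddness in $g'$ given $g_\eta$); and the $\vec{e}^\perp$‑component equals $\|\theta^\ast\|\,h(k)$ with $h(k):=\sqrt{1+\eta^{-2}}\,\E[\tanh(A_\eta g_\eta g')\,g_\eta g']$ and $A_\eta=k\eta^2\sqrt{1+\eta^{-2}}$. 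Since $\tanh$ is odd and increasing, $\tanh(A_\eta g_\eta g')$ always has the same sign as $g_\eta g'$, so $h(k)>0$ for every $k>0$; hence $M(k\|\theta^\ast\|\vec{e}^\perp,0)=\|\theta^\ast\|h(k)\vec{e}^\perp$ points along $\vec{e}^\perp$, as claimed. The proposition is thereby reduced to the behavior of $h$.

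Next I would establish the three facts that drive everything. Gaussian integration by parts in the independent pair $(g_\eta,g')$ gives $\E[\tanh(A_\eta g_\eta g')g_\eta g']=A_\eta\,\E[g_\eta^2\,\mathrm{sech}^2(A_\eta g_\eta g')]$, hence, using $\sqrt{1+\eta^{-2}}A_\eta=k(1+\eta^2)$ and $A_\eta^2=k^2\eta^2(1+\eta^2)$,
\[
h(k)=k(1+\eta^2)\,q(A_\eta^2),\qquad q(\beta):=\E\big[g_\eta^2\,\mathrm{sech}^2(\sqrt{\beta}\,g_\eta g')\big].
\]
Then: (i) $\partial_A[\tanh(Aw)w]=w^2\,\mathrm{sech}^2(Aw)\ge 0$, strictly positive on a positive‑measure event, and $A_\eta$ is increasing in $k$, so $h$ is strictly increasing; (ii) $\mathrm{sech}^2$ is strictly decreasing in $|\cdot|$, so $q$ is strictly decreasing with $q(0)=1$, whence $h(k)/k=(1+\eta^2)q(A_\eta^2)$ is strictly decreasing in $k$; (iii) dominated convergence gives $h(k)/k\to 1+\eta^2>1$ as $k\to0^+$ and $h(k)/k\to 0$ as $k\to\infty$. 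By (ii)--(iii) there is a unique $k^\ast=k^\ast(\eta)>0$ with $h(k^\ast)=k^\ast$, i.e.\ $(k^\ast\|\theta^\ast\|\vec{e}^\perp,0)$ is a fixed point, and $h(k)/k>1\Leftrightarrow k<k^\ast$. For $k<k^\ast$ we get $h(k)>k$ from (ii)--(iii) and $h(k)<h(k^\ast)=k^\ast$ from (i), hence $0<h(k)-k<k^\ast-k$; the cases $k=k^\ast$ and $k>k^\ast$ are symmetric — this is the three‑case inequality, which says $k\mapsto h(k)$ moves monotonically toward $k^\ast$ without overshoot.

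It remains to bound $k^\ast$. The upper bound $k^\ast<\tfrac2\pi\sqrt{1+\eta^{-2}}$ is immediate: $|\tanh|<1$ on a positive‑measure event forces $h(k)<\sqrt{1+\eta^{-2}}\,\E|g_\eta g'|=\tfrac2\pi\sqrt{1+\eta^{-2}}$ for all $k$, so $k^\ast=h(k^\ast)<\tfrac2\pi\sqrt{1+\eta^{-2}}$. Because $h(k)/k$ is strictly decreasing and equals $1$ at $k^\ast$, the bounds $\tfrac1{\sqrt3}<k^\ast<1$ are equivalent to the scalar inequalities $h(1)<1$ and $h(1/\sqrt3)>1/\sqrt3$; writing $\mu:=1+\eta^2>1$ these read $\mu\,q(\mu(\mu-1))<1$ and $\mu\,q(\tfrac13\mu(\mu-1))>1$ for all $\mu>1$. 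Both hold with equality at $\mu=1$, and differentiating at $\mu=1$ (using $q(\beta)=1-3\beta+30\beta^2+O(\beta^3)$, which follows from $\mathrm{sech}^2u=1-u^2+\tfrac23u^4+O(u^6)$) shows the first map has slope $-2<0$ there while the second has slope $0$ but positive curvature — this is precisely why the constants $1$ and $\tfrac1{\sqrt3}$ appear. To promote the local statements to all $\mu>1$ I would split on the SNR: for large $\eta$ the bound $\tfrac2\pi\sqrt{1+\eta^{-2}}<1$ already gives $k^\ast<1$, while for small‑to‑moderate $\eta$ one needs sharp two‑sided control of $q$ — the upper bound $q(\beta)\le\P\big(|g'|<\sqrt{2/(\pi\beta)}\big)$ (from conditioning on $g_\eta$ and bounding the Gaussian density by $1/\sqrt{2\pi}$), together with a lower bound from the truncated series of $\mathrm{sech}^2$ and the $K_0$‑density of $g_\eta g'$ — so as to show $\mu\mapsto\mu\,q(c^2\mu(\mu-1))$ stays on the correct side of $1$ for $c=1$ and $c=\tfrac1{\sqrt3}$. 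Finally, the limits follow by sandwiching: as $\eta\to\infty$, $A_\eta\to\infty$, so $h(k)\to\tfrac2\pi$ pointwise, which together with $k^\ast<\tfrac2\pi\sqrt{1+\eta^{-2}}$ forces $k^\ast\to\tfrac2\pi$; and as $\eta\to0$, the defining relation $q(A_\eta^2)=1/(1+\eta^2)$ forces $A_\eta\to0$ by continuity of $q$ at $0$, whence $1-3A_\eta^2+O(A_\eta^4)=1-\eta^2+O(\eta^4)$ with $A_\eta^2=k^{\ast2}\eta^2(1+\eta^2)$ yields $k^{\ast2}\to\tfrac13$, i.e.\ $k^\ast\to\tfrac1{\sqrt3}$.

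The main obstacle is this last uniform‑in‑$\eta$ step: verifying $h(1)<1$ and $h(1/\sqrt3)>1/\sqrt3$ for \emph{every} SNR. Since both inequalities are tight at the low‑SNR endpoint ($h(1)\to1$ and $h(1/\sqrt3)\to1/\sqrt3$ as $\eta\to0$) and at the high‑SNR endpoint approach $\tfrac2\pi<1$ and $\tfrac{2\sqrt3}{\pi}>1$ respectively, crude estimates ($\tanh<1$, $\mathrm{sech}^2u\le 1/(1+u^2)$, $\mathrm{sech}^2u\ge1-u^2$) are demonstrably too weak in the moderate‑$\eta$ range; one must instead track the monotonicity of $\mu\mapsto\mu\,q(c^2\mu(\mu-1))$ precisely, e.g.\ by establishing that this one‑variable map is monotone on $(1,\infty)$ via sharp bounds on $-q'/q$, or by pinning $q$ tightly on the relevant $\beta$‑range using the explicit density of $g_\eta g'$. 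Everything else — the reductions via Theorem~\ref{theorem:explicit_em_update}, Stein's identity, monotonicity of $\mathrm{sech}^2$, and dominated convergence — is routine.
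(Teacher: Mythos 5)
Your scaffolding is correct and in places cleaner than the paper's: the reduction to the scalar map $k\mapsto h(k)=\sqrt{1+\eta^{-2}}\,\E[\tanh(A_\eta X)X]$ (with $X$ distributed as the product $g_\eta g'$, vanishing $N$-update and vanishing $\hat e_2$-component by oddness) is exactly the paper's first step; your existence/uniqueness/no-overshoot argument via ``$h$ strictly increasing and $h(k)/k$ strictly decreasing'' is a valid substitute for the paper's route via strict concavity of $G_\eta(k)=h(k)-k$ together with its boundary values; the Stein-identity rewriting $h(k)=k(1+\eta^2)q(A_\eta^2)$ is a nice device the paper does not use, and it gives you a genuinely shorter proof of $\lim_{\eta\to0_+}k^\ast(\eta)=1/\sqrt3$ than the paper's uniform-convergence machinery (Lemma~\ref{suplem:limit_delta_epsilon} applied to $F_1$). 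The upper bound $k^\ast<\frac{2}{\pi}\sqrt{1+\eta^{-2}}$ is obtained exactly as in the paper.

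However, there is a genuine gap, and you have located it yourself: the claims $h(1)<1$ and $h(1/\sqrt3)>1/\sqrt3$ for \emph{every} $\eta>0$ are asserted but not proved. Reparametrizing by $A=k\eta^2\sqrt{1+\eta^{-2}}$, these two inequalities are precisely the two-sided bound $\frac{\sqrt{12A^2+1}-1}{6A}<\E[\tanh(AX)X]<\frac{\sqrt{4A^2+1}-1}{2A}$ for all $A>0$, which is the paper's Lemma~\ref{suplem:bounds_expectation_A} and is where essentially all the technical work of the proposition lives: the paper proves it by Taylor/series bounds (using $\E[X^{2n}]=[(2n-1)!!]^2$) for $A\le 1/4$, then by concavity of $A\mapsto\E[\tanh(AX)X]$ and of the two comparison functions combined with evaluations at finitely many points $A\in\{1/4,1/3,1/2,3/4,5/4\}$ to separate the curves by piecewise-linear functions on the intermediate range, and finally by monotonicity and the limiting value $\E|X|=2/\pi$ for large $A$. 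Your local analysis at $\mu=1+\eta^2\to1$ (slope $-2$, respectively slope $0$ with positive curvature) only recovers the statement near $\eta=0$, where the inequalities degenerate to equalities, and your suggested strategies (monotonicity of $\mu\mapsto\mu\,q(c^2\mu(\mu-1))$, sharp control of $-q'/q$, or tight bounds on $q$ via the $K_0$ density) are plausible but not executed; without some such global argument the bounds $\frac{1}{\sqrt3}<k^\ast(\eta)<1$ are unestablished. A secondary, minor point: your $\eta\to\infty$ limit as written only controls $\limsup_\eta k^\ast(\eta)\le 2/\pi$; you also need $\liminf_\eta k^\ast(\eta)\ge 2/\pi$, which follows either from the (unproven) lower bound $k^\ast>1/\sqrt3$ together with $h_\eta(1/\sqrt3)\to2/\pi$, or, independently of it, from the fact that for any fixed $k_0<2/\pi$ one eventually has $h_\eta(k_0)>k_0$, hence $k^\ast(\eta)>k_0$ by the monotonicity of $h_\eta(k)/k$; state one of these to close the sandwich.
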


\begin{remark}
    This indicates that there is a unique fixed point in the direction of \(\vec{e}^\perp\) (distinct from the fixed point in the opposite direction), which is orthogonal to the ground truth \(\theta^\ast\) of regression parameters.
    Furthermore, if the previous iteration lies in the direction of \(\vec{e}^\perp\), the population EM update rule will also remain in this direction of \(\vec{e}^\perp\), and the EM iterations get progressively closer to the fixed point,
    showing the contraction property of the EM update rule along \(\vec{e}^\perp\).
    Moreover, we demonstrate that the bounds for the normalized length of the EM update rule \(k^\ast(\eta)\) are tight across all SNR regimes.
    In particular, we have shown that the fixed point \(k^\ast(\eta)\to \frac{2}{\pi}\) as the SNR \(\eta \to \infty\), which aligns with the result 
    that the diameter of the rolling circle of the cycloid trajectory for EM iterations is \(\frac{2}{\pi}\|\theta^\ast\|\) in the noiseless setting (Proposition~\ref{prop:parametric_cycloid} in Section~\ref{sec:population}).
\end{remark}

\begin{figure}[!htbp]
    \centering
    \subfloat[]{\includegraphics[width=0.45\textwidth]{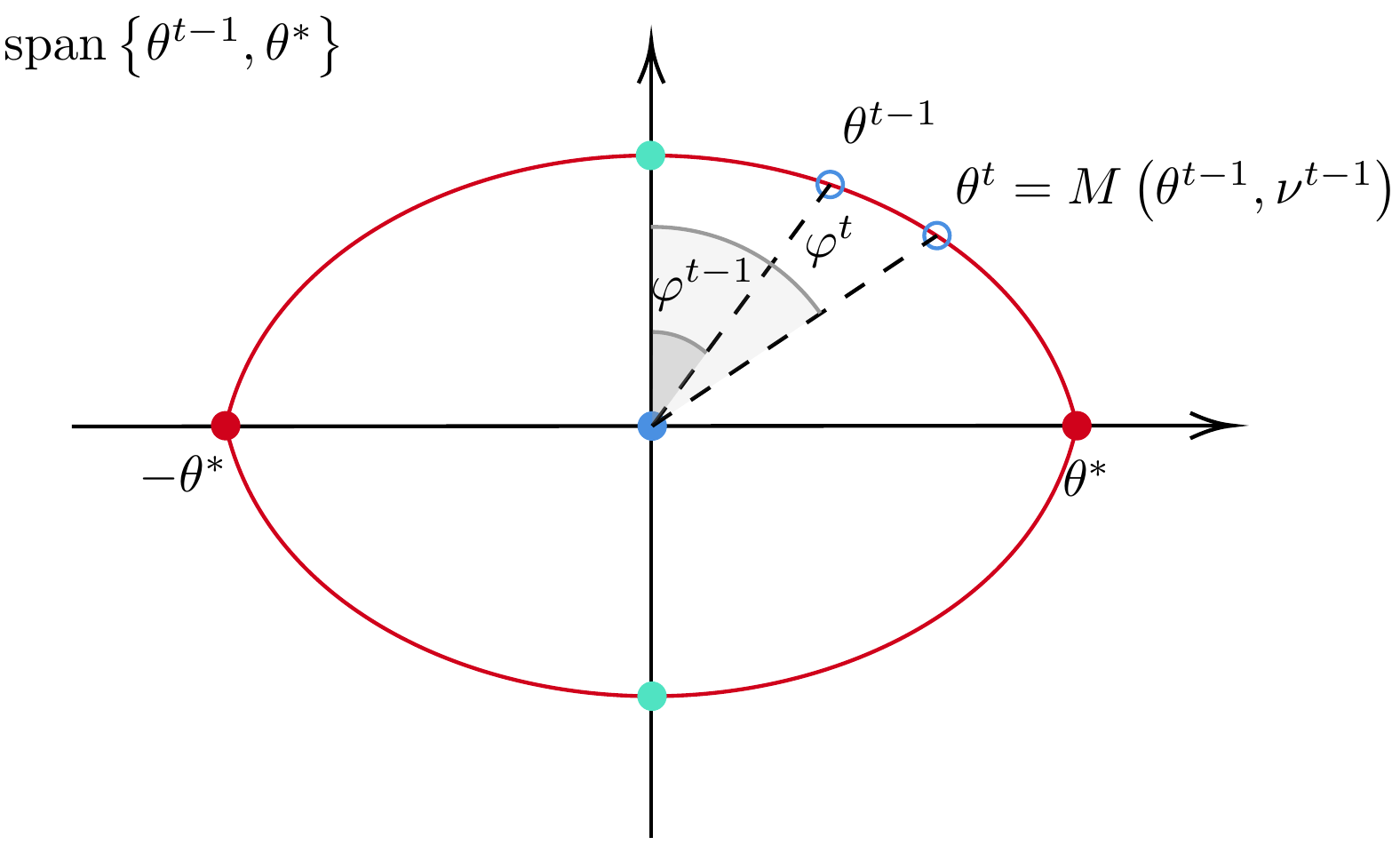}\label{fig:cycloid_trajectory_1}}
    \hfill
    \subfloat[]{\includegraphics[width=0.45\textwidth]{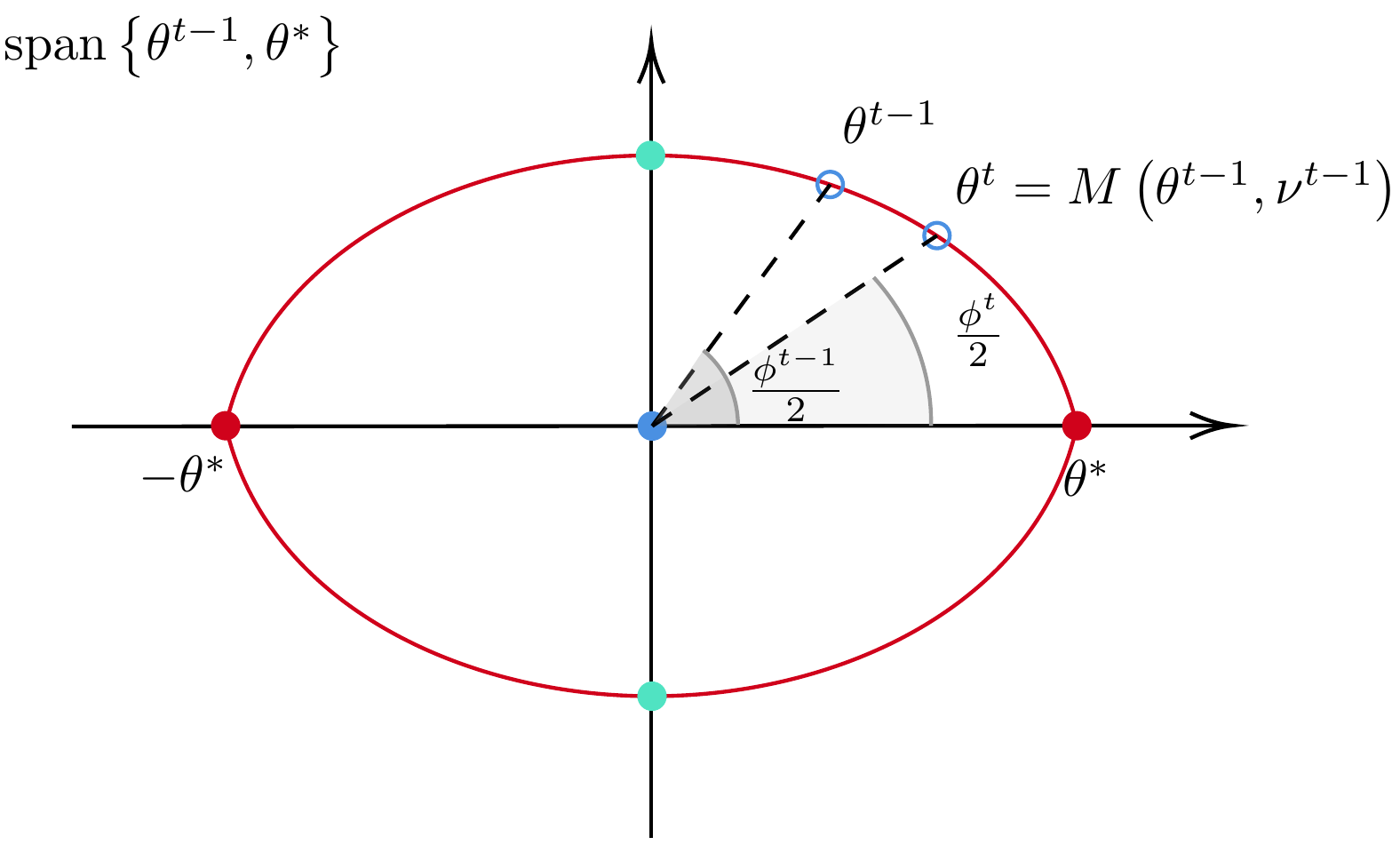}\label{fig:cycloid_trajectory_2}}
    \caption{The cycloid trajectory for the EM update rule \(M(\theta, \nu)\) of regression parameters in the noiseless setting (SNR \(\eta \to \infty\)), 
    and the fixed points of the population EM update rules are shown in the figure:
    these two red points stand for the ground truth parameters \(\theta^\ast\) and \(-\theta^\ast\),
    the blue point stands for the unstable fixed point \(\vec{0}\) as distinct fixed points of the EM update rules in Proposition~\ref{prop:distinct_fixed_points},
    and the green points stand for the two saddle points \(\pm \lim_{\eta\to\infty}k^\ast(\eta) \| \theta^\ast\| \hat{e}_2 = \pm \frac{2}{\pi} \| \theta^\ast\| \hat{e}_2\) as fixed points of the EM update rules on the plane \(\text{span}\{\theta, \theta^\ast\}\) in Proposition~\ref{prop:contraction_property} with contraction property 
    along the direction of \(\pm\hat{e}_2\) orthogonal to the ground truth \(\theta^\ast\).\\
    (a) Sub-optimality angle \(\varphi\): the angle between the unit direction vector \(\hat{e}_2\) and the regression parameters \(\theta\); 
    \(\varphi^{t}, \varphi^{t-1}\) correspond to the sub-optimality angles at the \(t\)-th and \((t-1)\)-th EM iterations, 
    where the regression parameters take the values \(\theta^t\) and \(\theta^{t-1}\), respectively.\\
    (b) Sub-optimality angle \(\phi\): twice the minimum angle between \(\theta\) and \(\pm\theta^\ast\), i.e., \(\phi = 2\arccos |\langle \theta, \theta^\ast \rangle|/(\|\theta\|\|\theta^\ast\|)\); 
    in the noiseless setting, \(\theta^{t}\) follows a cycloid trajectory with rolling radius \(\|\theta^\ast\|/\pi\), where the rolling angle is determined by the previous sub-optimality angle \(\phi^{t-1}\) (Proposition~\ref{prop:parametric_cycloid}).}
    \label{fig:cycloid_trajectory}
\end{figure}

\begin{corollarytxt}[EM Updates in Noiseless Setting, Corollary 3.3 in~\cite{luo24cycloid}]\label{cor:em_updates_noiseless}
    In the noiseless setting, namely SNR \(\eta := \| \theta^{\ast} \|/\sigma \to \infty\), the population EM update rules \(M(\theta, \nu), N(\theta, \nu)\) for regression parameters \(\theta\) and imbalance of mixing weights \(\tanh \nu\) are:
    \begin{equation}
        \begin{aligned}
            \frac{M(\theta, \nu)}{\| \theta^{\ast} \|} &= \frac{2}{\pi} \left[ \sgn(\rho)\varphi \frac{\theta^{\ast}}{\| \theta^{\ast} \|}  + \cos \varphi \frac{\theta}{\| \theta \|}  \right]\\
            N(\theta, \nu) &= \sgn(\rho) \frac{2}{\pi} \varphi \cdot \tanh \nu^{\ast}
        \end{aligned}
    \end{equation}
    where \(\varphi := \frac{\pi}{2}-\arccos |\rho|\), \(\rho := \frac{\langle \theta, \theta^\ast \rangle}{\|\theta\|\|\theta^\ast\|}\).
\end{corollarytxt}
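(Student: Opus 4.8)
The plan is to obtain the noiseless update rules by passing to the limit $\eta\to\infty$ in the explicit formulas of Theorem~\ref{theorem:explicit_em_update}. As $\eta\to\infty$ the auxiliary quantities degenerate in a controlled way: $A_\eta=k\eta^2\sqrt{1+\eta^{-2}}\to\infty$, $\varphi_\eta=\arcsin(\sin\varphi/\sqrt{1+\eta^{-2}})\to\varphi$, the prefactors satisfy $\tfrac{\sin\varphi}{\sin\varphi_\eta}\to 1$ and $\tfrac{\cos\varphi}{\cos^2\varphi_\eta}\to\tfrac{1}{\cos\varphi}$, the correlation $\E[g_\eta g']=\sin\varphi_\eta\to\sin\varphi$, and $\E[gg_\eta]=(1+\eta^{-2})^{-1/2}\to 1$, so $g_\eta\to g$ in $L^2$, where $g,g'$ are the standard normals with $\E[gg']=\sin\varphi$ fixed in the Notations. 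I would realize all $g_\eta$ on one probability space with $(g,g')$ via $g_\eta=\tfrac{\sin\varphi_\eta}{\sin\varphi}\,g+\sqrt{1-\tfrac{\sin^2\varphi_\eta}{\sin^2\varphi}}\,w$ with $w\sim\mathcal N(0,1)$ independent of $(g,g')$, so that $g_\eta g'\to gg'$ almost surely and $|g_\eta|\le |g|+|w|$ for every $\eta$ (the case $\varphi=0$, i.e. $\rho=0$, is degenerate, $g_\eta=g\perp g'$, and is absorbed by the stated conventions).

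Since $gg'\neq 0$ a.s., $\tanh(A_\eta g_\eta g'+c)\to\sgn(gg')$ almost surely for any bounded sequence $c$, in particular for $c=\sgn(\rho)(-1)^{z+1}\nu$ and $c=\sgn(\rho)\nu$. As $|\tanh|\le 1$ and $|g_\eta|\le|g|+|w|$, the three integrands appearing in Theorem~\ref{theorem:explicit_em_update} are dominated by fixed integrable functions, so dominated convergence lets me move the limit inside each expectation (the sum over $z\in\{1,2\}$ is finite and harmless). In the $\theta$-update the limiting integrands no longer depend on $z$; in the $\nu$-update $\tanh((-1)^{z+1}A_\eta g_\eta g'+\sgn(\rho)\nu)\to(-1)^{z+1}\sgn(gg')$, and since in the representation of Theorem~\ref{theorem:explicit_em_update} the label $z\sim\mathcal{CAT}(\pi^\ast)$ is independent of $(g_\eta,g')$ (the $(-1)^{z+1}$ factor having been separated out), this limit factorizes as $\sgn(\rho)\,\E[(-1)^{z+1}]\,\E[\sgn(gg')]=\sgn(\rho)\tanh\nu^\ast\cdot\E[\sgn(gg')]$, using $\E[(-1)^{z+1}]=\pi^\ast(1)-\pi^\ast(2)=\tanh\nu^\ast$. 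Thus everything reduces to three Gaussian expectations for standard normals of correlation $r=\sin\varphi\in[0,1]$.

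These are standard Gaussian integrals. Writing $g=\sin\varphi\,g'+\cos\varphi\,h$ with $h\perp g'$ and conditioning on $g$ gives $\E[\sgn(g')\mid g]=2\Phi(g\tan\varphi)-1$, whence: (i) $\E[\sgn(gg')]=\tfrac{2}{\pi}\arcsin(\sin\varphi)=\tfrac{2}{\pi}\varphi$ (equivalently the orthant identity $\P[gg'>0]=\tfrac12+\tfrac{\arcsin r}{\pi}$); (ii) $\E[|gg'|]=\tfrac{2}{\pi}(\sqrt{1-r^2}+r\arcsin r)=\tfrac{2}{\pi}(\cos\varphi+\varphi\sin\varphi)$; and (iii), differentiating $J(\beta):=\int_0^\infty g^2\Phi(\beta g)\phi(g)\,\mathd g$ in $\beta$ (which yields $J'(\beta)=\tfrac{1}{\pi(1+\beta^2)^2}$, $J(0)=\tfrac14$) gives $\E[g^2\sgn(gg')]=4J(\tan\varphi)-1=\tfrac{2}{\pi}\big(\tfrac{\tan\varphi}{1+\tan^2\varphi}+\arctan(\tan\varphi)\big)=\tfrac{2}{\pi}(\sin\varphi\cos\varphi+\varphi)$. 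Combining (ii) and (iii) with $\E[\sgn(gg')\,gg']=\E[|gg'|]$ yields $\E[\sgn(gg')\,g(g-\sin\varphi\,g')]=\E[g^2\sgn(gg')]-\sin\varphi\,\E[|gg'|]=\tfrac{2}{\pi}\varphi\cos^2\varphi$.

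Finally I assemble. Substituting the limits into Theorem~\ref{theorem:explicit_em_update} gives, in the $\{\vec e_1,\vec e_2\}$ frame,
\[
\frac{M(\theta,\nu)}{\|\theta^\ast\|}=\frac{2}{\pi}\Big[(\cos\varphi+\varphi\sin\varphi)\,\vec e_1+\sgn(\rho)\,\varphi\cos\varphi\,\vec e_2\Big],\qquad N(\theta,\nu)=\sgn(\rho)\,\frac{2}{\pi}\,\varphi\tanh\nu^\ast .
\]
Using the orthogonal decomposition of $\theta^\ast$ against $\theta$, namely $\hat e_1=\theta^\ast/\|\theta^\ast\|=\sgn(\rho)\sin\varphi\,\vec e_1+\cos\varphi\,\vec e_2$ (since $|\rho|=\sin\varphi$, $\sqrt{1-\rho^2}=\cos\varphi$), one checks that $\sgn(\rho)\varphi\,\hat e_1+\cos\varphi\,\vec e_1$ equals the bracket above, so $M(\theta,\nu)/\|\theta^\ast\|=\tfrac{2}{\pi}\big[\sgn(\rho)\varphi\,\theta^\ast/\|\theta^\ast\|+\cos\varphi\,\theta/\|\theta\|\big]$, which is the claim. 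The step that needs genuine care is the interchange of limit and expectation: $\tanh(A_\eta\,\cdot\,+c)$ converges to the \emph{discontinuous} $\sgn(\cdot)$ while the argument $g_\eta$ itself moves with $\eta$, so one must fix a coupling on which $g_\eta\to g$ almost surely and exhibit a single $\eta$-free integrable dominating function before invoking dominated convergence; the rest is the bookkeeping of the Gaussian moments above.
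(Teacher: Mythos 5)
Your proposal is correct and takes essentially the same route as the paper: pass to the limit $\eta\to\infty$ in the explicit expressions of Theorem~\ref{theorem:explicit_em_update}, reduce everything to the three Gaussian moments $\E[\sgn(gg')]$, $\E[|gg'|]$, $\E[g^2\sgn(gg')]$, and reassemble via $\hat e_1=\sgn(\rho)\sin\varphi\,\vec e_1+\cos\varphi\,\vec e_2$. The only (harmless) deviations are that you compute those moments by elementary conditioning rather than the Grothendieck/Price/Stein identities of Lemma~\ref{suplem:expectation}, and that your coupling-plus-dominated-convergence justification of the limit interchange is spelled out more explicitly than in the paper.
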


\begin{remark}
In the noiseless setting, the first equation shows that the EM update rule is clearly a linear combination of the ground truth \(\theta^\ast\) and the current estimate \(\theta\).
It also indicates that the length of the normalized EM update rule reaches its minimum value \(\frac{2}{\pi}\) when \(\varphi = 0\), namely \(\langle \theta, \theta^\ast \rangle =0\);
and the length of the normalized EM update rule reaches its maximum value \(1\) when \(\varphi = \frac{\pi}{2}\), namely \(\theta^\ast, \theta\) are in the same/opposite direction.
Using the first equation, we show that the EM iterations of regression parameters \(\theta\) follow a cycloid trajectory,
whose rolling diameter is \(\frac{2}{\pi}\|\theta^\ast\|\) (Proposition~\ref{prop:parametric_cycloid} in Section~\ref{sec:population}).
The second equation shows that the EM update rule for the imbalance of mixing weights \(\tanh \nu\) is independent of the previous iteration \(\tanh \nu\),
and depends only on the ground truth imbalance of mixing weights \(\tanh \nu^\ast\) and the sub-optimality angle \(\varphi\), which is determined by the ground truth \(\theta^\ast\) and the current estimate \(\theta\).
This suggests that once the evolution of the sub-optimality angle \(\varphi\) is characterized in the noiseless setting, then the evolution of the imbalance of mixing weights \(\tanh \nu\) can also be determined.
\end{remark}

\begin{propositiontxt}[Deviation from Cycloid Limit of EM Updates in High SNR Regime]\label{prop:deviation_cycloid_limit}
    In the finite high SNR regime given by \(\eta \gtrsim \frac{1}{\min(1, \sqrt{k})\cos \varphi} \vee 1\), if the mixing weights are known balanced \(\pi = \pi^\ast = (\frac{1}{2}, \frac{1}{2})\), 
    then the difference between the EM update rule and its limit is bounded by:
    \begin{equation}
        \left\|\frac{M(\theta)}{\| \theta^{\ast} \|} - \lim_{\eta \rightarrow \infty} \frac{M(\theta)}{\| \theta^{\ast} \|}\right\|
        = \mathcal{O}\left(\eta^{-2} \vee \cos^2 \varphi \frac{\log \Lambda}{\Lambda^4}\right)
    \end{equation}
    where \(\Lambda \assign \eta \sqrt{k} \cos \varphi\), \(k:= \frac{\|\theta\|}{\|\theta^\ast\|}\) and \(\varphi := \frac{\pi}{2}-\arccos |\rho|\), \(\rho := \frac{\langle \theta, \theta^\ast \rangle}{\|\theta\|\|\theta^\ast\|}\).
\end{propositiontxt}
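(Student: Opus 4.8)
The plan is to start from the closed-form expression in Theorem~\ref{theorem:explicit_em_update}, specialized to the balanced case $\nu=\nu^\ast=0$, in which the $z$-dependence disappears and $M(\theta)/\|\theta^\ast\| = c_1\vec{e}_1 + c_2\vec{e}_2$ with $c_1 = \tfrac{\sin\varphi}{\sin\varphi_\eta}\E[\tanh(A_\eta g_\eta g')g_\eta g']$ and $c_2 = \sgn(\rho)\tfrac{\cos\varphi}{\cos^2\varphi_\eta}\E[\tanh(A_\eta g_\eta g')g_\eta(g_\eta-\sin\varphi_\eta g')]$. Rewriting the limit of Corollary~\ref{cor:em_updates_noiseless} in the same orthonormal frame $\{\vec{e}_1,\vec{e}_2\}$ via $\theta/\|\theta\| = \vec{e}_1$ and $\theta^\ast/\|\theta^\ast\| = \sgn(\rho)\sin\varphi\,\vec{e}_1 + \cos\varphi\,\vec{e}_2$, one obtains $\lim_{\eta\to\infty}M(\theta)/\|\theta^\ast\| = c_1^\infty\vec{e}_1 + c_2^\infty\vec{e}_2$ with $c_1^\infty = \tfrac{2}{\pi}(\varphi\sin\varphi+\cos\varphi)$ and $c_2^\infty = \sgn(\rho)\tfrac{2}{\pi}\varphi\cos\varphi$; since $\vec{e}_1,\vec{e}_2$ are orthonormal it then suffices to bound $|c_1-c_1^\infty| + |c_2-c_2^\infty|$.

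For each coordinate I would decompose the error into a \emph{prefactor part} and a \emph{saturation part}. The prefactor part collects the gaps $\tfrac{\sin\varphi}{\sin\varphi_\eta}-1$, $\tfrac{\cos\varphi}{\cos^2\varphi_\eta}-\tfrac{1}{\cos\varphi}$, $\varphi_\eta-\varphi$ and $\sin\varphi_\eta-\sin\varphi$: since $\sqrt{1+\eta^{-2}}=1+\mathcal{O}(\eta^{-2})$ and, by a Taylor expansion of $\arcsin$, $\varphi_\eta=\varphi+\mathcal{O}(\eta^{-2}/\cos\varphi)$, while the expectations appearing are bounded, this part contributes $\mathcal{O}(\eta^{-2})$ in the stated regime. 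The saturation part replaces $\tanh(A_\eta g_\eta g')$ by $\sgn(g_\eta g')$; using the pointwise bound $|\tanh t-\sgn t|\le 2e^{-2|t|}$, it is dominated by $\E[e^{-2A_\eta|g_\eta g'|}|g_\eta g'|]$ and $\tfrac{1}{\cos\varphi}\E[e^{-2A_\eta|g_\eta g'|}|g_\eta(g_\eta-\sin\varphi_\eta g')|]$.

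The heart of the proof is the estimation of these weighted Gaussian integrals. A convenient device is to condition on $g'$ and apply Gaussian integration by parts in $g_\eta\mid g'\sim\mathcal{N}(\sin\varphi_\eta g',\cos^2\varphi_\eta)$, which turns the polynomial-times-$\tanh$ integrands into terms containing $\mathrm{sech}^2(A_\eta g_\eta g')=1-\tanh^2(A_\eta g_\eta g')$; since $\int \mathrm{sech}^2(A_\eta w g')\,\mathd w$ has total mass $2/(A_\eta|g'|)$ and concentrates on $\{|g_\eta|\lesssim 1/(A_\eta|g'|)\}$ near the coordinate axis, each such term can be expanded against the smooth conditional density of $g_\eta$. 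The leading term reproduces the $\eta\to\infty$ limit, and the remainder, once integrated over $g'$ (splitting the $g'$-density near and away from $g'=0$ to absorb the $1/|g'|$ factors), decays like $\log A_\eta/A_\eta^2$ with the appropriate power of $\cos\varphi_\eta$. It then remains to convert scales via $A_\eta=k\eta^2\sqrt{1+\eta^{-2}}\asymp\Lambda^2/\cos^2\varphi$ and $\log A_\eta\asymp\log\Lambda$ — legitimate because the hypothesis $\eta\gtrsim\tfrac{1}{\min(1,\sqrt{k})\cos\varphi}\vee 1$ forces $\Lambda\gtrsim 1$ and hence $A_\eta\gtrsim 1$ — which turns the saturation bound into $\mathcal{O}(\cos^2\varphi\,\log\Lambda/\Lambda^4)$; taking the maximum with the $\mathcal{O}(\eta^{-2})$ prefactor part yields the claim.

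\textbf{Main obstacle.} The genuinely delicate step is the sharp two-dimensional integral estimate. The naive bound $|\tanh t-\sgn t|\le 2e^{-2|t|}$ against an $\mathcal{O}(1)$ polynomial weight only yields an error of order $\cos\varphi\,\log\Lambda/\Lambda^2$, so one must really exploit either the smallness of the weight $g_\eta g'$ on the saturated region or the cancellation exposed by integration by parts, and keep careful track of how the correlation $\sin\varphi_\eta$ and each polynomial weight interact with the exponential near the two coordinate axes. One also has to verify that the regime hypothesis is precisely what makes every sub-estimate — the prefactor Taylor expansions, $A_\eta\gtrsim 1$, and $\log A_\eta\asymp\log\Lambda$ — simultaneously valid.
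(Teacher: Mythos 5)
Your overall route is the same as the paper's: start from Theorem~\ref{theorem:explicit_em_update} with $\nu=\nu^\ast=0$, write both the finite-$\eta$ update and the noiseless limit in the frame $\{\vec{e}_1,\vec{e}_2\}$, and split each coordinate error into a prefactor/correlation-mismatch part of size $\mathcal{O}(\eta^{-2})$ and a saturation part where $\tanh(A_\eta g_\eta g')$ is compared with $\sgn(g_\eta g')$; this is exactly the paper's $T_{1,2},T_{2,2}$ versus $T_{1,1},T_{2,1}$ decomposition, and the regime conversions $A_\eta\asymp\Lambda^2/\cos^2\varphi$, $\Lambda\gtrsim1$ are used identically. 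The only genuine difference is the device for the bivariate saturation integrals: you condition on $g'$, exploit the concentration of $e^{-2A_\eta|g'||w|}$ (or $\mathrm{sech}^2$) on $|w|\lesssim1/(A_\eta|g'|)$ against the conditional density $\mathcal{N}(\sin\varphi_\eta g',\cos^2\varphi_\eta)$, and split the $g'$ integral near and away from $0$; the paper instead passes to the Rayleigh--uniform representation $g_\eta g'=\varrho(\sin U+\sin\varphi_\eta)$, integrates the exponential radial variable exactly, and invokes the angular estimate of Lemma~\ref{suplem:integral_bigO} with a near/far-from-roots split. Both devices deliver the needed $\mathcal{O}\bigl(\log(A_\eta\cos\varphi_\eta)/(A_\eta^2\cos\varphi_\eta)\bigr)$-type rates (your log argument differs from the paper's $\log(A_\eta\cos^2\varphi_\eta)$ by an additive $\log(1/\cos\varphi)$, which is harmless only because it is paired with a spare factor of $\cos\varphi$; your blanket claim $\log A_\eta\asymp\log\Lambda$ is false when $\cos\varphi$ is small and should be replaced by this absorption argument).

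Two steps need repair as written. First, the prefactor part: ``gap times bounded expectation'' is too crude, because $\varphi-\varphi_\eta=\mathcal{O}(\eta^{-2}\tan\varphi)$, so that bound only gives $\mathcal{O}(\eta^{-2}/\cos\varphi)$, which does \emph{not} sit inside the claimed $\mathcal{O}(\eta^{-2}\vee\cos^2\varphi\log\Lambda/\Lambda^4)$. You must use the closed-form sgn-expectations (Lemma~\ref{suplem:expectation}): the limit coefficients are $\frac2\pi(\varphi\sin\varphi+\cos\varphi)$ and $\frac2\pi\varphi\cos\varphi$, whose derivatives in the angle carry a factor $\cos$, so a mean-value argument (together with $\cos\varphi_\eta\asymp\cos\varphi$ in the stated regime) turns the $\eta^{-2}\tan\varphi$ gap into $\mathcal{O}(\eta^{-2})$ --- this is precisely the cancellation the paper exploits in $T_{1,2},T_{2,2}$. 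Second, your integration-by-parts suggestion is dangerous if applied term by term: after Stein's identity each resulting piece (the $\tanh$ piece and the $\mathrm{sech}^2$ piece) differs from its limiting counterpart by $\Theta(1/A_\eta)$, and only their mutual cancellation restores the $1/A_\eta^2$ rate; the safe execution is the one your domination step already sets up, namely keep the product and use $|(\tanh-\sgn)(A_\eta x)|\,|x|\le A_\eta^{-1}e^{-A_\eta|x|}$ (and the analogous bound with the weight $|g_\eta(g_\eta-\sin\varphi_\eta g')|$, which is $\lesssim\tan\varphi_\eta|g'|\,|g_\eta|$ on the saturated region), then run your conditional near/far estimate. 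With those two repairs your plan closes, and it is essentially the paper's proof with a conditional-Gaussian estimate replacing the paper's polar change of variables and Lemma~\ref{suplem:integral_bigO}.
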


\begin{remark}
This result provides an explicit bound on the difference between the EM update rule and its limit (cycloid trajectory) in the finite high-SNR regime,
which generalizes the result obtained in the noiseless setting.
When the ratio \(k = \frac{\|\theta\|}{\|\theta^\ast\|}\) is large enough and the angle \(\frac{\pi}{2} - \varphi\) is large enough, meaning that the direction of the current iteration \(\theta\) is away from that of the ground truth \(\theta^\ast\),
the bound is dominated by \(\eta^{-2}\).
It also implies that the difference becomes smaller as the SNR \(\eta\) increases.
\end{remark}

\section{Population Level Analysis in Noiseless Setting}\label{sec:population}
In this section, we provide a comprehensive analysis of population EM updates in the noiseless setting, 
which consists of three parts:
In the first part, we derive a recurrence relation of the sub-optimality angle \(\varphi^t\) for population EM updates (Proposition~\ref{prop:recurrence_angle}),
based on the EM update rule with sub-optimality angle \(\varphi^t\) in the noiseless setting (Corollary~\ref{cor:em_updates_noiseless}) from the previous section (Section~\ref{sec:updates}).
We then demonstrate that the EM update iterations follow a cycloid trajectory by formulating the parametric equation of the sub-optimality angle \(\varphi^t\) (Proposition~\ref{prop:parametric_cycloid})
based on the recurrence relation.

In the second part, we show the linear growth of the sub-optimality angle \(\varphi^t\) (Proposition~\ref{prop:linear_growth_angle}) when the angle between the EM iteration \(\theta^t\) and the ground truth parameters \(\theta^\ast\) is large, 
i.e., when the sub-optimality angle \(\varphi^t = \frac{\pi}{2} - \arccos \left|\langle \theta^t, \theta^\ast \rangle/(\|\theta^t\|\|\theta^\ast\|)\right|\) is small.
We then establish the quadratic convergence of the sub-optimality angle \(\phi^t = 2\arccos \left|\langle \theta^t, \theta^\ast \rangle/(\|\theta^t\|\|\theta^\ast\|)\right|\) (Proposition~\ref{prop:quadratic_convergence_angle}) to zero when \(\varphi^t \leq 1.4\), that is, when the angle \(\arccos \left|\langle \theta^t, \theta^\ast \rangle/(\|\theta^t\|\|\theta^\ast\|)\right| \leq 0.7 \approx 40^\circ\) between the EM iteration \(\theta^t\) and the ground truth parameters \(\theta^\ast\) is sufficiently small.

In the third part, we provide bounds on the accuracy of regression parameters and mixing weights in the noiseless setting, in terms of the sub-optimality angle \(\phi^t\) (Proposition~\ref{prop:errors_em_updates_angle}), based on the parametric equation of the cycloid trajectory for regression parameters (Proposition~\ref{prop:parametric_cycloid}), 
and the EM update rule for mixing weights in terms of the sub-optimality angle \(\varphi^t\) (Corollary~\ref{cor:em_updates_noiseless}) from the previous section (Section~\ref{sec:updates}), respectively.
Finally, we establish the convergence guarantees of the EM updates in the noiseless setting (Theorem~\ref{theorem:population_level_convergence}) by combining the linear growth and quadratic convergence of sub-optimality angles \(\varphi^t, \phi^t\),
and the bounds on the accuracy of regression parameters and mixing weights in terms of the sub-optimality angle \(\phi^t\). 
The proofs of the results in this section are provided in Appendix~\ref{sup:population_analysis}.

\begin{propositiontxt}[Recurrence Relation of Sub-optimality Angle, Proposition 4.3 in~\cite{luo24cycloid}]\label{prop:recurrence_angle}
    In the noiseless setting, namely SNR \(\eta := \| \theta^{\ast} \|/\sigma \to \infty\), 
    if the sub-optimality angle \(\varphi^t \neq \frac{\pi}{2}\), 
    then the recurrence relation of the sub-optimality angle \(\varphi^t\) for population EM updates is:
    \begin{equation}
        \tan \varphi^{t+1} = \tan \varphi^t + \varphi^t (\tan^2 \varphi^t + 1)
    \end{equation}
    where \(\varphi^t := \frac{\pi}{2} - \arccos |\rho^t|\), \(\rho^t := \frac{\langle \theta^t, \theta^\ast \rangle}{\|\theta^t\|\|\theta^\ast\|}\).
\end{propositiontxt}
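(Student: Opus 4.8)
\emph{Proof proposal.}
The plan is to push everything into the two‑dimensional plane $P:=\mathrm{span}\{\theta^t,\theta^\ast\}$, in which Corollary~\ref{cor:em_updates_noiseless} (equivalently Theorem~\ref{theorem:explicit_em_update}) already places the next iterate $\theta^{t+1}:=M(\theta^t,\nu^t)$, and then to convert the noiseless update formula into a relation between the angles $\varphi^t$ and $\varphi^{t+1}$. Three reductions come first. Since the recurrence only involves $\varphi^t,\varphi^{t+1}$, which are invariant under rescaling $\theta^\ast$ and $\theta^t$, I normalize $\|\theta^\ast\|=1$; and since replacing $\theta^\ast$ by $\sgn(\rho^t)\theta^\ast$ leaves $\theta^{t+1}$ (as a vector), $\varphi^t$, and $\varphi^{t+1}$ all unchanged — it only flips $\rho^t$ to $|\rho^t|$ while the $\hat e_1$-term $\sgn(\rho)\varphi\,\theta^\ast/\|\theta^\ast\|$ in Corollary~\ref{cor:em_updates_noiseless} is itself invariant — I may assume $\sgn(\rho^t)=1$. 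Note also that in the noiseless limit $M(\theta,\nu)$ carries no dependence on $\nu$, so $\nu^t$ never enters. Finally I dispose of the degenerate case $\varphi^t=0$ (i.e. $\langle\theta^t,\theta^\ast\rangle=0$): there Corollary~\ref{cor:em_updates_noiseless} gives $\theta^{t+1}=\tfrac{2}{\pi}\cos\varphi^t\,\theta^t/\|\theta^t\|$, still orthogonal to $\theta^\ast$, so $\varphi^{t+1}=0$ and the recurrence reads $0=0$; the excluded case $\varphi^t=\tfrac\pi2$ is exactly where $\theta^t$ is (anti)parallel to $\theta^\ast$, $\theta^{t+1}=\theta^\ast$ is a fixed point, and $\tan\varphi^{t+1}$ is undefined — which is why the hypothesis rules it out. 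This also uses $\theta^t\neq\vec 0$, implicit in $\varphi^t$ being defined.

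Next I would introduce an orthonormal basis $(\hat e_1,\hat e_2)$ of $P$ with $\hat e_1=\theta^\ast$ and $\langle\theta^t,\hat e_2\rangle\ge 0$. From $\varphi^t=\tfrac\pi2-\arccos|\rho^t|$ with $\rho^t=|\rho^t|\ge 0$ one gets $|\rho^t|=\cos\!\bigl(\tfrac\pi2-\varphi^t\bigr)=\sin\varphi^t$, hence, for $\varphi^t\in(0,\tfrac\pi2)$,
\[
\frac{\theta^t}{\|\theta^t\|}=\sin\varphi^t\,\hat e_1+\cos\varphi^t\,\hat e_2 .
\]
Substituting this together with $\|\theta^\ast\|=1$ and $\sgn(\rho^t)=1$ into Corollary~\ref{cor:em_updates_noiseless} yields
\[
\theta^{t+1}=\frac{2}{\pi}\Bigl[\bigl(\varphi^t+\sin\varphi^t\cos\varphi^t\bigr)\hat e_1+\cos^2\varphi^t\,\hat e_2\Bigr].
\]

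The last step is angle bookkeeping. Both coordinates of $\theta^{t+1}$ above are positive, and $\langle\theta^{t+1},\theta^\ast\rangle>0$, so $\sgn(\rho^{t+1})=1$; writing $\arccos|\rho^{t+1}|$ as the angle whose cosine is $|\rho^{t+1}|$, its sine is the $\hat e_2$-component over $\|\theta^{t+1}\|$, so
\[
\tan\!\bigl(\arccos|\rho^{t+1}|\bigr)=\frac{\cos^2\varphi^t}{\varphi^t+\sin\varphi^t\cos\varphi^t}.
\]
Since $\varphi^{t+1}=\tfrac\pi2-\arccos|\rho^{t+1}|$, taking the cotangent gives
\[
\tan\varphi^{t+1}=\frac{\varphi^t+\sin\varphi^t\cos\varphi^t}{\cos^2\varphi^t}
=\frac{\varphi^t}{\cos^2\varphi^t}+\tan\varphi^t
=\varphi^t\bigl(\tan^2\varphi^t+1\bigr)+\tan\varphi^t,
\]
using $\sec^2=1+\tan^2$, which is the claim. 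The computation is elementary; the only genuine care needed is (i) legitimizing the three WLOG reductions and the $\varphi^t\in\{0,\tfrac\pi2\}$ boundary handling, and (ii) the complementary‑angle bookkeeping — remembering that $\varphi$ is the complement $\tfrac\pi2-\arccos|\rho|$ of the geometric angle, so the Cartesian form of $\theta^{t+1}$ must be converted via $\varphi^{t+1}=\tfrac\pi2-\arccos|\rho^{t+1}|$ (hence a cotangent, not a tangent). I expect that conversion to be the only place where a sign/orientation slip could occur; there is no deeper obstacle.
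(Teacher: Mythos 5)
Your proposal is correct and follows essentially the same route as the paper's proof: substitute the noiseless update of Corollary~\ref{cor:em_updates_noiseless} into an orthonormal frame of $\mathrm{span}\{\theta^t,\theta^\ast\}$, read off the components $(2/\pi)(\varphi^t+\sin\varphi^t\cos\varphi^t)$ along $\theta^\ast$ and $(2/\pi)\cos^2\varphi^t$ orthogonal to it, and take the ratio to obtain $\tan\varphi^{t+1}=\tan\varphi^t+\varphi^t(\tan^2\varphi^t+1)$. The only cosmetic difference is that the paper carries $\sgn(\rho^0)$ through the computation instead of your WLOG sign flip, which is harmless.
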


\begin{remark}
Following Corollary~\ref{cor:em_updates_noiseless} of EM update rules in the noiseless setting shown in the previous section (Section~\ref{sec:updates}), 
we can derive a recurrence relation for the sub-optimality angle \(\varphi^t\) in the population EM updates.
In particular, when \(\varphi^t\) is small, the recurrence relation can be viewed as a discretized version of 
the differential equation \(\mathd \tan \varphi = \varphi (\tan^2 \varphi + 1) \mathd t\), 
which gives a linear growth of \(\tan\varphi(t) \approx \varphi(t) = C \exp(t)\) for small \(\varphi(t)\) with \(C\geq0\).
When \(\varphi^t\) becomes large, approaching \(\frac{\pi}{2}\), i.e., \(\varphi \to \frac{\pi}{2}\), the second term \(\varphi^t (\tan^2 \varphi^t + 1)\) dominates the first term \(\tan \varphi^t\),
causing the growth of \(\tan \varphi^t\) to exceed linear growth. In this regime, we have \(\tan \varphi^{t+1} \approx \frac{\pi}{2} \tan^2 \varphi^t\), which suggests quadratic growth of \(\tan \varphi^t\).
\end{remark}

\begin{propositiontxt}[Parametric Equation for Cycloid Trajectory of EM Updates, Proposition 4.4 in~\cite{luo24cycloid}]\label{prop:parametric_cycloid}
    In the noiseless setting, namely SNR \(\eta := \| \theta^{\ast} \|/\sigma \to \infty\),
    the coordinates \(\mathtt{x}^t, \mathtt{y}^t\) of normalized vector \(\frac{\theta^t}{\|\theta^\ast\|}=\mathtt{x}^t \hat{e}_1 + \mathtt{y}^t \hat{e}_2^t = \mathtt{x}^t \hat{e}_1 + \mathtt{y}^t \hat{e}_2^0, \forall t\in\mathbb{Z}_+\)
    for population EM updates can be parameterized by the sub-optimality angle \(\phi^{t-1}\) as follows:
    \begin{equation}
        \begin{aligned}
            1-\sgn(\rho^0)\mathtt{x}^t & = \frac{1}{\pi}[\phi^{t-1} - \sin \phi^{t-1}]\\
            \mathtt{y}^t & = \frac{1}{\pi}[1- \cos \phi^{t-1}]
        \end{aligned}
    \end{equation}
    where \(\varphi^{t-1} := \frac{\pi}{2} - \arccos |\rho^{t-1}|\), \(\rho^{t-1} := \frac{\langle \theta^{t-1}, \theta^\ast \rangle}{\|\theta^{t-1}\|\|\theta^\ast\|}\).
    Hence, the trajecotry of EM iterations \(\theta^t\) is on the cycloid with a parameter \(\frac{\|\theta^\ast\|}{\pi}\), on the plane \(\text{span}\{\theta^0, \theta^\ast\}\).
\end{propositiontxt}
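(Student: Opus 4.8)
The plan is to substitute the closed-form noiseless update of Corollary~\ref{cor:em_updates_noiseless} into an orthonormal frame for the plane $\mathcal{P}:=\text{span}\{\theta^0,\theta^\ast\}$ and recognize the resulting coordinates as a cycloid parametrization. Assume $\theta^0$ and $\theta^\ast$ are linearly independent, so $\mathcal{P}$ is a genuine plane with orthonormal basis $\hat e_1=\theta^\ast/\|\theta^\ast\|$ and $\hat e_2^0$; the degenerate cases $\theta^0=\vec 0$ (the trivial fixed point of Proposition~\ref{prop:distinct_fixed_points}) and $\theta^0\parallel\theta^\ast$ (where the noiseless update gives $\theta^1=\pm\theta^\ast$, a fixed point) are checked directly against the claimed formula with $\phi^{t-1}$ equal to $\pi$ and $0$ respectively. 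Since Theorem~\ref{theorem:explicit_em_update} gives $M(\theta,\nu)\in\text{span}\{\theta,\theta^\ast\}$, an induction shows every iterate $\theta^t$ lies in $\mathcal{P}$ and is nonzero: in the noiseless $\theta$-update the $\hat e_1$-component vanishes only when the sub-optimality angle is $0$, and there the $\hat e_2$-component equals $\tfrac2\pi\neq 0$; the same bound also keeps $\varphi^{t-1}<\pi/2$ along the orbit, so $\hat e_2^{t-1}$ stays well-defined.

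Next I would write the previous iterate in the moving frame. From the definitions of $\rho^{t-1}$ and $\varphi^{t-1}=\tfrac\pi2-\arccos|\rho^{t-1}|$ one has $|\rho^{t-1}|=\sin\varphi^{t-1}$ and $\sqrt{1-(\rho^{t-1})^2}=\cos\varphi^{t-1}$, hence $\theta^{t-1}/\|\theta^{t-1}\|=\sgn(\rho^{t-1})\sin\varphi^{t-1}\,\hat e_1+\cos\varphi^{t-1}\,\hat e_2^{t-1}$. Substituting this together with $\theta^\ast/\|\theta^\ast\|=\hat e_1$ into the noiseless update $\theta^t/\|\theta^\ast\|=\tfrac2\pi\bigl[\sgn(\rho^{t-1})\varphi^{t-1}\hat e_1+\cos\varphi^{t-1}\,\theta^{t-1}/\|\theta^{t-1}\|\bigr]$ and collecting terms gives
\[
\frac{\theta^t}{\|\theta^\ast\|}=\frac2\pi\sgn(\rho^{t-1})\bigl(\varphi^{t-1}+\sin\varphi^{t-1}\cos\varphi^{t-1}\bigr)\hat e_1+\frac2\pi\cos^2\varphi^{t-1}\,\hat e_2^{t-1}.
\]
Because $\varphi^{t-1}+\sin\varphi^{t-1}\cos\varphi^{t-1}\ge 0$ and $\cos^2\varphi^{t-1}\ge 0$, the $\hat e_2^{t-1}$-coordinate of $\theta^t$ is nonnegative and its $\hat e_1$-coordinate carries the sign $\sgn(\rho^{t-1})$; hence $\hat e_2^t=\hat e_2^{t-1}$ and $\sgn(\rho^t)=\sgn(\rho^{t-1})$, and by induction $\hat e_2^t=\hat e_2^0$ and $\sgn(\rho^t)=\sgn(\rho^0)$ for every $t$. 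Reading off coordinates, $\mathtt y^t=\tfrac2\pi\cos^2\varphi^{t-1}$ and (using $\sgn(\rho^0)\sgn(\rho^{t-1})=1$) $\sgn(\rho^0)\mathtt x^t=\tfrac2\pi(\varphi^{t-1}+\sin\varphi^{t-1}\cos\varphi^{t-1})$.

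The last step is to pass to $\phi^{t-1}=2\arccos|\rho^{t-1}|=\pi-2\varphi^{t-1}$. The identities $\cos 2\varphi^{t-1}=-\cos\phi^{t-1}$, $\sin 2\varphi^{t-1}=\sin\phi^{t-1}$ and $\sin\varphi\cos\varphi=\tfrac12\sin 2\varphi$, together with $\tfrac{2\varphi^{t-1}}{\pi}=1-\tfrac{\phi^{t-1}}{\pi}$, turn the two coordinate expressions into $\mathtt y^t=\tfrac1\pi(1-\cos\phi^{t-1})$ and $1-\sgn(\rho^0)\mathtt x^t=\tfrac1\pi(\phi^{t-1}-\sin\phi^{t-1})$, which are exactly the claimed parametric equations. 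Since the pair $\bigl(\tfrac{\|\theta^\ast\|}{\pi}(\phi-\sin\phi),\,\tfrac{\|\theta^\ast\|}{\pi}(1-\cos\phi)\bigr)$ parametrizes a cycloid generated by a circle of radius $\|\theta^\ast\|/\pi$, this places every iterate $\theta^t$ on that cycloid inside the plane $\text{span}\{\theta^0,\theta^\ast\}$. The main point requiring care is the inductive bookkeeping — that the iterates stay in this $2$-dimensional plane, that the frame vector $\hat e_2^t$ and the sign $\sgn(\rho^t)$ are constant along the orbit, and that the boundary configuration $\varphi^{t-1}=\pi/2$ only arises in the separately-handled parallel case (where $\mathtt y^t=0$ so the formula still holds); once the previous iterate is written in the moving frame, the cycloid emerges from a one-line trigonometric identity. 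The recurrence of Proposition~\ref{prop:recurrence_angle} is not needed for the parametrization itself, though together with it one sees that $\phi^t$ strictly decreases, so the iterates march monotonically along this cycloid toward $\theta^\ast$.
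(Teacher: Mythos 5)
Your proposal is correct and follows essentially the same route as the paper's proof: substitute the noiseless update of Corollary~\ref{cor:em_updates_noiseless} into the frame $\{\hat e_1,\hat e_2^{t-1}\}$, obtain the coordinates $\tfrac{2}{\pi}[\varphi^{t-1}+\sin\varphi^{t-1}\cos\varphi^{t-1}]$ and $\tfrac{2}{\pi}\cos^2\varphi^{t-1}$, verify that $\hat e_2^t$ and $\sgn(\rho^t)$ are constant along the orbit, and convert $\varphi^{t-1}$ to $\phi^{t-1}$ to read off the cycloid. The only cosmetic difference is that the paper checks $\hat e_2^t=\hat e_2^{t-1}$ via the inner product $\langle\hat e_2^{t-1},\hat e_2^t\rangle=\cos^3\varphi^{t-1}>0$ while you argue directly from the nonnegativity of the $\hat e_2^{t-1}$-coordinate, which is equivalent.
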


\begin{remark}
By applying Corollary~\ref{cor:em_updates_noiseless} for the EM update rule of regression parameters \(\theta\) in the noiseless setting, as shown in the previous section (Section~\ref{sec:updates}),
together with the recurrence relation of \(\varphi^t\) (Proposition~\ref{prop:recurrence_angle}), we show that EM iterations of regression parameters follow a cycloid trajectory,
with a rolling radius \(\frac{1}{\pi}\|\theta^\ast\|\).
The rolling angle at the \(t\)-th EM iteration is determined by the sub-optimality angle \(\phi^{t-1}\) from the previous \((t-1)\)-th EM iteration.
Therefore, the cycloid trajectory of population EM iterations can fully characterized once 
the sub-optimality angles \(\phi^t\) or \(\varphi^t=\frac{\pi - \phi^t}{2}\) are fully characterized for the population EM updates.
\end{remark}

\begin{propositiontxt}[Linear Growth of Sub-optimality Angle]\label{prop:linear_growth_angle}
    In the noiseless setting, namely SNR \(\eta := \| \theta^{\ast} \|/\sigma \to \infty\),
    \(\tan \varphi^t\) of the sub-optimality angle \(\varphi^t\) grows at least linearly:
    \begin{equation}
        \tan \varphi^{t+1} \geq 2\cdot \tan \varphi^t
    \end{equation}
    where \(\varphi^t := \frac{\pi}{2} - \arccos |\rho^t|\), \(\rho^t := \frac{\langle \theta^t, \theta^\ast \rangle}{\|\theta^t\|\|\theta^\ast\|}\).
\end{propositiontxt}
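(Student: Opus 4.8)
The plan is to reduce the claim to an elementary scalar inequality via the recurrence relation already in hand. From Proposition~\ref{prop:recurrence_angle}, in the noiseless setting and for $\varphi^t \neq \pi/2$ we have
\begin{equation}
\tan\varphi^{t+1} = \tan\varphi^t + \varphi^t(\tan^2\varphi^t + 1).
\end{equation}
Therefore the desired bound $\tan\varphi^{t+1} \geq 2\tan\varphi^t$ is equivalent, after subtracting $\tan\varphi^t$ from both sides, to the single-variable inequality $\varphi^t(\tan^2\varphi^t + 1) \geq \tan\varphi^t$.

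The next step is to simplify this inequality. Since $\varphi^t \in (0,\pi/2)$ we have $\cos\varphi^t > 0$, and using $\tan^2\varphi^t + 1 = 1/\cos^2\varphi^t$ together with $\tan\varphi^t = \sin\varphi^t/\cos\varphi^t$ and multiplying through by $\cos^2\varphi^t > 0$, the inequality becomes $\varphi^t \geq \sin\varphi^t\cos\varphi^t = \frac{1}{2}\sin(2\varphi^t)$, i.e. $2\varphi^t \geq \sin(2\varphi^t)$. This is precisely the elementary bound $\sin x \leq x$ for $x \geq 0$ applied at $x = 2\varphi^t \in (0,\pi)$, which holds strictly for $x > 0$; hence in fact $\tan\varphi^{t+1} > 2\tan\varphi^t$ for every $\varphi^t \in (0,\pi/2)$, and a fortiori $\tan\varphi^{t+1} \geq 2\tan\varphi^t$.

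Finally I would dispose of the degenerate endpoint: if $\varphi^t = \pi/2$, then $|\rho^t| = 1$, so $\theta^t$ is collinear with $\theta^\ast$ and $(\theta^t, \nu^t)$ is already a fixed point of the EM map (Proposition~\ref{prop:distinct_fixed_points}), so the statement is interpreted trivially there; moreover, the bound just established shows $\tan\varphi^t$ is nondecreasing, so the regime of interest is $\varphi^t \in (0,\pi/2)$, where the argument above applies directly. I do not anticipate any genuine obstacle here: the entire content of the proposition is the algebraic reduction of the recurrence to $2\varphi \geq \sin(2\varphi)$ followed by the standard fact $\sin x \leq x$; the only care needed is tracking the sign of $\cos\varphi^t$ and excluding the boundary value $\varphi^t = \pi/2$ at which $\tan\varphi^t$ is undefined.
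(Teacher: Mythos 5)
Your proposal is correct and follows essentially the same route as the paper: start from the recurrence $\tan\varphi^{t+1} = \tan\varphi^t + \varphi^t(\tan^2\varphi^t+1)$ and reduce the claim to $2\varphi^t \geq \sin(2\varphi^t)$ (the paper phrases this via the identity $\sin(2\varphi^t)=2\tan\varphi^t/(1+\tan^2\varphi^t)$, writing $\tan\varphi^{t+1}=\bigl(1+\tfrac{2\varphi^t}{\sin(2\varphi^t)}\bigr)\tan\varphi^t$, which is the same algebra). The only cosmetic difference is in the corner cases: the paper notes the trivial equality at $\varphi^t=0$ while you instead discuss $\varphi^t=\pi/2$; neither affects the substance of the argument.
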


\begin{remark}
By analyzing the recurrence relation of \(\varphi^t\) above, we show that the linear growth guarantee of \(\tan \varphi^t\) occurs in the worst case,
which is consistent with our discussion in the previous remark on the recurrence relation of \(\varphi^t\).
In fact, by applying the trigonometric identity \(\sin(2\varphi^t) = 2 \tan \varphi^t / (1+\tan^2 \varphi^t)\),
we obtain \(\tan \varphi^{t+1} = (1+ \frac{2 \varphi^t}{\sin (2\varphi^t)}) \tan \varphi^t \geq 2\cdot \tan \varphi^t\).
When \(\varphi^t\) is small enough and away from \(\frac{\pi}{2}\), we have \(\sin (2\varphi^t) \approx 2 \varphi^t\), and thus the growth of \(\tan \varphi^t\) is linear.
However, as \(\varphi^t\) increases and approaches \(\frac{\pi}{2}\), the growth of \(\tan \varphi^t\) exceeds linear behavior, 
and we instead obtain quadratic convergence of \(\phi^t\) to zero when \(\phi^t = \pi - 2\varphi^t\) is sufficiently small.
This result strengthens our previous finding on the linear growth of \(\tan \varphi^t\), namely\(\tan \varphi^t \geq (\frac{1+\sqrt{5}}{2})\cdot \tan \varphi^t\) in the conference version of our paper (see Proposition 4.5 of \cite{luo24cycloid}), 
by showing the optimal factor \(2\geq \frac{1+\sqrt{5}}{2}\) in the linear growth rate \(\tan \varphi^{t+1} \geq 2\cdot \tan \varphi^t \geq (\frac{1+\sqrt{5}}{2})\cdot \tan \varphi^t\).
\end{remark}

\begin{propositiontxt}[Quadratic Convergence of Sub-optimality Angle]\label{prop:quadratic_convergence_angle}
    In the noiseless setting, namely SNR \(\eta := \| \theta^{\ast} \|/\sigma \to \infty\),
    the sub-optimality angle \(\phi^t\) converges quadratically to zero when \(\phi^t \leq 1.4\) is small enough:
    \begin{equation}
        \frac{\phi^{t+1}}{\pi} \leq \left[ \frac{\phi^t}{\pi} \right]^2
    \end{equation}
    where \(\phi^t : = 2\arccos |\rho^t|\), \(\rho^t := \frac{\langle \theta^t, \theta^\ast \rangle}{\|\theta^t\|\|\theta^\ast\|}\).
\end{propositiontxt}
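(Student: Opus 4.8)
The plan is to derive the quadratic-convergence bound directly from the recurrence relation for the sub-optimality angle established in Proposition~\ref{prop:recurrence_angle}. Recall that in the noiseless setting we have $\tan\varphi^{t+1} = \tan\varphi^t + \varphi^t(\tan^2\varphi^t + 1)$, and that $\phi^t = \pi - 2\varphi^t$, so that $\varphi^t = \tfrac{\pi}{2} - \tfrac{\phi^t}{2}$. The condition $\phi^t \leq 1.4$ translates to $\varphi^t \geq \tfrac{\pi}{2} - 0.7$, i.e. $\varphi^t$ is close to $\tfrac{\pi}{2}$, which is exactly the regime in which the term $\varphi^t(\tan^2\varphi^t+1)$ dominates and drives super-linear behavior. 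The first step is to rewrite everything in terms of $\phi^t$: using $\tan\varphi^t = \tan(\tfrac{\pi}{2} - \tfrac{\phi^t}{2}) = \cot(\tfrac{\phi^t}{2})$ and $\tan^2\varphi^t + 1 = \csc^2(\tfrac{\phi^t}{2}) = 1/\sin^2(\tfrac{\phi^t}{2})$, the recurrence becomes
\begin{equation}
\cot\frac{\phi^{t+1}}{2} = \cot\frac{\phi^t}{2} + \left(\frac{\pi}{2} - \frac{\phi^t}{2}\right)\frac{1}{\sin^2(\phi^t/2)}.
\end{equation}

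Next, I would establish a lower bound on the right-hand side that, after taking $\arctan$ of the reciprocal, yields the desired $\phi^{t+1}/\pi \leq (\phi^t/\pi)^2$. Writing $u := \phi^t/2 \in (0, 0.7]$, the goal inequality $\phi^{t+1} \leq \pi (\phi^t/\pi)^2 = (2u)^2/\pi = 4u^2/\pi$ is equivalent to $\cot\frac{\phi^{t+1}}{2} \geq \cot\frac{2u^2}{\pi}$, so it suffices to prove
\begin{equation}
\cot u + \left(\frac{\pi}{2} - u\right)\frac{1}{\sin^2 u} \geq \cot\frac{2u^2}{\pi} \quad \text{for all } u \in (0, 0.7].
\end{equation}
Since $\phi^{t+1}$ small and $\cot$ is decreasing, it is enough to lower-bound the left side and upper-bound $\cot(2u^2/\pi)$; for the latter I would use $\cot x \leq 1/x$ for $x>0$, giving $\cot(2u^2/\pi) \leq \pi/(2u^2)$. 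So the problem reduces to the elementary one-variable inequality
\begin{equation}
\cot u + \left(\frac{\pi}{2} - u\right)\frac{1}{\sin^2 u} \geq \frac{\pi}{2u^2}, \qquad u \in (0, 0.7].
\end{equation}
This I would verify by multiplying through by $\sin^2 u > 0$ and comparing power-series expansions (or by a monotonicity argument on the difference), exploiting the fact that near $u=0$ the left side behaves like $\tfrac{\pi}{2u^2}$ to leading order while the correction terms are favorable, and checking the endpoint $u = 0.7$ numerically against the constant $1.4$ to confirm the margin.

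The main obstacle I anticipate is making the last elementary inequality fully rigorous across the whole interval $(0,0.7]$ rather than just asymptotically near $0$: the two sides are asymptotically equal as $u \to 0^+$, so the bound is tight at the origin and one must control the sign of the next-order terms carefully — a crude Taylor bound with an unestimated remainder will not suffice. I expect to handle this by isolating $h(u) := \sin^2 u \cdot \cot u + (\tfrac{\pi}{2}-u) - \tfrac{\pi}{2u^2}\sin^2 u = \tfrac12\sin 2u + \tfrac{\pi}{2} - u - \tfrac{\pi \sin^2 u}{2u^2}$ and showing $h(u)\geq 0$ on $(0,0.7]$ via $h(0^+)=0$ together with $h'(u)\geq 0$ (or a two-term sandwiching of $\sin^2u/u^2$ between explicit rational functions), and the threshold $1.4$ is presumably chosen precisely so that this monotonicity/sign control goes through with room to spare.
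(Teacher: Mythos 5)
Your reduction is exactly the paper's: rewriting the recurrence as \(\cot\frac{\phi^{t+1}}{2}=\cot\frac{\phi^t}{2}+\frac{\pi-\phi^t}{2}\csc^2\frac{\phi^t}{2}\) and invoking \(\tan x\ge x\) (your \(\cot x\le 1/x\)) reduces the claim to the scalar inequality \(\frac{\pi\sin^2 u}{u^2}\le \sin 2u+(\pi-2u)\) for \(u=\phi^t/2\in(0,0.7]\), which is precisely the paper's \(\frac{2\pi(1-\cos\phi)}{\phi^2}\le \sin\phi+(\pi-\phi)\). The gap is in how you propose to finish. Your primary plan, \(h(0^+)=0\) together with \(h'(u)\ge 0\) on \((0,0.7]\) for \(h(u)=\tfrac12\sin 2u+\tfrac{\pi}{2}-u-\tfrac{\pi\sin^2 u}{2u^2}\), fails: one computes \(h'(u)=-2\sin^2 u+\pi\sin u\,(\sin u-u\cos u)/u^3\), and at \(u=0.7\) this is approximately \(-0.19<0\) while \(h(0.7)\approx 0.033\). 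So \(h\) rises from \(0\), attains an interior maximum, and then decreases toward the right endpoint, in fact crossing zero near \(u\approx 0.82\) (\(\phi\approx 1.64\)); the inequality is genuinely tight at the right end of the interval, so no monotonicity argument on \((0,0.7]\) can certify it, and a numerical spot-check at \(u=0.7\) does not by itself establish positivity on the whole interval.

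The repair is the fallback you mention but do not execute, and it is what the paper actually does: use the explicit Taylor bounds \(1-\cos\phi\le \frac{\phi^2}{2}-\frac{\phi^4}{24}+\frac{\phi^6}{720}\) and \(\sin\phi\ge \phi-\frac{\phi^3}{6}\), so that the scalar inequality reduces to \(\pi-\frac{\pi}{12}\phi^2+\frac{\pi}{360}\phi^4\le \pi-\frac16\phi^3\), i.e.\ the quadratic condition \(30-\frac{60}{\pi}\phi-\phi^2\ge 0\), which holds for all \(\phi\le 1.4\) (its positive root is about \(1.46\), which is why the threshold \(1.4\) has margin). With that substitution in place of the monotonicity claim, your argument becomes a complete proof along the same lines as the paper's.
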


\begin{remark}
    When \(\phi^t\) is small enough, namely \(\varphi^t = \frac{\pi -\phi^t}{2}\) is close to \(\pi/2\), we have \(\tan \varphi^t = \cot \frac{\phi^t}{2} \approx \frac{2}{\phi^t}\),
    and the term \(\tan^2 \varphi^t\) dominates the term \(\varphi^t\) in the recurrence relation.
    Therefore, we have \(2/\phi^{t+1} \approx \tan \varphi^{t+1} \approx \frac{\pi}{2} \tan^2 \varphi^t \approx \frac{\pi}{2} \left(2/\phi^t\right)^2 = 2\pi/[\phi^t]^2\).
    Through a rigorous derivation, we can show that the above inequality \(\phi^{t+1}/\pi \leq \left[ \phi^t/\pi \right]^2\) holds for sufficiently small \(\phi^t \leq 1.4\), 
    which guarantees the quadratic convergence of \(\phi^t\) to zero.
    While previous work (see page 5, Lemma 1 in~\cite{kwon2021minimax}) established the quadratic convergence rate 
    when \(\|\theta - \sgn(\rho) \theta^\ast\|/\|\theta^\ast\| \leq 1/10\) under high SNR,
    our analysis extends the region of quadratic convergence from 1/10 to \(\frac{1}{\pi}\sqrt{(\phi-\sin \phi)^2 + (1-\cos \phi)^2}_{\phi=1.4} \approx 0.30\).
\end{remark}

\begin{propositiontxt}[Accuracy of Population EM Updates and Sub-optimality Angle]\label{prop:errors_em_updates_angle}
    In the noiseless setting, namely SNR \(\eta := \| \theta^{\ast} \|/\sigma \to \infty\),
    \begin{equation}
        \begin{aligned}
            \frac{\| \theta^t - \sgn(\rho^0) \theta^\ast\|}{\| \theta^\ast\|} &\leq \frac{\left[\phi^{t-1}\right]^2}{2\pi} \\
            \|\pi^t - \bar{\pi}^\ast \|_1 &= \frac{\phi^{t-1}}{\pi}\cdot \left\| \pi^\ast - \frac{\mathds{1}}{2} \right\|_1
        \end{aligned}
    \end{equation}
    where \(\phi^{t-1} := 2(\frac{\pi}{2} - \arccos |\rho^{t-1}|)\), \(\rho^{t-1} := \frac{\langle \theta^{t-1}, \theta^\ast \rangle}{\|\theta^{t-1}\|\|\theta^\ast\|}\),
    and \(\bar{\pi}^\ast := \frac{\mathds{1}}{2}+\sgn(\rho^0)(\pi^\ast-\frac{\mathds{1}}{2}), \mathds{1} := (1, 1)\).
\end{propositiontxt}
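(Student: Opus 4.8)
The plan is to derive both estimates directly from the parametric cycloid equation for the regression parameters (Proposition~\ref{prop:parametric_cycloid}) and from the noiseless EM update rule for the mixing weights (Corollary~\ref{cor:em_updates_noiseless}), working entirely in the orthonormal basis \(\{\hat e_1, \hat e_2^0\}\) of the plane \(\text{span}\{\theta^0,\theta^\ast\}\). First I would set up the coordinates: write \(\theta^t/\|\theta^\ast\| = \mathtt x^t \hat e_1 + \mathtt y^t \hat e_2^0\) and recall that \(\theta^\ast/\|\theta^\ast\| = \hat e_1\), so that \(\sgn(\rho^0)\theta^\ast/\|\theta^\ast\| = \sgn(\rho^0)\hat e_1\). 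Then
\begin{equation}
\frac{\|\theta^t - \sgn(\rho^0)\theta^\ast\|}{\|\theta^\ast\|} = \sqrt{\bigl(\mathtt x^t - \sgn(\rho^0)\bigr)^2 + (\mathtt y^t)^2} = \sqrt{\bigl(1 - \sgn(\rho^0)\mathtt x^t\bigr)^2 + (\mathtt y^t)^2},
\end{equation}
using \((\sgn(\rho^0))^2 = 1\). Substituting the cycloid parametrization \(1 - \sgn(\rho^0)\mathtt x^t = \tfrac1\pi(\phi^{t-1} - \sin\phi^{t-1})\) and \(\mathtt y^t = \tfrac1\pi(1 - \cos\phi^{t-1})\) turns the right-hand side into \(\tfrac1\pi\sqrt{(\phi - \sin\phi)^2 + (1 - \cos\phi)^2}\) with \(\phi = \phi^{t-1}\).

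The first bound then reduces to the elementary inequality \(\sqrt{(\phi - \sin\phi)^2 + (1 - \cos\phi)^2} \le \phi^2/2\) for \(\phi \in [0,\pi]\). I would prove this by the half-angle substitution: the chord length of the cycloid arc equals \(2\sin(\phi/2)\sqrt{(\phi/2)^2 \cdot (\text{something})}\)... more cleanly, note \((\phi - \sin\phi)^2 + (1-\cos\phi)^2 = \phi^2 - 2\phi\sin\phi + 2(1-\cos\phi)\), and then bound this using the Taylor remainder estimates \(\sin\phi \ge \phi - \phi^3/6\) and \(1 - \cos\phi \le \phi^2/2\), so the expression is \(\le \phi^2 - 2\phi(\phi - \phi^3/6) + \phi^2 = \phi^4/3 \le \phi^4/4\) — wait, that gives \(\phi^4/3\), not \(\phi^4/4\); I would instead sharpen using \(1 - \cos\phi \le \phi^2/2 - \phi^4/24 + \phi^6/720\) together with \(\sin\phi \ge \phi - \phi^3/6 + \phi^5/120 - \cdots\) and verify the resulting polynomial inequality \((\phi - \sin\phi)^2 + (1-\cos\phi)^2 \le \phi^4/4\) holds on \([0,\pi]\), e.g. by checking the difference is a nonpositive analytic function via its series or via a numerical/monotonicity argument on \([0,\pi]\). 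Dividing by \(\pi\) gives \(\tfrac1\pi\sqrt{\cdots} \le \phi^2/(4\pi) \le [\phi^{t-1}]^2/(2\pi)\), which is in fact a factor of two stronger than claimed, so the stated bound follows comfortably.

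For the second identity, I would use Corollary~\ref{cor:em_updates_noiseless}: \(\tanh\nu^t = N(\theta^{t-1},\nu^{t-1}) = \sgn(\rho^{t-1})\tfrac{2}{\pi}\varphi^{t-1}\tanh\nu^\ast\), where \(\varphi^{t-1} = \tfrac12\phi^{t-1}\) so \(\tfrac2\pi\varphi^{t-1} = \tfrac{\phi^{t-1}}{\pi}\). The sign \(\sgn(\rho^{t-1})\) stabilizes to \(\sgn(\rho^0)\) for all \(t \ge 1\) — this needs the observation that once \(\varphi\) starts growing (Proposition~\ref{prop:linear_growth_angle}) the sign of \(\rho\) is preserved by the update, which I would state and use. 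Then \(\pi^t(1) - \pi^t(2) = \tanh\nu^t = \sgn(\rho^0)\tfrac{\phi^{t-1}}{\pi}\bigl(\pi^\ast(1) - \pi^\ast(2)\bigr)\), and since \(\bar\pi^\ast = \tfrac{\mathds1}{2} + \sgn(\rho^0)(\pi^\ast - \tfrac{\mathds1}{2})\) one has \(\bar\pi^\ast(1) - \bar\pi^\ast(2) = \sgn(\rho^0)(\pi^\ast(1) - \pi^\ast(2))\). Using \(\|\pi^t - \bar\pi^\ast\|_1 = |\pi^t(1) - \bar\pi^\ast(1)| + |\pi^t(2) - \bar\pi^\ast(2)| = |(\pi^t(1) - \pi^t(2)) - (\bar\pi^\ast(1) - \bar\pi^\ast(2))|\) (valid since both pairs sum to \(1\), so the two coordinate differences are negatives of each other) gives exactly \(\tfrac{\phi^{t-1}}{\pi}|\pi^\ast(1) - \pi^\ast(2)| = \tfrac{\phi^{t-1}}{\pi}\|\pi^\ast - \tfrac{\mathds1}{2}\|_1\).

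The main obstacle is the scalar inequality \((\phi - \sin\phi)^2 + (1-\cos\phi)^2 \le \phi^4/4\) on \([0,\pi]\): the crude Taylor bound overshoots, so I expect to need either a careful series comparison or a short monotonicity/convexity argument, and I would want to double-check the stated constant \(2\pi\) is indeed the one to target versus the sharper \(4\pi\) the geometry suggests. Everything else is bookkeeping with the coordinates and the sign of \(\rho\).
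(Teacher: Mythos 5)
Your treatment of the regression parameters takes a genuinely different route from the paper: you compute the chord of the cycloid explicitly and reduce to the scalar inequality \((\phi-\sin\phi)^2+(1-\cos\phi)^2\le \phi^4/4\), whereas the paper writes \(\theta^t-\sgn(\rho^0)\theta^\ast\) as the integral of the cycloid velocity and bounds the chord by the arc length, \(\frac{1}{\pi}\int_0^{\phi^{t-1}}2\sin\frac{u}{2}\,\mathrm{d}u=\frac{4}{\pi}\bigl(1-\cos\frac{\phi^{t-1}}{2}\bigr)\le\frac{[\phi^{t-1}]^2}{2\pi}\), which needs only \(1-\cos x\le x^2/2\). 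Your route does work, and the "main obstacle" you flag is closable in one line: with \(F(\phi):=\phi^4/4-\phi^2+2\phi\sin\phi-2+2\cos\phi\) one has \(F(0)=0\) and \(F'(\phi)=\phi\bigl(\phi^2-2(1-\cos\phi)\bigr)\ge 0\), so \(F\ge 0\) on \([0,\pi]\). Two small corrections to your commentary: \(\sqrt{\phi^4/4}=\phi^2/2\), so after dividing by \(\pi\) you land exactly on the claimed constant \(2\pi\), not a factor-two improvement; and since the chord squared equals \(\phi^4/4-\phi^6/72+\cdots\), the constant \(2\pi\) is asymptotically sharp, so there is no "sharper \(4\pi\)" to chase.

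The genuine error is in the mixing-weight part. You set \(\varphi^{t-1}=\tfrac12\phi^{t-1}\), but with the paper's definitions \(\varphi=\tfrac{\pi}{2}-\arccos|\rho|\) and \(\phi=2\arccos|\rho|\) one has \(\varphi^{t-1}=\tfrac{\pi-\phi^{t-1}}{2}\), hence \(\tfrac{2}{\pi}\varphi^{t-1}=1-\tfrac{\phi^{t-1}}{\pi}\), not \(\tfrac{\phi^{t-1}}{\pi}\). If you carry your substitution through your own chain, \(\bigl|(\pi^t(1)-\pi^t(2))-(\bar\pi^\ast(1)-\bar\pi^\ast(2))\bigr|\) evaluates to \(\bigl(1-\tfrac{\phi^{t-1}}{\pi}\bigr)\,\|\pi^\ast-\tfrac{\mathds{1}}{2}\|_1\), contradicting the identity you then assert; the stated conclusion only comes out because the slip is silently undone at the last step. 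With the correct relation, \(\tanh\nu^t=\sgn(\rho^0)\bigl(1-\tfrac{\phi^{t-1}}{\pi}\bigr)\tanh\nu^\ast\), and subtracting \(\sgn(\rho^0)\tanh\nu^\ast\) produces exactly the factor \(\tfrac{\phi^{t-1}}{\pi}\); after this fix your argument (including the reduction of the \(\ell_1\) norm to the difference of first and second components, and the sign stabilization \(\sgn(\rho^{t-1})=\sgn(\rho^0)\)) coincides with the paper's.
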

\begin{remark}
    The first inequality provides a bound on the accuracy of regression parameters \(\theta^t\) with respect to the ground truth \(\theta^\ast\),
    which depends only on the sub-optimality angle \(\phi^{t-1}\) from the previous \((t-1)\)-th EM iteration.
    The second identity quantifies the accuracy of mixing weights \(\pi^t\) with respect to the ground truth \(\pi^\ast\),
    which depends only on the sub-optimality angle \(\phi^{t-1}\) and the imbalance of ground truth mixing weights from the previous \((t-1)\)-th EM iteration.
    These two relations suggest that we can establish the convergence guarantee of the population EM updates once we establish the convergence guarantee of the sub-optimality angle \(\phi^t\).
\end{remark}

\begin{theoremtxt}[Population Level Convergence, Theorem 4.1 in~\cite{luo24cycloid}]\label{theorem:population_level_convergence}
    In the noiseless setting, namely SNR \(\eta := \| \theta^{\ast} \|/\sigma \to \infty\), if the initial sup-optimality angle cosine \(\rho^0:=\frac{\langle \theta^0, \theta^\ast \rangle}{\|\theta^0\|\|\theta^\ast\|}\neq 0\),
    then with the number of total iterations at most \(T=\mathcal{O}(\log \frac{1}{|\rho^0|}\vee \log\log\frac{1}{\varepsilon})\), the error of EM update at the population level is bounded by:
    \(\frac{\| \theta^{T+1} - \sgn(\rho^0) \theta^\ast\|}{\| \theta^\ast\|} < \varepsilon\) and \(\|\pi^{T+1} - \bar{\pi}^\ast \|_1 = \left\|\pi^\ast - \frac{\mathds{1}}{2} \right\|_1 \mathcal{O}(\sqrt{\varepsilon})\).
    where \(\bar{\pi}^\ast := \frac{\mathds{1}}{2}+\sgn(\rho^0)(\pi^\ast-\frac{\mathds{1}}{2}), \mathds{1} := (1, 1)\).
\end{theoremtxt}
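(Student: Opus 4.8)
The plan is to reduce everything to the one-dimensional recursion for the sub-optimality angle and run a two-phase (linear-then-quadratic) analysis, then convert the resulting angle bound into parameter errors via Proposition~\ref{prop:errors_em_updates_angle}. First I would dispense with the degenerate case $|\rho^0|=1$, i.e.\ $\varphi^0=\pi/2$: Corollary~\ref{cor:em_updates_noiseless} then gives $\theta^1=\sgn(\rho^0)\theta^\ast$ and $\tanh\nu^1=\sgn(\rho^0)\tanh\nu^\ast$, hence $\pi^1=\bar{\pi}^\ast$ exactly and the claim holds with $T=0$. (The hypothesis $\rho^0\neq0$ is precisely what rules out the other degenerate case $\varphi^0=0$, on which the EM update preserves $\varphi=0$ forever — the orthogonal saddle direction.) So assume $0<|\rho^0|<1$; since $|\rho|=\sin\varphi$ this gives $\tan\varphi^0=|\rho^0|/\sqrt{1-|\rho^0|^2}\ge|\rho^0|>0$, and the recurrence keeps $\varphi^t<\pi/2$ valid for all $t$.

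\emph{Phase 1 (reaching the quadratic region).} Iterating the worst-case bound $\tan\varphi^{t+1}\ge2\tan\varphi^t$ of Proposition~\ref{prop:linear_growth_angle} yields $\tan\varphi^{t}\ge2^{t}\tan\varphi^0\ge2^{t}|\rho^0|$. Using the identity $\tan\big(\tfrac{\pi-1.4}{2}\big)=\cot(0.7)$ (since $\tfrac{\pi-1.4}{2}=\tfrac{\pi}{2}-0.7$), after $t_1:=\big\lceil\log_2\big(\cot(0.7)/|\rho^0|\big)\big\rceil\vee0=\mathcal{O}(\log\tfrac1{|\rho^0|})$ steps we get $\tan\varphi^{t_1}\ge\cot(0.7)$, hence $\varphi^{t_1}\ge\tfrac{\pi-1.4}{2}$ and $\phi^{t_1}=\pi-2\varphi^{t_1}\le1.4$. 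Because $\tan\varphi^t$ is strictly increasing, $\phi^t$ is decreasing, so $\phi^t\le1.4$ for every $t\ge t_1$ and the hypothesis of Proposition~\ref{prop:quadratic_convergence_angle} holds at each such step.

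\emph{Phase 2 (quadratic contraction and conversion).} For $t\ge t_1$, Proposition~\ref{prop:quadratic_convergence_angle} and induction give $\phi^{t}/\pi\le(\phi^{t_1}/\pi)^{2^{t-t_1}}\le(1.4/\pi)^{2^{t-t_1}}$. Picking $m$ with $(1.4/\pi)^{2^m}<\sqrt{2\varepsilon/\pi}$ — which holds once $2^m>\log(\pi/(2\varepsilon))/(2\log(\pi/1.4))$, i.e.\ $m=\mathcal{O}(\log\log\tfrac1\varepsilon)$ — and setting $T:=t_1+m$, we obtain $\phi^{T}<\sqrt{2\pi\varepsilon}$, while the total count is $T=\mathcal{O}(\log\tfrac1{|\rho^0|})+\mathcal{O}(\log\log\tfrac1\varepsilon)=\mathcal{O}(\log\tfrac1{|\rho^0|}\vee\log\log\tfrac1\varepsilon)$. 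Applying Proposition~\ref{prop:errors_em_updates_angle} at index $T+1$ then gives $\|\theta^{T+1}-\sgn(\rho^0)\theta^\ast\|/\|\theta^\ast\|\le[\phi^{T}]^2/(2\pi)<\varepsilon$ and $\|\pi^{T+1}-\bar{\pi}^\ast\|_1=(\phi^{T}/\pi)\,\|\pi^\ast-\tfrac{\mathds{1}}{2}\|_1<\sqrt{2\varepsilon/\pi}\,\|\pi^\ast-\tfrac{\mathds{1}}{2}\|_1=\|\pi^\ast-\tfrac{\mathds{1}}{2}\|_1\,\mathcal{O}(\sqrt\varepsilon)$, as required.

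I do not expect a deep obstacle, since all the analytical content is already packaged in Propositions~\ref{prop:linear_growth_angle},~\ref{prop:quadratic_convergence_angle}, and~\ref{prop:errors_em_updates_angle}; the delicate bookkeeping is (i) showing Phase 1 costs only $\mathcal{O}(\log\tfrac1{|\rho^0|})$ iterations, which hinges on the crossover threshold $\phi=1.4$ (equivalently $\tan\varphi=\cot(0.7)$) being a fixed universal constant independent of both $\varepsilon$ and $\rho^0$, so the geometric lower bound $2^t|\rho^0|$ crosses it in $\log_2(1/|\rho^0|)+\mathcal{O}(1)$ steps; and (ii) invoking the monotonicity of $\phi^t$ so that the hypothesis $\phi^t\le1.4$ of Proposition~\ref{prop:quadratic_convergence_angle} remains valid throughout Phase 2 without re-checking it at each step. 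Everything else is elementary arithmetic with $2^{(\cdot)}$ and $\log(\cdot)$.
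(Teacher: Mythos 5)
Your proposal is correct and follows essentially the same route as the paper's proof: use the linear growth bound $\tan\varphi^{t+1}\ge 2\tan\varphi^t$ with $\tan\varphi^0\ge|\rho^0|$ to reach the region $\phi\le1.4$ in $\mathcal{O}(\log\tfrac1{|\rho^0|})$ steps, then apply the quadratic contraction for $\mathcal{O}(\log\log\tfrac1\varepsilon)$ steps to get $\phi^T<\sqrt{2\pi\varepsilon}$, and finally convert via Proposition~\ref{prop:errors_em_updates_angle}. The only differences are cosmetic (you target the exact threshold $\cot(0.7)$ where the paper uses $\tan 1$, and you treat the case $|\rho^0|=1$ separately), which do not change the argument.
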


\begin{remark}
    If the initial sub-optimality angle \(\varphi^0 \geq 1\), then \(\phi^t \leq \pi - 2\varphi^0 < 1.4\) converges quadratically to zero (thus \(T_1=0\)).
    Otherwise, when the initial sub-optimality angle \(\varphi^0 < 1\), it takes at most \(T_1 = \mathcal{O}(\log \frac{1}{\tan \varphi^0})= \mathcal{O}(\log \frac{1}{|\rho^0|})\) EM iterations to ensure quadratic convergence of \(\phi^t \leq 1.4\) for all \(t\geq T_1\).
    Subsequently, the population EM updates require at most another \(T_2 = \mathcal{O}(\log\log\frac{1}{\varepsilon})\) EM iterations to ensure that the sub-optimality angle \(\phi^t < \sqrt{2\pi \varepsilon}\) is sufficiently small.
    Therefore, using the above expressions for the errors of regression parameters and mixing weights, we establish the convergence guarantee of the population EM updates with a total number of iterations at most \(T=T_1 + T_2 = \mathcal{O}(\log \frac{1}{|\rho^0|}\vee \log\log\frac{1}{\varepsilon})\).
\end{remark}


\section{Finite-Sample Level Analysis in Noiseless Setting}\label{sec:finite}
In this section, we provide a comprehensive analysis of finite-sample EM updates in the noiseless setting, 
which consists of two parts:
In the first part, we bound the statistical errors of regression parameters and mixing weights in the finite-sample setting (Propositions~\ref{prop:projected_error_regression},~\ref{prop:statistical_error_regression} and~\ref{prop:statistical_error_mixing_weights}), 
showing that the statistical error of regression parameters is on the order of \((d/n)^{1/2}\), 
while the statistical error of mixing weights depends on the sub-optimality angle \(\phi\) and the ground truth mixing weights \(\pi^\ast\).
Furthermore, we bound the statistical accuracy of EM updates for regression parameters and mixing weights (Proposition~\ref{prop:statistical_accuracy_em_updates}), 
showing that the statistical accuracy of EM updates for regression parameters is on the order of \(\phi^2 \vee \phi^{\frac{3}{2}} \sqrt{d/n} \vee \phi \sqrt{\log \sdfrac{1}{\delta}/n}\) 
and the statistical accuracy of EM updates for mixing weights depends on the sub-optimality angle \(\phi\) and the ground truth mixing weights \(\pi^\ast\).

In the second part, we establish the convergence guarantees of the sub-optimality angle \(\phi^t\) (Proposition~\ref{prop:convergence_angle})
by leveraging the linear growth and quadratic convergence of the sub-optimality angles (Propositions~\ref{prop:linear_growth_angle} and~\ref{prop:quadratic_convergence_angle}) at the population level from the previous section (Section~\ref{sec:population})
and combining them with the bounds on the statistical errors and the statistical accuracy of regression parameters and mixing weights in the finite-sample setting.
Finally, we establish the convergence guarantees of the EM updates in the finite-sample setting (Theorem~\ref{theorem:finite_sample_convergence}), 
by combining the convergence guarantees of the sub-optimality angle \(\phi^t\) (Proposition~\ref{prop:convergence_angle})  
and the connection between the sub-optimality angle \(\phi^t\) and the statistical accuracy of regression parameters and mixing weights (Proposition~\ref{prop:statistical_accuracy_em_updates}) obtained in the first part.
Moreover, the initialization of unknown mixing weights and regression parameters can be arbitrary, enabled by Easy EM followed by the standard EM update rules (Proposition~\ref{prop:initialization_easy_em}).
Therefore, the convergence guarantees for the finite-sample EM updates of the regression parameters and mixing weights are established and characterized.
The proofs of the results in this section are provided in Appendix~\ref{sup:finite_sample_analysis}.

\begin{propositiontxt}[Projected Error of Easy EM Update for Regression Parameters, Proposition 5.2 in~\cite{luo24cycloid}]\label{prop:projected_error_regression}
    In the noiseless setting, the projection on $\text{span}\{\theta,\theta^\ast\}$ for the statistical error of $\theta$ satisfies
    \begin{equation}
      \frac{\|P_{\theta,\theta^\ast}
      [M_n^{\tmop{easy}} (\theta, \nu) - M (\theta, \nu)]\|}{\|\theta^\ast\|}
      = \mathcal{O}\left(\sqrt{\frac{\log \frac{1}{\delta}}{n}} \vee \frac{\log \frac{1}{\delta}}{n}\right),
    \end{equation}
    with probability at least $1 - \delta$, where $M_n (\theta, \nu), M (\theta,
    \nu)$ are the EM update rules for $\theta$ at the Finite-sample level and the
    population level respectively, and the orthogonal projection matrix $P_{\theta,\theta^\ast}$ satisfies
    $\text{span}(P_{\theta,\theta^\ast})=\text{span}\{\theta,\theta^\ast\}$.
\end{propositiontxt}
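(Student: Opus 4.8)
The plan is to leverage the fact that $P_{\theta,\theta^\ast}$ projects onto a subspace of dimension at most two, so that the projected statistical error is governed by a pair of scalar averages whose concentration carries no dependence on the ambient dimension $d$. First I would pass to the noiseless form of the summand: with $\sigma=0$ the argument $y\langle x,\theta\rangle/\sigma^2$ is infinite almost surely in $x$, so $\tanh\!\big(y\langle x,\theta\rangle/\sigma^2+\nu\big)=\sgn(y\langle x,\theta\rangle)$ irrespective of the finite $\nu$; since $y=(-1)^{z+1}\langle x,\theta^\ast\rangle$, the two factors $(-1)^{z+1}$ cancel and the latent label drops out, leaving the summand $\sgn(\langle x,\theta\rangle\langle x,\theta^\ast\rangle)\,\langle x,\theta^\ast\rangle\,x$, a function of $x$ alone. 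Consequently, for any fixed unit vector $u\in\mathrm{span}\{\theta,\theta^\ast\}$ one has $\langle u, M_n^{\tmop{easy}}(\theta,\nu)\rangle=\frac1n\sum_{i=1}^n h_u(s_i)$ and $\langle u, M(\theta,\nu)\rangle=\E h_u(s)$ with $h_u(s):=\sgn(\langle x,\theta\rangle\langle x,\theta^\ast\rangle)\,\langle x,\theta^\ast\rangle\,\langle x,u\rangle$.

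Next I would fix, deterministically, an orthonormal basis $\{u_1,u_2\}$ of $\mathrm{span}\{\theta,\theta^\ast\}$ (legitimate since $\theta$ is the fixed input of the EM step), so that $\|P_{\theta,\theta^\ast}[M_n^{\tmop{easy}}(\theta,\nu)-M(\theta,\nu)]\|^2=\sum_{j=1}^2\big(\frac1n\sum_{i=1}^n h_{u_j}(s_i)-\E h_{u_j}\big)^2$. For each $j$ this is an average of i.i.d.\ mean-zero random variables, and $|h_{u_j}(s)|=\|\theta^\ast\|\,|\langle x,\hat e_1\rangle|\,|\langle x,u_j\rangle|$ with $\hat e_1=\theta^\ast/\|\theta^\ast\|$; since $\langle x,\hat e_1\rangle$ and $\langle x,u_j\rangle$ are standard Gaussian marginals, their product is sub-exponential with Orlicz $\psi_1$-norm bounded by a universal constant, hence $h_{u_j}(s)$ has $\psi_1$-norm $\lesssim\|\theta^\ast\|$ and so does its centering. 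Applying Bernstein's inequality for sub-exponential variables then gives, with probability at least $1-\delta/2$, $\big|\frac1n\sum_{i=1}^n h_{u_j}(s_i)-\E h_{u_j}\big|\lesssim\|\theta^\ast\|\big(\sqrt{\log(2/\delta)/n}\vee\log(2/\delta)/n\big)$.

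Finally I would union-bound over $j\in\{1,2\}$, use $\|v\|\le\sqrt2\max_j|v_j|$ for a two-dimensional vector $v$, divide through by $\|\theta^\ast\|$, and absorb the universal constants into the $\mathcal O(\cdot)$. The step I expect to be the crux is the sub-exponential tail estimate: one must verify that, after normalizing by $\|\theta^\ast\|$, the relevant $\psi_1$-norm is a pure numerical constant independent of $\theta,\theta^\ast,\nu$ and of $d$ — this is precisely what forces the two-regime rate $\sqrt{\log(1/\delta)/n}\vee\log(1/\delta)/n$ and, crucially, what removes all $d$-dependence, since the argument only ever looks at the two coordinates $\langle x,u_1\rangle,\langle x,u_2\rangle$. (Running the same computation without first projecting onto the plane would instead require a covering-net argument over the sphere $S^{d-1}$ and reproduce the $\sqrt{d/n}$ contribution of Proposition~\ref{prop:statistical_error_regression}.)
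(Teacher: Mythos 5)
Your proposal is correct and follows essentially the same route as the paper's proof: reduce the noiseless Easy EM summand to $\sgn(\langle x,\theta\rangle\langle x,\theta^\ast\rangle)\langle x,\theta^\ast\rangle x$, express the two coordinates of the projected error onto $\mathrm{span}\{\theta,\theta^\ast\}$ as i.i.d.\ averages of products of (possibly correlated) standard Gaussians, verify sub-exponentiality, and apply Bernstein's inequality before combining the two coordinates. The only cosmetic differences are that the paper checks the moment-growth condition $\E[(\cdot)^q]^{1/q}\lesssim q$ directly rather than citing the product-of-sub-Gaussians fact for the $\psi_1$-norm, and combines the coordinates by the triangle inequality rather than $\sqrt{2}\max_j$.
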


\begin{remark}
  We can show that the projected error of the EM update rule for regression parameters 
  is the average of \(n\) i.i.d. sub-exponential two-dimensional random vectors.
  Therefore, by Bernstein's inequality for sub-exponential random variables (see Corollary 2.8.3 on page 38 of~\cite{vershynin2018prob}), the bound for the projection error of the EM update rule is obtained as shown above.
\end{remark}

\begin{propositiontxt}[Statistical Error of EM Update for Regression Parameters, Proposition 5.3 in~\cite{luo24cycloid}]\label{prop:statistical_error_regression}
    In the noiseless setting, the statistical error of $\theta$ for finite-sample EM
    updates with \(n\gtrsim d\vee \log \frac{1}{\delta}\) samples satisfies
    \small
    \begin{equation}
      \frac{\| M_n^{\tmop{easy}} (\theta, \nu) - M (\theta, \nu) \|}{\| \theta^{\ast} \|}
      = \mathcal{O} \left( \sqrt{\frac{d\vee \log \frac{1}{\delta}}{n}}\right)
      ,\quad
      \frac{\| M_n (\theta, \nu) - M (\theta, \nu) \|}{\| \theta^{\ast} \|}
      = \mathcal{O} \left( \sqrt{\frac{d\vee \log \frac{1}{\delta}}{n}} 
      \right),
    \end{equation}
    with probability at least $1 - \delta$, $M_n (\theta, \nu), M (\theta,
    \nu)$ denote the EM update rules for $\theta$ at the Finite-sample level and the
    Population level.
\end{propositiontxt}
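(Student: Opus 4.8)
The plan is to decompose the statistical error into its component in the two-dimensional plane $\text{span}\{\theta,\theta^\ast\}$ and its component in the orthogonal complement, handle the two pieces separately, and then transfer the bound for $M_n^{\tmop{easy}}$ to $M_n$ through the sample-covariance perturbation $M_n(\theta,\nu)=\hat\Sigma_n^{-1}M_n^{\tmop{easy}}(\theta,\nu)$, where $\hat\Sigma_n:=\frac1n\sum_{i=1}^n x_ix_i^\top$. Throughout I use the orthogonal projection $P_{\theta,\theta^\ast}$ onto $\text{span}\{\theta,\theta^\ast\}$ and write $x_i^\parallel:=P_{\theta,\theta^\ast}x_i$, $x_i^\perp:=(I-P_{\theta,\theta^\ast})x_i$.

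\textbf{In-plane part.} The projected error $P_{\theta,\theta^\ast}[M_n^{\tmop{easy}}(\theta,\nu)-M(\theta,\nu)]$ is precisely the quantity controlled in Proposition~\ref{prop:projected_error_regression}, giving $\mathcal{O}\!\left(\|\theta^\ast\|\big(\sqrt{\log\tfrac1\delta/n}\vee\log\tfrac1\delta/n\big)\right)$, which is $\mathcal{O}\!\left(\|\theta^\ast\|\sqrt{(d\vee\log\tfrac1\delta)/n}\right)$ once $n\gtrsim d\vee\log\tfrac1\delta$.

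\textbf{Out-of-plane part.} In the noiseless setting $M(\theta,\nu)\in\text{span}\{\theta,\theta^\ast\}$ by Corollary~\ref{cor:em_updates_noiseless}, so the orthogonal component of the error equals $(I-P_{\theta,\theta^\ast})M_n^{\tmop{easy}}(\theta,\nu)=\frac1n\sum_{i=1}^n a_i x_i^\perp$ with $a_i:=\tanh\!\big(y_i\langle x_i,\theta\rangle/\sigma^2+\nu\big)y_i$. Since $x\sim\mathcal{N}(0,I_d)$, the vector $x_i^\perp$ is standard Gaussian on a fixed $(d-2)$-dimensional subspace and is independent of $x_i^\parallel$, hence of $a_i$ (a deterministic function of $x_i^\parallel$ and $z_i$). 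Conditioning on $\{a_i\}$, the sum is centred Gaussian with covariance $\frac{\sum_i a_i^2}{n^2}I_{d-2}$, so its norm is $\frac{\|a\|_2}{n}$ times a $\chi_{d-2}$ variable; a standard $\chi^2$ tail bound gives this factor $\le\sqrt{d-2}+\mathcal{O}(\sqrt{\log\tfrac1\delta})$ with probability $1-\delta/2$. Meanwhile $|a_i|\le|y_i|=|\langle x_i,\theta^\ast\rangle|$ (using $|\tanh|\le1$), so $a_i^2$ is sub-exponential with mean $\asymp\|\theta^\ast\|^2$ and $\sum_i a_i^2\lesssim n\|\theta^\ast\|^2$ with probability $1-\delta/2$ whenever $n\gtrsim\log\tfrac1\delta$. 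Combining, $\|(I-P_{\theta,\theta^\ast})M_n^{\tmop{easy}}(\theta,\nu)\|\lesssim\|\theta^\ast\|\,\frac{\sqrt d+\sqrt{\log\tfrac1\delta}}{\sqrt n}\asymp\|\theta^\ast\|\sqrt{(d\vee\log\tfrac1\delta)/n}$. Adding this to the in-plane bound and taking a union bound establishes the claim for $M_n^{\tmop{easy}}$.

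\textbf{Standard EM update.} From \eqref{eq:finite} and \eqref{eq:easyEM}, $M_n(\theta,\nu)=\hat\Sigma_n^{-1}M_n^{\tmop{easy}}(\theta,\nu)$, so
\begin{equation}
M_n(\theta,\nu)-M(\theta,\nu)=\hat\Sigma_n^{-1}\big(M_n^{\tmop{easy}}(\theta,\nu)-M(\theta,\nu)\big)+(\hat\Sigma_n^{-1}-I)M(\theta,\nu).
\end{equation}
Under $n\gtrsim d\vee\log\tfrac1\delta$, standard sample-covariance concentration gives $\|\hat\Sigma_n-I\|\lesssim\sqrt{(d\vee\log\tfrac1\delta)/n}\le\tfrac12$, hence $\|\hat\Sigma_n^{-1}\|\le2$, with probability $1-\delta$. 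Using $\|\hat\Sigma_n^{-1}-I\|\le\|\hat\Sigma_n^{-1}\|\,\|\hat\Sigma_n-I\|$ and the boundedness $\|M(\theta,\nu)\|\le\|\theta^\ast\|$ (Corollary~\ref{cor:em_updates_noiseless}, or Proposition~\ref{prop:boundedness} with $\sigma\to0$), both terms above are $\mathcal{O}\!\left(\|\theta^\ast\|\sqrt{(d\vee\log\tfrac1\delta)/n}\right)$; dividing by $\|\theta^\ast\|$ finishes the proof. The delicate step is the out-of-plane bound: a black-box sub-exponential vector Bernstein inequality would only yield the suboptimal rate $\mathcal{O}(d/n)$, so one must exploit the product structure (in-plane scalar times an independent out-of-plane standard Gaussian) and pair the $\chi^2$ concentration of the conditionally Gaussian sum with the sub-exponential concentration of $\sum_i a_i^2$ to recover the correct $\sqrt{d/n}$ dependence; some care is also needed so that the coefficients $a_i$ remain uniformly sub-Gaussian in the $\sigma\to0$ limit, which follows from $|\tanh|\le1$ and $|y_i|=|\langle x_i,\theta^\ast\rangle|$.
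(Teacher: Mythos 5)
Your proposal is correct and follows essentially the same route as the paper: the same split into the in-plane component (controlled by Proposition~\ref{prop:projected_error_regression}) and the out-of-plane component, which you bound as the product of $\frac{1}{n}\sqrt{\sum_i a_i^2}$ and a $\chi_{d-2}$-type factor --- your conditioning-on-coefficients phrasing is just the paper's rotational-invariance factorization into the geometric mean of a $\chi^2(n)$ and a $\chi^2(d-2)$ variable --- followed by the identical covariance-perturbation step $M_n=\hat\Sigma_n^{-1}M_n^{\tmop{easy}}$ with $\gamma_{\min}(\hat\Sigma_n)\asymp 1$, $\|\hat\Sigma_n-I\|\lesssim\sqrt{(d\vee\log\frac{1}{\delta})/n}$, and $\|M(\theta,\nu)\|\le\|\theta^\ast\|$ in the noiseless setting.
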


\begin{remark}
  For the statistical error of the Easy EM update rule for regression parameters, we obtain a tight bound by decomposing 
  the error into two parts: the projected error and the error orthogonal to the space \(\text{span}\{\theta, \theta^\ast\}\),
  the projected error is already bounded in Proposition~\ref{prop:projected_error_regression},
  while the orthogonal error is bounded by applying the rotational invariance of
  Gaussians and rewriting the \(\ell_2\) norm of the error as the geometric mean of two Chi-square random variables,
  and leveraging the concentration inequality for Chi-square distribution (see Lemma 1, page 1325 in~\cite{laurent2000adaptive}).
  For the statistical error of the standard EM update rule for regression parameters, we obtain similar bounds by separately 
  bounding the decomposed errors (see the detailed derivation in Appendix~\ref{sup:finite_sample_analysis}).
  Our analysis yields a tighter bound \(\mathcal{O} ( \sqrt{(d\vee \log \sdfrac{1}{\delta})/n} )\) for the statistical error 
  compared to the previously established bound \(\mathcal{O}( \sqrt{d/n}\log \frac{n}{\delta} )\), which includes amultiplicative logarithmic factor \(\log \frac{n}{\delta}\) 
  arising from the standard symmetrization technique and the Ledoux-Talagrand contraction argument (Appendix E, Lemma 11, page 17 in~\cite{kwon2021minimax} and \cite{balakrishnan2017statistical})
\end{remark}

\begin{propositiontxt}[Statistical Error of EM Update for Mixing Weights]\label{prop:statistical_error_mixing_weights}
    In the noiseless setting, the statistical error of mixing weights for finite-sample EM updates
    satisfies
    \begin{equation}
    \left| N_n(\theta, \nu) - N(\theta, \nu) \right| 
    = \mathcal{O}\left(\frac{\log \frac{1}{\delta}/n}{\log\left(1+ \frac{\log\frac{1}{\delta}/n}{p}\right)}\wedge \sqrt{\frac{\log\frac{1}{\delta}}{n}} \right) 
    \end{equation}
    with probability at least \(1-\delta\), where \(N_n(\theta, \nu), N(\theta, \nu)\) denote the EM update rules for imbalance \(\tanh \nu\) of the mixing weights at the Finite-sample level and the Population level, 
    and \(p :=\| \pi^\ast - \frac{\mathds{1}}{2} \|_1 \frac{\phi}{2\pi} + \min(\pi^\ast(1), \pi^\ast(2))\), \(\phi = 2 \arccos |\rho|, \rho = \frac{\langle \theta, \theta^{\ast} \rangle}{\| \theta \| \cdot \| \theta^{\ast} \|}\).
\end{propositiontxt}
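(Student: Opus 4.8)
The plan is to reduce $N_n(\theta,\nu)-N(\theta,\nu)$ to a centered Binomial deviation and then combine a Bernstein-type bound with a Chernoff/Bennett-type (Poisson) bound, keeping the better of the two. First I would make the reduction to $\pm1$ variables: in the noiseless setting $\eta\to\infty$ (equivalently $\sigma\to0$) one has $y_i=(-1)^{z_i+1}\langle x_i,\theta^\ast\rangle$, so the summand $W_i:=\tanh\bigl(y_i\langle x_i,\theta\rangle/\sigma^2+\nu\bigr)$ converges almost surely to the $\{-1,+1\}$-valued random variable $\bar W_i:=(-1)^{z_i+1}\,\sgn\langle x_i,\theta^\ast\rangle\,\sgn\langle x_i,\theta\rangle$ (using $\langle x_i,\theta^\ast\rangle\langle x_i,\theta\rangle\neq0$ a.s.); the relevant noiseless update rules are therefore $N_n(\theta,\nu)=\frac1n\sum_i\bar W_i$ and $N(\theta,\nu)=\E[\bar W_1]$, so that $N_n-N$ is the centered empirical mean of i.i.d.\ bounded variables.

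Next I would identify the distribution of $\bar W_i$ and the role of $p$. Conditioning on $z_i$ and using the classical random-hyperplane identity $\P[\sgn\langle x,\theta\rangle\neq\sgn\langle x,\theta^\ast\rangle]=\arccos(\rho)/\pi$ for $x\sim\mathcal N(0,I_d)$, together with $\P[z_i=j]=\pi^\ast(j)$, a short computation — writing $\arccos|\rho|=\phi/2$ and splitting on $\sgn(\rho)$, and invoking the Notations identities $\|\pi^\ast-\tfrac{\mathds1}{2}\|_1=|\pi^\ast(1)-\pi^\ast(2)|$ and $\min(\pi^\ast(1),\pi^\ast(2))=\tfrac12\bigl(1-\|\pi^\ast-\tfrac{\mathds1}{2}\|_1\bigr)$ — gives that the minority probability $\min\bigl(\P[\bar W_1=1],\P[\bar W_1=-1]\bigr)$ equals exactly $p=\|\pi^\ast-\tfrac{\mathds1}{2}\|_1\tfrac{\phi}{2\pi}+\min(\pi^\ast(1),\pi^\ast(2))\le\tfrac12$ (checked against the extreme cases $\phi\in\{0,\pi\}$ and $\pi^\ast=(\tfrac12,\tfrac12)$, and consistent with $N(\theta,\nu)$ in Corollary~\ref{cor:em_updates_noiseless}). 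Hence $\Var(\bar W_1)=4p(1-p)\le4p$, and writing $c:=\sgn(\E[\bar W_1])$ one checks $S_n:=\sum_{i=1}^n\tfrac{1-c\bar W_i}{2}\sim\mathrm{Binomial}(n,p)$ with $|N_n(\theta,\nu)-N(\theta,\nu)|=\tfrac2n|S_n-np|$.

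Then I would apply two tail bounds and union them. For the $\sqrt{\,\cdot\,}$ branch, Bernstein's inequality for the bounded $\bar W_i$ (e.g.\ Corollary~2.8.3 of~\cite{vershynin2018prob}) gives, with probability at least $1-\delta/2$, $|N_n-N|\lesssim\sqrt{p\log(1/\delta)/n}+\log(1/\delta)/n\lesssim\sqrt{\log(1/\delta)/n}$, using $p\le\tfrac12$ and (in the standard regime $n\gtrsim\log(1/\delta)$) $\log(1/\delta)/n\le\sqrt{\log(1/\delta)/n}$, the complementary regime being covered by the trivial bound $|N_n-N|\le2$. For the other branch, a Chernoff/Bennett bound for the Binomial upper tail, $\P[S_n\ge np+t]\le\exp\bigl(-\tfrac t2\log(1+\tfrac{t}{np})\bigr)$, set equal to $\delta/2$ and inverted, yields $t\lesssim\log(1/\delta)\big/\log\bigl(1+\tfrac{\log(1/\delta)}{np}\bigr)$; the lower deviation $np-S_n$ is controlled by $\min\bigl(np,\sqrt{np\log(1/\delta)}\bigr)$ (the sub-Gaussian lower tail of the Binomial), which is dominated by the same expression in every regime. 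Dividing by $n/2$ gives $|N_n-N|\lesssim\frac{\log(1/\delta)/n}{\log(1+(\log(1/\delta)/n)/p)}$ with probability at least $1-\delta/2$. Intersecting the two events and absorbing constants gives the claimed bound $\mathcal O\Bigl(\frac{\log(1/\delta)/n}{\log(1+(\log(1/\delta)/n)/p)}\wedge\sqrt{\log(1/\delta)/n}\Bigr)$.

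The main obstacle will be the inversion step in the Poisson branch: extracting the precise closed form $\tfrac{L}{\log(1+L/p)}$ with $L=\log(1/\delta)/n$ from Bennett's inequality requires controlling $u\mapsto(1+u)\log(1+u)-u$ from below by a multiple of $u\log(1+u)$ uniformly over both the large-$u$ and moderate-$u$ ranges, and simultaneously verifying that the lower deviation of the minority count never exceeds this quantity. By comparison, the sign bookkeeping in the second step (tracking $\sgn(\rho)$ and whether the minority outcome is $+1$ or $-1$) is routine case analysis, and the almost-sure limit in the first step is needed only to justify that $\bar W_i$, $N_n$, $N$ are the correct noiseless objects.
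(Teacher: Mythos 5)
Your proposal is correct and follows essentially the same route as the paper: reduce the noiseless update to the empirical mean of $\pm1$ sign variables, recognize the minority-outcome count as $\mathrm{Binomial}(n,p)$ with $p$ exactly the stated quantity (your hyperplane-probability computation is equivalent to the paper's use of $\E[\sgn(g g')]=\tfrac{2}{\pi}\varphi$), and then take the minimum of a sub-Gaussian bound and a Bennett/Poisson-type bound. The only difference is packaging: the paper proves its own two-branch Bernoulli concentration lemma (Lemma~\ref{suplem:prob_inequality}) and inverts it there, whereas you invoke the standard Bernstein and Bennett inequalities together with $h(u)\geq\tfrac{u}{2}\log(1+u)$ and a separate sub-Gaussian lower-tail check, which yields the same bound.
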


\begin{remark}
  We bound the statistical error of mixing weights by rewriting it 
  as an average of \(n\) i.i.d. Bernoulli random variables with the success probability \(p:=\| \pi^\ast - \frac{\mathds{1}}{2} \|_1 \frac{\phi}{2\pi} + \min(\pi^\ast(1), \pi^\ast(2))\).
  By applying the concentration inequality for Bernoulli r.v.'s (see Appendix~\ref{sup:lemma}), we obtain the bound for the statistical error of mixing weights as shown above.
  In particular, when \(\pi^\ast \to (1, 0)\) or \(\pi^\ast \to (0, 1)\) and the sub-optimality angle \(\phi\to 0\), 
  then the probability \(p=\|\pi^\ast - \frac{\mathds{1}}{2}\|_1 \frac{\phi}{2\pi} + \min(\pi^\ast(1), \pi^\ast(2)) \to 0\), 
  therefore the statistical error of mixing weights \(\left| N_n(\theta, \nu) - N(\theta, \nu) \right|\) also approaches zero.  
  In general, for fixed ground truth mixing weights \(\pi^\ast\), the statistical error of mixing weights \(\left| N_n(\theta, \nu) - N(\theta, \nu) \right|\) decreases as the sub-optimality angle \(\phi\) decreases.
\end{remark}

\begin{propositiontxt}[Statistical Accuracy of EM Updates for Regression Parameters and Mixing Weights]\label{prop:statistical_accuracy_em_updates}
  In the noiseless setting, the finite-sample EM with \(n\gtrsim d\vee \log \frac{1}{\delta}\) samples achieves the statistical accuracy of regression parameters and mixing weights:
  \begin{equation}
    \begin{aligned}
      \frac{\left\| M_n(\theta, \nu) - \sgn(\rho) \theta^\ast \right\|}{\|\theta^\ast\|} 
      &= \mathcal{O}\left(\phi^2 \vee \phi^{\frac{3}{2}} \sqrt{\frac{d}{n}} \vee \phi \sqrt{\frac{\log \frac{1}{\delta}}{n}}\right)\\
      \left| N_n(\theta, \nu) - \sgn(\rho) \tanh \nu^\ast\right| &= \mathcal{O}\left( 
        \phi \left\| \pi^\ast -\frac{\mathds{1}}{2} \right\|_1 \vee \left[ 
          \frac{\log\frac{1}{\delta}/n}{\log\left(1+\frac{\log \frac{1}{\delta}/n}{p}\right)}
          \wedge \sqrt{\frac{\log \frac{1}{\delta}}{n}}
        \right]
      \right)
    \end{aligned}
  \end{equation}
  with probability at least \(1-\delta\), where \(M_n(\theta, \nu)\) denotes the EM update rule for regression parameters at the Finite-sample level, 
  and \(\phi := 2 \arccos |\rho|, \rho := \frac{\langle \theta, \theta^{\ast} \rangle}{\| \theta \| \cdot \| \theta^{\ast} \|}\)
  and \(p:=\frac{\phi}{2\pi}\|\pi^\ast - \frac{\mathds{1}}{2}\|_1 + \min(\pi^\ast(1), \pi^\ast(2))\).
\end{propositiontxt}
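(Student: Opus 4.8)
The plan is to combine three ingredients already established: (i) the population-level accuracy bounds of Proposition~\ref{prop:errors_em_updates_angle} applied not to the previous iterate but to the current pair $(\theta,\nu)$ evaluated ``one step ahead'' (i.e.\ the noiseless population EM map $M(\theta,\nu)$, $N(\theta,\nu)$ is close to $\sgn(\rho)\theta^\ast$, $\sgn(\rho)\tanh\nu^\ast$ with error controlled by $\phi=2\arccos|\rho|$); (ii) the statistical error bound for regression parameters from Proposition~\ref{prop:statistical_error_regression}; and (iii) the statistical error bound for mixing weights from Proposition~\ref{prop:statistical_error_mixing_weights}. The overall strategy is a triangle-inequality split
\[
\|M_n(\theta,\nu)-\sgn(\rho)\theta^\ast\|\le \|M_n(\theta,\nu)-M(\theta,\nu)\|+\|M(\theta,\nu)-\sgn(\rho)\theta^\ast\|,
\]
and analogously for the mixing-weight update with $|\cdot|$ in place of $\|\cdot\|$. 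The second term in each split is the population accuracy; the first term is the statistical error.

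For the regression parameters, the population term $\|M(\theta,\nu)-\sgn(\rho)\theta^\ast\|/\|\theta^\ast\|$ should be $\mathcal{O}(\phi^2)$: this follows from the noiseless EM expression in Corollary~\ref{cor:em_updates_noiseless} by the same cycloid computation underlying Proposition~\ref{prop:errors_em_updates_angle}, namely $\|M(\theta,\nu)-\sgn(\rho)\theta^\ast\|/\|\theta^\ast\| = \frac1\pi\sqrt{(\phi-\sin\phi)^2+(1-\cos\phi)^2}=\mathcal{O}(\phi^2)$ for small $\phi$ (and trivially $\mathcal{O}(1)$, hence $\mathcal{O}(\phi^2)$ after absorbing constants, in the bounded range of $\phi$ — though I would double-check the large-$\phi$ regime is not needed here or is handled by the $\phi\sqrt{\log(1/\delta)/n}$ term's interplay). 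The statistical term is $\mathcal{O}(\sqrt{(d\vee\log\frac1\delta)/n})$ by Proposition~\ref{prop:statistical_error_regression}, but the claimed bound has the sharper form $\phi^{3/2}\sqrt{d/n}\vee\phi\sqrt{\log\frac1\delta/n}$ — so the real work is re-examining the proof of Proposition~\ref{prop:statistical_error_regression} to extract the $\phi$-dependence: the $\sqrt{d/n}$ piece comes from the orthogonal (Chi-square) component whose scale carries a factor related to $\cos\varphi=\sin(\phi/2)\asymp\phi$ coming from $\vec e_2$-direction of $M(\theta,\nu)$, while the in-span piece carries $\phi$ from the coefficient $\sgn(\rho)\varphi$, giving $\phi\sqrt{\log\frac1\delta/n}$; I would track these constants carefully, likely using that the Easy EM summands have variance proportional to $\cos^2\varphi$ in the orthogonal directions. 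For the mixing weights, the population term $|N(\theta,\nu)-\sgn(\rho)\tanh\nu^\ast| = |\sgn(\rho)\frac2\pi\varphi\tanh\nu^\ast - \sgn(\rho)\tanh\nu^\ast| = |1-\frac2\pi\varphi|\,|\tanh\nu^\ast|$, and since $\frac2\pi\varphi = 1-\frac\phi\pi$ this equals $\frac\phi\pi|\tanh\nu^\ast| = \frac\phi\pi\|\pi^\ast-\frac{\mathds1}{2}\|_1 = \mathcal{O}(\phi\|\pi^\ast-\frac{\mathds1}{2}\|_1)$; the statistical term is exactly Proposition~\ref{prop:statistical_error_mixing_weights}, so this half is essentially a clean combination.

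The main obstacle is the sharpened $\phi$-dependence in the regression-parameter statistical error: Proposition~\ref{prop:statistical_error_regression} as stated gives only $\mathcal{O}(\sqrt{(d\vee\log\frac1\delta)/n})$ without the $\phi^{3/2}$ and $\phi$ prefactors, so I would need to revisit its proof (the projected/orthogonal decomposition, the sub-exponential Bernstein bound for $P_{\theta,\theta^\ast}[M_n^{\mathrm{easy}}-M]$, and the Chi-square concentration for the orthogonal part) and re-derive the constants with explicit dependence on $\cos\varphi\asymp\phi$ — checking in particular that the orthogonal component of $M(\theta,\nu)$ scales like $\phi$ so that its fluctuation scales like $\phi\sqrt{d/n}$ up to the $\sqrt\phi$ slack absorbed into $\phi^{3/2}$, and that the mean-shift term (coming from $M(\theta,\nu)$ versus $\sgn(\rho)\theta^\ast$) when passed through the nonlinearity $\tanh$ does not reintroduce an $\mathcal{O}(1)$ variance. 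A secondary subtlety is ensuring the probability-$1-\delta$ events for the three bounds can be intersected without degrading $\delta$ beyond constants (a union bound over $\mathcal{O}(1)$ events suffices), and that the condition $n\gtrsim d\vee\log\frac1\delta$ is exactly what is inherited from Proposition~\ref{prop:statistical_error_regression}.
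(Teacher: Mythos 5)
Your treatment of the mixing-weight bound is fine and coincides with the paper's: triangle inequality through $N(\theta,\nu)$, the population term $|N(\theta,\nu)-\sgn(\rho)\tanh\nu^\ast|=\frac{\phi}{\pi}\|\pi^\ast-\frac{\mathds{1}}{2}\|_1$ from Corollary~\ref{cor:em_updates_noiseless}, plus Proposition~\ref{prop:statistical_error_mixing_weights}. The gap is in the regression-parameter bound, and it is exactly the point you flag as ``the real work'' but do not resolve. Splitting $\|M_n(\theta,\nu)-\sgn(\rho)\theta^\ast\|\le\|M_n(\theta,\nu)-M(\theta,\nu)\|+\|M(\theta,\nu)-\sgn(\rho)\theta^\ast\|$ and hoping to re-derive Proposition~\ref{prop:statistical_error_regression} with a $\phi$ prefactor by arguing that ``the Easy EM summands have variance proportional to $\cos^2\varphi$ in the orthogonal directions'' does not work: the summands $\sgn(\langle x_i,\theta^\ast\rangle\langle x_i,\theta\rangle)\langle x_i,\theta^\ast\rangle x_i$ have $\Theta(1)$ variance in every direction for every $\phi$, so $\|M_n^{\tmop{easy}}(\theta,\nu)-M(\theta,\nu)\|\asymp\sqrt{d/n}$ with no $\phi$ improvement (already at $\phi=0$ it equals $\|(\mathsf{\Sigma}-I_d)\theta^\ast\|\asymp\sqrt{d/n}\,\|\theta^\ast\|$). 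The size of the $\vec e_2$-component of the \emph{mean} $M(\theta,\nu)$ is irrelevant to the size of the fluctuation, so the mechanism you propose for extracting the $\phi^{3/2}\sqrt{d/n}\vee\phi\sqrt{\log\frac{1}{\delta}/n}$ terms is incorrect as stated.

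The ingredient you are missing is the coupling the paper uses: in the noiseless setting $M_n^{\tmop{easy}}(\theta^\ast,\nu^\ast)=\frac1n\sum_i\langle x_i,\theta^\ast\rangle x_i=\mathsf{\Sigma}\theta^\ast$ exactly, so $M_n(\theta,\nu)-\sgn(\rho)\theta^\ast=\sgn(\rho)\,\mathsf{\Sigma}^{-1}\bigl[\sgn(\rho)M_n^{\tmop{easy}}(\theta,\nu)-M_n^{\tmop{easy}}(\theta^\ast,\nu^\ast)\bigr]/1$, i.e.\ one compares the \emph{empirical} update at $(\theta,\nu)$ with the \emph{empirical} update at the ground truth, not with the population update. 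The summands of this difference, $[1-\sgn(g_ig_i')]\,g_i x_i$ in the paper's notation, vanish except on the sign-disagreement event $U_i\in[0,\phi)$ of probability $\phi/(2\pi)$; this is the sole source of every power of $\phi$. Bernstein's inequality applied to these truncated summands gives the in-span bound $\phi^2\vee\phi^{3/2}\sqrt{\log\frac1\delta/n}\vee\phi\frac{\log\frac1\delta}{n}$ (Lemma~\ref{lem:projected_statistical_accuracy_regression}), and the orthogonal part is a product of $\frac1n\sum_i[1-\sgn(g_ig_i')]g_i^2\lesssim\phi^2(\phi\vee\frac{\log\frac1\delta}{n})$ with a $\chi^2(d-2)$ term, yielding $\phi^{3/2}\sqrt{d/n}$; finally $\gamma_{\min}(\mathsf{\Sigma})\asymp1$ removes $\mathsf{\Sigma}^{-1}$. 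Your triangle-inequality route can only succeed if you first rewrite $M_n(\theta,\nu)-M(\theta,\nu)$ using this same identity (subtracting $\mathsf{\Sigma}^{-1}\mathsf{\Sigma}\theta^\ast=\theta^\ast$), at which point it reduces to the paper's argument; note also that the $\phi$-scaled statistical bound is simply false for $M_n^{\tmop{easy}}$, so the cancellation is specific to the standard update with the $\mathsf{\Sigma}^{-1}$ factor, a distinction your sketch does not make.
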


\begin{remark}
  The above results show that the statistical accuracy of regression parameters satisfies 
  \(\phi^{t+1} \leq \pi \sin \frac{\phi^{t+1}}{2} \leq \pi \|\theta^{t+1} - \sgn(\rho^{t+1}) \theta^\ast\|/\|\theta^\ast\| \lesssim [\phi^t]^2 \vee [\phi^t]^{\frac{3}{2}} \sqrt{d/n} \vee \phi^t \sqrt{\log \sdfrac{1}{\delta}/n}\).
  This implies the quadratic convergence rate for sub-optimality angle \(\phi^t\) and therefore the quadratic convergence rate for the accuracy of normalized regression parameters\(\|\theta^t - \sgn(\rho^0) \theta^\ast\|/\|\theta^\ast\|\),
  when the sub-optimality angle \(\phi^t\) is large enough, i.e.,\(\phi^t \gtrsim d/n \vee \sqrt{\log \sdfrac{1}{\delta}/n}\).
  This aligns with our theoretical finding in Proposition~\ref{prop:quadratic_convergence_angle} at the population level with infinite samples \(n \to \infty\).
  If the sub-optimality angle is very small such that \(\phi^t \lesssim \log \frac{1}{\delta}/n\), 
  then the term \(\phi^t \sqrt{\log \sdfrac{1}{\delta}/n}\) dominates the other terms, 
  and the sub-optimality angle \(\phi^t\) (and therefore the regression parameter accuracy \(\|\theta^t - \sgn(\rho^0) \theta^\ast\|/\|\theta^\ast\|\)) converges to zero linearly at a rate of \(\sqrt{\log \sdfrac{1}{\delta}/n}\).
  By the above analysis, and noting that the normalized regression parameter accuracy \(\|\theta^t - \sgn(\rho^0) \theta^\ast\|/\|\theta^\ast\| \asymp \phi^t\), we establish the quadratic convergence rate for the regression parameter accuracy \(\|\theta^t - \sgn(\rho^0) \theta^\ast\|/\|\theta^\ast\|\) 
  when the normalized regression parameter accuracy is large enough in the noiseless setting without the requirements on the mixing weights.
  While the previous work \cite{ghosh20a} showed the quadratic convergence rate for the accuracy of normalized regression parameters when the accuracy \( \gtrsim \min(\pi^\ast(1), \pi^\ast(2))\) 
  for a variant of EM in the noiseless setting, our results remove restrictions on mixing weights and still establish quadratic convergence rate.
\end{remark}

\begin{propositiontxt}[Initialization with Easy EM, Proposition 5.4 in~\cite{luo24cycloid}]\label{prop:initialization_easy_em}
    In the noiseless setting, suppose we run the sample-splitting finite-sample
    Easy EM with $n' \asymp \frac{n}{\log \frac{1}{\delta}}
    $ fresh samples for each iteration, then after at most $T_0 =\mathcal{O} \left( \log
    \frac{1}{\delta} \right)$ iterations, it satisfies $\varphi^{T_0} \gtrsim
        \sqrt{\frac{\log \frac{1}{\delta}}{n}} 
    $ 
    with probability at least $1 - \delta$.
\end{propositiontxt}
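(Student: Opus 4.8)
\emph{Proof plan.} The plan is to follow the sub-optimality angle $\varphi^t$ (equivalently $\sin\varphi^t=|\rho^t|$, or $\tan\varphi^t$) along the sample-split Easy EM iterates and show it leaves the near-orthogonal region $\varphi\approx 0$ within $\mathcal{O}(\log\tfrac1\delta)$ iterations. Set the per-iteration confidence to $\delta_0\asymp\delta/T_0$, so $\log\tfrac1{\delta_0}\asymp\log\tfrac1\delta$, and note that with $n'\asymp n/\log\tfrac1\delta$ samples the per-iteration projected statistical error of Proposition~\ref{prop:projected_error_regression} is of order $\varepsilon_n:=\log\tfrac1\delta/\sqrt n$. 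In the noiseless limit the Easy EM update for $\theta$ does not depend on $\nu$, so it suffices to track $\theta^t$. From Corollary~\ref{cor:em_updates_noiseless}, decomposing $\theta^t/\|\theta^t\|$ in the orthonormal pair $\{\hat e_1,\hat e_2\}$ spanning $\{\theta^t,\theta^\ast\}$ shows that $M(\theta^t,\nu^t)$ has component $\tfrac2\pi\|\theta^\ast\|(\varphi^t+\sin\varphi^t\cos\varphi^t)$ along $\hat e_1$ and $\tfrac2\pi\|\theta^\ast\|\cos^2\varphi^t$ along $\hat e_2$, which recovers the recurrence of Proposition~\ref{prop:recurrence_angle} and the doubling bound $\tan\varphi^{t+1}\ge 2\tan\varphi^t$ of Proposition~\ref{prop:linear_growth_angle}. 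The finite-sample update perturbs the $\hat e_1$-component of $M$ by a \emph{fresh} mean-zero error $\xi_t$; its high-probability magnitude is $\lesssim\varepsilon_n\|\theta^\ast\|$ by Proposition~\ref{prop:projected_error_regression}, but in the near-orthogonal case $\rho^t\approx 0$ it reduces to $\xi_t=\tfrac{1}{n'\|\theta^\ast\|}\sum_i s_i\,\langle x_i,\theta^\ast\rangle\,|\langle x_i,\theta^\ast\rangle|$ with $s_i=\sgn(\langle x_i,\theta^t\rangle)$ independent signs, so its law is sign-symmetric and its standard deviation is only $\Theta(\|\theta^\ast\|/\sqrt{n'})=\Theta(\|\theta^\ast\|\sqrt{\log\tfrac1\delta/n})$ (using $\mathbb{E}[\langle x,\theta^\ast\rangle^4]=3\|\theta^\ast\|^4$).

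The heart of the argument is a one-iteration escape estimate. Let $\beta$ be an absolute-constant multiple of $\sqrt{\log\tfrac1\delta/n}$, the target level. If $\varphi^t<\beta$, the population drift of the $\hat e_1$-component, of order $\varphi^t\|\theta^\ast\|<\beta\|\theta^\ast\|$, is at most a fixed multiple of the standard deviation of $\xi_t$; a Paley--Zygmund anti-concentration bound for $\xi_t$ then forces $|\langle M_n^{\mathrm{easy}}(\theta^t,\nu^t),\hat e_1\rangle|\gtrsim\|\theta^\ast\|\sqrt{\log\tfrac1\delta/n}$ with probability at least an absolute constant $c_0>0$, while $\|M_n^{\mathrm{easy}}(\theta^t,\nu^t)\|\asymp\|\theta^\ast\|$ on a high-probability event (Proposition~\ref{prop:boundedness} for the population part, the projected error bound for the in-plane part, and $n'\gtrsim d$ so the out-of-plane part only rescales $\|\theta^{t+1}\|$ by a bounded factor). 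Hence $\varphi^t<\beta$ implies $\varphi^{t+1}\ge\beta$ with probability $\ge c_0$. Because sample splitting makes the batches -- and therefore these escape events -- independent across iterations, the probability that $\varphi^t<\beta$ for every $t=0,\dots,T_0$ with $T_0=\lceil C\log\tfrac1\delta\rceil$ is at most $(1-c_0)^{T_0}\le\delta/2$; so on a $(1-\delta/2)$-event some index $t^\ast\le T_0$ satisfies $\varphi^{t^\ast}\ge\beta$.

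It remains to carry this to the terminal index $T_0$. On the $(1-\delta/2)$-event that all per-iteration bounds of Proposition~\ref{prop:projected_error_regression} hold, once $\tan\varphi^t$ exceeds the noise scale the population doubling $\tan\varphi^{t+1}\ge 2\tan\varphi^t$ dominates the fresh perturbation, while in the intermediate window $\beta\lesssim\varphi^t\lesssim\varepsilon_n$ the sign-symmetry of $\xi_t$ makes each step at least as likely to raise $\varphi$ as to lower it; combining the doubling with a geometric/union bound over the remaining independent batches shows $\tan\varphi^t$ never returns below $\beta$, so $\varphi^{T_0}\gtrsim\sqrt{\log\tfrac1\delta/n}$. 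Intersecting the two high-probability events and rescaling constants in $T_0$ and in $\beta$ completes the proof.

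I expect this propagation step -- and, at bottom, the gap between the two scales of the per-iteration fluctuation: its standard deviation $\asymp\sqrt{\log\tfrac1\delta/n}$, which is the target level and powers the escape, versus its high-probability bound $\asymp\log\tfrac1\delta/\sqrt n$, larger by a $\sqrt{\log\tfrac1\delta}$ factor -- to be the principal obstacle. Bridging it requires genuine anti-concentration (lower bounds on $|\xi_t|$, not only the usual concentration upper bounds) together with careful use of the sign-symmetry of the Easy EM noise when $\theta^t$ is nearly orthogonal to $\theta^\ast$, so that escapes accumulate across the $\mathcal{O}(\log\tfrac1\delta)$ independent batches rather than being systematically undone; a secondary check is that the out-of-plane component of the finite-sample error only rescales $\|\theta^{t+1}\|$ multiplicatively, which is where the requirement $n'\gtrsim d$ enters.
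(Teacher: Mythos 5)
Your escape mechanism is, in substance, the paper's own argument. The paper gives no self-contained proof of this proposition: its appendix entry defers to pages 41--44 of the conference version~\cite{luo24cycloid} and states that the argument there rests on a Berry--Esseen normal approximation of the per-batch fluctuation of the projection $\langle M^{\tmop{easy}}_{n'}(\theta^t,\nu^t),\theta^\ast\rangle$. That quantitative CLT is used precisely to obtain constant-probability anti-concentration at the scale $1/\sqrt{n'}\asymp\sqrt{\log\frac{1}{\delta}/n}$ when $\rho^t\approx 0$, which is what you obtain instead from Paley--Zygmund plus the sign-symmetry of the Easy EM noise; combined with the independence of the sample-split batches, both routes give that all of the first $T_0=\mathcal{O}(\log\frac{1}{\delta})$ attempts fail with probability at most $(1-c_0)^{T_0}\le\delta$. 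Up to the choice of anti-concentration tool (Berry--Esseen versus a fourth-moment Paley--Zygmund bound, both of which work here), this part of your plan coincides with the paper's.

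The gap is the step you yourself flag as the principal obstacle: carrying the bound to a prescribed terminal iterate. In the window $\beta\lesssim\varphi^t\lesssim\varepsilon_n$ the fresh per-batch fluctuation has standard deviation of order $\beta$, so conditional on $\varphi^{t^\ast}\ge\beta$ there is a \emph{constant} probability that the very next batch drives the angle back below $\beta$; the observation that sign-symmetry makes each step ``at least as likely to raise $\varphi$ as to lower it'' is a one-step median statement, and a geometric/union bound over the remaining batches cannot convert events of constant failure probability into a $1-\delta$ guarantee that $\tan\varphi^t$ ``never returns below $\beta$.'' What the doubling drift does give is weaker: after a successful escape, the chance of ever falling back below $\beta$ before reaching the high-probability regime $\varphi\gtrsim\varepsilon_n$ decays rapidly along the doubling levels and sums to a constant strictly less than one; but then each escape-and-climb attempt succeeds only with constant probability and consumes $\Theta(\log\log\frac{1}{\delta})$ iterations, so naive repetition inflates the iteration count beyond the claimed $T_0$. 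The reading consistent with the statement --- and with how it is consumed as the hypothesis $\varphi^0\gtrsim\sqrt{\log\frac{1}{\delta}/n}$ of Proposition~\ref{prop:convergence_angle} --- is that $T_0$ is the (random) first index at which the angle clears the threshold, which your escape argument already bounds by $\mathcal{O}(\log\frac{1}{\delta})$ with probability $1-\delta$; proving persistence up to a fixed final iterate, across the window between the noise's standard deviation $\sqrt{\log\frac{1}{\delta}/n}$ and its high-probability envelope $\log\frac{1}{\delta}/\sqrt{n}$, is where your write-up has a genuine hole.
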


\begin{remark}
  The above result guarantees that, when running the EM algorithm, arbitrary initialization can still satisfy the condition for angle convergence
  \(\varphi^0 \gtrsim \sqrt{\log \sdfrac{1}{\delta}/n}\) in the next stage,
  after only a few iterations of Easy EM with high probability.
  Our theoretical result avoids both using the spectral method for initiallization and requiring the comparison between the sample size \(n\) and \(d^2\log^2\frac{n}{\delta}\) as in previous works (see page 6526 of \cite{kwon2024global} and page 3518 of \cite{dana2019estimate2mix}).
\end{remark}



\begin{propositiontxt}[Convergence of Sub-optimality Angle]\label{prop:convergence_angle}
  In the noiseless setting, suppose $\varphi^0 \gtrsim
  \sqrt{\frac{\log \frac{1}{\delta}}{n}} 
  $, given a positive number \(\varepsilon \lesssim \frac{\log \frac{1}{\delta}}{n}\), we run Easy finite-sample EM for $T_1=\mathcal{O}\left( \log
  \frac{d}{\log \frac{1}{\delta}}\right)$ iterations followed by the standard finite-sample EM for at most \(T' 
  = \mathcal{O}\left( [\log\frac{n}{d}\wedge \log \frac{n}{\log \frac{1}{\delta}}]\vee \log[\log\frac{n}{\ln \frac{1}{\delta}}/ \log\frac{n}{d}] \vee [\log\frac{1}{\varepsilon}/\log\frac{n}{\log \frac{1}{\delta}}]\right) \) iterations
  with $n \gtrsim
      d \vee \log \frac{1}{\delta} 
  $ samples, then it satisfies
  \begin{equation}
    \phi^T \leq \varepsilon,
  \end{equation}
  with probability at least $1 - T \delta$, where $T:=T_1+T',\varphi^0 \assign
  \frac{\pi}{2} - \arccos \left| \frac{\langle \theta^0, \theta^{\ast}
  \rangle}{\| \theta^0 \| \cdot \| \theta^{\ast} \|} \right|$ and $\varphi^T
  \assign \frac{\pi}{2} - \arccos \left| \frac{\langle \theta^T, \theta^{\ast}
  \rangle}{\| \theta^T \| \cdot \| \theta^{\ast} \|} \right|$.
\end{propositiontxt}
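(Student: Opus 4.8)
The plan is to stitch together the population-level dynamics of the sub-optimality angle (Propositions~\ref{prop:linear_growth_angle} and~\ref{prop:quadratic_convergence_angle}) with the finite-sample statistical accuracy bound for regression parameters (Proposition~\ref{prop:statistical_accuracy_em_updates}), treating the statistical error as a perturbation of the noiseless population recursion. The starting point $\varphi^0 \gtrsim \sqrt{\log\frac{1}{\delta}/n}$ is supplied by running Easy EM (Proposition~\ref{prop:initialization_easy_em}) for $T_0 = \mathcal{O}(\log\frac{1}{\delta})$ iterations; I would fold this into $T_1$. First I would recall from Proposition~\ref{prop:statistical_accuracy_em_updates} that $\phi^{t+1} \lesssim [\phi^t]^2 \vee [\phi^t]^{3/2}\sqrt{d/n} \vee \phi^t\sqrt{\log\frac{1}{\delta}/n}$ (via $\phi^{t+1} \le \pi\sin\frac{\phi^{t+1}}{2} \le \pi\|\theta^{t+1} - \sgn(\rho^{t+1})\theta^\ast\|/\|\theta^\ast\|$). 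The analysis then splits into three phases according to which of the three terms dominates.

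\textbf{Phase 1 (linear growth / escaping near-orthogonality).} When $\varphi^t$ is small (equivalently $\phi^t$ near $\pi$), Proposition~\ref{prop:linear_growth_angle} gives $\tan\varphi^{t+1} \ge 2\tan\varphi^t$ at the population level; I would verify that the finite-sample statistical perturbation (order $\sqrt{\log\frac{1}{\delta}/n}$, which is dominated since $\varphi^0$ is already at least this order) does not destroy at least a constant multiplicative growth, so that after $T_1 = \mathcal{O}(\log\frac{1}{\tan\varphi^0}) = \mathcal{O}(\log\frac{d}{\log\frac{1}{\delta}})$ iterations — using $\tan\varphi^0 \gtrsim \sqrt{\log\frac{1}{\delta}/n}$ and the fact that we only need $\varphi^t$ to reach a constant like $1.4$ — we are in the quadratic-convergence region $\phi^t \le 1.4$. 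Here I use Easy EM so the bound is independent of the mixing weights.

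\textbf{Phase 2 (quadratic contraction).} Once $\phi^t \le 1.4$, Proposition~\ref{prop:quadratic_convergence_angle} gives $\phi^{t+1}/\pi \le (\phi^t/\pi)^2$ at the population level; with the statistical perturbation this becomes $\phi^{t+1} \lesssim [\phi^t]^2 \vee \phi^t\sqrt{\log\frac{1}{\delta}/n} \vee [\phi^t]^{3/2}\sqrt{d/n}$. As long as $\phi^t \gtrsim \sqrt{\log\frac{1}{\delta}/n}$ the quadratic term dominates (the $[\phi^t]^{3/2}\sqrt{d/n}$ term is always subsumed since $\sqrt{d/n} \lesssim 1$ and $\phi^t \le 1.4$, absorbing it into the quadratic term up to constants), so $\phi^t$ halves its logarithm each step, reaching $\phi^t \asymp \log\frac{1}{\delta}/n$ (or any target above $\sqrt{\log\frac{1}{\delta}/n}$) after $\mathcal{O}(\log\log(n/\log\frac{1}{\delta}))$ further iterations.

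\textbf{Phase 3 (linear floor).} For $\phi^t \lesssim \log\frac{1}{\delta}/n$, the term $\phi^t\sqrt{\log\frac{1}{\delta}/n}$ dominates, giving a linear contraction $\phi^{t+1} \lesssim \phi^t\sqrt{\log\frac{1}{\delta}/n}$, so reaching $\phi^T \le \varepsilon$ for the prescribed $\varepsilon \lesssim \log\frac{1}{\delta}/n$ requires $\mathcal{O}(\log\frac{1}{\varepsilon}/\log\frac{n}{\log\frac{1}{\delta}})$ additional steps. Summing the three phases yields $T' = \mathcal{O}([\log\frac{n}{d}\wedge\log\frac{n}{\log\frac{1}{\delta}}] \vee \log[\log\frac{n}{\ln\frac{1}{\delta}}/\log\frac{n}{d}] \vee [\log\frac{1}{\varepsilon}/\log\frac{n}{\log\frac{1}{\delta}}])$ as claimed, and the total is $T = T_1 + T'$. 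A union bound over the $T$ iterations (each using fresh sample-split data, each failing with probability at most $\delta$) gives the overall probability $1 - T\delta$.

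\textbf{Main obstacle.} The delicate point is the interplay at the \emph{boundaries between phases}: I must show that the induction hypothesis $\phi^t \gtrsim \sqrt{\log\frac{1}{\delta}/n}$ (needed for the population terms to dominate the statistical noise in Phases 1--2) is self-sustaining — i.e., that the iterate does not overshoot below this floor prematurely while the quadratic term is still active, and that once it crosses into Phase 3 the linear rate $\sqrt{\log\frac{1}{\delta}/n} < 1$ is genuinely contractive down to $\varepsilon$. Carefully tracking the constants in the $\mathcal{O}(\cdot)$ of Proposition~\ref{prop:statistical_accuracy_em_updates} against the constant $2$ in Proposition~\ref{prop:linear_growth_angle} and the constant $\pi$ in Proposition~\ref{prop:quadratic_convergence_angle}, and choosing the sample-split size $n' \asymp n/\log\frac{1}{\delta}$ so that per-iteration error bounds hold, is the technical heart of the argument; the rest is bookkeeping of the geometric/doubly-exponential sums for the iteration counts.
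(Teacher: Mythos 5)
Your high-level plan (population angle dynamics plus statistical perturbation, phased analysis) is the right spirit, but two of your concrete steps do not work as stated, and they are exactly where the paper's proof has to do real work. First, your Phase 1: you claim Easy EM carries the angle from \(\varphi^0 \gtrsim \sqrt{\log\frac{1}{\delta}/n}\) all the way to the constant region \(\phi \le 1.4\) in \(T_1 = \mathcal{O}(\log\frac{1}{\tan\varphi^0}) = \mathcal{O}(\log\frac{d}{\log\frac{1}{\delta}})\) iterations. But \(\log\frac{1}{\tan\varphi^0} \asymp \frac{1}{2}\log\frac{n}{\log\frac{1}{\delta}}\), not \(\log\frac{d}{\log\frac{1}{\delta}}\); with the stated \(T_1\) you can only have grown to scale \(\sqrt{(d\vee\log\frac{1}{\delta})/n}\). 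The paper's proof is a four-stage argument precisely because of this: Easy EM is used only on the stretch from \(\sqrt{\log\frac{1}{\delta}/n}\) to \(\sqrt{(d\vee\log\frac{1}{\delta})/n}\), and the remaining linear-growth stretch up to \(\varphi \ge 1\) is done with standard EM and charged to \(T'\) — this is the source of the term \(\log\frac{n}{d}\wedge\log\frac{n}{\log\frac{1}{\delta}}\) in \(T'\), which your three phases never generate (you simply assert the final sum matches). Moreover, you never justify why the perturbation relevant to the angle recursion at scales \(\varphi^t \ll \sqrt{d/n}\) is only \(\sqrt{\log\frac{1}{\delta}/n}\) rather than the full statistical error \(\sqrt{(d\vee\log\frac{1}{\delta})/n}\), which would swamp the signal there. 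The paper's mechanism is Proposition~\ref{prop:projected_error_regression}: for Easy EM the error projected onto \(\text{span}\{\theta,\theta^\ast\}\) is dimension-free, and \(\sin\varphi^{t+1}\) is controlled through the numerator \(\langle\theta^{t+1},\hat{e}_1\rangle\) (projected error) while the denominator \(\|\theta^{t+1}\|\) only needs constant-level accuracy. That is also the real reason Easy EM rather than standard EM (whose update involves the inverse sample covariance and so has no dimension-free projected bound) is used first — not "independence of the mixing weights" as you state.

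Second, in your Phases 2--3 the claim that the \([\phi^t]^{3/2}\sqrt{d/n}\) term is "always subsumed" by the quadratic term is false: \([\phi]^{3/2}\sqrt{d/n} \le \phi^2\) only when \(\phi \gtrsim d/n\), so in the regime \(\log\frac{1}{\delta}/n \lesssim \phi \lesssim d/n\) (nonempty whenever \(n \lesssim d^2/\log\frac{1}{\delta}\)) the order-\(3/2\) term dominates and the contraction is super-linear of order \(3/2\), neither quadratic nor yet linear. Handling this regime is exactly what produces the middle term \(\log[\log\frac{n}{\ln\frac{1}{\delta}}/\log\frac{n}{d}]\) in \(T'\) (the paper's Stage 4), for which your accounting has no source. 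In addition, your quadratic phase invokes Proposition~\ref{prop:statistical_accuracy_em_updates} with unspecified constants already at \(\phi^t = 1.4\); the paper instead runs the explicit population recursion \(\phi^{t+1}/\pi \le [\phi^t/\pi]^2\) of Proposition~\ref{prop:quadratic_convergence_angle} with an additive perturbation \(r \asymp \sqrt{(d\vee\log\frac{1}{\delta})/n}\) (via its perturbation lemma) to get from \(1.4\) down to the statistical-error scale, and only then switches to the multiplicative accuracy bound — you flag this constants issue as the "main obstacle" but do not resolve it. A minor further point: folding the initialization \(T_0 = \mathcal{O}(\log\frac{1}{\delta})\) of Proposition~\ref{prop:initialization_easy_em} into \(T_1 = \mathcal{O}(\log\frac{d}{\log\frac{1}{\delta}})\) is not valid in general and is unnecessary, since the proposition already assumes \(\varphi^0 \gtrsim \sqrt{\log\frac{1}{\delta}/n}\).
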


\begin{remark}
  Instead of analyzing the convergence of regression parameters and mixing weights at the finite-sample level directly,
  we first establish the convergence of the sub-optimality angle \(\phi^t\) (see detailed proof in Appendix~\ref{sup:finite_sample_analysis}), then later establish the convergence of EM update rules by employing the convergence of the sub-optimality angle \(\phi^t\) together with the bounds on the errors of regression parameters and mixing weights in the finite-sample setting.
  The analysis for the convergence of the sub-optimality angle \(\phi^t\) proceeds in four stages.

  In the first stage, Easy EM is run for \(T_1=\mathcal{O}(\log\frac{\text{statistical error of }\theta}{\text{projected error of }\theta}) =\mathcal{O}(\log (d/\log \frac{1}{\delta}))\) iterations 
  to ensure that the sub-optimality angle \(\varphi^t\) grows from \(\varphi^0 \gtrsim \sqrt{\log \sdfrac{1}{\delta}/n}\) to \(\varphi^{T_1} \gtrsim \sqrt{(d\vee\log \sdfrac{1}{\delta})/n}\), which matches the statistical error of normalized regression parameters, 
  In this stage, the sub-optimality angle \(\varphi^t\) of Easy EM exhibits a linear growth rate, 
  with an error on the order of \(\sqrt{\log\sdfrac{1}{\delta}/n}\) which matches the bound on the projected error of the normalized regression parameters, 
  as indicated by Proposition~\ref{prop:linear_growth_angle} at the population level and Proposition~\ref{prop:projected_error_regression} at the finite-sample level.
  
  In the second stage, standard EM is run for \(T_2=\mathcal{O} \left(\log(1/\text{statistical error of }\theta) \right) =\mathcal{O} \left(\log (n/d) \wedge \log (n/\log \frac{1}{\delta}) \right)\) iterations,
  continuing the linear growth from \(\varphi^{T_1} \gtrsim \sqrt{(d\vee\log \sdfrac{1}{\delta})/n}\) 
  until \(\varphi^{T_1+T_2} \geq 1\) (ensuring \(\phi^{T_1+T_2} \leq \pi - 2\varphi^{T_1+T_2} < 1.4\)).
  In this stage, the linear growth of the sub-optimality angle \(\varphi^t\) of standard EM is guaranteed 
  with an error on the order of \(\sqrt{(d\vee\log \sdfrac{1}{\delta})/n}\), which matches the statistical error of the normalized regression parameters, 
  by leveraging Proposition~\ref{prop:linear_growth_angle} at the population level and Proposition~\ref{prop:statistical_error_regression} at the finite-sample level.
  
  In the third stage, standard EM is run for \(T_3=\mathcal{O}\left(\log[\log(n/d) \wedge \log(n/\log \frac{1}{\delta})]\right)=\mathcal{O} \left(\log\log(1/\text{statistical error of }\theta) \right)\) iterations,
  exhibiting a quadratic convergence rate of the sub-optimality angle \(\phi^t\) from \(\phi^{T_1+T_2} < 1.4\) to \(\phi^{T_1+T_2+T_3} \lesssim \sqrt{(d\vee\log \sdfrac{1}{\delta})/n}\).
  In this stage, the quadratic convergence of the sub-optimality angle \(\phi^t\) of standard EM is shown with an error on the order of \(\sqrt{(d\vee\log \sdfrac{1}{\delta})/n}\),
  which matches the statistical error of the normalized regression parameters, 
  by applying Proposition~\ref{prop:convergence_angle} at the population level and Proposition~\ref{prop:statistical_error_regression} at the finite-sample level.
  
  In the fourth stage, standard EM is run for \(T_4=\mathcal{O} \left(\log[\log(n/\log \frac{1}{\delta})/ \log(n/d)] \vee [\log(1/\varepsilon)/\log(n/\log \frac{1}{\delta})]\right)\) iterations
  to achieve a final sub-optimality angle \(\phi^{T_1+T_2+T_3+T_4} \leq \varepsilon\) (for a positive number \(\varepsilon\lesssim \log \frac{1}{\delta}/n\)).
  This stage begins with \(\phi^{T_1+T_2+T_3+1} \lesssim \sqrt{(d\vee\log \sdfrac{1}{\delta})/n}\,\phi^{T_1+T_2+T_3}\lesssim (d\vee\log \sdfrac{1}{\delta})/n\), after which the convergence rate becomes conditional.
  The sub-optimality angle \(\phi^t\) exhibits super-linear convergence of order \(3/2\) if \(n\lesssim d^2/\log^2\frac{1}{\delta}\) and \(\phi^t \gtrsim \log \frac{1}{\delta}/d\);
  otherwise, it demonstrates a linear convergence with \(\phi^{t+1} \lesssim \sqrt{\log \sdfrac{1}{\delta}/n}\, \phi^t\),
  by applying Proposition~\ref{prop:statistical_accuracy_em_updates} at the finite-sample level.
  
  Therefore, this four-stage analysis (Easy EM for \(T_1\) iterations, standard EM for \(T'=T_2+T_3+T_4\) iterations) establishes the convergence of the sub-optimality angle \(\phi^t\). 
  Our finer analysis reduces \(T_1\) from \(\mathcal{O}\left( \log (n/\log\frac{1}{\delta})\right)\) to \(\mathcal{O}\left( \log (d/\log \frac{1}{\delta})\right)\). 
  Furthermore, by establishing Proposition~\ref{prop:statistical_accuracy_em_updates} and applying it in a new fourth stage (both absent in our conference version~\cite{luo24cycloid}),
  we show that the final accuracy of sub-optimality angle \(\phi^t\) can be arbitrary small (up to \(\varepsilon\)),
  thereby strengthening Proposition 5.5 in the conference version of our paper~\cite{luo24cycloid}.
\end{remark}

\begin{theoremtxt}[Finite-Sample Level Convergence]\label{theorem:finite_sample_convergence}
  In the noiseless setting, suppose any initial mixing weights \(\pi^0\) and any initial regression parameters \(\theta^0 \in \mathbb{R}^d\) ensuring that \(\varphi^0 \gtrsim \sqrt{\frac{\log
    \frac{1}{\delta}}{n}} \). Given a positive number \(\varepsilon \lesssim \frac{\log \frac{1}{\delta}}{n}\), we
    run finite-sample Easy EM for at most \(T_1=\mathcal{O}\left( \log \frac{d}{\log \frac{1}{\delta}}\right)\) 
    iterations followed by the finite-sample standard EM for at most \(T' =\mathcal{O} \left(
    [\log \frac{n}{d} \wedge \log \frac{n}{\log \frac{1}{\delta}}]\vee \log[\log\frac{n}{\ln \frac{1}{\delta}}/ \log\frac{n}{d}] \vee [\log\frac{1}{\varepsilon}/\log\frac{n}{\log \frac{1}{\delta}}]\right)\)
    iterations with \(n \gtrsim d \vee \log \frac{1}{\delta} \) samples, then
    \begin{equation}
      \begin{aligned}
      \frac{\| \theta^{T + 1} - \mathrm{sgn}(\rho^{T+1}) \theta^{\ast} \|}{\| \theta^{\ast} \|} 
      = \mathcal{O}
      \left( \varepsilon \sqrt{\frac{\log \frac{1}{\delta}}{n}} \right),\quad
      \left\| \pi^{T + 1} - \bar{\pi}^{\ast} \right\|_1 
      =  
      \mathcal{O}\left(  \varepsilon \left\| \frac{1}{2} - \pi^{\ast}\right\|_1 
      \vee \left[\frac{\log\frac{1}{\delta}/n}{\log\left(1+\frac{\log \frac{1}{\delta}/n}{p(\varepsilon, \pi^\ast)}\right)}
        \wedge \sqrt{\frac{\log \frac{1}{\delta}}{n}} \right]\right),
      \end{aligned}
    \end{equation}
    with probability at least \(1 - T\delta\), where \(T:=T_1+T',\varphi^0 \assign
    \frac{\pi}{2} - \arccos \left| \frac{\langle \theta^0, \theta^{\ast}
    \rangle}{\| \theta^0 \| \cdot \| \theta^{\ast} \|} \right|, 
    \rho^{T+1}\assign \frac{\langle \theta^{T+1}, \theta^{\ast}
    \rangle}{\| \theta^{T+1} \| \cdot \| \theta^{\ast} \|},
    \bar{\pi}^{\ast} \assign \frac{\mathds{1}}{2} + \tmop{sgn} (\rho^{T+1}) 
  (\pi^{\ast} - \frac{\mathds{1}}{2})\),
  and \(p(\varepsilon, \pi^\ast) := \varepsilon \left\| \pi^\ast - \frac{\mathds{1}}{2} \right\|_1 + \min(\pi^\ast(1), \pi^\ast(2))\).
\end{theoremtxt}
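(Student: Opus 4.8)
The plan is to reuse the two engines already assembled in Section~\ref{sec:finite}: Proposition~\ref{prop:convergence_angle} drives the sub-optimality angle below the target accuracy, and Proposition~\ref{prop:statistical_accuracy_em_updates} converts that angle bound into the stated errors on $\theta^{T+1}$ and $\pi^{T+1}$, so the theorem is a short deterministic post-processing layered on top of Proposition~\ref{prop:convergence_angle}. \textbf{Step 1 (angle convergence).} The hypothesis $\varphi^0 \gtrsim \sqrt{\log(1/\delta)/n}$ is exactly the state reached from an arbitrary initialization after the $\mathcal{O}(\log(1/\delta))$ Easy-EM warm-up iterations of Proposition~\ref{prop:initialization_easy_em}, so it is harmless to take it as given. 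I would then invoke Proposition~\ref{prop:convergence_angle} verbatim: running Easy EM for $T_1$ iterations and then standard EM for $T'$ iterations (the schedule quoted in the theorem) with $n \gtrsim d \vee \log(1/\delta)$ samples yields $\phi^T \le \varepsilon$ with probability at least $1 - T\delta$, $T = T_1 + T'$. On this event $\phi^T$ is small, so $\theta^{T+1} = M_n(\theta^T,\nu^T)$ lies in a tiny ball around $\sgn(\rho^T)\theta^\ast$ and hence $\sgn(\rho^{T+1}) = \sgn(\rho^T)$; I use this sign identity in both remaining steps.

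\textbf{Step 2 (regression parameters).} Apply Proposition~\ref{prop:statistical_accuracy_em_updates} at iteration $T$ with $\phi = \phi^T \le \varepsilon$, which gives
\[
\frac{\|\theta^{T+1} - \sgn(\rho^{T+1})\theta^{\ast}\|}{\|\theta^{\ast}\|}
= \mathcal{O}\!\left(\varepsilon^2 \vee \varepsilon^{3/2}\sqrt{d/n} \vee \varepsilon\sqrt{\log(1/\delta)/n}\right).
\]
It then remains to collapse the maximum: since $\varepsilon \lesssim \log(1/\delta)/n \le 1$ we have $\log(1/\delta)/n \le \sqrt{\log(1/\delta)/n}$, hence $\varepsilon^2 \le \varepsilon\sqrt{\log(1/\delta)/n}$; and since $d \lesssim n$ and $\sqrt{\varepsilon} \le \sqrt{\log(1/\delta)/n}$ we get $\varepsilon^{3/2}\sqrt{d/n} \le \varepsilon\sqrt{\log(1/\delta)/n}$. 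Thus the whole maximum is $\mathcal{O}(\varepsilon\sqrt{\log(1/\delta)/n})$, the claimed bound.

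\textbf{Step 3 (mixing weights).} First observe that for two probability vectors on $\{1,2\}$ the $\ell_1$ distance equals the difference of imbalances, so, using $\tanh\nu^{T+1} = N_n(\theta^T,\nu^T)$, $\bar{\pi}^{\ast}(1) - \bar{\pi}^{\ast}(2) = \sgn(\rho^{T+1})\tanh\nu^{\ast}$, and the sign identity of Step 1,
\[
\|\pi^{T+1} - \bar{\pi}^{\ast}\|_1 = \bigl| N_n(\theta^T,\nu^T) - \sgn(\rho^T)\tanh\nu^{\ast} \bigr|.
\]
Now apply the second bound of Proposition~\ref{prop:statistical_accuracy_em_updates} with $\phi = \phi^T \le \varepsilon$ and $p = \tfrac{\phi^T}{2\pi}\|\pi^{\ast} - \tfrac{\mathds{1}}{2}\|_1 + \min(\pi^{\ast}(1),\pi^{\ast}(2))$: the first term is $\le \varepsilon\|\tfrac{1}{2} - \pi^{\ast}\|_1$; for the second term note $p \le p(\varepsilon,\pi^{\ast})$ (because $\phi^T \le \varepsilon$ and $\tfrac{1}{2\pi} < 1$) and that $q \mapsto a/\log(1 + a/q)$ is increasing for fixed $a > 0$, so $\tfrac{\log(1/\delta)/n}{\log(1 + \log(1/\delta)/(np))}$ is at most its value with $p$ replaced by $p(\varepsilon,\pi^{\ast})$. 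This yields exactly the stated mixing-weight bound, and a union bound over Step~1 and the final-iterate concentration event keeps the overall probability at $1 - T\delta$ (absorbing the $\mathcal{O}(1)$ extra failure events into $T$).

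\textbf{Main obstacle.} The heavy lifting is entirely in Step~1, i.e.\ Proposition~\ref{prop:convergence_angle}: its four-stage schedule (linear growth of $\varphi^t$ under Easy EM, then under standard EM; quadratic convergence of $\phi^t$; and the conditional order-$3/2$ / linear terminal phase) must be coupled stage by stage with the projected-error, statistical-error and statistical-accuracy bounds of Propositions~\ref{prop:projected_error_regression},~\ref{prop:statistical_error_regression} and~\ref{prop:statistical_accuracy_em_updates}, matching the statistical noise floor at each stage to the current magnitude of the angle while controlling the accumulating failure probability. Given that proposition, Steps~2--3 are routine: the only care needed is the elementary monotonicity of $q \mapsto a/\log(1 + a/q)$ and the simplification of the three-way maximum under $\varepsilon \lesssim \log(1/\delta)/n$ and $d \lesssim n$.
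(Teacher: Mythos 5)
Your proposal follows essentially the same route as the paper's own proof: invoke Proposition~\ref{prop:convergence_angle} to get $\phi^T \le \varepsilon$, then feed $\phi = \phi^T$ into Proposition~\ref{prop:statistical_accuracy_em_updates} and simplify, using $p^T \le p(\varepsilon,\pi^\ast)$ (with the monotonicity of $q \mapsto a/\log(1+a/q)$) for the mixing-weight bound. Your explicit collapse of the three-way maximum under $\varepsilon \lesssim \log\frac{1}{\delta}/n$, $d \lesssim n$, and your justification of $\sgn(\rho^{T+1}) = \sgn(\rho^T)$ are details the paper states only implicitly, so the argument is correct and matches the paper's proof.
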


\begin{remark}
  Based on Proposition~\ref{prop:convergence_angle}, which establishes the convergence of sub-optimality angle \(\phi^t\) such that \(\phi^T \leq \varepsilon \lesssim \log \frac{1}{\delta}/n\),
  and Proposition~\ref{prop:statistical_accuracy_em_updates}, which provides the statistical accuracy of the EM updates for regression parameters and mixing weights, 
  we obtain the following bound on the accuracy of normalized regression parameters: \(\|\theta^{T+1} - \sgn(\rho^{T+1}) \theta^\ast\|/\|\theta^\ast\| \lesssim \varepsilon^2 + \varepsilon^{3/2} \sqrt{d/n} + \varepsilon\sqrt{\log \sdfrac{1}{\delta}/n} \asymp \varepsilon\sqrt{\log \sdfrac{1}{\delta}/n}\).
  Consequently, the finite-sample EM update \(M_n(\theta, \nu)\) for regression parameters achieves exact recovery as \(\varepsilon \to 0\),
  and acts as a variant of Ordinary Least Squares (OLS) (see Chapter 3 of \cite{hastie2009elements}) which also guarantees exact recovery in the noiseless setting.
  Similarly, from Proposition~\ref{prop:statistical_accuracy_em_updates} for the statistical accuracy of EM updates, 
  noting that \(\frac{\phi^T}{2\pi} \|\pi^\ast - \frac{\mathds{1}}{2} \|_1 + \min(\pi^\ast(1), \pi^\ast(2)) \lesssim \varepsilon \left\| \pi^\ast - \frac{\mathds{1}}{2} \right\|_1 + \min(\pi^\ast(1), \pi^\ast(2)) \equiv p(\varepsilon, \pi^\ast)\),
  the bound on the accuracy of mixing weights is obtained as shown above.
  
  When the ground truth mixing weights \(\pi^\ast\) are close to \((1, 0)\) or \((0, 1)\),
  we have \(p(\varepsilon, \pi^\ast) \to 0\) as \(\varepsilon \to 0\), implying that the final accuracy of mixing weights \(\left\| \pi^{T+1} - \bar{\pi}^{\ast} \right\|_1 \to 0\) as \(\varepsilon \to 0\).
  This means that, after running EM for a sufficient number of iterations, the estimation error of EM estimates for mixing weights and regression parameters can be arbitrary small (up to \(\varepsilon\)) in the noiseless setting, when the ground truth mixing weights \(\pi^\ast\) are either \((1, 0)\) or \((0, 1)\).
  In this case, 2MLR degenerates to a linear regression model, and EM update rules also guarantee exact recovery of mixing weights in the noiseless setting.
  
  When the ground truth mixing weights \(\pi^\ast=(\frac{1}{2}, \frac{1}{2})\) are balanced, 
  we have \(p(\varepsilon, \pi^\ast) \to \varepsilon \left\| \pi^\ast - \frac{\mathds{1}}{2} \right\|_1 + \min(\pi^\ast(1), \pi^\ast(2)) \asymp \varepsilon\),
  hence the final accuracy of mixing weights is on the order of \(\mathcal{O}(\sqrt{\log \sdfrac{1}{\delta}/n})\),
  which is independent of the dimension \(d\) of the regression parameters.
  This result strengthens Theorem 5.1 in the conference version of our paper~\cite{luo24cycloid}, 
  as it demonstrates the exact recovery of regression parameters and characterizes the dependence of the final accuracy of mixing weights on the ground truth mixing weights \(\pi^\ast\)
  through the newly established bound on the statistical error of mixing weights in Proposition~\ref{prop:statistical_error_mixing_weights}.
\end{remark}


\section{Experiments}\label{sec:experiments}
\begin{figure*}[!t]
  \centering
  \subfloat[]{\includegraphics[width=0.31\textwidth]{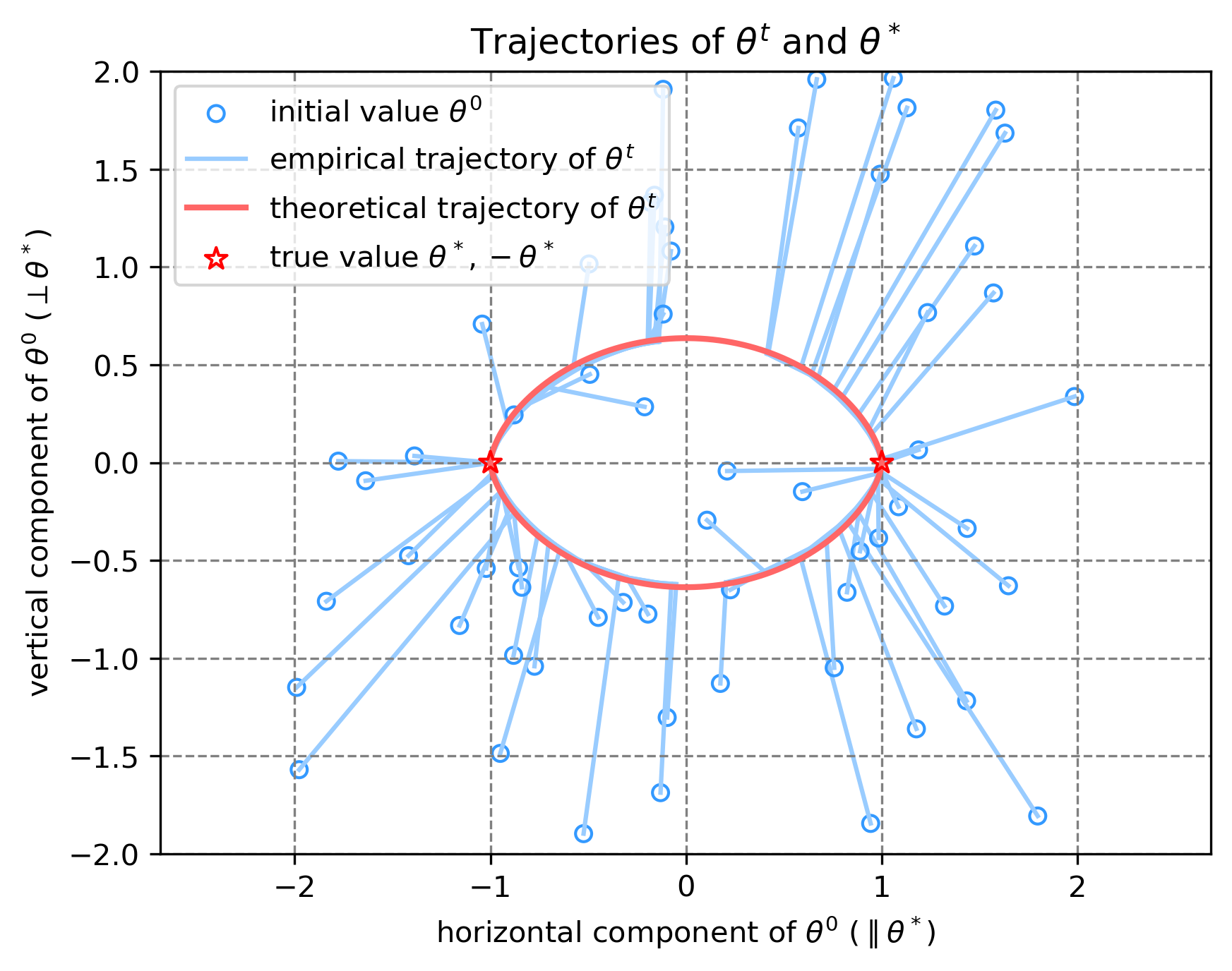}%
  \label{fig:traj_d2}}
  \hfil
  \subfloat[]{\includegraphics[width=0.23\textwidth]{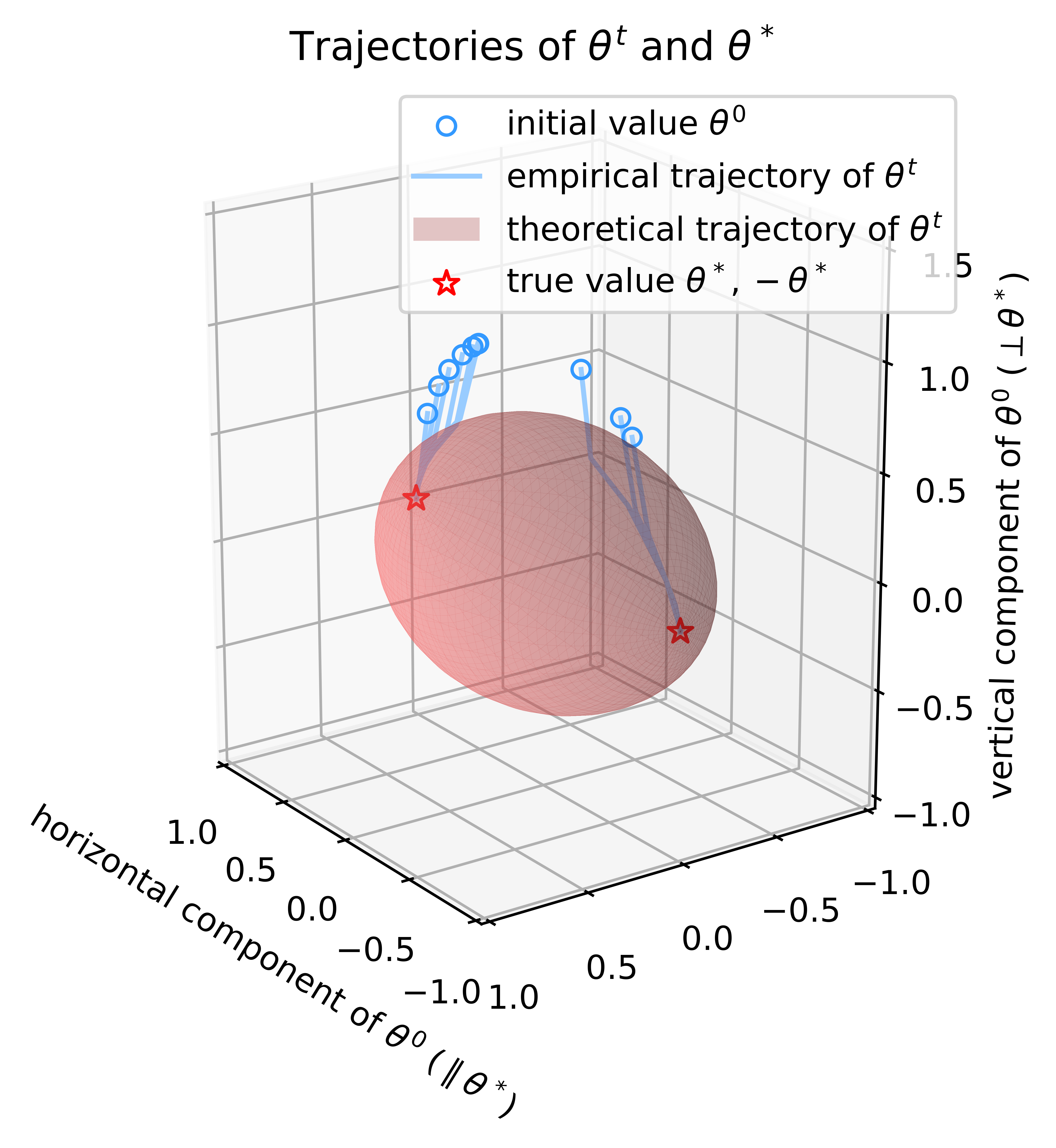}%
  \label{fig:traj_d3}}
  \hfil
  \subfloat[]{\includegraphics[width=0.31\textwidth]{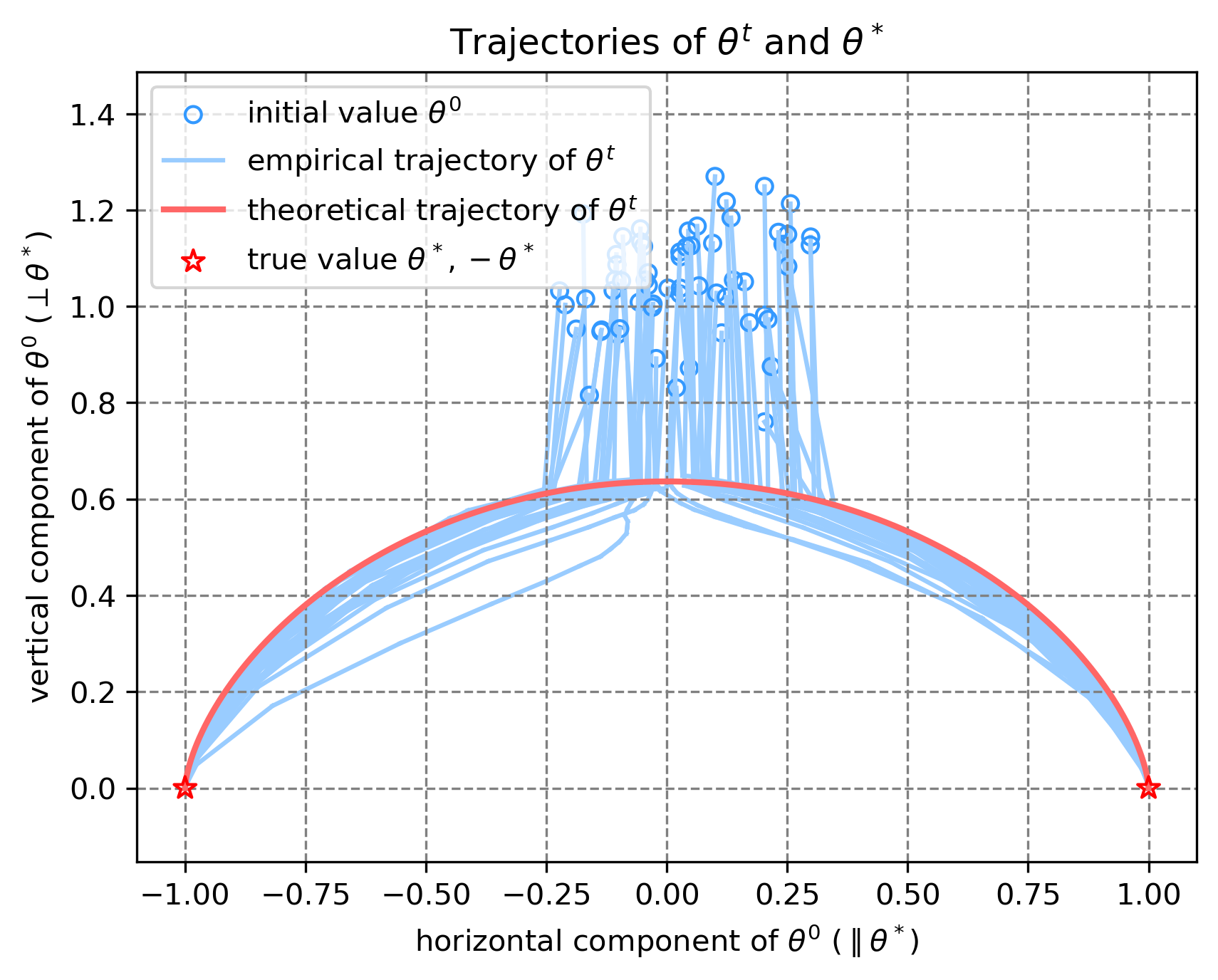}%
  \label{fig:traj_dhigh}}
\caption{Cycloid trajectories of EM iterations for regression parameters $\theta^t$: 
we run 100 iterations of Finite-sample EM at SNR=$10^8$ for varying dimensions ($d=2,3,50$).\\
  (a) $d=2$, trajectories of $\theta^t$ across 60 trials with $\theta^\ast=(1, 0)$, $\pi^\ast=(0.7, 0.3)$; inital values $\theta^0$ and $\pi^0$ are uniformly sampled from $[-2, 2]^2$ and $[0, 1]$, respectively.\\
  (b) $d=3$, trajectories of $\theta^t$ across 10 trials, where $\theta^\ast, \theta^0$ are sampled from three-dimensional unit sphere, and $\pi^\ast, \pi^0$ are drawn uniformly from $[0, 1]$.\\
  (c) $d=50$, trajectories of $\theta^t$ across 60 trials, with $\theta^\ast, \theta^0$ sampled from $\mathcal{N}(0, I_d)$, and $\pi^\ast, \pi^0$ uniformly drawn from $[0, 1]$.}
\label{fig:traj}
\end{figure*}

\begin{figure*}[!htbp]
  \centering
  \subfloat[]{\includegraphics[width=0.31\textwidth]{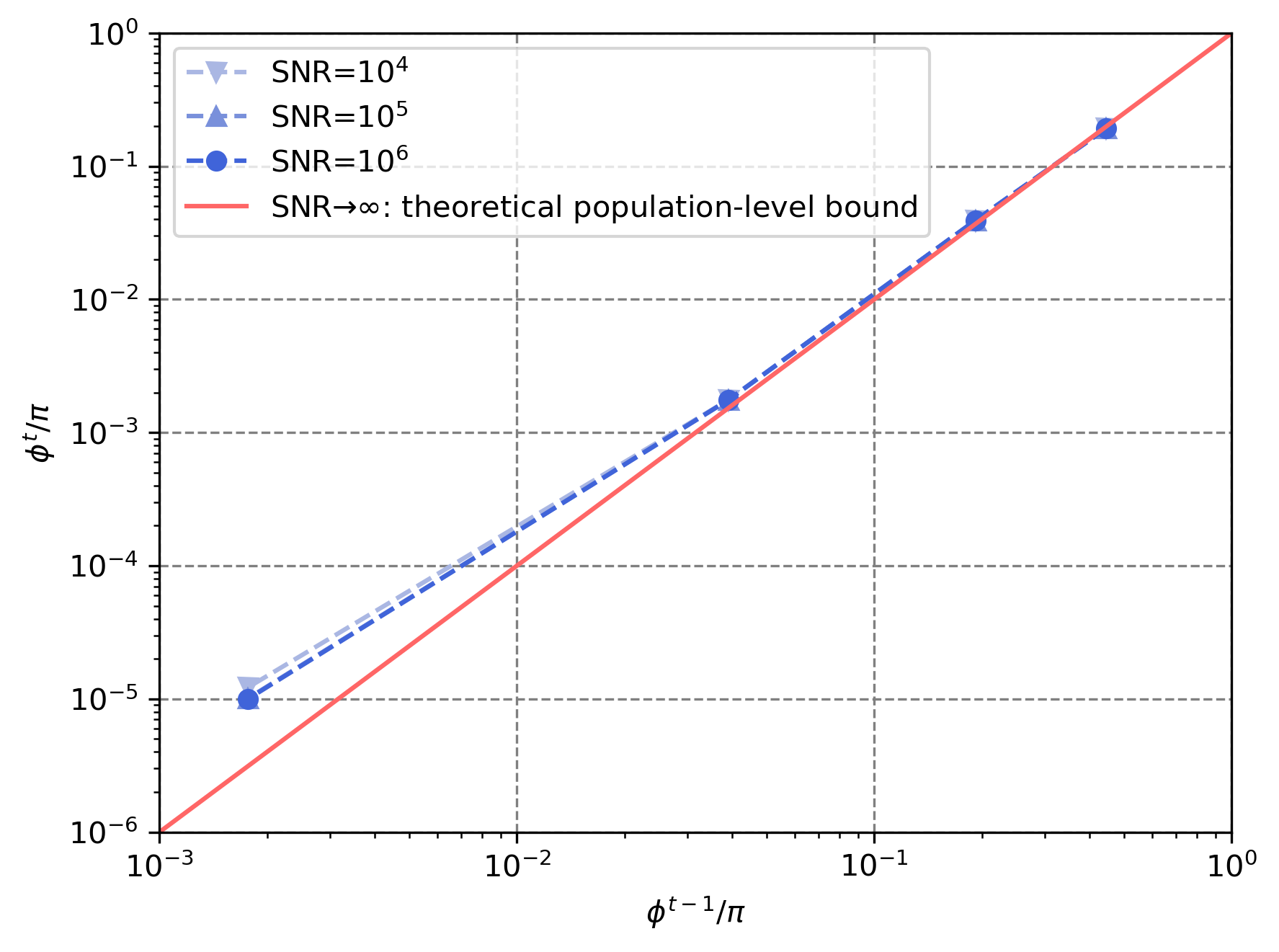}%
  \label{fig:superlinear}}
  \hfil
  \subfloat[]{\includegraphics[width=0.31\textwidth]{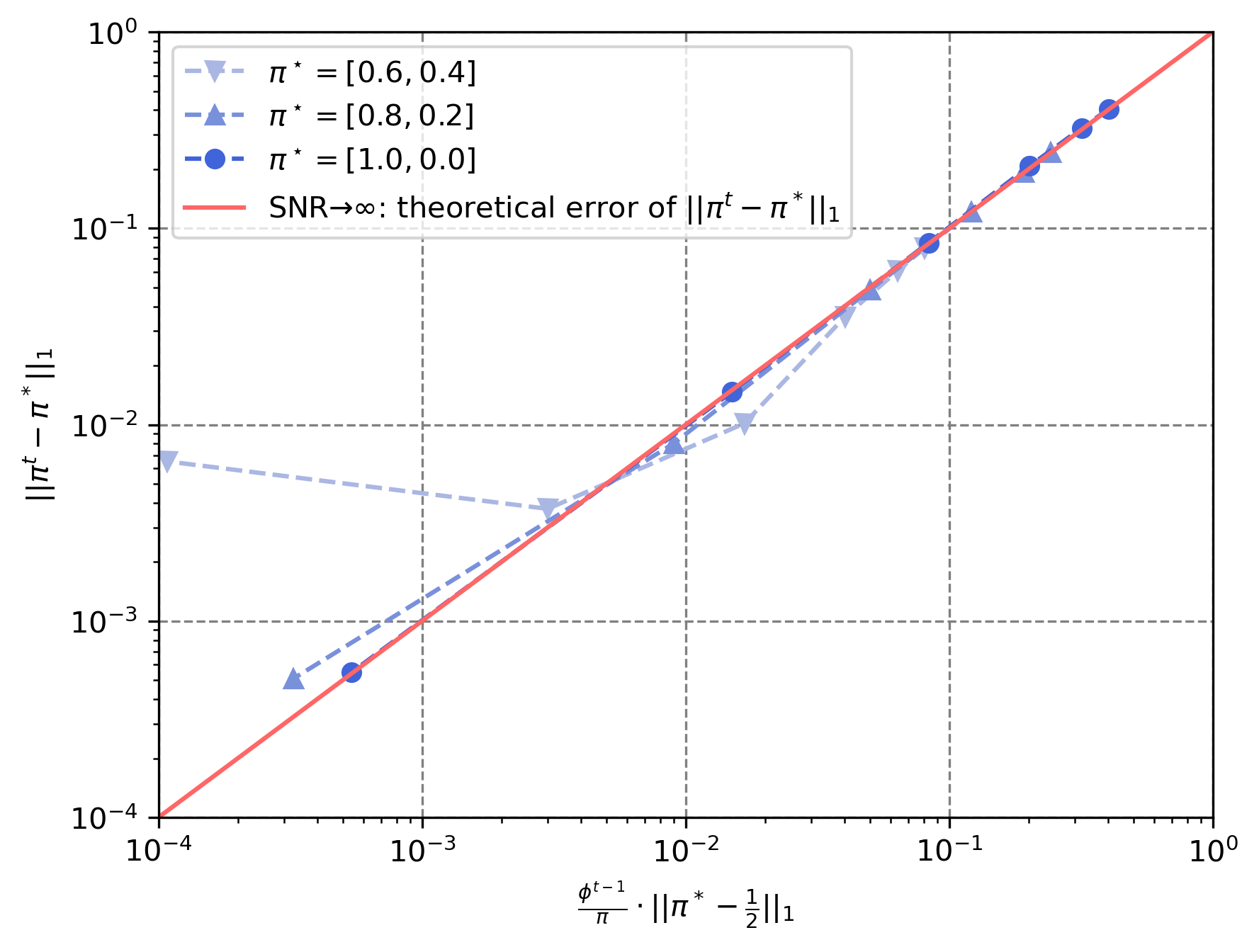}%
  \label{fig:mixing}}
  \hfil
  \subfloat[]{\includegraphics[width=0.31\textwidth]{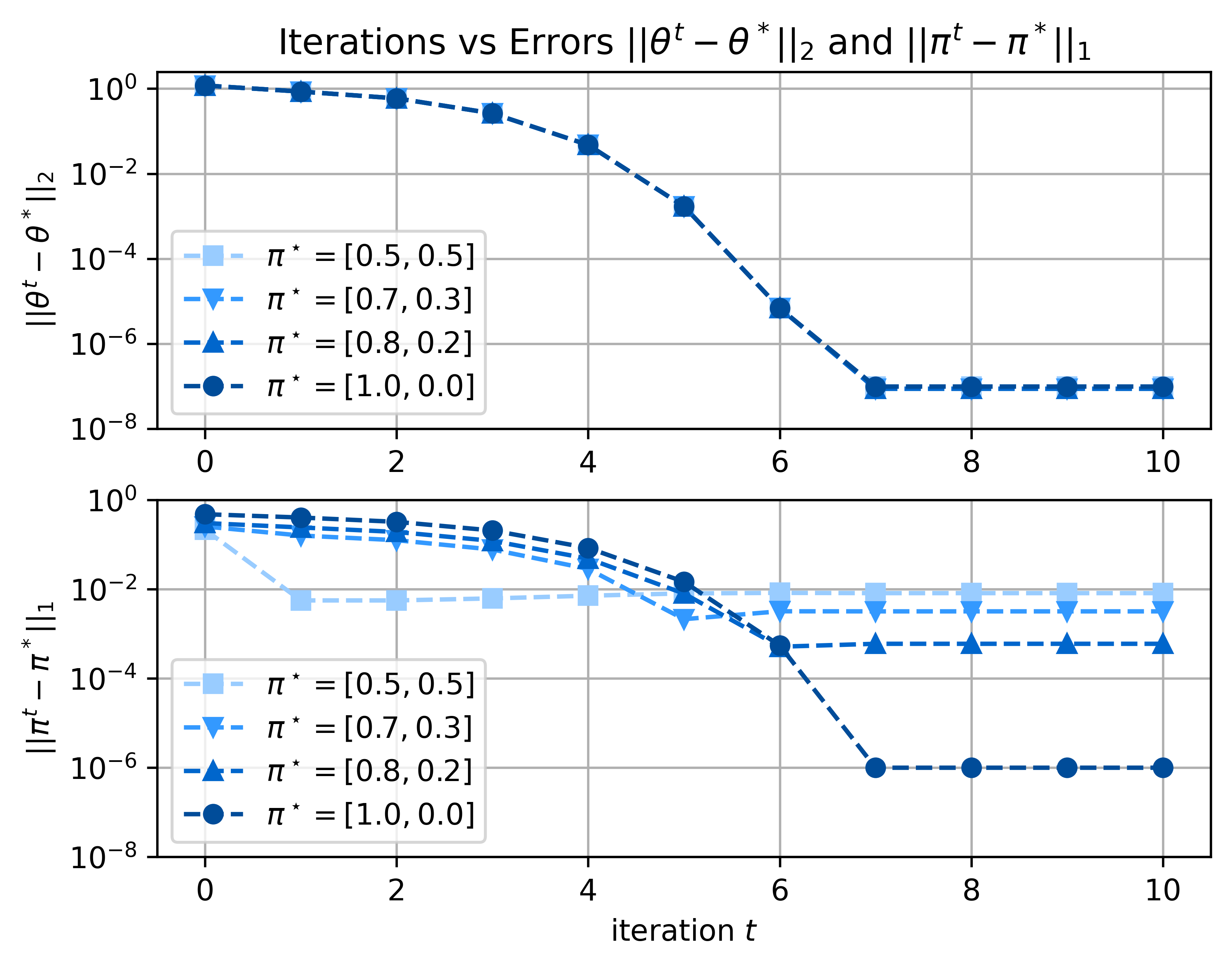}%
  \label{fig:dist}}
\caption{
  Left and middle panels illustrate the quadratic convergence of the sub-optimality angle $\phi^t$ and its correlation with the mixing-weight error. Both use $\theta^\ast$ and $\theta^0$ sampled from the $d=50$ unit sphere, with $\phi^0 = 1.4$ (equivalently, $\varphi^0 = (\pi - 1.4)/2$) in (a) and $\varphi^0 = 0.3$ in (b).
  The right panel (c) shows the accuracy of the EM estimates for the regression parameters and mixing weights over ten EM iterations with $d=50$, $\varphi^0 = 0.3$ at $\mathrm{SNR} = 10^6$, and varying true mixing weights $\pi^\ast =(0.5, 0.5), (0.6, 0.4), (0.8, 0.2)$,and $(1 - 10^{-6}, 10^{-6})$.\\
  (a) Quadratic convergence of the sub-optimality angle $\phi^t$ with all EM iterations starting with $\phi^0 = 1.4$ and $\pi^\ast(1), \pi^0(1)$ drawn uniformly from $[0, 1]$.\\
  (b) Correlation between the mixing-weight error $|\pi^t - \bar{\pi}^\ast|_1$ and the preceding sub-optimality angle $\phi^{t-1} = \arccos\left|\langle \theta^{t-1}, \theta^\ast \rangle/(\|\theta^{t-1}\|\|\theta^\ast\|)\right|$.\\
  (c) Accuracy of the EM estimates for the regression parameters and mixing weights versus iteration for different ground-truth mixing weights.
  }
\label{fig:convg_mix}
\end{figure*}

In this section of numerical experiments, we confirm the theoretical results established in the previous sections.
A total of 5,000 independently and identically distributed (i.i.d.) 
$d$-dimensional covariates, denoted by $\{x_i\}_{i=1}^n$, are sampled from a normal distribution $\mathcal{N}(0, I_d)$. 
The ground truth parameters $\theta^\ast$ are selected at random from a $d$-dimensional unit sphere.
Then, the ground truth mixture weights $\pi^\ast$ for these two components are specified manually or randomly.
We use $\pi^\ast$ to generate latent variable samples $\{z_i\}_{i=1}^n$ from a categorical distribution $\mathcal{CAT}(\pi^\ast)$. 
Gaussian noise \(\varepsilon_i\) is then added to the linear regression \((-1)^{z_i+1} \langle \theta^\ast, x_i \rangle\) indicated by these latent variables,
resulting in output response samples $\{y_i\}_{i=1}^n$.
In all experiments, the full dataset is used for EM updates at each iteration. 
Every point in the plots of Fig.~\ref{fig:convg_mix} represents an average over 50 runs with different initialization values for EM updates.
The code for empirical experiments is available at \url{https://github.com/dassein/cycloid_em_tit}.

\noindent\textbf{Cycloid Trajectory of Regression Parameters. } 
At the population level, we demonstrate that the output of the $t$-th iteration lies on the cycloid in the space $\text{span}\{\theta^{t-1}, \theta^\ast\}$ in the noiseless setting. 
For the corresponding experiments, we set the signal-to-noise ratio (SNR) to $10^8$ and examine different dimensions $d$ (2, 3, and 50). 
As shown in Fig.~\ref{fig:traj}, all iterations remain close to the theoretical cycloid. 
Therefore, the empirical results confirm our theoretical findings in Proposition~\ref{prop:parametric_cycloid}.

\noindent\textbf{Super-linear Convergence of Sub-Optimality Angle. }
We demonstrate the quadratic convergence of $\phi^t/\pi$ in Fig.~\ref{fig:superlinear} in high SNR regime, when $\phi^t$ is large enough.
The experiments are conducted in the dimension ($d$=50), considering different high SNR values ($10^4, 10^5, 10^6$). 
The initial values of the first component of mixing weights and the regression parameters are drawn uniformly from an interval $[0, 1]$ and a unit sphere, respectively. 
Each point corresponding to one of 4 EM iterations in Fig.~\ref{fig:superlinear} represents an average over 50 independent runs with different initializations.
The slope of the plot shows the convergence rate exponent, and the slopes observed at various SNR levels consistently remain close to 2 when $\phi^t$ is large enough.
That observation is consistent with our theoretical result establishing a quadratic convergence rate in Proposition~\ref{prop:quadratic_convergence_angle},
and aligns with our analysis in the remark of Proposition~\ref{prop:statistical_accuracy_em_updates}.

\noindent\textbf{Accuracy of EM Estimate for Mixing Weights and Sub-optimality Angle. }
In the noiseless setting, we show that the accuracy of EM estimate for mixing weights $\|\pi^t-\bar{\pi}^*\|_1$ is proportional to the sub-optimality angle 
$\phi^{t-1}$ at population level in Proposition~\ref{prop:errors_em_updates_angle}.
We illustrate the linear correlation between the accuracy of EM estimate for mixing weights and the sub-optimality angle in Fig.~\ref{fig:mixing}.
For the experimental setup, we set the dimension at $d=50$, and examine different choices of ground truth mixing weights $\pi^\ast=(0.6, 0.4),(0.8, 0.2)$, and $(1, 0)$. 
It is observed that the accuracy of EM estimate for mixing weights at $t$-th iteration is exactly characterized by the sub-optimality angle $\phi^{t-1}$ from the preceding iteration.
Therefore, our empirical results confirm Proposition~\ref{prop:errors_em_updates_angle}, which is consistent with our analysis in the remark of Proposition~\ref{prop:statistical_accuracy_em_updates}.

\noindent\textbf{Finite-Sample Level Convergence with Different Mixing Weights. }
In the noiseless setting, Corollary~\ref{cor:em_updates_noiseless} and Proposition~\ref{prop:recurrence_angle} establish that the EM update for regression parameters $\theta^t$ is unaffected by the ground truth mixing weights $\pi^\ast$.
The first subplot of Fig.~\ref{fig:dist} shows that, at a high SNR ($10^6$), the accuracy of normalized regression parameters(measured in $\ell_2$ norm) remains nearly unchanged across different choices of ground truth mixing weights $\pi^\ast=(0.5, 0.5), (0.7, 0.3), (0.8, 0.2)$, and $(1-10^{-6}, 10^{-6})$, thereby confirming our theoretical analysis 
in Proposition~\ref{prop:errors_em_updates_angle}.

Theorem~\ref{theorem:finite_sample_convergence} further proves that the final accuracy of EM estimate for mixing weights (in $\ell_1$ norm) depends on both the accuracy of normalized regression parametersand the ground truth mixing weights.
In particular, when the accuracy of normalized regression parametersis small, the closer the ground truth mixing weights are to $(1, 0)$ or $(0, 1)$, the smaller the final accuracy of EM estimate for mixing weights.
To verify this result, we investigate statistical accuracy of EM estimate for regression parameters $\theta$ (measured in $\ell_2$ norm) and mixing weights $\pi$ (measured in $\ell_1$ norm) under different ground truth mixing weights $\pi^\ast =(0.5, 0.5), (0.7, 0.3), (0.8, 0.2)$, and $(1-10^{-6}, 10^{-6})$.
The second subplot of Fig.~\ref{fig:dist} depicts the dependence of the statistical accuracy of EM estimate for mixing weights on $\pi^\ast$, providing further support for Theorem~\ref{theorem:finite_sample_convergence}.
In our experiments, $\theta^\ast$ and $\theta^0$ are drawn from the $50$-dimensional unit sphere with initial sub-optimality angle $\varphi^0=0.3$.

\section{Conclusion}\label{sec:conclusion}
We derive explicit expressions for the EM updates in the two-component Mixed Linear Regression (2MLR) model
with unknown mixing weights and regression parameters across all SNR regimes.
We then characterize the properties of EM updates based on the explicit expressions, establishing their structural behavior and boundedness, 
and showing that in the noiseless setting, they follow a cycloid trajectory derived via a recurrence relation for the sub-optimality angle.
In finite high-SNR regimes, we further bound the deviation of the EM updates from this cycloid trajectory.
At the population level, the trajectory-based analysis reveals the order of convergence: linear convergence when the EM estimate is nearly orthogonal to the ground truth regression parameters, 
and quadratic convergence when the angle between the estimate and the ground truth is small.
Furthermore, our work provides a novel trajectory-based framework that establishes non-asymptotic guarantees by tightening bounds of the statistical errors between the finite-sample and population EM updates, 
revealing the connection between EM's statistical accuracy and the sub-optimality angle, 
and establishing convergence guarantees with arbitrary initialization at the finite-sample level.



\newpage
{\setlength{\parindent}{0pt}
\appendices{} 
\tableofcontents

\newpage

\section{Explicit EM Update Expressions and Properties of EM Update Rules}\label{sup:em_update_rules}
\subsection{Connection between EM Update Rules and Gradient Descent of Negative Log-Likelihood}
\begin{theorem}[Proposition~\ref{prop:nll}: Population and Finite-Sample Negative Log-Likelihood]
    Let \(f(\theta, \pi):=-\E_{s\sim p(s\mid \theta^\ast, \pi^\ast)}[\ln p(s\mid \theta, \pi)]\) be the negative log-likelihood function at the population level,
    and \(f_n(\theta, \pi):=-\frac{1}{n}\sum_{i=1}^n \ln p(s_i\mid \theta, \pi)\) be the negative log-likelihood function at the finite-sample level for the dataset \(\mathcal{S}=\{s_i\}_{i=1}^n=\{(x_i, y_i)\}_{i=1}^n\) of \(n\) i.i.d. samples.
    Then, \(f(\theta, \pi)\) and \(f_n(\theta, \pi)\) can be expressed as:
    \begin{eqnarray*}
        f(\theta, \pi) & = & \frac{1}{2\sigma^2}\langle \theta, \E[x x^\top] \cdot \theta \rangle -\E\left[\ln \frac{\cosh\left(y\langle x, \theta \rangle/\sigma^2+\nu\right)}{\cosh \nu}\right]+\mathtt{C}\\
        f_n(\theta, \pi) & = & \frac{1}{2\sigma^2}\left\langle \theta, \frac{1}{n}\sum_{i=1}^n x_i x_i^\top \cdot \theta \right\rangle -\frac{1}{n}\sum_{i=1}^n \ln \frac{\cosh\left(y_i\langle x_i, \theta \rangle/\sigma^2+\nu\right)}{\cosh \nu}+\mathtt{C}_n
    \end{eqnarray*}
    where \(\E[\cdot]=\E_{s\sim p(s\mid \theta^\ast, \pi^\ast)}[\cdot]\) is the expectation over the ground truth distribution \(p(s\mid \theta^\ast, \pi^\ast)\), and \(\mathtt{C}\) and \(\mathtt{C}_n\) are constants that are independent of \(\theta\) and \(\pi\).
\end{theorem}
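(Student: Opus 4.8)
The plan is to compute a closed form for the single-observation marginal density $p(s\mid\theta,\pi)$, take $-\ln$ of it, and then apply the population expectation $\E_{s\sim p(s\mid\theta^\ast,\pi^\ast)}[\cdot]$ (to get $f$) and the empirical average $\tfrac1n\sum_{i=1}^n[\cdot]$ (to get $f_n$). Writing $s=(x,y)$, using that the covariate density $p(x)$ is not parametrized, that $\P[z=j]=\pi(j)$, and that $y\mid(x,z)\sim\mathcal{N}((-1)^{z+1}\langle x,\theta\rangle,\sigma^2)$, marginalization over the latent label $z$ gives
\[
p(s\mid\theta,\pi)=p(x)\,\frac{1}{\sqrt{2\pi\sigma^2}}\Big(\pi(1)\,\mathe^{-\frac{(y-\langle x,\theta\rangle)^2}{2\sigma^2}}+\pi(2)\,\mathe^{-\frac{(y+\langle x,\theta\rangle)^2}{2\sigma^2}}\Big).
\]

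The key step is a hyperbolic-cosine collapse of the two-term sum. Factoring out the common weight $\tfrac{1}{\sqrt{2\pi\sigma^2}}\exp\!\big(-\tfrac{y^2+\langle x,\theta\rangle^2}{2\sigma^2}\big)$ turns the parenthesis into $\pi(1)\mathe^{\,y\langle x,\theta\rangle/\sigma^2}+\pi(2)\mathe^{-y\langle x,\theta\rangle/\sigma^2}$. The reparametrization $\tanh\nu=\pi(1)-\pi(2)$ together with $\pi(1)+\pi(2)=1$ forces $\pi(1)=\tfrac{\mathe^{\nu}}{2\cosh\nu}$ and $\pi(2)=\tfrac{\mathe^{-\nu}}{2\cosh\nu}$ (equivalently $\sqrt{\pi(1)\pi(2)}=\tfrac{1}{2\cosh\nu}$), so the sum equals $\tfrac{1}{\cosh\nu}\cosh\!\big(\tfrac{y\langle x,\theta\rangle}{\sigma^2}+\nu\big)$. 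Taking $-\ln$ and writing $\langle x,\theta\rangle^2=\langle\theta,x x^\top\theta\rangle$, we obtain, for every $s$,
\[
-\ln p(s\mid\theta,\pi)=\frac{\langle\theta,x x^\top\theta\rangle}{2\sigma^2}-\ln\frac{\cosh\!\big(y\langle x,\theta\rangle/\sigma^2+\nu\big)}{\cosh\nu}+\Big(\frac{y^2}{2\sigma^2}+\frac12\ln(2\pi\sigma^2)-\ln p(x)\Big),
\]
and the last parenthesized group is free of $\theta$ and $\pi$.

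It then remains to apply $\E_{s\sim p(s\mid\theta^\ast,\pi^\ast)}[\cdot]$ (respectively $\tfrac1n\sum_{i=1}^n[\cdot]$ over $\mathcal{S}$) to both sides and distribute the linear operator over the three groups: the quadratic term becomes $\tfrac{1}{2\sigma^2}\langle\theta,\E[x x^\top]\theta\rangle$ (respectively $\tfrac{1}{2\sigma^2}\langle\theta,\tfrac1n\sum_{i=1}^n x_i x_i^\top\theta\rangle$), the log-cosh term is kept verbatim, and the $(\theta,\pi)$-independent remainder $\tfrac{y^2}{2\sigma^2}+\tfrac12\ln(2\pi\sigma^2)-\ln p(x)$ averages to the constant $\mathtt{C}$ (respectively $\mathtt{C}_n$) in the statement. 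A one-line integrability remark, using $|\ln\cosh t|\le|t|$ so that $\E|\ln\cosh(y\langle x,\theta\rangle/\sigma^2+\nu)|\le\sigma^{-2}\E|y\langle x,\theta\rangle|+|\nu|<\infty$ (a product of sub-Gaussians), together with $\E[y^2]<\infty$ and $\E[\ln p(x)]$ finite, justifies splitting the expectation.

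The only non-routine ingredient is the hyperbolic-cosine collapse in the second paragraph; it is the symmetric two-component specialization of the log-sum-exp identity and is precisely what makes the negative log-likelihood tractable, while everything else is bookkeeping of constants. One may optionally record that the resulting $f$ equals $\mathrm{KL}\!\big(p(\cdot\mid\theta^\ast,\pi^\ast)\,\|\,p(\cdot\mid\theta,\pi)\big)$ plus the (constant) differential entropy of $p(\cdot\mid\theta^\ast,\pi^\ast)$, which re-derives the constancy of $\mathtt{C}$, but this is not needed for the stated identity.
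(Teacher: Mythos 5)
Your proposal is correct and matches the paper's own derivation in substance: the paper (Appendix on EM derivations) also marginalizes over the latent label, writes the two-component Gaussian mixture as a log-sum-exp, and collapses it via $\pi(1)\mathe^{t}+\pi(2)\mathe^{-t}=\cosh(t+\nu)/\cosh\nu$ using $\pi(1)=\tfrac{\mathe^{\nu}}{2\cosh\nu}$, $\pi(2)=\tfrac{\mathe^{-\nu}}{2\cosh\nu}$, before absorbing the $\theta,\pi$-independent terms into $\mathtt{C}$ and $\mathtt{C}_n$. The only difference is presentational: the paper embeds this computation in its general MLR/surrogate-function framework (needed later for the EM updates), while you compute the single-observation density directly, which is perfectly adequate for this statement.
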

\begin{proof}
By the detailed derivation of negative log-likelihood in Appendix~\ref{sup:derive_em} (which is adapted from pages 20-25, Appendix B of~\cite{luo24cycloid}), 
we have shown the above expressions for \(f(\theta, \pi)\) and \(f_n(\theta, \pi)\) with these constants \(\mathtt{C}\) and \(\mathtt{C}_n\).
\begin{eqnarray*}
    \mathtt{C} & = & \frac{1}{2\sigma^2} \E[y^2] - \E[\ln p(x)] - \frac{1}{2} \ln \left(2 \pi \sigma^2\right)
    \overset{x\sim \mathcal{N}(0, I_d)}{=} \frac{1}{2\sigma^2} (\|\theta^\ast\|^2 + \sigma^2) - \frac{d}{2} (1+\ln(2\pi)) - \frac{1}{2} \ln \left(2 \pi \sigma^2\right)\\
    \mathtt{C}_n & = & \frac{1}{2\sigma^2}\cdot \frac{1}{n}\sum_{i=1}^n y_i^2 - \frac{1}{n}\sum_{i=1}^n \ln p(x_i) - \frac{1}{2} \ln \left(2 \pi \sigma^2\right)
\end{eqnarray*}
\end{proof}

\begin{theorem}[Lemma~\ref{lemma:em_update_gradients}: EM Update Rules and Gradients]
Let \(U(\theta, \nu):= \E_{s\sim p(s\mid \theta^\ast, \pi^\ast)}\ln \cosh \left(y\langle x, \theta \rangle/\sigma^2+\nu\right)\) at population level and \(U_n(\theta, \nu):= \frac{1}{n}\sum_{i=1}^n \ln \cosh \left(y_i\langle x_i, \theta \rangle/\sigma^2+\nu\right)\) at finite-sample level,
then the population EM update rules \(M(\theta, \nu), N(\theta, \nu)\) and finite-sample EM update rules \(M_n(\theta, \nu), N_n(\theta, \nu)\) for regression parameters \(\theta\) and imbalance of mixing weights \(\tanh \nu\) are:
\begin{equation*}
    \begin{aligned}
        M(\theta, \nu) &= \E[x x^\top]^{-1}\sigma^2\nabla_\theta U(\theta, \nu), \quad 
        &N(\theta, \nu) &= \nabla_\nu U(\theta, \nu),\\
        M_n(\theta, \nu) &= \left(\frac{1}{n}\sum_{i=1}^n x_i x_i^\top\right)^{-1}\sigma^2\nabla_\theta U_n(\theta, \nu), \quad 
        &N_n(\theta, \nu) &= \nabla_\nu U_n(\theta, \nu).
    \end{aligned}
\end{equation*}
\end{theorem}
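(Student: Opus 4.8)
The plan is to reduce both claimed identities to a term-by-term computation of the gradients of $\ln\cosh$, exploiting the elementary fact that $\frac{\mathd}{\mathd t}\ln\cosh(t)=\tanh(t)$. Fix a sample $s=(x,y)$ and set $h(\theta,\nu;s):=\ln\cosh\!\big(y\langle x,\theta\rangle/\sigma^2+\nu\big)$; by the chain rule, $\nabla_\theta h(\theta,\nu;s)=\tanh\!\big(y\langle x,\theta\rangle/\sigma^2+\nu\big)\,\frac{yx}{\sigma^2}$ and $\partial_\nu h(\theta,\nu;s)=\tanh\!\big(y\langle x,\theta\rangle/\sigma^2+\nu\big)$. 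This is the only structural input; everything else is bookkeeping against the definitions of the EM update rules.

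For the finite-sample statement no analytic subtlety arises: $U_n(\theta,\nu)=\frac1n\sum_{i=1}^n h(\theta,\nu;s_i)$ is a finite sum of functions that are $C^\infty$ in $(\theta,\nu)$, so $\sigma^2\nabla_\theta U_n(\theta,\nu)=\frac1n\sum_{i=1}^n\tanh\!\big(y_i\langle x_i,\theta\rangle/\sigma^2+\nu\big)\,y_ix_i$ and $\nabla_\nu U_n(\theta,\nu)=\frac1n\sum_{i=1}^n\tanh\!\big(y_i\langle x_i,\theta\rangle/\sigma^2+\nu\big)$. Left-multiplying the first by $\big(\frac1n\sum_i x_ix_i^\top\big)^{-1}$ reproduces precisely the definition of $M_n(\theta,\nu)$ in \eqref{eq:finite}, and the second is exactly $N_n(\theta,\nu)$ in \eqref{eq:finite}.

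For the population statement I would invoke Leibniz's rule to obtain $\nabla_\theta U(\theta,\nu)=\E\big[\nabla_\theta h(\theta,\nu;s)\big]$ and $\nabla_\nu U(\theta,\nu)=\E\big[\partial_\nu h(\theta,\nu;s)\big]$, then substitute the gradient formulas above and compare with \eqref{eq:theta} and \eqref{eq:nu}. The one point that genuinely needs verification — the main (and essentially only) obstacle — is this interchange of differentiation and expectation. I would supply a locally uniform dominating function: since $|\tanh(\cdot)|\le1$, one has $\|\nabla_\theta h(\theta,\nu;s)\|\le \|x\|\,|y|/\sigma^2$ and $|\partial_\nu h(\theta,\nu;s)|\le1$ uniformly in $(\theta,\nu)$, and under the model \eqref{eq:model} with $x\sim\mathcal N(0,I_d)$ we have $\E[\|x\|\,|y|]<\infty$, because conditionally on $(x,z)$ the response $y$ is $\mathcal N\big((-1)^{z+1}\langle x,\theta^\ast\rangle,\sigma^2\big)$, so $\E[\|x\|\,|y|]\le\E\big[\|x\|\,(|\langle x,\theta^\ast\rangle|+|\varepsilon|)\big]<\infty$. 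Together with the smoothness of $h$ in $(\theta,\nu)$, this meets the hypotheses of the dominated-convergence form of Leibniz's rule, and the two population identities then follow by inspection, using that $M(\theta,\nu)=\E[xx^\top]^{-1}\,\E[\tanh(\cdot)\,yx]=\E[xx^\top]^{-1}\sigma^2\nabla_\theta U(\theta,\nu)$ and $N(\theta,\nu)=\E[\tanh(\cdot)]=\nabla_\nu U(\theta,\nu)$.
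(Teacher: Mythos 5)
Your proposal is correct and follows essentially the same route as the paper's proof: compute $\tfrac{\mathd}{\mathd t}\ln\cosh(t)=\tanh(t)$, differentiate $U_n$ termwise for the finite-sample identities, and for the population identities exchange gradient and expectation via Leibniz's rule before matching against the definitions in \eqref{eq:theta}, \eqref{eq:nu}, and \eqref{eq:finite}. Your explicit dominating-function argument justifying the interchange is a detail the paper merely asserts, so it only strengthens the same argument.
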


\begin{proof}
By the derivation of EM update rules in Appendix~\ref{sup:derive_em} (which is adapted from pages 20-25, Appendix B of~\cite{luo24cycloid}), 
and the Leibniz rule such that \(\nabla_\theta \E[\ln\cosh(\cdot)] = \E[\nabla_\theta \ln\cosh(\cdot)], \nabla_\nu \E[\ln\cosh(\cdot)] = \E[\nabla_\nu \ln\cosh(\cdot)]\) and \(\frac{\mathd}{\mathd t} \ln\cosh(t) = \tanh(t)\), we have the population EM update rules as follows, and it is worth noting that \(\E[x x^\top] = I_d\) for \(x\sim \mathcal{N}(0, I_d)\):
\begin{eqnarray*}
    M(\theta, \nu) & = & \E[x x^\top]^{-1} \E_{s\sim p(s\mid \theta^\ast, \pi^\ast)} \tanh \left( \frac{y \langle x, \theta \rangle}{\sigma^2} + \nu \right) y x = \E[x x^\top]^{-1}\nabla_\theta U(\theta, \nu) = \E[x x^\top]^{-1} \sigma^2\nabla_\theta U(\theta, \nu)\\
    N(\theta, \nu) & = & \E_{s\sim p(s\mid \theta^\ast, \pi^\ast)} \tanh \left( \frac{y \langle x, \theta \rangle}{\sigma^2} + \nu \right) = \nabla_\nu U(\theta, \nu)
\end{eqnarray*}
Similarly, we can establish the results for the finite-sample EM update rules by substituting \(f_n, U_n\) for \(f, U\) and  \(\frac{1}{n}\sum_{i=1}^n, s_i:=(x_i, y_i)\) for \(\E_{s\sim p(s\mid \theta^\ast, \pi^\ast)}, s:=(x,y)\) into the above expressions.
\end{proof}

\begin{theorem}[Proposition~\ref{prop:em_update_nll_gradients}: Connection between EM and Gradient Descent]
The population/finite-sample EM update rules give:
\begin{equation*}
    \begin{aligned}
        M(\theta, \nu)   &= \theta - \E[x x^\top]^{-1}\sigma^2 \nabla_\theta f(\theta, \pi), 
        \quad & N(\theta, \nu)   &= \tanh \nu - \nabla_\nu f(\theta, \pi),\\
        M_n(\theta, \nu) &= \theta - \left(\frac{1}{n}\sum_{i=1}^n x_i x_i^\top\right)^{-1}\sigma^2 \nabla_\theta f_n(\theta, \pi), 
        \quad & N_n(\theta, \nu) &= \tanh \nu - \nabla_\nu f_n(\theta, \pi).
    \end{aligned}
\end{equation*}
\end{theorem}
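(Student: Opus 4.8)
The plan is to reduce everything to Proposition~\ref{prop:nll} and Lemma~\ref{lemma:em_update_gradients} by writing the negative log-likelihood in terms of the auxiliary function $U$ and then differentiating. First I would observe that, since $\nu$ is a deterministic parameter, $\E\big[\ln(\cosh(y\langle x,\theta\rangle/\sigma^2+\nu)/\cosh\nu)\big]=U(\theta,\nu)-\ln\cosh\nu$, so Proposition~\ref{prop:nll} gives
\[
f(\theta,\pi)=\frac{1}{2\sigma^2}\langle\theta,\E[xx^\top]\theta\rangle+\ln\cosh\nu-U(\theta,\nu)+\mathtt{C},
\]
and likewise $f_n(\theta,\pi)=\frac{1}{2\sigma^2}\langle\theta,\frac1n\sum_{i=1}^n x_ix_i^\top\theta\rangle+\ln\cosh\nu-U_n(\theta,\nu)+\mathtt{C}_n$. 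Here the quadratic term depends only on $\theta$, the $\ln\cosh\nu$ term only on $\nu$, and $\mathtt{C},\mathtt{C}_n$ are constant in both $\theta$ and $\pi$, so their gradients vanish.

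Next I would take gradients term by term. Differentiating in $\theta$ yields $\nabla_\theta f=\sigma^{-2}\E[xx^\top]\theta-\nabla_\theta U(\theta,\nu)$; left-multiplying by $\E[xx^\top]^{-1}\sigma^2$ and substituting the identity $M(\theta,\nu)=\E[xx^\top]^{-1}\sigma^2\nabla_\theta U(\theta,\nu)$ from Lemma~\ref{lemma:em_update_gradients} gives $\E[xx^\top]^{-1}\sigma^2\nabla_\theta f=\theta-M(\theta,\nu)$, which rearranges to the stated formula. For the imbalance coordinate, $\tfrac{\mathd}{\mathd\nu}\ln\cosh\nu=\tanh\nu$ gives $\nabla_\nu f=\tanh\nu-\nabla_\nu U(\theta,\nu)=\tanh\nu-N(\theta,\nu)$, again using Lemma~\ref{lemma:em_update_gradients}. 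The finite-sample identities follow by the identical computation applied to $f_n$, with $U_n$, $\frac1n\sum_{i=1}^n x_ix_i^\top$, and the corresponding Lemma~\ref{lemma:em_update_gradients} relations $M_n=(\frac1n\sum_i x_ix_i^\top)^{-1}\sigma^2\nabla_\theta U_n$, $N_n=\nabla_\nu U_n$ in place of their population counterparts. As an immediate corollary one reads off that $(\theta,\tanh\nu)$ is a fixed point of $(M,N)$ iff $\nabla_\theta f=\vec 0$ and $\nabla_\nu f=0$, which is what is used later for the fixed-point discussion.

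I do not expect a real obstacle here: the only technical point is the interchange of expectation and gradient when forming $\nabla_\theta U$ and $\nabla_\nu U$, which was already justified via Leibniz's rule (and the smoothness and integrability of $t\mapsto\ln\cosh t$, whose derivative $\tanh$ is bounded) in the proof of Lemma~\ref{lemma:em_update_gradients}, so I would simply invoke it. The one piece of bookkeeping worth stating explicitly is that $\nabla_\nu f(\theta,\pi)$ is read as the derivative in $\nu$ under the reparametrization $\tanh\nu=\pi(1)-\pi(2)$, exactly as $f$ is written in Proposition~\ref{prop:nll}; with that convention the chain-rule manipulation above is unambiguous.
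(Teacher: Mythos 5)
Your proposal is correct and follows essentially the same route as the paper: express $f$ and $f_n$ via Proposition~\ref{prop:nll} as a quadratic term plus $\ln\cosh\nu$ minus $U$ (resp. $U_n$), differentiate in $\theta$ and $\nu$, and compare with the identities $M=\E[xx^\top]^{-1}\sigma^2\nabla_\theta U$, $N=\nabla_\nu U$ from Lemma~\ref{lemma:em_update_gradients}, which is exactly the direct computation the paper performs. Your explicit remarks on the Leibniz-rule interchange and on reading $\nabla_\nu f$ under the reparametrization $\tanh\nu=\pi(1)-\pi(2)$ are consistent with the paper's conventions and add no gap.
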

\begin{proof}
Noticing that \(\frac{\mathd}{\mathd t} \ln\cosh(t) = \tanh(t)\) and the direct computation of the gradients of the negative log-likelihood function \(f(\theta, \pi), f_n(\theta, \pi)\) with respect to \(\theta\) and \(\nu\) and comparing with the EM update rules \(M(\theta, \nu), N(\theta, \nu)\) and \(M_n(\theta, \nu), N_n(\theta, \nu)\) gives the results.
It is worth noting that \(\E[x x^\top] = I_d\) for the random variable \(x\sim \mathcal{N}(0, I_d)\).
\end{proof}

\subsection{Explict EM Update Expressions}
\begin{theorem}[Theorem~\ref{theorem:explicit_em_update}: Explict EM Update Expressions]
    Let \(A_\eta = k \eta^2 \sqrt{1+\eta^{-2}}, k=\frac{\| \theta \|}{\| \theta^{\ast} \|}, \eta=\frac{\| \theta^{\ast} \|}{\sigma}\), 
    and \(\varphi = \frac{\pi}{2} - \arccos | \rho |\in (0, \pi/2]\), \(\rho = \frac{\langle \theta, \theta^\ast \rangle}{\|\theta\|\|\theta^\ast\|}\), 
    and \(\varphi_\eta = \arcsin(\sin \varphi/\sqrt{1+\eta^{-2}})\), namely \(\sqrt{1+\eta^{-2}} = \sin \varphi/\sin \varphi_\eta\),
    and Gaussian random variables \(g_\eta, g'\sim \mathcal{N}(0, 1)\) with \(\E[g_\eta g'] = \sin \varphi_\eta\), then population EM update rules for regression parameters \(\theta\) and imbalance of mixing weights \(\tanh \nu\) are:
    \begin{eqnarray*}
        M (\theta, \nu) /\| \theta^{\ast} \| & = & \vec{e}_1 \frac{\sin \varphi}{\sin \varphi_\eta} \mathbb{E}[ \tanh \left(
        A_\eta g_\eta g' + \tmop{sgn} (\rho) (- 1)^{z + 1} \nu
        \right) g_\eta g']\\
        & + & \vec{e}_2  \frac{\cos \varphi}{\cos^2 \varphi_\eta} 
        \tmop{sgn} (\rho) \mathbb{E} [\tanh \left( A_\eta g_\eta g' + \tmop{sgn} (\rho) (- 1)^{z + 1} \nu \right) g_\eta (g_\eta - \sin\varphi_\eta \cdot g')]\\
        N (\theta, \nu) & = & \tmop{sgn} (\rho) \mathbb{E} \left[ \tanh \left( (-
        1)^{z + 1} A_\eta g_\eta g' + \tmop{sgn} (\rho) \nu \right) \right].
    \end{eqnarray*}
\end{theorem}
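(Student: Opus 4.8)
The plan is to compute directly the Gaussian integrals defining \(M(\theta,\nu)=\E[\tanh(y\langle x,\theta\rangle/\sigma^2+\nu)\,yx]\) and \(N(\theta,\nu)=\E[\tanh(y\langle x,\theta\rangle/\sigma^2+\nu)]\) (using \(\E[xx^\top]=I_d\) under \(x\sim\mathcal N(0,I_d)\)) by passing to coordinates in which the integrand depends on only two correlated Gaussians and the label \(z\). First I would fix the orthonormal frame \(\vec e_1=\theta/\|\theta\|\), \(\vec e_2\) as in the Notations, and set \(x_1=\langle x,\vec e_1\rangle\), \(x_2=\langle x,\vec e_2\rangle\), with \(x_\perp\) the component of \(x\) orthogonal to \(\mathrm{span}\{\theta,\theta^\ast\}\); these are mutually independent standard Gaussians, and \(\langle x,\theta\rangle=\|\theta\|x_1\), \(\langle x,\theta^\ast\rangle=\|\theta^\ast\|(\sgn(\rho)\sin\varphi\,x_1+\cos\varphi\,x_2)\) since \(\langle\theta^\ast,\vec e_1\rangle/\|\theta^\ast\|=\rho=\sgn(\rho)\sin\varphi\). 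Because the integrands depend on \(x\) only through \(x_1,x_2\) (via \(\langle x,\theta\rangle\) and \(y=(-1)^{z+1}\langle x,\theta^\ast\rangle+\varepsilon\)), integrating out \(x_\perp\) gives \(\E[(\cdots)x_\perp]=0\); hence \(M(\theta,\nu)\in\mathrm{span}\{\vec e_1,\vec e_2\}\) and it suffices to find its \(\vec e_1\)- and \(\vec e_2\)-components.

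Second I would make the change of variables \(g':=x_1\) and \(g_\eta:=\sgn(\rho)(-1)^{z+1}y/(\sigma\sqrt{\eta^2+1})\). Writing \(y/\sigma=(-1)^{z+1}\eta(\sgn(\rho)\sin\varphi\,x_1+\cos\varphi\,x_2)+\varepsilon/\sigma\) and using that \(x_2,\varepsilon\) are symmetric and independent of \(x_1,z\), one checks that conditionally on \(z\) the pair \((g',g_\eta)\) is jointly standard Gaussian with \(\E[g'g_\eta\mid z]=\eta\sin\varphi/\sqrt{\eta^2+1}=\sin\varphi_\eta\), and that this conditional law does not depend on \(z\), so \((g',g_\eta)\) is independent of \(z\); equivalently \(y=\sgn(\rho)(-1)^{z+1}\eta\sigma\sqrt{1+\eta^{-2}}\,g_\eta\), hence \(y\langle x,\theta\rangle/\sigma^2=\sgn(\rho)(-1)^{z+1}A_\eta g_\eta g'\) with \(A_\eta=k\eta^2\sqrt{1+\eta^{-2}}\).

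For \(N\), substituting this and factoring \(\sgn(\rho)\) out of the odd function \(\tanh\) (writing \(\nu=\sgn(\rho)\!\cdot\!\sgn(\rho)\nu\)) gives the stated identity. For the \(\vec e_1\)-component I would use \(yx_1/\|\theta^\ast\|=\sgn(\rho)(-1)^{z+1}\sqrt{1+\eta^{-2}}\,g_\eta g'\) and the parity fact \((\sgn(\rho)(-1)^{z+1})^2=1\) to move the prefactor \(\sgn(\rho)(-1)^{z+1}\) inside \(\tanh\) onto the \(\nu\)-term, together with \(\sqrt{1+\eta^{-2}}=\sin\varphi/\sin\varphi_\eta\), which reproduces the \(\vec e_1\)-term.

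The one step that is not bookkeeping is the \(\vec e_2\)-component \(\E[\tanh(\cdots)\,yx_2]/\|\theta^\ast\|\), since \(x_2\) is \emph{not} a function of \((g',g_\eta,z)\): after conditioning on \((x_1,y,z)\) there is residual randomness from \(\varepsilon\). Here I would use the tower property, \(\E[\tanh(\cdots)\,yx_2]=\E[\tanh(\cdots)\,y\,\E[x_2\mid x_1,y,z]]\), and evaluate the inner conditional expectation by Gaussian conditioning: since \(x_2\perp x_1\) and \(r:=y/\sigma-(-1)^{z+1}\eta\sgn(\rho)\sin\varphi\,x_1=(-1)^{z+1}\eta\cos\varphi\,x_2+\varepsilon/\sigma\) is independent of \(x_1\) given \(z\), one obtains \(\E[x_2\mid x_1,y,z]=\frac{(-1)^{z+1}\eta\cos\varphi}{\eta^2\cos^2\varphi+1}\,r\). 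Using the identity \(\eta^2\cos^2\varphi+1=(\eta^2+1)\cos^2\varphi_\eta\) (from \(\sin\varphi_\eta=\eta\sin\varphi/\sqrt{\eta^2+1}\)) and rewriting \(r=\sgn(\rho)(-1)^{z+1}(\sqrt{\eta^2+1}\,g_\eta-\eta\sin\varphi\,g')\) gives \(\E[x_2\mid x_1,y,z]=\sgn(\rho)\frac{\cos\varphi}{\cos^2\varphi_\eta}\frac{1}{\sqrt{1+\eta^{-2}}}(g_\eta-\sin\varphi_\eta g')\); multiplying by \(y\), dividing by \(\|\theta^\ast\|=\eta\sigma\), and performing the same odd-symmetry/parity step as before yields exactly the claimed \(\vec e_2\)-term. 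Finally, the degenerate case \(\rho=0\), i.e.\ \(\varphi=0\), is covered by the stated conventions \(\sgn(0)=1\) and \(\sin\varphi/\sin\varphi_\eta=\sqrt{1+\eta^{-2}}\), either by continuity or by rerunning the computation with \(\sin\varphi=0\). The main obstacle is thus the \(\vec e_2\)-component: getting the Gaussian-conditioning constant \(\cos\varphi/\cos^2\varphi_\eta\) right and tracking the \(\sgn(\rho)\) and \((-1)^{z+1}\) factors so that they collapse into the stated form.
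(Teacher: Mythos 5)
Your proposal is correct and follows essentially the same route as the paper: project onto the frame $\{\vec e_1,\vec e_2\}$, kill the orthogonal component by independence, and reduce the integrand to the two correlated Gaussians $g'=\pm\langle x,\theta\rangle/\|\theta\|$ and $g_\eta\propto(-1)^{z+1}y$ with correlation $\sin\varphi_\eta$, with $\rho=0$ handled by the stated conventions. The only difference is cosmetic: you obtain the $\vec e_2$-component via the tower property and the Gaussian conditional mean $\E[x_2\mid x_1,y,z]$, whereas the paper introduces the auxiliary variable $Z'$ and splits $Z_\ast$ into its $Z'$-component plus an independent residual — the same projection computation, yielding the identical factor $\cos\varphi/\cos^2\varphi_\eta$.
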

\begin{proof}

Let's start from the EM update rules,
\(
    M (\theta, \nu) = \mathbb{E} \left[ \tanh \left( \frac{y \langle x,
    \theta \rangle}{\sigma^2} + \nu \right) y x \right],
    N (\theta, \nu) = \mathbb{E} \left[ \tanh \left( \frac{y \langle x,
    \theta \rangle}{\sigma^2} + \nu \right) \right]
\),
also let $k \assign \frac{\| \theta \|}{\| \theta^{\ast} \|}$, $\eta \assign
\frac{\| \theta^{\ast} \|}{\sigma}$, $Z_{\varepsilon} = (- 1)^{z + 1}
\varepsilon / \sigma \sim \mathcal{N} (0, 1)$, and denote cosine of angle
$\rho \assign \frac{\langle \theta, \theta^{\ast} \rangle}{\| \theta \| \cdot
\| \theta^{\ast} \|}$, a pair of orthonormal vectors $\vec{e}_1 \assign
\theta / \| \theta \|, \vec{e}_2 = \frac{\theta^{\ast} - \langle
\theta^{\ast}, \vec{e}_1 \rangle  \vec{e}_1}{\| \theta^{\ast} - \langle
\theta^{\ast}, \vec{e}_1 \rangle \vec{e}_1 \|}$.
By introducing $Z = \langle x, \vec{e}_1 \rangle, Z_{\ast} = \langle
x, \vec{e}_2 \rangle$, and $x^{\perp} = x - Z \vec{e}_1 - Z_{\ast} \vec{e}_2$, then:
\begin{eqnarray*}
y/\| \theta^{\ast} \| &=& (- 1)^{z + 1} [ \rho Z + \sqrt{1 -
   \rho^2} Z_{\ast} + \eta^{-1} \cdot Z_{\varepsilon} ] 
\\
y \langle x, \theta \rangle/\sigma^2 &=& (\| \theta^{\ast} \|^2/\sigma^2) \times
   (y \langle x, \theta \rangle/\| \theta^{\ast} \|^2) = \eta^2 \times
   (- 1)^{z + 1} [ \rho Z + \sqrt{1 - \rho^2} Z_{\ast} + \eta^{-1} \cdot Z_{\varepsilon} ] \cdot k Z 
\\
y x/\| \theta^{\ast} \| &=& (- 1)^{z + 1} [ \rho Z + \sqrt{1 -
   \rho^2} Z_{\ast} + \eta^{-1} \cdot Z_{\varepsilon} ] \cdot [Z
   \vec{e}_1 + Z_{\ast} \vec{e}_2 + x^{\perp}] 
\end{eqnarray*}
Therefore, by using the independence of $Z, Z_{\ast}, Z_{\varepsilon}$ and
$x^{\perp}$, then
\begin{eqnarray*}
  M (\theta, \nu) /\| \theta^{\ast} \| & = & \vec{e}_1 \mathbb{E} \left[ \tanh
  \left( k \eta^2 \left[ \rho Z + \sqrt{1 - \rho^2} Z_{\ast} + \eta^{-1}
  \cdot Z_{\varepsilon} \right] Z + (- 1)^{z + 1} \nu \right) \left[ \rho Z +
  \sqrt{1 - \rho^2} Z_{\ast} + \eta^{-1} \cdot Z_{\varepsilon} \right] Z
  \right]\\
  & + & \vec{e}_2 \mathbb{E} \left[ \tanh \left( k \eta^2 \left[ \rho Z +
  \sqrt{1 - \rho^2} Z_{\ast} + \eta^{-1} \cdot Z_{\varepsilon} \right] Z
  + (- 1)^{z + 1} \nu \right) \left[ \rho Z + \sqrt{1 - \rho^2} Z_{\ast} +
  \eta^{-1} \cdot Z_{\varepsilon} \right] Z_{\ast} \right]\\
  N (\theta, \nu) & = & \mathbb{E} \left[ \tanh \left( (- 1)^{z + 1} k \eta^2
  \left[ \rho Z + \sqrt{1 - \rho^2} Z_{\ast} + \eta^{-1} \cdot
  Z_{\varepsilon} \right] Z + \nu \right) \right]
\end{eqnarray*}
By introducing $Z' \assign (\sqrt{1 - \rho^2} Z_{\ast} + \eta^{-1}
\cdot Z_{\varepsilon})/\sqrt{(1 - \rho^2) + \eta^{-2}}$, then
$\left( Z_{\ast} - (\sqrt{1 - \rho^2}/\sqrt{(1 - \rho^2) + \eta^{-2}}) Z' \right) \ind Z', Z$:
\begin{eqnarray*}
  & & M (\theta, \nu) /\| \theta^{\ast} \| = \vec{e}_1 \mathbb{E} \left[ \tanh
  \left( k \eta^2 \left[ \rho Z + \sqrt{(1 - \rho^2) + \eta^{-2}} Z'
  \right] Z + (- 1)^{z + 1} \nu \right) \left[ \rho Z + \sqrt{(1 - \rho^2) +
  \eta^{-2}} Z' \right] Z \right]\\
  & & + \vec{e}_2 \frac{\sqrt{1 - \rho^2}}{\sqrt{(1 - \rho^2) +
  \eta^{-2}}} \cdot
  \mathbb{E} \left[ \tanh \left( k \eta^2 \left[ \rho Z + \sqrt{(1 -
  \rho^2) + \eta^{-2}} Z' \right] Z + (- 1)^{z + 1} \nu \right) \left[
  \rho Z + \sqrt{(1 - \rho^2) + \eta^{-2}} Z' \right] Z' \right]\\
  & & N (\theta, \nu)  =  \mathbb{E} \left[ \tanh \left( (- 1)^{z + 1} k \eta^2
  \left[ \rho Z + \sqrt{(1 - \rho^2) + \eta^{-2}} Z' \right] Z + \nu
  \right) \right]
\end{eqnarray*}
If \(\rho\neq 0\), then \(\varphi = \frac{\pi}{2} - \arccos | \rho |\in (0, \pi/2]\), therefore \(\rho = \tmop{sgn} (\rho) \sin \varphi, \sqrt{1
- \rho^2} = \cos \varphi\), we define \(A_\eta = k\eta^2 \sqrt{1+\eta^{-2}}\) and \(\varphi_\eta := \arcsin(\sin \varphi/\sqrt{1+\eta^{-2}})\), namely \(\sqrt{1+\eta^{-2}} = \sin \varphi/\sin \varphi_\eta\).
By introducing \(g'\equiv Z'' \assign \tmop{sgn} (\rho) Z \sim \mathcal{N}(0, 1)\) 
and \(g := (\rho Z + \sqrt{1-\rho^2} Z')\sim \mathcal{N}(0, 1), g_\eta \assign (\rho Z + \sqrt{(1-\rho^2) + \eta^{-2}} Z')/\sqrt{1+\eta^{-2}}\) with \(\E[g g'] = \sin\varphi, \E[g_\eta g'] = \sin \varphi_\eta\):
\begin{eqnarray*}
    M (\theta, \nu) /\| \theta^{\ast} \| & = & \vec{e}_1 \frac{\sin \varphi}{\sin \varphi_\eta} \mathbb{E}[ \tanh \left(
    A_\eta g_\eta g' + \tmop{sgn} (\rho) (- 1)^{z + 1} \nu
    \right) g_\eta g']\\
    & + & \vec{e}_2  \frac{\cos \varphi}{\cos^2 \varphi_\eta} 
    \tmop{sgn} (\rho) \mathbb{E} [\tanh \left( A_\eta g_\eta g' + \tmop{sgn} (\rho) (- 1)^{z + 1} \nu \right) g_\eta (g_\eta - \sin\varphi_\eta \cdot g')]\\
    N (\theta, \nu) & = & \tmop{sgn} (\rho) \mathbb{E} \left[ \tanh \left( (-
    1)^{z + 1} A_\eta g_\eta g' + \tmop{sgn} (\rho) \nu \right) \right]
\end{eqnarray*}
If \(\rho=0\), then \(\varphi = 0, \varphi_\eta = 0, \lim_{\varphi\to 0}\sin \varphi_\eta/\sin \varphi = 1/\sqrt{1+\eta^{-2}}, \cos \varphi = \cos \varphi_\eta = 1, A_\eta = k\eta^2 \sqrt{1+\eta^{-2}}\) and \(g = g' = g_\eta\) with \(\E[g g'] = \E[g_\eta g'] =0\).
Therefore, the product \((-1)^{z+1}g_\eta g'\) of two independent standard normal random variables \(g_\eta, g'\) has a density involving Bessel function \(X = (-1)^{z+1}g_\eta g' \sim f_X(x) =K_0(|x|)/\pi\) (see page 50, Section 4.4 Bessel Function Distributions, Chapter 12 Continuous Distributions (General) of~\cite{johnson1970continuous}),
\begin{eqnarray*}
    M(\theta, \nu) /\| \theta^{\ast} \| & = & 
    \vec{e}_1 \sqrt{1+\eta^{-2}} \mathbb{E}[ \tanh \left(A_\eta X +  \nu\right) X]
    +  \vec{e}_2  \mathbb{E} [\tanh \left( A_\eta g_\eta g' + (- 1)^{z + 1} \nu \right) g_\eta^2 ]\\
    & = & \vec{e}_1 \sqrt{1+\eta^{-2}} \mathbb{E}[ \tanh \left(A_\eta X +  \nu\right) X]
    +  \vec{e}_2  \mathbb{E}[(-1)^{z+1}] \E[\tanh \left( A_\eta g_\eta g' + \nu \right) g_\eta^2 ]\\
    N (\theta, \nu) & = &  \mathbb{E} \left[ \tanh \left( A_\eta X + \nu \right) \right]
\end{eqnarray*}
The above expressions when \(\rho =0\) also align with the expressions when \(\rho\neq 0\) with the convention of \(\sgn(0) = 1\) and \((\sin \varphi_\eta/\sin \varphi)_{\rho=0} =\lim_{\varphi\to 0}\sin \varphi_\eta/\sin \varphi = 1/\sqrt{1+\eta^{-2}}\).
\end{proof}

\newpage
\subsection{Bound of the EM Update Rule in General SNR Regime}
\begin{theorem}[Proposition~\ref{prop:boundedness}: Boundedness of EM Update Rule]
    The EM update rule for \(\theta\) is bounded by the following bound, which depends on SNR \(\eta = \| \theta^{\ast} \|/\sigma\):
    \[
    \left\|M(\theta, \nu)\right\| \leq  \frac{\arctan \eta}{\pi/2} \|\theta^\ast\|+ \frac{2}{\pi} \sigma.
    \]
\end{theorem}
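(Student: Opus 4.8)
The plan is to work from the closed forms in Theorem~\ref{theorem:explicit_em_update}. Those expressions already place $M(\theta,\nu)$ in $\operatorname{span}\{\theta,\theta^\ast\}$, with coordinates $a,b$ along the orthonormal pair $\vec e_1,\vec e_2$, so that $\|M(\theta,\nu)\|/\|\theta^\ast\|=\sqrt{a^2+b^2}$; since $\sigma=\|\theta^\ast\|/\eta$, the target is exactly $\sqrt{a^2+b^2}\le\tfrac2\pi(\arctan\eta+\eta^{-1})$. First I would make the orthogonal change of variables $h:=(g_\eta-\sin\varphi_\eta\,g')/\cos\varphi_\eta$, so that $g_\eta=\sin\varphi_\eta\,g'+\cos\varphi_\eta\,h$ with $g'\perp h$ standard normal and $g_\eta(g_\eta-\sin\varphi_\eta g')=\cos\varphi_\eta\,g_\eta h$; this puts $a=\tfrac{\sin\varphi}{\sin\varphi_\eta}\,\E[\tanh(G)g_\eta g']$ and $b=\tfrac{\cos\varphi}{\cos\varphi_\eta}\sgn(\rho)\,\E[\tanh(G)g_\eta h]$ with $G=A_\eta g_\eta g'+\sgn(\rho)(-1)^{z+1}\nu$.

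Second, because $|\tanh|\le 1$ for every value of $z$ and $\nu$, the $z$-averaged expectations obey $|\E[\tanh(G)g_\eta g']|\le\E|g_\eta g'|$ and $|\E[\tanh(G)g_\eta h]|\le\E|g_\eta h|$, which removes the mixing-weight dependence entirely. Using the identity $\E|UV|=\tfrac2\pi(\sqrt{1-r^2}+r\arcsin r)$ for a correlation-$r$ standard Gaussian pair (with $r=\sin\varphi_\eta$ for the first, $r=\cos\varphi_\eta$ for the second), together with $\tfrac{\sin\varphi}{\sin\varphi_\eta}\cos\varphi_\eta=\sqrt{\cos^2\varphi+\eta^{-2}}$ and $\arcsin\cos\varphi_\eta=\tfrac\pi2-\varphi_\eta$, this gives
\begin{equation*}
|a|\ \le\ \tfrac2\pi\bigl(\sqrt{\cos^2\varphi+\eta^{-2}}+\varphi_\eta\sin\varphi\bigr),\qquad
|b|\ \le\ \tfrac2\pi\cos\varphi\,\bigl(\tan\varphi_\eta+\tfrac\pi2-\varphi_\eta\bigr).
\end{equation*}

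It then remains to verify $a^2+b^2\le\tfrac4{\pi^2}(\arctan\eta+\eta^{-1})^2$. The right-hand side is the limiting value at $\varphi=\tfrac\pi2$: there $\cos\varphi=0$ (so $b=0$), $\varphi_\eta=\arctan\eta$, and $\sqrt{\cos^2\varphi+\eta^{-2}}=\eta^{-1}$, whence $|a|\le\tfrac2\pi(\eta^{-1}+\arctan\eta)$. For $\varphi<\tfrac\pi2$ I would show $\varphi=\tfrac\pi2$ is still the maximiser by differentiating in $\varphi$, mirroring the noiseless case of Corollary~\ref{cor:em_updates_noiseless} where the analogous quantity $\varphi^2+\varphi\sin 2\varphi+\cos^2\varphi$ has derivative $2\varphi(1+\cos 2\varphi)\ge0$, and invoking the elementary fact $\arctan\eta+\eta^{-1}\ge\pi/2$ so that no configuration can beat the $\varphi=\tfrac\pi2$ value. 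The main obstacle is that the crude bound on $|b|$ is too weak when $\eta$ is large and $\cos\varphi$ small: near $\varphi=\tfrac\pi2$ it behaves like $\cos\varphi\cdot\tan\varphi_\eta\sim\eta\cos\varphi$, which by itself would violate the combined inequality, precisely because $\tanh(G)$ actually saturates once $A_\eta$ is large. To deal with this I would, in that regime, instead bound the $\vec e_2$-expectation through $\operatorname{sech}^2$-weighted quantities obtained by Gaussian integration by parts in $h$; these are of order $1/A_\eta$ and restore the correct $O(\cos\varphi)$ size of $b$, so that $a^2+b^2$ is genuinely maximised at $\varphi=\tfrac\pi2$. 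Assembling these estimates yields $\|M(\theta,\nu)\|\le\tfrac{\arctan\eta}{\pi/2}\|\theta^\ast\|+\tfrac2\pi\sigma$.
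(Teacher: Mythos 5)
Your skeleton — project $M(\theta,\nu)$ onto $\vec e_1,\vec e_2$ via Theorem~\ref{theorem:explicit_em_update}, kill $\tanh$ with $|\tanh|\le 1$, and argue the resulting bound is maximized at $\varphi=\pi/2$ where it equals $\tfrac{2}{\pi}(\arctan\eta+\eta^{-1})$ — is the same as the paper's, and your bound on $a$ coincides with the paper's bound on the $\vec e_1$ component. The genuine gap is the one you yourself flag, and it is fatal as written: your bound on $b$ uses $\E|g_\eta h|$ for the \emph{correlated} pair ($\E[g_\eta h]=\cos\varphi_\eta$), which is $\tfrac{2}{\pi}\tfrac{\cos\varphi}{\cos\varphi_\eta}\bigl[\sin\varphi_\eta+\cos\varphi_\eta(\tfrac{\pi}{2}-\varphi_\eta)\bigr]$, i.e.\ the paper's bound inflated by a factor in $[1,\tfrac{\pi}{2}]$. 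With that loss the closing inequality $a^2+b^2\le\tfrac{4}{\pi^2}(\arctan\eta+\eta^{-1})^2$ is simply false: already in the noiseless limit at $\varphi=\pi/4$ your two bounds are each $\tfrac{\sqrt2}{\pi}(1+\tfrac{\pi}{4})\approx 0.80$, so $\sqrt{a^2+b^2}\approx 1.14$, exceeding the limiting target value $1$. (Your diagnosis of the bad regime is also slightly off: for $\cos\varphi\lesssim\eta^{-1}$ one has $\cos\varphi\tan\varphi_\eta=\sin\varphi\cos\varphi/\sqrt{\cos^2\varphi+\eta^{-2}}\le\min(1,\eta\cos\varphi)$, which is harmless; the damage occurs at moderate $\varphi$, where your $b$-bound saturates near $\tfrac{2}{\pi}\sin\varphi$ instead of the true $\tfrac{2}{\pi}\varphi\cos\varphi$ scale.) The proposed repair is only a sketch: after Stein's identity in $h$ the extra term is $A_\eta\,\E[g'g_\eta\,\mathrm{sech}^2(G)]$, whose smallness is $O(\log A_\eta/A_\eta)$ (via the $K_0$ density of $g_\eta g'$), not obviously $O(1/A_\eta)$, and it must be controlled uniformly in $\nu$, $k$ and $\varphi$; you would then still have to stitch the two regimes together and redo the maximization. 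None of this is carried out, so the proposal does not establish the stated bound.

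For contrast, the paper closes the argument by asserting the sharper componentwise bound $\bigl|\langle M/\|\theta^\ast\|,\vec e_2\rangle\bigr|\le\tfrac{2}{\pi}\tfrac{\cos\varphi}{\cos\varphi_\eta}$ (an independent-product absolute moment $\tfrac{2}{\pi}$ in place of your correlated-pair moment), then packages the two bounds into $H(\sin\varphi,\eta^{-2})$ and proves in a dedicated calculus lemma (Lemma~\ref{suplem:H_nondecreasing}) that $H(s,\delta)$ is non-decreasing in $s$, so the maximum sits at $\sin\varphi=1$, where $H(1,\eta^{-2})=(\arctan\eta+\eta^{-1})^2$. In other words, your ``maximizer at $\varphi=\pi/2$'' plan is the right one, but it only closes with a $b$-bound at least as sharp as the paper's, which your step based on $\E|g_\eta h|$ does not deliver.
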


\begin{proof}
Since we can express \(g'= \sin \varphi_\eta g_\eta + \cos \varphi_\eta h_\eta\) for some \(h_\eta\sim \mathcal{N}(0, 1)\) with \(h_\eta \ind g_\eta\), 
noting that \(\E[|g_\eta h_\eta|] = \E[|g_\eta|]\cdot \E[|h_\eta|] = \frac{2}{\pi}\) and \(\E[|g_\eta g'|] = \frac{2}{\pi}[\varphi_\eta \sin \varphi_\eta + \cos \varphi_\eta]\), then for \(M(\theta, \nu)\in\text{span}\{\vec{e}_1, \vec{e}_2\}\):
\begin{eqnarray*}
    \left| \langle M(\theta, \nu)/\|\theta^\ast\|, \vec{e}_1 \rangle \right|
    & \leq & \frac{\sin \varphi}{\sin \varphi_\eta} \E[|g_\eta g'|] = \frac{2}{\pi} \sin \varphi\left[\varphi_\eta + \frac{\cos \varphi_\eta}{\sin \varphi_\eta} \right]\\
    \left| \langle M(\theta, \nu)/\|\theta^\ast\|, \vec{e}_2 \rangle \right|
    & \leq & \frac{\cos \varphi}{\cos \varphi_\eta} \E[|g_\eta h_\eta|] = \frac{2}{\pi} \frac{\cos \varphi}{\cos \varphi_\eta} \\
    \left\|M(\theta, \nu)\right\|/\|\theta^\ast\| & = & 
    \sqrt{\left| \langle M(\theta, \nu)/\|\theta^\ast\|, \vec{e}_1 \rangle \right|^2 + \left| \langle M(\theta, \nu)/\|\theta^\ast\|, \vec{e}_2 \rangle \right|^2}\\
    &=& \frac{2}{\pi} \sqrt{H(\sin \varphi, \eta^{-2})} \leq \frac{2}{\pi} \sqrt{H(1, \eta^{-2})} = \frac{2}{\pi} \left(\arctan \eta + \eta^{-1}\right)
\end{eqnarray*}
where the last inequality is due to the fact (Lemma~\ref{suplem:H_nondecreasing}) that \(H(s, \delta)\) is non-decreasing with respect to \(s=\sin \varphi \in[0,1]\) given \(\delta:=\eta^{-2}\geq0\), and \(H(s, \delta)\) is defined as:
\[
H(s, \delta) = \left(s \arcsin\frac{s}{\sqrt{1+\delta}} + \sqrt{1-s^2+\delta}\right)^2 + \frac{(1+\delta)(1-s^2)}{1-s^2 + \delta}
\]
Therefore, we establish the bound for \(\left\|M(\theta, \nu)\right\|\), which depends on SNR \(\eta = \| \theta^{\ast} \|/\sigma\).
\[
\left\|M(\theta, \nu)\right\| \leq  \frac{\arctan \eta}{\pi/2} \|\theta^\ast\|+ \frac{2}{\pi} \sigma
\]
\end{proof}

\subsection{Analysis for the Fixed Points of EM Updates in General SNR Regime}
\begin{theorem}[Proposition~\ref{prop:distinct_fixed_points}: Distinct Fixed Points of EM Update Rules]
    The EM update rules \(M(\theta, \nu), N(\theta, \nu)\) for regression parameters \(\theta\) and the imbalance of mixing weights \(\tanh \nu\) have the following three distinct fixed points \((\theta^\ast, \nu^\ast), (-\theta^\ast, -\nu^\ast), (\vec{0}, 0)\):
    \[
    \begin{pmatrix}
        M(\theta^\ast, \nu^\ast) \\ N(\theta^\ast, \nu^\ast)
    \end{pmatrix} = \begin{pmatrix}
        \theta^\ast \\ \tanh \nu^\ast
    \end{pmatrix},\quad
    \begin{pmatrix}
        M(-\theta^\ast, -\nu^\ast) \\ N(-\theta^\ast, -\nu^\ast)
    \end{pmatrix} = \begin{pmatrix}
        -\theta^\ast \\ \tanh(-\nu^\ast)
    \end{pmatrix},\quad
    \begin{pmatrix}M(\vec{0}, 0) \\ N(\vec{0}, 0) \end{pmatrix} = \begin{pmatrix} \vec{0} \\ \tanh(0) \end{pmatrix}.
    \]
\end{theorem}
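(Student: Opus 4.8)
The plan is to treat the three fixed points separately: the claim for $(\vec{0}, 0)$ is immediate, the claim for $(\theta^\ast, \nu^\ast)$ carries all the content, and the claim for $(-\theta^\ast, -\nu^\ast)$ then follows from an odd-symmetry of the update maps. First, for $(\vec{0}, 0)$: since $\langle x, \vec{0}\rangle = 0$ almost surely, the argument of $\tanh$ in both $M$ and $N$ is identically $0$, so $M(\vec{0}, 0) = \E[x x^\top]^{-1}\E[\tanh(0)\,yx] = \vec{0}$ and $N(\vec{0}, 0) = \E[\tanh(0)] = 0 = \tanh 0$, and no further work is needed.

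The heart of the proof is $(\theta^\ast, \nu^\ast)$. Here I would first establish the \emph{posterior-mean identity}: under the ground-truth model, writing out $p(z \mid s, \theta^\ast, \pi^\ast)$ by Bayes' rule and cancelling the common factor $e^{(y^2 + \langle x,\theta^\ast\rangle^2)/2\sigma^2}$ from numerator and denominator (and using $\pi^\ast(1)/\pi^\ast(2) = e^{2\nu^\ast}$, i.e.\ $\pi^\ast(i) \propto e^{\pm \nu^\ast}$), one obtains
$$\E\big[(-1)^{z+1}\,\big|\, s\big] \;=\; p(z{=}1\mid s) - p(z{=}2\mid s) \;=\; \tanh\!\Big(\tfrac{y\langle x, \theta^\ast\rangle}{\sigma^2} + \nu^\ast\Big).$$
Substituting this into the definitions of $M(\theta^\ast, \nu^\ast)$ and $N(\theta^\ast, \nu^\ast)$ and applying the tower property over the joint law $(s, z) \sim p(s, z \mid \theta^\ast, \pi^\ast)$ reduces them to $M(\theta^\ast,\nu^\ast) = \E[x x^\top]^{-1}\,\E_{s,z}[(-1)^{z+1}yx]$ and $N(\theta^\ast,\nu^\ast) = \E_{s,z}[(-1)^{z+1}]$. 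Plugging in $y = (-1)^{z+1}\langle x, \theta^\ast\rangle + \varepsilon$ gives $(-1)^{z+1}yx = \langle x,\theta^\ast\rangle x + (-1)^{z+1}\varepsilon x$; the noise term has zero expectation since $\varepsilon$ and $x$ are independent with $\E[x]=\vec{0}$, leaving $\E[x x^\top]\theta^\ast$, so $M(\theta^\ast,\nu^\ast) = \theta^\ast$ (directly, when $\E[x x^\top] = I_d$), while $N(\theta^\ast,\nu^\ast) = \P[z{=}1] - \P[z{=}2] = \pi^\ast(1) - \pi^\ast(2) = \tanh \nu^\ast$. An equivalent route: by Proposition~\ref{prop:em_update_nll_gradients}, $(\theta^\ast,\nu^\ast)$ is a fixed point iff $\nabla_\theta f(\theta^\ast,\pi^\ast) = \vec{0}$ and $\nabla_\nu f(\theta^\ast,\pi^\ast) = 0$, which holds because $f$ equals a constant plus the KL divergence between $p(\cdot\mid\theta^\ast,\pi^\ast)$ and $p(\cdot\mid\theta,\pi)$, minimized at $(\theta^\ast,\pi^\ast)$.

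Finally, for $(-\theta^\ast, -\nu^\ast)$: because $\tanh(\tfrac{y\langle x,-\theta\rangle}{\sigma^2} - \nu) = -\tanh(\tfrac{y\langle x,\theta\rangle}{\sigma^2} + \nu)$ by oddness of $\tanh$, both update maps satisfy $M(-\theta,-\nu) = -M(\theta,\nu)$ and $N(-\theta,-\nu) = -N(\theta,\nu)$; evaluating at $(\theta^\ast,\nu^\ast)$ and using the previous paragraph yields $M(-\theta^\ast,-\nu^\ast) = -\theta^\ast$ and $N(-\theta^\ast,-\nu^\ast) = -\tanh\nu^\ast = \tanh(-\nu^\ast)$. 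Distinctness is then clear: $\theta^\ast$, $-\theta^\ast$, $\vec{0}$ are pairwise distinct whenever $\theta^\ast \neq \vec{0}$. The only step demanding care is the Bayes bookkeeping in the posterior-mean identity (completing the square, the $e^{(y^2+\langle x,\theta^\ast\rangle^2)/2\sigma^2}$ cancellation, and the identification $\pi^\ast(i)\propto e^{\pm\nu^\ast}$); once that is in hand, the rest is the tower property together with $\E[x] = \vec{0}$, $\E[\varepsilon]=0$, and $\E[x x^\top]=I_d$, so I do not anticipate a genuine obstacle.
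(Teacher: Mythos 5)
Your proposal is correct, but it proves the result by a genuinely different route than the paper. The paper verifies the fixed points by brute-force evaluation of its explicit update formulas (Theorem~\ref{theorem:explicit_em_update}): it specializes to \(k=1\), \(\varphi=\pi/2\), rewrites the expectations via the Bessel-function density \(K_0(|\cdot|)/\pi\) of a product of Gaussians, and invokes table integrals \(\E_X[\exp(\alpha X)]=1/\sqrt{1-\alpha^2}\) and \(\E_X[\exp(\alpha X)X]=\alpha/(1-\alpha^2)^{3/2}\) to land exactly on \(\theta^\ast\) and \(\tanh\nu^\ast\); the point \((-\theta^\ast,-\nu^\ast)\) is then handled by re-running the same computation with \(\sgn(\rho)=-1\). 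You instead use the classical self-consistency of EM at the truth: Bayes' rule gives \(\E[(-1)^{z+1}\mid s]=\tanh(y\langle x,\theta^\ast\rangle/\sigma^2+\nu^\ast)\), and the tower property collapses \(M(\theta^\ast,\nu^\ast)\) and \(N(\theta^\ast,\nu^\ast)\) to plain moments of the generative model, after which \(\E[\varepsilon]=0\), \(\E[x]=\vec 0\), and \(\E[xx^\top]=I_d\) finish the job; the antipodal fixed point follows from the global odd symmetry \(M(-\theta,-\nu)=-M(\theta,\nu)\), \(N(-\theta,-\nu)=-N(\theta,\nu)\), which is cleaner than the paper's re-derivation. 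Your argument is shorter, more elementary (no Bessel densities or integral tables), and extends verbatim to non-isotropic covariates since the \(\E[xx^\top]^{-1}\) factor cancels \(\E[xx^\top]\theta^\ast\); what the paper's computation buys is an independent consistency check of the explicit expressions in Theorem~\ref{theorem:explicit_em_update}, which it needs elsewhere anyway. Your secondary route via Proposition~\ref{prop:em_update_nll_gradients} and the KL characterization of \(f\) is also sound (the paper mentions it only in a remark), and your observation that distinctness requires \(\theta^\ast\neq\vec 0\) is a small point the paper leaves implicit. The only blemish is cosmetic: the cancelled common factor in your Bayes computation should be \(e^{-(y^2+\langle x,\theta^\ast\rangle^2)/2\sigma^2}\) rather than its reciprocal, which does not affect the identity.
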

\begin{proof}
    For the point \((\theta, \nu) = (\vec{0}, 0)\), by the equations~\eqref{eq:theta},~\eqref{eq:nu} of the EM update rules, we have:
    \begin{eqnarray*}
        M(\vec{0}, 0) = \E\left[\tanh\left(\frac{y\langle x, \vec{0}\rangle}{\sigma^2}+0\right) yx\right] = \tanh(0)\cdot \E[yx] = \vec{0}, \quad
        N(\vec{0}, 0) = \E\left[\tanh\left(\frac{y\langle x, \vec{0}\rangle}{\sigma^2}+0\right)\right] = \tanh(0)
    \end{eqnarray*}
    For the point \((\theta, \nu) = (\theta^\ast, \nu^\ast)\), we have \(k=\|\theta\|/\|\theta^\ast\|=1, \varphi =\pi/2, \sgn(\rho) =1,\vec{e}_1 = \theta/\|\theta\|=\theta^\ast/\|\theta^\ast\|\), 
    then \(\cos \varphi =0,\sin \varphi =1, \sin \varphi_\eta = \sin \varphi/\sqrt{1+\eta^{-2}}=1/\sqrt{1+\eta^{-2}}, \cos \varphi_\eta = \eta^{-1}/\sqrt{1+\eta^{-2}},
    A_\eta = k\eta^2 \sqrt{1+\eta^{-2}}=\eta^2 \sqrt{1+\eta^{-2}}=\sin\varphi_\eta/ \cos^2\varphi_\eta\), 
    by the expressions of the EM update rules with \(g_\eta, g'\sim \mathcal{N}(0, 1)\) and \(\E[g_\eta g']=\sin \varphi_\eta\):
    \begin{eqnarray*}
    M(\theta^\ast, \nu^\ast)&=& \theta^\ast \frac{\cos^2 \varphi_\eta}{\sin \varphi_\eta} 
    \E\left[\tanh\left(\sin \varphi_\eta \frac{g_\eta g'}{\cos^2 \varphi_\eta}+(-1)^{z+1}\nu^\ast \right)\frac{g_\eta g'}{\cos^2 \varphi_\eta}\right]\\
    N(\theta^\ast, \nu^\ast)&=& \mathbb{E} \left[ \tanh \left( (-1)^{z + 1} \sin \varphi_\eta \frac{g_\eta g'}{\cos^2 \varphi_\eta} + \nu^\ast \right) \right]
    \end{eqnarray*}
    Note that the distribution of \(g_\eta g'/\cos^2 \varphi_\eta\) has the density involving Bessel function \(\cos \varphi_\eta \exp(\sin \varphi_\eta\cdot x)K_0(|x|)/\pi\) (see page 50, Section 4.4 Bessel Function Distributions, Chapter 12 Continuous Distributions (General) of~\cite{johnson1970continuous}),
    therefore by introducing the random variable \(X\) with a simple probability density function \(X\sim f_X(x) = K_0(|x|)/\pi\), we can write the expectations as:
    \begin{eqnarray*}
        & &
        \E\left[\tanh\left(\sin \varphi_\eta \frac{g_\eta g'}{\cos^2 \varphi_\eta}+(-1)^{z+1}\nu^\ast \right)\frac{g_\eta g'}{\cos^2 \varphi_\eta}\right]
        = \cos \varphi_\eta 
        \E_X\E_z\left[\exp(\sin\varphi_\eta X)\tanh(\sin\varphi_\eta X + (-1)^{z+1}\nu^\ast)X \right] \\
        & &\E\left[ \tanh \left( (-1)^{z + 1} \sin \varphi_\eta \frac{g_\eta g'}{\cos^2 \varphi_\eta} + \nu^\ast \right) \right]
        = \cos \varphi_\eta \E_X\E_z\left[\exp(\sin\varphi_\eta X) \tanh \left( (-1)^{z+1} \sin \varphi_\eta X + \nu^\ast \right) \right]
    \end{eqnarray*}
    Note that \(\P[z=1]=\frac{1}{2}+\frac{1}{2}\tanh \nu^\ast=\frac{\exp(\nu^\ast)}{2\cosh\nu^\ast}, \P[z=-1]=\frac{1}{2}-\frac{1}{2}\tanh \nu^\ast=\frac{\exp(-\nu^\ast)}{2\cosh\nu^\ast}\), \(X\) is a symmetric random variable, 
    and \((\exp(t)+\exp(-t))\tanh(t)=2\sinh(t), 2\sinh(a+b)=\exp(b)\exp(a)-\exp(-b)\exp(-a), 2\cosh(b) = \exp(b)+\exp(-b)\):
    \begin{eqnarray*}
        & & \E_X\E_z\left[\exp(\sin\varphi_\eta X)\tanh(\sin\varphi_\eta X + (-1)^{z+1}\nu^\ast)X \right] 
        = \frac{1}{\cosh\nu^\ast} \E_X[\sinh(\sin \varphi_\eta X + \nu^\ast)X]
        = \E_X\left[\exp(\sin \varphi_\eta X)X\right]\\
        & & \E_X\E_z\left[\exp(\sin\varphi_\eta X) \tanh \left( (-1)^{z+1} \sin \varphi_\eta X + \nu^\ast \right) \right]
        = \frac{1}{\cosh\nu^\ast} \E_X\left[\sinh(\sin \varphi_\eta X + \nu^\ast)\right]
        = \tanh \nu^\ast \E_X \left[\exp(\sin \varphi_\eta X) \right]
    \end{eqnarray*}
    By letting \(\nu\to 0\) in the third formula in table 6.611
    with modified Bessel function \(K_\nu\), Section 6.61 Combinations of Bessel functions and exponentials, Page 703
    of~\cite{gradshteyn2014table}, applying Leibniz integral rule or the dominated convergence theorem to exchange the order of taking limit
    and taking expectations (see Theorem 1.5.8, page 24 of~\cite{durrett2019probability}) with \(|\alpha|< 1\):
    \begin{eqnarray*}
        \E_X\left[\exp(\alpha X)\right] = \frac{1}{\sqrt{1-\alpha^2}}\implies
        \E_X\left[\exp(\alpha X)X\right]=\E_X\left[\partial_\alpha \exp(\alpha X)\right]
        = \frac{\mathd }{\mathd \alpha} \E_X\left[\exp(\alpha X)\right] = \frac{\alpha}{(1-\alpha^2)^{3/2}}
    \end{eqnarray*}
    By substituting \(\alpha \leftarrow \sin \varphi_\eta\) in the above expressions, we have:
    \[
    \E_X\left[\exp(\sin \varphi_\eta X)X\right] = \frac{\sin \varphi_\eta}{\cos^3 \varphi_\eta}, \quad
    \E_X\left[\exp(\sin \varphi_\eta X)\right] = \frac{1}{\cos \varphi_\eta}
    \]
    Putting the above expressions together, we have shown that:
    \begin{eqnarray*}
        M(\theta^\ast, \nu^\ast) &=& \theta^\ast \frac{\cos^2 \varphi_\eta}{\sin \varphi_\eta} \cos \varphi_\eta 
        \E_X\left[\exp(\sin \varphi_\eta X)X\right] = \theta^\ast \frac{\cos^2 \varphi_\eta}{\sin \varphi_\eta}\cos \varphi_\eta \frac{\sin \varphi_\eta}{\cos^3 \varphi_\eta} = \theta^\ast\\
        N(\theta^\ast, \nu^\ast) &=& 
        \cos \varphi_\eta \tanh \nu^\ast \E_X\left[\exp(\sin \varphi_\eta X) \right] = \cos \varphi_\eta \tanh \nu^\ast \frac{1}{\cos \varphi_\eta} = \tanh \nu^\ast
    \end{eqnarray*}
    For the point \((\theta, \nu) = (-\theta^\ast, -\nu^\ast)\), we have \(k=\|\theta\|/\|\theta^\ast\|=1, \varphi =\pi/2, \sgn(\rho)=-1,\vec{e}_1 = \theta/\|\theta\|=-\theta^\ast/\|\theta^\ast\|\), 
    then \(\cos \varphi =0,\sin \varphi =1, \sin \varphi_\eta = \sin \varphi/\sqrt{1+\eta^{-2}}=-1/\sqrt{1+\eta^{-2}}, \cos \varphi_\eta = \eta^{-1}/\sqrt{1+\eta^{-2}},
    A_\eta = k\eta^2 \sqrt{1+\eta^{-2}}=\eta^2 \sqrt{1+\eta^{-2}}=\sin\varphi_\eta/ \cos^2\varphi_\eta\), 
    by the expressions of the EM update rules with \(g_\eta, g'\sim \mathcal{N}(0, 1)\) and \(\E[g_\eta g']=\sin \varphi_\eta\):
    \begin{eqnarray*}
        M(-\theta^\ast, -\nu^\ast)&=& -\theta^\ast \frac{\cos^2 \varphi_\eta}{\sin \varphi_\eta} 
        \E\left[\tanh\left(\sin \varphi_\eta \frac{g_\eta g'}{\cos^2 \varphi_\eta}+(-1)^{z+1}\nu^\ast \right)\frac{g_\eta g'}{\cos^2 \varphi_\eta}\right]
        = -M(\theta^\ast, \nu^\ast) = -\theta^\ast\\
        N(-\theta^\ast, -\nu^\ast)&=& -\mathbb{E} \left[ \tanh \left( (-1)^{z + 1} \sin \varphi_\eta \frac{g_\eta g'}{\cos^2 \varphi_\eta} + \nu^\ast \right) \right]
        = -N(\theta^\ast, \nu^\ast) = \tanh (-\nu^\ast)
    \end{eqnarray*}
    Therefore, we have verified these three distinct fixed points \((\theta^\ast, \nu^\ast), (-\theta^\ast, -\nu^\ast), (\vec{0}, 0)\) of the EM update rules.
\end{proof}

\begin{theorem}[Proposition~\ref{prop:contraction_property}: Contraction Property of EM Update Rule around the Fixed Point]
    Suppose that the unit direction vector \(\vec{e}^\perp\) 
    is orthogonal to the ground truth \(\theta^\ast\) of regression parameters, namely \(\langle \theta^\ast, \vec{e}^\perp \rangle = 0, \|\vec{e}^\perp\|=1\), 
    there exists \(k^\ast(\eta)>0\) which is determined by \(\eta = \| \theta^{\ast} \|/\sigma\) such that the \((k^\ast(\eta)\|\theta^\ast\|\vec{e}^\perp, 0)\) is a fixed point of the EM update rule \(M(\theta, \nu), N(\theta, \nu)\) for regression parameters \(\theta\) and the imbalance of mixing weights \(\tanh \nu\) given any SNR \(\eta = \| \theta^{\ast} \|/\sigma > 0\):
    \[
    \begin{pmatrix}
        M(k^\ast(\eta)\|\theta^\ast\|\vec{e}^\perp, 0) \\ N(k^\ast(\eta)\|\theta^\ast\|\vec{e}^\perp, 0)
    \end{pmatrix} = \begin{pmatrix}
        k^\ast(\eta)\|\theta^\ast\|\vec{e}^\perp \\ \tanh(0)
    \end{pmatrix}.
    \]
    For any \(k>0\),\(N(k\|\theta^\ast\|\vec{e}^\perp, 0) = \tanh(0)\) and  \(M(k\|\theta^\ast\|\vec{e}^\perp, 0), \vec{e}^\perp\) have the same direction, 
    \[
    \begin{cases}
        0< \|M(k\|\theta^\ast\|\vec{e}^\perp, 0)\|/\|\theta^\ast\|-k < k^\ast(\eta)-k\quad \text{if } k<k^\ast(\eta),\\
        0= \|M(k\|\theta^\ast\|\vec{e}^\perp, 0)\|/\|\theta^\ast\|-k = k^\ast(\eta)-k\quad \text{if } k=k^\ast(\eta),\\
        0>\|M(k\|\theta^\ast\|\vec{e}^\perp, 0)\|/\|\theta^\ast\|-k > k^\ast(\eta)-k\quad \text{if } k>k^\ast(\eta).
    \end{cases}
    \]
    Moreover, the bounds of \(k^\ast(\eta)\) are:
    \[
        \frac{1}{\sqrt{3}} < k^\ast(\eta) < \min\left(\frac{2}{\pi}\sqrt{1+\eta^{-2}}, 1\right), \forall\, \eta>0.
    \]
    with \(\lim_{\eta\to 0_+} k^\ast(\eta) = \frac{1}{\sqrt{3}}\) and \(\lim_{\eta\to \infty} k^\ast(\eta) = \frac{2}{\pi}\).
\end{theorem}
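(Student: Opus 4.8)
The plan is to reduce the assertion to a one-dimensional fixed-point problem by specializing the explicit formula of Theorem~\ref{theorem:explicit_em_update} to $\theta = k\|\theta^\ast\|\vec{e}^\perp$ and $\nu = 0$, and then to read off the fixed point, the contraction, and the bounds from the monotonicity of the resulting scalar map. First, at such $\theta$ we have $\rho = 0$, so $\varphi = \varphi_\eta = 0$, $\vec{e}_1 = \vec{e}^\perp$, $\vec{e}_2 = \theta^\ast/\|\theta^\ast\|$, $A_\eta = k\eta^2\sqrt{1+\eta^{-2}}$, and the $\rho=0$ branch of Theorem~\ref{theorem:explicit_em_update} applies (with $\operatorname{sgn}(0)=1$). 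Substituting $\nu=0$: the $\vec{e}_2$-coefficient $\E[(-1)^{z+1}]\,\E[\tanh(A_\eta g_\eta g')\,g_\eta^2]$ vanishes because $\tanh(A_\eta g_\eta g')\,g_\eta^2$ is odd in $g'$, and $N = \E[\tanh(A_\eta X)] = 0 = \tanh 0$ because $X$ (with density $K_0(|\cdot|)/\pi$, the law of a product of two independent standard normals) is symmetric. Hence $N(k\|\theta^\ast\|\vec{e}^\perp,0) = \tanh 0$ and $M(k\|\theta^\ast\|\vec{e}^\perp,0) = \|\theta^\ast\|\,g(k)\,\vec{e}^\perp$ with $g(k) := \sqrt{1+\eta^{-2}}\,\Phi(A_\eta)$, where $\Phi(a) := \E[\tanh(aX)X] = \E[\tanh(ag_\eta g')g_\eta g'] \ge 0$; since $\Phi(A_\eta) > 0$ for $k>0$, $M$ indeed has the direction of $\vec{e}^\perp$. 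Everything now concerns the scalar equation $g(k) = k$.

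Next I would establish three structural facts about $g$, differentiating under the expectation (justified by obvious domination). (i) $g$ is strictly increasing, since $\partial_k[\tanh(A_\eta X)X] = \eta^2\sqrt{1+\eta^{-2}}\,X^2\operatorname{sech}^2(A_\eta X) > 0$ a.s. (ii) $g(k)/k$ is strictly decreasing, since $\tfrac{d}{dk}\big[\tanh(A_\eta X)X/k\big] = k^{-2}X\big[A_\eta X\operatorname{sech}^2(A_\eta X) - \tanh(A_\eta X)\big] < 0$ a.s.; this uses $\tanh u > u\operatorname{sech}^2 u$ for $u>0$, which holds because $\operatorname{sech}^2$ is strictly decreasing on $[0,\infty)$, so $\tanh u = \int_0^u \operatorname{sech}^2 t\,dt > u\operatorname{sech}^2 u$. (iii) $g(k)/k \to 1+\eta^2$ as $k\to 0^+$ (from $\Phi'(0) = \E[X^2] = 1$, i.e.\ $g(k)/k \to \sqrt{1+\eta^{-2}}\cdot\eta^2\sqrt{1+\eta^{-2}} = 1+\eta^2$), and $g(k) \to \tfrac{2}{\pi}\sqrt{1+\eta^{-2}}$ as $k\to\infty$ (monotone convergence, $\tanh(A_\eta X)X \uparrow |X|$, with $\E[|X|] = \E[|g_\eta|]\,\E[|g'|] = 2/\pi$).

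By (ii)--(iii), $k\mapsto g(k)/k$ is a continuous strictly decreasing bijection onto $(0,1+\eta^2)$, and since $1+\eta^2 > 1$ it hits the value $1$ at a unique $k^\ast(\eta) > 0$; this is exactly the claimed fixed point (with $N$-component $\tanh 0$). For $0 < k < k^\ast$, $g(k)/k > 1$ gives $g(k) > k$, while $g$ increasing gives $g(k) < g(k^\ast) = k^\ast$, i.e.\ $0 < \|M\|/\|\theta^\ast\| - k < k^\ast - k$; the case $k > k^\ast$ is the mirror image, and $k = k^\ast$ is the fixed point — this is precisely the three-case contraction statement. Moreover $g(k) = \sqrt{1+\eta^{-2}}\,\Phi(A_\eta) < \sqrt{1+\eta^{-2}}\,\E[|X|] = \tfrac{2}{\pi}\sqrt{1+\eta^{-2}}$ for every $k$ (strictly, since $\tanh(A_\eta X)X < |X|$ a.s.), so in particular $k^\ast = g(k^\ast) < \tfrac{2}{\pi}\sqrt{1+\eta^{-2}}$, giving the first half of the upper bound.

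Because $g(\cdot)/k$ is strictly decreasing, $k^\ast < 1 \iff g(1) < 1$ and $k^\ast > 1/\sqrt 3 \iff g(1/\sqrt 3) > 1/\sqrt 3$; writing $\beta := \eta\sqrt{1+\eta^2}$ (so $A_\eta = k\beta$) these become the two $\beta$-uniform scalar inequalities $\sqrt{1+\eta^{-2}}\,\Phi(\beta) < 1$ and $\sqrt{1+\eta^{-2}}\,\Phi(\beta/\sqrt 3) > 1/\sqrt 3$, equivalently $\Phi(\beta) < \tfrac{\sqrt{1+4\beta^2}-1}{2\beta}$ and $\Phi(b) > \tfrac{\sqrt{1+12b^2}-1}{6b}$ with $b = \beta/\sqrt 3$, for all $\beta>0$. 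I would prove these by sandwiching $\Phi$ with the sharp rational ("Padé") bounds $\tfrac{3u}{3+u^2} \le \tanh u \le u\wedge 1$ together with the exact low moments $\E[X^2]=1$, $\E[X^4]=9$ of the product-normal law, or equivalently via the Gaussian integration-by-parts identity $\Phi(a) = a\,\E[g'^2\operatorname{sech}^2(aX)]$, which turns $g(k)=k$ into $\E[g'^2\operatorname{sech}^2(A_\eta X)] = (1+\eta^2)^{-1}$. The two limits then follow from (i)--(iii) and the bounds: as $\eta\to 0$, $A_\eta^\ast = k^\ast\beta \to 0$ (since $1/\sqrt 3 < k^\ast < 1$), and substituting $\Phi(a) = a - 3a^3 + \mathcal{O}(a^5)$ into $g(k^\ast)=k^\ast$ forces $(k^\ast)^2 \to 1/3$; as $\eta\to\infty$, $A_\eta^\ast \ge \eta^2/\sqrt 3 \to\infty$ forces $\Phi(A_\eta^\ast)\to 2/\pi$ and $\sqrt{1+\eta^{-2}}\to 1$, so $k^\ast = g(k^\ast)\to 2/\pi$. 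The main obstacle is exactly these two $\eta$-uniform scalar inequalities: everything else is soft (monotonicity plus the intermediate value theorem), but pinning $k^\ast$ strictly between $1/\sqrt 3$ and $1$ at every SNR requires genuinely tight two-sided control of $\E[\tanh(aX)X]$ by an algebraic function of $a$ — the rational bounds on $\tanh$ are only $\mathcal{O}(a^5)$-tight near $a=0$, so they must be combined with the exact low moments of $X$, and the intermediate regime of $\eta$ (where $k^\ast$ is not monotone in $\eta$) may need an additional monotonicity argument for $\eta\mapsto g(1)$ and $\eta\mapsto g(1/\sqrt 3)$ or a rigorous single-variable verification; the sign analysis of $u\operatorname{sech}^2 u - \tanh u$ in step (ii) also has to be done carefully.
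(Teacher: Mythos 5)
Your reduction is the same as the paper's: specialize Theorem~\ref{theorem:explicit_em_update} to $\rho=0$, $\nu=0$, kill the $\vec{e}_2$-component and the $N$-update by symmetry of $X\sim K_0(|\cdot|)/\pi$, and study the scalar equation $\sqrt{1+\eta^{-2}}\,\E[\tanh(k\eta^2\sqrt{1+\eta^{-2}}\,X)X]=k$. Where you genuinely diverge is in the structural handle on that equation: the paper works with $G_\eta(k)=g(k)-k$ and its strict concavity (plus the signs of $G_\eta$ at $1/\sqrt3$, $1$, $\tfrac2\pi\sqrt{1+\eta^{-2}}$), whereas you use strict monotonicity of $g$ together with strict decrease of $g(k)/k$ via $\tanh u>u\,\mathrm{sech}^2 u$. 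Your route gives existence, uniqueness, the three-case contraction, and the $\tfrac2\pi\sqrt{1+\eta^{-2}}$ bound more cleanly, and your limit arguments (direct expansion $\Phi(a)=a-3a^3+\mathcal{O}(a^5)$ forced through $g(k^\ast)=k^\ast$ as $\eta\to0$, and monotone convergence $\Phi(A^\ast_\eta)\to 2/\pi$ as $\eta\to\infty$) are simpler than the paper's uniform-convergence-of-roots lemma (Lemma~\ref{suplem:limit_delta_epsilon}), once the a priori confinement $k^\ast\in(1/\sqrt3,1)$ is available.

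The gap is exactly where you flag it, and it is not a small one: the two $\eta$-uniform inequalities $\Phi(\beta)<\tfrac{\sqrt{1+4\beta^2}-1}{2\beta}$ and $\Phi(\beta/\sqrt3)>\tfrac{\sqrt{1+12(\beta/\sqrt3)^2}-1}{6(\beta/\sqrt3)}$ are precisely the paper's Lemma~\ref{suplem:bounds_expectation_A}, and you state them as needed but do not prove them. The sketched route is unlikely to close as written: for the lower bound, the Pad\'e minorant $\tanh u\ge \tfrac{3u}{3+u^2}$ leads to $\E\bigl[\tfrac{3bX^2}{3+b^2X^2}\bigr]$, and since $x\mapsto \tfrac{3bx}{3+b^2x}$ is concave, Jensen bounds this expectation from \emph{above} by $\tfrac{3b}{3+b^2}$, the wrong direction; low moments $\E[X^2]=1$, $\E[X^4]=9$ only control the regime $\beta\to0$ up to $\mathcal{O}(\beta^5)$, and the $u\wedge 1$ majorant only controls $\beta\to\infty$. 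The intermediate range of $\beta$ is the real difficulty — the paper itself bridges it by combining series bounds near $0$, exponential tail bounds via $K_0<K_{1/2}$ for large $\beta$, and finitely many numerical evaluations of $\Phi$ and its derivative together with concavity (tangent/chord separation). So your proposal establishes the fixed point, the contraction, the upper bound $k^\ast<\tfrac2\pi\sqrt{1+\eta^{-2}}$, and (conditionally on the confinement) the two limits, but the strict pinning $1/\sqrt3<k^\ast(\eta)<1$ for all $\eta>0$ remains unproved, and without it your $\eta\to0$ and $\eta\to\infty$ limit arguments also lose their footing, since both rely on $A^\ast_\eta=k^\ast\beta$ being respectively small and large.
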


\begin{proof}
    Suppose the regression parameters \(\theta = k\|\theta^\ast\|\vec{e}^\perp\) for some \(k>0\) and the imbalance of mixing weights \(\tanh \nu = 0\), 
    then by proof of for the expressions of the EM update rules, we have the following expressions with \(A_\eta =k\eta^2 \sqrt{1+\eta^{-2}}\):
    \begin{eqnarray*}
        M(k\|\theta^\ast\|\vec{e}^\perp, 0)/\| \theta^\ast \| & = & 
        \vec{e}_1 \sqrt{1+\eta^{-2}} \mathbb{E}[ \tanh \left(A_\eta X +  0\right) X]
        +  \vec{e}_2  \mathbb{E}[(-1)^{z+1}] \E[\tanh \left( A_\eta g_\eta g' + 0 \right) g_\eta^2 ]\\
        & = & \vec{e}^\perp \sqrt{1+\eta^{-2}} \mathbb{E}[ \tanh \left(A_\eta X\right) X]\\
        N(k\|\theta^\ast\|\vec{e}^\perp, 0) & = &  \E \left[ \tanh \left( A_\eta X + 0 \right) \right] 
        = \E \left[ \tanh \left( A_\eta X \right) \right] = 0
    \end{eqnarray*}
    due to the fact that \(\vec{e}_1 = \theta/\|\theta\| = \vec{e}^\perp\) and \(X=(-1)^{z+1}g_\eta g'\sim f_X(x) =K_0(|x|)/\pi\) is a symmetric random variable, and \(g_\eta, g'\sim \mathcal{N}(0, 1)\) with \(\E[g_\eta g']=0\), 
    \[
    \E[\tanh(A_\eta g_\eta g')g_\eta^2] = \E_{g_\eta}\E_{g'}[\tanh(A_\eta g_\eta g')g_\eta^2] = \E_{g_\eta}[0] =0
    \]
    Moreover, the EM update \(M(k\|\theta^\ast\|\vec{e}^\perp, 0)\) and \(\vec{e}^\perp\) have the same direction, since 
    \[
    M(k\|\theta^\ast\|\vec{e}^\perp, 0)=\vec{e}^\perp \sqrt{1+\eta^{-2}} \mathbb{E}[ \tanh \left(A_\eta X\right) X] \in \text{span}\{\vec{e}^\perp\}, \left\langle \frac{M(k\|\theta^\ast\|\vec{e}^\perp, 0)}{\|\theta^\ast\|}, \vec{e}^\perp \right\rangle=\sqrt{1+\eta^{-2}} \mathbb{E}[ \tanh \left(A_\eta X\right) X] > 0
    \]
    For the ease of analysis, we introduce the following function \(G_\eta(k)\) with \(k>0\):
    \[
    G_\eta(k) = \frac{\|M(k\|\theta^\ast\|\vec{e}^\perp, 0)\|}{\|\theta^\ast\|} - k = \sqrt{1+\eta^{-2}} \mathbb{E}[ \tanh \left(A_\eta X\right) X] - k
    = \sqrt{1+\eta^{-2}} \E[\tanh(k\eta^2 \sqrt{1+\eta^{-2}} X)X] - k
    \]
    We obtain the derivative and second derivative of \(G_\eta(k)\) as follows by applying Leibniz integral rule or the dominated convergence theorem to exchange the order of taking derivative and taking expectations (see Theorem 1.5.8, page 24 of~\cite{durrett2019probability}):
    \begin{eqnarray*}
    \frac{\partial G_\eta(k)}{\partial k} &=& \sqrt{1+\eta^{-2}} \E[\partial_k \tanh(k\eta^2 \sqrt{1+\eta^{-2}} X)X] -1
    = \eta^2(1+\eta^{-2}) \E\left[[1-\tanh^2(k\eta^2 \sqrt{1+\eta^{-2}} X)]X^2\right] -1\\
    \frac{\partial^2 G_\eta(k)}{\partial k^2} &=& \eta^2(1+\eta^{-2}) \E\partial_k\left[\frac{X^2}{\cosh^2(k\eta^2 \sqrt{1+\eta^{-2}} X)}\right] 
    = -2\eta^4(1+\eta^{-2})^{3/2} \E\left[\frac{\tanh(k\eta^2\sqrt{1+\eta^{-2}}X)X^3}{\cosh^2(k\eta^2\sqrt{1+\eta^{-2}}X)}\right] < 0
    \end{eqnarray*}
    The initial/final values of \(G_\eta(k)\) and its derivative are obtained by applying the dominated convergence theorem to exchange the order of taking derivative and taking expectations (see Theorem 1.5.8, page 24 of~\cite{durrett2019probability}) and using the fact \(\E[X^2]=1\):
    \begin{eqnarray*}
        \lim_{k\to 0_+} G_\eta(k) &=& \sqrt{1+\eta^{-2}} \E[\lim_{k\to 0_+} \tanh(k\eta^2 \sqrt{1+\eta^{-2}} X)X] - 0 = \E[0]-0 =0\\
        \lim_{k\to 0_+} \frac{\partial G_\eta(k)}{\partial k} &=& \eta^2(1+\eta^{-2}) \E\left[[1-\lim_{k\to 0_+}\tanh^2(k\eta^2 \sqrt{1+\eta^{-2}} X)]X^2\right] -1 = \eta^2(1+\eta^{-2}) -1=\eta^{2} > 0\\
        \lim_{k\to \infty} G_\eta(k) &=& \sqrt{1+\eta^{-2}} \E[\lim_{k\to \infty} \tanh(k\eta^2 \sqrt{1+\eta^{-2}} X)X] -\lim_{k\to \infty} k = \E[|X|]-\infty = -\infty\\
        \lim_{k\to \infty} \frac{\partial G_\eta(k)}{\partial k} &=& \eta^2(1+\eta^{-2}) \E\left[[1-\lim_{k\to \infty}\tanh^2(k\eta^2 \sqrt{1+\eta^{-2}} X)]X^2\right] -1 = \eta^2(1+\eta^{-2})\E[0] -1 = -1
    \end{eqnarray*}
    The signs of some special values of \(G_\eta(k)\) are determined by Lemma~\ref{suplem:bounds_expectation_A} 
    with \(A:= \eta^2 \sqrt{1+\eta^{-2}}, \frac{1}{\sqrt{1+\eta^{-2}}}=\frac{\sqrt{4A^2+1}-1}{2A}\) 
    and \(A' = \frac{1}{\sqrt{3}}\eta^2\sqrt{1+\eta^{-2}}, \frac{1}{\sqrt{1+\eta^{-2}}}=\frac{\sqrt{12A'^2+1}-1}{2\sqrt{3}A'}\)
    and noting the fact that \(\tanh(x) < 1, \forall x> 0\) and \(\E[|X|] = \frac{2}{\pi}\):
    \begin{eqnarray*}
        & & G_\eta\left(\frac{2}{\pi}\sqrt{1+\eta^{-2}}\right) < \sqrt{1+\eta^{-2}}\E[|X|] - \frac{2}{\pi}\sqrt{1+\eta^{-2}} = 0\\
        & & G_\eta(1) =  \sqrt{1+\eta^{-2}}\left(\E[\tanh(AX)X] - \frac{\sqrt{4A^2+1}-1}{2A}\right) < 0\\
        & & G_\eta\left(\frac{1}{\sqrt{3}}\right) = 
        \sqrt{1+\eta^{-2}}\left(\E[\tanh(A'X)X] - \frac{\sqrt{12A'^2+1}-1}{6A'}\right) > 0
    \end{eqnarray*}
    \textbf{Existence and uniqueness of \(k^\ast(\eta)\)}: Therefore, there exists \(k^\ast(\eta)\in(\frac{1}{\sqrt{3}}, \min(\frac{2}{\pi}\sqrt{1+\eta^{-2}}, 1))\) such that \(G_\eta(k^\ast(\eta)) = 0\) by the continuity of \(G_\eta(k)\) and the intermediate value theorem
    (see 4.23 Theorem on page 93 of~\cite{rudin1976}). Moreover, the root \(k^\ast(\eta)\) of \(G_\eta(k) = 0\) is unique, otherwise it would contradict to the fact of \(G_\eta(\frac{1}{\sqrt{3}}) > 0\) and \(\frac{\partial^2 G_\eta(k)}{\partial k^2} < 0\).
    If there exists another root \(\tilde{k}^\ast(\eta)\) such that \(G_\eta(\tilde{k}^\ast(\eta)) = 0\) (we may assume \(k^\ast(\eta)<\tilde{k}^\ast(\eta)\)), 
    then there exists \(\bar{k}\in(k^\ast(\eta), \tilde{k}^\ast(\eta))\) such that \(\frac{\partial G_\eta(k)}{\partial k}\mid_{k=\bar{k}} = 0 =\frac{G_\eta(\tilde{k}^\ast(\eta))-G_\eta(k^\ast(\eta))}{\tilde{k}^\ast(\eta)-k^\ast(\eta)}\) 
    by using the mean value theorem (see 5.10 Theorem on page 108 of~\cite{rudin1976}).
    Then by \(\frac{\partial^2 G_\eta(\bar{k})}{\partial k^2} < 0\), we have \(\frac{\partial G_\eta(k)}{\partial k} > \frac{\partial G_\eta(k)}{\partial k}\mid_{k=\bar{k}} = 0\) for \(k\in(\frac{1}{\sqrt{3}}, \bar{k})\), since \(k^\ast(\eta) \in (\frac{1}{\sqrt{3}}, \bar{k})\), 
    we have \(G_\eta(k^\ast(\eta)) > G_\eta(\frac{1}{\sqrt{3}}) > 0\), which contradicts to the fact of \(G_\eta(k^\ast(\eta)) = 0\).

    \textbf{Contraction mapping with fixed point \(k^\ast(\eta)\)}: Again, by applying the mean value theorem to \(G_\eta(k)\) on \((0, k^\ast(\eta))\) and noting the fact that \(\lim_{k\to 0_+} G_\eta(k) = G_\eta(k^\ast(\eta)) = 0\), 
    there exists \(\bar{k}\in(0, k^\ast(\eta))\) such that \(G_\eta(\bar{k}) = 0\) and \(\frac{\partial G_\eta(k)}{\partial k}\mid_{k=\bar{k}} = 0\).
    Noting \(\frac{\partial^2 G_\eta(k)}{\partial k^2} < 0\), we have \(\frac{\partial G_\eta(k)}{\partial k} > \frac{\partial G_\eta(k)}{\partial k}\mid_{k=\bar{k}} = 0\) for \(k\in(0, \bar{k})\), 
    and \(\frac{\partial G_\eta(k)}{\partial k} < \frac{\partial G_\eta(k)}{\partial k}\mid_{k=\bar{k}} = 0\) for \(k\in(\bar{k}, \infty)\), and therfore 
    \begin{eqnarray*}
        G_\eta(k) > \lim_{k\to 0_+} G_\eta(k) = 0\quad \forall k\in(0, \bar{k}];\quad
        G_\eta(k) > G_\eta(k^\ast(\eta)) = 0\quad \forall k\in(\bar{k}, k^\ast(\eta))\\
        \implies G_\eta(k) > 0\quad \forall k\in(0, k^\ast(\eta));\quad
        G_\eta(k) < G_\eta(k^\ast(\eta)) = 0\quad \forall k\in(k^\ast(\eta), \infty)
    \end{eqnarray*}
    Recalling the definition of \(G_\eta(k):=\frac{\|M(k\|\theta^\ast\|\vec{e}^\perp, 0)\|}{\|\theta^\ast\|} - k\) for \(k>0\), 
    noting \(G_\eta(k)+k=\|M(k\|\theta^\ast\|\vec{e}^\perp, 0)\|/\|\theta^\ast\| = \sqrt{1+\eta^{-2}} \mathbb{E}[ \tanh \left( k\eta^2 \sqrt{1+\eta^{-2}} X\right) X]\) is strictly increasing with respect to \(k\), we have:
    \begin{eqnarray*}
        0< \frac{\|M(k\|\theta^\ast\|\vec{e}^\perp, 0)\|}{\|\theta^\ast\|} - k = G_\eta(k) = (G_\eta(k)+k) -k < (G_\eta(k^\ast(\eta))+k^\ast(\eta)) -k = k^\ast(\eta) -k\quad \text{if } k<k^\ast(\eta)\\
        0= \frac{\|M(k\|\theta^\ast\|\vec{e}^\perp, 0)\|}{\|\theta^\ast\|} - k = G_\eta(k) = (G_\eta(k)+k) -k = (G_\eta(k^\ast(\eta))+k^\ast(\eta)) -k = k^\ast(\eta) -k\quad \text{if } k=k^\ast(\eta)\\
        0> \frac{\|M(k\|\theta^\ast\|\vec{e}^\perp, 0)\|}{\|\theta^\ast\|} - k = G_\eta(k) = (G_\eta(k)+k) -k > (G_\eta(k^\ast(\eta))+k^\ast(\eta)) -k = k^\ast(\eta) -k\quad \text{if } k>k^\ast(\eta)
    \end{eqnarray*}
    \textbf{Limit behavior of \(k^\ast(\eta)\)}: To analyze the limit bahavior of \(k^\ast(\eta)\) as \(\eta\to 0_+\) and \(\eta\to \infty\), 
    we notice that \(A:=\eta^2\sqrt{1+\eta^{-2}}=\eta\sqrt{1+\eta^2}\to 0_+\) as \(\eta\to 0_+\) and \(A\to \infty\) as \(\eta\to \infty\),
    and there is a one-to-one increasing mapping \(\eta(A)\) from \(A\) to \(\eta\).
    Consequently, we write \(k^\ast(\eta(A))\) as \(k^\ast(A)\) for simplicity. 
    From the definition of \(k^\ast(\eta)\) such that \(G_\eta(k^\ast(\eta)) = 0\), 
    we show that \(k=k^\ast(A)\) is the unique root of \(F_1(A, k) = 0\) for any \(A>0\), and is also the unique root of \(F_2(A, k) = 0\) for any \(A>0\),
    where \(F_1(A, k), F_2(A, k)\) are defined as follows:
    \begin{eqnarray*}
        F_1(A, k) &=& \left(\E[\tanh(kA X)X] - k \frac{\sqrt{4A^2+1}-1}{2A}\right)/A^3\\
        F_2(A, k) &=& \E[\tanh(kA X)X] - k \frac{\sqrt{4A^2+1}-1}{2A}
    \end{eqnarray*}
    These \(F_1(A, k), F_2(A, k)\) are continuous in \(A\) and \(k\) for any \(A>0, k>0\), their limits as \(A\to 0_+\) and \(A\to \infty\) are as follows:
    \begin{eqnarray*}
        F_{1, 0_+}(k) &= & \lim_{A\to 0_+} F_1(A, k) = \lim_{A\to 0_+} 3k^3 -k +\mathcal{O}(A^2) = k(3k^2-1) \\
        F_{2, \infty}(k) &= & \lim_{A\to \infty} F_2(A, k) = \E[|X|]- k = \frac{2}{\pi} - k \\
    \end{eqnarray*}
    due to that \(\tanh(t) = t - \frac{t^3}{3} + \mathcal{O}(t^5), \frac{\sqrt{4A^2+1}-1}{2A} = A - A^3 + \mathcal{O}(A^{5})\) and \(\E[X^{2n}]=[(2n-1)!!]^2, \E[|X|] = \frac{2}{\pi}\) by Lemma~\ref{suplem:moments}.
    We denote the unique positive root of \(F_{1, 0_+}(k) = 0\) as \(k^\ast_{0_+}\) and the unique root of \(F_{2, \infty}(k) = 0\) as \(k^\ast_{\infty}\).
    \[
    k^\ast_{0_+} = \frac{1}{\sqrt{3}},\quad 
    k^\ast_{\infty} = \frac{2}{\pi}
    \]
    We show that the limits of \(k^\ast(A)\) as \(A\to 0_+\) and \(A\to \infty\) are \(k^\ast_{0_+}\) and \(k^\ast_{\infty}\), respectively, 
    \[
    \lim_{\eta\to 0_+}k^\ast(\eta) = \lim_{A\to 0_+} k^\ast(A) = k^\ast_{0_+} = \frac{1}{\sqrt{3}},\quad 
    \lim_{\eta\to \infty}k^\ast(\eta) = \lim_{A\to \infty} k^\ast(A) = k^\ast_{\infty} = \frac{2}{\pi}
    \]
    by checking the following conditions of \(F_1(A, k), F_2(A, k)\) and \(F_{1, 0_+}(k), F_{2, \infty}(k)\) and applying Lemma~\ref{suplem:limit_delta_epsilon} for the rigourous analysis
    and susbstiting \(A_0\leftarrow 0_+, \infty\), \(F\leftarrow F_1, F_2\), \(k_{A_0}^\ast\leftarrow k^\ast_{0_+}, k^\ast_{\infty}\) in this lemma, respectively.

    \textbf{Conditions of \(F_1(A, k), F_2(A, k)\) and \(F_{1, 0_+}(k), F_{2, \infty}(k)\)}: (1) \(F_1(A, k), F_2(A, k)\) are continuous in \(k\) for any \(A>0\), since 
    \(\frac{\partial F_1(A, k)}{\partial k}\) and \(\frac{\partial F_2(A, k)}{\partial k}\) exists for any \(k,A>0\).
    The root \(k=k^\ast(A)\) of \(F_1(A,k) =0, F_2(A,k) =0\) is unique for any \(A>0\).

    (2) \(F_1(A,k), F_2(A, k)\) converge to \(F_{1, 0_+}(k), F_{2, \infty}(k)\) on compact interval \(I:=[\frac{1}{2}, 1]\) as \(A\to 0_+\) and \(A\to \infty\), respectively.
    \[
    \sup_{k\in I}|F_1(A, k) - F_{1, 0_+}(k)| \leq \sup_{k\in I} \max(|30k^5 A^2|, |2 k A^2|) =  30 A^2 < \varepsilon
    \]
    for \(0<A < \sqrt{\varepsilon/30}, \forall\varepsilon > 0\), by using \(t-\frac{1}{3} t^3 <\tanh(t) < t - \frac{t^3}{3} + \frac{2}{15}t^5, \forall t>0\), \(A - A^3 < \frac{\sqrt{4A^2+1}-1}{2A} < A-A^3 +2A^5\) and \(\E[X^{2n}]=[(2n-1)!!]^2\) in Lemma~\ref{suplem:moments}.
    Similarly, we have
    \begin{eqnarray*}
    \sup_{k\in I}|F_2(A, k) - F_{2, \infty}(k)| &\leq& \sup_{k\in I} |2\E[|X|\exp(-2kA|X|)]|+ \left|\frac{k}{2} A^{-1}\right|
    < \frac{4}{\sqrt{2\pi}}\int^\infty_0 \sqrt{x} \exp(-Ax)\mathd x + \frac{1}{2}A^{-1}\\
    &=& \sqrt{2} A^{-\frac{3}{2}} + \frac{1}{2}A^{-1} \leq \frac{5}{2}A^{-1} < \varepsilon
    \end{eqnarray*}
    for \(A>\max(\frac{1}{2}, \frac{5}{2\varepsilon}), \forall\varepsilon > 0\), by using \(1 -2\exp(-2t) <\tanh(t) < 1-\exp(-2t), \forall t>0\), \(1- \frac{1}{2}A^{-1} < \frac{\sqrt{4A^2+1}-1}{2A} < 1- \frac{1}{2}A^{-1} + \frac{1}{8}A^{-2}\leq 1- \frac{1}{4}A^{-1}, \forall A\geq \frac{1}{2}\)
    and \(K_0(x) < K_{\frac{1}{2}}(x) = \sqrt{\frac{\pi}{2}} \frac{\exp(-x)}{\sqrt{x}}, \forall x>0\) by the monotonicity property of \(K_\nu\) with respect to \(\nu\geq0\) (see Section 10.37 Inequalities; Monotonicity; Section 10.39 Relation to Other
    Functions, Chapter 10 Bessel Function of~\cite{olver2010nist}).

    (3) The roots \(k^\ast_{0_+}=\frac{1}{\sqrt{3}}, k^\ast_{\infty}=\frac{2}{\pi}\) of \(F_{1, 0_+}(k)=0, F_{2, \infty}(k)=0\) are the interior points of the compact interval \(I=[\frac{1}{2}, 1]\), and 
    \(\frac{\mathd F_{1, 0_+}(k)}{\mathd k}\mid_{k=k^\ast_{0_+}} = 2 \neq 0, \frac{\mathd F_{2, \infty}(k)}{\mathd k}\mid_{k=k^\ast_{\infty}} = -1 \neq 0\),
    and \(\frac{\mathd F_{1, 0_+}(k)}{\mathd k}=9k^2-1, \frac{\mathd F_{2, \infty}(k)}{\mathd k}=-1\) are continuous in \(k\) on \(I\).
\end{proof}

\subsection{EM Update Rule in the Noiseless Setting}
\begin{theorem}[Corollary~\ref{cor:em_updates_noiseless}: EM Updates in Noiseless Setting, Corollary 3.3 in~\cite{luo24cycloid}]
    In the noiseless setting, namely SNR \(\eta := \| \theta^{\ast} \|/\sigma \to \infty\), the population EM update rules \(M(\theta, \nu), N(\theta, \nu)\) for regression parameters \(\theta\) and imbalance of mixing weights \(\tanh \nu\) are:
    \begin{eqnarray*}
        & & \frac{M(\theta, \nu)}{\| \theta^{\ast} \|} = \frac{2}{\pi} \left[ \sgn(\rho)\varphi \frac{\theta^{\ast}}{\| \theta^{\ast} \|}  + \cos \varphi \frac{\theta}{\| \theta \|}  \right]\\
        & & N(\theta, \nu) = \sgn(\rho) \frac{2}{\pi} \varphi \cdot \tanh \nu^{\ast}
    \end{eqnarray*}
    where \(\varphi := \frac{\pi}{2}-\arccos |\rho|\), \(\rho := \frac{\langle \theta, \theta^\ast \rangle}{\|\theta\|\|\theta^\ast\|}\).
\end{theorem}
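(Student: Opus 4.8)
The plan is to let $\eta \to \infty$ in the explicit formulas of Theorem~\ref{theorem:explicit_em_update}. From the proof of that theorem, $g' = \sgn(\rho)Z$ and $g_\eta = (\rho Z + \sqrt{1-\rho^2}\,Z_\ast + \eta^{-1}Z_\varepsilon)/\sqrt{1+\eta^{-2}}$ for fixed standard normal $Z, Z_\ast, Z_\varepsilon$, so $g_\eta \to g := \rho Z + \sqrt{1-\rho^2}\,Z_\ast$ with $(g,g')$ standard bivariate normal and $\mathbb{E}[gg'] = \sgn(\rho)\rho = \sin\varphi$. On the deterministic side $A_\eta = k\eta^2\sqrt{1+\eta^{-2}} \to \infty$ and $\varphi_\eta = \arcsin(\sin\varphi/\sqrt{1+\eta^{-2}}) \to \varphi$, hence $\sin\varphi/\sin\varphi_\eta \to 1$ and $\cos\varphi/\cos^2\varphi_\eta \to 1/\cos\varphi$. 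Using $\tanh(A_\eta t + c) \to \sgn(t)$ for every $t\neq 0$ and every bounded $c$, together with $\mathbb{P}[g_\eta g' = 0] = 0$, the three integrands of Theorem~\ref{theorem:explicit_em_update} converge almost surely to $\sgn(gg')\,gg' = |gg'|$, to $\sgn(gg')\,g(g - \sin\varphi\, g')$, and to $(-1)^{z+1}\sgn(gg')$, respectively.

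I next justify interchanging limit and expectation by dominated convergence. For $\eta \geq 1$ one has $|g_\eta| \leq |Z| + |Z_\ast| + |Z_\varepsilon| =: h$ and $|g'| = |Z| \leq h$, so the first integrand is dominated by $h^2$, the second by $h(h+|Z|)$, and the third by $1$ (the factor $|\tanh| \leq 1$ makes the third trivial); all dominating functions are integrable. Taking limits under $\mathbb{E}[\cdot]$, and using that $z$ is independent of $(Z, Z_\ast, Z_\varepsilon)$ with $\mathbb{E}[(-1)^{z+1}] = \pi^\ast(1) - \pi^\ast(2) = \tanh\nu^\ast$, gives $N(\theta,\nu) \to \sgn(\rho)\,\mathbb{E}[(-1)^{z+1}]\,\mathbb{E}[\sgn(gg')] = \sgn(\rho)\tanh\nu^\ast\,\mathbb{E}[\sgn(gg')]$, while $M(\theta,\nu)/\|\theta^\ast\|$ tends to $\vec{e}_1\,\mathbb{E}[|gg'|] + \vec{e}_2\,\frac{\sgn(\rho)}{\cos\varphi}\big(\mathbb{E}[\sgn(gg')g^2] - \sin\varphi\,\mathbb{E}[|gg'|]\big)$.

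It remains to evaluate three moments of a standard bivariate normal $(g,g')$ with correlation $r := \sin\varphi$. The arcsine law gives $\mathbb{E}[\sgn(gg')] = \frac{2}{\pi}\arcsin r = \frac{2}{\pi}\varphi$. Applying Price's theorem — differentiating each expectation in $r$, computing the distributional derivative $\partial_g\partial_{g'}$ of $|gg'|$ and of $\sgn(gg')g^2$, and integrating from $r=0$ with the initial values $\mathbb{E}[|g|]\,\mathbb{E}[|g'|] = 2/\pi$ and $0$ — yields $\mathbb{E}[|gg'|] = \frac{2}{\pi}(r\arcsin r + \sqrt{1-r^2}) = \frac{2}{\pi}(\varphi\sin\varphi + \cos\varphi)$ and $\mathbb{E}[\sgn(gg')g^2] = \frac{2}{\pi}(r\sqrt{1-r^2} + \arcsin r) = \frac{2}{\pi}(\varphi + \sin\varphi\cos\varphi)$. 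Substituting, the $\vec{e}_1$-coefficient of the limit is $\frac{2}{\pi}(\varphi\sin\varphi + \cos\varphi)$, the $\vec{e}_2$-coefficient is $\frac{\sgn(\rho)}{\cos\varphi}\cdot\frac{2}{\pi}\varphi\cos^2\varphi = \frac{2}{\pi}\sgn(\rho)\varphi\cos\varphi$, and $N(\theta,\nu) \to \frac{2}{\pi}\sgn(\rho)\varphi\tanh\nu^\ast$.

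Finally I rewrite this in the stated basis. Since $|\rho| = \sin\varphi$ and $\vec{e}_2$ is, by its sign convention, the unit vector along $\theta^\ast - \langle\theta^\ast,\vec{e}_1\rangle\vec{e}_1$, we have $\hat{e}_1 := \theta^\ast/\|\theta^\ast\| = \sgn(\rho)\sin\varphi\,\vec{e}_1 + \cos\varphi\,\vec{e}_2$, whence $\frac{2}{\pi}[\sgn(\rho)\varphi\,\hat{e}_1 + \cos\varphi\,\vec{e}_1] = \frac{2}{\pi}[(\varphi\sin\varphi + \cos\varphi)\vec{e}_1 + \sgn(\rho)\varphi\cos\varphi\,\vec{e}_2]$, which matches the computed limit; this is the first displayed identity, and the $N$-limit is the second. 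The degenerate cases are routine: when $\rho = 0$ ($\varphi = 0$) the $\vec{e}_2$-term vanishes since $\mathbb{E}[\sgn(g_\eta g')g_\eta^2] = \mathbb{E}[g_\eta^2\sgn(g_\eta)]\,\mathbb{E}[\sgn(g')] = 0$, recovering $M/\|\theta^\ast\| = \frac{2}{\pi}\vec{e}_1$; when $\varphi = \pi/2$ the $\vec{e}_2$-coefficient is identically $0$ (its prefactor $\cos\varphi$ is zero) and $M/\|\theta^\ast\| \to \sgn(\rho)\hat{e}_1 = \frac{2}{\pi}\sgn(\rho)\tfrac{\pi}{2}\hat{e}_1$. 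I expect the main obstacle to be the evaluation of the mixed moments $\mathbb{E}[|gg'|]$ and $\mathbb{E}[\sgn(gg')g^2]$ at correlation $\sin\varphi$: Price's theorem makes this transparent but requires care with the distributional derivatives (equivalently, one conditions on one coordinate and does the resulting one-dimensional Gaussian integral); the dominated-convergence bookkeeping and the degenerate cases are straightforward. An alternative avoiding the $\eta\to\infty$ interchange is to evaluate $M(\theta,\nu) = \mathbb{E}[\tanh(y\langle x,\theta\rangle/\sigma^2 + \nu)yx]$ directly in the noiseless model, where it collapses to $\mathbb{E}[\,|\langle x,\theta^\ast\rangle|\,\sgn(\langle x,\theta\rangle)\,x\,]$ (dominated by $|\langle x,\theta^\ast\rangle|\,\|x\|$); expanding $x$ in the $\vec{e}_1,\vec{e}_2$ basis reduces to exactly the same three Gaussian moments.
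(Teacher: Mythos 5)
Your proposal is correct and follows essentially the same route as the paper: pass to the limit $\eta\to\infty$ in the explicit expressions of Theorem~\ref{theorem:explicit_em_update}, evaluate $\mathbb{E}[\sgn(gg')]$, $\mathbb{E}[|gg'|]$, and $\mathbb{E}[g^2\sgn(gg')]$ at correlation $\sin\varphi$ (the paper's Lemma~\ref{suplem:expectation}, proved there via Grothendieck's identity, Price's theorem, and Stein's lemma), and then rewrite the result using $\hat{e}_1=\sgn(\rho)\sin\varphi\,\vec{e}_1+\cos\varphi\,\vec{e}_2$. Your explicit dominated-convergence justification and the treatment of the degenerate cases $\rho=0$ and $\varphi=\pi/2$ are additional rigor on top of the paper's argument, not a different method.
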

\begin{proof}
    We provide a completely new proof based on our previous newly established results rather than the old approach in~\cite{luo24cycloid}.
    Taking the limit $\eta \rightarrow +\infty$, then \(\sqrt{1+\eta^{-2}} \rightarrow 1, A_\eta \to +\infty, g_\eta \to g\) and \(\varphi_\eta \to \varphi, \cos \varphi_\eta \rightarrow \cos \varphi, \sin \varphi_\eta \rightarrow \sin \varphi\), we have
    \begin{eqnarray*}
        \lim_{\eta \rightarrow \infty} M (\theta, \nu) /\| \theta^{\ast} \| & = &
        \vec{e}_1 \mathbb{E} [|g g'|] + \vec{e}_2 \frac{1}{\cos \varphi} \tmop{sgn} (\rho) \mathbb{E} [g(g - \sin \varphi \cdot g') \tmop{sgn}(g g')]\\
        & = & \vec{e}_1 \E[|g g'|] + \vec{e}_2 \frac{\tmop{sgn}(\rho)}{\cos \varphi} \left( \E[g^2 \tmop{sgn}(g g')] - \sin \varphi \E[| g g' |] \right)\\
        \lim_{\eta \rightarrow \infty} N (\theta, \nu) & = & \tmop{sgn} (\rho)
        \mathbb{E} [(- 1)^{z + 1}] \cdot \mathbb{E}[\tmop{sgn} (g g')]
        = \tmop{sgn} (\rho) \tanh \nu^{\ast} \mathbb{E} [\tmop{sgn}(g g')]
    \end{eqnarray*}
    By applying Lemma~\ref{suplem:expectation}, we have \(\E[\sgn(g g')] = \frac{2}{\pi} \varphi, \E[g^2 \sgn(g g')] = \frac{2}{\pi} [\varphi + \sin \varphi \cos \varphi]\), and \(\E[|g g'|] = \frac{2}{\pi} [\varphi \sin \varphi + \cos \varphi]\):
    \begin{eqnarray*}
    \lim_{\eta \rightarrow \infty} M (\theta, \nu) /\| \theta^{\ast} \| 
    & = & \frac{2}{\pi} [\varphi \sin \varphi + \cos \varphi] \vec{e}_1
    + \tmop{sgn} (\rho) \frac{2}{\pi} \varphi \cos \varphi \vec{e}_2\\
    \lim_{\eta \rightarrow \infty} N (\theta, \nu) & = & \tmop{sgn} (\rho)
    \frac{2}{\pi} \varphi \cdot \tanh \nu^{\ast}
    \end{eqnarray*}
    Noting that \(\vec{e}_1 = \theta / \| \theta \|, \sgn(\rho)\sin \varphi \vec{e}_1 + \cos \varphi \vec{e}_2 = \theta^{\ast} / \| \theta^{\ast} \|\), we have:
    \[
    \lim_{\eta \rightarrow \infty} \frac{M (\theta, \nu)}{\| \theta^{\ast} \|} = \frac{2}{\pi}\left[ \sgn(\rho)\varphi \frac{\theta^{\ast}}{\| \theta^{\ast} \|}  + \cos \varphi \frac{\theta}{\| \theta \|}  \right]
    \]
\end{proof}

\subsection{Deviation from Cycloid Limit of EM Updates in High SNR Regime}
\begin{theorem}[Proposition~\ref{prop:deviation_cycloid_limit}: Deviation from Cycloid Limit of EM Updates in High SNR Regime]
    In the finite high SNR regime given by \(\eta \gtrsim \frac{1}{\min(1, \sqrt{k})\cos \varphi} \vee 1\), if the mixing weights are known balanced \(\pi = \pi^\ast = (1/2, 1/2)\), 
    then the difference between the EM update rule and its limit is bounded by:
    \[
        \left\|\frac{M(\theta)}{\| \theta^{\ast} \|} - \lim_{\eta \rightarrow \infty} \frac{M(\theta)}{\| \theta^{\ast} \|}\right\|
        = \mathcal{O}\left(\eta^{-2} \vee \cos^2 \varphi \frac{\log \Lambda}{\Lambda^4}\right)
    \]
    where \(\Lambda \assign \eta \sqrt{k} \cos \varphi\), \(k:= \frac{\|\theta\|}{\|\theta^\ast\|}\) and \(\varphi := \frac{\pi}{2}-\arccos |\rho|\), \(\rho := \frac{\langle \theta, \theta^\ast \rangle}{\|\theta\|\|\theta^\ast\|}\).
\end{theorem}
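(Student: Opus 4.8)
The plan is to work directly from the explicit formula of Theorem~\ref{theorem:explicit_em_update} specialized to $\nu=0$, which is legitimate because $\pi=\pi^\ast=(1/2,1/2)$ forces $\nu=0$. In the orthonormal basis $\{\vec e_1,\vec e_2\}$ this writes $M(\theta)/\|\theta^\ast\|=c_1\vec e_1+c_2\vec e_2$, where $c_1,c_2$ are expectations of $\tanh(A_\eta g_\eta g')$ against low-degree polynomials in the pair $(g_\eta,g')$ of standard normals with correlation $\sin\varphi_\eta$, while Corollary~\ref{cor:em_updates_noiseless} gives the limit as $c_1^\infty\vec e_1+c_2^\infty\vec e_2$ with $c_1^\infty=\tfrac2\pi(\varphi\sin\varphi+\cos\varphi)$ and $c_2^\infty=\sgn(\rho)\tfrac2\pi\varphi\cos\varphi$. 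Thus it suffices to bound $|c_i-c_i^\infty|$ for $i=1,2$, and I split each difference through the intermediate value $\tilde c_i$ obtained by replacing $\tanh(A_\eta\cdot)$ by $\sgn(\cdot)$ while keeping $\varphi_\eta$ fixed: the ``geometric'' part $|\tilde c_i-c_i^\infty|$ will contribute the $\eta^{-2}$ term, and the ``smoothing'' part $|c_i-\tilde c_i|$ the $\cos^2\varphi\,(\log\Lambda)/\Lambda^4$ term.

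For the geometric part, the $\sgn$-expectations in $\tilde c_i$ are precisely the closed-form Gaussian integrals ($\mathbb{E}[|g_\eta g'|]$, $\mathbb{E}[g_\eta^2\sgn(g_\eta g')]$, $\mathbb{E}[\sgn(g_\eta g')]$ at correlation $\sin\varphi_\eta$) used in the proof of Corollary~\ref{cor:em_updates_noiseless}, so $\tilde c_i$ is an elementary function of $\varphi_\eta$ that coincides with $c_i^\infty$ when $\varphi_\eta$ is sent to $\varphi$. Since $\sin\varphi_\eta=\sin\varphi/\sqrt{1+\eta^{-2}}$ gives $0\le\sin\varphi-\sin\varphi_\eta\le\tfrac12\eta^{-2}\sin\varphi$, and hence $0\le\varphi-\varphi_\eta\le\tfrac12\eta^{-2}\tan\varphi$ by the mean value theorem (using $\cos\xi\ge\cos\varphi$ on $(\varphi_\eta,\varphi)$), a short computation gives $|\tilde c_i-c_i^\infty|=\mathcal O(\eta^{-2})$; the only subtlety is that in the $\vec e_1$ coefficient one must combine the two contributions $\varphi_\eta-\varphi$ and $\cot\varphi_\eta-\cot\varphi=\sin(\varphi-\varphi_\eta)/(\sin\varphi\sin\varphi_\eta)$, which partially cancel near $\varphi=\pi/2$, and that in the $\vec e_2$ coefficient the factor $\cos\varphi$ absorbs the possibly large $1/\cos\varphi$ hidden in $\varphi-\varphi_\eta$.

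For the smoothing part I use $|\sgn(x)-\tanh(x)|=1-\tanh|x|\le 2e^{-2|x|}$. For $i=1$ the difference is exactly $-\tfrac{\sin\varphi}{\sin\varphi_\eta}\int(1-\tanh A_\eta|w|)\,|w|\,p(w)\,\mathd w$, where $p$ is the density of $g_\eta g'$, which has a logarithmic singularity at the origin on the scale $\cos\varphi_\eta$ (for $\rho=0$ it is exactly $K_0(|w|)/\pi$); this one-dimensional integral is $\mathcal O\!\big((\log(A_\eta\cos\varphi_\eta))/(A_\eta^2\cos\varphi_\eta)\big)$, which the hypothesis $\eta\gtrsim\tfrac{1}{\min(1,\sqrt k)\cos\varphi}\vee 1$ --- equivalently $\Lambda\gtrsim\max(1,\sqrt k)$ together with $A_\eta\cos^2\varphi_\eta\asymp\Lambda^2\gtrsim 1$ --- reduces to the claimed bound. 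The $i=2$ coefficient is the delicate one: writing $g_\eta-\sin\varphi_\eta g'=\cos\varphi_\eta h'$ with $h'$ standard normal independent of $g'$, its smoothing error equals $\tfrac{\cos\varphi}{\cos\varphi_\eta}\sgn(\rho)\,\mathbb{E}\big[(\tanh(A_\eta g_\eta g')-\sgn(g_\eta g'))\,g_\eta h'\big]$, and here the naive absolute-value bound gives only $\mathcal O(1/A_\eta)$, which is far too weak, because it destroys the sign cancellation of the integrand across $\{g_\eta g'=0\}$, near which $g_\eta h'$ keeps a fixed sign while $\sgn(g_\eta g')$ flips. To recover the sharp rate I would condition on $g_\eta$, carry out the remaining one-dimensional Gaussian integral exactly, and apply Stein's identity $\mathbb{E}[h\tanh(a+bh)]=b\,\mathbb{E}[1-\tanh^2(a+bh)]$, which cancels the leading piece against the geometric limit and leaves an expectation of $1-\tanh^2$ of an affine Gaussian --- exponentially concentrated away from the origin except on a window of width $\asymp(A_\eta\cos\varphi_\eta)^{-1}$ --- whose integral over $g_\eta$ is $\mathcal O(\cos^2\varphi\,(\log\Lambda)/\Lambda^4)$. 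This last step, the sharp second-order evaluation of the $\vec e_2$ smoothing error with its near-origin cancellation, is the main obstacle; the geometric estimate and the $\vec e_1$ smoothing estimate are routine.
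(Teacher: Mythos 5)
Your overall skeleton is the same as the paper's: in the basis $\{\vec e_1,\vec e_2\}$, splitting each coefficient through the intermediate quantity obtained by replacing $\tanh(A_\eta\,\cdot)$ with $\sgn(\cdot)$ while keeping the correlation $\sin\varphi_\eta$ fixed reproduces exactly the paper's four terms ($T_{1,1},T_{2,1}$ for the smoothing error and $T_{1,2},T_{2,2}$ for the geometric error), and your handling of the geometric terms (closed-form sign expectations at correlation $\sin\varphi_\eta$, the bound $0\le\varphi-\varphi_\eta\lesssim\eta^{-2}\tan\varphi$, the partial cancellations you point out) and of the $\vec e_1$ smoothing term (product density with a $K_0$-type logarithmic singularity on the scale $\cos^2\varphi_\eta$) matches what the paper does, including the final use of $\eta\gtrsim\frac{1}{\min(1,\sqrt k)\cos\varphi}\vee 1$ to identify $\sqrt{A_\eta}\cos\varphi_\eta\asymp\Lambda$ and $\sin\varphi/\sin\varphi_\eta,\ \cos\varphi/\cos\varphi_\eta=\Theta(1)$.

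The divergence, and the one genuine gap, is the $\vec e_2$ smoothing term, which you yourself flag as the main obstacle. The paper wins the sharp rate there by writing $g_\eta g'=\varrho[\sin U+\sin\varphi_\eta]$ and $\sgn(g_\eta g')g_\eta(g_\eta-\sin\varphi_\eta g')/\cos\varphi_\eta=\varrho\,\sgn(\sin(U+\varphi_\eta))|\cos U+\cos\varphi_\eta|$ with $\varrho\sim\mathrm{Exp}(1)$, integrating $\varrho$ out exactly, and then invoking the second estimate of Lemma~\ref{suplem:integral_bigO}, whose proof keeps the factor $\sgn(\sin(U+\varphi_\eta))$ inside the integral precisely because it flips sign at the root of $\sin U+\sin\varphi_\eta$ where $|\cos U+\cos\varphi_\eta|\asymp\cos\varphi_\eta$ is not small; the two sides of that root cancel to leading order, and this is where the improvement from $\mathcal O(1/A_\eta)$ to $\mathcal O(\sin\varphi_\eta\cos^2\varphi_\eta\log\Lambda/\Lambda^4)$ comes from. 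Your Stein route can be made to close, but as sketched it does not yet capture this gain: conditioning and applying $\E[h\,f(a+bh)]=b\,\E[f'(a+bh)]$ converts the term into $\cos\varphi_\eta\,\E\big[(\tanh(A_\eta W)-\sgn(W))+A_\eta W(1-\tanh^2(A_\eta W))\big]$ with $W=g_\eta g'$, and bounding this in absolute value over your ``window of width $(A_\eta\cos\varphi_\eta)^{-1}$'' only yields an $\mathcal O\big(\cos^2\varphi\,\log\Lambda/\Lambda^2\big)$-type bound, short of the claim by a factor $\Lambda^{2}$. The missing ingredient is a parity argument: the post-Stein integrand is odd in $W$, the density of $W$ is even up to an odd correction of relative size $\sin\varphi_\eta\,|W|/\cos^2\varphi_\eta$ on the effective support $|W|\lesssim A_\eta^{-1}$ (equivalently, note the integrand equals $\frac{\mathd}{\mathd W}\big[W(\tanh(A_\eta W)-\sgn(W))\big]$ and integrate by parts once more against the density), and only after this cancellation does one recover the extra $1/(A_\eta\cos^2\varphi_\eta)$ and hence $\cos^2\varphi\,\log\Lambda/\Lambda^4$. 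With that step supplied your argument is a valid alternative to the paper's integral lemma; without it, the crucial estimate remains unproven.
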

\begin{proof}
When the mixing weights are known balanced \(\pi = \pi^\ast = (1/2, 1/2)\), we must have \(\nu = \nu^\ast =0\). 
Consequently, the EM update rules for the imbalance of mixing weights directly gives 
\(
    N(\theta, \nu)_{\nu=0, \nu^\ast =0} = 0
\)
by using the symmetry of the data distributions with two balanced components.
Furthermore, the EM update rule for regresssion parameteres becomes
\begin{eqnarray*}
    \frac{M(\theta)}{\| \theta^{\ast} \|} =
    \frac{M (\theta, \nu)_{\nu=0, \nu^\ast =0}}{\| \theta^{\ast} \|} = \vec{e}_1 \frac{\sin \varphi}{\sin \varphi_\eta} \mathbb{E}[ \tanh \left(
    A_\eta g_\eta g' \right) g_\eta g']
    + \vec{e}_2  \frac{\cos \varphi}{\cos^2 \varphi_\eta} \mathbb{E} [\tanh \left( A_\eta g_\eta g' \right) g_\eta (g_\eta - \sin\varphi_\eta \cdot g')]
\end{eqnarray*}

We further bound the difference between the EM update rule and its limit (the cycloid trajectory) when SNR goes to infinity.
\begin{eqnarray*}
    & &\frac{M(\theta)}{\| \theta^{\ast} \|} - \lim_{\eta \rightarrow \infty} \frac{M(\theta)}{\| \theta^{\ast} \|}
     =  \vec{e}_1 \left[ 
    \frac{\sin \varphi}{\sin \varphi_\eta} \underbrace{\E[\tanh(A_\eta g_\eta g')g_\eta g' - |g_\eta g'|]}_{T_{1,1}}
    + \underbrace{\E\left[\frac{\sin \varphi}{\sin \varphi_\eta}|g_\eta g'|  - |g g'| \right]}_{T_{1,2}} \right]\\
    & + & \vec{e}_2 \sgn(\rho) \left[
        \frac{\cos \varphi}{\cos \varphi_\eta} \underbrace{\E[\tanh(A_\eta g_\eta g')g_\eta (g_\eta - \sin\varphi_\eta \cdot g')
         - \sgn(g_\eta g') g_\eta (g_\eta - \sin\varphi_\eta \cdot g')]/\cos \varphi_\eta}_{T_{2,1}}
    \right]\\
    & + & \vec{e}_2 \sgn(\rho) \underbrace{\E\left[\frac{\cos \varphi}{\cos \varphi_\eta} \sgn(g_\eta g') g_\eta (g_\eta - \sin\varphi_\eta \cdot g')/\cos \varphi_\eta - \sgn(g g') g(g-\sin \varphi \cdot g')/\cos \varphi\right]}_{T_{2,2}}
\end{eqnarray*}

\textbf{Evaluation of the terms \(T_{1, 2}, T_{2,2}\):}
By applying the Lemma~\ref{suplem:expectation} for \(g_\eta, g, g'\sim \mathcal{N}(0, 1)\) with \(\E[g_\eta g'] = \sin \varphi_\eta, \E[g g'] = \sin \varphi\), 
we have \(E[|g_\eta g'|]=\frac{2}{\pi}[\varphi_\eta \sin \varphi_\eta + \cos \varphi_\eta], \E[|g g'|]=\frac{2}{\pi}[\varphi \sin \varphi + \cos \varphi]\):
\[
T_{1, 2} = \E\left[\frac{\sin \varphi}{\sin \varphi_\eta}|g_\eta g'|  - |g g'| \right] 
= \frac{\sin \varphi}{\sin \varphi_\eta} \frac{2}{\pi}[\varphi_\eta \sin \varphi_\eta + \cos \varphi_\eta] - \frac{2}{\pi}[\varphi \sin \varphi + \cos \varphi]
= \frac{2}{\pi} \left[ (\varphi_\eta -\varphi)\sin \varphi - \frac{\sin (\varphi_\eta -\varphi)}{\sin \varphi_\eta}\right]
\]
Again, by applying the Lemma~\ref{suplem:expectation} for \(g_\eta, g, g'\), we have \(\E[\sgn(g_\eta g') g_\eta (g_\eta - \sin\varphi_\eta \cdot g')/\cos \varphi_\eta] = \frac{2}{\pi} \varphi_\eta \cos \varphi_\eta\) and \(\E[\sgn(g g') g(g-\sin \varphi \cdot g')/\cos \varphi] = \frac{2}{\pi} \varphi \cos \varphi\):
\[
T_{2, 2} = \E\left[\frac{\cos \varphi}{\cos \varphi_\eta} \sgn(g_\eta g') g_\eta (g_\eta - \sin\varphi_\eta \cdot g')/\cos \varphi_\eta - \sgn(g g') g(g-\sin \varphi \cdot g')/\cos \varphi\right]
=  \frac{2}{\pi} (\varphi_\eta - \varphi) \cos \varphi
\]
We bound the following quantities to establish the asymptotic notations of \(T_{1, 1}, T_{2, 1}\) in terms of SNR \(\eta\) and \(\varphi\), :
\[
    0\leq 1 - \frac{\sin \varphi_\eta}{\sin \varphi} = 1- (1+\eta^{-2})^{-\frac{1}{2}} 
    \leq \frac{\eta^{-2}}{2}, \quad 0 \leq \frac{\sin \varphi}{\sin \varphi_\eta} -1 = \sqrt{1+\eta^{-2}} - 1 \leq \frac{\eta^{-2}}{2}
\]
\[
    0 \leq \varphi -\varphi_\eta \leq \pi \sin \frac{\varphi - \varphi_\eta}{2} = \pi\frac{(1 - \frac{\sin \varphi_\eta}{\sin \varphi})\sin \varphi}{2 \cos \frac{\varphi + \varphi_\eta}{2}} 
    \leq \frac{\pi(\frac{\eta^{-2}}{2})\sin \varphi}{2 \cos \varphi} = \frac{\pi \sin \varphi}{4 \cos \varphi} \eta^{-2}
\]
\begin{eqnarray*}
& &\left|\frac{\sin (\varphi - \varphi_\eta)}{(\varphi - \varphi_\eta)}\cdot \frac{1}{\sin \varphi_\eta \sin \varphi}-1\right|\\
&\leq & \frac{1}{\sin^2 \varphi}\left(\frac{\sin \varphi}{\sin \varphi_\eta} -1\right) \frac{\sin (\varphi_\eta -\varphi)}{(\varphi_\eta -\varphi)} 
+ \frac{1}{\sin^2 \varphi}\left|\frac{\sin (\varphi_\eta -\varphi)}{(\varphi_\eta -\varphi)} - 1\right|
+ \left(\frac{1}{\sin^2 \varphi}-1 \right)\\
& \leq & \frac{1}{\sin^2 \varphi}\left(\frac{\eta^{-2}}{2}+ \frac{1}{6} \times \left[\frac{\pi \sin \varphi}{4 \cos \varphi} \eta^{-2}\right]^2 + \cos^2 \varphi \right)
=\frac{\cos^2 \varphi}{\sin^2 \varphi} + \frac{\eta^{-2}}{\sin^2 \varphi}\mathcal{O}\left(1\vee\frac{\sin^2 \varphi}{\cos^2 \varphi}\eta^{-2}\right)\\
\end{eqnarray*}
Therefore, when \(\eta \gtrsim \frac{1}{\cos \varphi}\), we have \(\mathcal{O}\left(1\vee\eta^{-2}\frac{\sin^2 \varphi}{\cos^2 \varphi}\right) =\mathcal{O}(1)\):
\begin{eqnarray*}
   |T_{1, 2}| = \frac{2}{\pi} \left|\frac{\sin (\varphi - \varphi_\eta)}{(\varphi - \varphi_\eta)}\frac{1}{\sin \varphi_\eta \sin \varphi}-1\right| (\varphi - \varphi_\eta)\sin \varphi
   \leq \frac{\cos \varphi\eta^{-2}}{2} + \frac{\eta^{-4}}{2\cos \varphi}\mathcal{O}\left(1\vee\frac{\eta^{-2}\sin^2 \varphi}{\cos^2 \varphi}\right)
   = \frac{\cos \varphi\eta^{-2}}{2} + \mathcal{O}\left(\frac{\eta^{-4}}{\cos \varphi}\right)
\end{eqnarray*}
\[
|T_{2, 2}| = \frac{2}{\pi}(\varphi - \varphi_\eta) \cos \varphi \leq \frac{\sin \varphi \eta^{-2}}{2} 
\]

\textbf{Asymptotic notations of the terms \(T_{1, 1}, T_{2, 1}\):}
Since we can express \(g_\eta, g'\) in terms of independent random varaibles \(R, U\) 
with \(R \sim r\exp(-\frac{r^2}{2})\mathbb{I}_{r\geq 0}\) (standard Rayleigh distribution with \(\E[R^2]=2\)) 
and \(U \sim \mathrm{Unif} [0, 4 \pi)\),
\[
g_\eta = R \cos((U-\varphi_\eta)/2), \quad g' = R \sin((U+\varphi_\eta)/2)
\]
Noting that \(\varrho := R^2/2 \sim \mathrm{Exp}(1)\) follows the standard exponential distribution with \(\E[\varrho] = 1\),
applying \(g g' = \varrho [\sin U + \sin \varphi_\eta]\) and  
\(\sgn(g_\eta g') g_\eta (g_\eta - \sin\varphi_\eta \cdot g')/\cos \varphi_\eta = \varrho \sgn(\sin(U+\varphi_\eta))|\cos U + \cos \varphi_\eta|\), then:
\begin{eqnarray*}
    T_{1, 1} &=& \E[(\tanh(A_\eta |g_\eta g'|)-1)|g_\eta g'|] 
    = -2 \E_U \E_{\varrho}\left[\frac{|\sin U + \sin \varphi_\eta| \varrho}{\exp(2A_\eta |\sin U + \sin \varphi_\eta \varrho|) + 1}\right] \\
    T_{2, 1} &=& \E[(\tanh(A_\eta |g_\eta g'|)-1) \sgn(g_\eta g') g_\eta (g_\eta - \sin\varphi_\eta \cdot g')]/\cos \varphi_\eta
    = -2\E_U \E_\varrho\left[\frac{\sgn(\sin(U+\varphi_\eta))|\cos U + \cos \varphi_\eta| \varrho}{\exp(2A_\eta |\sin U + \sin \varphi_\eta|\varrho) + 1}\right]
\end{eqnarray*}
Since \(\exp(t) \leq \exp(t) + 1\leq 2 \exp(t)\) for all \(t\geq 0\), and
\[
\E_\varrho\left[\rho \exp(-2A_\eta |\sin U + \sin \varphi_\eta| \varrho) \right] = \frac{1}{(1+2A_\eta |\sin U + \sin \varphi_\eta|)^2}
\]
we have shown that \(T_1, T_2\) can be approximated by the following expectations upto some constant factors:
\[
T_{1, 1} \asymp \E_U \left[ \frac{|\sin U + \sin \varphi_\eta|}{(1+2A_\eta |\sin U + \sin \varphi_\eta|)^2} \right],\quad
T_{2, 1} \asymp \E_U \left[ \frac{\sgn(\sin(U+\varphi_\eta))|\cos U + \cos \varphi_\eta|}{(1+2A_\eta |\sin U + \sin \varphi_\eta|)^2} \right]
\]
When \(\Lambda' \assign \sqrt{A_\eta} \cos \varphi_\eta  = \eta (1+\eta^{-2})^{\frac{1}{4}} \sqrt{k}\cos \varphi_\eta \gtrsim 1\), applying Lemma~\ref{suplem:integral_bigO} for asymptotic notations of above expectations:
\begin{eqnarray*}
    T_{1, 1} &=& \mathcal{O}\left(\frac{1}{A_\eta^2 \cos \varphi_\eta} \log\left(A_\eta \cos^2 \varphi_\eta\right) \right)
    = \cos^3 \varphi_\eta \mathcal{O}\left(\frac{\log \Lambda'}{[\Lambda']^4} \right) \\
    T_{2, 1} &=& \mathcal{O}\left(\frac{\tan \varphi_\eta}{A_\eta^2 \cos \varphi_\eta} \log\left(A_\eta \cos^2 \varphi_\eta\right) \right)
    = \sin \varphi_\eta \cos^2 \varphi_\eta \mathcal{O}\left(\frac{\log \Lambda'}{[\Lambda']^4} \right)
\end{eqnarray*}
where the condition \(\Lambda' \gtrsim 1\) is satisfied and therefore \(\Lambda' \asymp \sqrt{k} (\eta \cos \varphi \vee 1)\) when SNR \(\eta \gtrsim \frac{1}{\sqrt{k}\cos \varphi} \vee 1\) is large enough.

\textbf{Bounds for the difference between the EM update rule and its limit:}
When SNR \(\eta \gtrsim \frac{1}{\min(1, \sqrt{k})\cos \varphi} \vee 1\) is large enough, and we set \(\Lambda \assign \sqrt{k} \eta \cos \varphi \asymp \sqrt{k} (\eta \cos \varphi \vee 1) \asymp \Lambda'\), 
then \(\sin \varphi/ \sin \varphi_\eta = \sqrt{1+\eta^{-2}}=\Theta(1), \cos \varphi/\cos \varphi_\eta = \Theta(1)\):
\begin{eqnarray*}
& & \left\|\frac{M(\theta)}{\| \theta^{\ast} \|} - \lim_{\eta \rightarrow \infty} \frac{M(\theta)}{\| \theta^{\ast} \|}\right\|
    = \left\| \left(\vec{e}_1 \frac{\sin \varphi}{\sin \varphi_\eta} T_{1, 1} + \vec{e}_2 \frac{\cos \varphi}{\cos \varphi_\eta} T_{2, 1}\right) 
    + (\vec{e}_1 T_{1, 2} + \vec{e}_2 \sgn(\rho) T_{2, 2})\right\|\\
    & \leq & \left\|\vec{e}_1 \cos \varphi_\eta + \vec{e}_2 \sin \varphi_\eta \right\| \cos^2 \varphi \mathcal{O}\left(\frac{\log \Lambda}{\Lambda^4} \right)
    + \left\| \vec{e}_1 \cos \varphi + \vec{e}_2 \sin \varphi \right\| \frac{\eta^{-2}}{2}
    + \left\| \vec{e}_1 \right\| \mathcal{O}\left(\frac{\eta^{-4}}{\cos \varphi}\right)\\
    &= & 
    \frac{\eta^{-2}}{2} + \mathcal{O}\left(\frac{\eta^{-4}}{\cos \varphi}\vee \cos^2 \varphi \frac{\log \Lambda}{\Lambda^4}\right)\\
    &=& \mathcal{O}\left(\eta^{-2} \vee \cos^2 \varphi \frac{\log \Lambda}{\Lambda^4}\right)
\end{eqnarray*}
\end{proof}
\section{Population Level Analysis in Noiseless Regime}\label{sup:population_analysis}

\subsection{Recurrence Relation and Cycloid Trajectory of EM Iterations in Noiseless Setting}
\begin{theorem}[Proposition~\ref{prop:recurrence_angle}: Recurrence Relation of Sub-optimality Angle, Proposition 4.3 in~\cite{luo24cycloid}]
    In the noiseless setting, namely SNR \(\eta := \| \theta^{\ast} \|/\sigma \to \infty\), 
    if the sub-optimality angle \(\varphi^t \neq \frac{\pi}{2}\), 
    then the recurrence relation of the sub-optimality angle \(\varphi^t\) for population EM updates is:
    \[
    \tan \varphi^{t+1} = \tanh \varphi^t + \varphi^t (\tan^2 \varphi^t + 1)
    \]
    where \(\varphi^t := \frac{\pi}{2} - \arccos |\rho^t|\), \(\rho^t := \frac{\langle \theta^t, \theta^\ast \rangle}{\|\theta^t\|\|\theta^\ast\|}\).
\end{theorem}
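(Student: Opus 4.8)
The plan is to feed the iterate $\theta^{t+1}=M(\theta^t,\nu^t)$ through the closed form of Corollary~\ref{cor:em_updates_noiseless} and read off the new cosine $\rho^{t+1}$, then convert to $\tan\varphi^{t+1}$ using elementary trigonometric identities. Recall that in the noiseless setting the update $M(\theta,\nu)$ does not depend on $\nu$, so only $\theta^t$ enters.

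First I would set up an orthonormal basis of the plane $\mathrm{span}\{\theta^t,\theta^\ast\}$ adapted to the ground truth: let $\hat{e}_1:=\theta^\ast/\|\theta^\ast\|$ and pick $\hat{e}_2$ orthogonal to $\hat{e}_1$ in this plane so that
\[
\frac{\theta^t}{\|\theta^t\|}=\sgn(\rho^t)\sin\varphi^t\,\hat{e}_1+\cos\varphi^t\,\hat{e}_2 ,
\]
which is legitimate because $|\rho^t|=\cos(\arccos|\rho^t|)=\sin\varphi^t$ and $\cos\varphi^t\ge 0$ on $[0,\pi/2]$. Substituting the Corollary's formula $M(\theta^t,\nu^t)/\|\theta^\ast\|=\frac{2}{\pi}\big[\sgn(\rho^t)\varphi^t\,\hat{e}_1+\cos\varphi^t\,\theta^t/\|\theta^t\|\big]$ into this basis and collecting the $\hat{e}_1$ and $\hat{e}_2$ components yields
\[
\frac{\theta^{t+1}}{\|\theta^\ast\|}=\frac{2}{\pi}\Big[\sgn(\rho^t)\big(\varphi^t+\sin\varphi^t\cos\varphi^t\big)\hat{e}_1+\cos^2\varphi^t\,\hat{e}_2\Big].
\]

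Next I would compute $\rho^{t+1}=\langle\theta^{t+1},\theta^\ast\rangle/(\|\theta^{t+1}\|\,\|\theta^\ast\|)$; the common factor $\tfrac{2}{\pi}\|\theta^\ast\|$ cancels between numerator and denominator, giving
\[
|\rho^{t+1}|=\frac{\varphi^t+\sin\varphi^t\cos\varphi^t}{\sqrt{(\varphi^t+\sin\varphi^t\cos\varphi^t)^2+\cos^4\varphi^t}},\qquad 1-|\rho^{t+1}|^2=\frac{\cos^4\varphi^t}{(\varphi^t+\sin\varphi^t\cos\varphi^t)^2+\cos^4\varphi^t}.
\]
Since $\varphi^{t+1}=\tfrac{\pi}{2}-\arccos|\rho^{t+1}|$ gives $\sin\varphi^{t+1}=|\rho^{t+1}|$ and $\cos\varphi^{t+1}=\sqrt{1-|\rho^{t+1}|^2}$, dividing these two expressions and using $\sec^2\varphi^t=1+\tan^2\varphi^t$ produces
\[
\tan\varphi^{t+1}=\frac{\varphi^t+\sin\varphi^t\cos\varphi^t}{\cos^2\varphi^t}=\frac{\varphi^t}{\cos^2\varphi^t}+\tan\varphi^t=\varphi^t(\tan^2\varphi^t+1)+\tan\varphi^t ,
\]
which is the claimed recurrence. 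The hypothesis $\varphi^t\neq\pi/2$ is exactly what keeps $\cos^2\varphi^t>0$ in the denominators; the degenerate case $\varphi^t=0$ (i.e.\ $\rho^t=0$) is covered by the same formulas and returns $\varphi^{t+1}=0$.

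This is essentially a routine computation once Corollary~\ref{cor:em_updates_noiseless} is available. The only points needing care are the consistent orientation of $\hat{e}_2$ — which ultimately does not matter, since only $|\rho^{t+1}|$ enters the definition of $\varphi^{t+1}$ and the overall $\sgn(\rho^t)$ factor therefore drops out — and making sure the two $\sgn(\rho^t)$ factors (one from the Corollary, one from the basis decomposition) combine so that the $\hat{e}_1$-coordinate of $\theta^{t+1}$ is $\sgn(\rho^t)(\varphi^t+\sin\varphi^t\cos\varphi^t)$ with no spurious extra sign. I expect the sign bookkeeping, not any analytic difficulty, to be the only mild obstacle.
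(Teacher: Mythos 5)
Your proposal is correct and takes essentially the same route as the paper's own proof: substitute the noiseless update of Corollary~\ref{cor:em_updates_noiseless} into the plane $\mathrm{span}\{\theta^t,\theta^\ast\}$, read off the component along $\theta^\ast$ and the orthogonal component (equivalently the paper's $\sin\varphi^{t+1}\|\theta^{t+1}\|/\|\theta^\ast\|$ and $\|\theta^{t+1}\|/\|\theta^\ast\|$), and take the ratio using $\sec^2\varphi^t=1+\tan^2\varphi^t$, with the $\sgn(\rho)$ factor dropping out exactly as you observe. Note also that the $\tanh\varphi^t$ appearing in the appendix restatement is a typo for $\tan\varphi^t$; what you derive is the correct recurrence as stated in the main text.
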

\begin{proof}
    The proof based on Corollary 3.3 of EM Updates in Noiseless Setting of~\cite{luo24cycloid} can be found on page 31, Appendix D of~\cite{luo24cycloid}.
    Let's start from the EM update rule for \(\theta^t\) in previous Corollary, since \(\rho^0, \rho^{t - 1}\) have the same sign
    (validated by checking the sign of \(\langle \theta^t, \theta^{\ast} \rangle\))
    \[ \frac{\theta^t}{\| \theta^{\ast} \|} = \left( \frac{\pi}{2} \right)^{- 1}
       \left[ \varphi^{t - 1}  \frac{\tmop{sgn} (\rho^0) \theta^{\ast}}{\|
       \theta^{\ast} \|} + \cos \varphi^{t - 1}  \frac{\theta^{t - 1}}{\|
       \theta^{t - 1} \|} \right] \]
    With $\left\langle \frac{\tmop{sgn} (\rho^0) \theta^{\ast}}{\| \theta^{\ast}
    \|}, \frac{\theta^{t - 1}}{\| \theta^{t - 1} \|} \right\rangle = | \rho^{t -
    1} | = \sin \varphi^{t - 1}$
    \[ \sin \varphi^t  \frac{\| \theta^t \|}{\| \theta^{\ast} \|} = | \rho^t |
       \frac{\| \theta^t \|}{\| \theta^{\ast} \|} = \left\langle
       \frac{\theta^t}{\| \theta^{\ast} \|}, \frac{\tmop{sgn} (\rho^0)
       \theta^{\ast}}{\| \theta^{\ast} \|} \right\rangle = \left( \frac{\pi}{2}
       \right)^{- 1}  [\varphi^{t - 1} + \cos \varphi^{t - 1} \sin \varphi^{t -
       1}] \]
    \[ \frac{\| \theta^t \|}{\| \theta^{\ast} \|} = \left( \frac{\pi}{2}
       \right)^{- 1} \sqrt{[\varphi^{t - 1}]^2 + \cos^2 \varphi^{t - 1} + 2
       \varphi^{t - 1} \cos \varphi^{t - 1} \sin \varphi^{t - 1}} \]
    Therefore
    \[ \sin \varphi^t = \frac{\varphi^{t - 1} + \cos \varphi^{t - 1} \sin
       \varphi^{t - 1}}{\sqrt{[\varphi^{t - 1}]^2 + \cos^2 \varphi^{t - 1} + 2
       \varphi^{t - 1} \cos \varphi^{t - 1} \sin \varphi^{t - 1}}} \]
    Hence
    \[ \cos \varphi^t = \sqrt{1 - \sin^2 \varphi^t} = \frac{\cos^2 \varphi^{t -
       1}}{\sqrt{[\varphi^{t - 1}]^2 + \cos^2 \varphi^{t - 1} + 2 \varphi^{t -
       1} \cos \varphi^{t - 1} \sin \varphi^{t - 1}}} \]
    Thus, we obtain the recurrence relation for $\varphi^t$
    \[ \tan \varphi^t = \frac{\sin \varphi^t}{\cos \varphi^t} = \frac{\varphi^{t
       - 1}}{\cos^2 \varphi^{t - 1}} + \frac{\sin \varphi^{t - 1}}{\cos
       \varphi^{t - 1}} = \tan \varphi^{t - 1} + \varphi^{t - 1}  [\tan^2
       \varphi^{t - 1} + 1] \]
\end{proof}

\begin{theorem}[Proposition~\ref{prop:parametric_cycloid}: Parametric Equation for Cycloid Trajectory of EM Updates, Proposition 4.4 in~\cite{luo24cycloid}]
    In the noiseless setting, namely SNR \(\eta := \| \theta^{\ast} \|/\sigma \to \infty\),
    the coordinates \(\mathtt{x}^t, \mathtt{y}^t\) of normalized vector \(\frac{\theta^t}{\|\theta^\ast\|}=\mathtt{x}^t \hat{e}_1 + \mathtt{y}^t \hat{e}_2^t = \mathtt{x}^t \hat{e}_1 + \mathtt{y}^t \hat{e}_2^0, \forall t\in\mathbb{Z}_+\)
    for population EM updates can be parameterized by the sub-optimality angle \(\phi^{t-1}\) as follows:
    \begin{eqnarray*}
        1-\sgn(\rho^0)\mathtt{x}^t & = &\frac{1}{\pi}[\phi^{t-1} - \sin \phi^{t-1}]\\
        \mathtt{y}^t & = &\frac{1}{\pi}[1- \cos \phi^{t-1}]
    \end{eqnarray*}
    where \(\varphi^{t-1} := \frac{\pi}{2} - \arccos |\rho^{t-1}|\), \(\rho^{t-1} := \frac{\langle \theta^{t-1}, \theta^\ast \rangle}{\|\theta^{t-1}\|\|\theta^\ast\|}\).
    Hence, the trajecotry of EM iterations \(\theta^t\) is on the cycloid with a parameter \(\frac{\|\theta^\ast\|}{\pi}\), on the plane \(\text{span}\{\theta^0, \theta^\ast\}\).
\end{theorem}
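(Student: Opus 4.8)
The plan is to apply the noiseless population EM update (Corollary~\ref{cor:em_updates_noiseless}) once per iteration, read off the coordinates of $\theta^t$ in a fixed orthonormal frame of the invariant two‑dimensional plane $P:=\text{span}\{\theta^0,\theta^\ast\}$, and then rewrite everything in terms of $\phi^{t-1}=\pi-2\varphi^{t-1}$ using elementary double‑angle identities.

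First I would fix the plane $P$ and show that all iterates stay in it with a consistent orientation. By Theorem~\ref{theorem:explicit_em_update} the update satisfies $M(\theta,\nu)\in\text{span}\{\theta,\theta^\ast\}$, so by induction $\theta^t\in P$ for every $t$. Choosing the orthonormal basis $\hat{e}_1=\theta^\ast/\|\theta^\ast\|$ and $\hat{e}_2^0$ of $P$, I write $\theta^t/\|\theta^\ast\|=\mathtt{x}^t\hat{e}_1+\mathtt{y}^t\hat{e}_2^0$. An inductive check using the explicit noiseless update shows the component of $\theta^t$ orthogonal to $\theta^\ast$ always points along $+\hat{e}_2^0$: assuming $\hat{e}_2^{t-1}=\hat{e}_2^0$, we have $\vec{e}_1^{t-1}=\theta^{t-1}/\|\theta^{t-1}\|=\sgn(\rho^{t-1})\sin\varphi^{t-1}\,\hat{e}_1+\cos\varphi^{t-1}\,\hat{e}_2^0$ (using $|\rho^{t-1}|=\cos(\arccos|\rho^{t-1}|)=\sin\varphi^{t-1}$), so Corollary~\ref{cor:em_updates_noiseless} gives $\mathtt{y}^t=\tfrac{2}{\pi}\cos^2\varphi^{t-1}\ge 0$, hence $\hat{e}_2^t=\hat{e}_2^0$, and it gives the $\hat{e}_1$‑coordinate $\mathtt{x}^t=\tfrac{2}{\pi}\sgn(\rho^{t-1})[\varphi^{t-1}+\sin\varphi^{t-1}\cos\varphi^{t-1}]$, whose sign is $\sgn(\rho^{t-1})$; since $\rho^t=\mathtt{x}^t\|\theta^\ast\|/\|\theta^t\|$ shares the sign of $\mathtt{x}^t$, this yields $\sgn(\rho^t)=\sgn(\rho^0)$ for all $t$ by induction.

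With this in place I would simply convert the two coordinate formulas above to the variable $\phi^{t-1}=\pi-2\varphi^{t-1}$ via $\sin\phi^{t-1}=\sin 2\varphi^{t-1}=2\sin\varphi^{t-1}\cos\varphi^{t-1}$, $1-\cos\phi^{t-1}=1+\cos 2\varphi^{t-1}=2\cos^2\varphi^{t-1}$, and $\varphi^{t-1}=(\pi-\phi^{t-1})/2$, obtaining $\sgn(\rho^0)\mathtt{x}^t=\tfrac{2}{\pi}\varphi^{t-1}+\tfrac{1}{\pi}\sin\phi^{t-1}=1-\tfrac{1}{\pi}[\phi^{t-1}-\sin\phi^{t-1}]$ and $\mathtt{y}^t=\tfrac{1}{\pi}[1-\cos\phi^{t-1}]$, which are exactly the claimed equations. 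Scaling back by $\|\theta^\ast\|$ and comparing with the standard parametrization $(r(\phi-\sin\phi),\,r(1-\cos\phi))$ of a cycloid generated by a circle of radius $r$ rolling along the $\hat{e}_1$‑axis identifies the trajectory as a cycloid with parameter $r=\|\theta^\ast\|/\pi$ in $P$; Proposition~\ref{prop:recurrence_angle} shows the parameter $\phi^{t-1}$ is itself propagated along iterations, so the entire EM sequence $\{\theta^t\}_{t\ge1}$ lies on a single such cycloid.

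The main obstacle is the bookkeeping of the first step — verifying that the frame $(\hat{e}_1,\hat{e}_2^0)$ is consistent across all iterations, i.e. that $\theta^t$ never leaves $P$ and never flips to the other side of $\theta^\ast$ — together with a careful treatment of the degenerate cases: $\varphi^{t-1}=0$ (equivalently $\rho^{t-1}=0$, $\phi^{t-1}=\pi$, where $\sgn(0)=1$ by convention), and $\varphi^{t-1}=\tfrac{\pi}{2}$ (equivalently $\theta^{t-1}\parallel\theta^\ast$, where $\phi^{t-1}=0$, $\mathtt{y}^t=0$, and the cycloid degenerates to its cusp); once these are handled, the remainder is just the trigonometric identities above.
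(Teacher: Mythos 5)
Your proposal is correct and follows essentially the same route as the paper: apply the noiseless update of Corollary~\ref{cor:em_updates_noiseless}, show inductively that the iterates stay in $\text{span}\{\theta^0,\theta^\ast\}$ with a consistent frame $(\hat{e}_1,\hat{e}_2^0)$ and $\sgn(\rho^t)=\sgn(\rho^0)$, read off $\mathtt{x}^t=\tfrac{2}{\pi}\sgn(\rho^0)[\varphi^{t-1}+\sin\varphi^{t-1}\cos\varphi^{t-1}]$, $\mathtt{y}^t=\tfrac{2}{\pi}\cos^2\varphi^{t-1}$, and convert via $\phi^{t-1}=\pi-2\varphi^{t-1}$. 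The only cosmetic difference is that you certify frame consistency by checking the $\hat{e}_2^0$-coordinate is nonnegative, whereas the paper checks $\langle\hat{e}_2^{t-1},\hat{e}_2^t\rangle=\cos^3\varphi^{t-1}>0$ — the same computation in different clothing.
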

\begin{proof}
    The proof based on Corollary 3.3 of EM Updates in Noiseless Setting and Proposition 4.3 of Recurrence Relation of~\cite{luo24cycloid} can be found on page 32, Appendix D of~\cite{luo24cycloid}.
    Since \(\tan \varphi^t = \tan \varphi^{t - 1} + \varphi^{t - 1}  (\tan^2\varphi^{t - 1} + 1)\) in the proven recurrence relation, it shows that \(\tan \varphi^t \geq \tan \varphi^{t - 1} \geq 0\), therefore \(0 \leq \varphi^0 \leq \varphi^1 \leq \cdots \leq \varphi^{t - 1} \leq \varphi^t < \frac{\pi}{2}\).
  
  Let \(\hat{e}_1 \assign \frac{\theta^{\ast}}{\| \theta^{\ast} \|}\), and
  \(\hat{e}^t_2 \assign \hat{e}_2 \mid_{\theta = \theta^t} = \frac{\theta -
  \hat{e}_1  \hat{e}_1^{\top} \theta}{\| \theta - \hat{e}_1  \hat{e}_1^{\top}
  \theta \|} \mid_{\theta = \theta^t} = \frac{\frac{\theta^t}{\| \theta^t \|}
  - [\rho^t] \frac{\theta^{\ast}}{\| \theta^{\ast} \|}}{\sqrt{1 -
  [\rho^t]^2}}\) and \(\langle \hat{e}_1, \hat{e}^t_2 \rangle = 0, \| \hat{e}_1
  \| = \| \hat{e}^t_2 \| = 1\)
  \begin{eqnarray*}
    \frac{\left( \frac{\pi}{2} \right)}{\| \theta^{\ast} \| \| \theta^{t - 1}
    \|}  \langle \theta^{t - 1} - \hat{e}_1  \hat{e}_1^{\top} \theta^{t - 1},
    \theta^t - \hat{e}_1  \hat{e}_1^{\top} \theta^t \rangle & = & \frac{\left(
    \frac{\pi}{2} \right)}{\| \theta^{\ast} \| \| \theta^{t - 1} \|} 
    \{\langle \theta^{t - 1}, \theta^t \rangle - \langle \theta^{t - 1},
    \hat{e}_1 \rangle \langle \hat{e}_1, \theta^t \rangle\}\\
    & = & [\varphi^{t - 1} \sin \varphi^{t - 1} + \cos \varphi^{t - 1}] -
    \sin \varphi^{t - 1}  [\varphi^{t - 1} + \cos \varphi^{t - 1} \sin
    \varphi^{t - 1}]\\
    & = & \cos^3 \varphi^{t - 1} > 0
  \end{eqnarray*}
  Hence, we conclude that \(\langle \hat{e}^{t - 1}_2, \hat{e}^t_2 \rangle > 0\), With \(\hat{e}^{t - 1}_2, \hat{e}^t_2 \perp \hat{e}_1\) and \(\hat{e}^{t - 1}_2, \hat{e}^t_2 \in \tmop{span} \{\theta^t, \theta^{t - 1}, \theta^{\ast}\} \subset \tmop{span} \{\theta^0, \theta^{\ast} \}\), \(\| \hat{e}^{t - 1}_2\| = \| \hat{e}^t_2 \| = 1\), we validate \(\hat{e}^0_2 = \cdots = \hat{e}^{t - 1}_2 = \hat{e}^t_2\).

  By the definition of \(\hat{e}^t_2\), we obtain \(\theta^t = \| \theta^t \| \{\tmop{sgn} (\rho^0) \sin \varphi^t  \hat{e}_1 + \cos \varphi^t \hat{e}^t_2 \} = \| \theta^t \| \{\tmop{sgn} (\rho^0) \sin \varphi^t \hat{e}_1 + \cos \varphi^t \hat{e}^0_2 \}\)
  
  Since \(\theta^t \in \tmop{span} \{\theta^{t - 1}, \theta^{\ast} \}\), then
  \(\theta^t \in \tmop{span} \{\theta^0, \theta^{\ast} \}\), we can express
  \(\frac{\theta^t}{\| \theta^{\ast} \|} = \mathtt{x}^t \hat{e}_1 +
  \mathtt{y}^t \hat{e}^t_2 = \mathtt{x}^t \hat{e}_1 + \mathtt{y}^t
  \hat{e}^0_2\).
  
  Comparing the expressions for \(\theta^t\), we derive the following result.
  \[ \frac{\theta^t}{\| \theta^{\ast} \|} = \mathtt{x}^t \hat{e}_1 +
     \mathtt{y}^t \hat{e}^0_2 = \left\{ \tmop{sgn} (\rho^0) \sin \varphi^t
     \cdot \frac{\| \theta^t \|}{\| \theta^{\ast} \|} \right\} \hat{e}_1 +
     \left\{ \cos \varphi^t \cdot \frac{\| \theta^t \|}{\| \theta^{\ast} \|}
     \right\} \hat{e}^0_2 \]
  With the recurrence relation \(\sin \varphi^t \cdot \frac{\| \theta^t \|}{\|
  \theta^{\ast} \|} = \left( \frac{\pi}{2} \right)^{- 1}  [\varphi^{t - 1} +
  \cos \varphi^{t - 1} \sin \varphi^{t - 1}]\), \(\cos \varphi^t \cdot \frac{\|
  \theta^t \|}{\| \theta^{\ast} \|} = \left( \frac{\pi}{2} \right)^{- 1}
  \cos^2 \varphi^{t - 1}\), which we showed in the proof of Recurrence Relation, we derive the implicit equation of \(\mathtt{x}^t, \mathtt{y}^t (t \geq 1)\) as follows:
  \begin{eqnarray*}
    \mathtt{x}^t & = & \left\langle \frac{\theta^t}{\| \theta^{\ast} \|},
    \hat{e}_1 \right\rangle = \left( \frac{\pi}{2} \right)^{- 1} \tmop{sgn}
    (\rho^0)  [\varphi^{t - 1} + \cos \varphi^{t - 1} \sin \varphi^{t - 1}]\\
    \mathtt{y}^t & = & \left\langle \frac{\theta^t}{\| \theta^{\ast} \|},
    \hat{e}^0_2 \right\rangle = \left\langle \frac{\theta^t}{\| \theta^{\ast}
    \|}, \hat{e}^t_2 \right\rangle = \left( \frac{\pi}{2} \right)^{- 1} \cos^2
    \varphi^{t - 1}
  \end{eqnarray*}
  Let's cancel out the parameter \(\varphi^{t - 1}\) in the parameterized curve
  \(\varphi^{t - 1} \mapsto (\mathtt{x}^t, \mathtt{y}^t)\)
  \[ \tmop{sgn} (\rho^0) \frac{\pi}{2} \mathtt{x}^t = \sqrt{\left(
     \frac{\pi}{2} \mathtt{y}^t \right)  \left( 1 - \frac{\pi}{2} \mathtt{y}^t
     \right)} + \arccos \sqrt{\frac{\pi}{2} \mathtt{y}^t} \]
  Let \(\phi \assign 2 \left( \frac{\pi}{2} - \varphi \right) \in (0,\pi]\), then we rewrite the implicit equations of \(\mathtt{x}^t, \mathtt{y}^t (t \geq 1)\) as follows, which is a cycloid curve~\cite{harris1998handbook}:
  \begin{eqnarray*}
    1 - \tmop{sgn} (\rho^0) \mathtt{x}^t & = & \pi^{- 1}  [\phi - \sin
    \phi]_{\phi = \phi^{t - 1}}\\
    \mathtt{y}^t & = & \pi^{- 1}  [1 - \cos \phi]_{\phi = \phi^{t - 1}}
  \end{eqnarray*}
\end{proof}

\newpage
\subsection{Linear Growth and Quadratic Convergence of Sub-optimality Angles in Noiseless Setting}
\begin{theorem}[Proposition~\ref{prop:linear_growth_angle}: Linear Growth of Sub-optimality Angle]
    In the noiseless setting, namely SNR \(\eta := \| \theta^{\ast} \|/\sigma \to \infty\),
    \(\tan \varphi^t\) of the sub-optimality angle \(\varphi^t\) grows at least linearly:
    \[
    \tan \varphi^{t+1} \geq 2\cdot\tan \varphi^t
    \]
    where \(\varphi^t := \frac{\pi}{2} - \arccos |\rho^t|\), \(\rho^t := \frac{\langle \theta^t, \theta^\ast \rangle}{\|\theta^t\|\|\theta^\ast\|}\).
\end{theorem}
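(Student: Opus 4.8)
The plan is to start directly from the recurrence relation established in Proposition~\ref{prop:recurrence_angle}, namely $\tan \varphi^{t+1} = \tan \varphi^t + \varphi^t (\tan^2 \varphi^t + 1)$, which is valid in the noiseless setting since $\varphi^t \in [0, \tfrac{\pi}{2})$ (monotonicity of $\varphi^t$ below $\tfrac{\pi}{2}$ was verified in the proof of Proposition~\ref{prop:parametric_cycloid}). The key algebraic step is to rewrite the factor $\tan^2 \varphi^t + 1 = \sec^2 \varphi^t$ using the double-angle identity $\sin(2\varphi^t) = 2 \sin \varphi^t \cos \varphi^t$, which gives $\tan^2 \varphi^t + 1 = \dfrac{2 \tan \varphi^t}{\sin (2\varphi^t)}$ whenever $\varphi^t \in (0, \tfrac{\pi}{2})$.

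Substituting this into the recurrence and factoring out $\tan \varphi^t$ yields
\[
\tan \varphi^{t+1} = \left( 1 + \frac{2 \varphi^t}{\sin (2\varphi^t)} \right) \tan \varphi^t .
\]
It then suffices to show $\dfrac{2\varphi^t}{\sin(2\varphi^t)} \geq 1$, i.e.\ $\sin(2\varphi^t) \leq 2\varphi^t$. Since $\varphi^t \in (0, \tfrac{\pi}{2})$ forces $2\varphi^t \in (0, \pi)$, this is an immediate consequence of the elementary inequality $\sin x \leq x$ valid for all $x \geq 0$. Hence $\tan \varphi^{t+1} \geq 2 \tan \varphi^t$ for $\varphi^t \in (0, \tfrac{\pi}{2})$.

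Finally I would dispatch the degenerate case $\varphi^t = 0$: there $\tan \varphi^t = 0$, and the recurrence gives $\tan \varphi^{t+1} = 0 = 2 \cdot \tan \varphi^t$, so the inequality holds trivially (equivalently, $\sin(2\varphi^t)/(2\varphi^t) \to 1$ extends the factor continuously to the value $2$ at $\varphi^t = 0$). This is essentially the only technical point to be careful about, so there is no real obstacle; the main thing to verify is that the double-angle rewriting is legitimate ($\cos \varphi^t \neq 0$, guaranteed by $\varphi^t < \tfrac{\pi}{2}$) and that the edge case is handled. The same computation also shows, as noted in the remark, that the linear factor $1 + 2\varphi^t/\sin(2\varphi^t)$ is tight as $\varphi^t \to 0$ and strictly exceeds $2$ for $\varphi^t > 0$, so the constant $2$ is optimal.
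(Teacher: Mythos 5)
Your proposal is correct and follows essentially the same route as the paper's proof: both start from the recurrence of Proposition~\ref{prop:recurrence_angle}, use the identity $\sin(2\varphi^t)=2\tan\varphi^t/(1+\tan^2\varphi^t)$ to rewrite the update as $\tan\varphi^{t+1}=(1+2\varphi^t/\sin(2\varphi^t))\tan\varphi^t$, invoke $\sin x\le x$, and dispose of the $\varphi^t=0$ case separately. No gaps to report.
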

\begin{proof}
    We start from the recurrence relation of \(\varphi^t\) for the population EM update rule:
    \[
    \tan \varphi^{t+1} = \tan \varphi^t + \varphi^t (\tan^2 \varphi^t + 1)
    \]
    The inequality \(\tan \varphi^{t+1} \geq 2\cdot \tan \varphi^t\) holds when \(\varphi^t = 0\). 
    Let's consider the case when \(\varphi^t > 0\).
    By the trigonometry identity \(\sin(2\varphi^t) = 2 \tan \varphi^t / (1+\tan^2 \varphi^t)\), 
    and the inequality \(2\varphi^t \geq \sin(2\varphi^t)\), we have:
    \[
    \tan \varphi^{t+1} = (1+ \frac{2 \varphi^t}{\sin(2\varphi^t)}) \tan \varphi^t \geq 2\cdot \tan \varphi^t
    \]

\end{proof}

\begin{theorem}[Proposition~\ref{prop:quadratic_convergence_angle}: Quadratic Convergence of Sub-optimality Angle]
    In the noiseless setting, namely SNR \(\eta := \| \theta^{\ast} \|/\sigma \to \infty\),
    the sub-optimality angle \(\phi^t\) converges quadratically to zero when \(\phi^t \leq 1.4\) is small enough:
    \[
    \frac{\phi^{t+1}}{\pi} \leq \left[ \frac{\phi^t}{\pi} \right]^2
    \]
    where \(\phi^t : = 2\arccos |\rho^t|\), \(\rho^t := \frac{\langle \theta^t, \theta^\ast \rangle}{\|\theta^t\|\|\theta^\ast\|}\).
\end{theorem}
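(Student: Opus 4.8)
The plan is to recast everything in terms of $\phi^t$. Writing $\varphi^t=\frac{\pi}{2}-\frac{\phi^t}{2}$, so that $\tan\varphi^t=\cot\frac{\phi^t}{2}$, $\tan^2\varphi^t+1=\csc^2\frac{\phi^t}{2}$, and $\varphi^t=\frac{\pi-\phi^t}{2}$, the recurrence of Proposition~\ref{prop:recurrence_angle} (which applies since $\phi^t\le 1.4$ forces $\varphi^t\neq\frac{\pi}{2}$) becomes
\[
\cot\frac{\phi^{t+1}}{2}=\cot\frac{\phi^t}{2}+\frac{\pi-\phi^t}{2}\,\csc^2\frac{\phi^t}{2}.
\]
Since $\cot$ is strictly decreasing on $(0,\pi)$, and both $\frac{\phi^{t+1}}{2}\in(0,\frac{\pi}{2}]$ and $\frac{(\phi^t)^2}{2\pi}\in(0,\pi)$ when $\phi^t\le 1.4$, the target inequality $\phi^{t+1}\le\frac{(\phi^t)^2}{\pi}$ is equivalent to $\cot\frac{\phi^{t+1}}{2}\ge\cot\frac{(\phi^t)^2}{2\pi}$, hence by the displayed identity to $g(\phi^t)\ge 0$, where
\[
g(\phi):=\cot\frac{\phi}{2}+\frac{\pi-\phi}{2}\,\csc^2\frac{\phi}{2}-\cot\frac{\phi^2}{2\pi}.
\]
Thus the whole statement reduces to the one-variable inequality $g\ge 0$ on $(0,1.4]$.

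Next I would lower-bound $g$ by an explicit polynomial using three sharp enough elementary estimates valid for $\phi\in(0,1.4]$: (i) from the truncated alternating series $\sin x<x-\frac{x^3}{6}+\frac{x^5}{120}$ together with $(1-u)^{-2}\ge 1+2u$, one gets $\csc^2\frac{\phi}{2}>\frac{4}{\phi^2}+\frac{1}{3}-\frac{\phi^2}{240}$; (ii) from the Mittag--Leffler expansion $\frac{1}{x}-\cot x=\sum_{k\ge 1}\frac{2x}{k^2\pi^2-x^2}$, bounding $k^2\pi^2-x^2\ge\frac34k^2\pi^2$ for $x=\frac{\phi}{2}\le\frac{\pi}{2}$, one gets $\cot\frac{\phi}{2}>\frac{2}{\phi}-\frac{2}{9}\phi$; (iii) the standard bound $\cot y<\frac{1}{y}-\frac{y}{3}$ on $(0,\pi)$ gives $\cot\frac{\phi^2}{2\pi}<\frac{2\pi}{\phi^2}-\frac{\phi^2}{6\pi}$. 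Plugging these into $g$, the singular terms $\frac{2}{\phi}$ and $\frac{2\pi}{\phi^2}$ cancel exactly, leaving $g(\phi)>h(\phi)$ with
\[
h(\phi)=\frac{\pi}{6}-\frac{7}{18}\phi+\frac{\phi^2}{6\pi}-\frac{(\pi-\phi)\phi^2}{480}.
\]

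To close, I would verify $h>0$ on $(0,1.4]$. Differentiating, $h'$ is an upward-opening quadratic in $\phi$ whose linear coefficient $\frac{1}{3\pi}-\frac{\pi}{240}$ is positive, so $h'$ is increasing on $[0,1.4]$; one checks $h'(1.4)<0$, hence $h'<0$ on the whole interval, so $\min_{(0,1.4]}h=h(1.4)$, and a direct evaluation gives $h(1.4)>0$ (numerically $\approx 0.08$). Therefore $g(\phi)>h(\phi)\ge h(1.4)>0$ for all $\phi\in(0,1.4]$, which is exactly the claim. The main obstacle is quantitative rather than conceptual: the slack in the target inequality at the endpoint $\phi=1.4$ is only about $0.18$, so the estimates for $\cot$ and $\csc^2$ must be taken close to their leading two-term Taylor behavior with a clean control of the error on $(0,0.7]$; a tempting alternative — proving directly that $g$ itself is monotone decreasing — is harder because of the composite term $\cot\frac{\phi^2}{2\pi}$, so passing to the polynomial $h$ is the clean route.
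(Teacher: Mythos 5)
Your proof is correct, and it follows the same overall route as the paper's: both arguments start from the recurrence of Proposition~\ref{prop:recurrence_angle} rewritten in cotangent form, \(\cot\frac{\phi^{t+1}}{2}=\cot\frac{\phi^t}{2}+\frac{\pi-\phi^t}{2}\csc^2\frac{\phi^t}{2}\), and reduce the claim to a single elementary one-variable inequality on \((0,1.4]\). Where you diverge is the endgame. The paper injects its slack via \(x\le\tan x\), i.e.\ \(\phi^{t+1}\le 2\tan\frac{\phi^{t+1}}{2}\), which turns the sufficient condition into the ratio inequality \(\frac{2\pi(1-\cos\phi)}{\phi^2}\le\sin\phi+(\pi-\phi)\), closed by the Taylor bounds \(\cos x\ge 1-\frac{x^2}{2}+\frac{x^4}{24}-\frac{x^6}{720}\), \(\sin x\ge x-\frac{x^3}{6}\) and the sign check \(30-\frac{60}{\pi}\phi-\phi^2\ge 0\) on \([0,1.4]\). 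You instead keep the exact equivalence through monotonicity of \(\cot\) on \((0,\pi)\), minorize \(\cot\frac{\phi}{2}\) and \(\csc^2\frac{\phi}{2}\) and majorize \(\cot\frac{\phi^2}{2\pi}\) term by term (Mittag--Leffler series and a truncated sine expansion), and verify positivity of the polynomial minorant \(h\) by a monotonicity-of-\(h'\) argument; this retains a bit more slack (the \(-\frac{\phi^2}{6\pi}\) term) at the cost of a slightly longer computation. I checked your three estimates and the numerics (\(h'(1.4)<0\), \(h(1.4)\approx 0.076>0\)); they hold, so the argument closes. One small omission: the case \(\phi^t=0\) must be handled separately, since Proposition~\ref{prop:recurrence_angle} assumes \(\varphi^t\neq\frac{\pi}{2}\) and your cot-equivalence needs \(\phi^t>0\); as in the paper, this is one line, because the noiseless update then gives \(\phi^{t+1}=0\).
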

\begin{proof}
If \(\phi^t=0\), then \(\phi^{t+1}=0\) by the EM updates in noiseless setting.
Otherwise, we have \(\phi^{t+1} > 0\) when \(\phi^t > 0\) from the EM updates in noiseless setting.
We start from the recursive relation of \(\phi^t\) for the population EM update rule:
\[
    \tan \varphi^{t+1} = \tanh \varphi^t + \varphi^t (\tan^2 \varphi^t + 1)
\]
By the definition \(\phi^t = 2(\frac{\pi}{2} - \varphi^t)\) and apply the inequality \(x \leq \tan x, \forall x \in [0, \frac{\pi}{2})\), we have:
\[
\frac{1}{\tan \frac{\phi^{t+1}}{2}} = \frac{1}{\tan \frac{\phi^t}{2}} + \frac{\pi -\phi^t}{2}\cdot \frac{1}{\sin^2 \frac{\phi^t}{2}}
\implies
    \frac{\phi^{t+1}}{\pi} \leq \frac{2}{\pi}\tan\frac{\phi^{t+1}}{2}
    = \left[ \frac{\phi^t}{\pi} \right]^2\cdot \frac{2\pi(1-\cos \phi^t)/[\phi^t]^2}{\sin \phi^t + (\pi - \phi^t)}
\]
By the inequality based on Taylor's theorem \(\cos x \geq 1 - \frac{x^2}{2} + \frac{x^4}{24} - \frac{x^6}{720}, \sin x \geq x - \frac{x^3}{6}, \forall x\geq 0\), we have:
\[
    \frac{2\pi(1-\cos \phi^t)}{[\phi^t]^2} \leq \pi - \frac{\pi}{12}[\phi^t]^2 + \frac{\pi}{360}[\phi^t]^4
    \leq \pi - \frac{1}{6}[\phi^t]^3 \leq \sin \phi^t + (\pi - \phi^t)
\]
where the second inequality is due to \(30 -\frac{60}{\pi} \phi^t - [\phi^t]^2 \geq 0\) when \(0\leq \phi^t \leq 1.4\).
Therefore, when \(0\leq \phi^0 \leq 1.4\), we have \(0\leq \phi^{t+1} \leq \phi^t \leq \cdots \leq \phi^0 \leq 1.4\) and
\[
\frac{\phi^{t+1}}{\pi} \leq \left[ \frac{\phi^t}{\pi} \right]^2
\]
\end{proof}

\subsection{Accuracy of Regression Parameters and Mixing Weights in Noiseless Setting}
\begin{theorem}[Proposition~\ref{prop:errors_em_updates_angle}: Accuracy of EM Updates and Sub-optimality Angle]
    In the noiseless setting, namely SNR \(\eta := \| \theta^{\ast} \|/\sigma \to \infty\),
    \begin{eqnarray*}
        \frac{\| \theta^t - \sgn(\rho^0) \theta^\ast\|}{\| \theta^\ast\|} &\leq & \frac{\left[\phi^{t-1}\right]^2}{2\pi} \\
        \|\pi^t - \bar{\pi}^\ast \|_1 & = & \frac{\phi^{t-1}}{\pi}\cdot \left\| \pi^\ast - \frac{\mathds{1}}{2} \right\|_1
    \end{eqnarray*}
    where \(\phi^{t-1} := 2(\frac{\pi}{2} - \arccos |\rho^{t-1}|)\), \(\rho^{t-1} := \frac{\langle \theta^{t-1}, \theta^\ast \rangle}{\|\theta^{t-1}\|\|\theta^\ast\|}\),
    and \(\bar{\pi}^\ast := \frac{\mathds{1}}{2}+\sgn(\rho^0)(\pi^\ast-\frac{\mathds{1}}{2}), \mathds{1} := (1, 1)\).
\end{theorem}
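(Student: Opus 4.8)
The plan is to read both error quantities directly off the cycloid parametrization of the regression iterates (Proposition~\ref{prop:parametric_cycloid}) and off the noiseless mixing-weight update (Corollary~\ref{cor:em_updates_noiseless}), and then to reduce each bound to an elementary scalar inequality. Throughout I write $\phi^{t-1} = 2\arccos|\rho^{t-1}| = \pi - 2\varphi^{t-1} \in [0,\pi]$ as in Proposition~\ref{prop:parametric_cycloid}, and I use that the sign $\sgn(\rho^t) = \sgn(\rho^0)$ is preserved along the noiseless EM iterations (established in the proof of Proposition~\ref{prop:parametric_cycloid} by tracking the sign of $\langle\theta^t,\theta^\ast\rangle$).

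For the regression parameters, first I would invoke Proposition~\ref{prop:parametric_cycloid} to write $\theta^t/\|\theta^\ast\| = \mathtt{x}^t\hat{e}_1 + \mathtt{y}^t\hat{e}_2^0$ with $\hat{e}_1 = \theta^\ast/\|\theta^\ast\|$, $1 - \sgn(\rho^0)\mathtt{x}^t = \pi^{-1}(\phi^{t-1} - \sin\phi^{t-1})$, and $\mathtt{y}^t = \pi^{-1}(1 - \cos\phi^{t-1})$. Since $\hat{e}_1\perp\hat{e}_2^0$ are unit vectors and $\sgn(\rho^0)^2 = 1$, Pythagoras gives $\|\theta^t - \sgn(\rho^0)\theta^\ast\|^2/\|\theta^\ast\|^2 = (1-\sgn(\rho^0)\mathtt{x}^t)^2 + (\mathtt{y}^t)^2 = \pi^{-2}\big[(\phi^{t-1}-\sin\phi^{t-1})^2 + (1-\cos\phi^{t-1})^2\big]$. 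It then remains to prove, for $\phi\in[0,\pi]$, the scalar inequality $(\phi-\sin\phi)^2 + (1-\cos\phi)^2 \le \phi^4/4$. Expanding the left-hand side to $\phi^2 - 2\phi\sin\phi + 2 - 2\cos\phi$, this is equivalent to $h(\phi) := \tfrac14\phi^4 - \phi^2 + 2\phi\sin\phi - 2 + 2\cos\phi \ge 0$; I would check $h(0) = 0$ and $h'(\phi) = \phi(\phi^2 - 2 + 2\cos\phi) \ge 0$ for $\phi\ge0$, the latter from the elementary bound $1 - \cos\phi = 2\sin^2(\phi/2) \le \phi^2/2$, so $h \ge 0$ and the first claim follows after taking square roots.

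For the mixing weights, I would start from Corollary~\ref{cor:em_updates_noiseless}: since $N$ does not depend on $\nu$ in the noiseless setting, the imbalance at step $t$ is $\tanh\nu^t = N(\theta^{t-1},\nu^{t-1}) = \sgn(\rho^{t-1})\tfrac{2}{\pi}\varphi^{t-1}\tanh\nu^\ast = \sgn(\rho^0)\tfrac{2}{\pi}\varphi^{t-1}\tanh\nu^\ast$ by sign preservation. Writing $\pi^t = \big(\tfrac{1+\tanh\nu^t}{2}, \tfrac{1-\tanh\nu^t}{2}\big)$, $\pi^\ast - \tfrac{\mathds{1}}{2} = \big(\tfrac{\tanh\nu^\ast}{2}, -\tfrac{\tanh\nu^\ast}{2}\big)$, and $\bar\pi^\ast = \tfrac{\mathds{1}}{2} + \sgn(\rho^0)(\pi^\ast - \tfrac{\mathds{1}}{2})$, the $\ell_1$ error reduces to $\|\pi^t - \bar\pi^\ast\|_1 = |\tanh\nu^t - \sgn(\rho^0)\tanh\nu^\ast| = |\tanh\nu^\ast|\,(1 - \tfrac{2}{\pi}\varphi^{t-1})$, where the absolute value drops out because $\varphi^{t-1}\in[0,\pi/2]$. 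Finally I would substitute $|\tanh\nu^\ast| = \tanh|\nu^\ast| = \|\pi^\ast - \tfrac{\mathds{1}}{2}\|_1$ and $1 - \tfrac{2}{\pi}\varphi^{t-1} = \tfrac{\pi - 2\varphi^{t-1}}{\pi} = \tfrac{\phi^{t-1}}{\pi}$, which yields the claimed identity.

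The only step that is not pure bookkeeping is the scalar inequality $h(\phi)\ge0$, which I expect to be the main (though still routine) obstacle; it is dispatched by the single differentiation above together with $1-\cos\phi\le\phi^2/2$. Everything else is direct substitution into the already-proved cycloid equations (Proposition~\ref{prop:parametric_cycloid}) and the noiseless update for the mixing-weight imbalance (Corollary~\ref{cor:em_updates_noiseless}).
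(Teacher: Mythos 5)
Your proposal is correct. The mixing-weight identity is handled exactly as in the paper: you invoke Corollary~\ref{cor:em_updates_noiseless}, use sign preservation $\sgn(\rho^{t-1})=\sgn(\rho^0)$, reduce the $\ell_1$ error to $|\tanh\nu^t-\sgn(\rho^0)\tanh\nu^\ast|=\tanh|\nu^\ast|\,(1-\tfrac{2}{\pi}\varphi^{t-1})$, and substitute $\tanh|\nu^\ast|=\|\pi^\ast-\tfrac{\mathds{1}}{2}\|_1$ and $\phi^{t-1}=\pi-2\varphi^{t-1}$; this matches the paper's computation line for line. For the regression-parameter bound the route differs slightly: the paper writes the displacement as $\tfrac{1}{\pi}\int_0^{\phi^{t-1}}\bigl([1-\cos\phi]\hat e_1+[\sin\phi]\hat e_2^0\bigr)\,\mathd\phi$, bounds the chord by the cycloid arc length $\tfrac{4}{\pi}\bigl(1-\cos\tfrac{\phi^{t-1}}{2}\bigr)$ via the triangle inequality for integrals, and then applies $1-\cos x\le x^2/2$; you instead compute the exact chord $\tfrac{1}{\pi}\sqrt{(\phi-\sin\phi)^2+(1-\cos\phi)^2}$ from the Pythagorean decomposition in the $\{\hat e_1,\hat e_2^0\}$ frame and prove the scalar inequality $(\phi-\sin\phi)^2+(1-\cos\phi)^2\le\phi^4/4$ by checking $h(0)=0$ and $h'(\phi)=\phi(\phi^2-2+2\cos\phi)\ge 0$, which is a valid one-line monotonicity argument resting on the same bound $1-\cos\phi\le\phi^2/2$. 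Both arguments rely on the same key ingredient (the cycloid parametrization of Proposition~\ref{prop:parametric_cycloid}) and reach the same constant $\tfrac{1}{2\pi}$; yours trades the paper's arc-length trick for a direct quartic expansion, which works through the exact chord rather than its arc-length majorant but is otherwise equivalent in strength.
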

\begin{proof}
    Let's prove the first inequality by starting from the equations of Cycloid Trajectory parameterized by the angle \(\phi^{t-1}\).
    \begin{eqnarray*}
        \frac{\|\theta^t - \sgn(\rho^0)\theta^\ast\|}{\|\theta^\ast\|} & = & 
        \frac{1}{\pi} \left\| [\phi^{t-1}-\sin \phi^{t-1}]\hat{e}_1 + [1-\cos \phi^{t-1}]\hat{e}_2^0 \right\|
        = \frac{1}{\pi} \left\| \int_0^{\phi^{t-1}}([\phi - \cos \phi]\hat{e}_1 + [\sin \phi]\hat{e}_2^0) \mathd\phi \right\| \\
        & \leq & \frac{1}{\pi} \int_0^{\phi^{t-1}} \left\| [\phi - \cos \phi]\hat{e}_1 + [\sin \phi]\hat{e}_2^0 \right\| \mathd\phi 
        = \frac{1}{\pi} \int_0^{\phi^{t-1}} 2 \sin \frac{\phi}{2} \mathd\phi
        = \frac{4}{\pi} (1- \cos \frac{\phi^{t-1}}{2})\\
        & \leq & \frac{[\phi^{t-1}]^2}{2\pi}
    \end{eqnarray*}
    In the above derivation, the first inequality is due to the fact that \(\|\int_0^{\phi^t} \vec{v}(\phi) \mathd \phi\| \leq \int_0^{\phi^t} \|\vec{v}(\phi)\| \mathd \phi\) for vector \(\vec{v}(\phi)\) and \(\phi^{t-1}\geq 0\),
    and the second inequality follows from the inequality \(1- \cos x \leq \frac{x^2}{2}\).

    The proof for the accuracy in mixing weights (the indentity in the second line) is adapted from page 34, Appendix D of~\cite{luo24cycloid}.
    Using EM Updates in Noiseless Setting, and note that \(\mathrm{sgn} (\rho^{t - 1}) = \mathrm{sgn} (\rho^{0})\), we obtain that equation.
\begin{equation*}
   \tanh (\nu^t) = \mathrm{sgn} (\rho^{0}) \left( \frac{2}{\pi}
   \varphi^{t - 1} \right) \cdot \tanh (\nu^{\ast})
\end{equation*}
Since $\pi^t(1)=\frac{1+ \tanh(\nu^t)}{2}, \pi^t(2)=\frac{1- \tanh(\nu^t)}{2}$ and 
$\bar{\pi}^{\ast}(1)=\frac{1+ \mathrm{sgn} (\rho^{0})\tanh(\nu^\ast)}{2}, \bar{\pi}^{\ast}(2)=\frac{1- \mathrm{sgn} (\rho^{0})\tanh(\nu^\ast)}{2}$.
\begin{equation*}
   \| \pi^t - \bar{\pi}^{\ast} \|_1 = |\pi^t(1)-\bar{\pi}^{\ast}(1)| + |\pi^t(2)-\bar{\pi}^{\ast}(2)|
   = | \tanh(\nu^t) - \mathrm{sgn} (\rho^{0})\tanh(\nu^\ast)| = \left| 1 - \frac{2}{\pi} \varphi^{t - 1}
   \right| \cdot \left\| \pi^{\ast} - \frac{\mathds{1}}{2} \right\|_1
   = \frac{\phi^{t-1}}{\pi}\cdot \left\| \pi^{\ast} - \frac{\mathds{1}}{2} \right\|_1
\end{equation*}
In the above equation, we use such an identity \(\tanh (\nu^{\ast}) = \left\| \pi^{\ast} - \frac{\mathds{1}}{2} \right\|_1\)
and the fact that \(\phi^{t-1} = \pi - 2\varphi^{t-1}\).
\end{proof}

\subsection{Convergence Guarantee of EM Updates at the Population Level in Noiseless Setting}
\begin{theorem}[Theorem~\ref{theorem:population_level_convergence}: Population Level Convergence, Theorem 4.1 in~\cite{luo24cycloid}]
    In the noiseless setting, namely SNR \(\eta := \| \theta^{\ast} \|/\sigma \to \infty\), if the initial sup-optimality angle cosine \(\rho^0:=\frac{\langle \theta^0, \theta^\ast \rangle}{\|\theta^0\|\|\theta^\ast\|}\neq 0\),
    then with the number of total iterations at most \(T=\mathcal{O}(\log \frac{1}{|\rho^0|}\vee \log\log\frac{1}{\varepsilon})\), the error of EM update at the population level is bounded by:
    \(\frac{\| \theta^{T+1} - \sgn(\rho^0) \theta^\ast\|}{\| \theta^\ast\|} < \varepsilon\) and \(\|\pi^{T+1} - \bar{\pi}^\ast \|_1 = \left\| \pi^\ast - \frac{\mathds{1}}{2} \right\|_1 \mathcal{O}(\sqrt{\varepsilon})\).
    where \(\bar{\pi}^\ast := \frac{\mathds{1}}{2}+\sgn(\rho^0)(\pi^\ast-\frac{\mathds{1}}{2}), \mathds{1} := (1, 1)\).
\end{theorem}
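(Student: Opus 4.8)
The plan is to reduce the convergence analysis to the scalar recursion for the sub-optimality angle and defer the translation to parameter errors to the very end. Writing $\phi^t = \pi - 2\varphi^t$, I would track $\{\varphi^t\}$ through the recurrence $\tan\varphi^{t+1} = \tan\varphi^t + \varphi^t(\tan^2\varphi^t + 1)$ of Proposition~\ref{prop:recurrence_angle}, split the run into a linear-growth phase and a quadratic-convergence phase, and only then convert the bound on $\phi^T$ into errors for $\theta^{T+1}$ and $\pi^{T+1}$ via Proposition~\ref{prop:errors_em_updates_angle}. The monotonicity $\tan\varphi^{t+1}\geq\tan\varphi^t$ (immediate from the recurrence since $\varphi^t\geq 0$) shows $\varphi^t$ is nondecreasing, which is what makes the two phases fit together. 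If $|\rho^0|=1$ the statement is trivial with $T=0$ since $\phi^0=0$, so henceforth assume $0<|\rho^0|<1$, whence $\varphi^t\in(0,\pi/2)$ for all $t$ and $\tan\varphi^t$ stays finite.

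\textbf{Phase 1: linear growth up to a constant.} Since $\rho^0\neq 0$, we have $\sin\varphi^0=|\rho^0|>0$ and hence $\tan\varphi^0\geq|\rho^0|>0$. Iterating the linear-growth bound $\tan\varphi^{t+1}\geq 2\tan\varphi^t$ of Proposition~\ref{prop:linear_growth_angle} gives $\tan\varphi^t\geq 2^t|\rho^0|$, so after $T_1=\lceil\log_2(\tan 1/|\rho^0|)\rceil\vee 0=\mathcal{O}(\log\frac{1}{|\rho^0|})$ iterations we reach $\varphi^{T_1}\geq 1$. Then $\phi^{T_1}=\pi-2\varphi^{T_1}\leq\pi-2<1.4$, and by monotonicity of $\varphi^t$ we get $\phi^t\leq\pi-2<1.4$ for every $t\geq T_1$. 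If $\varphi^0\geq 1$ already, take $T_1=0$.

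\textbf{Phase 2: quadratic convergence and translation.} For $t\geq T_1$ the hypothesis $\phi^t\leq 1.4$ of Proposition~\ref{prop:quadratic_convergence_angle} holds, so $\phi^{T_1+j}/\pi\leq(\phi^{T_1}/\pi)^{2^j}\leq(1.4/\pi)^{2^j}$ for all $j\geq 0$. Hence there exists $T_2=\mathcal{O}(\log\log\frac{1}{\varepsilon})$ with $\phi^{T_1+T_2}<\sqrt{2\pi\varepsilon}$: it suffices to take $2^{T_2}$ above $\tfrac12\log(\pi/(2\varepsilon))/\log(\pi/1.4)$. Setting $T=T_1+T_2=\mathcal{O}(\log\frac{1}{|\rho^0|}\vee\log\log\frac{1}{\varepsilon})$ and applying Proposition~\ref{prop:errors_em_updates_angle} at iteration $T+1$ (whose errors are governed by $\phi^T$) gives
\[
\frac{\|\theta^{T+1}-\sgn(\rho^0)\theta^\ast\|}{\|\theta^\ast\|}\leq\frac{[\phi^T]^2}{2\pi}<\varepsilon,\qquad \|\pi^{T+1}-\bar{\pi}^\ast\|_1=\frac{\phi^T}{\pi}\left\|\pi^\ast-\frac{\mathds{1}}{2}\right\|_1<\sqrt{\tfrac{2\varepsilon}{\pi}}\left\|\pi^\ast-\frac{\mathds{1}}{2}\right\|_1,
\]
which is exactly the asserted bound with the $\mathcal{O}(\sqrt{\varepsilon})$ factor on the mixing weights.

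\textbf{Main obstacle.} The argument is short once Propositions~\ref{prop:recurrence_angle}, \ref{prop:linear_growth_angle}, \ref{prop:quadratic_convergence_angle} and~\ref{prop:errors_em_updates_angle} are in hand; the real care is in the threshold bookkeeping — confirming that the linear-growth phase lands \emph{inside} the quadratic-convergence region $\phi\leq 1.4$, that the sequence never exits that region afterward (this is where monotonicity of $\varphi^t$ is used), and that the two stage lengths \emph{add} rather than multiply, yielding $T=\mathcal{O}(\log\frac{1}{|\rho^0|}\vee\log\log\frac{1}{\varepsilon})$. I expect no genuine difficulty beyond this; the $\sqrt{\varepsilon}$ loss in the mixing-weight bound is structural, arising because $\phi^T$ enters the regression error quadratically but the mixing-weight error only linearly in Proposition~\ref{prop:errors_em_updates_angle}.
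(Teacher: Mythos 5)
Your proposal is correct and follows essentially the same route as the paper's proof: a linear-growth phase of $T_1=\mathcal{O}(\log\frac{1}{|\rho^0|})$ iterations driving $\varphi^t$ past $1$ via Proposition~\ref{prop:linear_growth_angle}, a quadratic-convergence phase of $T_2=\mathcal{O}(\log\log\frac{1}{\varepsilon})$ iterations via Proposition~\ref{prop:quadratic_convergence_angle} to reach $\phi^T<\sqrt{2\pi\varepsilon}$, and the final translation to parameter and mixing-weight errors through Proposition~\ref{prop:errors_em_updates_angle}. The bookkeeping (including $\tan\varphi^0\geq|\rho^0|$ and the persistence of $\phi^t<1.4$ after $T_1$) matches the paper's argument, so no gaps.
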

\begin{proof}
    We provide a new straightforward proof for the convergence gaurantee of EM updates at the population level in the noiseless setting, 
    by invoking the results for accuracy in regeressioon parameters and mixing weights, 
    and applying the linear growth and quadratic convergence of the sub-optimality angles.
    The other old lengthy proof of this Theorem 4.1 of Population Level Convergence in~\cite{luo24cycloid} can be found on pages 34-35, Appendix D of~\cite{luo24cycloid}.

    Since when the sub-optimality angle \(\phi^t > 1.4\) is large, the other sub-optimality angle \(\varphi^t = \frac{\pi}{2} - \frac{\phi^t}{2} \leq \frac{\pi}{2}-0.7 \approx 0.8708 < 1 \) is also small enough.
    Suppose the initial sub-optimality angle \(\varphi^0 < 1\),
    then after at most \(T_1:=\lceil (\ln\tan 1+ \ln\frac{1}{|\rho^0|})/\ln 2 \rceil_+ = \Theta(\log \frac{1}{|\rho^0|})\) EM iterations, the sub-optimality angle \(\phi^t\leq 1.4, \forall t\geq T_1\) will be sufficiently small enough.
    \begin{eqnarray*}
        & &\tan \varphi^t \geq \tan \varphi^{T_1} \geq 2^{T_1} \tan \varphi^0
        \geq \tan 1\cdot \frac{1}{|\rho^0|}\cdot \tan \varphi^0 = \tan 1 \cdot \frac{1}{\sin \varphi^0}\cdot \frac{\sin \varphi^0}{\cos \varphi^0} \geq \tan 1\quad \forall t\geq T_1\\
        & &\implies \varphi^t \geq 1 
        \implies \phi^t = 2\left(\frac{\pi}{2} - \varphi^t\right)< 1.4\quad \forall t\geq T_1
    \end{eqnarray*}
    Otherwise, if the initial sub-optimality angle \(\varphi^0 \geq 1\), then \(T_1=0\) and we still have \(\phi^t<  1.4, \forall t\geq T_1\). 
    Then by the quadratic convergence of the sub-optimality angle \(\phi^t\) when \(\phi^t < 1.4\) is small enough, 
    then after at most \(T_2:=\lceil (\ln\ln \frac{\pi}{2 \varepsilon}-\ln 2 -\ln\ln \frac{\pi}{1.4}) / \ln 2 \rceil_+=\Theta(\log \log \frac{1}{\varepsilon})\), we have the following inequality:
    \begin{eqnarray*}
        \phi^{T_1+T_2} \leq \pi \left(\frac{\phi^{T_1}}{\pi}\right)^{2^{T_2}} < \pi \left(\frac{1.4}{\pi}\right)^{2^{T_2}}
        \leq \pi \left(\frac{1.4}{\pi}\right)^{\frac{\frac{1}{2}\ln \frac{\pi}{2 \varepsilon}}{\ln \frac{\pi}{1.4}}}
        = \pi \left(\frac{\pi}{2\varepsilon} \right)^{-\frac{1}{2}}
        =\sqrt{2\pi\varepsilon}
    \end{eqnarray*}
    Therefore, after at most \(T:=T_1+T_2=\Theta(\log \frac{1}{|\rho^0|}\vee \log\log\frac{1}{\varepsilon})\) EM iterations, we have \(\phi^T < \sqrt{2\pi\varepsilon}\) for all \(t\geq T\).
    By using the results for accuracy in regeressioon parameters and mixing weights, we have the following bounds:
    \[
    \frac{\| \theta^{T+1} - \sgn(\rho^0) \theta^\ast\|}{\| \theta^\ast\|} \leq \frac{[\phi^{T}]^2}{2\pi}  < \varepsilon, \quad
    \|\pi^{T+1} - \bar{\pi}^\ast \|_1 = \frac{\phi^{T}}{\pi}\cdot \left\| \pi^\ast - \frac{\mathds{1}}{2} \right\|_1 
    < \sqrt{\frac{2\varepsilon}{\pi}}\cdot \left\| \pi^\ast - \frac{\mathds{1}}{2} \right\|_1
    = \mathcal{O}(\sqrt{\varepsilon})\cdot \left\| \pi^\ast - \frac{\mathds{1}}{2} \right\|_1
    \]
    where the last inequality of both bounds follows from \(\phi^{T} < \sqrt{2\pi\varepsilon}\).
\end{proof}

\newpage
\section{Finite-Sample Level Analysis in Noiseless Setting}\label{sup:finite_sample_analysis}

\subsection{Statistical Error, Statistical Accuracy of EM Updates in Noiseless Setting}
\begin{theorem}[Proposition~\ref{prop:projected_error_regression}: Projected Error of Easy EM Update for Regression Parameters, Proposition 5.2 in~\cite{luo24cycloid}]
    In the noiseless setting, the projection on $\text{span}\{\theta,\theta^\ast\}$ for the statistical error of $\theta$ satisfies
    \begin{equation}\nonumber
      \frac{\|P_{\theta,\theta^\ast}
      [M_n^{\tmop{easy}} (\theta, \nu) - M (\theta, \nu)]\|}{\|\theta^\ast\|}
      = \mathcal{O}\left(\sqrt{\frac{\log \frac{1}{\delta}}{n}} \vee \frac{\log \frac{1}{\delta}}{n}\right),
    \end{equation}
    with probability at least $1 - \delta$, where $M_n (\theta, \nu), M (\theta,
    \nu)$ are the EM update rules for $\theta$ at the Finite-sample level and the
    population level respectively, and the orthogonal projection matrix $P_{\theta,\theta^\ast}$ satisfies
    $\text{span}(P_{\theta,\theta^\ast})=\text{span}\{\theta,\theta^\ast\}$ 
    .
\end{theorem}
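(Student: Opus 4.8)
The plan is to reduce the Easy EM update to the hard-label form it takes in the noiseless limit, restrict all randomness to the two-dimensional plane $\mathrm{span}\{\theta,\theta^\ast\}$ on which the projected error lives, and then apply a sub-exponential Bernstein bound to an i.i.d.\ average of centered planar vectors. First I would observe that in the noiseless setting $\varepsilon\equiv 0$, so $y_i=(-1)^{z_i+1}\langle x_i,\theta^\ast\rangle$ and $\tfrac{y_i\langle x_i,\theta\rangle}{\sigma^2}$ diverges to $\pm\infty$ almost surely; hence $\tanh\!\big(\tfrac{y_i\langle x_i,\theta\rangle}{\sigma^2}+\nu\big)\to\sgn(y_i\langle x_i,\theta\rangle)$ and, using $\sgn(y_i\langle x_i,\theta\rangle)\,y_i=|y_i|\sgn(\langle x_i,\theta\rangle)$ together with $|y_i|=|\langle x_i,\theta^\ast\rangle|$,
\[
 M_n^{\mathrm{easy}}(\theta,\nu)=\frac1n\sum_{i=1}^n |\langle x_i,\theta^\ast\rangle|\,\sgn(\langle x_i,\theta\rangle)\,x_i ,\qquad M(\theta,\nu)=\E\big[\,|\langle x,\theta^\ast\rangle|\,\sgn(\langle x,\theta\rangle)\,x\,\big],
\]
with $M(\theta,\nu)=\E[M_n^{\mathrm{easy}}(\theta,\nu)]$ and both independent of $\nu$.

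Next I would introduce the orthonormal frame $\vec e_1=\theta/\|\theta\|$ and $\vec e_2$ from the proof of Theorem~\ref{theorem:explicit_em_update}, writing $x_i=Z_i\vec e_1+Z_{\ast,i}\vec e_2+x_i^{\perp}$ with $(Z_i,Z_{\ast,i})\sim\mathcal N(0,I_2)$ and $x_i^{\perp}$ independent of $(Z_i,Z_{\ast,i})$. Since $\theta^\ast\in\mathrm{span}\{\vec e_1,\vec e_2\}$, the quantities $\langle x_i,\theta^\ast\rangle$, $\langle x_i,\theta\rangle$ and $P_{\theta,\theta^\ast}x_i=Z_i\vec e_1+Z_{\ast,i}\vec e_2$ all depend only on $v_i:=(Z_i,Z_{\ast,i})$; consequently the projected summand $W_i:=P_{\theta,\theta^\ast}\big[|\langle x_i,\theta^\ast\rangle|\sgn(\langle x_i,\theta\rangle)x_i\big]$ is a fixed measurable function of the two-dimensional standard Gaussian $v_i$ alone, and
\[
 P_{\theta,\theta^\ast}\big[M_n^{\mathrm{easy}}(\theta,\nu)-M(\theta,\nu)\big]=\frac1n\sum_{i=1}^n (W_i-\E W_i).
\]
(When $\theta\parallel\theta^\ast$ the plane collapses to a line and the same argument runs verbatim with $Z_{\ast,i}$ omitted.) The key structural point is that in the basis $\{\vec e_1,\vec e_2\}$ each coordinate of $W_i/\|\theta^\ast\|$ is a $\pm1$-valued factor times a product of two $1$-sub-Gaussian linear forms of $v_i$; since a product of two sub-Gaussians is sub-exponential, each coordinate is sub-exponential with $\psi_1$-norm bounded by a universal constant, uniformly in $\theta$, $\theta^\ast$, $\nu$, and, crucially, in the ambient dimension $d$.

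It then remains to apply Bernstein's inequality for sub-exponential random variables (Corollary~2.8.3 on page~38 of~\cite{vershynin2018prob}) to each of the two coordinates of $\tfrac1n\sum_i(W_i-\E W_i)/\|\theta^\ast\|$ and to take a union bound over the fixed pair of coordinates, which yields $\big\|P_{\theta,\theta^\ast}[M_n^{\mathrm{easy}}(\theta,\nu)-M(\theta,\nu)]\big\|/\|\theta^\ast\|=\mathcal O\big(\sqrt{\log(1/\delta)/n}\vee\log(1/\delta)/n\big)$ with probability at least $1-\delta$. The hard part is not the concentration step, which is routine, but the bookkeeping of the planar reduction: one must verify that the projected summand truly depends only on the two Gaussian coordinates (and handle the degenerate aligned case $\theta\parallel\theta^\ast$), so that the $\psi_1$-norm bound — and hence the convergence rate — is genuinely dimension-free rather than carrying a hidden factor of $d$.
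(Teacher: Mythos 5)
Your proposal matches the paper's own argument essentially step for step: the same noiseless hard-label reduction of $M_n^{\tmop{easy}}$, the same restriction to the two coordinates of the plane $\text{span}\{\theta,\theta^\ast\}$ via an orthonormal frame (yielding coordinates of the form $|g_i g_i'|$ and $\sgn(g_i g_i')g_i h_i'$), the same observation that these are i.i.d.\ sub-exponential (the paper checks this via explicit moment bounds, you via the product-of-sub-Gaussians fact, an immaterial difference), and the same application of Bernstein's inequality (Corollary~2.8.3 of~\cite{vershynin2018prob}) to the two coordinates. The argument is correct, including the dimension-free nature of the bound that you rightly flag as the point needing care.
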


\begin{proof}
    We provide a new straightfowrd proof here.
    In the noiseless setting, as SNR \(\eta = \frac{\| \theta^\ast\|}{\sigma} \to \infty\), we have \(y_i \to (-1)^{z_i+1} \langle x_i, \theta^\ast \rangle, \tanh(y_i\langle x_i, \theta\rangle/\sigma^2+\nu)\to(-1)^{z_i+1}\sgn(\langle x_i, \theta^\ast \rangle \langle x_i, \theta \rangle)\)
    for the \(i\)-th data point \(z_i, s_i=(x_i, y_i)\).
    \[
    M^{\tmop{easy}}_n(\theta, \nu) = \frac{1}{n} \sum_{i=1}^n 
    \sgn\left(\langle x_i, \theta^\ast \rangle \langle x_i, \theta \rangle \right) \langle x_i, \theta^\ast \rangle x_i
    \]
    Without loss of generality, we assume \(\dim \text{span}\{\theta, \theta^\ast\} = 2\) and therefore \(\dim \text{span}\{\theta, \theta^\ast\}^\perp = d-2\).
    
    We introduce \(\rho := \frac{\langle \theta, \theta^{\ast} \rangle}{\| \theta \| \cdot \| \theta^{\ast} \|}\), 
    and define \(\hat{e}_1 := \frac{\theta^{\ast}}{\| \theta^{\ast} \|}\), \(\vec{e}_1 := \frac{\theta}{\| \theta \|}\),
    and \(\hat{e}_2 := \frac{\theta-\langle \theta, \hat{e}_1 \rangle \hat{e}_1}{\| \theta-\langle \theta, \hat{e}_1 \rangle \hat{e}_1 \|}\),
    \(\vec{e}_2 := \frac{\theta^\ast-\langle \theta^\ast, \vec{e}_1 \rangle \vec{e}_1}{\| \theta^\ast-\langle \theta^\ast, \vec{e}_1 \rangle \vec{e}_1 \|}\).
    Also, let \(P_{\theta,\theta^\ast} = \vec{e}_1 \vec{e}_1^\top + \vec{e}_2 \vec{e}_2^\top\), 
    which can be expressed with orthonormal basis matrices \(U_{\theta,\theta^\ast} = (\sgn(\rho)\vec{e}_1, \vec{e}_2)\) 
    such that \(P_{\theta,\theta^\ast} = U_{\theta,\theta^\ast} U_{\theta,\theta^\ast}^\top\).
    Also, we have \(\varphi = \frac{\pi}{2} - \arccos |\rho|\), therefore \(\rho = \sgn(\rho) \sin \varphi\).
    By defining \(g_i := \langle x_i, \hat{e}_1\rangle, g'_i := \sgn(\rho) \langle x_i, \vec{e}_1\rangle\), 
    then \(g_i, g'_i \sim \mathcal{N}(0, 1)\) with \(\mathbb{E}[g_i g'_i] = \sin \varphi\), and \(h'_i=\langle x_i, \vec{e}_2 \rangle \sim \mathcal{N}(0, 1)\) satisfies \(g_i = \sin \varphi\cdot g'_i + \cos \varphi \cdot h'_i, h'_i\ind g'_i\).
    \begin{eqnarray*}
        U_{\theta,\theta^\ast}^\top \frac{M_n^{\tmop{easy}}(\theta, \nu)}{\|\theta^\ast\|} 
        & = & \left(\frac{1}{n} \sum_{i=1}^n |g_i g'_i|,\space \sgn(\rho)  \frac{1}{n} \sum_{i=1}^n \sgn(g_i g'_i)g_i h'_i \right)^\top
    \end{eqnarray*}
    \begin{eqnarray*}
        & & \frac{\|P_{\theta,\theta^\ast}[M_n^{\tmop{easy}}(\theta, \nu) - M (\theta, \nu)]\|}{\|\theta^\ast\|}
        = \left\| U_{\theta,\theta^\ast}^\top \frac{[M_n^{\tmop{easy}}(\theta, \nu) - M (\theta, \nu)]}{\|\theta^\ast\|}\right\|
        = \left\| U_{\theta,\theta^\ast}^\top \frac{M_n^{\tmop{easy}}(\theta, \nu)}{\|\theta^\ast\|}- \E\left[U_{\theta,\theta^\ast}^\top \frac{M^{\tmop{easy}}(\theta, \nu)}{\|\theta^\ast\|}\right]\right\|\\
        & = & \sqrt{\left[\left(\frac{1}{n}\sum_{i=1}^n -\E\right)|g_i g'_i|\right]^2 + \left[\left(\frac{1}{n}\sum_{i=1}^n -\E\right)\sgn(g_i g'_i)g_i h'_i\right]^2}
    \end{eqnarray*}
    where \(|g_i g'_i|, \sgn(g_i g'_i)g_i h'_i\) are i.i.d. Sub-Exponential random variables by Proposition 2.7.1 part (b) on page 33 of~\cite{vershynin2018prob} and noting that for \(\forall q \in \mathbb{Z}_+\), 
    \[
    \E\left[(|g_i g'_i| - \E[|g_i g'_i|])^q\right]^{1/q} \leq \E\left[|g_i g'_i|^q\right]^{1/q} + \E[|g_i g'_i|]
    \leq \E\left[\frac{g_i^{2q}+(g'_i)^{2q}}{2}\right]^{1/q} + \E\left[\frac{g_i^2+(g'_i)^2}{2}\right]
    \leq 2\mathe q + 1\leq 3 \mathe q,
    \]
    \[
    \E\left[(\sgn(g_i g'_i)g_i h'_i - \E[\sgn(g_i g'_i)g_i h'_i])^q\right]^{1/q} 
    \leq \E\left[|g_i h'_i|^q\right]^{1/q} + |\E[\sgn(g_i g'_i)g_i h'_i]|
    \leq \E\left[|g_i h'_i|^q\right]^{1/q} + \E[|g_i h'_i|]
    \leq 2\mathe q + 1\leq 3 \mathe q,
    \]
    where the first inequality follows from Minkowski's inequality, and the second last inequality is due to the facts for moment of Gaussian random variables \(g_i, g'_i, h'_i \sim \mathcal{N}(0, 1)\).
    Consequently, by Bernstein's inequality (see Corollary 2.8.3 on page 38 of~\cite{vershynin2018prob}) for Sub-Exponential random variables \(|g_i g'_i|, \sgn(g_i g'_i)g_i h'_i\), 
    with probability at least \(1-\delta\), we have
    \[
    \frac{\left\|P_{\theta,\theta^\ast}[M_n^\text{easy}(\theta, \nu) - M (\theta, \nu)] \right\|}{\|\theta^\ast\|} 
    \leq \left|\left(\frac{1}{n}\sum_{i=1}^n -\E\right)|g_i g'_i| \right|
    + \left|\left(\frac{1}{n}\sum_{i=1}^n -\E\right)\sgn(g_i g'_i)g_i h'_i \right|
    \lesssim \sqrt{\frac{\log \frac{1}{\delta}}{n}} \vee \frac{\log \frac{1}{\delta}}{n}
    \]
\end{proof}

\begin{theorem}[Proposition~\ref{prop:statistical_error_regression}: Statistical Error of EM Update for Regression Parameters, Proposition 5.3 in~\cite{luo24cycloid}]
    In the noiseless setting, the statistical error of $\theta$ for finite-sample EM
    updates with \(n\gtrsim d\vee \log \frac{1}{\delta}\) samples satisfies
    \small
    \begin{equation}\nonumber
      \frac{\| M_n^{\tmop{easy}} (\theta, \nu) - M (\theta, \nu) \|}{\| \theta^{\ast} \|}
      = \mathcal{O} \left( \sqrt{\frac{d\vee \log \frac{1}{\delta}}{n}}\right)
      ,\quad
      \frac{\| M_n (\theta, \nu) - M (\theta, \nu) \|}{\| \theta^{\ast} \|}
      = \mathcal{O} \left( \sqrt{\frac{d\vee \log \frac{1}{\delta}}{n}} 
      \right),
    \end{equation}
    with probability at least $1 - \delta$, $M_n (\theta, \nu), M (\theta,
    \nu)$ denote the EM update rules for $\theta$ at the Finite-sample level and the
    Population level.
\end{theorem}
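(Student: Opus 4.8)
The plan is to decompose the statistical error into its component inside the plane $\text{span}\{\theta,\theta^\ast\}$ and the component orthogonal to it, bound each piece separately, and recombine. Since the population update $M(\theta,\nu)$ lies in $\text{span}\{\theta,\theta^\ast\}$ by Theorem~\ref{theorem:explicit_em_update}, the orthogonal component of the error is exactly $P^\perp M_n^{\tmop{easy}}(\theta,\nu)$ with $P^\perp := I_d - P_{\theta,\theta^\ast}$. The in-plane component is already controlled by Proposition~\ref{prop:projected_error_regression}: $\|P_{\theta,\theta^\ast}[M_n^{\tmop{easy}}(\theta,\nu) - M(\theta,\nu)]\|/\|\theta^\ast\| = \mathcal{O}(\sqrt{\log\frac1\delta/n}\vee\log\frac1\delta/n)$, which is $\mathcal{O}(\sqrt{\log\frac1\delta/n})$ once $n\gtrsim\log\frac1\delta$.

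For the orthogonal component I would start from the noiseless form $M_n^{\tmop{easy}}(\theta,\nu) = \frac1n\sum_{i=1}^n \sgn(\langle x_i,\theta^\ast\rangle\langle x_i,\theta\rangle)\langle x_i,\theta^\ast\rangle\, x_i$ and set $w_i := \sgn(\langle x_i,\theta^\ast\rangle\langle x_i,\theta\rangle)\langle x_i,\theta^\ast\rangle$, so that $P^\perp M_n^{\tmop{easy}}(\theta,\nu) = \frac1n\sum_{i=1}^n w_i\, P^\perp x_i$. The key point is that for $x_i\sim\mathcal{N}(0,I_d)$ the projections of $x_i$ onto $\text{span}\{\theta,\theta^\ast\}$ and onto its orthogonal complement are independent; since $w_i$ is a function only of the in-plane projection (indeed $|w_i| = |\langle x_i,\hat e_1\rangle|\,\|\theta^\ast\|$), the families $\{w_i\}_i$ and $\{P^\perp x_i\}_i$ are independent. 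Conditioning on $\{w_i\}_i$, the vector $\frac1n\sum_i w_i P^\perp x_i$ is centered Gaussian in the $(d-2)$-dimensional subspace $\text{span}\{\theta,\theta^\ast\}^\perp$ with covariance $\frac1{n^2}\big(\sum_i w_i^2\big)I_{d-2}$, so
\[
\|P^\perp M_n^{\tmop{easy}}(\theta,\nu)\|^2 = \frac{\|\theta^\ast\|^2}{n^2}\Big(\sum_{i=1}^n g_i^2\Big)\cdot\chi^2_{d-2},
\]
where $g_i := \langle x_i,\hat e_1\rangle$ are i.i.d.\ $\mathcal{N}(0,1)$ (so $\sum_i g_i^2\sim\chi^2_n$) and the $\chi^2_{d-2}$ factor is independent of $\{w_i\}_i$. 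Thus $\|P^\perp M_n^{\tmop{easy}}(\theta,\nu)\|$ equals $\|\theta^\ast\|/n$ times the geometric mean of two independent chi-square variables with means $n$ and $d-2$; applying the Laurent--Massart deviation bound (Lemma~1 of~\cite{laurent2000adaptive}) to each factor gives, for $n\gtrsim\log\frac1\delta$ and with probability at least $1-2\delta$, $\sum_i g_i^2\lesssim n$ and $\chi^2_{d-2}\lesssim d\vee\log\frac1\delta$, hence $\|P^\perp M_n^{\tmop{easy}}(\theta,\nu)\|/\|\theta^\ast\| = \mathcal{O}(\sqrt{(d\vee\log\frac1\delta)/n})$. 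Combining with the in-plane bound through $\|M_n^{\tmop{easy}}-M\|^2 = \|P_{\theta,\theta^\ast}[M_n^{\tmop{easy}}-M]\|^2 + \|P^\perp M_n^{\tmop{easy}}\|^2$ yields the first assertion (when $\dim\text{span}\{\theta,\theta^\ast\}=1$, replace $d-2$ by $d-1$).

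For the standard EM update, I would use $M_n(\theta,\nu) = \hat\Sigma^{-1} M_n^{\tmop{easy}}(\theta,\nu)$ with $\hat\Sigma := \frac1n\sum_i x_ix_i^\top$, while $M(\theta,\nu)$ is the common population limit since $\E[xx^\top]=I_d$, and decompose
\[
M_n(\theta,\nu) - M(\theta,\nu) = \hat\Sigma^{-1}\big(M_n^{\tmop{easy}}(\theta,\nu) - M(\theta,\nu)\big) + \big(\hat\Sigma^{-1}-I_d\big)M(\theta,\nu).
\]
A standard sample-covariance concentration bound for Gaussian vectors (e.g.~\cite{vershynin2018prob}) gives $\|\hat\Sigma - I_d\|_{\mathrm{op}}\lesssim\sqrt{(d\vee\log\frac1\delta)/n}$ with probability at least $1-\delta$, which under $n\gtrsim d\vee\log\frac1\delta$ is at most $\tfrac12$, so $\|\hat\Sigma^{-1}\|_{\mathrm{op}}\leq 2$ and $\|\hat\Sigma^{-1}-I_d\|_{\mathrm{op}}\lesssim\sqrt{(d\vee\log\frac1\delta)/n}$. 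Using the first assertion for the Easy-EM error and $\|M(\theta,\nu)\|\leq\|\theta^\ast\|$ (Proposition~\ref{prop:boundedness} with $\sigma\to 0$), both terms are $\mathcal{O}(\|\theta^\ast\|\sqrt{(d\vee\log\frac1\delta)/n})$, giving the second assertion after a union bound and rescaling $\delta$.

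The main obstacle I anticipate is the orthogonal-error step: correctly justifying the conditioning so that the norm becomes \emph{exactly} the geometric mean of two independent chi-squares, and then extracting a genuinely $\mathcal{O}(\sqrt{(d\vee\log\frac1\delta)/n})$ bound with no spurious $\log\frac n\delta$ factor. This is precisely where the improvement over the symmetrization and Ledoux--Talagrand contraction argument of~\cite{kwon2021minimax,balakrishnan2017statistical} originates, and it relies on exploiting the exact Gaussian structure and independence rather than a generic uniform-concentration bound.
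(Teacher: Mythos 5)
Your proposal is correct and follows essentially the same route as the paper's proof: decompose the Easy-EM error into the in-plane part (handled by Proposition~\ref{prop:projected_error_regression}) and the orthogonal part, realize the orthogonal part as the geometric mean of two independent chi-square variables ($\chi^2_n$ from the in-plane weights and $\chi^2_{d-2}$ from the orthogonal Gaussian directions, exploiting the independence of the two projections of $x_i$), bound both via Laurent--Massart, and then pass to the standard EM update through the algebraically identical decomposition $M_n-M=\hat\Sigma^{-1}(M_n^{\tmop{easy}}-M)-\hat\Sigma^{-1}(\hat\Sigma-I_d)M$ together with operator-norm concentration of the sample covariance and $\|M(\theta,\nu)\|\leq\|\theta^\ast\|$. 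The only cosmetic difference is that you condition on the weights $\{w_i\}$ whereas the paper absorbs the sign into the orthogonal Gaussian vectors and invokes rotational invariance of the weighted sum; these are the same exact-Gaussian-structure argument.
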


\begin{proof}
    Let the ensemble matrix of data samples be \(\mathsf{\Sigma} = \frac{1}{n} \sum_{i=1}^n x_i x_i^\top\), and
    note that \(M(\theta, \nu) = \E \left[ M_n^{\tmop{easy}}(\theta, \nu) \right]\), 
    by letting \(\vec{v} = \left(M_n^{\tmop{easy}}(\theta, \nu) - M(\theta, \nu)\right)/ \|\theta^\ast\|
    = \left( M_n^{\tmop{easy}}(\theta, \nu) - \E[M^{\tmop{easy}}(\theta, \nu)] \right) / \|\theta^\ast\|\), we have
    \[
     \frac{M_n(\theta, \nu) - M(\theta, \nu)}{\|\theta^\ast\|} = \mathsf{\Sigma}^{-1} \vec{v} - \mathsf{\Sigma}^{-1}(\mathsf{\Sigma} - I_d) \frac{M(\theta, \nu)}{\|\theta^\ast\|}
    \]
    Without loss of generality, we assume \(\dim \text{span}\{\theta, \theta^\ast\} = 2\) and therefore \(\dim \text{span}\{\theta, \theta^\ast\}^\perp = d-2\).
    Let \(P_{\theta,\theta^\ast}^\perp  = \sum_{j=1}^{d-2} \vec{e}_j^\perp (\vec{e}_j^\perp)^\top\)
    and \(U_{\theta,\theta^\ast}^\perp = \left(\vec{e}_1^\perp, \ldots, \vec{e}_{d-2}^\perp\right)\) with \(\text{span}(U_{\theta,\theta^\ast}^\perp) = \text{span}\{\theta, \theta^\ast\}^\perp\).
    For the length of vector \(\vec{v}\), noting that orthogonal projection matrice \(P_{\theta,\theta^\ast}^\perp\) 
    can be expressed with orthonormal basis matrix \(U_{\theta,\theta^\ast}^\perp\)
    such that \(P_{\theta,\theta^\ast}^\perp = U_{\theta,\theta^\ast}^\perp (U_{\theta,\theta^\ast}^\perp)^\top\), we have
    \[
    \|\vec{v}\|^2  
    = \langle (P_{\theta,\theta^\ast}+P_{\theta,\theta^\ast}^\perp) \vec{v}, (P_{\theta,\theta^\ast}+P_{\theta,\theta^\ast}^\perp) \vec{v} \rangle 
    = \langle \vec{v}, (P_{\theta,\theta^\ast}+P_{\theta,\theta^\ast}^\perp) \vec{v} \rangle 
    = \langle \vec{v}, P_{\theta,\theta^\ast} \vec{v} \rangle + \langle \vec{v}, P_{\theta,\theta^\ast}^\perp \vec{v} \rangle 
    = \|P_{\theta,\theta^\ast} \vec{v}\|^2 + \|(U_{\theta,\theta^\ast}^\perp)^\top \vec{v}\|^2
    \]
    where the first term is bounded by \(\|P_{\theta,\theta^\ast} \vec{v}\| \lesssim \sqrt{\frac{\log \frac{1}{\delta}}{n}}\vee \frac{\log \frac{1}{\delta}}{n}\), 
    and we rewrite the second term by introducing independent Gaussian variables
    \(g_i := \langle x_i, \theta^\ast/\|\theta^\ast\|\rangle \sim \mathcal{N}(0, 1), \vec{g}_i := \sgn(\langle x_i, \theta^\ast\rangle \langle x_i, \theta\rangle) (U_{\theta,\theta^\ast}^\perp)^\top x_i \sim \mathcal{N}(0, I_{d-2})\) with \(g_i \ind \vec{g}_i\).
    \[
        \|(U_{\theta,\theta^\ast}^\perp)^\top \vec{v}\|^2
    = \frac{1}{n^2}\left\|\sum_{i=1}^n g_i \vec{g}_i\right\|^2
    = \frac{1}{n^2} \left(\sum_{i=1}^n g_i^2\right) \cdot \left\| \sum_{i=1}^n \frac{g_i}{\sqrt{\sum_{i'=1}^n (g_{i'})^2}} \vec{g}_i\right\|^2 
    = \frac{1}{n^2} Z_1 \cdot Z_2
    \]
    where \(Z_1 := \sum_{i=1}^n g_i^2\sim\chi^2(n)\) and 
    the weighted sum \(\sum_{i=1}^n (g_i/\sqrt{\sum_{i'=1}^n (g_{i'})^2}) \vec{g}_i\sim\mathcal{N}(0, I_{d-2})\) by the rotational invariance of Gaussian distribution,
    therefore \(Z_2 = \| \sum_{i=1}^n (g_i/\sqrt{\sum_{i'=1}^n (g_{i'})^2}) \vec{g}_i\|^2 \sim \chi^2(d-2)\) with \(Z_1 \ind Z_2\).
    By the concentration inequality for Chi-square distribution (see Lemma 1, page 1325 in~\cite{laurent2000adaptive}), given \(n\geq d\), with probability at least \(1-\delta/2\), 
    \[
    \|\vec{v}\| \leq \|P_{\theta,\theta^\ast} \vec{v}\| + \frac{1}{n} \sqrt{Z_1 \cdot Z_2}
    \lesssim \sqrt{\frac{\log \frac{1}{\delta}}{n}}\vee \frac{\log \frac{1}{\delta}}{n}
    + \frac{1}{n}\sqrt{\left(n\vee \log \frac{1}{\delta}\right)\left(d \vee \log \frac{1}{\delta}\right)}
    \asymp \sqrt{\frac{d\vee \log \frac{1}{\delta}}{n}} \vee \frac{\log \frac{1}{\delta}}{n}
    \]
    From the assumption \(n\gtrsim d\vee \log \frac{1}{\delta}\), by Example 6.2 (Operator norm bounds for the standard Gaussian ensemble) and Example 6.3 (Gaussian covariance estimation) on page 162 of~\cite{wainwright2019high}, with probability at least \(1-\delta/2\), we have
    \[
    \gamma_{\min} (\mathsf{\Sigma}) \asymp 1, \quad \left\| \mathsf{\Sigma} - I_d \right\|_2 \lesssim \sqrt{\frac{d\vee \log \frac{1}{\delta}}{n}}
    \]
    where \(\gamma_{\min}\) represents the minimum eigenvalue, and \(\left\| \cdot\right\|_2\) is the \(\ell_2\)-operator norm for matrix.
    Noting that \(\sqrt{\frac{\log \frac{1}{\delta}}{n}} \gtrsim \frac{\log \frac{1}{\delta}}{n}\) when \(n\gtrsim d\vee \log \frac{1}{\delta}\) and \(\|M(\theta, \nu)\|/\|\theta^\ast\|\leq 1\) in the noiseless setting, 
    and putting the above results together, we have
    \[
    \frac{\| M_n(\theta, \nu) - M(\theta, \nu) \|}{\|\theta^\ast\|} 
    \leq \frac{\|\vec{v}\|}{\gamma_{\min}(\mathsf{\Sigma})} 
    + \frac{\left\| \mathsf{\Sigma}-I_d\right\|_2}{\gamma_{\min}(\mathsf{\Sigma})} \frac{\|M(\theta, \nu)\|}{\|\theta^\ast\|}
    \lesssim \sqrt{\frac{d\vee \log \frac{1}{\delta}}{n}}
    \]
\end{proof}

\begin{theorem}[Proposition~\ref{prop:statistical_error_mixing_weights}: Statistical Error of EM Update for Mixing Weights]
    In the noiseless setting, the statistical error of mixing weights for finite-sample EM updates
    satisfies
    \[
    \left| N_n(\theta, \nu) - N(\theta, \nu) \right| 
    = \mathcal{O}\left(\frac{\log \frac{1}{\delta}/n}{\log\left(1+ \frac{\log\frac{1}{\delta}/n}{p}\right)}\wedge \sqrt{\frac{\log\frac{1}{\delta}}{n}} \right) 
    \]
    with probability at least \(1-\delta\), where \(N_n(\theta, \nu), N(\theta, \nu)\) denote the EM update rules for imbalance \(\tanh \nu\) of the mixing weights at the Finite-sample level and the Population level, 
    and \(p :=\| \pi^\ast - \frac{\mathds{1}}{2} \|_1 \frac{\phi}{2\pi} + \min(\pi^\ast(1), \pi^\ast(2))\), \(\phi = 2 \arccos |\rho|, \rho = \frac{\langle \theta, \theta^{\ast} \rangle}{\| \theta \| \cdot \| \theta^{\ast} \|}\).
\end{theorem}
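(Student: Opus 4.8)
The plan is to show that in the noiseless limit the finite-sample update $N_n(\theta,\nu)$ is an empirical average of i.i.d.\ Bernoulli-type random variables whose success probability is exactly the claimed $p$, and then apply a sharp concentration inequality for bounded random variables to obtain the stated two-regime bound. First I would take the noiseless limit $\eta\to\infty$ in the finite-sample update rule~\eqref{eq:finite}: as in the proof of Proposition~\ref{prop:projected_error_regression}, $y_i\to(-1)^{z_i+1}\langle x_i,\theta^\ast\rangle$ and $\tanh(y_i\langle x_i,\theta\rangle/\sigma^2+\nu)\to(-1)^{z_i+1}\sgn(\langle x_i,\theta^\ast\rangle\langle x_i,\theta\rangle)$, so that
\[
N_n(\theta,\nu)=\frac{1}{n}\sum_{i=1}^n (-1)^{z_i+1}\,\sgn\!\big(\langle x_i,\theta^\ast\rangle\langle x_i,\theta\rangle\big),
\]
and by Corollary~\ref{cor:em_updates_noiseless} the population counterpart is $N(\theta,\nu)=\sgn(\rho)\tfrac{2}{\pi}\varphi\cdot\tanh\nu^\ast$. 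Writing $\xi_i:=(-1)^{z_i+1}\sgn(\langle x_i,\theta^\ast\rangle\langle x_i,\theta\rangle)\in\{-1,+1\}$, the summands are i.i.d.\ and bounded, with $\E[\xi_i]=N(\theta,\nu)$.

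Next I would reparametrize each $\xi_i$ in terms of a Bernoulli indicator. Define $W_i:=\mathbb{1}[\xi_i=\sgn(\rho)\,\sgn(\tanh\nu^\ast)]$ (the "majority-sign" event), so that $\xi_i=\sgn(\rho)\sgn(\tanh\nu^\ast)(2W_i-1)$ and hence $|N_n(\theta,\nu)-N(\theta,\nu)| = 2\,\big|\tfrac1n\sum_i W_i-\E[W_1]\big|$. The success probability $\E[W_1]=\P[\xi_1=\sgn(\rho)\sgn(\tanh\nu^\ast)]$ factors over the independent randomness in $z_1$ (determining $(-1)^{z_1+1}$, with $\P[z_1=1]=\tfrac{1+\tanh\nu^\ast}{2}$, $\P[z_1=2]=\tfrac{1-\tanh\nu^\ast}{2}$) and in $x_1$ (determining $\sgn(\langle x_1,\theta^\ast\rangle\langle x_1,\theta\rangle)$, which by rotational invariance of the Gaussian equals $+1$ with probability $1-\tfrac{\phi}{2\pi}$ and $-1$ with probability $\tfrac{\phi}{2\pi}$, since $\arccos|\rho|=\tfrac{\phi}{2}$). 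A short computation combining these two independent coin flips yields $\E[W_1]=\big(1-\tfrac{\phi}{2\pi}\big)\max(\pi^\ast(1),\pi^\ast(2))+\tfrac{\phi}{2\pi}\min(\pi^\ast(1),\pi^\ast(2))$, which, using $\max+\min=1$, simplifies exactly to $\min(\pi^\ast(1),\pi^\ast(2))+\tfrac{\phi}{2\pi}\|\pi^\ast-\tfrac{\mathds 1}{2}\|_1 = p$ (the same algebra appears in the remark following the statement).

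Finally I would invoke the concentration inequality for Bernoulli random variables referenced in the statement (Appendix~\ref{sup:lemma}), which gives the interpolating bound: for the average of $n$ i.i.d.\ Bernoulli$(p)$ variables, the deviation is $\mathcal{O}\!\big(\sqrt{p\log\frac1\delta/n}\wedge(\log\frac1\delta)/[n\log(1+\tfrac{\log\frac1\delta/n}{p})]\big)$ with probability $1-\delta$; absorbing the $\sqrt p\le 1$ factor yields the cleaner form $\sqrt{\log\frac1\delta/n}$ in the first branch, and the factor $2$ from the $W_i\leftrightarrow\xi_i$ change of variables is absorbed into the $\mathcal{O}(\cdot)$. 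The main obstacle is the concentration step: a plain Hoeffding/Bernstein bound gives only $\sqrt{\log\frac1\delta/n}$ and does not capture the sharper $(\log\frac1\delta/n)/\log(1+\cdot)$ behavior that is essential when $p\to 0$ (i.e.\ $\pi^\ast$ near $(1,0)$ and $\phi\to0$); obtaining that refined tail requires a relative-entropy / Chernoff argument for the Binomial tightly tuned to small $p$, which is exactly what the auxiliary lemma in Appendix~\ref{sup:lemma} is set up to provide, so the bulk of the work is reducing to that lemma cleanly and checking the constants line up.
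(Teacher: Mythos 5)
Your proposal follows essentially the same route as the paper's proof: take the noiseless limit of $N_n$, rewrite $|N_n-N|$ as twice the deviation of an average of i.i.d.\ Bernoulli indicators built from the independent coin flips in $z_i$ and in $\sgn(\langle x_i,\theta^\ast\rangle\langle x_i,\theta\rangle)$ (the paper gets the latter probability from $\E[\sgn(g_ig'_i)]=\tfrac{2}{\pi}\varphi$, equivalent to your orthant-probability statement), and then invoke the Bernoulli concentration lemma of Appendix~\ref{sup:lemma}, exactly as the paper does with its variables $V_i$. One minor slip: your majority-sign probability is $\E[W_1]=\max(\pi^\ast(1),\pi^\ast(2))-\tfrac{\phi}{2\pi}\|\pi^\ast-\tfrac{\mathds{1}}{2}\|_1=1-p$, not $p$, but since the lemma's bound depends only on $\min(\E[W_1],1-\E[W_1])=p$, the stated conclusion is unaffected.
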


\begin{proof}
    In the noiseless setting, as SNR \(\eta \rightarrow \infty\), we have \(y_i \rightarrow (-1)^{z_i+1}\langle x_i, \theta^\ast\rangle\) and therefore \(\tanh(y_i\langle x_i, \theta\rangle/\sigma^2 + \nu)\to (-1)^{z_i+1}\sgn(\langle x_i, \theta^\ast\rangle \langle x_i, \theta\rangle)\)
    for the \(i\)-th data point \(z_i, s_i =(x_i, y_i)\).
    \[
    N_n(\theta, \nu) = \frac{1}{n} \sum_{i=1}^n (-1)^{z_i+1}\sgn(\langle x_i, \theta^\ast\rangle \langle x_i, \theta\rangle)
    \]
    Let \(\rho = \frac{\langle \theta, \theta^{\ast} \rangle}{\| \theta \| \cdot \| \theta^{\ast} \|}\), 
    and \(\varphi = \frac{\pi}{2} - \arccos |\rho|, \phi = 2 \arccos |\rho|\), therefore \(\rho = \sgn(\rho) \sin \varphi\).
    By defining \(g_i = \langle x_i, \theta^\ast/\|\theta^\ast\|\rangle, g'_i = \langle x_i, \sgn(\rho) \theta/\|\theta\| \rangle\),
    and introducing \(V_i = (1+\sgn(\nu^\ast) (-1)^{z_i+1} \sgn(g_i g'_i))/2\), noting \(N(\theta, \nu) = \E[N_n(\theta, \nu)]\), we have
    \[
    \left| N_n(\theta, \nu) - N(\theta, \nu) \right| = 2 \left| \frac{1}{n} \sum_{i=1}^n (V_i-\E[V_i]) \right|
    \]
    with the i.i.d. random variables \(\{V_i\}_{i=1}^n \stackrel{\text{i.i.d.}}{\sim} \text{Bern}(q)\) following Bernoulli distribution with parameter \(q\geq \frac{1}{2}\) such that
    \[
    q = \E[V_i] = \frac{1}{2} \left(1 + \sgn(\nu^\ast) \E[(-1)^{z_i+1}] \E[\sgn(g_i g'_i)] \right)
     = \frac{1}{2} + \tanh |\nu^\ast| \cdot \frac{\varphi}{\pi}
    \]
    by using the facts that \(\E[(-1)^{z_i+1}] = \tanh \nu^\ast\) and \(\E[\sgn(g_i g'_i)] = \frac{2}{\pi}\varphi\) (see Lemma~\ref{suplem:expectation}).
    Then, we can express the \(1-q\) with \(\phi = \pi - 2 \varphi\) and \(\|\pi - \frac{\mathds{1}}{2} \|_1 = \tanh|\nu^\ast|, \min(\pi^\ast(1), \pi^\ast(2)) = \frac{1 - \tanh|\nu^\ast|}{2}\)
    \[
    1-q = \frac{\pi - \tanh|\nu^\ast| \cdot 2\varphi}{2 \pi} = \left\|\pi - \frac{\mathds{1}}{2} \right\|_1\cdot \frac{\phi}{2\pi} + \min(\pi^\ast(1), \pi^\ast(2)) 
    \]
    By Lemma~\ref{suplem:prob_inequality} of concentration inequality for Bernoulli distribution, with probability at least \(1-\delta\),
    \[
    \left| N_n(\theta, \nu) - N(\theta, \nu) \right| = 
    2\left| \frac{1}{n} \sum_{i=1}^n (V_i-\E[V_i]) \right| 
    \lesssim \min\left(\frac{\log \frac{1}{\delta}/n}{\log\left(1+ \frac{\log\frac{1}{\delta}/n}{1-q}\right)}, \sqrt{\frac{\log\frac{1}{\delta}}{n}} \right) 
    \]
    We complete the proof by introducing \(p :=1-q= \| \pi^\ast - \frac{\mathds{1}}{2} \|_1 \frac{\phi}{2\pi} + \min(\pi^\ast(1), \pi^\ast(2)) \asymp \tanh|\nu^\ast| \phi + (1-\tanh|\nu^\ast|)\). 
\end{proof}

\begin{lemma}[Projected Statistical Accuracy of Easy EM Update for Regression Parameters]\label{lem:projected_statistical_accuracy_regression}
  In the noiseless setting, the projection on \(\text{span}\{\theta, \theta^\ast\}\) for the statistical accuracy of regression parameters \(\theta\) satisfies
  \[
  \frac{\left\| P_{\theta,\theta^\ast}\left[\sgn(\rho)M^{\tmop{easy}}_n(\theta, \nu) - M^{\tmop{easy}}_n(\theta^\ast, \nu^\ast)\right] \right\|}{\|\theta^\ast\|} = \mathcal{O}\left(\phi^2 \vee \phi \frac{\log \frac{1}{\delta}}{n}\right)
  \]
  with probability at least \(1-\delta\), where \(M^{\tmop{easy}}_n(\theta, \nu)\) denotes the Easy EM update rule for regression parameters at the Finite-sample level, 
  orthogonal projection matrix \(P_{\theta,\theta^\ast}\) satisfies \(\text{span}(P_{\theta,\theta^\ast}) = \text{span}\{\theta, \theta^\ast\}\), 
  and \(\phi = 2 \arccos |\rho|, \rho = \frac{\langle \theta, \theta^{\ast} \rangle}{\| \theta \| \cdot \| \theta^{\ast} \|}\).
\end{lemma}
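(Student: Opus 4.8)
The plan is to reduce the statement, in the noiseless limit, to the concentration of a sum of i.i.d.\ two-dimensional random vectors that are supported on a low-probability ``misclassification'' event, and to conclude with a \emph{variance-aware} Bernstein inequality fed by a sharp polar-coordinate variance estimate. First I would unfold the Easy EM updates exactly as in the proof of Proposition~\ref{prop:projected_error_regression}: as $\eta\to\infty$, $M^{\tmop{easy}}_n(\theta,\nu)=\frac1n\sum_{i=1}^n\sgn(\langle x_i,\theta^\ast\rangle\langle x_i,\theta\rangle)\langle x_i,\theta^\ast\rangle x_i$ and $M^{\tmop{easy}}_n(\theta^\ast,\nu^\ast)=\frac1n\sum_{i=1}^n\langle x_i,\theta^\ast\rangle x_i$. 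Working in the orthonormal frame $\{\sgn(\rho)\vec e_1,\vec e_2\}$ of $\text{span}\{\theta,\theta^\ast\}$ and setting $g_i=\langle x_i,\hat e_1\rangle$, $g'_i=\sgn(\rho)\langle x_i,\vec e_1\rangle$, $h'_i=\langle x_i,\vec e_2\rangle$ (so $g_i,g'_i\sim\mathcal{N}(0,1)$ with $\E[g_ig'_i]=\sin\varphi$, $g_i=\sin\varphi\,g'_i+\cos\varphi\,h'_i$, $h'_i\ind g'_i$, and $\varphi=\tfrac\pi2-\tfrac\phi2$), a direct computation gives
\begin{equation*}
\frac{P_{\theta,\theta^\ast}\bigl[\sgn(\rho)M^{\tmop{easy}}_n(\theta,\nu)-M^{\tmop{easy}}_n(\theta^\ast,\nu^\ast)\bigr]}{\|\theta^\ast\|}=\frac1n\sum_{i=1}^n\xi_i,\qquad \xi_i:=\bigl(\sgn(g_ig'_i)-1\bigr)\,g_i\,(g'_i,\,h'_i)\in\mathbb{R}^2,
\end{equation*}
and $\xi_i$ vanishes unless $g_ig'_i<0$. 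I would then split $\bigl\|\tfrac1n\sum_i\xi_i\bigr\|\le\|\E\xi_1\|+\bigl\|\tfrac1n\sum_i(\xi_i-\E\xi_i)\bigr\|$, where the mean term equals $\|\sgn(\rho)M(\theta,\nu)-\theta^\ast\|/\|\theta^\ast\|\le\phi^2/(2\pi)$ by Proposition~\ref{prop:errors_em_updates_angle} (equivalently, a one-line computation from Corollary~\ref{cor:em_updates_noiseless}), since $\E\xi_1$ already lies in $\text{span}\{\theta,\theta^\ast\}$.

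The crux is a sharp estimate of $\xi_i$ on the event $\{g_ig'_i<0\}$. Introducing polar coordinates $(g'_i,h'_i)=(r_i\cos\alpha_i,\,r_i\sin\alpha_i)$ with $\alpha_i\sim\mathrm{Unif}[0,2\pi)$ independent of the Rayleigh radius $r_i$ ($r_i^2\sim\chi_2^2$, which is sub-exponential), one has $g_i=r_i\sin(\alpha_i+\varphi)$, hence $g_ig'_i=r_i^2\sin(\alpha_i+\varphi)\cos\alpha_i$; the event $\{g_ig'_i<0\}$ is then $\alpha_i$ lying in the two arcs $(\tfrac\pi2,\pi-\varphi)$ and $(\tfrac{3\pi}2,2\pi-\varphi)$, each of length $\tfrac\pi2-\varphi=\tfrac\phi2$, so $\P[g_ig'_i<0]=\phi/(2\pi)$. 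On these arcs $|\sin(\alpha_i+\varphi)|<\sin\tfrac\phi2\le\tfrac\phi2$, so $|g_i|\le\tfrac\phi2\,r_i$, while $\sqrt{(g'_i)^2+(h'_i)^2}=r_i$ exactly; therefore $\|\xi_i\|=2|g_i|r_i\,\mathbb{I}[g_ig'_i<0]\le\phi\,r_i^2\,\mathbb{I}[g_ig'_i<0]$. This yields the two ingredients Bernstein needs: each coordinate satisfies $\|\xi_i^{(k)}\|_{\psi_1}\lesssim\phi$ (a product of $\phi$ and the sub-exponential $r_i^2$), and, using independence of $r_i$ from $\alpha_i$, $\E\|\xi_i\|^2\le\phi^2\,\E\bigl[r_i^4\,\mathbb{I}[g_ig'_i<0]\bigr]=\phi^2\cdot\tfrac\phi{2\pi}\cdot\E[r_i^4]\lesssim\phi^3$.

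Finally I would apply the variance-dependent Bernstein inequality for sub-exponential random variables (the version in \cite{vershynin2018prob} used for Proposition~\ref{prop:projected_error_regression}) to each of the two coordinates $\tfrac1n\sum_i(\xi_i^{(k)}-\E\xi_i^{(k)})$ and union-bound over $k\in\{1,2\}$: with probability at least $1-\delta$,
\begin{equation*}
\Bigl|\,\tfrac1n\sum_{i=1}^n(\xi_i^{(k)}-\E\xi_i^{(k)})\,\Bigr|\ \lesssim\ \sqrt{\tfrac{\phi^3\log\frac1\delta}{n}}\ \vee\ \tfrac{\phi\log\frac1\delta}{n}\ =\ \phi\sqrt{\tfrac{\phi\log\frac1\delta}{n}}\ \vee\ \tfrac{\phi\log\frac1\delta}{n}\ \le\ \phi^2\ \vee\ \tfrac{\phi\log\frac1\delta}{n},
\end{equation*}
the last step being AM--GM ($\sqrt{ab}\le a\vee b$ with $a=\phi$, $b=\log\tfrac1\delta/n$). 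Summing the two coordinates and adding the mean bound $\phi^2/(2\pi)$ gives $\bigl\|\tfrac1n\sum_i\xi_i\bigr\|=\mathcal{O}\bigl(\phi^2\vee\phi\log\tfrac1\delta/n\bigr)$, which is the claim.

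The main obstacle is precisely the variance estimate $\E\|\xi_i\|^2\lesssim\phi^3$. A crude bound — e.g.\ Cauchy--Schwarz of $\E\|\xi_i\|^2$ against $\P[g_ig'_i<0]^{1/2}$ alone — yields only $\mathcal{O}(\sqrt\phi)$ or $\mathcal{O}(\phi)$, and with such a variance proxy the Bernstein deviation $\sqrt{\sigma^2\log\tfrac1\delta/n}$ is \emph{not} absorbed into $\phi^2\vee\phi\log\tfrac1\delta/n$. The geometric observation that a sign mismatch $g_ig'_i<0$ confines $x_i$ to a narrow $\Theta(\phi)$-angular wedge, in which $\langle x_i,\theta^\ast\rangle$ is itself $\Theta(\phi)$-small, is what supplies the extra $\phi^2$ factor (and the $\Theta(\phi)$ sub-exponential scale); getting this clean requires the polar-coordinate change of variables rather than a direct moment bound.
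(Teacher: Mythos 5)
Your proposal is correct and follows essentially the same route as the paper's proof: both center the projected quantity by its population mean (bounded by \(\phi^2/(2\pi)\) via Proposition~\ref{prop:errors_em_updates_angle} and the fixed-point property \(M(\theta^\ast,\nu^\ast)=\theta^\ast\)), then use the polar/Rayleigh decomposition to see that the fluctuation terms live on a misclassification wedge of angular width \(\Theta(\phi)\) (probability \(\phi/(2\pi)\)) where their magnitude is \(O(\phi\, r_i^2)\), giving variance \(\lesssim\phi^3\) and scale \(\lesssim\phi\), and finish with a Bernstein bound plus \(\sqrt{ab}\le a\vee b\) — exactly the structure of the paper's argument (which bounds the two coordinates \(X_i,Y_i\) in the \((\hat e_1,\sgn(\rho)\hat e_2)\) frame rather than the vector norm in your \((\sgn(\rho)\vec e_1,\vec e_2)\) frame). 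One small correction: Vershynin's Corollary~2.8.3 in \cite{vershynin2018prob} is stated purely in terms of \(\psi_1\) norms and would only yield the insufficient \(\phi\sqrt{\log\frac{1}{\delta}/n}\) deviation, so you should instead invoke a genuinely variance-aware Bernstein inequality under the moment condition \(\E[|\xi_i^{(k)}|^q]\le\frac{q!}{2}\sigma^2 c^{q-2}\) (e.g., Theorem 2.10 of \cite{boucheron2013concentration}, as the paper does), which the bounds \(\sigma^2\lesssim\phi^3\), \(c\lesssim\phi\) you derived already verify.
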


\begin{proof}
  Without loss of generality, we assume \(\dim \text{span}\{\theta, \theta^\ast\} = 2\) and therefore \(\dim \text{span}\{\theta, \theta^\ast\}^\perp = d-2\).
  We introduce \(\rho := \frac{\langle \theta, \theta^{\ast} \rangle}{\| \theta \| \cdot \| \theta^{\ast} \|}\), 
  and define \(\hat{e}_1 := \frac{\theta^{\ast}}{\| \theta^{\ast} \|}\), \(\vec{e}_1 := \frac{\theta}{\| \theta \|}\),
  and \(\hat{e}_2 := \frac{\theta-\langle \theta, \hat{e}_1 \rangle \hat{e}_1}{\| \theta-\langle \theta, \hat{e}_1 \rangle \hat{e}_1 \|}\),
  \(\vec{e}_2 := \frac{\theta^\ast-\langle \theta^\ast, \vec{e}_1 \rangle \vec{e}_1}{\| \theta^\ast-\langle \theta^\ast, \vec{e}_1 \rangle \vec{e}_1 \|}\).
  Also, let \(P_{\theta,\theta^\ast} = \hat{e}_1 \hat{e}_1^\top + \hat{e}_2 \hat{e}_2^\top\), 
  which can be expressed with orthonormal basis matrices \(U_{\theta,\theta^\ast} = (\hat{e}_1, \sgn(\rho)\hat{e}_2)\) 
  such that \(P_{\theta,\theta^\ast} = U_{\theta,\theta^\ast} U_{\theta,\theta^\ast}^\top\).
  Also, we have \(\phi = 2 \arccos |\rho|\), therefore \(\rho = \sgn(\rho) \cos \frac{\phi}{2}\).
  By defining \(g_i := \langle x_i, \hat{e}_1\rangle, g'_i := \sgn(\rho) \langle x_i, \vec{e}_1\rangle\), 
  then \(g_i, g'_i \sim \mathcal{N}(0, 1)\) with \(\mathbb{E}[g_i g'_i] = \cos \frac{\phi}{2}\), and \(h_i=\langle x_i, \sgn(\rho)\hat{e}_2 \rangle \sim \mathcal{N}(0, 1)\) satisfies \(g'_i = \cos \frac{\phi}{2}\cdot g_i + \sin \frac{\phi}{2} \cdot h_i, h_i\ind g_i\).
  \begin{eqnarray*}
    -U_{\theta,\theta^\ast}^\top \frac{\left[\sgn(\rho)M^{\tmop{easy}}_n(\theta, \nu) - M^{\tmop{easy}}_n(\theta^\ast, \nu^\ast)\right]}{\|\theta^\ast\|}
    =\left( \frac{1}{n}\sum_{i=1}^n [1-\sgn(g_i g'_i)] g_i^2 
    ,\space -\frac{1}{n}\sum_{i=1}^n [\sgn(g_i g'_i)-1] g_i h_i \right)^\top
  \end{eqnarray*}
  Applying Proposition~\ref{prop:errors_em_updates_angle} of Errors of EM Updates and Sub-optimality Angle, and \(\E[M^{\tmop{easy}}_n(\theta, \nu)] = M(\theta, \nu),\E[M^{\tmop{easy}}_n(\theta^\ast, \nu^\ast)] = \theta^\ast \in \text{span}\{\theta, \theta^\ast\}
  = \text{span}(U_{\theta,\theta^\ast})\), then expectation \(\E[U_{\theta,\theta^\ast}^\top\vec{v}]\) of \(\vec{v} := [\sgn(\rho)M^{\tmop{easy}}_n(\theta, \nu) - M^{\tmop{easy}}_n(\theta^\ast, \nu^\ast)]/\|\theta^\ast\|\) is bounded:
  \[
  \|\E[U_{\theta,\theta^\ast}^\top\vec{v}]\| =
  \left\| U_{\theta,\theta^\ast}^\top \frac{\E\left[\sgn(\rho)M^{\tmop{easy}}_n(\theta, \nu) - M^{\tmop{easy}}_n(\theta^\ast, \nu^\ast)\right]}{\|\theta^\ast\|} \right\|
  =\|\E[\vec{v}]\|
  = \frac{\| M(\theta, \nu) - \sgn(\rho) \theta^\ast \|}{\|\theta^\ast \|} \leq \frac{\phi^2}{2 \pi}
  \]
  Applying \(\sqrt{(s+s')^2 + (t+t')^2}\leq \sqrt{s^2 + t^2} + \sqrt{2}\max(s', t')_+\) for \(s+s', t+t'\geq0\), 
  noting \(P_{\theta,\theta^\ast}= U_{\theta,\theta^\ast} U_{\theta,\theta^\ast}^\top\): 
  \[
    \|P_{\theta,\theta^\ast} \vec{v}\| = \|U_{\theta,\theta^\ast}^\top \vec{v}\| 
    \lesssim \phi^2 + \max\left(\left(\frac{1}{n}\sum_{i=1}^n -\E\right)[1-\sgn(g_i g'_i)] g_i^2, \left(\frac{1}{n}\sum_{i=1}^n -\E\right)[\sgn(g_i g'_i)-1] g_i h_i\right)_+
  \]
  By introducing \(\{\varrho_i\}_{i=1}^n = \{R_i^2/2\}_{i=1}^n \stackrel{\text{i.i.d.}}{\sim} \tmop{Exp}(1)\) with \(\{R_i\}_{i=1}^n \stackrel{\text{i.i.d.}}{\sim} r\exp(r^2/2)\mathds{1}_{r\geq 0}\) (the standard Rayleigh distribution), 
  \(\{V_i\}_{i=1}^n \stackrel{\text{i.i.d.}}{\sim} \tmop{Unif}[0, 2\pi)\), and \(U_i := 2 V_i - \pi \mod (2\pi) \sim \tmop{Unif}[0, 2\pi)\), 
  we express \(g_i = R_i \cos V_i, h_i = R_i \sin V_i\) and follows:
  \[
  X_i := [1-\sgn(g_i g'_i)] g_i^2 = 2\varrho_i (1-\cos U_i) \mathds{1}_{U_i\in [0, \phi)} \geq 0,\quad
  Y_i := [\sgn(g_i g'_i)-1] g_i h_i = 2\varrho_i \sin U_i \mathds{1}_{U_i\in [0, \phi)} \geq 0
  \]
  Noting \(\E[\varrho_i^q] = q!\) for \(q\in \mathbb{Z}_+\), and \(2 (1-\cos U_i)\mathds{1}_{U_i\in [0, \phi)} \leq \phi^2 \mathds{1}_{U_i\in [0, \phi)}, 2\sin U_i\leq 2\phi \mathds{1}_{U_i\in [0, \phi)}\), then for \(q\geq 3\):
  \[
  \E[X_i^2] \leq 2! [\phi^2]^2 \E[\mathds{1}_{U_i\in [0, \phi)}] = \frac{\phi^5}{\pi},\quad
  \E[X_i^q] \leq q! [\phi^2]^q \E[\mathds{1}_{U_i\in [0, \phi)}] = \frac{q! \phi^{2q+1}}{2\pi} = \frac{q!}{2} \frac{\phi^5}{\pi} [\phi^2]^{q-2}
  \]
  \[
  \E[Y_i^2] \leq 2! [2\phi]^2 \E[\mathds{1}_{U_i\in [0, \phi)}] = \frac{4\phi^3}{\pi},\quad
  \E[Y_i^q] \leq q! [2\phi]^q \E[\mathds{1}_{U_i\in [0, \phi)}] = \frac{q! 2^{q} \phi^{q+1}}{2\pi} = \frac{q!}{2} \frac{4\phi^3}{\pi} [2\phi]^{q-2}
  \]
  By Bernstein's inequality (see Theorem 2.10 on page 37 of~\cite{boucheron2013concentration}) with probability at least \(1-\delta\), noting \(\phi^2 \lesssim \phi\), we have:
  \[
  \max\left(\left(\frac{1}{n}\sum_{i=1}^n-\E\right) X_i, \left(\frac{1}{n}\sum_{i=1}^n-\E\right) Y_i\right)_+
  \lesssim \max\left(\phi^{\frac{5}{2}}\sqrt{\frac{\log \frac{1}{\delta}}{n}}+\phi^2 \frac{\log \frac{1}{\delta}}{n} , \phi^{\frac{3}{2}}\sqrt{\frac{\log \frac{1}{\delta}}{n}}+\phi \frac{\log \frac{1}{\delta}}{n} \right) \asymp \phi^{\frac{3}{2}}\sqrt{\frac{\log \frac{1}{\delta}}{n}}+\phi \frac{\log \frac{1}{\delta}}{n}
  \]
  Therefore, we complete the proof by using \(a + \sqrt{ab} + b \asymp a + b \asymp a\vee b\) for \(a, b \geq 0\).
\end{proof}

\begin{theorem}[Proposition~\ref{prop:statistical_accuracy_em_updates}: Statistical Accuracy of EM Updates for Regression Parameters and Mixing Weights]
  In the noiseless setting, the finite-sample EM with \(n\gtrsim d\vee \log \frac{1}{\delta}\) samples achieves the statistical accuracy of regression parameters and mixing weights:
  \begin{equation*}
    \begin{aligned}
      \frac{\left\| M_n(\theta, \nu) - \sgn(\rho) \theta^\ast \right\|}{\|\theta^\ast\|} 
      &= \mathcal{O}\left(\phi^2 \vee \phi^{\frac{3}{2}} \sqrt{\frac{d}{n}} \vee \phi \sqrt{\frac{\log \frac{1}{\delta}}{n}}\right)\\
      \left| N_n(\theta, \nu) - \sgn(\rho) \tanh \nu^\ast\right| &= \mathcal{O}\left( 
        \phi \left\| \pi^\ast -\frac{\mathds{1}}{2} \right\|_1 \vee \left[ 
          \frac{\log\frac{1}{\delta}/n}{\log\left(1+\frac{\log \frac{1}{\delta}/n}{p}\right)}
          \wedge \sqrt{\frac{\log \frac{1}{\delta}}{n}}
        \right]
      \right)
    \end{aligned}
  \end{equation*}
  with probability at least \(1-\delta\), where \(M_n(\theta, \nu)\) denotes the EM update rule for regression parameters at the Finite-sample level, 
  and \(\phi := 2 \arccos |\rho|, \rho := \frac{\langle \theta, \theta^{\ast} \rangle}{\| \theta \| \cdot \| \theta^{\ast} \|}\)
  and \(p:=\frac{\phi}{2\pi}\|\pi^\ast - \frac{\mathds{1}}{2}\|_1 + \min(\pi^\ast(1), \pi^\ast(2))\).
\end{theorem}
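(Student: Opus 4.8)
The plan is to reduce both bounds to the statistical-error estimates already proven, by exploiting the fact that in the noiseless limit the \emph{standard} finite-sample EM update fixes the ground truth exactly. First I would record that, as $\eta\to\infty$, $\tanh(y_i\langle x_i,\theta\rangle/\sigma^2+\nu)y_ix_i\to\sgn(\langle x_i,\theta^\ast\rangle\langle x_i,\theta\rangle)\langle x_i,\theta^\ast\rangle x_i$, so $M_n(\theta,\nu)=\mathsf{\Sigma}^{-1}M_n^{\tmop{easy}}(\theta,\nu)$ with $\mathsf{\Sigma}:=\frac1n\sum_i x_ix_i^\top$, and, evaluating at $(\theta^\ast,\nu^\ast)$, $M_n^{\tmop{easy}}(\theta^\ast,\nu^\ast)=\mathsf{\Sigma}\theta^\ast$, hence $M_n(\theta^\ast,\nu^\ast)=\theta^\ast$ identically. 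Using oddness $M_n^{\tmop{easy}}(-\theta,\nu)=-M_n^{\tmop{easy}}(\theta,\nu)$ to absorb $\sgn(\rho)$ and assume $\rho\ge0$, I would then write $\sgn(\rho)M_n(\theta,\nu)-\theta^\ast=\mathsf{\Sigma}^{-1}D_n$ where $D_n:=\sgn(\rho)M_n^{\tmop{easy}}(\theta,\nu)-M_n^{\tmop{easy}}(\theta^\ast,\nu^\ast)=-\tfrac{2\|\theta^\ast\|}{n}\sum_i c_i x_i$, with $c_i:=g_i\,\mathds{1}_{g_ig_i'<0}$, $g_i:=\langle x_i,\theta^\ast\rangle/\|\theta^\ast\|$, $g_i':=\sgn(\rho)\langle x_i,\theta\rangle/\|\theta\|$ correlated by $\mathbb{E}[g_ig_i']=\cos\tfrac{\phi}{2}$, so each summand vanishes off the thin event $\{g_ig_i'<0\}$ of probability $\phi/(2\pi)$. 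Since $\gamma_{\min}(\mathsf{\Sigma})\asymp1$ with probability $\ge1-\delta$ once $n\gtrsim d\vee\log\frac1\delta$ (exactly as in the proof of Proposition~\ref{prop:statistical_error_regression}), it suffices to bound $\|D_n\|/\|\theta^\ast\|$.

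Next I would split $D_n$ into its projection onto $\text{span}\{\theta,\theta^\ast\}$ and its orthogonal component. The projected part is precisely Lemma~\ref{lem:projected_statistical_accuracy_regression}, contributing $\mathcal{O}(\phi^2\vee\phi\log\tfrac1\delta/n)$. For the orthogonal component, with $\vec g_i:=(U_{\theta,\theta^\ast}^\perp)^\top x_i\sim\mathcal{N}(0,I_{d-2})$ independent of $(g_i,g_i')$, it equals $-\tfrac{2\|\theta^\ast\|}{n}\sum_i c_i\vec g_i$; conditioning on $\{c_i\}$ this is a centered Gaussian vector, so its squared norm is distributionally $\tfrac{4\|\theta^\ast\|^2}{n^2}\big(\sum_i c_i^2\big)\,\chi^2(d-2)$ with the chi-square independent of $\{c_i\}$. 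Writing $(g_i,h_i)=R_i(\cos V_i,\sin V_i)$ in polar form (so $g_i'=R_i\cos(V_i-\phi/2)$), the event $\{g_ig_i'<0\}$ forces $|\cos V_i|\le\sin\tfrac{\phi}{2}$, whence $\mathbb{E}[c_i^2]\lesssim\phi^3$, $\mathbb{E}[c_i^4]\lesssim\phi^5$, and $\|c_i^2\|_{\psi_1}\lesssim\phi^2$; Bernstein's inequality then gives $\sum_i c_i^2\lesssim n\phi^3\vee\phi^2\log\tfrac1\delta$ (the cross term handled by AM--GM), while the Laurent--Massart bound gives $\chi^2(d-2)\lesssim d\vee\log\tfrac1\delta$. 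Multiplying and checking the two regimes $\phi\gtrless\log\tfrac1\delta/n$ separately, using $n\gtrsim d\vee\log\tfrac1\delta$, the orthogonal part is $\mathcal{O}(\phi^{3/2}\sqrt{d/n}\vee\phi\sqrt{\log\tfrac1\delta/n})$. Combining the two components and noting $\phi\log\tfrac1\delta/n\le\phi\sqrt{\log\tfrac1\delta/n}$ and $\phi^{3/2}\sqrt{\log\tfrac1\delta/n}\lesssim\phi\sqrt{\log\tfrac1\delta/n}$ yields the stated regression bound after applying $\|\mathsf{\Sigma}^{-1}\|_{\mathrm{op}}\lesssim1$.

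For the mixing weights I would decompose $N_n(\theta,\nu)-\sgn(\rho)\tanh\nu^\ast=[N_n(\theta,\nu)-N(\theta,\nu)]+[N(\theta,\nu)-\sgn(\rho)\tanh\nu^\ast]$. By Corollary~\ref{cor:em_updates_noiseless}, $N(\theta,\nu)=\sgn(\rho)\tfrac{2}{\pi}\varphi\tanh\nu^\ast$, so the second bracket equals $-\sgn(\rho)\tanh\nu^\ast\cdot\tfrac{\phi}{\pi}$ and has absolute value $\tfrac{\phi}{\pi}\|\pi^\ast-\tfrac{\mathds{1}}{2}\|_1$, using $\tanh|\nu^\ast|=\|\pi^\ast-\tfrac{\mathds{1}}{2}\|_1$ and $\phi=\pi-2\varphi$. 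The first bracket is bounded by Proposition~\ref{prop:statistical_error_mixing_weights} by exactly $\mathcal{O}\big(\tfrac{\log\frac1\delta/n}{\log(1+\log\frac1\delta/(np))}\wedge\sqrt{\log\tfrac1\delta/n}\big)$ with the same $p=\tfrac{\phi}{2\pi}\|\pi^\ast-\tfrac{\mathds{1}}{2}\|_1+\min(\pi^\ast(1),\pi^\ast(2))$. A triangle inequality, together with a union bound over the high-probability events invoked (covariance concentration, the chi-square bound, the concentration of $\sum_i c_i^2$, and the events of Lemma~\ref{lem:projected_statistical_accuracy_regression} and Proposition~\ref{prop:statistical_error_mixing_weights}), finishes the proof after absorbing a constant into $\delta$.

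The main obstacle is the orthogonal-component estimate: the naive triangle-inequality route through Proposition~\ref{prop:statistical_error_regression} loses the $\phi$ prefactor entirely, so one must show that the weight $c_i$ carries a genuine extra power of $\phi$ beyond the $\Theta(\phi)$ probability of the thin cone. This is what upgrades $\sqrt{(d\vee\log\tfrac1\delta)/n}$ to the $\phi$-weighted rates, and it requires the chi-square conditioning identity together with careful moment control ($\mathbb{E}[c_i^2]$, $\mathbb{E}[c_i^4]$, $\|c_i^2\|_{\psi_1}$) fed through Bernstein, plus verifying that the $\phi^2\log\tfrac1\delta$ branch of $\sum_i c_i^2$ never degrades the final rate under $n\gtrsim d\vee\log\tfrac1\delta$.
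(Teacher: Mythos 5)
Your proposal is correct and follows essentially the same route as the paper's proof: you write $M_n(\theta,\nu)-\sgn(\rho)\theta^\ast=\sgn(\rho)\mathsf{\Sigma}^{-1}[\sgn(\rho)M_n^{\tmop{easy}}(\theta,\nu)-M_n^{\tmop{easy}}(\theta^\ast,\nu^\ast)]$, split into the in-plane part (Lemma~\ref{lem:projected_statistical_accuracy_regression}) and the orthogonal part, handle the latter by rotational invariance as $(\sum_i c_i^2)\cdot\chi^2(d-2)$ with the extra powers of $\phi$ extracted from moment bounds on the thin-cone weights fed into Bernstein plus Laurent--Massart, invert $\mathsf{\Sigma}$ via $\gamma_{\min}(\mathsf{\Sigma})\asymp1$, and bound the mixing-weight error by the triangle inequality combining Corollary~\ref{cor:em_updates_noiseless} with Proposition~\ref{prop:statistical_error_mixing_weights} — exactly the paper's argument, with your $c_i^2$ equal (up to a factor of two) to the paper's $X_i$.
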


\begin{proof}
  Noting \(\mathsf{\Sigma} M_n(\theta, \nu) = M_n^{\text{easy}}(\theta, \nu)=\frac{1}{n}\sum_{i=1}^n \sgn(\langle x_i, \theta^\ast \rangle \langle x_i, \theta \rangle) \langle x_i, \theta^\ast \rangle x_i\) in the noiseless setting,
  with the ensemble matrix of data samples \(\mathsf{\Sigma} = \frac{1}{n} \sum_{i=1}^n x_i x_i^\top\), and letting \(\vec{v} := [\sgn(\rho)M^{\tmop{easy}}_n(\theta, \nu) - M^{\tmop{easy}}_n(\theta^\ast, \nu^\ast)]/\|\theta^\ast\|\), then:
  \[
    \frac{M_n(\theta, \nu) - \sgn(\rho) \theta^\ast}{\|\theta^\ast\|} 
    = \sgn(\rho) \mathsf{\Sigma}^{-1} \cdot \vec{v} 
  \]
  
  Let \(P_{\theta,\theta^\ast}^\perp  = \sum_{j=1}^{d-2} \hat{e}_j^\perp (\hat{e}_j^\perp)^\top\)
  and \(U_{\theta,\theta^\ast}^\perp = \left(\hat{e}_1^\perp, \ldots, \hat{e}_{d-2}^\perp\right)\) with \(\text{span}(U_{\theta,\theta^\ast}^\perp) = \text{span}\{\theta, \theta^\ast\}^\perp\).
  For the length of vector \(\vec{v}\), noting that orthogonal projection matrice \(P_{\theta,\theta^\ast}^\perp\) 
  can be expressed with orthonormal basis matrix \(U_{\theta,\theta^\ast}^\perp\)
  such that \(P_{\theta,\theta^\ast}^\perp = U_{\theta,\theta^\ast}^\perp (U_{\theta,\theta^\ast}^\perp)^\top\).
  \[
  \|\vec{v}\|^2  
  = \langle (P_{\theta,\theta^\ast}+P_{\theta,\theta^\ast}^\perp) \vec{v}, (P_{\theta,\theta^\ast}+P_{\theta,\theta^\ast}^\perp) \vec{v} \rangle 
  = \langle \vec{v}, (P_{\theta,\theta^\ast}+P_{\theta,\theta^\ast}^\perp) \vec{v} \rangle 
  = \langle \vec{v}, P_{\theta,\theta^\ast} \vec{v} \rangle + \langle \vec{v}, P_{\theta,\theta^\ast}^\perp \vec{v} \rangle 
  = \|P_{\theta,\theta^\ast} \vec{v}\|^2 + \|(U_{\theta,\theta^\ast}^\perp)^\top \vec{v}\|^2
  \]
  where the first term is bounded by \(\|P_{\theta,\theta^\ast} \vec{v}\| \lesssim \phi^2 \vee \phi\sqrt{\frac{\log \frac{1}{\delta}}{n}}\), 
  and we rewrite the second term by introducing Gaussian r.v.
  \(g_i := \langle x_i, \theta^\ast/\|\theta^\ast\|\rangle, g'_i := \langle x_i, \sgn(\rho)\theta/\|\theta\|\rangle \sim \mathcal{N}(0, 1), \vec{g}_i := \sgn(\langle x_i, \theta^\ast\rangle \langle x_i, \theta\rangle) (U_{\theta,\theta^\ast}^\perp)^\top x_i \sim \mathcal{N}(0, I_{d-2})\) with \(g_i, g'_i \ind \vec{g}_i\).
  \[
      \|(U_{\theta,\theta^\ast}^\perp)^\top \vec{v}\|^2
  = \frac{1}{n^2}\left\|\sum_{i=1}^n [1-\sgn(g'_i g_i)]g_i \vec{g}_i\right\|^2
  = \frac{2}{n^2} \left(\sum_{i=1}^n [1-\sgn(g'_i g_i)]g_i^2\right) \cdot \left\| \sum_{i=1}^n \frac{[1-\sgn(g'_i g_i)]g_i}{\sqrt{\sum_{i'=1}^n 2[1-\sgn(g'_{i'} g_{i'})](g_{i'})^2}} \vec{g}_i\right\|^2 
  \]
  where the weighted sum \(\sum_{i=1}^n ([1-\sgn(g'_i g_i)]g_i/\sqrt{\sum_{i'=1}^n 2[1-\sgn(g'_{i'} g_{i'})](g_{i'})^2}) \vec{g}_i\sim\mathcal{N}(0, I_{d-2})\) by the rotational invariance of Gaussian distribution,
  therefore \(\| \sum_{i=1}^n ([1-\sgn(g'_i g_i)]g_i/\sqrt{\sum_{i'=1}^n 2[1-\sgn(g'_{i'} g_{i'})](g_{i'})^2}) \vec{g}_i\|^2 \sim \chi^2(d-2)\).
  By the concentration inequality for Chi-square distribution (see Lemma 1, page 1325 in~\cite{laurent2000adaptive}), then:
  \[
  \frac{1}{n}\left\| \sum_{i=1}^n \frac{[1-\sgn(g'_i g_i)]g_i}{\sqrt{\sum_{i'=1}^n 2[1-\sgn(g'_{i'} g_{i'})](g_{i'})^2}} \vec{g}_i\right\|^2 \lesssim \frac{d\vee \log \frac{1}{\delta}}{n} 
  \]
  Letting \(\{\varrho_i\}_{i=1}^n = \{R_i^2/2\}_{i=1}^n \stackrel{\text{i.i.d.}}{\sim} \tmop{Exp}(1)\) with \(\{R_i\}_{i=1}^n \stackrel{\text{i.i.d.}}{\sim} r\exp(r^2/2)\mathds{1}_{r\geq 0}\) (the standard Rayleigh distribution), 
  \(\{V_i\}_{i=1}^n \stackrel{\text{i.i.d.}}{\sim} \tmop{Unif}[0, 2\pi)\), and \(U_i := 2 V_i - \pi \mod (2\pi) \sim \tmop{Unif}[0, 2\pi)\), 
  we express \(g_i = R_i \cos V_i, h_i = R_i \sin V_i, g'_i = \cos \frac{\phi}{2}\cdot g_i+\sin\frac{\phi}{2}\cdot h_i\).
  \[
  X_i := [1-\sgn(g_i g'_i)] g_i^2 = 2\varrho_i (1-\cos U_i) \mathds{1}_{U_i\in [0, \phi)} \geq 0,\quad
  \E[X_i] \leq \E[\varrho_i] \E[\phi^2 \mathds{1}_{U_i\in[0, \phi)}] = \frac{\phi^3}{2\pi}
  \]
  By the concentration inequality for \(X_i\) obtained in proof of previous Lemma, \((\frac{1}{n}\sum_{i=1}^n -\E)X_i \lesssim \phi^{\frac{5}{2}}\sqrt{\frac{\log \frac{1}{\delta}}{n}} + \phi^2 \frac{\log \frac{1}{\delta}}{n}\), then:
  \[
  \frac{1}{n}\sum_{i=1}^n [1-\sgn(g'_i g_i)]g_i^2 = \frac{1}{n}\sum_{i=1}^n X_i 
  \lesssim \phi^3 + \phi^{\frac{5}{2}}\sqrt{\frac{\log \frac{1}{\delta}}{n}} + \phi^2 \frac{\log \frac{1}{\delta}}{n}
  \asymp \phi^2 \left(\phi \vee \frac{\log \frac{1}{\delta}}{n}\right)
  \]
  By combining the above inequalities together, we bound \(\vec{v}\) as follows with \(n\gtrsim d\vee \log \frac{1}{\delta}\) and probability at least \(1-\delta/4\).
  \[
  \|\vec{v}\| \leq \|P_{\theta, \theta^\ast} \vec{v}\| + \|(U_{\theta,\theta^\ast}^\perp)^\top \vec{v}\|
  \lesssim \phi^2 \vee \phi \sqrt{\frac{\log \frac{1}{\delta}}{n}}
  + \sqrt{ \phi^2 \left(\phi \vee \frac{\log \frac{1}{\delta}}{n}\right) \frac{d\vee \log \frac{1}{\delta}}{n} }
  \asymp \phi^2 \vee \phi^{\frac{3}{2}} \sqrt{\frac{d}{n}} \vee \phi \sqrt{\frac{\log \frac{1}{\delta}}{n}}
  \] 
  From the assumption \(n\gtrsim d\vee \log \frac{1}{\delta}\), by Example 6.2 (Operator norm bounds for the standard Gaussian ensemble) and Example 6.3 (Gaussian covariance estimation) on page 162 of~\cite{wainwright2019high}, with probability at least \(1-\delta/4\), we have
  \(
  \gamma_{\min} (\mathsf{\Sigma}) \asymp 1
  \),
  where \(\gamma_{\min}\) represents the minimum eigenvalue.
  \[
    \frac{\left\| M_n(\theta, \nu) - \sgn(\rho) \theta^\ast \right\|}{\|\theta^\ast\|} 
    \leq \frac{\|\vec{v}\|}{\gamma_{\min}(\mathsf{\Sigma})} \asymp
    \phi^2 \vee \phi^{\frac{3}{2}} \sqrt{\frac{d}{n}} \vee \phi \sqrt{\frac{\log \frac{1}{\delta}}{n}}
  \]
  For the mixing weights, by using the Triangle inequality, Proposition for error of mixing weights in population level,
  and the Proposition for statistical error of mixing weights in finite-sample level, we have the following with probability at least \(1-\delta/2\).
  \begin{eqnarray*}
    \left| N_n(\theta, \nu) - \sgn(\rho) \tanh \nu^\ast\right|
    & \leq & \left| N(\theta, \nu) - \sgn(\rho) \tanh\nu^\ast \right| + \left| N_n(\theta, \nu) - N(\theta, \nu) \right|\\
    &\lesssim& \phi \left\| \pi^\ast - \frac{\mathds{1}}{2} \right\|_1 
    + \min\left(\frac{\log \frac{1}{\delta}/n}{\log\left(1+ \frac{\log\frac{1}{\delta}/n}{p}\right)}, \sqrt{\frac{\log\frac{1}{\delta}}{n}} \right) 
  \end{eqnarray*}
  where \(p := \| \pi^\ast - \frac{\mathds{1}}{2} \|_1 \frac{\phi}{2\pi} + \min(\pi^\ast(1), \pi^\ast(2)) \asymp \tanh|\nu^\ast| \phi + (1-\tanh|\nu^\ast|)\).
\end{proof}

\subsection{Initialization of Easy EM Updates and Perturbation in Sub-Optimality Angle}

\begin{theorem}[Proposition~\ref{prop:initialization_easy_em}: Initialization with Easy EM, Proposition 5.4 in~\cite{luo24cycloid}]
    In the noiseless setting, suppose we run the sample-splitting finite-sample
    Easy EM with \ $n' \asymp \frac{n}{\log \frac{1}{\delta}}
    $ fresh samples for each iteration, then after at most $T_0 =\mathcal{O} \left( \log
    \frac{1}{\delta} \right)$ iterations, it satisfies $\varphi^{T_0} \gtrsim
        \sqrt{\frac{\log \frac{1}{\delta}}{n}} 
    $ 
    with probability at least $1 - \delta$.
\end{theorem}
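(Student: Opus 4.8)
The plan is to prove Proposition~\ref{prop:initialization_easy_em} by a random-initialization escape-from-saddle argument: within $\mathcal O(\log\tfrac1\delta)$ fresh-batch Easy-EM steps the sub-optimality angle leaves the near-orthogonal region $\{\varphi\lesssim\sqrt{\log\tfrac1\delta/n}\,\}$ with probability $\ge 1-\delta$. Write $\tau:=\sqrt{\log\tfrac1\delta/n}$; since the $n$ samples are split into $\asymp\log\tfrac1\delta$ disjoint batches of size $n'\asymp n/\log\tfrac1\delta$, we have $\tau^2\asymp 1/n'$, and each iteration uses data independent of the past. In the noiseless limit $\tanh(y_i\langle x_i,\theta\rangle/\sigma^2+\nu)\to(-1)^{z_i+1}\sgn(\langle x_i,\theta^\ast\rangle\langle x_i,\theta\rangle)$, so the auxiliary variable $\nu^t$ drops out of the $\theta$-update and an arbitrary $\pi^0$ is harmless; it therefore suffices to track $U^t:=\langle\theta^t,\theta^\ast\rangle/\|\theta^\ast\|^2$, whose sign is $\sgn(\rho^t)$ and which satisfies $\sin\varphi^t=|U^t|\,\|\theta^\ast\|/\|\theta^t\|$.

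The core is a one-step progress lemma: conditionally on the history $\mathcal F^t$ (independent of the $t$-th batch), if $\varphi^t<c_1\tau$ for a suitable small universal constant $c_1$, then $\varphi^{t+1}\ge c_1\tau$ with probability at least a universal $p_0>0$. Decompose $U^{t+1}=\mu^t+\tilde\Xi^t$, where $\mu^t:=\langle M(\theta^t,\nu^t),\theta^\ast\rangle/\|\theta^\ast\|^2=\sgn(\rho^t)\tfrac2\pi[\varphi^t+\sin\varphi^t\cos\varphi^t]$ by Corollary~\ref{cor:em_updates_noiseless}, so $|\mu^t|\le\tfrac4\pi\varphi^t<\tfrac4\pi c_1\tau$, and $\tilde\Xi^t$ is the centered average of $n'$ i.i.d.\ terms $\sgn(g_ig_i')g_i^2$ with $g_i,g_i'$ unit Gaussians of correlation $\rho^t$. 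One checks $\Var(\tilde\Xi^t)=(3-(\mu^t)^2)/n'\asymp 1/n'\asymp\tau^2$ (using $\E[(\sgn(gg')g^2)^2]=\E[g^4]=3$ and $(\mu^t)^2\le1$), and, since $g_i^2$ is sub-exponential, $\E[(\tilde\Xi^t)^4]\lesssim 1/(n')^2\asymp\tau^4$, with all constants independent of $\theta^t$ and of $n'$. Applying the Paley--Zygmund inequality to $(\tilde\Xi^t)^2$ gives $\P(|\tilde\Xi^t|\ge\tau\mid\mathcal F^t)\ge p_0$; on that event $|U^{t+1}|\ge\tau-\tfrac4\pi c_1\tau\ge\tau/2$ once $c_1\le\pi/8$, and combining with the finite-sample boundedness $\|\theta^{t+1}\|=\|M_n^{\mathrm{easy}}(\theta^t,\nu^t)\|\le C'\|\theta^\ast\|$ (Proposition~\ref{prop:boundedness} together with Proposition~\ref{prop:statistical_error_regression}, valid outside a low-probability event) yields $\sin\varphi^{t+1}\ge(\tau/2)/C'\ge c_1\tau$ after choosing $c_1\le 1/(2C')$.

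To amplify this constant-probability progress into a high-probability guarantee I would use a hitting-time argument. Let $T_0:=\min\{t:\varphi^t\ge c_1\tau\}$ and set the per-iteration failure level $\delta':=\delta/(C\log\tfrac1\delta)$ in the bounds above. Because each step uses a fresh batch and the one-step lemma holds uniformly over $\mathcal F^t$, the event ``$\varphi^s<c_1\tau$ for all $s\le m$'' has probability at most $(1-p_0)^m$ plus the probability $\le m\delta'$ that a boundedness/statistical bound fails along the way; taking $m=C\log\tfrac1\delta$ with $C$ large enough makes both terms $\lesssim\delta$, so with probability $\ge 1-\delta$ we get $T_0\le C\log\tfrac1\delta=\mathcal O(\log\tfrac1\delta)$ and $\varphi^{T_0}\ge c_1\tau\gtrsim\sqrt{\log\tfrac1\delta/n}$ (using $\varphi\ge\sin\varphi$ on $[0,\pi/2]$).

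The main obstacle is the one-step lemma, specifically the \emph{anti-concentration} estimate: in contrast to the upper-tail concentration bounds used throughout the paper, here I need a \emph{lower} bound $\P(|\tilde\Xi^t|\gtrsim\tau)\ge p_0$ with $p_0$ a universal constant that is uniform over every conditioning state $\theta^t$ — including the exactly orthogonal case $\rho^t=0$, where the population drift vanishes entirely — and over the batch size $n'$. The delicate points are verifying that $\Var(\tilde\Xi^t)\ge c/n'$ holds uniformly (true because the second moment of each term equals $3$ while its squared mean is $\le1$, regardless of $\rho^t$) and that the fourth central moment of the $n'$-fold average is $O(1/(n')^2)$ with a uniform constant, so that Paley--Zygmund returns a $p_0$ independent of the problem geometry; once $p_0$ is secured, the observation that the drift $\mu^t=O(\varphi^t)=O(c_1\tau)$ is negligible against the $\Omega(\tau)$ fluctuation makes the rest of the argument, including the failure-probability bookkeeping across the $\mathcal O(\log\tfrac1\delta)$ sample-split batches, routine.
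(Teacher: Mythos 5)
Your proposal is correct in substance but takes a genuinely different route from the paper. The paper does not reprove this proposition; it points to the conference version (Appendix E of \cite{luo24cycloid}), where the per-iteration anti-concentration of the projection $\langle M^{\tmop{easy}}_{n'}(\theta^t,\nu^t),\theta^\ast\rangle/\|\theta^\ast\|^2$ around its small population drift is obtained from a Berry--Esseen normal approximation, and the constant-probability escape is then amplified over the $\mathcal{O}(\log\frac{1}{\delta})$ independent fresh batches exactly as you do. You replace Berry--Esseen with a purely moment-based argument: a uniform variance lower bound ($\E[(\sgn(gg')g^2)^2]=3$ while the squared mean is at most $1$ for every correlation $\rho^t$, including $\rho^t=0$), a fourth-moment upper bound for the $n'$-fold average, and Paley--Zygmund applied to $(\tilde\Xi^t)^2$. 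This is more elementary (no third-moment/Berry--Esseen constants, no normal approximation) and makes the uniformity over the conditioning state $\theta^t$ completely transparent, at the price of a smaller and less explicit per-round success probability $p_0$ --- which is immaterial here since the geometric amplification over $\mathcal{O}(\log\frac{1}{\delta})$ rounds absorbs any universal constant. Two small points to tighten: (i) converting $|U^{t+1}|\gtrsim\tau$ into $\varphi^{t+1}\gtrsim\tau$ needs the high-probability upper bound on $\|\theta^{t+1}\|/\|\theta^\ast\|$ to hold per batch, and invoking Proposition~\ref{prop:statistical_error_regression} with batch size $n'\asymp n/\log\frac{1}{\delta}$ implicitly requires $n'\gtrsim d\vee\log\frac{1}{\delta}$, i.e.\ a sample condition of the form $n\gtrsim(d\vee\log\frac{1}{\delta})\log\frac{1}{\delta}$, which you should state (the same issue is present in the sample-splitting scheme of the cited proof, so it is a bookkeeping matter rather than a flaw in your approach); (ii) in the hitting-time step you should note explicitly that the per-round failure events (norm bound plus anti-concentration) are measurable with respect to the fresh batch only, so the conditional bound $\P(\varphi^{t+1}<c_1\tau\mid\mathcal{F}^t,\varphi^t<c_1\tau)\le 1-p_0+\delta'$ multiplies across rounds; with that spelled out, your union-bound accounting gives the stated $1-\delta$ guarantee.
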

\begin{proof}
  See the detailed proof on pages 41-44, Appendix E of~\cite{luo24cycloid} based on Berry-Esseen bound for the central limit theorem (see Theorem 1.1 of~\cite{ross2011fundamentals}).
  The proposition is restated here for completeness and to keep the paper self-contained.
\end{proof}

\begin{lemma}[Perturbation in Sub-Optimality Angle]\label{lem:single_iter}
    For $\vartheta \assign \sin \varphi \hat{e}_1 + \cos \varphi \hat{e}_2$,
    where $\{ \hat{e}_1, \hat{e}_2 \}$ is an orthonormal basis for the subspace
    $\tmop{span} \{ \hat{e}_1, \hat{e}_2 \} \subset \mathbb{R}^d$, and $\varphi
    \in \left( 0, \frac{\pi}{2} \right)$; with a pertubation vector $\varrho \in
    \mathbb{R}^d$ with lengh $\| \varrho \| = r \in (0, \sin \varphi)$; then the
    angle $\varphi' \assign
    \arcsin \frac{| \langle \vartheta + \varrho, \hat{e}_1 \rangle |}{\|
    \vartheta + \varrho \|}$, satisfies $\varphi' \geq \varphi - \arcsin r$
\end{lemma}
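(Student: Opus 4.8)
The plan is to recast the quantity $\varphi'$ in terms of angles between vectors and then exploit the triangle inequality for the geodesic (angular) metric on the unit sphere. First I would record two elementary identities using the notation $\angle(a,b):=\arccos\frac{\langle a,b\rangle}{\|a\|\,\|b\|}$ for the unsigned angle. Since $\|\vartheta\|=1$ and $\langle\vartheta,\hat e_1\rangle=\sin\varphi$, we get $\angle(\vartheta,\hat e_1)=\arccos(\sin\varphi)=\tfrac{\pi}{2}-\varphi$, i.e. $\varphi=\tfrac{\pi}{2}-\angle(\vartheta,\hat e_1)$. For the perturbed vector, Cauchy--Schwarz gives $|\langle\varrho,\hat e_1\rangle|\le\|\varrho\|=r$ and $\|\vartheta+\varrho\|\ge 1-r>0$ (using $r<\sin\varphi\le1$), so $\langle\vartheta+\varrho,\hat e_1\rangle\ge\sin\varphi-r>0$; hence the ratio defining $\varphi'$ lies in $(0,1]$, and because the inner product is positive, $\angle(\vartheta+\varrho,\hat e_1)\in[0,\tfrac{\pi}{2})$ and $\varphi'=\tfrac{\pi}{2}-\angle(\vartheta+\varrho,\hat e_1)$.

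Subtracting the two identities yields $\varphi-\varphi'=\angle(\vartheta+\varrho,\hat e_1)-\angle(\vartheta,\hat e_1)$, and by the triangle inequality for the angular metric, $\angle(\vartheta+\varrho,\hat e_1)\le\angle(\vartheta+\varrho,\vartheta)+\angle(\vartheta,\hat e_1)$, so $\varphi-\varphi'\le\angle(\vartheta+\varrho,\vartheta)$. It then suffices to prove the rotation bound $\angle(\vartheta+\varrho,\vartheta)\le\arcsin r$: adding a vector of length $r$ to a unit vector turns its direction by at most $\arcsin r$. I would prove this by decomposing $\varrho=a\vartheta+\varrho_\perp$ with $\varrho_\perp\perp\vartheta$, $|a|\le r$, $\|\varrho_\perp\|^2=r^2-a^2$, so that $\vartheta+\varrho=(1+a)\vartheta+\varrho_\perp$ with $1+a\ge1-r>0$. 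Writing $\psi:=\angle(\vartheta+\varrho,\vartheta)$, one has $\cos\psi=\frac{1+a}{\|\vartheta+\varrho\|}>0$ and $\sin^2\psi=\frac{\|\varrho_\perp\|^2}{\|\vartheta+\varrho\|^2}=\frac{r^2-a^2}{1+2a+r^2}$; the inequality $\sin^2\psi\le r^2$ reduces, after clearing the (positive) denominator, to $0\le(a+r^2)^2$, which is immediate, with equality exactly at $a=-r^2$ so the bound is tight. Since $\psi\in[0,\tfrac{\pi}{2})$, this gives $\psi\le\arcsin r$, and combining with the previous display gives $\varphi'\ge\varphi-\arcsin r$ (the conclusion being trivial when $\varphi-\arcsin r\le 0$, since $\varphi'\ge0$).

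The main obstacle — and the reason a one-line argument fails — is that the estimates one is first tempted to use are too lossy: bounding $\|\vartheta+\varrho\|$ below by $1-r$ and $\langle\vartheta+\varrho,\hat e_1\rangle$ below by $\sin\varphi-r$ only yields $\sin\varphi'\ge\frac{\sin\varphi-r}{1+r}$, which is strictly weaker than $\sin(\varphi-\arcsin r)$ (fails already at $\varphi=\tfrac{\pi}{2}$), and the analogous crude estimate gives $\angle(\vartheta+\varrho,\vartheta)\le\arctan\frac{r}{1-r}>\arcsin r$. The correct viewpoint is geometric: the displacement sphere $\{\vartheta+\varrho:\|\varrho\|=r\}$ is centered at the unit vector $\vartheta$, and since $r<1$ the origin lies strictly outside it, so every such point is seen from the origin inside the tangent cone of half-angle $\arcsin r$ about the direction of $\vartheta$. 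The identity $\sin^2\psi=\frac{r^2-a^2}{1+2a+r^2}$ together with the factorization $r^2(1+2a+r^2)-(r^2-a^2)=(a+r^2)^2$ is precisely the quantitative form of this tangency, and the angular triangle inequality then transfers the rotation bound from the axis $\vartheta$ to the coordinate direction $\hat e_1$ without any further loss.
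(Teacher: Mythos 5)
Your proof is correct, and it takes a genuinely different route from the paper's. The paper parameterizes the perturbation explicitly as $\varrho = -r'\cos(\varphi-\Delta)\hat{e}_1 + r'\sin(\varphi-\Delta)\hat{e}_2 + \sqrt{r^2-[r']^2}\,\hat{e}$, reduces $\sin\varphi'$ to a one-variable function $\psi(p)$ with $p = r'|\sin\Delta|/r$, and locates its minimum by a sign analysis of $\frac{\mathd}{\mathd p}\log\psi$, obtaining $\sin\varphi' \geq \sin(\varphi-\arcsin r)$ and concluding by monotonicity of $\sin$ on the relevant range. You instead split the problem into two clean pieces: the spherical (angular) triangle inequality, which transfers the question from the coordinate direction $\hat{e}_1$ to the axis $\vartheta$ via $\varphi - \varphi' = \angle(\vartheta+\varrho,\hat{e}_1) - \angle(\vartheta,\hat{e}_1) \leq \angle(\vartheta+\varrho,\vartheta)$ (the identities $\varphi = \tfrac{\pi}{2}-\angle(\vartheta,\hat{e}_1)$ and $\varphi' = \tfrac{\pi}{2}-\angle(\vartheta+\varrho,\hat{e}_1)$ being justified by $\langle\vartheta+\varrho,\hat{e}_1\rangle \geq \sin\varphi - r > 0$); and a tight rotation bound $\angle(\vartheta+\varrho,\vartheta) \leq \arcsin r$, which you verify algebraically via $\sin^2\psi = \frac{r^2-a^2}{1+2a+r^2}$ and the factorization $r^2(1+2a+r^2)-(r^2-a^2)=(a+r^2)^2 \geq 0$, with the denominator positive since $1+2a+r^2 \geq (1-r)^2 > 0$. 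What your approach buys is the elimination of calculus and of the explicit worst-case parameterization: the optimization over the perturbation direction is absorbed into the tangent-cone geometry, and each step is an elementary inequality that is individually sharp (equality at $a=-r^2$, and equality in the triangle inequality when $\varrho$ rotates $\vartheta$ within the plane toward $\hat{e}_2$). The only external ingredient you invoke is the triangle inequality for the geodesic metric on the sphere, which is standard but is not proved in your write-up, whereas the paper's argument is self-contained at the cost of a longer computation; your observation that the crude bounds $\|\vartheta+\varrho\| \leq 1+r$ and $\langle\vartheta+\varrho,\hat{e}_1\rangle \geq \sin\varphi - r$ alone are too lossy correctly identifies why a sharper mechanism (yours or the paper's minimization) is needed.
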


\begin{proof}
    Note that with $\| \varrho \| = r \in (0, \sin \varphi)$, $\langle \vartheta
    + \varrho, \hat{e}_1 \rangle = \langle \vartheta, \hat{e}_1 \rangle +
    \langle \varrho, \hat{e}_1 \rangle \geq \sin \varphi - \| \varrho \| = \sin
    \varphi - r > 0$, thus
    \[ \sin \varphi' = \frac{\langle \vartheta + \varrho, \hat{e}_1 \rangle}{\|
       \vartheta + \varrho \|} > 0 \]
    Express the pertubation vector by $\varrho = - r' \cos (\varphi - \Delta)
    \hat{e}_1 + r' \sin (\varphi - \Delta) \hat{e}_2 + \sqrt{r^2 - \left[ {r'} 
    \right]^2} \hat{e}$, where $r' \in [0, r], \Delta \in (- \pi, \pi]$ and
    $\hat{e} \in \tmop{span} \{ \hat{e}_1, \hat{e}_2 \}^{\perp}, \| \hat{e} \| =
    1$
    \begin{eqnarray*}
      \langle \vartheta + \varrho, \hat{e}_1 \rangle & = & \langle \vartheta,
      \hat{e}_1 \rangle + \langle \varrho, \hat{e}_1 \rangle = \sin \varphi - r'
      \cos (\varphi - \Delta)\\
      \| \vartheta + \varrho \| & = & \left\| [\sin \varphi - r' \cos (\varphi -
      \Delta)] \hat{e}_1 + [\cos \varphi + r' \sin (\varphi - \Delta)] \hat{e}_2
      + \sqrt{r^2 - \left[ {r'}  \right]^2} \hat{e} \right\|
      = \sqrt{[1 + r^2] - 2 r' \sin \Delta}
    \end{eqnarray*}
    Hence, let $p : = \frac{r' | \sin \Delta |}{r}\in [0, 1], - r' \cos \Delta \geq -
    \sqrt{[r']^2 - [r p]^2} \geq - \sqrt{r^2 - [r p]^2} = - r \sqrt{1 - p^2}$
    and $p = \frac{r'}{r} | \sin \Delta | \leq 1$
    \begin{eqnarray*}
      \sin \varphi' 
      & = & \frac{[1 - r' \sin \Delta] \sin \varphi - r' \cos \Delta \cos
      \varphi}{\sqrt{[1 + r^2] - 2 r' \sin \Delta}}
      \geq \frac{[1 - r p] \sin \varphi - r \sqrt{1 - p^2} \cos
      \varphi}{\sqrt{[1 + r^2] - 2 r p}} \assign \psi (p)
    \end{eqnarray*}
    For $\psi (p) \assign \frac{[1 - r p] \sin \varphi - r \sqrt{1 - p^2} \cos
    \varphi}{\sqrt{[1 + r^2] - 2 r p}}$, by $\cos
    \left( \varphi + \left[ \frac{\pi}{2} - \arcsin p \right] \right) \leq \cos
    \varphi$,
    then $\cos \varphi - r \cos \left( \varphi + \left[ \frac{\pi}{2} - \arcsin
    p \right] \right) > 0$
    \begin{eqnarray*}
      \frac{\mathd}{\mathd p} \log \psi 
      & = & \frac{r \cdot \left\{ \cos \varphi - r \cos \left( \varphi + \left[
      \frac{\pi}{2} - \arcsin p \right] \right) \right\} (p - r)}{\sqrt{1 - p^2}
      \left\{ [1 - r p] \sin \varphi - r \sqrt{1 - p^2} \cos \varphi \right\}
      \cdot \{ [1 + r^2] - 2 r p \}}
    \end{eqnarray*}
    therefore, $\frac{\mathd}{\mathd p} \log \psi < 0, \forall p \in (0, r) ;
    \frac{\mathd}{\mathd p} \log \psi > 0, \forall p \in (r, 1)$, hence $\psi
    (p) \geq \psi (p) \mid_{p = r}$ for $p \in [0, 1]$
    \begin{eqnarray*}
      \sin \varphi' & \geq & \psi (p)
      \geq \psi (p) \mid_{p = r}
      \sqrt{1 - r^2} \sin \varphi - r \cos \varphi
      = \sin (\varphi - \arcsin r)
    \end{eqnarray*}
    Note that $r \in (0, \sin \varphi)$, that is $\frac{\pi}{2} > \varphi >
    \varphi - \arcsin r > 0$, and we show that
    \[ \varphi' \geq \varphi - \arcsin r \]
\end{proof}

\subsection{Convergence Guarantees for Finite-Sample EM Updates in Noiseless Setting}
\begin{theorem}[Proposition~\ref{prop:convergence_angle}: Convergence of Sub-Optimality Angle]
    In the noiseless setting, suppose $\varphi^0 \gtrsim
    \sqrt{\frac{\log \frac{1}{\delta}}{n}} 
    $, given a positive number \(\varepsilon \lesssim \frac{\log \frac{1}{\delta}}{n}\), we run Easy finite-sample EM for $T_1=\mathcal{O}\left( \log
    \frac{d}{\log \frac{1}{\delta}}\right)$ iterations followed by the standard finite-sample EM for at most \(T' 
    = \mathcal{O}\left( [\log\frac{n}{d}\wedge \log \frac{n}{\log \frac{1}{\delta}}]\vee \log[\log\frac{n}{\ln \frac{1}{\delta}}/ \log\frac{n}{d}] \vee [\log\frac{1}{\varepsilon}/\log\frac{n}{\log \frac{1}{\delta}}]\right) \) iterations
    with $n \gtrsim
        d \vee \log \frac{1}{\delta} 
    $ samples, then it satisfies
    \begin{equation*}
      \phi^T \leq \varepsilon,
    \end{equation*}
    with probability at least $1 - T \delta$, where $T:=T_1+T',\varphi^0 \assign
    \frac{\pi}{2} - \arccos \left| \frac{\langle \theta^0, \theta^{\ast}
    \rangle}{\| \theta^0 \| \cdot \| \theta^{\ast} \|} \right|$ and $\varphi^T
    \assign \frac{\pi}{2} - \arccos \left| \frac{\langle \theta^T, \theta^{\ast}
    \rangle}{\| \theta^T \| \cdot \| \theta^{\ast} \|} \right|$.
\end{theorem}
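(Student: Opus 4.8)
The plan is to track the sub-optimality angle $\varphi^t$ (equivalently $\phi^t = \pi - 2\varphi^t$) through a four-stage argument that couples the population-level recurrence of Section~\ref{sec:population} with the finite-sample statistical errors of Section~\ref{sec:finite}, using Lemma~\ref{lem:single_iter} as the bridge converting an additive perturbation of an EM update into an additive perturbation of the angle. Throughout I run each EM step on fresh samples (sample splitting), so the finite-sample update at iteration $t$ is an independent perturbation of the population update, $M_n(\theta^t,\nu^t) = M(\theta^t,\nu^t) + \varrho^t$ with $\|\varrho^t\|$ controlled by the appropriate error bound; a union bound over the at most $T$ iterations then yields the stated probability $1 - T\delta$. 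In every stage the recipe is the same: apply the population growth/contraction estimate, add the statistical error, and feed the result into Lemma~\ref{lem:single_iter} (checking its hypothesis $r < \sin\varphi^t$) to get a recurrence for $\varphi^t$ up to a perturbation of size $r$.

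In the \textbf{first stage} (Easy EM, $T_1 = \mathcal{O}(\log\frac{d}{\log\frac1\delta})$ iterations), I start from $\varphi^0 \gtrsim \sqrt{\log\frac1\delta/n}$ and combine the population linear-growth bound $\tan\varphi^{t+1} \ge 2\tan\varphi^t$ (Proposition~\ref{prop:linear_growth_angle}) with the projected error $\|P_{\theta,\theta^\ast}[M_n^{\tmop{easy}} - M]\|/\|\theta^\ast\| = \mathcal{O}(\sqrt{\log\frac1\delta/n}\vee\log\frac1\delta/n)$ of Proposition~\ref{prop:projected_error_regression}; since the population Easy-EM update already lies in $\mathrm{span}\{\theta,\theta^\ast\}$ (Corollary~\ref{cor:em_updates_noiseless}), only the projected error is relevant. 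Applying Lemma~\ref{lem:single_iter} with $r \asymp \sqrt{\log\frac1\delta/n}$ shows $\tan\varphi^t$ keeps roughly doubling while $\varphi^t$ is above the noise floor $\sqrt{\log\frac1\delta/n}$, so after $T_1 = \mathcal{O}\!\big(\log\frac{\sqrt{(d\vee\log\frac1\delta)/n}}{\sqrt{\log\frac1\delta/n}}\big) = \mathcal{O}(\log\frac{d}{\log\frac1\delta})$ steps I reach $\varphi^{T_1} \gtrsim \sqrt{(d\vee\log\frac1\delta)/n}$.

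In the \textbf{second and third stages} (standard EM) I first continue the geometric growth of $\tan\varphi^t$ via Proposition~\ref{prop:linear_growth_angle}, now perturbed by the full statistical error $\|M_n - M\|/\|\theta^\ast\| = \mathcal{O}(\sqrt{(d\vee\log\frac1\delta)/n})$ (Proposition~\ref{prop:statistical_error_regression}) through Lemma~\ref{lem:single_iter}, until $\varphi^{T_1+T_2} \ge 1$, i.e.\ $\phi^{T_1+T_2} \le \pi - 2 < 1.4$, which costs $T_2 = \mathcal{O}(\log\frac{n}{d}\wedge\log\frac{n}{\log\frac1\delta})$ steps; then I invoke the quadratic convergence $\phi^{t+1}/\pi \le (\phi^t/\pi)^2$ (Proposition~\ref{prop:quadratic_convergence_angle}), again perturbed at level $\sqrt{(d\vee\log\frac1\delta)/n}$, which drives $\phi^t$ down to its statistical floor $\phi^{T_1+T_2+T_3} \lesssim \sqrt{(d\vee\log\frac1\delta)/n}$ after $T_3 = \mathcal{O}(\log[\log\frac{n}{d}\wedge\log\frac{n}{\log\frac1\delta}])$ steps. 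For the \textbf{fourth stage} the key input is the statistical accuracy of the update toward the truth, $\|M_n(\theta,\nu)-\sgn(\rho)\theta^\ast\|/\|\theta^\ast\| = \mathcal{O}(\phi^2 \vee \phi^{3/2}\sqrt{d/n} \vee \phi\sqrt{\log\frac1\delta/n})$ (Proposition~\ref{prop:statistical_accuracy_em_updates}), which together with $\phi \asymp \|\theta-\sgn(\rho)\theta^\ast\|/\|\theta^\ast\|$ gives $\phi^{t+1} \lesssim [\phi^t]^2 \vee [\phi^t]^{3/2}\sqrt{d/n}\vee\phi^t\sqrt{\log\frac1\delta/n}$; one more step brings $\phi$ from $\sqrt{(d\vee\log\frac1\delta)/n}$ to $\lesssim(d\vee\log\frac1\delta)/n$, after which the dominant term is $[\phi^t]^{3/2}\sqrt{d/n}$ (order-$3/2$ convergence) when $n \lesssim d^2/\log^2\frac1\delta$ and $\phi^t \gtrsim \log\frac1\delta/d$, and the linear term $\phi^t\sqrt{\log\frac1\delta/n}$ otherwise, so $T_4 = \mathcal{O}\!\big(\log[\log\tfrac{n}{\log\frac1\delta}/\log\tfrac{n}{d}] \vee [\log\tfrac1\varepsilon/\log\tfrac{n}{\log\frac1\delta}]\big)$ extra steps achieve $\phi^T \le \varepsilon$ for any $\varepsilon \lesssim \log\frac1\delta/n$; summing, $T = T_1 + T'$ with $T' = T_2+T_3+T_4$ matches the stated order.

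The main obstacle will be the careful bookkeeping in the fourth stage: identifying which of the three terms in the statistical-accuracy bound dominates as $\phi^t$ shrinks, verifying that the resulting regime-dependent rate truly closes the gap from $(d\vee\log\frac1\delta)/n$ down to $\varepsilon$ in exactly $T_4$ iterations, and confirming that the $\mathcal{O}(\cdot)$ constants from the four stages compose cleanly. A secondary subtlety is ensuring the linear-growth argument of stages one and two is not stalled by the perturbation — that is, checking at each dyadic scale that the multiplicative gain $2\tan\varphi^t$ dominates the additive error, and that the hypothesis $r < \sin\varphi^t$ of Lemma~\ref{lem:single_iter} holds so every perturbed step is well-defined and the angle does not overshoot $\pi/2$.
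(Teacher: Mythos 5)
Your overall four-stage plan is essentially the paper's proof: Stage 2 (Lemma~\ref{lem:single_iter} plus Proposition~\ref{prop:statistical_error_regression} and the linear growth of $\tan\varphi$ until $\varphi\geq 1$), Stage 3 (perturbed quadratic convergence down to the $\sqrt{(d\vee\log\frac1\delta)/n}$ floor), and Stage 4 (the case split on $[\phi^t]^2\vee[\phi^t]^{3/2}\sqrt{d/n}\vee\phi^t\sqrt{\log\frac1\delta/n}$ from Proposition~\ref{prop:statistical_accuracy_em_updates}, giving order-$3/2$ versus linear decay and the stated $T_2,T_3,T_4$) all match the paper's argument, including the union bound giving $1-T\delta$.

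The genuine gap is in Stage 1. You propose to apply Lemma~\ref{lem:single_iter} with $r\asymp\sqrt{\log\frac1\delta/n}$ on the grounds that the population update lies in $\mathrm{span}\{\theta,\theta^\ast\}$, so ``only the projected error is relevant.'' But the lemma takes the \emph{full} perturbation vector $\varrho\in\mathbb{R}^d$ and requires $\|\varrho\|=r<\sin\varphi$; the finite-sample Easy EM update carries an out-of-plane error component of order $\sqrt{d/n}$ (this is exactly why the full statistical error in Proposition~\ref{prop:statistical_error_regression} is $\sqrt{(d\vee\log\frac1\delta)/n}$ rather than $\sqrt{\log\frac1\delta/n}$), and in Stage 1 the angle can be as small as $\varphi^t\asymp\sqrt{\log\frac1\delta/n}\ll\sqrt{d/n}$, so the hypothesis $r<\sin\varphi^t$ fails for the true perturbation and the lemma cannot be invoked with the projected error alone. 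The out-of-plane error is not irrelevant: it inflates $\|\theta^{t+1}\|$ and hence shrinks $\sin\varphi^{t+1}$. What saves the argument — and what the paper does instead of using Lemma~\ref{lem:single_iter} here — is a direct decomposition of $\sin\varphi^{t+1}=\langle\theta^{t+1},\hat e_1\rangle/\|\theta^{t+1}\|$: the numerator is perturbed only by the $\hat e_1$-component of the error, which is controlled by the projected-error bound at scale $\sqrt{\log\frac1\delta/n}$, while the full error enters only the denominator, where it is absorbed as a constant multiplicative loss against $\|\bar\theta^{t+1}\|\geq\frac{2}{\pi}$; one then checks that the population gain (a factor like $7/5$ after accounting for the norm ratio) strictly beats this constant-factor loss, yielding $\sin\varphi^{t+1}\geq(1+c)\sin\varphi^t-C\sqrt{\log\frac1\delta/n}$. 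Your sketch needs either this numerator/denominator argument or a refined perturbation lemma that treats in-plane and out-of-plane components separately; as written, Stage 1 does not go through, and with it the claimed $T_1=\mathcal{O}(\log\frac{d}{\log\frac1\delta})$ count.
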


\begin{proof}
    We provide a new concise and rigorous proof for the Proposition for convergence of angle, which is reproduced here for completeness.
    The other old lengthy proof of this Proposition for convergence of angle can be found on pages 45-50, Appendix E of \cite{luo24cycloid}.
    We divide the proof into four stages: 
    
    (1) In the first stage, we show that after running the Easy EM for at most \(T_1=\mathcal{O}\left( \log \frac{d}{\log \frac{1}{\delta}}\right)\) iterations at the finite-sample level, given the initial sub-optimality angle \(\varphi^0 \gtrsim \sqrt{\frac{\log \frac{1}{\delta}}{n}}\),
    the sub-optimality angle satisfies \(\varphi^{T_1} \gtrsim \sqrt{\frac{d\vee\log \frac{1}{\delta}}{n}}\).

    (2) In the second stage, we show that after running the standard EM for at most \(T_2 =\mathcal{O} \left(
    \log \frac{n}{d} 
      \wedge \log \frac{n}{\log \frac{1}{\delta}} \right)\) iterations at the finite-sample level, given that \(\varphi^{T_1} \gtrsim \sqrt{\frac{d\vee\log \frac{1}{\delta}}{n}}\),
    the sub-optimality angle satisfies \(\phi^{T_1+T_2} \lesssim 1\).

    (3) In the third stage, we show that after running the standard EM for at most \(T_3 =\mathcal{O} \left(
    \log[\log \frac{n}{d} 
      \wedge \log \frac{n}{\log \frac{1}{\delta}}] \right)\) iterations at the finite-sample level, given that \(\phi^{T_1+T_2} \lesssim 1\),
    the sub-optimality angle satisfies \(\phi^{T_1+T_2+T_3} \lesssim \sqrt{\frac{d\vee\log \frac{1}{\delta}}{n}}\).

    (4) In the fourth stage, we show that after running the standard EM for at most \(T_4 = \mathcal{O} \left( 
      \log [\log\frac{n}{\ln \frac{1}{\delta}}/ \log\frac{n}{d}]
      \vee [\log\frac{1}{\varepsilon}/\log\frac{n}{\log \frac{1}{\delta}}]\right)\) iterations at the finite-sample level, given that \(\phi^{T_1+T_2+T_3} \lesssim \sqrt{\frac{d\vee\log \frac{1}{\delta}}{n}}\),
    the sub-optimality angle satisfies \(\phi^{T_1+T_2+T_3+T_4} \leq \varepsilon\).

    We denote \(\vec{e}_1 = \theta^\ast/\|\theta^\ast\|\) as the unit vector in the direction of \(\theta^\ast\), 
    also define \(\varphi^t = \frac{\pi}{2} - \arccos \left| \frac{\langle \theta^t, \theta^\ast \rangle}{\| \theta^t \| \cdot \| \theta^\ast \|} \right|, \phi^t = 2 \arccos \left| \frac{\langle \theta^t, \theta^\ast \rangle}{\| \theta^t \| \cdot \| \theta^\ast \|} \right|\) as the sub-optimality angles. 
    Also, we denote \(\bar{\theta}^t = M(\theta^{t-1}, \nu^{t-1})\) and \(\bar{\varphi}^t := \frac{\pi}{2} - \arccos \left| \frac{\langle \bar{\theta}^t, \theta^\ast \rangle}{\| \bar{\theta}^t \| \cdot \| \theta^\ast \|} \right|, \bar{\phi}^t = 2 \arccos \left| \frac{\langle \bar{\theta}^t, \theta^\ast \rangle}{\| \bar{\theta}^t \| \cdot \| \theta^\ast \|} \right|\) as the sub-optimality angles of \(\bar{\theta}^t\) for the analysis of population level.
    Further, we denote \(c_{\text{proj}}, c_{\text{stat}}\) as the the universal constants for the projected error and the statistical error
    such that \(\|P_{\theta, \theta^\ast}[M_n^{\text{easy}}(\theta, \nu)-M(\theta, \nu)]\|/\|\theta^\ast\| \leq c_{\text{proj}}\times \sqrt{\frac{\log \frac{1}{\delta}}{n}}\) and \(\|M_n^{\text{easy}}(\theta, \nu)-M(\theta, \nu)\|/\|\theta^\ast\|, \|M_n(\theta, \nu)-M(\theta, \nu)\|/\|\theta^\ast\| \leq c_{\text{stat}}\times \sqrt{\frac{d\vee\log \frac{1}{\delta}}{n}}\)
    with \(n \gtrsim d \vee \log \frac{1}{\delta}\) large enough sample size.
    
    \textbf{Stage 1: run Easy EM from \(\varphi^0 \gtrsim \sqrt{\frac{\log \frac{1}{\delta}}{n}}\) to \(\varphi^{T_1} \gtrsim \sqrt{\frac{d\vee\log \frac{1}{\delta}}{n}}\)}

    Let's select sample number \(n \gtrsim d \vee \log \frac{1}{\delta}\) large enough
    such that assumption \(\varphi^t < C_1 \sqrt{\frac{d \vee\log \frac{1}{\delta}}{n}} < \frac{1}{20}\) holds for all \(t \in [0, T_1)\),
    and the statistical error of \(\|\theta^{t} - \bar{\theta}^t\|/\|\theta^\ast\| \leq c_{\text{stat}}\times \sqrt{\frac{d\vee\log \frac{1}{\delta}}{n}} < \frac{2}{\pi}\times\frac{1}{20}\) is also small enough.
    then we can establish the lower bound for the projection of \(\bar{\theta}^{t+1} =M(\theta^t, \nu^t)\) onto \(\vec{e}_1\), which is denoted by 
    \begin{eqnarray*}
      \frac{\langle \bar{\theta}^{t+1}, \hat{e}_1 \rangle}{\langle \theta^t, \hat{e}_1 \rangle}
      = \frac{\|M(\theta^t, \nu^t)\|}{\| \bar{\theta}^{t} +  (\theta^t - \bar{\theta}^t) \|}\times \frac{\tan \bar{\varphi}^{t+1}}{\tan \varphi^t} \times \frac{\cos \bar{\varphi}^{t+1}}{\cos \varphi^{t}}
      \geq \frac{2/\pi}{(2/\pi)(\frac{1}{20}+\cos \frac{1}{20})+(2/\pi)\times\frac{1}{20}}\times \frac{1+ \sqrt{5}}{2} \times \frac{1}{\tan \frac{1}{20} + 1} > \frac{7}{5}
    \end{eqnarray*}
    where the first inequality is by the fact the distance from the point on the cycloid trajectory to the origin is increasing as the sub-optimality angle \(\varphi^t\) is increasing, 
    and its minimum value is \(2/\pi\), and the second inequality is due to the Proposition for the linear growth of \(\tan \varphi^t\),
    and the last inequality is from the proof of the Proposition of the recurrence relation.
    Without loss of generality, we assume \(\langle \theta^0, \hat{e}_1 \rangle >0\), then by the assumption \(\varphi^0 \gtrsim \sqrt{\frac{\log \frac{1}{\delta}}{n}}\), we have \(\langle \theta^t, \hat{e}_1 \rangle >0\) for all \(t\), then
    \begin{eqnarray*}
      \sin \varphi^{t+1} = \frac{\langle \bar{\theta}^{t+1}, \hat{e}_1 \rangle - \langle \bar{\theta}^{t+1}-\theta^{t+1}, \hat{e}_1 \rangle}{\| \bar{\theta}^{t+1} + (\theta^{t+1}-\bar{\theta}^{t+1}) \|}
      \geq \frac{\frac{7}{5}\langle \theta^t, \hat{e}_1 \rangle/\|\theta^\ast\| - c_{\text{proj}}\times \sqrt{\frac{\log \frac{1}{\delta}}{n}}}{\|\bar{\theta}^{t+1} \|/\|\theta^\ast\| + \frac{2}{\pi}\times\frac{1}{20}}
      \geq \frac{\frac{7}{5}\times \frac{19}{20}}{\frac{1}{10}+\cos \frac{1}{20}} \sin \varphi^t - \frac{c_{\text{proj}}}{\frac{1}{10}+\cos \frac{1}{20}}\times \sqrt{\frac{\log \frac{1}{\delta}}{n}}
    \end{eqnarray*}
    where the second inequality is due the following fact and noting that \(\sin \varphi^t = \frac{\langle \theta^t, \hat{e}_1 \rangle}{\|\theta^t\|}\):
    \begin{eqnarray*}
    \frac{2}{\pi} \leq
    \|\bar{\theta}^{t+1} \|/\|\theta^\ast\| \leq \frac{2}{\pi} \times \left(\frac{1}{20} + \cos \frac{1}{20}\right), \quad
    \|\theta^t\|/\|\theta^\ast\| \geq \|\bar{\theta}^t\|/\|\theta^\ast\| - \|\theta^t - \bar{\theta}^t\|/\|\theta^\ast\|
    \geq \frac{2}{\pi} \times \left(1- \frac{1}{20}\right)
    \end{eqnarray*}
    Suppose \(\varphi^0 \geq 45 c_{\text{proj}} \sqrt{\frac{\log \frac{1}{\delta}}{n}}\) by \(\varphi^0 \gtrsim \sqrt{\frac{\log \frac{1}{\delta}}{n}}\), 
    then \(\sin \varphi^0 - 30 c_{\text{proj}} \sqrt{\frac{\log \frac{1}{\delta}}{n}} \geq 10 c_{\text{proj}} \sqrt{\frac{\log \frac{1}{\delta}}{n}}\), the linear growth is shown.
    \begin{eqnarray*}
    \sin \varphi^{t+1} \geq (1+\frac{1}{30}) \sin \varphi^t - c_{\text{proj}}\times \sqrt{\frac{\log \frac{1}{\delta}}{n}}
    \implies \sin \varphi^{t+1} - 30c_{\text{proj}}\times \sqrt{\frac{\log \frac{1}{\delta}}{n}} \geq (1+\frac{1}{30}) \left(\sin \varphi^t - 30c_{\text{proj}}\times \sqrt{\frac{\log \frac{1}{\delta}}{n}}\right)
    \end{eqnarray*}
    Since the population EM update only depends on the sub-optimality angle \(\varphi^t\), the above inequality for \(t>0\) also holds for \(t=0\) without loss of generality.
    Therefore, by the induction, we have shown that after running the Easy EM for at most \(T_1\leq \lceil \frac{\ln(C_1/(10c_{\text{proj}}))+\frac{1}{2}\max(0, \ln(d/\log \frac{1}{\delta}))}{\ln(1+1/30)} \rceil_+ \asymp \log \frac{d}{\log \frac{1}{\delta}}\) iterations,
    we have 
    \(\varphi^{T_1} \geq \sin \varphi^{T_1} \geq C_1 \sqrt{\frac{d \vee\log \frac{1}{\delta}}{n}}\) with some universal constant \(C_1 > 0\).

    \textbf{Stage 2: run Standard EM from \(\varphi^{T_1} \gtrsim \sqrt{\frac{d\vee\log \frac{1}{\delta}}{n}}\) to \(\phi^{T_1+T_2} \lesssim 1\)}

    Given that \(\varphi^{T_1} \geq C_1 \sqrt{\frac{d \vee\log \frac{1}{\delta}}{n}}\), for some positive constant \(C_1 > 0\), here we let \(C_1 = 20 c_{\text{stat}}\),
    we continue to run the standard EM with the initial sub-optimality angle \(\varphi^{T_1} \geq 20 c_{\text{stat}} \sqrt{\frac{d \vee\log \frac{1}{\delta}}{n}} > c_{\text{stat}} \sqrt{\frac{d \vee\log \frac{1}{\delta}}{n}}\).

    For the ease of analysis, we introduce the following notations: \(r:= \frac{\pi}{2}c_{\text{stat}} \sqrt{\frac{d \vee\log \frac{1}{\delta}}{n}} <\frac{1}{20}\) and \(s:= \frac{r}{\sqrt{1-r^2}} < (1+\frac{1}{100})r\), namely \(r = \frac{s}{\sqrt{1+s^2}}\).
    Then by the previous establised Lemma for single iteration of pertubation and using the fact that \(\|\theta^{t+1} - \bar{\theta}^{t+1}\| \leq c_{\text{stat}} \sqrt{\frac{d \vee\log \frac{1}{\delta}}{n}}\) and \(\|\bar{\theta}^{t+1}\| \geq \frac{2}{\pi}\), we have shown that
    \[
    \varphi^{t+1} \geq \bar{\varphi}^{t+1} - \arcsin\left(\frac{\|\theta^{t+1} - \bar{\theta}^{t+1}\|}{\|\bar{\theta}^{t+1}\|} \right)
    \geq \bar{\varphi}^{t+1} - \arcsin\left(\frac{c_{\text{stat}} \sqrt{\frac{d \vee\log \frac{1}{\delta}}{n}}}{2/\pi}\right)
    = \bar{\varphi}^{t+1} - \arcsin r
    \]
    Applying the linear growth of \(\tan\varphi\) in population level in the proven Proposition, we have \(\tan \bar{\varphi}^{t+1} \geq 2 \tan \varphi^t\). 
    and noting that \(\tan(x-y) = \frac{\tan x - \tan y}{1+\tan x \tan y}, \tan(\arcsin r) =s\), 
    \begin{eqnarray*}
      \tan \varphi^{t+1} \geq \tan(\bar{\varphi}^{t+1} - \arcsin r) \geq \frac{2 \tan \varphi^t - s}{1+2 \tan \varphi^t \cdot s}
      = (1+\frac{1}{10})\tan \varphi^t - \frac{s}{1+2s\cdot \tan \varphi^t}
      + \frac{\tan \varphi^t(2 - 1.1 -1.1 s \tan \varphi^t)}{1+2s\cdot \tan \varphi^t}      
    \end{eqnarray*}
    In particular, when \(\varphi^t \leq 1\), we have \(2 - 1.1 -2.2 s \tan \varphi^t >0\) for \(s < (1+\frac{1}{100})r < (1+\frac{1}{100}) \frac{1}{20}\), therefore,
    \begin{eqnarray*}
      \tan \varphi^{t+1} 
      \geq (1+\frac{1}{10})\tan \varphi^t - s
      \geq (1+\frac{1}{10})\tan \varphi^t - (1+\frac{1}{100})r
      \geq (1+\frac{1}{10})\tan \varphi^t - \frac{16}{10} c_{\text{stat}} \sqrt{\frac{d \vee\log \frac{1}{\delta}}{n}}
    \end{eqnarray*}
    By the initial assumption \(\varphi^{T_1} \geq 20 c_{\text{stat}} \sqrt{\frac{d \vee\log \frac{1}{\delta}}{n}} > c_{\text{stat}} \sqrt{\frac{d \vee\log \frac{1}{\delta}}{n}}\), 
    as long as \(\varphi^t \leq 1\), we have
    \[
    \tan \varphi^{t+1} - 16 c_{\text{stat}} \sqrt{\frac{d \vee\log \frac{1}{\delta}}{n}} \geq 
    (1+\frac{1}{10})\left(\tan \varphi^{t} - 16 c_{\text{stat}} \sqrt{\frac{d \vee\log \frac{1}{\delta}}{n}}\right) > 0
    \]
    Therefore, by the induction, we have shown that after running the standard EM for at most \(T_2\leq \left\lceil \frac{\ln(\frac{\tan 1}{4c_{\text{stat}}})+\frac{1}{2}\min(\ln\frac{n}{d},\ln\frac{n}{\log \frac{1}{\delta}})}{\ln(1+1/10)} \right\rceil_+ \asymp \log\frac{n}{d}\wedge \log \frac{n}{\log \frac{1}{\delta}}\) iterations,
    we have 
    \(\varphi^{T_1+T_2} \geq 1\), and therefore \(\phi^{T_1+T_2} = \pi - 2\varphi^{T_1+T_2} < 1.2\).

    \textbf{Stage 3: run Standard EM from \(\phi^{T_1+T_2} \lesssim 1\) to \(\phi^{T_1+T_2+T_3} \lesssim \sqrt{\frac{d\vee\log \frac{1}{\delta}}{n}}\)}

    Given that \(\phi^{T_1+T_2} < 1.2 < 1.4\), we continue to run the standard EM with quadratic convergence rate.
    Recall the established inequality \(\varphi^{t+1} \geq \bar{\varphi}^{t+1} - \arcsin r\), then by the relation \(\phi^{t+1} = \pi - 2 \varphi^{t+1}, \bar{\phi}^{t+1} = \pi - 2 \bar{\varphi}^{t+1}\)
    and noting that \(\arcsin r \leq \frac{\pi}{2}r\).
    \[
    \phi^{t+1} \leq \bar{\phi}^{t+1} + 2 \arcsin r \leq \bar{\phi}^{t+1} + \pi r
    \]
    Applying Proposition for the quadratic convergence rate of \(\phi^t\) in population level, namely 
    \(\bar{\phi}^{t+1}/\pi \leq [\phi^t/\pi]^2\) when \(\phi^t \leq 1.4\).
    \[
      \frac{\phi^{t+1}}{\pi} \leq \left[ \frac{\phi^t}{\pi} \right]^2 + r
    \]
    In one iteration, \(\phi^{T_1+T_2+1}/\pi \leq (1.2/\pi)^2 + r < (1.2/\pi)^2 + \frac{1}{20} < \frac{1}{5}\), 
    then we have \(\phi^t < \frac{\pi}{5} < \frac{\pi}{4}\) in the following iterations.
    \[
    \left[ \frac{\phi^t}{\pi} - 2r \right] \leq \left[ \frac{\phi^t}{\pi} \right]^2 - r
    = \left[ \frac{\phi^t}{\pi} - 2r \right]^2 - 4r^2 - \frac{4r}{\pi}\left(\frac{\pi}{4}-\phi^t\right)
    < \left[ \frac{\phi^t}{\pi} - 2r \right]^2
    \]
    Therefore, by the above inequality, we have shown that after running the standard EM for at most \(T_3\leq 1+\left\lceil \frac{\ln\left(\frac{\ln(1/r)}{\ln 5} \right)}{\ln 2} \right\rceil_+ 
    = 1+\left\lceil \frac{\ln[\frac{1}{2}\min(\ln(n/d),\ln(n/\log \frac{1}{\delta}))]-\ln(\frac{\pi}{2} c_{\text{stat}})-\ln\ln 5}{\ln 2} \right\rceil_+ 
    \asymp \log[\log\frac{n}{d}\wedge \log \frac{n}{\log \frac{1}{\delta}}]\) iterations,
    we have \(\phi^{T_1+T_2+T_3} \leq 3r = \frac{3\pi}{2}c_{\text{stat}} \sqrt{\frac{d \vee\log \frac{1}{\delta}}{n}}\).
    
    \textbf{Stage 4: run Standard EM from \(\phi^{T_1+T_2+T_3} \lesssim \sqrt{\frac{d\vee\log \frac{1}{\delta}}{n}}\) to \(\phi^{T_1+T_2+T_3+T_4} \lesssim\frac{\log \frac{1}{\delta}}{n}\)}
    
    By the previous establised Proposition of statistical accuracy, we have the following inequality:
    \[
    \phi^{t+1} \leq \pi \sin \frac{\phi^{t+1}}{2} \leq \pi\frac{\|\theta^{t+1} - \sgn(\rho^{t+1}) \theta^{\ast} \|}{\|\theta^{\ast} \|} \lesssim [\phi^t]^2 \vee [\phi^t]^{\frac{3}{2}} \sqrt{\frac{d}{n}} \vee \phi^t \sqrt{\frac{\log \frac{1}{\delta}}{n}}
    \]
    With the assumption that \(n\gtrsim d \vee \log \frac{1}{\delta}\) and the fact that \(\phi^{T_1+T_2+T_3} \lesssim \sqrt{\frac{d\vee \log \frac{1}{\delta}}{n}} \lesssim 1\), then for \(t \geq T_1+T_2+T_3+1\), we have
    \[
    \phi^{t} \leq \phi^{T_1+T_2+T_3+1} \lesssim \sqrt{\frac{d\vee \log \frac{1}{\delta}}{n}}\phi^{T_1+T_2+T_3} \lesssim \frac{d\vee \log \frac{1}{\delta}}{n}, \quad
    \phi^t \lesssim [\phi^t]^{\frac{3}{2}} \sqrt{\frac{d}{n}} \vee \phi^t \sqrt{\frac{\log \frac{1}{\delta}}{n}} 
    \]
    Under the assumption that \(n\gtrsim d\vee \log \frac{1}{\delta}\), if (i) \(n\lesssim d^2/\log \frac{1}{\delta}\)(therefore, \(d\vee \log \frac{1}{\delta} \asymp d\) must hold), then only when \(\phi^t \gtrsim \log \frac{1}{\delta}/d\):
    \[ 
    \phi^{t+1} \lesssim [\phi^t]^{\frac{3}{2}} \sqrt{\frac{d}{n}} \vee \phi^t \sqrt{\frac{\log \frac{1}{\delta}}{n}} 
    \lesssim [\phi^t]^{\frac{3}{2}} \sqrt{\frac{d}{n}} \implies
    \phi^{[T_1+T_2+T_3+1] + \Theta\left(\left\lceil \ln\left(\ln \frac{n}{\log \frac{1}{\delta}} /\left(2 \ln \frac{n}{d}\right)\right)/\ln\frac{3}{2}\right\rceil\right)} \lesssim \frac{\log \frac{1}{\delta}}{d}
    \]
    then, for \(t \geq T_1+T_2+T_3+1 + \Theta\left(\left\lceil \ln\left(\ln \frac{n}{\ln \frac{1}{\delta}} /\left(2 \ln \frac{n}{d}\right)\right)/\ln\frac{3}{2}\right\rceil\right)\), we have
    \[
    \phi^{t+1} \lesssim \phi^t \sqrt{\frac{\log \frac{1}{\delta}}{n}} \implies
    \phi^{[T_1+T_2+T_3+1] + \Theta\left(\left\lceil \ln\left(\ln \frac{n}{\ln \frac{1}{\delta}} /\left(2 \ln \frac{n}{d}\right)\right)/\ln\frac{3}{2}\right\rceil\right) + \Theta\left(\left\lceil 2[\ln\frac{1}{\varepsilon}-\ln\frac{d}{\ln\frac{1}{\delta}}]/\ln\frac{n}{\log \frac{1}{\delta}}\right\rceil\right)} \leq \varepsilon
    \]
    Otherwise, if (ii) \(n\gtrsim d^2/\log \frac{1}{\delta}\), then for \(t \geq T_1+T_2+T_3+1\), we have
    \[
    \phi^{t+1} \lesssim \phi^t \sqrt{\frac{\log \frac{1}{\delta}}{n}} \implies
    \phi^{[T_1+T_2+T_3+1] + \Theta\left(\left\lceil 2[\ln\frac{1}{\varepsilon}-\ln\frac{n}{\ln\frac{1}{\delta}}\wedge \ln\frac{n}{d}]/\ln\frac{n}{\ln \frac{1}{\delta}}\right\rceil\right)} \leq \varepsilon
    \]
    Combining the above two case (i) and (ii), we have shown that after running the standard EM for at most \(T_4 \lesssim \log [\log\frac{n}{\ln \frac{1}{\delta}}/ \log\frac{n}{d}] \vee [\log\frac{1}{\varepsilon}/\log\frac{n}{\log \frac{1}{\delta}}]\) iterations,
    we have \(\phi^{T_1+T_2+T_3+T_4} \leq \varepsilon\).

    \textbf{Summary: number of iterations} Therefore, after running Easy EM for at most \(T_1 =\mathcal{O}\left( \log \frac{d}{\log \frac{1}{\delta}}\right)\) iterations,
    and then running Standard EM for at most \(T':= T_2+T_3+T_4=\mathcal{O}\left( \log\frac{n}{d}\wedge \log \frac{n}{\log \frac{1}{\delta}}
    +\log[\log\frac{n}{d}\wedge \log \frac{n}{\log \frac{1}{\delta}}] +\log [\log\frac{n}{\ln \frac{1}{\delta}}/ \log\frac{n}{d}] \vee [\log\frac{1}{\varepsilon}/\log\frac{n}{\log \frac{1}{\delta}}]\right)
    = \mathcal{O}\left( [\log\frac{n}{d}\wedge \log \frac{n}{\log \frac{1}{\delta}}]\vee \log[\log\frac{n}{\ln \frac{1}{\delta}}/ \log\frac{n}{d}] \vee [\log\frac{1}{\varepsilon}/\log\frac{n}{\log \frac{1}{\delta}}]\right) \) iterations,
    we have \(\phi^T = \phi^{T_1+T_2+T_3+T_4} \leq \varepsilon\) with probability at least \(1 - T\delta\), where \(T := T_1+T' = T_1 + T_2 +T_3 + T_4\).
\end{proof}

\begin{theorem}[Theorem~\ref{theorem:finite_sample_convergence}: Finite-Sample Level Convergence]
    In the noiseless setting, suppose any initial mixing weights \(\pi^0\) and any initial regression parameters \(\theta^0 \in \mathbb{R}^d\) ensuring that \(\varphi^0 \gtrsim \sqrt{\frac{\log
      \frac{1}{\delta}}{n}} \). Given a positive number \(\varepsilon \lesssim \frac{\log \frac{1}{\delta}}{n}\), we
      run finite-sample Easy EM for at most \(T_1=\mathcal{O}\left( \log \frac{d}{\log \frac{1}{\delta}}\right)\) 
      iterations followed by the finite-sample standard EM for at most \(T' =\mathcal{O} \left(
      [\log \frac{n}{d} \wedge \log \frac{n}{\log \frac{1}{\delta}}]\vee \log[\log\frac{n}{\ln \frac{1}{\delta}}/ \log\frac{n}{d}] \vee [\log\frac{1}{\varepsilon}/\log\frac{n}{\log \frac{1}{\delta}}]\right)\)
      iterations with \(n \gtrsim d \vee \log \frac{1}{\delta} \) samples, then
      \begin{equation*}
        \begin{aligned}
        \frac{\| \theta^{T + 1} - \mathrm{sgn}(\rho^{T+1}) \theta^{\ast} \|}{\| \theta^{\ast} \|} 
        = \mathcal{O}
        \left( \varepsilon \sqrt{\frac{\log \frac{1}{\delta}}{n}} \right),\quad
        \left\| \pi^{T + 1} - \bar{\pi}^{\ast} \right\|_1 
        =  
        \mathcal{O}\left(  \varepsilon \left\| \frac{1}{2} - \pi^{\ast}\right\|_1 
        \vee \left[\frac{\log\frac{1}{\delta}/n}{\log\left(1+\frac{\log \frac{1}{\delta}/n}{p(\varepsilon, \pi^\ast)}\right)}
         \wedge \sqrt{\frac{\log \frac{1}{\delta}}{n}} \right]\right),
        \end{aligned}
      \end{equation*}
      with probability at least \(1 - T\delta\), where \(T:=T_1+T',\varphi^0 \assign
      \frac{\pi}{2} - \arccos \left| \frac{\langle \theta^0, \theta^{\ast}
      \rangle}{\| \theta^0 \| \cdot \| \theta^{\ast} \|} \right|, 
      \rho^{T+1}\assign \frac{\langle \theta^{T+1}, \theta^{\ast}
      \rangle}{\| \theta^{T+1} \| \cdot \| \theta^{\ast} \|},
      \bar{\pi}^{\ast} \assign \frac{\mathds{1}}{2} + \tmop{sgn} (\rho^{T+1}) 
    (\pi^{\ast} - \frac{\mathds{1}}{2})\),
    and \(p(\varepsilon, \pi^\ast) := \varepsilon \left\| \pi^\ast - \frac{\mathds{1}}{2} \right\|_1 + \min(\pi^\ast(1), \pi^\ast(2))\).
\end{theorem}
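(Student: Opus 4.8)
\emph{Proof proposal.} The plan is to obtain this theorem as a direct corollary of two earlier results: the convergence guarantee for the sub-optimality angle (Proposition~\ref{prop:convergence_angle}) and the bounds relating the statistical accuracy of the EM updates to the sub-optimality angle (Proposition~\ref{prop:statistical_accuracy_em_updates}). First I would invoke Proposition~\ref{prop:convergence_angle}: under the stated sample-size requirement \(n \gtrsim d \vee \log\frac{1}{\delta}\), the initialization condition \(\varphi^0 \gtrsim \sqrt{\log(1/\delta)/n}\), and \(\varepsilon \lesssim \log(1/\delta)/n\), running finite-sample Easy EM for \(T_1 = \mathcal{O}(\log\frac{d}{\log(1/\delta)})\) iterations followed by finite-sample standard EM for \(T'\) iterations yields \(\phi^T \leq \varepsilon\) with probability at least \(1 - T\delta\), where \(T := T_1 + T'\). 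All subsequent steps are carried out on this high-probability event.

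Next, conditioned on \(\phi^T \le \varepsilon\), I would apply the regression-parameter bound of Proposition~\ref{prop:statistical_accuracy_em_updates} to the \((T+1)\)-st update (with \(\phi = \phi^T\)), giving \(\|\theta^{T+1} - \sgn(\rho^{T+1})\theta^\ast\|/\|\theta^\ast\| \lesssim [\phi^T]^2 \vee [\phi^T]^{3/2}\sqrt{d/n} \vee \phi^T\sqrt{\log(1/\delta)/n} \lesssim \varepsilon^2 \vee \varepsilon^{3/2}\sqrt{d/n} \vee \varepsilon\sqrt{\log(1/\delta)/n}\). I then show the third term dominates: since \(n \gtrsim \log\frac{1}{\delta}\) we have \(\varepsilon \lesssim \log\frac{1}{\delta}/n \le \sqrt{\log(1/\delta)/n}\), so \(\varepsilon^2 \lesssim \varepsilon\sqrt{\log(1/\delta)/n}\); and since \(n \gtrsim d\) forces \(\varepsilon \lesssim \log\frac{1}{\delta}/d\), equivalently \(\varepsilon^{1/2}\sqrt{d/n} \lesssim \sqrt{\log(1/\delta)/n}\), we also get \(\varepsilon^{3/2}\sqrt{d/n} \lesssim \varepsilon\sqrt{\log(1/\delta)/n}\). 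Hence the bound collapses to \(\mathcal{O}(\varepsilon\sqrt{\log(1/\delta)/n})\), the first claimed estimate.

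For the mixing weights I would use that \(\tanh\nu^{T+1} = N_n(\theta^T,\nu^T)\) is the imbalance of \(\pi^{T+1}\) while \(\sgn(\rho^{T+1})\tanh\nu^\ast\) is the imbalance of \(\bar\pi^\ast\), so \(\|\pi^{T+1} - \bar\pi^\ast\|_1 = |N_n(\theta^T,\nu^T) - \sgn(\rho^{T+1})\tanh\nu^\ast|\), and then invoke the second bound of Proposition~\ref{prop:statistical_accuracy_em_updates}. Substituting \(\phi^T \le \varepsilon\) bounds the first term by \(\varepsilon\|\pi^\ast - \frac{\mathds{1}}{2}\|_1\). For the bracketed term, note \(p := \frac{\phi^T}{2\pi}\|\pi^\ast-\frac{\mathds{1}}{2}\|_1 + \min(\pi^\ast(1),\pi^\ast(2)) \le p(\varepsilon,\pi^\ast)\), and the map \(p \mapsto \tfrac{x}{\log(1+x/p)}\) is nondecreasing in \(p\) (a larger \(p\) shrinks the logarithm), so replacing \(p\) by \(p(\varepsilon,\pi^\ast)\) only increases the bound; together with \(a\wedge b\) being monotone this yields exactly the stated expression.

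The one point I expect to require care is the sign-consistency identity \(\sgn(\rho^T) = \sgn(\rho^{T+1})\), needed so that the \(\sgn(\rho)\) in Proposition~\ref{prop:statistical_accuracy_em_updates} (evaluated at \(\theta = \theta^T\)) can be matched with \(\sgn(\rho^{T+1})\) in the statement; this is inherited from the proof of Proposition~\ref{prop:convergence_angle}, where once \(\phi^t\) is small the EM iterate stays on the same side of the orthogonal complement of \(\theta^\ast\). Finally I would collect the failure probabilities — the event \(\{\phi^T \le \varepsilon\}\) and the statistical-accuracy bound for the final update each fail with probability \(\mathcal{O}(\delta)\) per iteration — and a union bound over the iterations gives probability at least \(1 - T\delta\), completing the argument.
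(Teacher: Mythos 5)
Your proposal is correct and follows essentially the same route as the paper: invoke Proposition~\ref{prop:convergence_angle} to get \(\phi^T \le \varepsilon\) with probability \(1-T\delta\), then apply Proposition~\ref{prop:statistical_accuracy_em_updates} with \(\phi=\phi^T\), using \(p^T \le p(\varepsilon,\pi^\ast)\) (and monotonicity of \(p\mapsto x/\log(1+x/p)\)) for the mixing weights and the dominance of the \(\varepsilon\sqrt{\log(1/\delta)/n}\) term for the regression parameters. Your explicit verification of the term dominance and the sign-consistency remark are details the paper leaves implicit, but they do not change the argument.
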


\begin{proof}
  By Proposition for convergence of angle, we have shown that \(\phi^T \leq \varepsilon \lesssim \frac{\log \frac{1}{\delta}}{n}\) after \(T\) iterations with \(n \gtrsim d \vee \log \frac{1}{\delta}\) samples.
  Then, by using Proposition for statistical accuracy of EM updates, and 
  noting that \(p^T := \frac{\phi^T}{2\pi} \|\pi^\ast - \frac{\mathds{1}}{2} \|_1 + \min(\pi^\ast(1), \pi^\ast(2))\leq \varepsilon \|\pi^\ast - \frac{\mathds{1}}{2} \|_1 + \min(\pi^\ast(1), \pi^\ast(2)) \equiv p(\varepsilon, \pi^\ast)\):
  \begin{eqnarray*}
    \frac{\left\| \theta^{T+1} - \mathrm{sgn}(\rho^{T+1}) \theta^{\ast} \right\|}{\| \theta^{\ast} \|}
    &=& \frac{\left\| M_n(\theta^T, \nu^T) - \sgn(\rho^{T}) \theta^{\ast} \right\|}{\| \theta^{\ast} \|}
    \lesssim [\phi^T]^2 \vee [\phi^T]^{\frac{3}{2}} \sqrt{\frac{d}{n}} \vee \phi^T \sqrt{\frac{\log\frac{1}{\delta}}{n}}
    \lesssim \varepsilon \sqrt{\frac{\log\frac{1}{\delta}}{n}}\\
    \left\| \pi^{T+1} - \bar{\pi}^{\ast} \right\|_1
    & = & 
    \left| N_n(\theta^T, \nu^t) - \sgn(\rho^T) \tanh \nu^{\ast} \right|
    \lesssim \phi^T \left\| \pi^\ast - \frac{\mathds{1}}{2} \right\|_1 
    \vee \left[ \frac{\log\frac{1}{\delta}/n}{\log\left(1+\frac{\log \frac{1}{\delta}/n}{p^T}\right)}
    \wedge \sqrt{\frac{\log \frac{1}{\delta}}{n}}\right]\\
    & \leq & \varepsilon \left\| \pi^\ast - \frac{\mathds{1}}{2} \right\|_1 
    \vee \left[ \frac{\log\frac{1}{\delta}/n}{\log\left(1+\frac{\log \frac{1}{\delta}/n}{p(\varepsilon, \pi^\ast)}\right)}
    \wedge \sqrt{\frac{\log \frac{1}{\delta}}{n}}\right]
  \end{eqnarray*}
  
\end{proof}

\newpage
\section{Derivations for EM Update Rules}\label{sup:derive_em}
The detailed derivations for the EM update rules are adapted from pages 20-25, Appendix B of~\cite{luo24cycloid},
and included here in this Appendix Chapter for completeness.
\subsection{Negative Log-Likelihood, Surrogate Function, and Q Function}
\begin{lemma}
The negative expected log-likelihood $f (\theta, \pi) \assign -\mathbb{E}_{s \sim p (s \mid
\theta^{\ast}, \pi^{\ast})} [\log p (s \mid \theta, \pi)]$ for the mixture model of $s:=(x, y), z\in[M]$ 
with the mixing weights $\pi^\ast\in\mathbb{R}^M$ and regression parameters $\theta$ is as follows.
\begin{eqnarray*}
    -f(\theta, \pi)   & = & -\tmop{KL}_s [p (s \mid \theta^{\ast}, \pi^{\ast}) | | p (s \mid
    \theta, \pi)] -\mathcal{H}_s [p (s \mid \theta^{\ast}, \pi^{\ast})]\\
    & = & \mathbb{E}_{s \sim p (s \mid \theta^{\ast}, \pi^{\ast})} [\ln p (s
    \mid \theta, \pi)]\\
    & = & \mathbb{E}_{s \sim p (s \mid \theta^{\ast}, \pi^{\ast})}
    \mathbb{E}_{z \sim q_s (z)} \ln p (s, z \mid \theta, \pi) +\mathbb{E}_{s
    \sim p (s \mid \theta^{\ast}, \pi^{\ast})} \mathcal{H}_z [q_s (z)]
    +\mathbb{E}_{s \sim p (s \mid \theta^{\ast}, \pi^{\ast})} \tmop{KL}_z [q_s
    (z) | | p (z \mid s ; \theta, \pi)]
\end{eqnarray*}
where $\tmop{KL}_s, \mathcal{H}_s$ are KL divengence and Shannon's
entropy wrt. $s = (x, y)$;

$\tmop{KL}_z, \mathcal{H}_z, \tmop{softmax}_z$ are KL divengence, Shannon's
entropy and softmax wrt. $z \in \mathcal{Z}= [M]$;

$\{ q_s (z) \mid s \in \mathcal{X} \times \mathcal{Y}=\mathbb{R}^d \times
\mathbb{R} \}$ is a family of distributions wrt. $z \in \mathcal{Z}= [M]$,
namely $\sum_{z \in \mathcal{Z}} q_s (z) = 1$.

\end{lemma}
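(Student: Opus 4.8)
The plan is to establish the two displayed equalities separately; both reduce to elementary identities for Kullback--Leibler divergence and Shannon entropy, so the argument is essentially algebraic once well-definedness is checked.

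\emph{Step 1 (the KL/entropy rewriting of $-f$).} First I would unfold the definitions $\tmop{KL}_s[p(s\mid\theta^\ast,\pi^\ast)\,\|\,p(s\mid\theta,\pi)] = \mathbb{E}_{s\sim p(s\mid\theta^\ast,\pi^\ast)}[\ln p(s\mid\theta^\ast,\pi^\ast)] - \mathbb{E}_{s\sim p(s\mid\theta^\ast,\pi^\ast)}[\ln p(s\mid\theta,\pi)]$ and $\mathcal{H}_s[p(s\mid\theta^\ast,\pi^\ast)] = -\mathbb{E}_{s\sim p(s\mid\theta^\ast,\pi^\ast)}[\ln p(s\mid\theta^\ast,\pi^\ast)]$. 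Adding these with the indicated signs cancels the two $\mathbb{E}[\ln p(s\mid\theta^\ast,\pi^\ast)]$ terms and leaves exactly $\mathbb{E}_{s\sim p(s\mid\theta^\ast,\pi^\ast)}[\ln p(s\mid\theta,\pi)]$, which equals $-f(\theta,\pi)$ by the definition of $f$. This gives the first equality.

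\emph{Step 2 (the ELBO decomposition).} Next I would work pointwise in $s$ on the set where $p(s\mid\theta,\pi)>0$, which has full $p(s\mid\theta^\ast,\pi^\ast)$-measure since all the Gaussian-mixture densities have full support. Writing $p(s,z\mid\theta,\pi) = p(z\mid s;\theta,\pi)\,p(s\mid\theta,\pi)$, taking logarithms, and integrating against an arbitrary distribution $q_s(z)$ gives $\mathbb{E}_{z\sim q_s(z)}[\ln p(s,z\mid\theta,\pi)] = \mathbb{E}_{z\sim q_s(z)}[\ln p(z\mid s;\theta,\pi)] + \ln p(s\mid\theta,\pi)$, using that $\ln p(s\mid\theta,\pi)$ is constant in $z$ and $\sum_z q_s(z)=1$. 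Then I would split $\mathbb{E}_{z\sim q_s(z)}[\ln p(z\mid s;\theta,\pi)] = \mathbb{E}_{z\sim q_s(z)}[\ln q_s(z)] - \mathbb{E}_{z\sim q_s(z)}[\ln(q_s(z)/p(z\mid s;\theta,\pi))] = -\mathcal{H}_z[q_s(z)] - \tmop{KL}_z[q_s(z)\,\|\,p(z\mid s;\theta,\pi)]$ and solve for $\ln p(s\mid\theta,\pi)$. Taking $\mathbb{E}_{s\sim p(s\mid\theta^\ast,\pi^\ast)}$ of the resulting identity and invoking Step 1 yields the claimed three-term expression for $-f(\theta,\pi)$.

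\emph{Main obstacle.} There is no real difficulty here; the only thing requiring care is well-definedness, namely that $\ln p(z\mid s;\theta,\pi)$ is finite $p(s\mid\theta^\ast,\pi^\ast)$-almost surely and that $\tmop{KL}_z[q_s(z)\,\|\,p(z\mid s;\theta,\pi)]$ is finite because $q_s(z)>0$ forces $p(z\mid s;\theta,\pi)>0$. Both hold automatically for the symmetric 2MLR model with Gaussian noise, since $p(z\mid s;\theta,\pi)=\tmop{softmax}_z(\cdot)$ is everywhere strictly positive; once this is noted, every step is a purely algebraic rearrangement and the decomposition follows.
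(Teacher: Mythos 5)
Your proposal is correct and follows essentially the same route as the paper: Step 1 is the same cancellation of the $\mathbb{E}[\ln p(s\mid\theta^\ast,\pi^\ast)]$ terms in the KL-plus-entropy expression, and Step 2 is the same ELBO decomposition obtained from $p(s\mid\theta,\pi)=p(s,z\mid\theta,\pi)/p(z\mid s;\theta,\pi)$ with the variational distribution $q_s(z)$ inserted inside the logarithm (the paper writes the product $\frac{p(s,z\mid\theta,\pi)}{q_s(z)}\cdot\frac{q_s(z)}{p(z\mid s;\theta,\pi)}$ under a $\sum_z q_s(z)$ average, which is the identical algebra to your rearrangement). Your added remark on well-definedness via strict positivity of the softmax posterior is a harmless refinement the paper omits.
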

\begin{proof}
Note that $p(s \mid \theta, \pi)=\frac{p(s, z \mid \theta, \pi)}{p(z \mid s ; \theta, \pi)}$, we obtain the following expression.
\begin{eqnarray*}
    -f(\theta, \pi)   & = & -\tmop{KL}_s [p (s \mid \theta^{\ast}, \pi^{\ast}) | | p (s \mid
    \theta, \pi)] -\mathcal{H}_s [p (s \mid \theta^{\ast}, \pi^{\ast})]\\
    & = & \mathbb{E}_{s \sim p (s \mid \theta^{\ast}, \pi^{\ast})} [\log p (s
    \mid \theta, \pi)]\\
    & = & \mathbb{E}_{s \sim p (s \mid \theta^{\ast}, \pi^{\ast})} \left[
    \sum_{z \in \mathcal{Z}} q_s (z) \log p (s \mid \theta, \pi) \right]\\
    & = & \mathbb{E}_{s \sim p (s \mid \theta^{\ast}, \pi^{\ast})} \left[ 
    \sum_{z \in \mathcal{Z}} q_s (z) \ln \left( \frac{p (s, z \mid \theta,
    \pi)}{q_s (z)} \cdot \frac{q_s (z)}{p (z \mid s ; \theta, \pi)} \right) \right]\\
    & = & \mathbb{E}_{s \sim p (s \mid \theta^{\ast}, \pi^{\ast})}
    \mathbb{E}_{z \sim q_s (z)} \ln p (s, z \mid \theta, \pi) +\mathbb{E}_{s
    \sim p (s \mid \theta^{\ast}, \pi^{\ast})} \mathcal{H}_z [q_s (z)]
    +\mathbb{E}_{s \sim p (s \mid \theta^{\ast}, \pi^{\ast})} \tmop{KL}_z [q_s
    (z) | | p (z \mid s ; \theta, \pi)]
\end{eqnarray*}
\end{proof}

\begin{lemma}
    The surrogate function $g^t$ of $f (\theta, \pi) \assign -\mathbb{E}_{s \sim p (s \mid
    \theta^{\ast}, \pi^{\ast})} [\ln p (s \mid \theta, \pi)]$ at $(t-1)$-th iteration $(\theta^{t-1}, \pi^{t-1})$ be expressed as follows.
    \begin{eqnarray*}
        -g^t(\theta, \pi)   
        & = & \Bigg\{ \mathbb{E}_{s \sim p (s \mid \theta^{\ast}, \pi^{\ast})}
        \mathbb{E}_{z \sim q_s (z)} \ln p (s, z \mid \theta, \pi) +\mathbb{E}_{s
        \sim p (s \mid \theta^{\ast}, \pi^{\ast})} \mathcal{H}_z [q_s (z)]
        \Bigg\}_{q_s
        (z) =p (z \mid s ; \theta^{t-1}, \pi^{t-1})}
    \end{eqnarray*}
    that is $g^t(\theta, \pi)\geq f(\theta, \pi)$, and 
    $g^t(\theta, \pi)\mid_{(\theta, \pi)=(\theta^{t-1}, \pi^{t-1})}= f(\theta, \pi)\mid_{(\theta, \pi)=(\theta^{t-1}, \pi^{t-1})}$,

    $\nabla_\theta g^t(\theta, \pi)\mid_{(\theta, \pi)=(\theta^{t-1}, \pi^{t-1})}=  \nabla_\theta f(\theta, \pi)\mid_{(\theta, \pi)=(\theta^{t-1}, \pi^{t-1})}$;
    $\nabla_\pi g^t(\theta, \pi)\mid_{(\theta, \pi)=(\theta^{t-1}, \pi^{t-1})}=  \nabla_\pi f(\theta, \pi)\mid_{(\theta, \pi)=(\theta^{t-1}, \pi^{t-1})}$;

    where $\tmop{KL}_s, \mathcal{H}_s$ are KL divengence and Shannon's
    entropy wrt. $s = (x, y)$;
    
    $\tmop{KL}_z, \mathcal{H}_z, \tmop{softmax}_z$ are KL divengence, Shannon's
    entropy and softmax wrt. $z \in \mathcal{Z}= [M]$
\end{lemma}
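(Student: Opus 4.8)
The plan is to read off all three assertions directly from the decomposition of $-f(\theta,\pi)$ established in the previous lemma, specialized to the posterior choice $q_s(z) = p(z\mid s;\theta^{t-1},\pi^{t-1})$. Set $\Delta(\theta,\pi) := g^t(\theta,\pi) - f(\theta,\pi)$. The two terms retained in $-g^t$ are precisely the first two terms in the expansion of $-f$ for this $q_s$, while the discarded term is $\E_{s\sim p(s\mid\theta^\ast,\pi^\ast)}\tmop{KL}_z[q_s(z)\,\|\,p(z\mid s;\theta,\pi)]$; hence
\[
\Delta(\theta,\pi) = \E_{s\sim p(s\mid\theta^\ast,\pi^\ast)}\,\tmop{KL}_z\!\big[\,p(z\mid s;\theta^{t-1},\pi^{t-1})\;\big\|\;p(z\mid s;\theta,\pi)\,\big].
\]
First I would invoke nonnegativity of the KL divergence to conclude $\Delta(\theta,\pi)\ge 0$, i.e.\ $g^t(\theta,\pi)\ge f(\theta,\pi)$ for all $(\theta,\pi)$. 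Next, evaluating at $(\theta,\pi)=(\theta^{t-1},\pi^{t-1})$ makes the inner KL term vanish for every $s$ (its two arguments coincide), so $\Delta(\theta^{t-1},\pi^{t-1})=0$ and $g^t$ agrees with $f$ at the previous iterate.

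For the gradient identity I would use that $(\theta^{t-1},\pi^{t-1})$ is a global minimizer of the nonnegative, smooth function $\Delta$, hence a stationary point of it, so that $\nabla_\theta g^t=\nabla_\theta f$ and $\nabla_\pi g^t=\nabla_\pi f$ there. To make this rigorous I would (i) justify exchanging $\nabla$ with $\E_{s\sim p(s\mid\theta^\ast,\pi^\ast)}$ by the same Leibniz-rule / dominated-convergence device used repeatedly in the paper, relying on the fact that $s\mapsto p(z\mid s;\theta,\pi)$ is $C^1$ in $(\theta,\pi)$ and bounded away from $0$ on a neighborhood of $(\theta^{t-1},\pi^{t-1})$ for each fixed $s$, together with the finite moments of $y$ under $p(s\mid\theta^\ast,\pi^\ast)$ (finite since $\varepsilon$ is Gaussian and $x\sim\mathcal N(0,I_d)$) to dominate the integrand; and (ii) compute the pointwise gradient: writing $\tmop{KL}_z[q\,\|\,p(z\mid s;\theta,\pi)] = -\sum_{z}q(z)\ln p(z\mid s;\theta,\pi) + \mathcal H_z[q]$ with $q=p(z\mid s;\theta^{t-1},\pi^{t-1})$ frozen, its gradient in $(\theta,\pi)$ equals $-\sum_z q(z)\nabla\ln p(z\mid s;\theta,\pi)$, which at $(\theta,\pi)=(\theta^{t-1},\pi^{t-1})$ collapses to $-\sum_z\nabla p(z\mid s;\theta^{t-1},\pi^{t-1}) = -\nabla\big(\sum_z p(z\mid s;\theta^{t-1},\pi^{t-1})\big) = -\nabla 1 = 0$. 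Taking $\E_s$ yields $\nabla\Delta(\theta^{t-1},\pi^{t-1})=0$, which is the claim.

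One point I would be explicit about: the mixing weights live on the probability simplex, so $\nabla_\pi$ must be read on its relative interior, or equivalently one may differentiate in the unconstrained coordinate $\nu$ with $\tanh\nu=\pi(1)-\pi(2)$ used throughout the paper; on that domain all posteriors $p(z\mid s;\theta,\pi)$ are strictly positive and smooth, so both the differentiation-under-the-integral step and the ``stationary at a minimizer'' argument are legitimate, and the analogous statements and proofs for the finite-sample surrogates $g_n^t$ follow by replacing $\E_{s\sim p(s\mid\theta^\ast,\pi^\ast)}$ with the empirical average $\frac1n\sum_{i=1}^n$.

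The main obstacle is precisely this interchange of gradient and the expectation over $s$ (with the accompanying domination estimate and the simplex-constraint bookkeeping); everything else is an immediate unwinding of the previous lemma plus the two elementary facts that the KL divergence is nonnegative and vanishes exactly when its two arguments coincide.
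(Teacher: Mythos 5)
Your proposal is correct and follows essentially the same route as the paper: identify $g^t-f$ as the expected KL divergence between the frozen posterior $p(z\mid s;\theta^{t-1},\pi^{t-1})$ and $p(z\mid s;\theta,\pi)$, use nonnegativity and vanishing of KL for the majorization and tangency, and observe that its gradient at the touching point collapses to $-\nabla\big(\sum_z p(z\mid s;\theta,\pi)\big)=\vec{0}$ before taking the expectation over $s$. Your added remarks on the minimizer-stationarity viewpoint, the Leibniz-rule justification, and reading $\nabla_\pi$ on the relative interior of the simplex (or in the coordinate $\nu$) are sound refinements of details the paper leaves implicit.
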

\begin{proof}
    Let $r^t \assign g^t -f$, note that $r^t = \mathbb{E}_{s \sim p (s \mid \theta^{\ast}, \pi^{\ast})} \tmop{KL}_z [q_s
    (z) | | p (z \mid s ; \theta, \pi)]_{q_s
    (z) =p (z \mid s ; \theta^{t-1}, \pi^{t-1})}\geq 0$, and 
    \begin{eqnarray*}
        r^t(\theta^{t-1}, \pi^{t-1}) &=& \mathbb{E}_{s \sim p (s \mid \theta^{\ast}, \pi^{\ast})} \tmop{KL}_z [q_s
        (z) | | p (z \mid s ; \theta^{t-1}, \pi^{t-1})]_{q_s
        (z) =p (z \mid s ; \theta^{t-1}, \pi^{t-1})}= 0\\
        \left[\frac{\mathd q_s(z) \ln \frac{q_s(z)}{p (z \mid s ; \theta, \pi)}}{\mathd p (z \mid s ; \theta, \pi)}\right]_{(\theta, \pi)=(\theta^{t-1}, \pi^{t-1})} &=& 
        - \frac{q_s(z) }{ p (z \mid s ; \theta, \pi)}_{q_s
        (z) =p (z \mid s ; \theta^{t-1}, \pi^{t-1}), (\theta, \pi)=(\theta^{t-1}, \pi^{t-1})} = -1
    \end{eqnarray*}
    Hence, the gradients of $r^t$ wrt. $(\theta, \pi)$ at $(t-1)$-th iteration $(\theta^{t-1}, \pi^{t-1})$ are all zeros by the chain rule. 
    \begin{eqnarray*}
        & & \nabla_\theta \tmop{KL}_z[q_s(z) | | p (z \mid s ; \theta, \pi)]_{q_s(z) =p (z \mid s ; \theta^{t-1}, \pi^{t-1}), (\theta, \pi)=(\theta^{t-1}, \pi^{t-1})} =
        - \left[\nabla_\theta \sum_{z\in \mathcal{Z}} p (z \mid s ; \theta, \pi)\right]_{(\theta, \pi)=(\theta^{t-1}, \pi^{t-1})} = \vec{0}\\
        & & \nabla_\theta r^t(\theta, \pi)_{(\theta, \pi)=(\theta^{t-1}, \pi^{t-1})} 
        = \E_{s\sim p (s \mid \theta^{\ast}, \pi^{\ast})} \nabla_\theta \tmop{KL}_z[q_s(z) | | p (z \mid s ; \theta, \pi)]_{q_s(z) =p (z \mid s ; \theta^{t-1}, \pi^{t-1}), (\theta, \pi)=(\theta^{t-1}, \pi^{t-1})}
        = \vec{0}
    \end{eqnarray*}
    The same proof can be applied to $\nabla_\pi r^t(\theta, \pi)_{(\theta, \pi)=(\theta^{t-1}, \pi^{t-1})} =\vec{0}$.
\end{proof}

\newpage
\begin{lemma}
Assuming $(z; \pi)\ind \theta$, and $\pi \ind s \mid z$, and $x\ind (z; \theta, \pi)$; then 
\begin{eqnarray*}
    Q(\theta, \pi\mid \theta^{t-1}, \pi^{t-1}) & \assign& 
    \left[ \mathbb{E}_{s \sim p (s \mid \theta^{\ast}, \pi^{\ast})}
    \mathbb{E}_{z \sim q_s (z)} \ln p (s, z \mid \theta, \pi)\right]_{q_s
    (z) \leftarrow p (z \mid s ; \theta^{t-1}, \pi^{t-1})}\\
    &= & \mathbb{E}_{s \sim p\left(s \mid \theta^*, \pi^*\right)} \mathbb{E}_{z \sim q_s(z)} \ln p(y \mid x, z ; \theta) \\
    & + &\mathbb{E}_{s \sim p\left(s \mid \theta^*, \pi^*\right)} \ln p(x) \\
    & - &\mathrm{KL}_z\left[\pi^t(z)|| \pi(z)\right] \\
    & - &\mathcal{H}_z\left[\pi^t(z)\right]\\
    p (z \mid s ; \theta, \pi) &=& \tmop{softmax}_z
    (\ln \pi (z) + \ln p (y \mid x, z ; \theta))
\end{eqnarray*}
where $\tmop{KL}_s, \mathcal{H}_s$ are KL divengence and Shannon's
entropy wrt. $s = (x, y)$;

$\tmop{KL}_z, \mathcal{H}_z, \tmop{softmax}_z$ are KL divengence, Shannon's
entropy and softmax wrt. $z \in \mathcal{Z}= [M]$;

$q_s(z) \leftarrow p\left(z \mid s ; \theta^{t-1}, \pi^{t-1}\right)$ and $\pi^t=\{\pi(z)\}_{z\in \mathcal{Z}},\pi^t(z) \assign \mathbb{E}_{s \sim p\left(s \mid \theta^*, \pi^*\right)} q_s(z)$.
\end{lemma}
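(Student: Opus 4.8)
The plan is to expand the log-joint $\ln p(s, z\mid\theta,\pi)$ into a sum of simpler log terms using the three stated conditional-independence assumptions, push the expectation over $s$ (under the ground-truth $p(s\mid\theta^\ast,\pi^\ast)$) and over $z$ (under the E-step posterior $q_s$) through the sum, and finally recognize the $\pi$-dependent contribution as a KL divergence plus an entropy.

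First I would factorize $p(x, y, z \mid \theta, \pi) = p(z\mid\pi)\, p(x\mid z;\theta,\pi)\, p(y\mid x, z;\theta,\pi)$ by the chain rule. The assumption $(z;\pi)\ind\theta$ yields $p(z\mid\pi) = \pi(z)$ with no dependence on $\theta$; the assumption $x\ind(z;\theta,\pi)$ yields $p(x\mid z;\theta,\pi) = p(x)$; and the assumption $\pi\ind s\mid z$ yields $p(y\mid x,z;\theta,\pi) = p(y\mid x,z;\theta)$. Taking logarithms gives $\ln p(s,z\mid\theta,\pi) = \ln\pi(z) + \ln p(x) + \ln p(y\mid x,z;\theta)$.

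Substituting this into the definition of $Q$ and splitting the expectation into three pieces: the $\ln p(y\mid x,z;\theta)$ piece reproduces the first summand directly; the $\ln p(x)$ piece carries no $z$-dependence, so the inner expectation over $z$ collapses and leaves $\E_{s\sim p(s\mid\theta^\ast,\pi^\ast)}\ln p(x)$; for the $\ln\pi(z)$ piece I would swap the order of summation and expectation, using $\E_{s}\E_{z\sim q_s}\ln\pi(z) = \sum_z\big(\E_s q_s(z)\big)\ln\pi(z) = \sum_z \pi^t(z)\ln\pi(z)$ by the definition $\pi^t(z)\assign\E_{s\sim p(s\mid\theta^\ast,\pi^\ast)}q_s(z)$. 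Writing $\ln\pi(z) = \ln\frac{\pi(z)}{\pi^t(z)} + \ln\pi^t(z)$ and using $\sum_z\pi^t(z)=1$, this becomes $-\tmop{KL}_z[\pi^t(z)\,\|\,\pi(z)] - \mathcal{H}_z[\pi^t(z)]$, the remaining two summands.

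For the posterior identity I would write $p(z\mid s;\theta,\pi) = p(s,z\mid\theta,\pi)/\sum_{z'}p(s,z'\mid\theta,\pi)$, cancel the common factor $p(x)$ from numerator and denominator, and obtain $\pi(z)p(y\mid x,z;\theta)/\sum_{z'}\pi(z')p(y\mid x,z';\theta)$; since $\pi(z)p(y\mid x,z;\theta) = \exp(\ln\pi(z) + \ln p(y\mid x,z;\theta))$, this is exactly $\tmop{softmax}_z(\ln\pi(z) + \ln p(y\mid x,z;\theta))$. There is no substantive obstacle; the only point demanding care is the bookkeeping of which conditional-independence assumption licenses each factorization step — in particular that $p(x)$ is free of all parameters, so it factors out of the posterior's numerator and denominator and contributes only the constant $\E_s\ln p(x)$ to $Q$.
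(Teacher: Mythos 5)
Your proposal is correct and follows essentially the same route as the paper: both use the three independence assumptions to factorize $p(s,z\mid\theta,\pi)=\pi(z)\,p(x)\,p(y\mid x,z;\theta)$, derive the softmax form of the posterior by cancelling the parameter-free factor $p(x)$, and then split $Q$ into the three pieces, with the $\ln\pi(z)$ term rewritten via $\pi^t(z)=\E_s q_s(z)$ as $-\mathrm{KL}_z[\pi^t\|\pi]-\mathcal{H}_z[\pi^t]$. The only difference is that you spell out the final plug-in and KL-plus-entropy bookkeeping explicitly, which the paper leaves implicit after establishing the factorization.
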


\begin{proof}
    \
\begin{itemizedot}

    \item $(z ; \pi) \ind \theta$ are independent:
    
    therefore $p (z \mid \theta, \pi) = p (z \mid \pi) = \pi (z)$
    
    \item $\pi \ind (s ; \theta) \mid z$ are conditional independent given $z$:
    
    then $p (\pi \mid z) = p (\pi \mid z; \theta) = p (\pi \mid z, s ; \theta)$,
    it implies $\frac{p (s , z ; \theta, \pi)}{p (z ; \theta, \pi)} =\frac{p (s , z ; \theta)}{p (z ; \theta)}$;
    
    hence $p (s \mid z ; \theta, \pi) = p (s \mid z ; \theta)$, $p (s, z \mid
    \theta, \pi) = p (s \mid z ; \theta, \pi) p (z \mid \theta, \pi) = p (s \mid
    z ; \theta) \pi (z)$
    
    \item $x \ind (z ; \theta, \pi)$ are independent:
    
    then $p (x \mid z ; \theta) = p (x)$
    
    hence $p (s \mid z ; \theta) = p (y \mid x, z ; \theta) \cdot p (x \mid z ;
    \theta) = p (y \mid x, z ; \theta) \cdot p (x)$
    
    therefore $p (s \mid \theta, \pi) = \sum_{z \in \mathcal{Z}} p (s, z \mid
    \theta, \pi) = p (x) \cdot \sum_{z \in \mathcal{Z}} \pi (z) p (y \mid x, z ;
    \theta)$
    
    and $p (z \mid s ; \theta, \pi) = \frac{p (s, z \mid \theta, \pi)}{p (s \mid
    \theta, \pi)} = \frac{\pi (z) \cdot p (y \mid x, z ; \theta)}{\sum_{z' \in
    \mathcal{Z}} \pi (z') \cdot p (y \mid x, z' ; \theta)} = \tmop{softmax}_z
    (\log \pi (z) + \log p (y \mid x, z ; \theta))$
\end{itemizedot}
With the above assumptions, we obtain that $p(s, z\mid \theta, \pi)=  p (y \mid x, z ; \theta) \cdot p (x) \cdot \pi(z)$, further prove this Lemma.
\end{proof}

\subsection{Negative Log-Likelihood, Surrogate Function for General MLR and 2MLR}
\begin{lemma}
For MLR $y= \langle x, \theta_z^\ast\rangle + \varepsilon, z\in\mathcal{Z}=[M]$, $\theta \assign \{\theta_z \}_{z\in\mathcal{Z}}, \pi \assign \{\pi(z)\}_{z\in\mathcal{Z}}$, 
with assumptions: $(z; \pi)\ind \theta$, and $\varepsilon \ind (x, z; \theta, \pi)$, and $x\ind (z; \theta, \pi)$ and $\varepsilon \sim \mathcal{N}(0, \sigma^2)$
\begin{eqnarray*}
    f(\theta, \pi) & = & -\mathbb{E}_{\mathrm{s} \sim p\left(s \mid \theta^*, \pi^*\right)} \log \sum_{z \in \mathcal{Z}} \exp \left[-\frac{\left\|y-\left\langle\theta_z, x\right\rangle\right\|^2}{2 \sigma^2}+\log \pi(z)\right]-\mathbb{E}_{s \sim p\left(s \mid \theta^*, \pi^*\right)} \log p(x)-c\\
    g^t(\theta, \pi) & =&\left(2 \sigma^2\right)^{-1} \mathbb{E}_{s \sim p\left(s \mid \theta^*, \pi^*\right)} \mathbb{E}_{z \sim q_x(z)}\left\|y-\left\langle\theta_z, x\right\rangle\right\|^2 \\
    & +&\mathrm{KL}_z\left[\pi^t(z) \| \pi(z)\right] \\
    & +&\mathcal{H}_z\left[\pi^t(z)\right]-\mathbb{E}_{s \sim p\left(s \mid \theta^*, \pi^*\right)} \log p(x)-c \\
    & -&\mathbb{E}_{s \sim p\left(s \mid \theta^*, \pi^*\right)} \mathcal{H}_z\left[q_s(z)\right]
\end{eqnarray*}
where $c=-\frac{1}{2} \log \left(2 \pi \sigma^2\right)$ and $q_s(z) \leftarrow p\left(z \mid s ; \theta^{t-1}, \pi^{t-1}\right)=\operatorname{softmax}_z\left(-\frac{\left\|y-\left\langle\theta_z^{t-1}, x\right\rangle\right\|^2}{2 \sigma^2}+\log \pi^{t-1}(z)\right)$,
and $\pi^t=\{\pi(z)\}_{z\in \mathcal{Z}},\pi^t(z) \assign \mathbb{E}_{s \sim p\left(s \mid \theta^*, \pi^*\right)} q_s(z)$.
\end{lemma}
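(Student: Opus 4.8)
The plan is to obtain both identities as direct specializations of the three preceding lemmas---on the negative log-likelihood $f$, its surrogate $g^t$, and the function $Q$---to the Gaussian-noise MLR model, so that no new probabilistic argument is needed beyond writing down the conditional density $p(y\mid x,z;\theta)$ explicitly. First I would check that the assumptions here, namely $(z;\pi)\ind\theta$, $\varepsilon\ind(x,z;\theta,\pi)$, and $x\ind(z;\theta,\pi)$, imply the hypotheses required by the $Q$-function lemma, i.e.\ $(z;\pi)\ind\theta$, $\pi\ind s\mid z$, and $x\ind(z;\theta,\pi)$: the conditional independence $\pi\ind s\mid z$ holds because $s=(x,y)$ is a deterministic function of $(x,z,\theta,\varepsilon)$ with both $x$ and $\varepsilon$ independent of $\pi$ given $z$. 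Then, from $\varepsilon\sim\mathcal N(0,\sigma^2)$ independent of $(x,z)$ one gets $y\mid x,z\sim\mathcal N(\langle\theta_z,x\rangle,\sigma^2)$, hence
\[
\ln p(y\mid x,z;\theta)=-\frac{\|y-\langle\theta_z,x\rangle\|^2}{2\sigma^2}+c,\qquad c:=-\tfrac12\ln(2\pi\sigma^2).
\]

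For $f$, I would insert this into the identity $p(s\mid\theta,\pi)=p(x)\sum_z\pi(z)\,p(y\mid x,z;\theta)$ proved in the $Q$-function lemma, factor $e^{c}$ out of the sum, reabsorb $\log\pi(z)$ into the exponent, and apply $-\mathbb E_{s\sim p(s\mid\theta^\ast,\pi^\ast)}[\cdot]$; using log-sum-exp invariance to pull the constant $c$ outside the logarithm yields exactly the claimed formula for $f(\theta,\pi)$. For $g^t$, I would use the surrogate lemma, which (after the E-step substitution $q_s(z)\leftarrow p(z\mid s;\theta^{t-1},\pi^{t-1})$) gives $-g^t=Q+\mathbb E_{s}\mathcal H_z[q_s(z)]$, and then plug $\ln p(y\mid x,z;\theta)=-\|y-\langle\theta_z,x\rangle\|^2/(2\sigma^2)+c$ into the $Q$-function lemma's expansion $Q=\mathbb E_s\mathbb E_{z\sim q_s(z)}\ln p(y\mid x,z;\theta)+\mathbb E_s\ln p(x)-\mathrm{KL}_z[\pi^t(z)\|\pi(z)]-\mathcal H_z[\pi^t(z)]$. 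Since $q_s$ is a probability vector, $\mathbb E_{z\sim q_s(z)}[c]=c$, and rearranging $g^t=-Q-\mathbb E_s\mathcal H_z[q_s(z)]$ produces precisely the stated expression for $g^t(\theta,\pi)$ (with $q_x(z)$ there understood as $q_s(z)$). Finally, the softmax form of $q_s$ follows from the $Q$-function lemma's $p(z\mid s;\theta,\pi)=\operatorname{softmax}_z(\log\pi(z)+\log p(y\mid x,z;\theta))$ evaluated at $(\theta^{t-1},\pi^{t-1})$, the additive constant $c$ dropping out of the softmax.

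The main obstacle is purely bookkeeping: tracking which constants and entropy/KL terms land where after inserting the Gaussian density, and confirming that exchanging the finite sum over $z$ with $\mathbb E_s$ turns $\mathbb E_s q_s(z)$ into exactly the $\pi^t(z)$ that appears inside $\mathrm{KL}_z[\pi^t(z)\|\pi(z)]$ and $\mathcal H_z[\pi^t(z)]$. There is no analytic difficulty; once the conditional density is written down, the statement is an immediate corollary of the general lemmas, and I would present it that way rather than re-deriving the mixture-model identities from scratch.
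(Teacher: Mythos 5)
Your proposal is correct and follows essentially the same route as the paper: verify that the stated assumptions yield $\pi \ind s \mid z$ so the general $Q$-function lemma applies, compute $\ln p(y\mid x,z;\theta)=-\|y-\langle\theta_z,x\rangle\|^2/(2\sigma^2)+c$ from the Gaussian noise, and substitute into the mixture identity $p(s\mid\theta,\pi)=p(x)\sum_z\pi(z)p(y\mid x,z;\theta)$ for $f$ and into the relation between $g^t$, $Q$, and the entropy term for $g^t$. Your sign bookkeeping ($-g^t=Q+\mathbb{E}_s\mathcal{H}_z[q_s(z)]$) is the correct reading of the preceding lemmas, so nothing is missing.
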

\begin{proof}
    Since $(z; \pi)\ind \theta$, and $\varepsilon \ind (x, z; \theta, \pi)$, and $x\ind (z; \theta, \pi)$, then implies $\pi \ind s \mid z$
    because of $p(\pi \mid z, s)=p(\pi \mid z, x, y)=p(\pi \mid z, x, \varepsilon)=p(\pi \mid z)=\pi(z)$. Hence, we can apply the previous Lemma.
    
    Furthermore, $p (y, x, z ;
    \theta) = p (\varepsilon, x, z ; \theta) \left| \frac{\partial
    \varepsilon}{\partial y} \right| = p (\varepsilon) \cdot p (x, z ; \theta)$
    \[ p (y \mid x, z ; \theta) = p (\tmmathbf{\varepsilon}) = (2 \pi \sigma^2)^{-
        \frac{1}{2}} \exp \left( - \frac{\| \varepsilon \|^2}{2 \sigma^2} \right) =
        (2 \pi \sigma^2)^{- \frac{1}{2}} \exp \left( - \frac{\| y - \langle x,
        \theta_z \rangle \|^2}{2 \sigma^2} \right) =\mathcal{N} (\langle x,
        \theta_z \rangle, \sigma^2) \]
    Hence, we obtain $\log p (y \mid x, z ; \theta) = - \frac{\| y - \langle x,
    \theta_z \rangle \|^2}{2 \sigma^2} + c$ , where $c=-\frac{1}{2} \log \left(2 \pi \sigma^2\right)$.

    Subsequently, note that $g^t(\theta, \pi) = Q(\theta, \pi\mid \theta^{t-1}, \pi^{t-1}) - \mathbb{E}_{s \sim p\left(s \mid \theta^*, \pi^*\right)} \mathcal{H}_z\left[q_s(z)\right]$,
    we prove the expression for $g^t$ by substituting $\log p (y \mid x, z ; \theta)$ in the previous Lemma.

    As shown in the proof of the previous Lemma, $p(s, z\mid \theta, \pi)=  p (y \mid x, z ; \theta) \cdot p (x) \cdot \pi(z)$.

    Hence $p(s\mid \theta, \pi) = \sum_{z\in\mathcal{Z}} p(s, z\mid \theta, \pi)=  p (x) \cdot \sum_{z\in\mathcal{Z}} p (y \mid x, z ; \theta) \cdot  \pi(z)$,
    we prove the expression for $f$ by substituting $\log p (y \mid x, z ; \theta)$ in the previous Lemma.
\end{proof}

\begin{theorem}
    For 2MLR, $y= (-1)^{z+1}\langle x, \theta^\ast\rangle + \varepsilon, z\in\mathcal{Z}=\{1, 2\}$, $\theta_1 = \theta, \theta_2 = -\theta, \pi \assign \{\pi(z)\}_{z\in\mathcal{Z}}$, 
    with assumptions: $(z; \pi)\ind \theta$, and $\varepsilon \ind (x, z; \theta, \pi)$, and $x\ind (z; \theta, \pi)$ and $\varepsilon \sim \mathcal{N}(0, \sigma^2)$.

    Then the negative expected log-likelihood $f (\theta, \pi) \assign
    -\mathbb{E}_{s \sim p (s \mid \theta^{\ast}, \pi^{\ast})} [\ln p (s \mid
    \theta, \pi)]$ and the surrogate function $g^t (\theta, \pi)$ can be expressed as follows.
    \begin{eqnarray*}
    f  & = & (2 \sigma^2)^{- 1}  \langle \theta, \mathbb{E}_{s \sim p (s \mid
        \theta^{\ast}, \pi^{\ast})} x x^{\top} \cdot \theta \rangle + \ln \cosh
        \nu -\mathbb{E}_{s \sim p (s \mid \theta^{\ast}, \pi^{\ast})} \ln \cosh
        \left( \frac{y \langle x, \theta \rangle}{\sigma^2} + \nu \right)\\
        & + &  (2 \sigma^2)^{- 1} \mathbb{E}_{s \sim p (s \mid \theta^{\ast},
        \pi^{\ast})} y^2 -\mathbb{E}_{s \sim p (s \mid \theta^{\ast},
        \pi^{\ast})} \ln p (x) - c\\
    g^t & = & (2 \sigma^2)^{- 1}  \langle \theta, \mathbb{E}_{s \sim p (s \mid
    \theta^{\ast}, \pi^{\ast})} x x^{\top} \cdot \theta \rangle + \log \cosh
    \nu -\mathbb{E}_{s \sim p (s \mid \theta^{\ast}, \pi^{\ast})} \log \cosh
    \left( \frac{y \langle x, \theta \rangle}{\sigma^2} + \nu \right)
    \mid_{(\theta, \nu) \leftarrow (\theta^{t - 1}, \nu^{t - 1})}\\
    & - & \left\langle \nabla_{\theta} \mathbb{E}_{s \sim p (s \mid
    \theta^{\ast}, \pi^{\ast})} \log \cosh \left( \frac{y \langle x, \theta
    \rangle}{\sigma^2} + \nu \right) \mid_{(\theta, \nu) \leftarrow (\theta^{t
    - 1}, \nu^{t - 1})}, \theta - \theta^{t - 1} \right\rangle\\
    & - & \left\langle \nabla_{\nu} \mathbb{E}_{s \sim p (s \mid
    \theta^{\ast}, \pi^{\ast})} \log \cosh \left( \frac{y \langle x, \theta
    \rangle}{\sigma^2} + \nu \right) \mid_{(\theta, \nu) \leftarrow (\theta^{t
    - 1}, \nu^{t - 1})}, \nu - \nu^{t - 1} \right\rangle\\
    & + & (2 \sigma^2)^{- 1} \mathbb{E}_{s \sim p (s \mid \theta^{\ast},
    \pi^{\ast})} y^2 -\mathbb{E}_{s \sim p (s \mid \theta^{\ast},
    \pi^{\ast})} \log p (x) - c
    \end{eqnarray*}
    where $\nu \assign \frac{\log \pi(1) -\log \pi(2)}{2}$,  $c=-\frac{1}{2} \log \left(2 \pi \sigma^2\right)$.
\end{theorem}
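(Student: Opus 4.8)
The plan is to specialize the two general-MLR identities just established (the lemma giving $f$ and $g^t$ for $y=\langle x,\theta_z^\ast\rangle+\varepsilon$ with $\varepsilon\sim\mathcal N(0,\sigma^2)$) to the symmetric two-component case $\theta_1=\theta$, $\theta_2=-\theta$, $\mathcal Z=\{1,2\}$, and to collapse the two-term sum using the reparametrization $\nu=\tfrac12(\ln\pi(1)-\ln\pi(2))$. The single computation that drives everything is: after completing the square and writing $\ln\pi(1)=\tfrac12\ln(\pi(1)\pi(2))+\nu$, $\ln\pi(2)=\tfrac12\ln(\pi(1)\pi(2))-\nu$,
\begin{equation*}
\sum_{z\in\{1,2\}}\exp\!\Big(-\tfrac{(y-\langle\theta_z,x\rangle)^2}{2\sigma^2}+\ln\pi(z)\Big)
=\exp\!\Big(-\tfrac{y^2+\langle\theta,xx^\top\theta\rangle}{2\sigma^2}\Big)\cdot 2\sqrt{\pi(1)\pi(2)}\cdot\cosh\!\Big(\tfrac{y\langle x,\theta\rangle}{\sigma^2}+\nu\Big),
\end{equation*}
where I used $\langle x,\theta\rangle^2=\langle\theta,xx^\top\theta\rangle$; and since $\pi(1)+\pi(2)=1$ forces $\pi(1)=\tfrac{1+\tanh\nu}{2}$, $\pi(2)=\tfrac{1-\tanh\nu}{2}$, one has $\pi(1)\pi(2)=\tfrac{1}{4\cosh^2\nu}$, i.e. $2\sqrt{\pi(1)\pi(2)}=1/\cosh\nu$ and $\ln\pi(z)=-\ln(2\cosh\nu)\pm\nu$. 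The identical manipulation applied to the E-step posterior $q_s(z)=\tmop{softmax}_z\!\big(-\tfrac{(y-\langle\theta_z^{t-1},x\rangle)^2}{2\sigma^2}+\ln\pi^{t-1}(z)\big)$ gives $q_s(1)/q_s(2)=\exp\!\big(2[\tfrac{y\langle x,\theta^{t-1}\rangle}{\sigma^2}+\nu^{t-1}]\big)$, hence $q_s(1)-q_s(2)=\tanh\!\big(\tfrac{y\langle x,\theta^{t-1}\rangle}{\sigma^2}+\nu^{t-1}\big)$.

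\textbf{Negative log-likelihood.} Starting from $f=-\E_{s}\ln\sum_z\exp(-\tfrac{(y-\langle\theta_z,x\rangle)^2}{2\sigma^2}+\ln\pi(z))-\E_s\ln p(x)-c$ in the general-MLR lemma, I substitute the displayed identity, apply $-\ln(\cdot)$ and then $\E_{s\sim p(s\mid\theta^\ast,\pi^\ast)}$, and separate the factors: the Gaussian exponential contributes $\tfrac{1}{2\sigma^2}\langle\theta,\E[xx^\top]\theta\rangle+\tfrac{1}{2\sigma^2}\E[y^2]$, the factor $1/\cosh\nu$ contributes $+\ln\cosh\nu$, the $\cosh$ factor contributes $-\E\ln\cosh(\tfrac{y\langle x,\theta\rangle}{\sigma^2}+\nu)$, and $-\E\ln p(x)-c$ carries over unchanged; this is exactly the stated expression for $f$ (consistent with Proposition~\ref{prop:nll}, with $\mathtt C=\tfrac{1}{2\sigma^2}\E[y^2]-\E\ln p(x)-c$).

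\textbf{Surrogate function.} From the general-MLR lemma, $g^t=\tfrac{1}{2\sigma^2}\E_s\E_{z\sim q_s}(y-\langle\theta_z,x\rangle)^2+\tmop{KL}_z[\pi^t\|\pi]+\mathcal H_z[\pi^t]-\E_s\ln p(x)-c-\E_s\mathcal H_z[q_s]$ with $\pi^t(z)=\E_s q_s(z)$. I expand $(y-\langle\theta_z,x\rangle)^2=y^2+\langle\theta,xx^\top\theta\rangle-2y\langle x,\theta\rangle(-1)^{z+1}$, take $\E_{z\sim q_s}$ using $\E_{z\sim q_s}[(-1)^{z+1}]=q_s(1)-q_s(2)=\tanh(\tfrac{y\langle x,\theta^{t-1}\rangle}{\sigma^2}+\nu^{t-1})$, and then $\E_s$; the cross term becomes $-\tfrac{1}{\sigma^2}\big\langle\theta,\E_s[yx\tanh(\cdots)]\big\rangle=-\langle\theta,\nabla_\theta U(\theta^{t-1},\nu^{t-1})\rangle$ by the Leibniz identity $\nabla_\theta U=\sigma^{-2}\E[yx\tanh(\cdot)]$ from Lemma~\ref{lemma:em_update_gradients}. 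Next, $\tmop{KL}_z[\pi^t\|\pi]+\mathcal H_z[\pi^t]=-\pi^t(1)\ln\pi(1)-\pi^t(2)\ln\pi(2)=\ln(2\cosh\nu)-\nu\,(\pi^t(1)-\pi^t(2))$ via $\ln\pi(z)=-\ln(2\cosh\nu)\pm\nu$, and $\pi^t(1)-\pi^t(2)=\E_s[\tanh(\cdots)]=N(\theta^{t-1},\nu^{t-1})=\nabla_\nu U(\theta^{t-1},\nu^{t-1})$. Collecting the $(\theta,\nu)$-dependent terms gives $\tfrac{1}{2\sigma^2}\langle\theta,\E[xx^\top]\theta\rangle+\ln\cosh\nu-\langle\theta,\nabla_\theta U|_{t-1}\rangle-\nu\,\nabla_\nu U|_{t-1}$ plus a $(\theta,\pi)$-independent constant; I then rewrite the two linear terms as $-U|_{t-1}-\langle\nabla_\theta U|_{t-1},\theta-\theta^{t-1}\rangle-\nabla_\nu U|_{t-1}\,(\nu-\nu^{t-1})$ (absorbing the zeroth-order adjustment into the constant), which puts $g^t$ in the stated form. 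Finally, the remaining additive constant is forced to equal $\tfrac{1}{2\sigma^2}\E[y^2]-\E\ln p(x)-c$ by the surrogate property $g^t(\theta^{t-1},\pi^{t-1})=f(\theta^{t-1},\pi^{t-1})$ (from the general lemma on $g^t$) together with the formula for $f$ just proved, so $\E_s\mathcal H_z[q_s]$ never has to be evaluated directly.

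\textbf{Main obstacle.} There is no deep step; the work is bookkeeping: keeping $\E[xx^\top]$ (not $I_d$) throughout while matching $\langle x,\theta\rangle^2$ with $\langle\theta,xx^\top\theta\rangle$ under the expectation, and being scrupulous that in $g^t$ only the term $\E\ln\cosh(\tfrac{y\langle x,\theta\rangle}{\sigma^2}+\nu)$ is replaced by its first-order Taylor polynomial at $(\theta^{t-1},\nu^{t-1})$ while the quadratic-in-$\theta$ piece and $\ln\cosh\nu$ are kept exact — which is precisely what makes the unconstrained minimizer of $g^t$ equal $\big(M(\theta^{t-1},\nu^{t-1}),\,N(\theta^{t-1},\nu^{t-1})\big)$ and ties these derivations back to Equations~\eqref{eq:theta},~\eqref{eq:nu},~\eqref{eq:finite}. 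Using the $g^t=f$ matching at the previous iterate to pin down the constant, rather than summing the softmax normalizers and the entropy by hand, removes the only genuinely tedious computation.
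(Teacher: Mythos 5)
Your proposal is correct and follows essentially the same route as the paper's own derivation: specialize the general-MLR expressions, collapse the two-component sums via the $\tanh/\cosh$ reparametrization in $\nu$ (your $2\sqrt{\pi(1)\pi(2)}=1/\cosh\nu$ step is the paper's $\log 2+\log\cosh\nu=-\tfrac{\log\pi(1)+\log\pi(2)}{2}$ identity), compute the $q_s$-weighted quadratic and KL-plus-entropy terms using $q_s(1)-q_s(2)=\tanh\bigl(\tfrac{y\langle x,\theta^{t-1}\rangle}{\sigma^2}+\nu^{t-1}\bigr)$, and then use the tangency $g^t=f$ at $(\theta^{t-1},\pi^{t-1})$ to recast the linear terms as the first-order Taylor correction and fix the constant without evaluating $\E_s\mathcal{H}_z[q_s]$ explicitly. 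This matches the paper's argument step for step, so no gaps to report.
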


\begin{proof}
    \
    Note that $\tmop{sigmoid} (2 t) + \tmop{sigmoid} (- 2 t) = 1$ and
    $\tmop{sigmoid} (2 t) - \tmop{sigmoid} (- 2 t) = \tanh (t)$ for $\forall t$,

    $q_s(z) \leftarrow p\left(z \mid s ; \theta^{t-1}, \pi^{t-1}\right)
    =\tmop{softmax}_z(-\frac{\left\|y-\left\langle\theta_z^{t-1}, x\right\rangle\right\|^2}{2 \sigma^2}+\log \pi^{t-1}(z))
    =\tmop{sigmoid}(2(-1)^{z+1}\left[\frac{y \langle x, \theta^{t-1} \rangle}{\sigma^2} +\nu^{t-1}\right])$,

    $\nu^{t-1}\assign \frac{\log \pi^{t-1}(1) -\log \pi^{t-1}(2)}{2}$.
    \begin{eqnarray*}
    &   & \left(2 \sigma^2\right)^{-1} \mathbb{E}_{s \sim p\left(s \mid \theta^*, \pi^*\right)} 
    \mathbb{E}_{z \sim q_x(z)}\left\|y-\left\langle\theta_z, x\right\rangle\right\|^2 \\
    & = & \mathbb{E}_{s \sim p (s \mid \theta^{\ast}, \pi^{\ast})}
    \mathbb{E}_{z \sim q_s (z)} \frac{\| (- 1)^{z + 1} y - \langle x, \theta
    \rangle \|^2}{2 \sigma^2}\\
    & = & \mathbb{E}_{s \sim p (s \mid \theta^{\ast}, \pi^{\ast})} \frac{y^2 +
    \langle x, \theta \rangle^2}{2 \sigma^2} -\mathbb{E}_{s \sim p (s \mid
    \theta^{\ast}, \pi^{\ast})} \frac{y \langle x, \theta \rangle}{\sigma^2}
    \tanh \left( \frac{y \langle x, \theta^{t-1} \rangle}{\sigma^2} +\nu^{t-1} \right)\\
    & = & (2 \sigma^2)^{- 1}  \langle \theta, \mathbb{E}_{s \sim p (s \mid
    \theta^{\ast}, \pi^{\ast})} x x^{\top} \cdot \theta \rangle
    -  \left\langle \mathbb{E}_{s \sim p (s \mid \theta^{\ast},
    \pi^{\ast})} \tanh \left( \frac{y \langle x, \theta^{t - 1}
    \rangle}{\sigma^2} + \nu^{t - 1} \right) \frac{y x}{\sigma^2}, \theta \right\rangle
    + (2 \sigma^2)^{- 1} \mathbb{E}_{s \sim p (s \mid \theta^{\ast},
    \pi^{\ast})} y^2
    \end{eqnarray*}

    Consider the other terms for $g^t$, note $\pi^t(z) \assign \mathbb{E}_{s \sim p\left(s \mid \theta^*, \pi^*\right)} q_s(z)$, and 
    $\pi(z) = \tmop{sigmoid}(2(-1)^{z+1}\nu)$.

    Note that $\log 2 + \log \cosh \nu = - [\frac{\log \pi(1)+ \log \pi(2)}{2}]$ and $\tmop{sigmoid} (2 t) - \tmop{sigmoid} (- 2 t) = \tanh (t)$ for $\forall t$.
    \begin{eqnarray*}
    & &\mathrm{KL}_z\left[\pi^t(z) \| \pi(z)\right]
    + \mathcal{H}_z\left[\pi^t(z)\right]\\
    & = & - \mathbb{E}_{s \sim p\left(s \mid \theta^*, \pi^*\right)} \sum_{z\in\mathcal{Z}} q_s(z) \log \pi(z)\\
    & = & \log 2 + \log \cosh \nu -\mathbb{E}_{s \sim p (s \mid
    \theta^{\ast}, \pi^{\ast})} \tanh \left( \frac{y \langle x, \theta^{t - 1}
    \rangle}{\sigma^2} + \nu^{t - 1} \right) \cdot \nu\\
    \end{eqnarray*}
    To sum up, we obtain the following.
    \begin{eqnarray*}
        g^t &  = & (2 \sigma^2)^{- 1}  \langle \theta, \mathbb{E}_{s \sim p (s \mid
        \theta^{\ast}, \pi^{\ast})} x x^{\top} \cdot \theta \rangle -
        \left\langle \mathbb{E}_{s \sim p (s \mid \theta^{\ast},
        \pi^{\ast})} \tanh \left( \frac{y \langle x, \theta^{t - 1}
        \rangle}{\sigma^2} + \nu^{t - 1} \right) \frac{y x}{\sigma^2}, \theta \right\rangle\\
        &  & + \log 2 + \log \cosh \nu -\mathbb{E}_{s \sim p (s \mid
        \theta^{\ast}, \pi^{\ast})} \tanh \left( \frac{y \langle x, \theta^{t - 1}
        \rangle}{\sigma^2} + \nu^{t - 1} \right) \cdot \nu\\
        &  & -\mathbb{E}_{s \sim p (s \mid \theta^{\ast}, \pi^{\ast})}
        \mathcal{H}_z [q_s (z)]\\
        &  & + (2 \sigma^2)^{- 1} \mathbb{E}_{s \sim p (s \mid \theta^{\ast},
        \pi^{\ast})} y^2 -\mathbb{E}_{s \sim p (s \mid \theta^{\ast},
        \pi^{\ast})} \log p (x) - c
    \end{eqnarray*}
    For the negative expectation of log-likelihood $f$, we show the following.
    \begin{eqnarray*}
    f & = & -\mathbb{E}_{s \sim p (s \mid \theta^{\ast}, \pi^{\ast})} \log
    \sum_{z \in \mathcal{Z}} \exp \left[ - \frac{\| y - \langle \theta_z, x
    \rangle \|^2}{2 \sigma^2} + \log \pi (z) \right] -\mathbb{E}_{s \sim p (s
    \mid \theta^{\ast}, \pi^{\ast})} \log p (x) - c\\
    & = & (2 \sigma^2)^{- 1} \mathbb{E}_{s \sim p (s \mid \theta^{\ast},
    \pi^{\ast})} [y^2 + \langle x, \theta \rangle^2] - \frac{\log \pi
    (1) + \log \pi (2)}{2}\\
    &  & - \log 2 -\mathbb{E}_{s \sim p (s \mid \theta^{\ast}, \pi^{\ast})}
    \log \cosh \left( \frac{y \langle x, \theta \rangle}{\sigma^2} + \nu
    \right) -\mathbb{E}_{s \sim p (s \mid \theta^{\ast}, \pi^{\ast})} \log p
    (x) - c\\
    & = & (2 \sigma^2)^{- 1}  \langle \theta, \mathbb{E}_{s \sim p (s \mid
    \theta^{\ast}, \pi^{\ast})} x x^{\top} \cdot \theta \rangle + \log \cosh
    \nu -\mathbb{E}_{s \sim p (s \mid \theta^{\ast}, \pi^{\ast})} \log \cosh
    \left( \frac{y \langle x, \theta \rangle}{\sigma^2} + \nu \right)\\
    &  & + (2 \sigma^2)^{- 1} \mathbb{E}_{s \sim p (s \mid \theta^{\ast},
    \pi^{\ast})} y^2 -\mathbb{E}_{s \sim p (s \mid \theta^{\ast},
    \pi^{\ast})} \log p (x) - c
    \end{eqnarray*}

    Note $g^t =f$ at $(\theta, \nu)=(\theta^{t-1}, \nu^{t-1})$, by comparing the expressions  $f, g^t$,
    and use $\frac{\mathd \log \cosh(t)}{\mathd t} = \tanh (t)$.
    \begin{eqnarray*}
    g^t & = & (2 \sigma^2)^{- 1}  \langle \theta, \mathbb{E}_{s \sim p (s \mid
    \theta^{\ast}, \pi^{\ast})} x x^{\top} \cdot \theta \rangle + \log \cosh
    \nu -\mathbb{E}_{s \sim p (s \mid \theta^{\ast}, \pi^{\ast})} \log \cosh
    \left( \frac{y \langle x, \theta \rangle}{\sigma^2} + \nu \right)
    \mid_{(\theta, \nu) \leftarrow (\theta^{t - 1}, \nu^{t - 1})}\\
    & - & \left\langle \nabla_{\theta} \mathbb{E}_{s \sim p (s \mid
    \theta^{\ast}, \pi^{\ast})} \log \cosh \left( \frac{y \langle x, \theta
    \rangle}{\sigma^2} + \nu \right) \mid_{(\theta, \nu) \leftarrow (\theta^{t
    - 1}, \nu^{t - 1})}, \theta - \theta^{t - 1} \right\rangle\\
    & - & \left\langle \nabla_{\nu} \mathbb{E}_{s \sim p (s \mid
    \theta^{\ast}, \pi^{\ast})} \log \cosh \left( \frac{y \langle x, \theta
    \rangle}{\sigma^2} + \nu \right) \mid_{(\theta, \nu) \leftarrow (\theta^{t
    - 1}, \nu^{t - 1})}, \nu - \nu^{t - 1} \right\rangle\\
    & + & (2 \sigma^2)^{- 1} \mathbb{E}_{s \sim p (s \mid \theta^{\ast},
    \pi^{\ast})} y^2 -\mathbb{E}_{s \sim p (s \mid \theta^{\ast},
    \pi^{\ast})} \log p (x) - c
    \end{eqnarray*}
\end{proof}

\begin{theorem}
    For 2MLR, $y= (-1)^{z+1}\langle x, \theta^\ast\rangle + \varepsilon, z\in\mathcal{Z}=\{1, 2\}$, $\theta_1 = \theta, \theta_2 = -\theta, \pi \assign \{\pi(z)\}_{z\in\mathcal{Z}}$, 
    with assumptions: $(z; \pi)\ind \theta$, and $\varepsilon \ind (x, z; \theta, \pi)$, and $x\ind (z; \theta, \pi)$ and $\varepsilon \sim \mathcal{N}(0, \sigma^2)$.

    Then the negative Maximum Likelihood Estimat (MLE) $f_n (\theta, \pi) \assign
    -\frac{1}{n}\sum_{i=1}^n [\log p (s_i \mid
    \theta, \pi)]$ and the surrogate function $g_n^t (\theta, \pi)$ for the dataset $\mathcal{S}\assign \{s_i\}_{i=1}^n=\{(x_i, y_i)\}_{i=1}^n$ of $n$ i.i.d. samples can be expressed as follows.
    \begin{eqnarray*}
    f_n  & = & (2 \sigma^2)^{- 1}  \langle \theta, \frac{1}{n}\sum_{i=1}^n x_i x_i^{\top} \cdot \theta \rangle + \log \cosh
        \nu -\frac{1}{n}\sum_{i=1}^n \log \cosh
        \left( \frac{y_i \langle x_i, \theta \rangle}{\sigma^2} + \nu \right)\\
        & + &  (2 \sigma^2)^{- 1} \frac{1}{n}\sum_{i=1}^n y_i^2 -\frac{1}{n}\sum_{i=1}^n \log p (x_i) - c\\
    g_n^t & = & (2 \sigma^2)^{- 1}  \langle \theta, \frac{1}{n}\sum_{i=1}^n x_i x_i^{\top} \cdot \theta \rangle + \log \cosh
    \nu -\frac{1}{n}\sum_{i=1}^n \log \cosh
    \left( \frac{y_i \langle x_i, \theta \rangle}{\sigma^2} + \nu \right)
    \mid_{(\theta, \nu) \leftarrow (\theta^{t - 1}, \nu^{t - 1})}\\
    & - & \left\langle \nabla_{\theta} \frac{1}{n}\sum_{i=1}^n \log \cosh \left( \frac{y_i \langle x_i, \theta
    \rangle}{\sigma^2} + \nu \right) \mid_{(\theta, \nu) \leftarrow (\theta^{t
    - 1}, \nu^{t - 1})}, \theta - \theta^{t - 1} \right\rangle\\
    & - & \left\langle \nabla_{\nu} \frac{1}{n}\sum_{i=1}^n \log \cosh \left( \frac{y_i \langle x_i, \theta
    \rangle}{\sigma^2} + \nu \right) \mid_{(\theta, \nu) \leftarrow (\theta^{t
    - 1}, \nu^{t - 1})}, \nu - \nu^{t - 1} \right\rangle\\
    & + & (2 \sigma^2)^{- 1} \frac{1}{n}\sum_{i=1}^n y_i^2 -\frac{1}{n}\sum_{i=1}^n \log p (x_i) - c
    \end{eqnarray*}
    where $\nu \assign \frac{\log \pi(1) -\log \pi(2)}{2}$,  $c=-\frac{1}{2} \log \left(2 \pi \sigma^2\right)$.
\end{theorem}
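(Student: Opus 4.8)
The plan is to mirror the derivation of the population-level counterpart (the preceding Theorem giving $f$ and $g^t$ for 2MLR), replacing the population expectation $\mathbb{E}_{s\sim p(s\mid\theta^\ast,\pi^\ast)}[\cdot]$ everywhere by the empirical average $\frac{1}{n}\sum_{i=1}^n[\cdot]$ over the fixed dataset $\mathcal{S}=\{s_i\}_{i=1}^n$. The only property of the population expectation used in that derivation is linearity, and the empirical average is also linear, so every identity transfers verbatim.

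First I would invoke the finite-sample analogue of the general-MLR lemma. Since $f_n(\theta,\pi)=\frac{1}{n}\sum_{i=1}^n\bigl(-\ln p(s_i\mid\theta,\pi)\bigr)$ is an average of per-sample negative log-likelihoods, and the evidence-lower-bound (surrogate) construction applies to each term $-\ln p(s_i\mid\theta,\pi)$ separately with its own posterior weights $q_{s_i}(z)=p(z\mid s_i;\theta^{t-1},\pi^{t-1})$, the surrogate $g_n^t$ is the average of the per-sample surrogates. This produces $g_n^t\geq f_n$, with equality at $(\theta^{t-1},\pi^{t-1})$ and matching first-order derivatives there, and gives $g_n^t(\theta,\pi)=(2\sigma^2)^{-1}\tfrac{1}{n}\sum_{i=1}^n\mathbb{E}_{z\sim q_{s_i}(z)}\|y_i-\langle\theta_z,x_i\rangle\|^2+\mathrm{KL}_z[\pi^t\,\|\,\pi]+\mathcal{H}_z[\pi^t]-\tfrac{1}{n}\sum_{i=1}^n\ln p(x_i)-c-\tfrac{1}{n}\sum_{i=1}^n\mathcal{H}_z[q_{s_i}(z)]$, where now $\pi^t(z):=\tfrac{1}{n}\sum_{i=1}^n q_{s_i}(z)$ and $c=-\tfrac{1}{2}\ln(2\pi\sigma^2)$.

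Next I would specialize to 2MLR: set $\theta_1=\theta,\theta_2=-\theta$, use $p(y_i\mid x_i,z;\theta)=\mathcal{N}(\langle x_i,\theta_z\rangle,\sigma^2)$ so that $\ln p(y_i\mid x_i,z;\theta)=-\|(-1)^{z+1}y_i-\langle x_i,\theta\rangle\|^2/(2\sigma^2)+c$, and apply the identities $\mathrm{sigmoid}(2t)+\mathrm{sigmoid}(-2t)=1$ and $\mathrm{sigmoid}(2t)-\mathrm{sigmoid}(-2t)=\tanh t$ to write $q_{s_i}(z)=\mathrm{sigmoid}\bigl(2(-1)^{z+1}[y_i\langle x_i,\theta^{t-1}\rangle/\sigma^2+\nu^{t-1}]\bigr)$ and $\pi(z)=\mathrm{sigmoid}(2(-1)^{z+1}\nu)$ with $\nu=\tfrac{1}{2}(\ln\pi(1)-\ln\pi(2))$. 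Expanding the quadratic term yields $(2\sigma^2)^{-1}\langle\theta,\tfrac{1}{n}\sum_i x_ix_i^\top\,\theta\rangle$, a cross term carrying the per-sample factor $\tanh(y_i\langle x_i,\theta^{t-1}\rangle/\sigma^2+\nu^{t-1})\,y_ix_i/\sigma^2$, and the constant $(2\sigma^2)^{-1}\tfrac{1}{n}\sum_i y_i^2$; the mixing-weight contribution collapses via $\ln 2+\ln\cosh\nu=-\tfrac{1}{2}(\ln\pi(1)+\ln\pi(2))$ into $\ln\cosh\nu$ plus a term linear in $\nu$ with coefficient $\tfrac{1}{n}\sum_i\tanh(y_i\langle x_i,\theta^{t-1}\rangle/\sigma^2+\nu^{t-1})$. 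Recognizing through $\tfrac{\mathd}{\mathd t}\ln\cosh t=\tanh t$ that these collected cross and $\nu$-linear terms are exactly the first-order Taylor terms of $-\tfrac{1}{n}\sum_i\ln\cosh(y_i\langle x_i,\theta\rangle/\sigma^2+\nu)$ at $(\theta^{t-1},\nu^{t-1})$ gives the claimed expression for $g_n^t$, and the same collection performed directly on $f_n$, using $\ln p(s_i\mid\theta,\pi)=\ln p(x_i)+\ln\sum_z\pi(z)p(y_i\mid x_i,z;\theta)$, gives the claimed expression for $f_n$.

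There is no genuine obstacle here: the argument is a term-by-term transcription of the population proof, so the only care required is bookkeeping — verifying that the per-sample surrogate construction commutes with the average over $\mathcal{S}$, that the posterior $q_{s_i}(z)$ and the Shannon-entropy terms are handled consistently with the redefinition $\pi^t(z):=\tfrac{1}{n}\sum_i q_{s_i}(z)$, and that the constants $c$ and $\tfrac{1}{n}\sum_i\ln p(x_i)$ are tracked correctly throughout.
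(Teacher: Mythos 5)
Your proposal is correct and follows essentially the same route as the paper, which proves this result by substituting the empirical average $\frac{1}{n}\sum_{i=1}^n$ over $s_i=(x_i,y_i)$ for the population expectation $\mathbb{E}_{s\sim p(s\mid\theta^\ast,\pi^\ast)}$ in the preceding population-level theorem (while noting $\frac{1}{n}\sum_i x_i x_i^\top \not\equiv I_d$). Your more detailed bookkeeping of the per-sample surrogates, the posterior weights $q_{s_i}(z)$, and the redefinition $\pi^t(z):=\frac{1}{n}\sum_i q_{s_i}(z)$ simply makes explicit why that substitution is valid, namely that only linearity of the averaging operator is used.
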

\begin{proof}
This is proved by susbstituting $\frac{1}{n}\sum_{i=1}^n, s_i \assign (x_i, y_i)$ for $\mathbb{E}_{s \sim p (s \mid \theta^{\ast},
\pi^{\ast})}, s\assign (x, y)$ in the previous Theorem.
\end{proof}

\newpage
\subsection{Derivations of EM Update Rules for 2MLR}
\begin{theorem}{(Derivation of Population EM Update Rules: Eq.~\eqref{eq:theta},~\eqref{eq:nu})}
For 2MLR, $y= (-1)^{z+1}\langle x, \theta^\ast\rangle + \varepsilon, z\in\mathcal{Z}=\{1, 2\}$, $\theta_1 = \theta, \theta_2 = -\theta, \pi \assign \{\pi(z)\}_{z\in\mathcal{Z}}$, 
with assumptions: $(z; \pi)\ind \theta$, and $\varepsilon \ind (x, z; \theta, \pi)$, and $x\ind (z; \theta, \pi)$ and $\varepsilon \sim \mathcal{N}(0, \sigma^2), x\sim \mathcal{N}(0, I_d)$.

The EM update rules $M(\theta^{t-1}, \nu^{t-1}), N(\theta^{t-1}, \nu^{t-1})$ for $\theta, \tanh(\nu)$ at the population level, namely the minimizer of the surrogate $g^t$ / the maximizer of $Q$, are the following.
\begin{eqnarray*}
    M(\theta^{t-1}, \nu^{t-1}) & = & \mathbb{E}_{s \sim p (s \mid \theta^{\ast},
    \pi^{\ast})} \tanh\left(\frac{y\langle x, \theta^{t-1}\rangle}{\sigma^2}+\nu^{t-1}\right) y x\\
    N(\theta^{t-1}, \nu^{t-1}) & = & \mathbb{E}_{s \sim p (s \mid \theta^{\ast},
    \pi^{\ast})} \tanh\left(\frac{y\langle x, \theta^{t-1}\rangle}{\sigma^2}+\nu^{t-1}\right)
\end{eqnarray*}
\end{theorem}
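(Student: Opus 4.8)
The plan is to read off the minimizer directly from the explicit surrogate $g^t(\theta,\pi)$ derived in the previous theorem, after observing that it decouples into a strictly convex function of $\theta$ plus a strictly convex function of $\nu\assign\frac{\ln\pi(1)-\ln\pi(2)}{2}$. Under the standing assumption $x\sim\mathcal{N}(0,I_d)$ we have $\mathbb{E}_{s\sim p(s\mid\theta^\ast,\pi^\ast)}[xx^\top]=I_d$, so the quadratic term $(2\sigma^2)^{-1}\langle\theta,\mathbb{E}[xx^\top]\theta\rangle$ in $g^t$ simplifies to $\|\theta\|^2/(2\sigma^2)$. Since the $q_s(z)$ has already been substituted when forming $g^t$, the M-step is exactly the joint minimization of $g^t$ over $(\theta,\pi)$, and the surrogate-function properties established earlier ($g^t\ge f$ with matching value and gradients at the previous iterate) guarantee that this minimizer is a legitimate EM iterate.

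First I would abbreviate the two gradient vectors occurring in $g^t$ evaluated at the previous iterate: set
\[
a \assign \nabla_\theta\,\mathbb{E}_{s\sim p(s\mid\theta^\ast,\pi^\ast)}\ln\cosh\!\big(\tfrac{y\langle x,\theta\rangle}{\sigma^2}+\nu\big)\big|_{(\theta,\nu)=(\theta^{t-1},\nu^{t-1})},\qquad
b \assign \nabla_\nu\,\mathbb{E}_{s\sim p(s\mid\theta^\ast,\pi^\ast)}\ln\cosh\!\big(\tfrac{y\langle x,\theta\rangle}{\sigma^2}+\nu\big)\big|_{(\theta,\nu)=(\theta^{t-1},\nu^{t-1})}.
\]
By Leibniz's rule and $\frac{\mathd}{\mathd t}\ln\cosh t=\tanh t$ one gets $a=\sigma^{-2}\mathbb{E}[\tanh(\tfrac{y\langle x,\theta^{t-1}\rangle}{\sigma^2}+\nu^{t-1})\,yx]$ and $b=\mathbb{E}[\tanh(\tfrac{y\langle x,\theta^{t-1}\rangle}{\sigma^2}+\nu^{t-1})]$. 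Discarding all terms of $g^t$ that are constant in $(\theta,\pi)$, the surrogate takes the separable form $g^t(\theta,\pi)=\big[\tfrac{\|\theta\|^2}{2\sigma^2}-\langle a,\theta\rangle\big]+\big[\ln\cosh\nu-b\,\nu\big]+\mathrm{const}$.

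Next I would minimize each bracket. The first is strictly convex in $\theta$ with Hessian $\sigma^{-2}I_d\succ0$, so the unique minimizer solves $\theta/\sigma^2-a=0$, i.e. $\theta^t=\sigma^2a=\mathbb{E}_{s\sim p(s\mid\theta^\ast,\pi^\ast)}\tanh\!\big(\tfrac{y\langle x,\theta^{t-1}\rangle}{\sigma^2}+\nu^{t-1}\big)\,yx$, which is the claimed $M(\theta^{t-1},\nu^{t-1})$. The second bracket has second derivative $1-\tanh^2\nu>0$, hence is strictly convex, and is coercive since $|b|<1$; its unique minimizer solves $\tanh\nu-b=0$, so the EM update for the imbalance of the mixing weights is $\tanh\nu^t=N(\theta^{t-1},\nu^{t-1})=\mathbb{E}_{s\sim p(s\mid\theta^\ast,\pi^\ast)}\tanh\!\big(\tfrac{y\langle x,\theta^{t-1}\rangle}{\sigma^2}+\nu^{t-1}\big)$. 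Because $\nu\mapsto\tanh\nu=\pi(1)-\pi(2)$ is a bijection of $\mathbb{R}$ onto $(-1,1)$, this pins down $\pi^t$ uniquely, and by separability $(\theta^t,\pi^t)$ is the joint minimizer of $g^t$.

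The work here is essentially bookkeeping rather than a genuine difficulty: the one point that must be handled with care is the interchange of expectation and gradient, which I would justify by dominated convergence using the uniform bounds $\|\nabla_\theta\ln\cosh(\cdot)\|\le\|yx\|/\sigma^2$ and $|\nabla_\nu\ln\cosh(\cdot)|\le1$ together with $\mathbb{E}\|yx\|<\infty$. I would also remark that, without the Gaussian covariate assumption, the identical argument with $\mathbb{E}[xx^\top]$ in place of $I_d$ yields $M(\theta^{t-1},\nu^{t-1})=\mathbb{E}[xx^\top]^{-1}\mathbb{E}[\tanh(\cdot)\,yx]$, consistent with the remark following equation~\eqref{eq:theta}.
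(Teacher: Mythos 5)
Your proposal is correct and follows essentially the same route as the paper's own proof: take the first-order conditions of the separable surrogate $g^t$ in $\theta$ and $\nu$, invoke strict convexity (Hessians $\sigma^{-2}\E[xx^\top]\succ0$ and $\cosh^{-2}\nu>0$) to identify the stationary point as the minimizer, and use $\E[xx^\top]=I_d$ for Gaussian covariates to read off the stated updates. Your added remarks on coercivity via $|b|<1$ and the dominated-convergence justification of the gradient--expectation interchange are fine refinements of the same argument, not a different approach.
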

\begin{proof}
Take the gradients of $g^t$ wrt. $\theta, \nu$, we obtain the following.
\begin{eqnarray*}
    \nabla_\theta g^t & = & (\sigma^2)^{- 1} \mathbb{E}_{s \sim p (s \mid
    \theta^{\ast}, \pi^{\ast})} x x^{\top} \cdot \theta
    - (\sigma^2)^{- 1} \mathbb{E}_{s \sim p (s \mid \theta^{\ast},
    \pi^{\ast})} \tanh\left(\frac{y\langle x, \theta^{t-1}\rangle}{\sigma^2}+\nu^{t-1}\right) y x\\
    \nabla_\nu g^t & = & \tanh \nu - \mathbb{E}_{s \sim p (s \mid \theta^{\ast},
    \pi^{\ast})} \tanh\left(\frac{y\langle x, \theta^{t-1}\rangle}{\sigma^2}+\nu^{t-1}\right)
\end{eqnarray*}
Furthermore, the Hessian of $g^t$ wrt. $\theta, \nu$ are positive-definite, we show that the solution to $\nabla_\theta g^t=0, \nabla_\nu g^t=0$ must be the minimizer of $g^t$.
Since the Hessian of $g^t$ at the solution is positive-definite, \(\nabla_\theta (\nabla_\nu g^t) =\vec{0}\) and
\begin{eqnarray*}
    \nabla^2_\theta g^t & = & (\sigma^2)^{- 1} \mathbb{E}_{s \sim p (s \mid
    \theta^{\ast}, \pi^{\ast})} x x^{\top}\\
    \nabla^2_\nu g^t & = & \cosh^{-2} \nu 
\end{eqnarray*}
Note that $\mathbb{E}_{s \sim p (s \mid
\theta^{\ast}, \pi^{\ast})} x x^{\top} = I_d$ for $x\sim \mathcal{N}(0, I_d)$, we derive the expressions for EM update rules.
\end{proof}

\begin{theorem}{(Derivation of Finite-Sample EM Update Rules: Eq.~\eqref{eq:finite})}
For 2MLR, $y= (-1)^{z+1}\langle x, \theta^\ast\rangle + \varepsilon, z\in\mathcal{Z}=\{1, 2\}$, $\theta_1 = \theta, \theta_2 = -\theta, \pi \assign \{\pi(z)\}_{z\in\mathcal{Z}}$, 
with assumptions: $(z; \pi)\ind \theta$, and $\varepsilon \ind (x, z; \theta, \pi)$, and $x\ind (z; \theta, \pi)$ and $\varepsilon \sim \mathcal{N}(0, \sigma^2), x\sim \mathcal{N}(0, I_d)$.
    
The EM update rules $M_n(\theta^{t-1}, \nu^{t-1}), N_n(\theta^{t-1}, \nu^{t-1})$ for $\theta, \tanh(\nu)$ at the finite-sample level, namely the minimizer of the surrogate $g_n^t$, are the following.
\begin{eqnarray*}
    M_n(\theta^{t-1}, \nu^{t-1}) & = & \left(\frac{1}{n}\sum_{i=1}^n x_i x_i^\top\right)^{-1}\cdot
    \frac{1}{n}\sum_{i=1}^n \tanh\left(\frac{y\langle x, \theta^{t-1}\rangle}{\sigma^2}+\nu^{t-1}\right) y x\\
    N_n(\theta^{t-1}, \nu^{t-1}) & = & \frac{1}{n}\sum_{i=1}^n \tanh\left(\frac{y\langle x, \theta^{t-1}\rangle}{\sigma^2}+\nu^{t-1}\right)
\end{eqnarray*}
\end{theorem}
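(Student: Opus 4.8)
The plan is to mirror the population-level derivation of the preceding theorem, with the empirical average $\frac{1}{n}\sum_{i=1}^n$ playing the role of the population expectation $\mathbb{E}_{s\sim p(s\mid\theta^\ast,\pi^\ast)}$ throughout (this is exactly the substitution $\frac{1}{n}\sum_{i=1}^n,\ s_i=(x_i,y_i)\ \leftarrow\ \mathbb{E}_{s\sim p(s\mid\theta^\ast,\pi^\ast)},\ s=(x,y)$ used to pass from $f,g^t$ to $f_n,g_n^t$). First I would invoke the earlier theorem giving the closed form of the finite-sample surrogate $g_n^t(\theta,\pi)$: the only terms depending on $(\theta,\nu)$ are the quadratic term $(2\sigma^2)^{-1}\langle\theta,\tfrac{1}{n}\sum_{i=1}^n x_i x_i^\top\cdot\theta\rangle$, the term $\ln\cosh\nu$, and the two linear terms, one in $\theta-\theta^{t-1}$ and one in $\nu-\nu^{t-1}$, coming from the first-order expansion of $\tfrac{1}{n}\sum_{i=1}^n\ln\cosh(y_i\langle x_i,\theta\rangle/\sigma^2+\nu)$ about $(\theta^{t-1},\nu^{t-1})$; all remaining terms are constants in $(\theta,\nu)$.

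Next I would differentiate, using $\tfrac{\mathd}{\mathd t}\ln\cosh(t)=\tanh(t)$, to obtain
\[
\nabla_\theta g_n^t = \frac{1}{\sigma^2}\Big(\frac{1}{n}\sum_{i=1}^n x_i x_i^\top\Big)\theta - \frac{1}{\sigma^2}\frac{1}{n}\sum_{i=1}^n \tanh\!\Big(\frac{y_i\langle x_i,\theta^{t-1}\rangle}{\sigma^2}+\nu^{t-1}\Big)y_i x_i,
\]
\[
\nabla_\nu g_n^t = \tanh\nu - \frac{1}{n}\sum_{i=1}^n \tanh\!\Big(\frac{y_i\langle x_i,\theta^{t-1}\rangle}{\sigma^2}+\nu^{t-1}\Big).
\]
Setting both to zero and solving yields $\theta = \big(\tfrac{1}{n}\sum_{i=1}^n x_i x_i^\top\big)^{-1}\tfrac{1}{n}\sum_{i=1}^n\tanh(\cdot)\,y_i x_i = M_n(\theta^{t-1},\nu^{t-1})$ and $\tanh\nu = \tfrac{1}{n}\sum_{i=1}^n\tanh(\cdot) = N_n(\theta^{t-1},\nu^{t-1})$, which matches \eqref{eq:finite}.

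Finally I would confirm that this stationary point is indeed the minimizer of $g_n^t$ by checking the Hessian: $\nabla^2_\theta g_n^t = \sigma^{-2}\tfrac{1}{n}\sum_{i=1}^n x_i x_i^\top$, $\nabla^2_\nu g_n^t = \cosh^{-2}\nu>0$, and the mixed block $\nabla_\theta(\nabla_\nu g_n^t)=\vec{0}$, so $g_n^t$ is jointly convex in $(\theta,\nu)$ provided $\tfrac{1}{n}\sum_{i=1}^n x_i x_i^\top\succ 0$. This positive-definiteness is the single point of genuine difference from the population derivation, where $\mathbb{E}[xx^\top]=I_d$ makes it automatic, so I expect it to be the main (and essentially only) obstacle: I would handle it by noting that for $x_i\stackrel{\text{i.i.d.}}{\sim}\mathcal{N}(0,I_d)$ with $n\ge d$ the sample second-moment matrix is almost surely positive definite, which simultaneously makes $M_n$ well-defined and makes the stationary point the unique minimizer of the strictly convex surrogate $g_n^t$. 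The remaining manipulations are routine algebra identical in form to the population case.
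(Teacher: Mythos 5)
Your proposal is correct and follows essentially the same route as the paper, which simply substitutes the empirical average $\frac{1}{n}\sum_{i=1}^n$ for the population expectation in the preceding population-level derivation (gradient of the surrogate, set to zero, verify the Hessian), noting only that $\frac{1}{n}\sum_{i=1}^n x_i x_i^\top \not\equiv I_d$. Your explicit remark that the sample second-moment matrix is almost surely positive definite for $n \ge d$ (so $M_n$ is well-defined and the stationary point is the unique minimizer) is a small but welcome elaboration of that same point.
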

\begin{proof}
This is proved by susbstituting $\frac{1}{n}\sum_{i=1}^n, s_i \assign (x_i, y_i)$ for $\mathbb{E}_{s \sim p (s \mid \theta^{\ast},
\pi^{\ast})}, s\assign (x, y)$ in the previous Theorem, but note that $\frac{1}{n}\sum_{i=1}^n x_i x_i^\top \not\equiv I_d$.
\end{proof}

\newpage
\section{Helper Lemmas Used in the Proofs of Results}\label{sup:lemma}

\subsection{Asymptotic Analysis of Integrals}
\begin{lemma}\label{suplem:integral_bigO}
    Suppose $0 \leq a < \frac{\pi}{2}$ and $0 < b \lesssim \cos^2 a$, then
    \begin{eqnarray*}
      \int_0^{2 \pi} \frac{b^2 | \sin x + \sin a | \mathd x}{(b + | \sin x +
      \sin a |)^2} & = & \mathcal{O} \left( \frac{b^2}{\cos a} \log \left(
      \frac{\cos^2 a}{b} \right) \right)\\
      \int_0^{2 \pi} \frac{b^2 \tmop{sgn} (\sin (x + a)) | \cos x + \cos a |
      \mathd x}{(b + | \sin x + \sin a |)^2} & = & \mathcal{O} \left( \tan a
      \cdot \frac{b^2}{\cos a} \log \left( \frac{\cos^2 a}{b} \right) \right)
    \end{eqnarray*}
\end{lemma}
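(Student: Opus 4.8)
The plan is to treat the two integrals in parallel, reducing both to an analysis of how $v \assign |\sin x + \sin a|$ behaves near its two zeros $x_1 \assign \pi + a$ and $x_2 \assign 2\pi - a$ on the circle. The elementary inequality $\tfrac{b^2 v}{(b+v)^2} \le \min\bigl(v,\tfrac{b^2}{v}\bigr)$ (from $(b+v)^2 \ge v^2$ and $(b+v)^2 \ge b^2$) reduces the first integral to $\int_0^{2\pi}\min\bigl(v,\tfrac{b^2}{v}\bigr)\,\mathd x$. The geometric input I would establish first is a two-sided control of $v$ in terms of $d(x)$, the arc-distance from $x$ to the nearest zero: using $\sin x + \sin a = 2\sin\tfrac{x+a}{2}\cos\tfrac{x-a}{2}$ together with Taylor expansion at each zero (where $\tfrac{\mathd}{\mathd x}(\sin x + \sin a) = \cos x = \pm\cos a$), one obtains $v \asymp \cos a\cdot d(x)$ for $d(x)\lesssim\cos a$, while $v\gtrsim\cos^2 a$ for $d(x)\gtrsim\cos a$, and on the bulk of the long arc $v = \Theta(1)$. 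The delicate case here is $a\to\tfrac\pi2$, where the two zeros merge and the short arc between them has length $\pi - 2a \asymp \cos a$; I would dispatch $a$ bounded away from $\tfrac\pi2$ trivially and reduce the general case to the exact identity $v = 2\sin\tfrac s2\sin\tfrac{\pi+2a-s}{2}$ on the long arc (with $s$ the arc parameter from $x_2$).

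With this in hand the first bound follows by splitting $[0,2\pi)$ into three regions: $d(x)\le b/\cos a$, where $\min(v,b^2/v) = v \lesssim \cos a\,d(x)$, contributing $\int_0^{b/\cos a}\cos a\,t\,\mathd t \asymp b^2/\cos a$ per zero; $b/\cos a < d(x)\le\cos a$, where $\min(v,b^2/v) = b^2/v \asymp b^2/(\cos a\,d(x))$, contributing $\int_{b/\cos a}^{\cos a}\tfrac{b^2}{\cos a\,t}\,\mathd t \asymp \tfrac{b^2}{\cos a}\log\tfrac{\cos^2 a}{b}$ per zero (the dominant term); and $d(x)>\cos a$, where $v\gtrsim\cos^2 a$ and moreover $v$ grows at least quadratically in the distance to the inner edge of this region, so $\int b^2/v \asymp b^2/\cos a$. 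Summing gives $\mathcal{O}\bigl(\tfrac{b^2}{\cos a}\log\tfrac{\cos^2 a}{b}\bigr)$; since $b\lesssim\cos^2 a$ with a sufficiently small universal constant we have $\log\tfrac{\cos^2 a}{b}\gtrsim1$, which absorbs the lower-order $b^2/\cos a$ terms.

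For the second integral the identities $\cos x + \cos a = 2\cos\tfrac{x+a}{2}\cos\tfrac{x-a}{2}$ and $\tfrac{\cos x + \cos a}{\sin x + \sin a} = \cot\tfrac{x+a}{2}$ are the key. Near $x_1 = \pi + a$ one has $\cot\tfrac{x+a}{2}\to\cot(\tfrac\pi2+a) = -\tan a$, so $|\cos x + \cos a| \le (\tan a + o(1))\,v$ there and this region contributes $\lesssim \tan a\cdot\tfrac{b^2}{\cos a}\log\tfrac{\cos^2 a}{b}$ by the $x_1$-localized part of the first estimate; away from both zeros, $|\cos x + \cos a|\le 2$ while $v\gtrsim\cos^2 a$, giving $\mathcal{O}(b^2/\cos a)$, which is absorbed. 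The genuine difficulty is the region near $x_2 = 2\pi - a$, where $|\cos x + \cos a|\to 2\cos a$ stays bounded below but the integrand peaks at size $\asymp 1/(b\cos a)$: with $x = x_2 + s$ one has $\sin(x+a) = \sin s$, $\cos x + \cos a = 2\cos a + \sin a\,s + \mathcal O(s^2)$ and $\sin x + \sin a = \cos a\,s + \tfrac{\sin a}{2}s^2 + \mathcal O(s^3)$, so the integrand is odd about $x_2$ to leading order and pairing the values at $x_2 + s$ and $x_2 - s$ cancels the $\mathcal O(\cos a)$ part exactly; the surviving contribution comes from the $\mathcal O(s^2)$ corrections in numerator and denominator, each carrying a factor $\tfrac{\sin a}{\cos a} = \tan a$, and $\int_0^{\cos a}\tfrac{\tan a\,b^2\,s}{(b+\cos a\,s)^2}\,\mathd s \asymp \tan a\cdot\tfrac{b^2}{\cos a}\log\tfrac{\cos^2 a}{b}$ yields exactly the claimed order. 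A clean way to organize this cancellation is the substitution $p = \tfrac{x+a}{2}$ and elementary reflections, which fold $I_2$ into $4b^2\int_0^{\pi/2}\cos p\,\bigl[h(\cos(p-a)) - h(\cos(p+a))\bigr]\,\mathd p$ with $h(c) = |c|/(b + 2\sin p\,|c|)^2$, exhibiting the difference structure that carries the $\tan a$ gain through $\cos(p-a) - \cos(p+a) = 2\sin p\sin a$.

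The main obstacle is precisely this near-$x_2$ estimate for $I_2$: the pointwise bound $\tfrac{b^2 v}{(b+v)^2}\le\min(v,b^2/v)$ is far too lossy there (it gives only $\mathcal O(b/\cos a)$), so one must exploit the near-oddness of the integrand about $x_2$ and track the first non-vanishing correction, which is where both the factor $\tan a$ and the logarithm originate; keeping the signs (rather than bounding the absolute value of the integrand) throughout that region is essential. Secondary care is needed for the endpoints $a\to\tfrac\pi2$ (merging zeros, collapsing scales $\cos a$ and $\cos^2 a$) and $a\to 0$ (where $\tan a\to0$ and indeed $I_2\to0$ by exact symmetry of the integrand under $x\mapsto 2\pi-x$), and for the convention that the $\log$ in the statement is read as $\log_+$ (equivalently with an additive constant), which is harmless under the standing hypothesis $b\lesssim\cos^2 a$.
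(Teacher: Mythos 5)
Your argument for the first integral is correct and essentially the paper's (split at distance $\asymp\cos a$ from the two zeros of $\sin x+\sin a$, slope $\asymp\cos a$ there producing the logarithm, and the product formula for $|\sin x+\sin a|$ controlling the long arc), so I focus on the second integral, where there is a genuine gap in the regime of small $a$: the right-hand side vanishes like $\tan a$, but your plan extracts cancellation only near $x_2=2\pi-a$. Near $x_1=\pi+a$, writing $x=x_1+u$, one has $|\cos x+\cos a|/|\sin x+\sin a|=|\cot\tfrac{x+a}{2}|=|\tan(a+\tfrac u2)|\asymp a+|u|$, so your pointwise bound $|\cos x+\cos a|\le(\tan a+o(1))\,v$ holds only for $|u|=O(a)$, not on the window $|u|\lesssim\cos a$ over which you invoke the $x_1$-localized part of the first estimate. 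On the remaining annulus $a\lesssim|u|\lesssim\cos a$ the numerator is $\asymp b^2\,u(a+|u|)$ and the absolute-value integral there is of order $b^2$ when $a$ is small (e.g.\ $a\le 1/\log\tfrac1b$), which is not $\mathcal O\bigl(\tan a\,\tfrac{b^2}{\cos a}\log\tfrac{\cos^2a}{b}\bigr)$; the true contribution is small only because the integrand is nearly odd about $x_1$ as well (its sign flips across $u\approx-2a$, and pairing $u\leftrightarrow -u$ gains a factor $O(\sin a)$), exactly as at $x_2$. The same defect hits your bulk step twice: from $|\cos x+\cos a|\le2$ and $v\gtrsim\cos^2 a$ you only get $\mathcal O(b^2/\cos^4 a)$, not $\mathcal O(b^2/\cos a)$, and even a correct $\mathcal O(b^2/\cos a)$ cannot be ``absorbed'' into a bound carrying the factor $\tan a$ as $a\to0$. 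The observation that $I_2=0$ at $a=0$ by exact symmetry gives no uniform control for $0<a\ll1$.

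The repair is to run the pairing globally, which is what the paper does: after a change of variables the entire integral is rewritten as $\sin a$ times sign-definite integrals (the differences $\tfrac1{(b+\sin(a+u)-\sin a)^2}-\tfrac1{(b+\sin a-\sin(a-u))^2}$ produce the explicit factor $2\sin a\cdot 4\sin^2\tfrac u2$) plus a remainder supported on an interval of length $2a$, and each piece is then estimated much as in your near-$x_2$ computation. Your folding identity with $p=\tfrac{x+a}{2}$ would serve the same purpose if applied to all of $I_2$ and combined with a uniform-in-$p$ estimate of $h_p(\cos(p-a))-h_p(\cos(p+a))$ through $\cos(p-a)-\cos(p+a)=2\sin p\sin a$; but then your proof coincides with the paper's, and as written---with the folding used only to organize the $x_2$-neighborhood and absolute values used at $x_1$ and in the bulk---the stated $\tan a$ factor is not established uniformly in $a$.
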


\begin{proof}
    Consider the two roots of $\sin x + \sin a = 0$:
    \( r_1 = \pi + a, r_2 = 2 \pi - a \),
    split integrals into two regions, select $\delta = \cos a$:
    \begin{eqnarray*}
      \text{Near roots} &  & E = E_1 \cup E_2, E_i \assign \{
      (r_i + u) \tmop{mod} 2 \pi \mid | u | < \delta \}, \forall i \in \{ 1, 2
      \}\\
      \text{Away from roots} &  & \bar{E} 
      = \bar{E}_1 \cup \bar{E}_2 
      , \bar{E}_1 = \{ x \mid r_1 + \delta \leq x \leq r_2 - \delta \}
      , \bar{E}_2 = \left\{ x \mid 0 \leq x \leq r_1 - \delta \infixor r_2 +
      \delta \leq x \leq 2 \pi \right\}
    \end{eqnarray*}
    (i) Consider the region $E$ near the roots:
    
    Note $| \cos r_i | = \cos a$ and by Taylor's theorem, apply $| u | < \delta$ and $\delta = \cos a$:
    \[ \frac{1}{2} \cos a \cdot | u | \leq\cos a \cdot | u | - \frac{u^2}{2} \leq | \sin (r_i + u) + \sin a | \leq
       \cos a \cdot | u | + \frac{u^2}{2}\leq
       \frac{3}{2} \cos a \cdot | u |, \forall i \in \{ 1, 2 \} \]
    For the sign $\tmop{sgn} (\sin (x + a))$,
    \begin{eqnarray*}
      \tmop{sgn} (\sin (r_1 + u + a)) & = & \tmop{sgn} (\sin (\pi + a + u + a))
      = - \tmop{sgn} (\sin (2 a + u))
    \end{eqnarray*}
    For the sign $- \tmop{sgn} (\sin x + \sin a)$, note $- \frac{\pi}{2} < - 1
    \leq a - \cos a = a - \delta \leq a + u \leq a + \delta = a + \cos a \leq
    \frac{\pi}{2} $for $| u | < \delta$,
    \[ - \tmop{sgn} (\sin (r_1 + u) + \sin a) = - \tmop{sgn} (\sin (\pi + a + u)
       + \sin a) = \tmop{sgn} (\sin (a + u) - \sin a) = \tmop{sgn} (u) \]
    \[ - \tmop{sgn} (\sin (r_2 + u) + \sin a) = - \tmop{sgn} (\sin (2 \pi - a +
       u) + \sin a) = \tmop{sgn} (\sin (a - u) - \sin a) = - \tmop{sgn} (u) \]
    (ii) Consider the region $\bar{E}$ away from the roots:
    
    By convexity of $\sin (t)$ on $[\pi, 2 \pi]$, note $r_1 = \pi + a, \delta =
    \cos a$, $\left| \cos \left( r_1 + \frac{\delta}{2} \right) \right| = \cos
    \left( a + \frac{\cos a}{2} \right) \geq \frac{1}{2} \cos a$, for $\forall x
    \in \bar{E}$,
    \[ | \sin x + \sin a | = | \sin x - \sin r_1 | \geq \sin r_1 - \sin (r_1 +
       \delta) \geq \delta \times \left| \cos \left( r_1 + \frac{\delta}{2}
       \right) \right| \geq \cos a \times \cos \left( a + \frac{\cos a}{2}
       \right) \geq \frac{1}{2} \cos^2 a \]
    For the measure $| \bar{E}_1 |$ of the set $\bar{E}_1 = \{ x \mid r_1 +
    \delta \leq x \leq r_2 - \delta \}$, note that $\cos a \leq \frac{\pi}{2} -
    a \leq \frac{\pi}{2} \cos a$,
    \[ 0 \leq | \bar{E}_1 | = (r_2 - r_1) - 2 \delta = (\pi - 2 a) - 2 \cos a
       \leq (\pi - 2) \cos a \]
    
    Regarding the differences of values of $\sin$, letting $r (a) \assign
    \tfrac{\pi}{2} - a$ for brevity, then by Taylor's theorem 
    \begin{eqnarray*}
      \sin (a + r (a) t) - \sin a & = & \cos a \cdot r (a) t - \frac{\sin
      \theta}{2} r^2 (a) t^2 \quad t \in [0, 1], \theta \in [a, a + r (a)] =
      \left[ a, \frac{\pi}{2} \right]\\
      \sin a - \sin (a - r (a) t) & = & \cos a \cdot r (a) t + \frac{\sin
      \theta'}{2} r^2 (a) t^2 \quad t \in [0, 1], \theta' \in [a - r (a), a] =
      \left[ 2 a - \frac{\pi}{2}, a \right]
    \end{eqnarray*}
    By $0 \leq t^2 \leq t \leq 1$, $0 \leq \sin a \leq \sin \theta \leq 1, \cos
    a \leq r (a) \leq \frac{\pi}{2} \cos a$ and $1 - 2 \cos^2 a = \sin \left( 2
    a - \frac{\pi}{2} \right) \leq \sin \theta' \leq \sin a$, note $\sin$ is
    concave on $\left[ a, \frac{\pi}{2} \right]$, and $1 + \frac{1 - 2 \cos^2
    a}{2} \cdot \frac{r (a)}{\cos a} t \geq 1 - \frac{\pi}{4}$, then
    \begin{eqnarray*}
      \frac{\cos^2 a}{2} t = \left( \frac{1 - \sin^2 a}{2} \right) t \leq (1 -
      \sin a) t \leq & \sin (a + r (a) t) - \sin a & \leq \cos a \cdot r (a) t\\
      \left( 1 - \frac{\pi}{4} \right) \cos a \cdot r (a) t \leq \left( 1 +
      \frac{1 - 2 \cos^2 a}{2} \cdot \frac{r (a)}{\cos a} t \right) \cos a \cdot
      r (a) t \leq & \sin a - \sin (a - r (a) t) & \leq \left( \cos a \cdot r
      (a) + \frac{r^2 (a)}{2} \right) t
    \end{eqnarray*}
    By applying $\cos a \leq r (a) \leq \frac{\pi}{2} \cos a$ again, thus
    \begin{eqnarray*}
      \frac{1}{\pi} \cos a \cdot r (a) t \leq & \sin (a + r (a) t) - \sin a &
      \leq \cos a \cdot r (a) t\\
      \left( 1 - \frac{\pi}{4} \right) \cos a \cdot r (a) t \leq & \sin a - \sin
      (a - r (a) t) & \leq \left( 1 + \frac{\pi}{4} \right) \cos a \cdot r (a) t
    \end{eqnarray*}
    Namely, for $\forall u \in [0, r (a)] = \left[ 0, \tfrac{\pi}{2} - a
    \right]$,
    \begin{eqnarray*}
      \left( 1 - \frac{\pi}{4} \right) \cos a \cdot u \leq \frac{1}{\pi} \cos a
      \cdot u \leq & \sin (a + u) - \sin a & \leq \cos a \cdot u \leq \left( 1 +
      \frac{\pi}{4} \right) \cos a \cdot u\\
      \left( 1 - \frac{\pi}{4} \right) \cos a \cdot u \leq & \sin a - \sin (a -
      u) & \leq \left( 1 + \frac{\pi}{4} \right) \cos a \cdot u
    \end{eqnarray*}
    \tmtextbf{(1) }For the integral $\int_0^{2 \pi} \frac{b^2 | \sin x + \sin a
    | \mathd x}{(b + | \sin x + \sin a |)^2}$
    
    In the region $E = E_1 \cup E_2$ near the roots, by using $\frac{1}{2} \cos
    a \cdot | u | \leq | \sin (r_i + u) + \sin a | \leq \frac{3}{2} \cos a \cdot
    | u |$ for $| u | < \delta$,
    \[ b^2 \int_{| u | < \delta} \frac{\frac{1}{2} \cos a | u | \mathd u}{\left(
       b + \frac{3}{2} \cos a | u | \right)^2} \leq \int_{E_i} \frac{b^2 | \sin
       x + \sin a | \mathd x}{(b + | \sin x + \sin a |)^2} \leq b^2 \int_{| u |
       < \delta} \frac{\frac{3}{2} \cos a | u | \mathd u}{\left( b + \frac{1}{2}
       \cos a | u | \right)^2} \quad \forall i \in \{ 1, 2 \} \]
    Let $t = \frac{u}{\delta} = \frac{u}{\cos a}$, and note that $\int_E =
    \int_{E_1} + \int_{E_2}$, then
    \[ 2 \times \frac{\frac{1}{2}}{\left( \frac{3}{2} \right)^2} \times
       \frac{b^2}{\cos a} \int^1_0 \frac{t \mathd t}{\left( \frac{\frac{2}{3}
       b}{\cos^2 a} + t \right)^2} \leq \int_E \frac{b^2 | \sin x + \sin a |
       \mathd x}{(b + | \sin x + \sin a |)^2} \leq 2 \times
       \frac{\frac{3}{2}}{\left( \frac{1}{2} \right)^2} \times \frac{b^2}{\cos
       a} \int^1_0 \frac{t \mathd t}{\left( \frac{2 b}{\cos^2 a} + t \right)^2}
    \]
    Note that $\int_0^1 \frac{t \mathd t}{(C + t)^2} = \ln (1 + C^{- 1}) -
    \frac{1}{1 + C} = \Theta (\log C^{- 1})$ for $0 < C \lesssim 1$, and under the
    assumption $0 < b \lesssim \cos^2 a$,
    \[ \int_E \frac{b^2 | \sin x + \sin a | \mathd x}{(b + | \sin x + \sin a
       |)^2} = \Theta \left( \frac{b^2}{\cos a} \log \left( \frac{\cos^2 a}{b}
       \right) \right) \]
    In the region $\bar{E} = \bar{E}_1 \cup \bar{E}_2$ away from the roots, note
    that $0 \leq | \bar{E}_1 | \leq (\pi - 2) \cos a$, $| \sin x + \sin a | \geq
    \frac{\cos^2 a}{2}, \forall x \in \bar{E}$,
    \[ 0 \leq \int_{\bar{E}_1} \frac{b^2 | \sin x + \sin a | \mathd x}{(b + |
       \sin x + \sin a |)^2} \leq | \bar{E}_1 | \times \sup_{x \in \bar{E}_1}
       \frac{b^2}{| \sin x + \sin a |} \leq (\pi - 2) \cos a \times \frac{2
       b^2}{\cos^2 a} = \Theta \left( \frac{b^2}{\cos a} \right) \]
    Under the assumption $0 < b \lesssim \cos^2 a$, then $2 | \sin x + \sin a
    | \geq \cos^2 a \gtrsim b, \forall x \in \bar{E}$, and note $\sin x + \sin a >
    0, \forall x \in \bar{E}_2$,
    \[ \frac{1}{9} \cdot b^2 \times \int_{- a + \delta}^{\pi + a - \delta}
       \frac{\mathd x}{\sin x + \sin a} \leq \int_{\bar{E}_2} \frac{b^2 | \sin x
       + \sin a | \mathd x}{(b + | \sin x + \sin a |)^2} \leq b^2 \times \int_{-
       a + \delta}^{\pi + a - \delta} \frac{\mathd x}{\sin x + \sin a} \]
    Note that $\int_{- a + \delta}^{\pi + a - \delta} \frac{\mathd x}{\sin x +
    \sin a} = \frac{2}{\cos a} \ln \left( \frac{\cos \left( a - \frac{\delta}{2}
    \right)}{\sin \left( \frac{\delta}{2} \right)} \right) = \Theta \left(
    \frac{1}{\cos a} \right)$, since $\frac{\cos \left( a - \frac{\delta}{2}
    \right)}{\sin \left( \frac{\delta}{2} \right)} \in [1.8, 3]$ for $\delta =
    \cos a, 0 \leq a < \frac{\pi}{2}$,
    \[ \int_{\bar{E}_2} \frac{b^2 | \sin x + \sin a | \mathd x}{(b + | \sin x +
       \sin a |)^2} = \Theta \left( \frac{b^2}{\cos a} \right) \]
    Note that $\int_{\bar{E}} = \int_{\bar{E}_1} + \int_{\bar{E}_2}$, then
    \[ \int_{\bar{E}} \frac{b^2 | \sin x + \sin a | \mathd x}{(b + | \sin x +
       \sin a |)^2} = \Theta \left( \frac{b^2}{\cos a} \right) \]
    Combine these two regions $E$ and $\bar{E}$, $[0, 2 \pi] = E \cup \bar{E}$
    \[ \int_0^{2 \pi} \frac{b^2 | \sin x + \sin a | \mathd x}{(b + | \sin x +
       \sin a |)^2} = \Theta \left( \frac{b^2}{\cos a} \log \left( \frac{\cos^2
       a}{b} \right) \right) + \Theta \left( \frac{b^2}{\cos a} \right) = \Theta
       \left( \frac{b^2}{\cos a} \log \left( \frac{\cos^2 a}{b} \right) \right)
    \]
    \tmtextbf{(2)} For the integral $\int_0^{2 \pi} \frac{b^2 \tmop{sgn} (\sin
    (x + a)) | \cos x + \cos a | \mathd x}{(b + | \sin x + \sin a |)^2}$
    \begin{eqnarray*}
      &  & \int_0^{2 \pi} \frac{b^2 \tmop{sgn} (\sin (x + a)) | \cos x + \cos a
      | \mathd x}{(b + | \sin x + \sin a |)^2} = \int_0^{2 \pi} \frac{b^2 (-
      1)^{\tmmathbf{1}_{0 \leq u \leq 2 a}} (- \cos (a + u) + \cos a) \mathd
      x}{(b + | - \sin (a + u) + \sin a |)^2}\\
      & = & \left\{ - \int_0^{\frac{\pi}{2} - a} \frac{b^2 (- \cos (a + u) +
      \cos a) \mathd u}{(b + \sin (a + u) - \sin a)^2} + \int_{- \left(
      \frac{\pi}{2} - a \right)}^0 \frac{b^2 (- \cos (a + u) + \cos a) \mathd
      u}{(b - \sin (a + u) + \sin a)^2} \right\}\\
      & + & \left\{ - \int_{\frac{\pi}{2} - a}^{2 \left( \frac{\pi}{2} - a
      \right)} \frac{b^2 (- \cos (a + u) + \cos a) \mathd u}{(b + \sin (a + u) -
      \sin a)^2} + \int_{2 \left( \frac{\pi}{2} - a \right)}^{3 \left(
      \frac{\pi}{2} - a \right)} \frac{b^2 (- \cos (a + u) + \cos a) \mathd
      u}{(b - \sin (a + u) + \sin a)^2} \right\}
      + \int_{3 \left( \frac{\pi}{2} - a \right)}^{\frac{3 \pi}{2} + a}
      \frac{b^2 (- \cos (a + u) + \cos a) \mathd u}{(b - \sin (a + u) + \sin
      a)^2}\\
      & = & 2 b^2 \cos a \int_0^{\frac{\pi}{2} - a} \left[ - \frac{1}{(b + \sin
      (a + u) - \sin a)^2} + \frac{1}{(b - \sin (a - u) + \sin a)^2} \right]
      \mathd u + 2 b^2 \cos a \int_0^{2 a} \frac{\mathd u}{(b + \cos u + \sin
      a)^2}\\
      & = & - 2 b^2 \cos a \times 2 \sin a \int_0^{\frac{\pi}{2} - a} \frac{4
      \sin^2 \frac{u}{2} (b + \cos a \sin u)}{(b + \sin (a + u) - \sin a)^2 (b -
      \sin (a - u) + \sin a)^2} \mathd u
      + 2 b^2 \cos a \int_0^{2 a} \frac{\mathd u}{(b + \cos u + \sin a)^2}
    \end{eqnarray*}
    Note that $\max \{ \sin (a + u) - \sin a, \sin a - \sin (a - u) \} \leq
    \left( 1 + \frac{\pi}{4} \right) \cos a \cdot u$, and $\left( 1 -
    \frac{\pi}{4} \right) \cos a \cdot u \leq \max \{ \sin (a + u) - \sin a,
    \sin a - \sin (a - u) \}$, for $r (a) = \frac{\pi}{2} - a$, we have $\cos a
    \leq r (a) \leq \frac{\pi}{2} \cos a$, $\frac{2}{\pi} u \leq \sin u \leq u,
    \frac{8}{\pi^2} u^2 \leq 4 \sin^2 \frac{u}{2} \leq u^2$, then we can
    establish such upper/lower bounds for the integral:
    \begin{eqnarray*}
      &  & \int_0^{\frac{\pi}{2} - a} \frac{4 \sin^2 \frac{u}{2} (b + \cos a
      \sin u)}{(b + \sin (a + u) - \sin a)^2 (b - \sin (a - u) + \sin a)^2}
      \mathd u \leq \int_0^{\frac{\pi}{2} - a} \frac{(b + \cos a \cdot u) u^2
      \mathd u}{\left( b + \left( 1 - \frac{\pi}{4} \right) \cos a \cdot u
      \right)^4}\\
      & \leq & \frac{1}{1 - \tfrac{\pi}{4}} \int^{r (a)}_0 \frac{u^2 \mathd
      u}{\left( b + \left( 1 - \frac{\pi}{4} \right) \cos a \cdot u \right)^3} =
      \frac{1}{1 - \frac{\pi}{4}} \int_0^1 \frac{t^2 \mathd t}{\left( \frac{b}{r
      (a)} + \left( 1 - \frac{\pi}{4} \right) \cos a \cdot t \right)^3}\\
      & \leq & \frac{1}{\left( 1 - \frac{\pi}{4} \right)^4} \times
      \frac{1}{\cos^3 a} \int_0^1 \frac{t^2 \mathd t}{\left( \frac{b / \cos^2
      a}{\left( 1 - \frac{\pi}{4} \right) \cdot \frac{\pi}{2}} + t \right)^3}\\
      &  & \int_0^{\frac{\pi}{2} - a} \frac{4 \sin^2 \frac{u}{2} (b + \cos a
      \sin u)}{(b + \sin (a + u) - \sin a)^2 (b - \sin (a - u) + \sin a)^2}
      \mathd u \geq \frac{8}{\pi^2} \int_0^{\frac{\pi}{2} - a} \frac{\left( b +
      \frac{2}{\pi} \cos a \cdot u \right) u^2 \mathd u}{\left( b + \left( 1 +
      \frac{\pi}{4} \right) \cos a \cdot u \right)^4}\\
      & \geq & \frac{16}{\pi^3 \left( 1 + \frac{\pi}{4} \right)} \int^{r (a)}_0
      \frac{u^2 \mathd u}{\left( b + \left( 1 + \frac{\pi}{4} \right) \cos a
      \cdot u \right)^3} = \frac{16}{\pi^3 \left( 1 + \frac{\pi}{4} \right)}
      \int_0^1 \frac{t^2 \mathd t}{\left( \frac{b}{r (a)} + \left( 1 +
      \frac{\pi}{4} \right) \cos a \cdot t \right)^3}\\
      & \geq & \frac{16}{\pi^3 \left( 1 + \frac{\pi}{4} \right)^4} \times
      \frac{1}{\cos^3 a} \int_0^1 \frac{t^2 \mathd t}{\left( \frac{b / \cos^2
      a}{1 + \frac{\pi}{4}} + t \right)^3}
    \end{eqnarray*}
    Note $\int_0^1 \frac{t^2 \mathd t}{(C + t)^3} = \ln (1 + C^{- 1}) - \frac{3
    + 2 C}{2 (1 + C)^2} = \Theta (\log C^{- 1})$ for $0 < C \lesssim 1$, and under the
    assumption $0 < b \lesssim \cos^2 a$,
    \[ 2 b^2 \cos a \times 2 \sin a \int_0^{\frac{\pi}{2} - a} \frac{4 \sin^2
       \frac{u}{2} (b + \cos a \sin u)}{(b + \sin (a + u) - \sin a)^2 (b - \sin
       (a - u) + \sin a)^2} \mathd u = \Theta \left( \tan a \cdot
       \frac{b^2}{\cos a} \log \left( \frac{\cos^2 a}{b} \right) \right) \]
    For the other integral, note that
    \begin{eqnarray*}
      \cos u - \cos 2 a & \geq & \min \left\{ \frac{1 - \cos 2 a}{2 a}, \sin 2 a
      \right\} (2 a - u) = \min \left\{ \frac{\sin a}{a}, 2 \cos a \right\} \sin
      a \cdot (2 a - u) \geq \cos a \cdot \sin a \cdot (2 a - u)\\
      \cos u + \sin a & \geq & \frac{1 + 2 \sin a}{1 + \sin a} \cos^2 a + 2 a
      \sin a \times \cos a \cdot \left( 1 - \frac{u}{2 a} \right) \geq \cos^2 a
      + 2 \sin^2 a \times \cos a \cdot \left( 1 - \frac{u}{2 a} \right)
    \end{eqnarray*}
    By susbstitution $t = 1 - \frac{u}{2 a} \in [0, 1]$, and $a \leq
    \frac{\pi}{2} \sin a$ for $a \in \left[ 0, \tfrac{\pi}{2} \right]$
    \begin{eqnarray*}
      \int_0^{2 a} \frac{\mathd u}{(b + \cos u + \sin a)^2} & \leq & 2 a
      \int_0^1 \frac{\mathd t}{(\cos^2 a + 2 \sin^2 a \times \cos a \cdot t)^2}
      = \frac{2 a}{\cos^4 a} \int_0^1 \frac{\mathd t}{\left( 1 + \frac{2 \sin^2
      a}{\cos a} t \right)^2} = \frac{2 a}{\cos^4 a} \cdot \frac{1}{1 + \frac{2
      \sin^2 a}{\cos a}}\\
      & = & \frac{2 a}{\cos^3 a} \cdot \frac{1}{1 + (1 - \cos a) (1 + 2 \cos
      a)} \leq \frac{2 a}{\cos^3 a} \leq \pi \times \tan a \cdot \frac{1}{\cos^2
      a}
    \end{eqnarray*}
    Thus, we have
    \[ 2 b^2 \cos a \int_0^{2 a} \frac{\mathd u}{(b + \cos u + \sin a)^2}
       =\mathcal{O} \left( \tan a \cdot \frac{b^2}{\cos a} \right) \]
    Combine these two integrals
    \begin{eqnarray*}
      \int_0^{2 \pi} \frac{b^2 \tmop{sgn} (\sin (x + a)) | \cos x + \cos a |
      \mathd x}{(b + | \sin x + \sin a |)^2} & = & \Theta \left( \tan a \cdot
      \frac{b^2}{\cos a} \log \left( \frac{\cos^2 a}{b} \right) \right)
      -\mathcal{O} \left( \tan a \cdot \frac{b^2}{\cos a} \right)\\
      & = & \mathcal{O} \left( \tan a \cdot \frac{b^2}{\cos a} \log \left(
      \frac{\cos^2 a}{b} \right) \right)
    \end{eqnarray*}
\end{proof}

\newpage
\subsection{Identities and Inequalities for Useful Expectations}
\begin{lemma}
    \label{suplem:expectation}
    Suppose standard Gaussian variables \(g, g' \sim \mathcal{N} (0, 1)\) have the correlation coefficient \(\E[g g'] = \sin \varphi\) with \(\varphi \in [-\frac{\pi}{2}, \frac{\pi}{2}]\), then
    \begin{eqnarray*}
        \E[\sgn(g g')]=\frac{2}{\pi} \varphi, \quad
        \E[|g g'|]=\frac{2}{\pi}[\varphi \sin \varphi + \cos \varphi], \quad
        \E[g^2\sgn(g g')]=\frac{2}{\pi} [\varphi + \sin \varphi \cos \varphi].
    \end{eqnarray*}
\end{lemma}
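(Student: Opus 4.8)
The plan is to reduce all three expectations to one-dimensional integrals over a uniform angle by means of a polar representation of the correlated pair. First I would write $g = R\cos\frac{U-\varphi}{2}$ and $g' = R\sin\frac{U+\varphi}{2}$, where $R$ is a standard Rayleigh variable (so $\E[R^2]=2$), $U\sim\mathrm{Unif}[0,4\pi)$, and $R\ind U$. Putting $X=R\cos\frac{U}{2}$ and $Y=R\sin\frac{U}{2}$ — a pair of independent standard Gaussians — gives $g = X\cos\frac{\varphi}{2}+Y\sin\frac{\varphi}{2}$ and $g' = Y\cos\frac{\varphi}{2}+X\sin\frac{\varphi}{2}$, so $(g,g')$ is jointly Gaussian with unit variances and $\E[gg']=2\sin\frac{\varphi}{2}\cos\frac{\varphi}{2}=\sin\varphi$, matching the hypothesis.

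Next I would apply product-to-sum identities to obtain $gg' = \tfrac{R^2}{2}(\sin U + \sin\varphi)$ and $g^2 = \tfrac{R^2}{2}(1+\cos(U-\varphi))$. Since $\sgn$ annihilates the positive factor $R^2/2$, and since $R\ind U$ with $\E[R^2/2]=1$, the three quantities collapse to
\begin{equation*}
\E[\sgn(gg')] = \E_U[\sgn(\sin U+\sin\varphi)],\quad \E[|gg'|]=\E_U[|\sin U+\sin\varphi|],\quad \E[g^2\sgn(gg')]=\E_U[(1+\cos(U-\varphi))\sgn(\sin U+\sin\varphi)].
\end{equation*}
Because $\sin U$ and $\cos(U-\varphi)$ are $2\pi$-periodic and $U\bmod 2\pi$ is uniform on $[0,2\pi)$, I may treat $U$ as uniform on $[0,2\pi)$ throughout. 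On that interval, a short monotonicity/convexity check on $u\mapsto\sin u$ shows that $\sin u+\sin\varphi$ is nonnegative on a set of total length $\pi+2\varphi$ and negative on the complementary set $A^-$ of length $\pi-2\varphi$, uniformly for $\varphi\in(-\tfrac{\pi}{2},\tfrac{\pi}{2})$ (the endpoints $\varphi=\pm\tfrac{\pi}{2}$ follow by continuity). The first identity is then immediate: $\E_U[\sgn(\sin U+\sin\varphi)]=\frac{(\pi+2\varphi)-(\pi-2\varphi)}{2\pi}=\frac{2\varphi}{\pi}$.

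For the remaining two I would write each signed integral as the full-period integral minus twice the integral over $A^-$, using $\int_0^{2\pi}(\sin u+\sin\varphi)\,\mathd u=2\pi\sin\varphi$ and $\int_0^{2\pi}\cos(u-\varphi)\,\mathd u=0$; the elementary evaluations $\int_{A^-}\sin u\,\mathd u=-2\cos\varphi$, $\int_{A^-}\sin\varphi\,\mathd u=(\pi-2\varphi)\sin\varphi$, and $\int_{A^-}\cos(u-\varphi)\,\mathd u=-\sin 2\varphi$ (each of which holds uniformly in the sign of $\varphi$) then yield $\E[|gg'|]=\frac{2}{\pi}(\varphi\sin\varphi+\cos\varphi)$ and $\E[g^2\sgn(gg')]=\frac{2\varphi}{\pi}+\frac{\sin 2\varphi}{\pi}=\frac{2}{\pi}(\varphi+\sin\varphi\cos\varphi)$. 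No step here is genuinely hard; the only points that demand care are verifying that the polar representation reproduces exactly the prescribed correlation $\sin\varphi$ (rather than, say, $\sin\frac{\varphi}{2}$ or $\cos\varphi$), and pinning down the arcs comprising $A^-$ so that the three one-dimensional integrals are taken over the correct regions.
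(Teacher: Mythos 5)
Your proposal is correct: the polar representation reproduces $\E[gg']=\sin\varphi$, the product-to-sum reductions $gg'=\tfrac{R^2}{2}(\sin U+\sin\varphi)$ and $g^2=\tfrac{R^2}{2}(1+\cos(U-\varphi))$ are right, the negative arc is $(\pi+\varphi,\,2\pi-\varphi)$ (mod $2\pi$) of length $\pi-2\varphi$, and the three one-dimensional integrals evaluate exactly as you claim, giving all three identities. However, your route differs from the paper's beyond the first identity. The paper uses the same Rayleigh-uniform angular representation, but only to prove $\E[\sgn(gg')]=\tfrac{2}{\pi}\varphi$ (Grothendieck's identity); it then obtains $\E[|gg'|]$ by Price's theorem, i.e.\ by differentiating the expectation with respect to the correlation, noting $\tfrac{\mathd}{\mathd\sin\varphi}\E[|gg'|]=\E[\sgn(gg')]$, and integrating from $\varphi=-\tfrac{\pi}{2}$ where $\E[|gg'|]=\E[g^2]=1$; and it obtains $\E[g^2\sgn(gg')]$ via Stein's lemma applied to $g\mapsto |g|\sgn(\sin\varphi\cdot g+\cos\varphi\cdot h)$, which produces the term $\E[\sgn(gg')]$ plus a Dirac-delta contribution $2\sin\varphi\,\E[|g|\delta(g')]=\tfrac{2}{\pi}\sin\varphi\cos\varphi$. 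Your treatment is more elementary and uniform: a single change of variables handles all three expectations with only trigonometric integrals, at the cost of carefully pinning down the arcs where $\sin u+\sin\varphi$ changes sign (which you do correctly, including the wrap-around for negative $\varphi$). The paper's approach trades that bookkeeping for two classical Gaussian identities (Price, Stein), which makes each of the last two identities a one-line consequence of the first but requires handling a distributional derivative. Both are valid; yours is self-contained and arguably easier to verify line by line.
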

\begin{proof}
    Let's prove the first identity (Grothendieck's identity) at the beginning.
    Since we can express \(g, g'\) in terms of two independent variables \(R, U\) 
    with \(R \sim r\exp(-\frac{r^2}{2})\mathbb{I}_{r\geq 0}\) (standard Rayleigh distribution with \(\E[R^2]=2\)) 
    and \(U \sim \mathrm{Unif} [0, 4 \pi)\), 
    \[
        g = R \cos((U-\varphi)/2), \quad g' = R \sin((U+\varphi)/2)
    \]
    Noting that \(\sgn(g g') = \sgn(\sin(U)+\sin \varphi)\) for \(R\neq 0\), we have
    \[
     \E[\sgn(g g')] = \E[\sgn(\sin(U)+\sin \varphi)] = \frac{(+1)\cdot(2\pi+4\varphi)+(-1)\cdot(2\pi-4\varphi)}{4\pi} 
     = \frac{2}{\pi} \varphi
    \]
    We start the proof of the second identity by applying Price's theorem (see~\cite{price1958useful}), 
    \[
        \frac{1}{\cos \varphi}\cdot \frac{\mathd \E[|g g'|]}{\mathd \varphi} 
        = \frac{\mathd \E[|g g'|]}{\mathd \sin \varphi}
        = \E\left[ \frac{\partial^2 |g g'|}{\partial g \partial g'} \right] = \E[\sgn(g g')] = \frac{2}{\pi} \varphi
    \]
    Note that when \(\varphi = -\frac{\pi}{2}\), \(\E[|g g'|]_{\varphi = -\frac{\pi}{2}} = \E[g^2] = 1\), we have
    \[
        \E[|g g'|] = \E[|g g'|]_{\varphi = -\frac{\pi}{2}} + \frac{2}{\pi}\int^{\varphi}_{-\frac{\pi}{2}} \varphi' \cos \varphi' \mathd \varphi' 
        = \frac{2}{\pi}[\varphi \sin \varphi + \cos \varphi]
    \]
    To prove the third identity, we start by expressing \(g' = \sin \varphi \cdot g + \cos \varphi \cdot h\) for some \(h\sim \mathcal{N}(0, 1)\) with \(h\ind g\),
    and applying Stein's lemma (see Lemma 2.1 of~\cite{ross2011fundamentals}), 
    and noting that \(\frac{\partial}{\partial g} \left\{ |g| \sgn(\sin \varphi \cdot g + \cos \varphi \cdot h) \right\} 
    =  \sgn(g g') + 2 \sin \varphi \cdot |g| \delta(g')\),
    \[
        \E[g^2 \sgn(g g')]=\E_h\E_g[g \cdot |g| \sgn(\sin \varphi \cdot g + \cos \varphi \cdot h)]
        = \E_h \E_g\left[\frac{\partial |g| \sgn(\sin \varphi \cdot g + \cos \varphi \cdot h)}{\partial g}  \right]
        = \E[\sgn(g g')] + 2 \sin \varphi \E[|g| \delta(g')]
    \]
    where the former term is the first identity \(\E[\sgn(g g')]=\frac{2}{\pi} \varphi\), and the latter term can be evaluated by the symmetricity of \(g, g'\),
    \[
        \E[|g| \delta(g')] = \E[|g'|\delta(g)] =  \E_h\E_g[|\sin \varphi \cdot g + \cos \varphi \cdot h| \delta(g)] 
        = \E_h\E_g[|\cos \varphi \cdot h| \delta(g)]
        = \cos \varphi \E_h[|h|] \E_g[\delta(g)]  
        = \frac{1}{\pi} \cos \varphi
    \]
    Thus, by combining these two terms, we obtain the third identity, and the proof is complete.
\end{proof}

\begin{lemma}\label{suplem:moments}
  Suppose a random variable \(X\) has a density \(X\sim f_X(x) = K_0(|x|)/\pi\) involving modified Bessel function \(K_0\) of the second kind of order \(0\), then:
  \[
      \E[|X|] = \frac{2}{\pi}, \quad \E[X^{2n}] = [(2n-1)!!]^2 = [(2n-1)\times \cdots \times 3\times 1]^2\quad \forall n\in\mathbb{Z}_+
  \]
\end{lemma}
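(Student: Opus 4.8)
The plan is to exploit the probabilistic interpretation of the density $K_0(|\cdot|)/\pi$ already recorded in the Notations subsection: if $g, g' \sim \mathcal{N}(0,1)$ are independent, then the product $g g'$ has exactly this density. Hence it suffices to take $X \overset{d}{=} g g'$ and compute the moments of $g g'$ by independence and the known Gaussian moments, rather than manipulating Bessel integrals directly.

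First I would handle the absolute first moment. By independence, $\mathbb{E}[|X|] = \mathbb{E}[|g g'|] = \mathbb{E}[|g|]\,\mathbb{E}[|g'|] = \bigl(\sqrt{2/\pi}\bigr)^2 = \tfrac{2}{\pi}$; alternatively this is the special case $\varphi = 0$ of the identity $\mathbb{E}[|g g'|] = \tfrac{2}{\pi}[\varphi\sin\varphi + \cos\varphi]$ from Lemma~\ref{suplem:expectation}. Next, for the even moments, independence again gives $\mathbb{E}[X^{2n}] = \mathbb{E}[g^{2n}]\,\mathbb{E}[(g')^{2n}] = \bigl(\mathbb{E}[g^{2n}]\bigr)^2$, and the standard fact $\mathbb{E}[g^{2n}] = (2n-1)!!$ for $g \sim \mathcal{N}(0,1)$ (provable by repeated integration by parts, or from the moment generating function $\mathbb{E}[e^{tg}] = e^{t^2/2}$) yields $\mathbb{E}[X^{2n}] = \bigl[(2n-1)!!\bigr]^2$.

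As a self-contained alternative that avoids invoking the product representation, one can compute directly from the Mellin-type integral $\int_0^\infty x^{m} K_0(x)\,\mathrm{d}x = 2^{m-1}\Gamma\!\bigl(\tfrac{m+1}{2}\bigr)^2$: for $m=1$ this gives $\Gamma(1)^2 = 1$, so $\mathbb{E}[|X|] = \tfrac{2}{\pi}\int_0^\infty x K_0(x)\,\mathrm{d}x = \tfrac{2}{\pi}$, and for $m = 2n$, using $\Gamma\!\bigl(n+\tfrac12\bigr) = \tfrac{(2n-1)!!}{2^n}\sqrt{\pi}$, it gives $\tfrac{2}{\pi}\int_0^\infty x^{2n}K_0(x)\,\mathrm{d}x = \tfrac{2}{\pi}\cdot 2^{2n-1}\cdot \tfrac{[(2n-1)!!]^2 \pi}{2^{2n}} = [(2n-1)!!]^2$. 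There is no real obstacle here; the only point requiring a little care is to cite the correct normalization of the $K_0$-density (mass one, so that $\tfrac{2}{\pi}\int_0^\infty K_0 = 1$, i.e.\ $\int_0^\infty K_0(x)\,\mathrm{d}x = \pi/2$) and the half-integer Gamma value, both of which are standard. I would present the probabilistic argument as the main proof since it is the shortest and reuses results already in the excerpt.
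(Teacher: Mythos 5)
Your main argument is exactly the paper's proof: it represents $X$ as the product $g g'$ of two independent standard Gaussians (whose density is $K_0(|\cdot|)/\pi$), then uses independence together with $\E[|g|]=\sqrt{2/\pi}$ and $\E[g^{2n}]=(2n-1)!!$ to get both moments. The Mellin-integral alternative you sketch is also correct but unnecessary; the primary probabilistic argument matches the paper's proof and is complete.
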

\begin{proof}
  Since the product \(X= g g'\) of two independent standard Gaussian variables \(g, g'\sim \mathcal{N}(0, 1), \E[g g'] = 0\) has a density \(X\sim f_X(x) = K_0(|x|)/\pi\) involving modified Bessel function \(K_0\) of the second kind of order \(0\) 
  (see page 50, Section 4.4 Bessel Function Distributions, Chapter 12 Continuous Distributions (General) of~\cite{johnson1970continuous}), we have
  \begin{eqnarray*}
  \E[|X|] &=& \E[|gg'|] = \E[|g|]\E[|g'|] = \E[|g|]^2 = \frac{2}{\pi}\\
  \E[X^{2n}] &=& \E[(gg')^{2n}]= \E[g^{2n}]\E[g'^{2n}] =\E[g^{2n}]^2 = [(2n-1)!!]^2
  \end{eqnarray*}
  due to the well-known results that \(\E[g^{2n}] = (2n-1)!!,\E[|g|] = \sqrt{\frac{2}{\pi}}\) for \(g\sim \mathcal{N}(0, 1)\) 
  (see equations (13.11), (13.14) on page 89 ang page 91, Section 3 moments and other properties, Chapter 13 Normal Distributions of~\cite{johnson1970continuous}).
\end{proof}

\begin{lemma}\label{suplem:bounds_expectation_A}
    Suppose \(A>0\) and a random variable \(X\) has a density \(X\sim f_X(x) = K_0(|x|)/\pi\) involving modified Bessel function \(K_0\) of the second kind of order \(0\), then:
    \[
        \frac{\sqrt{12A^2+1}-1}{6A} < \E[\tanh(AX)X] < \frac{\sqrt{4A^2+1}-1}{2A}
    \]
\end{lemma}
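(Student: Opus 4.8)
The plan is to reduce both inequalities to a single integration-by-parts identity and then handle the two sides by different means. Write $X=gg'$ with $g,g'\sim\mathcal N(0,1)$ independent (legitimate, since the product of two independent standard Gaussians has density $K_0(|\cdot|)/\pi$), note that $X\tanh(AX)=|X|\tanh(A|X|)\ge 0$, and set $h=h(A):=\E[X\tanh(AX)]\in(0,\infty)$. First I would establish, by Gaussian integration by parts (Stein's identity) in the variable $g'$ with $g$ held fixed, followed by Fubini, the identity
\begin{equation*}
\E[X\tanh(AX)] \;=\; A\,\E\!\big[g^{2}\operatorname{sech}^{2}(AX)\big] \;=\; A\big(1-\E[g^{2}\tanh^{2}(AX)]\big),
\end{equation*}
using $\E[g^{2}]=1$ and $\operatorname{sech}^{2}=1-\tanh^{2}$. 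Writing $M:=\E[g^{2}\tanh^{2}(AX)]$, so $h=A(1-M)$, both target inequalities become statements about $M$.

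\textbf{Upper bound.} Apply Cauchy--Schwarz to $h=\E\!\big[g\cdot(g'\tanh(AX))\big]$: since $\E[g^{2}]=1$ and, by the $g\leftrightarrow g'$ symmetry of $\big(g,g',\tanh^{2}(AX)\big)$, $\E[(g'\tanh(AX))^{2}]=\E[g^{2}\tanh^{2}(AX)]=M$, we get $h^{2}\le M$. Substituting into $h=A(1-M)$ gives $h\le A(1-h^{2})$, i.e.\ $Ah^{2}+h-A\le 0$; as $h>0$, this forces $h$ to be at most the positive root of $At^{2}+t-A$, which is exactly $\frac{\sqrt{4A^{2}+1}-1}{2A}$. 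Strictness follows since Cauchy--Schwarz equality would require $g$ to be a deterministic multiple of $g'\tanh(AX)$, which is impossible.

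\textbf{Lower bound.} Here the needed estimate is the reverse-type inequality $M<3h^{2}$ (equivalently $1-h/A<3h^{2}$): granting it, $h=A(1-M)\ge A(1-3h^{2})$, so $3Ah^{2}+h-A\ge 0$, and $h>0$ forces $h\ge\frac{\sqrt{12A^{2}+1}-1}{6A}$. This cannot come from Cauchy--Schwarz, so I would bound $M$ (equivalently $h$) directly in two regimes. For small and moderate $A$, I would use the global polynomial bracketing of $\tanh$ by its alternating Taylor partial sums, $t-\tfrac{t^{3}}{3}<\tanh t<t-\tfrac{t^{3}}{3}+\tfrac{2t^{5}}{15}$ (and the higher-order brackets, all valid for $t>0$), integrating term by term with the closed-form moments $\E[X^{2n}]=[(2n-1)!!]^{2}$ of Lemma~\ref{suplem:moments}; this produces a polynomial lower bound on $h$ that exceeds $\frac{\sqrt{12A^{2}+1}-1}{6A}$ on a neighbourhood of $0$. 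For large $A$, I would instead use the saturation bound $\tanh(A|X|)\ge 1-2e^{-2A|X|}$, giving $h\ge\tfrac{2}{\pi}-2\,\E[|X|e^{-2A|X|}]$ with $\E[|X|e^{-2A|X|}]\to 0$; since $\frac{\sqrt{12A^{2}+1}-1}{6A}<\tfrac{1}{\sqrt 3}<\tfrac{2}{\pi}$ for every $A>0$, a crude bound on $\E[|X|e^{-2A|X|}]$ (via $\sup_{r\ge 0}re^{-r}=e^{-1}$, or an $\mathcal O(\log A/A^{2})$ bound using $K_{0}(x)\lesssim|\log x|$ near $0$) already yields $h>\frac{\sqrt{12A^{2}+1}-1}{6A}$ for all $A$ above an explicit threshold.

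\textbf{The hard part.} The main obstacle is making the two regimes overlap: the polynomial lower bound degrades fast as $A$ grows, because $A|X|$ has sub-exponential tails, so truncated Taylor series of $\tanh(A|X|)$ lose control well before the saturation estimate becomes effective. Closing the intermediate window would require either a careful split of the expectation at a cutoff $|X|\asymp t_{0}$ (polynomial bound on the bulk, a monotone/saturation bound on the tail) with an optimized choice of the number of retained terms and of the crossover threshold, or a single lower bound for $\tanh$ that both matches $t-\tfrac{t^{3}}{3}$ near $0$ and tends to $1$ at infinity (no low-degree rational function does both while staying a global lower bound, so a hybrid argument appears essentially forced). I expect this uniformity step, rather than either regime taken in isolation, to be the technical heart of the lower bound.
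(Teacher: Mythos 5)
Your upper-bound argument is complete, correct, and genuinely different from the paper's: the Stein/Gaussian integration-by-parts identity $\E[X\tanh(AX)]=A\bigl(1-\E[g^{2}\tanh^{2}(AX)]\bigr)$ combined with Cauchy--Schwarz applied to $h=\E[g\cdot(g'\tanh(AX))]$ gives $Ah^{2}+h-A\le 0$ and hence $h<\frac{\sqrt{4A^{2}+1}-1}{2A}$ in a few lines, whereas the paper obtains the same inequality only by combining series brackets near $A=0$, concavity, piecewise-linear envelopes anchored at numerically evaluated points on $[1/4,5/4]$, and the limit comparison $f(A)\le 2/\pi<\overline{B}(5/4)$ for $A>5/4$. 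That part of your proposal is an improvement in both elegance and rigor.

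The lower bound, however, has a genuine gap, and you have named it yourself without closing it. The reverse inequality $M<3h^{2}$ is equivalent to the desired conclusion (given $h=A(1-M)$), so it cannot serve as a lemma; what remains is your two-regime plan, and the regimes do not meet. The Taylor bracket $\tanh t>t-\tfrac{t^{3}}{3}$ only yields $h>A-3A^{3}$, which lies \emph{below} $\underline{B}(A)=\frac{\sqrt{12A^{2}+1}-1}{6A}>A-3A^{3}$, so one must retain higher terms; with $\E[X^{6}]=225$ and $\E[X^{8}]=11025$ the next bracket gives $h>A-3A^{3}+30A^{5}-595A^{7}$, which beats $\underline{B}(A)<A-3A^{3}+18A^{5}$ only for $A\lesssim 0.14$, and the factorial-squared growth of $\E[X^{2n}]$ prevents pushing this much further. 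On the other side, the saturation bound $h\ge\frac{2}{\pi}-2\E[|X|e^{-2A|X|}]$ with $K_{0}(x)\le\sqrt{\pi/(2x)}\,e^{-x}$ gives $h\ge\frac{2}{\pi}-\sqrt{2}(2A+1)^{-3/2}$, which exceeds $\underline{B}(A)$ only for $A\gtrsim 2$ (at $A=1.5$ it gives about $0.46<\underline{B}(1.5)\approx 0.477$). So an interval roughly of the form $0.15\lesssim A\lesssim 2$ is left uncovered, and this is precisely where the statement is not a perturbative fact. The paper closes this window by a different device: it proves $f(A)=\E[\tanh(AX)X]$, $\underline{B}$, and $\overline{B}$ are increasing and concave, and then separates $\underline{B}$ from $f$ by piecewise-linear tangent/chord envelopes anchored at finitely many numerically evaluated points ($A=\tfrac14,\tfrac13,\tfrac12,\tfrac34,\tfrac54$, plus $f(2.33)>1/\sqrt3$ for the tail). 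To complete your proof you would need either to import that concavity-plus-finite-evaluation step (or an equivalent certified computation on the middle interval), or to find a single global minorant of $\tanh$ that is simultaneously tight to third order at $0$ and tends to $1$ at infinity; as written, the lower bound is not proved.
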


\begin{proof}
    Let \(f(A) : = \E[\tanh(AX)X]\), and \(\underline{B}(A) := \frac{\sqrt{12A^2+1}-1}{6A}, \overline{B}(A) := \frac{\sqrt{4A^2+1}-1}{2A}\).
    For \(A>0\), we have:
    \[
      0< f(A) < \E[|X|] = \frac{2}{\pi},\quad 0<\underline{B}(A)<\lim_{A\to \infty}\underline{B}(A)=\frac{1}{\sqrt{3}}, \quad 0<\overline{B}(A) < \lim_{A\to \infty}\overline{B}(A)=1
    \]
    And \(f(A), \underline{B}(A), \overline{B}(A)\) are all increasing and concave with respect to \(A>0\), due to the fact that:
    \[
      f'(A) = \E[(1-\tanh^2(AX))X^2] > 0, f''(A) = -2\E\left[\frac{\tanh(AX)X^3}{\cosh^2(AX)}\right] < 0;\quad
      \underline{B}'(A) > 0, \underline{B}''(A) < 0;
      \overline{B}'(A) > 0, \overline{B}''(A) < 0
    \]
    By Taylor expansions of \(\tanh(t) = t - \frac{t^3}{3} + \frac{2t^5}{15} + \cdots\) and \(\underline{B}(A), \overline{B}(A)\) and using \(\E[X^{2n}] = [(2n-1)!!]^2\) in Lemma~\ref{suplem:moments}:
    \begin{eqnarray*}
        A - 3A^3  <f(A)< A - 3A^3 + 30 A^5,\quad
        A - 3A^3  < \underline{B}(A) < A - 3A^3 + 18 A^5,\quad
        A - A^3  < \overline{B}(A) < A - A^3 + 2A^5
    \end{eqnarray*}
    When \(0< A\leq 1/4\) is small enough, we have shown:
    \[
      f(A) < A - 3A^3 + 30 A^5 = A-A^3 -2A^3(1-15A^2) < A-A^3 < \overline{B}(A)
    \]
    Similarly, we show \(\underline{B}(A) <f(A)\) when \(A\) is sufficiently small by comparing the series approximations of \(f(A), \underline{B}(A)\) at \(A\to 0_+\), 
    and we extent it to the range of \(0<A\leq 1/4\) by seprating \(\underline{B}(A)\) and \(f(A)\) with piecewise linear functions
    based on the numerical evaluations of \(f(A),f'(A), \underline{B}(A),\underline{B}'(A)\) at finite points and noting \(f(A),\underline{B}(A)\) are concave.

    Let's show the inequalities \(\underline{B}(A) < f(A)\) and \(f(A) < \overline{B}(A)\) for \(1/4\leq A \leq 5/4\) by numerical evaluations at some points \(A_1 = \frac{1}{4}, A_2=\frac{1}{3}, A_3=\frac{1}{2}, A_4=\frac{3}{4}, A_5=\frac{5}{4}\).
    \begin{eqnarray*}
      & & \underline{B}(A) \leq \underline{B}(A_i)+\underline{B}'(A_i)(A-A_i) < f(A_i)+\frac{f(A_{i+1})-f(A_i)}{A_{i+1}-A_i}(A-A_i) \leq f(A)\\
      & & f(A) \leq f(A_i) + f'(A_i)(A-A_i) < \overline{B}(A_i)+\frac{\overline{B}(A_{i+1})-\overline{B}(A_i)}{A_{i+1}-A_i}(A-A_i) \leq \overline{B}(A) \quad A\in[A_i, A_{i+1}], i\in\{1,2,3,4\}
    \end{eqnarray*}
    The above inequalities hold due to that \(f(A), \underline{B}(A), \overline{B}(A)\) are concave and the numerical evaluations show 
    \(\underline{B}(A_i)<f(A_i)<\overline{B}(A_i), \underline{B}(A_i)+\underline{B}'(A_i)(A_{i+1}-A_i)<f(A_{i+1}), f(A_i) + f'(A_i)(A_{i+1}-A_i)<\overline{B}(A_{i+1})\) for \(i\in\{1,2,3,4\}\).
    Therefore, we have shown \(\underline{B}(A) < f(A) < \overline{B}(A)\) for \(0< A \leq 5/4\) by combining these two cases.
    For the case of \(A>5/4\), we have shown:
    \begin{eqnarray*}
      & & \underline{B}(A)< \underline{B}(2.33) \approx 0.5102 < 0.5111 \approx f(5/4) < f(A) \quad \forall A\in(5/4, 2.33)\\
      & & \underline{B}(A) < \lim_{A\to \infty}\underline{B}(A) = \frac{1}{\sqrt{3}} \approx 0.5774 < 0.5776 \approx f(2.33) \leq f(A) \quad \forall A\in[2.33, \infty)\\
      & & f(A) \leq \lim_{A\to \infty}f(A) = \E[|X|] = \frac{2}{\pi} \approx 0.6366 < 0.6770 \approx \overline{B}(5/4) < \overline{B}(A) \quad \forall A\in(5/4, \infty)
    \end{eqnarray*}
    by the fact that \(f(A), \underline{B}(A), \overline{B}(A)\) are increasing and \(\E[|X|] = \frac{2}{\pi}\) in Lemma~\ref{suplem:moments}, then the proof is complete.
\end{proof}

\subsection{Limit Behavior and Non-Increasing Property of Some Helper Functions}
\begin{lemma}\label{suplem:limit_delta_epsilon}
  Suppose these three conditions for function \(F(A, k): \mathbb{R}_+ \times \mathbb{R}_+\to \mathbb{R}\) hold:
  \begin{enumerate}
    \item \(F(A, k)\) is continuous in \(k\) for any \(A>0\), its unique root \(k=k^\ast(A)\) of \(F(A,k) =0\) is a interior point of the compact interval \(I:=[a, b], 0<a<b<\infty\).
    \item \(F(A, k)\) converges uniformly to \(F_{A_0}(k):=\lim_{A\to A_0}F(A, k)\) on compact interval \(I=[a, b]\) as \(A\to A_0\), where \(A_0\) is a positive number or \(0_{+}, \infty\).
    \item The unique root \(k=k_{A_0}^\ast\) of \(F_{A_0}(k) =0\) is a interior point of the compact interval \(I=[a, b]\); 
    the derivative \(\frac{\mathd F_{A_0}(k)}{\mathd k}\mid_{k=k_{A_0}^\ast} \neq 0\), and \(\frac{\mathd F_{A_0}(k)}{\mathd k}\) is continuous in \(k\) on \(I=[a, b]\).
  \end{enumerate}
  Then, we have:
  \[
  \lim_{A\to A_0} k^\ast(A) = k_{A_0}^\ast.
  \]
\end{lemma}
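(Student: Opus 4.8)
The plan is to run the standard ``continuity of a simple root'' argument, exploiting the sign-change structure produced by the nonvanishing derivative $\frac{\mathd F_{A_0}}{\mathd k}\mid_{k=k_{A_0}^\ast}\neq 0$ together with the uniform convergence $F(A,\cdot)\to F_{A_0}(\cdot)$ on the compact interval $I=[a,b]$. No fixed-point theory is needed; everything reduces to the intermediate value theorem plus an $\varepsilon$--$\delta$ transfer of a sign pattern.

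First I would localize the root of the limit function. Since $\frac{\mathd F_{A_0}}{\mathd k}$ is continuous on $I$ and nonzero at $k_{A_0}^\ast$, there is $\varepsilon_0>0$ with $[k_{A_0}^\ast-\varepsilon_0,\,k_{A_0}^\ast+\varepsilon_0]\subset(a,b)$ on which $\frac{\mathd F_{A_0}}{\mathd k}$ keeps a constant sign; hence $F_{A_0}$ is strictly monotone there, and because $F_{A_0}(k_{A_0}^\ast)=0$ the two endpoint values $F_{A_0}(k_{A_0}^\ast-\varepsilon)$ and $F_{A_0}(k_{A_0}^\ast+\varepsilon)$ have strictly opposite signs for every $\varepsilon\in(0,\varepsilon_0]$. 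Fix such an $\varepsilon$ and set $m(\varepsilon):=\min\{|F_{A_0}(k_{A_0}^\ast-\varepsilon)|,\,|F_{A_0}(k_{A_0}^\ast+\varepsilon)|\}>0$.

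Next I would transfer this sign pattern to $F(A,\cdot)$ for $A$ near $A_0$. By the assumed uniform convergence on $I$ (with ``$A$ near $A_0$'' meaning $|A-A_0|$ small, or $A$ large, or $A$ small, according to whether $A_0$ is finite, $\infty$, or $0_+$), choose the corresponding threshold so that $\sup_{k\in I}|F(A,k)-F_{A_0}(k)|<m(\varepsilon)$. Then $F(A,k_{A_0}^\ast-\varepsilon)$ has the same sign as $F_{A_0}(k_{A_0}^\ast-\varepsilon)$, and likewise at $k_{A_0}^\ast+\varepsilon$, so $F(A,\cdot)$ changes sign across $[k_{A_0}^\ast-\varepsilon,\,k_{A_0}^\ast+\varepsilon]$. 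By continuity of $F(A,\cdot)$ and the intermediate value theorem it has a zero in that interval, and by the assumed uniqueness of the root $k^\ast(A)$ of $F(A,\cdot)=0$ that zero \emph{is} $k^\ast(A)$; hence $|k^\ast(A)-k_{A_0}^\ast|\le\varepsilon$. Since $\varepsilon\in(0,\varepsilon_0]$ was arbitrary, letting $\varepsilon\to 0$ yields $\lim_{A\to A_0}k^\ast(A)=k_{A_0}^\ast$.

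The points needing care are only bookkeeping: verifying that the $\varepsilon$-neighborhood of $k_{A_0}^\ast$ stays inside the open interval $(a,b)$ (guaranteed since the root is an interior point), and that ``$F_{A_0}$ strictly monotone near its root'' genuinely follows from the hypotheses (continuity of $\frac{\mathd F_{A_0}}{\mathd k}$ plus its nonvanishing at $k_{A_0}^\ast$). The main — and essentially the only — obstacle is handling the three regimes $A_0\in(0,\infty)$, $A_0=\infty$, and $A_0=0_+$ in a unified way; but since the convergence hypothesis enters the argument only through the single scalar quantity $\sup_{k\in I}|F(A,k)-F_{A_0}(k)|$, the same proof covers all three cases, with only the phrasing of ``$A$ close to $A_0$'' changing.
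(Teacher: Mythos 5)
Your proposal is correct and follows essentially the same route as the paper's proof: localize a sign change of $F_{A_0}$ at $k_{A_0}^\ast\pm\varepsilon$ (the paper justifies it via the nonzero difference quotient at $k_{A_0}^\ast$, you via strict monotonicity from the continuous nonvanishing derivative, an immaterial difference), transfer the signs to $F(A,\cdot)$ by uniform convergence, and conclude with the intermediate value theorem plus uniqueness of $k^\ast(A)$, letting $\varepsilon\to 0$.
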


\begin{proof}
  To prove that $\lim_{A\to A_0} k^\ast(A) = k_{A_0}^\ast$, we must show that for any $\varepsilon > 0$, 
  if \(A_0\) is finite (\(A_0\) is a positive number or \(0_+\)), there exists a \(\delta > 0\) such that \(|k^\ast(A) - k_{A_0}^\ast| < \varepsilon\) for all \(A > 0\) with \(|A - A_0| < \delta\); 
  and if \(A_0 = \infty\), there exists an \(N > 0\) such that \(|k^\ast(A) - k_{A_0}^\ast| < \varepsilon\) for all \(A > N\).

  By the condition 3, for the root \(k=k_{A_0}^\ast\in (a, b)\) of \(F_{A_0}(k) =0\), there exists a \(0<\delta_{A_0}< \min(\varepsilon, k_{A_0}^\ast -a, b-k_{A_0}^\ast)\) such that 
  \[
    F_{A_0}(k^\ast_{A_0}+\Delta k)\cdot F_{A_0}(k^\ast_{A_0}-\Delta k) =(F_{A_0}(k^\ast_{A_0}+\Delta k)-F_{A_0}(k^\ast_{A_0}))(F_{A_0}(k^\ast_{A_0}-\Delta k)-F_{A_0}(k^\ast_{A_0})) < 0
  \]
  for any \(\Delta k\in(0, \delta_{A_0})\). Let's select \(\Delta k = \delta_{A_0}/2\), then we have:
  \[
    F_{A_0}\left(k^\ast_{A_0}+\frac{\delta_{A_0}}{2}\right)\cdot F_{A_0}\left(k^\ast_{A_0}-\frac{\delta_{A_0}}{2}\right) < 0.
  \]
  Without loss of generality, we may assume \(F_{A_0}\left(k^\ast_{A_0}+\frac{\delta_{A_0}}{2}\right)> 0, F_{A_0}\left(k^\ast_{A_0}-\frac{\delta_{A_0}}{2}\right)< 0\), and select:
  \[
    m_{A_0} := \min\left(\left|F_{A_0}\left(k^\ast_{A_0}+\frac{\delta_{A_0}}{2}\right) \right|, \left| F_{A_0}\left(k^\ast_{A_0}-\frac{\delta_{A_0}}{2}\right) \right| \right) > 0
  \]

  By the condition 2, since \(k^\ast_{A_0}+\frac{\delta_{A_0}}{2}, k^\ast_{A_0}-\frac{\delta_{A_0}}{2}\in [a, b]=I\), 
  there exists a \(\delta > 0\) for all \(A > 0\) with \(|A - A_0| < \delta\) if \(A_0\) is finite; or there exists an \(N > 0\) for all \(A > N\) if \(A_0 = \infty\) such that:
  \[
    \max\left(\left| F(A, k^\ast_{A_0}+\frac{\delta_{A_0}}{2})-F_{A_0}(k^\ast_{A_0}+\frac{\delta_{A_0}}{2})\right|, \left| F(A, k^\ast_{A_0}-\frac{\delta_{A_0}}{2})-F_{A_0}(k^\ast_{A_0}-\frac{\delta_{A_0}}{2})\right| \right) \leq \sup_{k\in I}|F_{A_0}(k) - F_{A_0}(k^\ast_{A_0})| < \frac{m_{A_0}}{2}.
  \]
  Therefore, the signs of function values will not change as long as \(A\) is close to \(A_0\).
  \begin{eqnarray*}
    F(A, k^\ast_{A_0}+\frac{\delta_{A_0}}{2}) & \geq & F_{A_0}(k^\ast_{A_0}+\frac{\delta_{A_0}}{2}) - \left| F(A, k^\ast_{A_0}+\frac{\delta_{A_0}}{2})-F_{A_0}(k^\ast_{A_0}+\frac{\delta_{A_0}}{2})\right| > m_{A_0} - \frac{m_{A_0}}{2} = \frac{m_{A_0}}{2} > 0\\
    F(A, k^\ast_{A_0}-\frac{\delta_{A_0}}{2}) & \leq & F_{A_0}(k^\ast_{A_0}-\frac{\delta_{A_0}}{2}) + \left| F(A, k^\ast_{A_0}-\frac{\delta_{A_0}}{2})-F_{A_0}(k^\ast_{A_0}-\frac{\delta_{A_0}}{2})\right| < -m_{A_0} + \frac{m_{A_0}}{2} = -\frac{m_{A_0}}{2} < 0
  \end{eqnarray*}
  By condition 1, the application of the intermediate value theorem, under the condition that \(F(A, k)\) is continuous in \(k\) for any \(A>0\) and the uniqueness of the root \(k=k^\ast(A)\) of \(F(A, k) =0\), ensures that:
  \[
  k^\ast(A) \in \left(k^\ast_{A_0}-\frac{\delta_{A_0}}{2}, k^\ast_{A_0}+\frac{\delta_{A_0}}{2}\right) \subset (a, b) \subset I
  \]
  Then, we have:
  \[
  |k^\ast(A) - k_{A_0}^\ast| < \max\left(\left|(k^\ast_{A_0}-\frac{\delta_{A_0}}{2})-k^\ast_{A_0}\right|, \left|(k^\ast_{A_0}+\frac{\delta_{A_0}}{2})-k^\ast_{A_0}\right|\right) =\frac{\delta_{A_0}}{2} < \min(\varepsilon, k_{A_0}^\ast -a, b-k_{A_0}^\ast)/2 < \varepsilon
  \]
  for all \(A > 0\) with \(|A - A_0| < \delta\) if \(A_0\) is finite (\(A_0\) is a positive number or \(0_+\)); or for all \(A > N\) if \(A_0 = \infty\).
  Therefore, we have shown that \(\lim_{A\to A_0} k^\ast(A) = k_{A_0}^\ast\), and the proof is complete.
\end{proof}

\begin{lemma}\label{suplem:H_nondecreasing}
  Suppose \(s\in[0, 1]\) and \(\delta\geq 0\), then the function \(H(s, \delta)\) is non-decreasing with respect to \(s\in[0,1]\) given fixed \(\delta\geq 0\), and \(H(s, \delta)\) is defined as:
  \[
      H(s, \delta) = \left(s \arcsin\frac{s}{\sqrt{1+\delta}} + \sqrt{1-s^2+\delta}\right)^2 + \frac{(1+\delta)(1-s^2)}{1-s^2 + \delta}
  \]
\end{lemma}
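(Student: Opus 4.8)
The plan is to fix $\delta\ge 0$ and establish $\partial_s H(s,\delta)\ge 0$ for all $s\in(0,1)$, after which the endpoints $s=0,1$ follow by continuity. Write $a:=\arcsin\frac{s}{\sqrt{1+\delta}}\in[0,\tfrac{\pi}{2})$, $P:=s\,a+\sqrt{1-s^2+\delta}>0$, and $Q:=\frac{(1+\delta)(1-s^2)}{1-s^2+\delta}$, so that $H=P^2+Q$. The useful observation for the first summand is a cancellation: since $\frac{\partial}{\partial s}\arcsin\frac{s}{\sqrt{1+\delta}}=\frac{1}{\sqrt{1+\delta-s^2}}$ and $1+\delta-s^2=1-s^2+\delta$, one gets $\partial_s P=a+\frac{s}{\sqrt{1+\delta-s^2}}-\frac{s}{\sqrt{1-s^2+\delta}}=a\ge 0$, hence $\partial_s(P^2)=2Pa\ge 0$. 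For the second summand, substituting $w:=1-s^2$ gives $\partial_s Q=-\dfrac{2s(1+\delta)\delta}{(1-s^2+\delta)^2}\le 0$, and this derivative is identically $0$ when $\delta=0$, which already disposes of that case. Therefore the claim is equivalent to the single inequality
\[
  P\,a\,(1-s^2+\delta)^2 \;\ge\; s\,(1+\delta)\,\delta .
\]

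To reduce this to a one-variable statement I would exploit that, for fixed $\delta$, the map $s\mapsto a$ is an increasing bijection with inverse $s=\sqrt{1+\delta}\,\sin a$; under it $1-s^2+\delta=(1+\delta)\cos^2 a$ and $P=\sqrt{1+\delta}\,(a\sin a+\cos a)$, while the admissibility constraint $s\le 1$ becomes $(1+\delta)\sin^2 a\le 1$, i.e. $\delta\le\cot^2 a$. After substituting and cancelling the common factor $(1+\delta)^{3/2}$, the inequality becomes
\[
  a\,(a\sin a+\cos a)\,(1+\delta)\cos^4 a \;\ge\; \delta\,\sin a ,
  \qquad a\in\Big(0,\tfrac{\pi}{2}\Big),\quad 0\le\delta\le\cot^2 a .
\]
Both sides are affine in $\delta$ and the inequality holds at $\delta=0$ (left side $\ge 0$, right side $=0$), so by linearity it is enough to verify it at the right endpoint $\delta=\cot^2 a$, where $1+\delta=1/\sin^2 a$ and it collapses to $a\,(a\sin a+\cos a)\cos^2 a\ge\sin a$. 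I would then try to close this final one-variable estimate by a case split in $a$, using elementary facts such as $a\sin a+\cos a\ge 1$ (its derivative is $a\cos a\ge 0$), $a\ge\sin a$, and Taylor bounds for $\cos^2 a$ and $\sin a$, possibly supplemented by checking a few numerical nodes and interpolating with the concavity of the relevant functions, exactly as is done for the helper estimates elsewhere in the appendix.

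The step I expect to be the genuine obstacle is precisely this endpoint case, i.e. the regime $s\to 1$ (equivalently $(1+\delta)\sin^2 a\to 1$, $\delta\le\cot^2 a$): there $\partial_s Q$ becomes large because $(1-s^2+\delta)^2\to\delta^2$ with $\delta$ small, so the termwise bound $\partial_s(P^2)\ge-\partial_s Q$ is essentially tight and admits almost no slack, which means any crude estimate on $P$ or on $a$ near $\tfrac\pi2$ will not suffice. If the derivative argument proves too delicate there, the fallback is to abandon term-by-term differentiation near $s=1$ and instead bound $H(1,\delta)-H(s,\delta)$ directly, using $P(1,\delta)=\arcsin\frac{1}{\sqrt{1+\delta}}+\sqrt{\delta}$ together with quantitative convexity/Taylor control of $a\sin a+\cos a$ and of $\frac{1-s^2}{1-s^2+\delta}$; this is also the precise form in which the statement is consumed in Proposition~\ref{prop:boundedness}, where only $H(s,\delta)\le H(1,\delta)$ is needed.
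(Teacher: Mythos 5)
Your reduction is correct and is essentially the paper's own route: the cancellation $\partial_s P=a$ is the paper's $K'(s)=y$, and after the substitution $s=\sqrt{1+\delta}\,\sin a$ your target inequality $a(a\sin a+\cos a)(1+\delta)\cos^4 a\ \ge\ \delta\sin a$ (on the admissible range $\delta\le\cot^2 a$) is exactly $\sin y\cdot\Phi(y)\ge 0$ for the paper's $\Phi(y)=(1+\delta)y(y+\cot y)\cos^4 y-\delta$. The genuine gap is the step you leave open, namely the endpoint case $\delta=\cot^2 a$, i.e.\ $a(a\sin a+\cos a)\cos^2 a\ge\sin a$ -- and this inequality is false: at $a=\pi/4$ the left side is $\tfrac{\pi}{4}\bigl(\tfrac{\pi}{4}\tfrac{\sqrt{2}}{2}+\tfrac{\sqrt{2}}{2}\bigr)\tfrac12\approx 0.496$ while the right side is $\approx 0.707$. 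By your own affine-in-$\delta$ observation this propagates back into the interior: at $a=\pi/4$ the inequality $(1+\delta)a(a\sin a+\cos a)\cos^4 a\ge\delta\sin a$ fails for every admissible $\delta\in(0.54,1]$, so $\partial_s H<0$ there. Concretely, at $\delta=1$, $s=0.95$ one finds $\partial_s H\approx-0.58$, and indeed $H(0.9,1)\approx 3.249>3.231\approx H(0.95,1)>3.188\approx H(1,1)$: the lemma itself is false near $s=1$ once $\delta>0$ is comparable to $\cot^2 a$, which is precisely the ``no slack'' regime you identified. So the plan cannot be completed, not because your route is worse than the paper's, but because the statement being proved does not hold.

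For comparison, the paper's proof does not close this gap either: it asserts that $\Phi$ is non-decreasing on $(0,\tfrac{\pi}{2})$ with $\Phi(0_+)=1$, yet at $\delta=1$ one has $\Phi(\pi/4)=2\cdot\tfrac{\pi}{4}(\tfrac{\pi}{4}+1)\tfrac14-1\approx-0.30<0$, and the displayed formula for $\partial_y\Phi$ is incorrect (it vanishes at $y=\pi/4$, where direct differentiation gives $\approx-2.30$). Your fallback of proving only $H(s,\delta)\le H(1,\delta)$ is also unavailable, by the same numerics, so the way this lemma is consumed in Proposition~\ref{prop:boundedness} needs a different argument (for instance exploiting the slack already discarded when the two components of $M(\theta,\nu)$ were bounded via $|\tanh|\le 1$, or replacing $H(1,\delta)$ by $\sup_{s\in[0,1]}H(s,\delta)$ with an explicit bound). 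In short: your derivative computation, change of variables, and reduction match the paper and are fine; the decisive one-variable inequality you flagged is unprovable because both it and the lemma as stated are false.
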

\begin{proof}
  When \(\delta = 0\), we have \(H(s, 0) = (s \arcsin s + \sqrt{1-s^2})^2\), which is non-decreasing with respect to \(s\in[0,1]\), 
  due to the fact that \(\partial_s H(s, 0) = 2(s \arcsin s + \sqrt{1-s^2}) \cdot\arcsin s > 0, \forall s\in(0, 1)\). 
  When \(\delta > 0\), we introduce following notations to show \(\partial_s H(s, \delta) > 0, \forall s\in(0, 1)\):
  \[
      r := 1-s^2, y := \arcsin\frac{s}{\sqrt{1+\delta}}, q := \sqrt{1-s^2+\delta}, K(s) := sy+q
  \]
  Then we have \(H(s, \delta) = K(s)^2 + \frac{(1+\delta)r}{r + \delta}\) and 
  \[
  y'(s)= 1/q, q' = -s/q, K'(s) = y \implies \partial_s H(s, \delta) = 2K(s)K'(s) + \partial_s\left(\frac{(1+\delta)r}{r + \delta}\right) 
  = 2\left[K(s) y - \frac{s(1+\delta)\delta}{q^4}\right]
  \]
  By susbstitution \(s = \sqrt{1+\delta}\sin y, q=\sqrt{1+\delta}\cos y, K(s) = \sqrt{1+\delta}(y\sin y + \cos y)\) for some \(y\in(0, 1)\), we have
  \[
      K(s) y - \frac{s(1+\delta)\delta}{q^4} 
      =  \frac{\sin y}{\sqrt{1+\delta}\cos^4 y} \Phi(y) \geq \frac{\sin y}{\sqrt{1+\delta}\cos^4 y} \Phi(0_+)
      > 0
  \]
  where the last inequality is due to the fact that \(\Phi(y)\) is non-decreasing with respect to \(y\in(0, \frac{\pi}{2})\) given fixed \(\delta\), and \(\Phi(0_+) = 1\), and \(\Phi(s)\) is defined as:
  \[
      \Phi(y) = (1+\delta)y (y+\cot y)\cos^4 y -\delta
      \implies
      \partial_y \Phi(y) = \frac{(1+\delta)\cos^3y (1+4 y \sin y)}{2 \sin^2 y}\cos(2y)(\tanh(2y) - 2y)\geq0
  \]
  Therefore, \(\partial_s H(s, \delta) > 0, \forall s\in(0, 1)\) and \(H(s, \delta)\) is non-decreasing with respect to \(s\in[0,1]\) given fixed \(\delta\geq 0\).
\end{proof}

\subsection{Probability Concentration Inequality for Bernoulli Random Variables}
\begin{lemma}\label{suplem:prob_inequality}
    Let $q \assign \max (p, 1 - p)$ for $V_i \overset{\tmop{iid}}{\sim}
      \tmop{Bern} (p), \forall i \in [n] := \{1, 2, \cdots, n\}$, then
    
    for $t\in \mathbb{R}_{\geq 0}$
        \[ \mathbb{P} \left( \left|\frac{1}{n} \sum_{i \in [n]} (V_i -\mathbb{E} [V_i]) \right|
              \geq t \right) \leq 2 \exp(-2n t^2) \]
    for $t \in [\mathe (1 - q), q]\neq \varnothing$
        \[ \mathbb{P} \left( \left|\frac{1}{n} \sum_{i \in [n]} (V_i -\mathbb{E} [V_i]) \right|
           \geq t \right) \leq 2\exp \left( - n \left\{ \frac{t}{q}  \left[ \ln
           \frac{t}{(1 - q)} - 1 \right] + \frac{t^2}{2 q^2} \right\} \right) \]
    for $t \in(q, \infty)$, we have
        \( \mathbb{P} \left( \left|\frac{1}{n} \sum_{i \in [n]} (V_i -\mathbb{E} [V_i]) \right|
              \geq t \right) = 0 \).
    Moreover, with probability at least \(1-\delta\),
    \[
    \left|\frac{1}{n} \sum_{i \in [n]} (V_i -\mathbb{E} [V_i]) \right| 
    \lesssim \min\left(\frac{\log \frac{1}{\delta}/n}{\log\left(1+ \frac{\log\frac{1}{\delta}/n}{1-q}\right)}, \sqrt{\frac{\log\frac{1}{\delta}}{n}} \right) 
    \]
\end{lemma}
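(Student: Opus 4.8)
The plan is to treat the three regimes of $t$ in the displayed bounds separately and then invert them for the ``moreover'' conclusion; throughout I may assume $0<p<1$, since otherwise $\frac1n\sum_i V_i\equiv p$ and every probability with $t>0$ vanishes. The first and third bounds are then immediate: each $V_i\in[0,1]$, so the two-sided Hoeffding inequality for bounded variables gives $\mathbb{P}(|\frac1n\sum_{i\in[n]}(V_i-\mathbb{E}[V_i])|\ge t)\le 2\exp(-2nt^2)$ for all $t\ge 0$; and for $t>q$ we have $|\frac1n\sum_i(V_i-p)|\le\max(p,1-p)=q<t$ almost surely, so the probability is $0$.

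The substance is the second bound on $t\in[e(1-q),q]$. First I would reduce to a one-sided binomial lower tail: splitting $\mathbb{P}(|\cdots|\ge t)$ into $\mathbb{P}(\frac1n\sum_i V_i\ge p+t)$ and $\mathbb{P}(\frac1n\sum_i V_i\le p-t)$, exactly one of these tails moves a distance $t$ past the endpoint of $[0,1]$ that lies at distance $1-q$ from the mean $p$, and since $t\ge e(1-q)>1-q$ that tail is empty; the surviving tail, after possibly replacing $V_i$ by $1-V_i$, equals $\mathbb{P}(\frac1n X\le q-t)$ with $X\sim\mathrm{Binom}(n,q)$. The usual Chernoff optimization over $\lambda\ge0$ of $\mathbb{E}[e^{-\lambda U}]=1-q+qe^{-\lambda}$ then yields $\mathbb{P}(\frac1n X\le q-t)\le\exp(-n\,D(q-t\|q))$ with $D(a\|q)=a\ln\frac aq+(1-a)\ln\frac{1-a}{1-q}$, so it remains to show $D(q-t\|q)\ge\frac tq[\ln\frac t{1-q}-1]+\frac{t^2}{2q^2}$. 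Setting $x=t/q\in(0,1]$ and $s=t/(1-q)\ge e$, one has $D(q-t\|q)=q(1-x)\ln(1-x)+(1-q)(1+s)\ln(1+s)$; combining the scalar inequalities $(1-x)\ln(1-x)\ge -x+\frac{x^2}2$ on $[0,1]$ (immediate from $\phi(0)=\phi'(0)=0$, $\phi''\ge0$ for $\phi(x):=(1-x)\ln(1-x)+x-\tfrac{x^2}2$) and $(1+s)\ln(1+s)\ge(1+s)\ln s=s\ln s+\ln s$ with $q\le1$, $t=(1-q)s=qx$, and $\ln s\ge1$, a short rearrangement gives $D(q-t\|q)-\big(\frac tq[\ln\frac t{1-q}-1]+\frac{t^2}{2q^2}\big)=(1-q)\big[x(1-\tfrac x2)+(1-x)\ln s\big]\ge0$; reinstating the vacuous second tail costs only the leading factor $2$.

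For the ``moreover'' part I would invert the two non-trivial bounds. Matching $2\exp(-2nt^2)=\delta$ gives $t\asymp\sqrt{\log(1/\delta)/n}$, one branch of the claimed minimum. For the other, write $L=\ln(2/\delta)/n$, $u=L/(1-q)$, and $g(t)=\frac tq[\ln\frac t{1-q}-1]+\frac{t^2}{2q^2}$; I would exhibit a universal constant $C\ge e$ for which the threshold $t_\star=C\,L/\log(1+u)$ (i) automatically satisfies $t_\star\ge e(1-q)$ whenever $t_\star\le q$, because $u/\log(1+u)\ge1$ forces $t_\star/(1-q)\ge C\ge e$; (ii) makes $\{|\frac1n\sum_i(V_i-\mathbb{E}[V_i])|\ge t_\star\}$ empty when $t_\star>q$; and (iii) in the remaining case satisfies $n\,g(t_\star)\ge\ln(2/\delta)$ — equivalently, after discarding the quadratic term, $s_\star(\ln s_\star-1)\ge u$ with $s_\star=Cu/\log(1+u)$ — so that $\mathbb{P}(|\frac1n\sum_i(V_i-\mathbb{E}[V_i])|\ge t_\star)\le\delta$. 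Taking the smaller admissible threshold, and absorbing $\log(1/\delta)\asymp\ln(2/\delta)$ into the constant, yields $\lesssim\min\big(\frac{\log(1/\delta)/n}{\log(1+\frac{\log(1/\delta)/n}{1-q})},\sqrt{\log(1/\delta)/n}\big)$.

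The inversion in (iii) is the step I expect to be the main obstacle: proving $s_\star(\ln s_\star-1)\ge u$ with an explicit universal $C$ requires splitting on whether $u$ is small — where $\log(1+u)\asymp u$, so $s_\star\asymp C$ and $t_\star\asymp1-q$, and it is precisely the regime constraint $t_\star\ge e(1-q)$ that dictates how large $C$ must be — or large, where $\log(1+u)$ is far below $u$ and the logarithmic denominator genuinely beats the sub-Gaussian rate. The two tail estimates themselves are routine once the reduction to the binomial lower tail and the two scalar inequalities are in hand.
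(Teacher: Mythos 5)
Your proposal is correct, but it reaches the central (second) tail bound by a genuinely different route than the paper. For $t\in[\mathe(1-q),q]$ the paper never leaves the moment-generating function: it bounds the centered Bernoulli MGF by $1+\tfrac{1-q}{q}\{\exp(q\lambda)-q\lambda-1\}\le 1+2\tfrac{1-q}{q}\sinh(q\lambda)$, substitutes $\mu'=\tfrac{q}{1-q}\mu$, $\gamma=\tfrac{1-q}{q}$, $\tau=\tfrac{t}{1-q}$, and optimizes $\tfrac1\gamma\ln(1+\gamma\mu')-\tau\ln\mu'$ at $\mu'=\tfrac{\tau}{1-\gamma\tau}$ to land on the stated exponent, handling both tails by the flip $V_i\mapsto 1-V_i$ (hence the factor $2$). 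You instead observe that in this regime one tail is vacuous, reduce the survivor to the binomial lower tail, invoke the exact Cram\'er rate $\exp(-nD(q-t\,\|\,q))$, and prove the scalar comparison $D(q-t\,\|\,q)\ge \tfrac tq[\ln\tfrac t{1-q}-1]+\tfrac{t^2}{2q^2}$; your algebra here checks out (the residual is $(1-q)[x(1-\tfrac x2)+(1-x)\ln s]\ge 0$), and it even yields the bound without the factor $2$. The paper's route avoids the vacuous-tail observation and any appeal to the KL form; yours is more transparent and slightly sharper. For the ``moreover'' part both arguments are the same in spirit — plug a threshold of the form $t_\star\asymp \frac{\log\frac1\delta/n}{\log\bigl(1+\frac{\log\frac1\delta/n}{1-q}\bigr)}$ into the second bound after dropping the quadratic term — though you are more careful than the paper in explicitly disposing of the case $t_\star>q$ via the third statement. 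The one step you flag as an obstacle, namely $s_\star(\ln s_\star-1)\ge u$ for $s_\star=Cu/\log(1+u)$ with a universal $C$, does go through and does not really need a case split: writing $v=\ln(1+u)$ and using $\frac{\mathe^{v}-1}{v}\ge \mathe^{v/2}$, the requirement becomes $C(\ln C-1+\tfrac v2)\ge v$, which already holds with $C=\mathe$; this matches the paper's choice $t=2\,\frac{\ln(2/\delta)/n}{\ln\bigl(1+\frac{\ln(2/\delta)/n}{\mathe(1-q)}\bigr)}$, verified there via $2x/\sqrt{1+x}>\ln(1+x)$.
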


\begin{proof}
    Let's denote $q \assign \max (p, 1 - p), V' \assign V_i -\mathbb{E} [V_i], i \in
    [n]$ , thus $\mathbb{E} \left[ {V'}^2 \right] = \tmop{Var} [V_i] = p (1 - p) =
    q (1 - q), |V'|\leq q$.
    With Chenorff bound and let $\psi (\lambda)\assign \mathbb{E} [\exp
    (\lambda (V_i -\mathbb{E} [V_i]))] = - \lambda p + \ln (1 + p (\exp
    (\lambda) - 1)) \leq \frac{\lambda^2}{8}$
    \begin{eqnarray*}
      \log \mathbb{P} [\sum_{i \in [n]} (V_i -\mathbb{E} [V_i]) \geq n t] & \leq
      & \inf_{\lambda > 0}  \left\{ \log \mathbb{E} \left[ \exp \left(
      \lambda \sum_{i \in [n]} (V_i -\mathbb{E} [V_i]) \right) \right] - \lambda
      n t \right\}
      \leq
      \inf_{\lambda > 0}  \left\{ n \left[ \frac{\lambda^2}{8} -
      \lambda t \right] \right\} = - 2 n t^2
    \end{eqnarray*}
    Hence, one side of the first probability inequality is proved, the other side is proved by replacing \(V_i\) with \(1-V_i\).
  Let's focus on next concentration inequality. We begin with bounding $\psi (\lambda)$, note that $2 \sinh (x) > \exp (x) - x - 1, \forall x > 0$.
    \begin{eqnarray*}
      \psi (\lambda) 
      = 1 + \sum_{k \geq 2} \frac{\mathbb{E} [ {V'}^k ]}{k!}
      \lambda^k \leq 1 + \sum_{k \geq 2} \frac{\mathbb{E} [ {V'}^2 ]
      \cdot q^{k - 2}}{k!} \lambda^k
      \leq 1 + \frac{(1 - q)}{q} \{ \exp (q \lambda) - q \lambda - 1 \}
      \leq 1 + 2 \frac{(1 - q)}{q} \sinh(q \lambda)
    \end{eqnarray*}
  Let $\mu \assign 2 \frac{(1 - q)}{q} \sinh (q \lambda)$, then $\lambda =
  \frac{1}{q} \tmop{arcsinh} \left( \frac{q}{2 (1 - q)} \mu \right)$,
  and $\mu' \assign \frac{q}{(1 - q)} \mu$, $\gamma \assign \frac{(1 - q)}{q}
  \in (0, 1], \tau \assign \frac{t}{(1 - q)}$.
  \begin{eqnarray*}
    & & \log \mathbb{P} \left( \frac{1}{n} \sum_{i \in [n]} (V_i -\mathbb{E}
    [V_i]) \geq t \right) 
    \leq 
    n \inf_{\lambda > 0}  [\ln \psi (\lambda) - \lambda t]
     \leq  n \inf_{\lambda > 0}  \left\{ \ln \left[ 1 + 2 \frac{(1 -
    q)}{q} \sinh (q \lambda) \right] - \lambda t \right\} \\
    &=& n \gamma \inf_{\mu' > 0}  \left\{ \frac{1}{\gamma} \ln (1 + \gamma
    \mu') - \tau \tmop{arcsinh} \left( \frac{\mu'}{2} \right) \right\} 
    \leq n \gamma \inf_{\mu' > 0}  \left\{ \frac{1}{\gamma} \ln (1 +
    \gamma \mu') - \tau \ln (\mu') \right\}
  \end{eqnarray*}
  If $t > q$, then $\tau \assign \frac{t}{(1 - q)} \in [
  \frac{1}{\gamma}, \infty )$,
  \( n \gamma \inf_{\mu' > 0}  \left\{ \frac{1}{\gamma} \ln (1 + \gamma \mu')
     - \tau \ln (\mu') \right\} = - \infty \).
  If $t \in [\mathe (1 - q), q]$, $\tau \assign \frac{t}{(1 - q)} \in [ \mathe,
  \frac{1}{\gamma} ]$.
  \begin{eqnarray*}
    &  & n \gamma \inf_{\mu' > 0}  \left\{ \frac{1}{\gamma} \ln (1 + \gamma
    \mu') - \tau \log (\mu') \right\}\\
    & = & n \gamma \left\{ \frac{1}{\gamma} \ln (1 + \gamma \mu') - \tau
    \ln (\mu') \right\}_{\mu' = \frac{\tau}{1 - \gamma \tau}}
     =  n \gamma \left\{ - \tau \ln \tau + \frac{1}{\gamma} [- (1 - \gamma
    \tau) \ln (1 - \gamma \tau)] \right\}\\
    & \leq & n \gamma \left\{ - \tau \ln \tau + \frac{\gamma \tau}{\gamma}
    \left( 1 - \frac{\gamma \tau}{2} \right) \right\}
     \leq  n \gamma \left\{ - \tau [\ln \tau - 1] - \frac{\gamma}{2}
    \tau^2 \right\}
     =  n \left\{ - \frac{t}{q}  \left[ \ln \frac{t}{(1 - q)} - 1 \right]
    - \frac{t^2}{2 q^2} \right\}
  \end{eqnarray*}
  Therefore, one side of the other probability inequality is proved, similarly, the other side is proved by replacing \(V_i\) with \(1-V_i\).
  Also, we show the probability is 0 when $t\geq q$. 
  
  By the first concentration inequality, with probability at least \(1-\delta\),
  \[
   \left|\frac{1}{n} \sum_{i \in [n]} (V_i -\mathbb{E} [V_i]) \right| \lesssim \sqrt{\frac{\log(1/\delta)}{n}}
  \]
  By the second concentration inequality and letting \(t = 2 \frac{\ln(2/\delta)}{n}/\ln(1+ \frac{\ln(2/\delta)}{n\times \mathe(1-q)})\geq 2\mathe(1-q)\), with probability at least \(1-\delta\),
  \[
   \left|\frac{1}{n} \sum_{i \in [n]} (V_i -\mathbb{E} [V_i]) \right| \leq 2 \frac{\ln(2/\delta)}{n}/\ln\left(1+ \frac{\ln(2/\delta)}{n\times \mathe(1-q)}\right)
   \asymp \frac{\log \frac{1}{\delta}/n}{\log\left(1+ \frac{\log\frac{1}{\delta}/n}{1-q}\right)}
  \]
  Since the failure probability is less than \(\delta\), by using \(2 x/\sqrt{1+x} > \ln(1+x)\) for \(x = \frac{\ln(2/\delta)}{n\times \mathe(1-q)}> 0\),
  \[
    2 \exp\left(-n t \log \frac{t}{\mathe(1-q)}\right)_{t=2 \frac{\ln(2/\delta)}{n}/\ln(1+ \frac{\ln(2/\delta)}{n\times \mathe(1-q)})}
    < 2 \exp\left(-n \frac{\ln(2/\delta)}{n}\right)
    = \delta
  \]
  The proof is completed by combining the bounds for \(t \in \mathbb{R}_{\geq 0}\) and \(t \in [\mathe (1 - q), q]\).
\end{proof}
}

\bibliographystyle{IEEEtran}
\bibliography{ref_EM2MLR}

\begin{IEEEbiography}[{\includegraphics[width=1in,height=1.25in,clip,keepaspectratio]{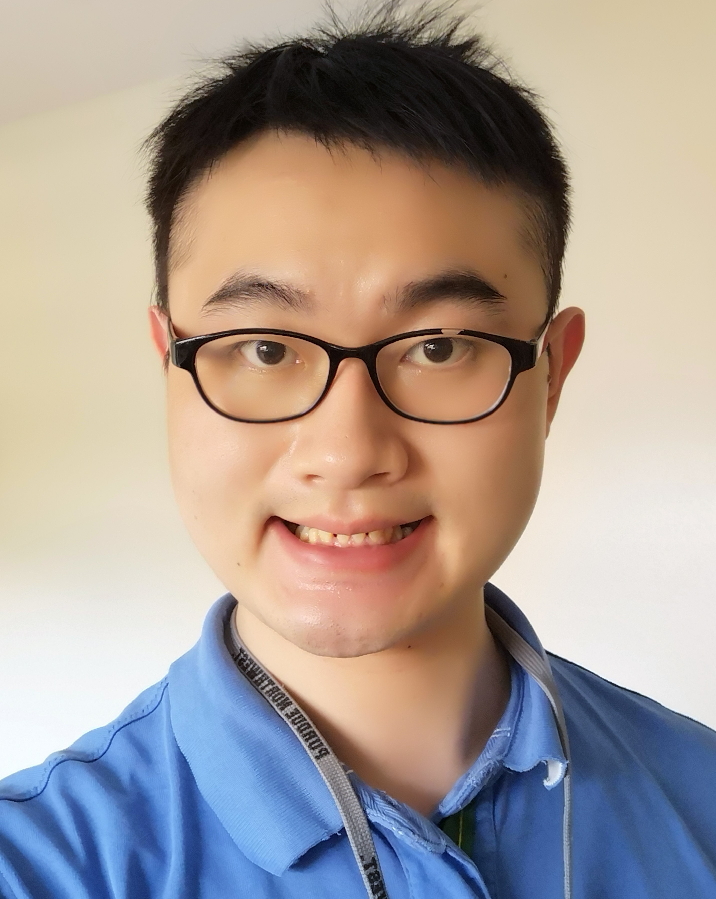}}]{Zhankun Luo}
(Graduate Student Member, IEEE) received the B.S. degree in telecommunication engineering from Beijing Institute of Technology, China, in 2019, and the M.S.E. degree in electrical and computer engineering in 2021 from Purdue University Northwest, Hammond, IN, USA. 
He is currently pursuing the Ph.D. degree with the Elmore Family School of Electrical and Computer Engineering, Purdue University, USA. 
His research interests include machine learning, statistical learning, generative diffusion models, 
and optimization for distributed and decentralized algorithms.
\end{IEEEbiography}

\begin{IEEEbiography}[{\includegraphics[width=1in,height=1.25in,clip,keepaspectratio]{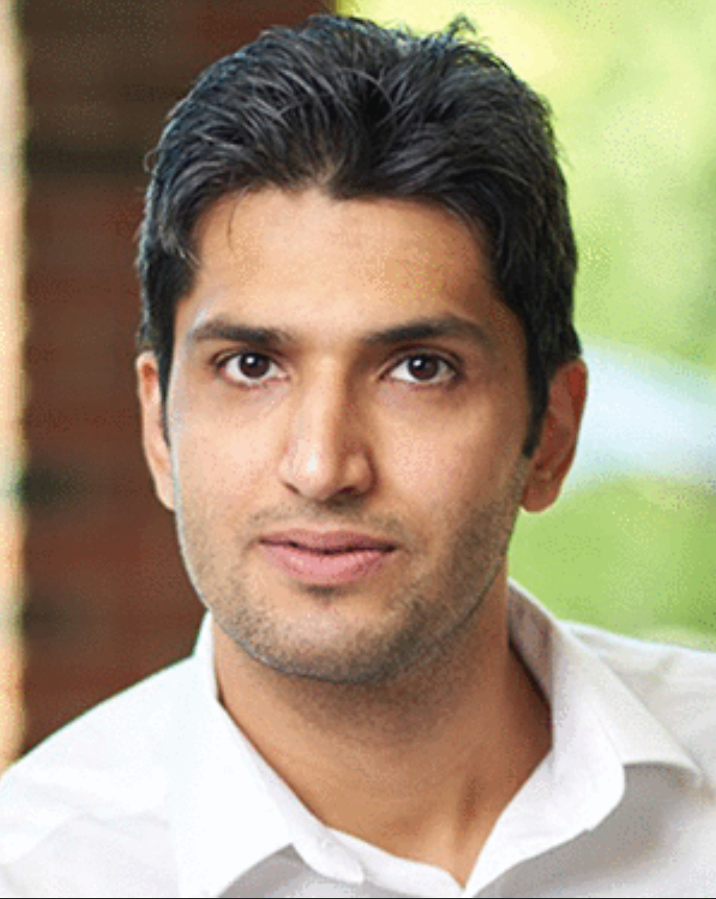}}]{Abolfazl Hashemi}
(Member, IEEE) received the B.Sc. degree in electrical engineering from Sharif University of Technology, Iran, in 2014, 
and the M.S.E. and Ph.D. degrees in electrical and computer engineering from the University of Texas at Austin, USA, in 2016 and 2020, respectively. 
From 2020 to 2021, he was a Postdoctoral Scholar with the Oden Institute for Computational Engineering and Sciences, University of Texas at Austin. 
Since 2021, he has been an Assistant Professor with the Elmore Family School of Electrical and Computer Engineering, Purdue University. 
His research interests include large-scale optimization. He was the 2019 Schmidt Science Fellows Award nominee from UT Austin, the recipient of the Iranian National Elite Foundation Fellowship, and the Best Student Paper Award finalist at the 2018 American Control Conference.
\end{IEEEbiography}

\end{document}